\documentclass{article}

\usepackage{microtype}
\usepackage{graphicx}
\usepackage{subcaption}
\usepackage{booktabs} 

\usepackage{hyperref}

\usepackage{bbm}
\usepackage{enumitem}
\usepackage{comment}

\usepackage[preprint]{icml2026}

\usepackage{amsmath}
\allowdisplaybreaks
\usepackage{amssymb}
\usepackage{mathtools}
\usepackage{amsthm}

\usepackage[capitalize,noabbrev]{cleveref}

\theoremstyle{plain}
\newtheorem{theorem}{Theorem}[section]

\newtheorem{condition}[theorem]{Condition}
\newtheorem{lemma}[theorem]{Lemma}

\theoremstyle{definition}
\newtheorem{definition}[theorem]{Definition}

\theoremstyle{remark}

\newtheorem{claim}{Claim}

\usepackage[textsize=tiny]{todonotes}

\icmltitlerunning{Benign Overfitting in Adversarial Training for Vision Transformers}

\begin{document}

\twocolumn[
  \icmltitle{Benign Overfitting in Adversarial Training for Vision Transformers}



  \icmlsetsymbol{equal}{*}

  \begin{icmlauthorlist}
    \icmlauthor{Jiaming Zhang}{yy2,yyy}
    \icmlauthor{Meng Ding}{yy3}
    \icmlauthor{Shaopeng Fu}{yy2}
    \icmlauthor{Jingfeng Zhang}{yy4}
    \icmlauthor{Di Wang}{yy2}
  \end{icmlauthorlist}

  \icmlaffiliation{yy2}{Division of CEMSE, King Abdullah University of Science and Technology}
  \icmlaffiliation{yyy}{School of Statistics, Renmin University of China}
  \icmlaffiliation{yy3}{Department of Computer Science and Engineering, State University of New York at Buffalo}
\icmlaffiliation{yy4}{School of Computer Science, University of Auckland} 
  \icmlcorrespondingauthor{Di Wang}{di.wang@kaust.edu.sa}

  \icmlkeywords{Machine Learning, ICML}

  \vskip 0.3in
]



\printAffiliationsAndNotice{}  

\begin{abstract}
Despite the remarkable success of Vision Transformers (ViTs) across a wide range of vision tasks, recent studies have revealed that they remain vulnerable to adversarial examples, much like Convolutional Neural Networks (CNNs). A common empirical defense strategy is adversarial training, yet the theoretical underpinnings of its robustness in ViTs remain largely unexplored. In this work, we present the first theoretical analysis of adversarial training under simplified ViT architectures. We show that, when trained under a signal-to-noise ratio that satisfies a certain condition and within a moderate perturbation budget, adversarial training enables ViTs to achieve nearly zero robust training loss and robust generalization error under certain regimes. Remarkably, this leads to strong generalization even in the presence of overfitting, a phenomenon known as \emph{benign overfitting}, previously only observed in CNNs (with adversarial training). Experiments on both synthetic and real-world datasets further validate our theoretical findings.
\end{abstract}

\section{Introduction}
Vision Transformers (ViTs) have emerged as a powerful alternative to convolutional neural networks (CNNs) \citep{krizhevsky2012imagenet} for a wide range of computer vision tasks, including image classification, object detection, semantic segmentation, and vision-language modeling~\citep{dosovitskiy2020image,liu2021swin,hu2025stable}. Despite their strong performance, recent studies \citep{hu2023seat,shao2021adversarial,hu2025towards,hu2024improving} have revealed that ViTs, similar to CNNs, can still be vulnerable to small, carefully crafted perturbations. These perturbations, referring to adversarial examples \citep{szegedy2013intriguing}, can often cause significant performance degradation. 
 
A widely studied defense mechanism against such vulnerabilities is \emph{adversarial training} \citep{goodfellow2014explaining,goodfellow2016deep}, which augments the training process with adversarially perturbed samples to improve model robustness. While adversarial training has proven effective in enhancing model robustness, it is frequently accompanied by a noticeable degradation in generalization performance on clean data \citep{raghunathan2019adversarial,xu2023llm}. Such a robustness-generalization trade-off \citep{fu2023theoretical,zhang2021towards,xiao2022stability} has been extensively studied, raising the question of whether it is possible to preserve robustness without sacrificing clean-data accuracy.  

\emph{Benign Overfitting} \citep{bartlett2020benign} refers to the phenomenon that overparameterized models can interpolate the training data (achieving near-zero empirical risk) yet still generalize well. In the standard (non-adversarial) setting, this behavior has been studied across various model architectures, including linear regression \citep{bartlett2020benign,zhou2023implicit}, logistic regression \cite{wang2021benign,caogu2021risk}, ridge regression \citep{tsigler2023ridge}, kernel methods \citep{mei2022generalization}, and  neural networks \citep{li2021towards,cao2022benign,kou2023benign,frei2023benign,jiang2024unveil,frei2024trained}. In the adversarial setting, \cite{chen2023benign} provides the first theoretical analysis that benign overfitting can arise in adversarial training for linear regression under sub-Gaussian mixture models. Subsequently, \cite{wang2024benign} shows that adversarial training can still generalize well in the presence of inference-time attacks for two-layer neural networks under appropriate distributional assumptions.  
However, to the best of our knowledge, it remains unclear whether analogous behavior occurs in more advanced architectures such as ViTs. 

Compared with linear and two-layer models, analyzing ViTs poses distinctive challenges for analyzing robust benign overfitting. Unlike CNNs or two-layer neural networks with activation functions followed by a linear model, ViTs incorporate attention heads with query, key, and value projection matrices, leading to substantially more complex training dynamics. The effect of perturbations on attention heads differs significantly from their effect on linear layers or activation-plus-linear structures, and varying perturbation magnitudes can markedly influence the training dynamics of ViTs (A more detailed discussion is provided in Section~\ref{sec:main_results}). Thus, the benign overfitting behavior or its conditions for ViTs may be significantly different from previous models. 

To fill the gap, in this work, we provide the first comprehensive theoretical analysis of robust benign overfitting for a simplified  ViT model. Specifically,
\begin{enumerate}
    \item We validate that \emph{benign overfitting} can also arise in adversarially trained Vision Transformers when the signal-to-noise ratio and the perturbation magnitude satisfy certain conditions, similar to linear and two-layer neural network models.
    {That is, the adversarially trained interpolator attains near-zero robust training loss while maintaining small robust test error.} 
    \item By analyzing the adversarial training dynamics of ViTs, we identify three key regimes: 1) Small perturbations yield trajectories close to clean training; 2) Moderate perturbations cause the attention mechanism to fail, such that the ViT collapses to a linear model; 3) Large perturbations lead to significant generalization error beyond benign overfitting. In all cases, we provide explicit upper bounds on the clean and robust test error. 
    \item We empirically verify our theoretical findings on both synthetic data and real-world data (MNIST, CIFAR-10, Tiny-ImageNet), demonstrating agreement between the derived bound and observed conditions for the occurrence of benign overfitting.
\end{enumerate}

\section{Related Work}
\textbf{Benign Overfitting.}
Benign overfitting refers to the phenomenon where a predictor perfectly fits (interpolates) noisy training data yet still achieves strong generalization performance on unseen data.
\cite{bartlett2020benign} analyzed benign overfitting in linear regression with Gaussian noise, showing that in high-dimensional (overparameterized) regimes, the excess risk of interpolation can be asymptotically optimal.
Foundational studies have further established benign overfitting in various linear settings, including regression, sparse regression, logistic regression, ridge regression, and kernel methods \citep{belkin2018understand,bartlett2020benign,hastie2022surprises,ding2024revisiting,ding2024understanding}.
In neural networks, \cite{frei2023benign} studied benign overfitting in two-layer networks without relying on the lazy training assumption in finite-width regimes. \cite{cao2022benign,kou2023benign} examined benign overfitting in convolutional networks from a feature learning perspective. Recently, \cite{magen2024benign,jiang2024unveil,sakamoto2024benign} investigated learning dynamics in Vision Transformers (ViTs), delineating the boundary between benign and harmful overfitting. \cite{frei2024trained} also analyzed benign overfitting in trained Transformer classifiers within in-context learning setups.

\textbf{Benign Overfitting for Adversarial Training.} 
\cite{chen2023benign} initiated the study of benign overfitting under adversarial training in the context of linear classifiers with sub-Gaussian mixture data, proving that under moderate perturbations, linear classifiers can achieve near-optimal standard and adversarial risks. Building on this line of work, \cite{wang2024benign} extended the analysis to two-layer networks and demonstrated that, regardless of whether the activation function is smooth or non-smooth, adversarial training can achieve near-optimal robust generalization error. In contrast, \cite{hao2024surprising} showed that under non-negligible noise, linear regression and NTK regression models tend to overfit the training data, yielding estimators with inflated Lipschitz norms and consequently elevated adversarial risk. However, none of them considered and their conclusions do not hold for transformer architectures. 

\textbf{Adversarial Robustness in Transformer.}
Adversarial training has been widely used to improve Transformer robustness. For Vision Transformers (ViTs),~\citet{herrmann2022pyramid} proposed PyramidAT, combining consistent dropout and stochastic depth to alleviate performance degradation, while ~\citet{wu2022towards} reduced computational cost via attention-guided dropping of patch embeddings. ~\citet{gopal2025safer} introduced SAFER, a layer-selective fine-tuning method with sharpness-aware minimization to mitigate adversarial overfitting, and ~\citet{islam2025mechanistic} analyzed layer-wise perturbation propagation, proposing a neuron-level suppression mechanism. However, these works are purely empirical and lack theoretical guarantees. Recent studies have begun to analyze robustness from a theoretical perspective in linear Transformer in-context learning, showing that robustness can be enhanced through adversarial training against hijacking attacks ~\citep{anwarunderstanding}, short-suffix training to defend long-suffix jailbreaks ~\citep{fu2025short}, and multi-task adversarial pretraining without downstream AT ~\citep{kumano2025adversarially}. Yet, these analyses are restricted to the in-context learning setting with linear Transformers, and theoretical guarantees for adversarial robustness in general Transformer architectures remain an open problem.

\section{Problem Setup}
In this section, we introduce the necessary definitions and formally describe the Gradient Descent-based Adversarial Training under the multi-patch data distribution and the two-layer Vision Transformer model. 

\textbf{Notations.} For two sequences $\{x_n\}$ and $\{y_n\}$, we write $x_n = O(y_n)$ if there exist absolute constants $C > 0$ and $N > 0$ such that $|x_n| \leq C|y_n|$ for all $n \geq N$. Similarly, we write $x_n = \Omega(y_n)$ if $y_n = O(x_n)$. We say $x_n = \Theta(y_n)$ if both $x_n = O(y_n)$ and $x_n = \Omega(y_n)$. Moreover, $x_n = o(y_n)$ if $\lim_{n \to \infty} |x_n / y_n| = 0$. Finally, we use $\widetilde{O}(\cdot)$, $\widetilde{\Omega}(\cdot)$, and $\widetilde{\Theta}(\cdot)$ to denote the corresponding notations with logarithmic factors suppressed.

\begin{definition}[Data Generation Model]\label{def:data_gen}
 Let $\boldsymbol{\boldsymbol{\mu}}_+, \boldsymbol{\boldsymbol{\mu}}_- \in \mathbb{R}^d$ be fixed vectors representing the signals contained in data points, where $\|\boldsymbol{\boldsymbol{\mu}}_+\|_2 = \|\boldsymbol{\boldsymbol{\mu}}_-\|_2 = \|\boldsymbol{\boldsymbol{\mu}}\|_2$ and $\langle \boldsymbol{\boldsymbol{\mu}}_+, \boldsymbol{\boldsymbol{\mu}}_- \rangle = 0$. Then each data point $(\mathbf{X}, y)$ with $\mathbf{X} = (\mathbf{x}_1, \mathbf{x}_2, \ldots, \mathbf{x}_M)^\top \in \mathbb{R}^{M \times d}$ and $y \in \{-1, 1\}$ is generated from the following distribution $D$:
\begin{enumerate}[leftmargin=0.3in, label = {(\arabic*)}, itemsep=-0.05in, topsep=-0.05in]
    \item The label $y$ is generated as a Rademacher random variable, i.e., $ \mathbb{P}(y = 1) = \mathbb{P}(y = -1) = \tfrac{1}{2}.$ 
    \item If $y = 1$ then $\mathbf{x}_1$ is given as $\boldsymbol{\boldsymbol{\mu}}_+$, if $y = -1$ then $\mathbf{x}_1$ is given as $\boldsymbol{\boldsymbol{\mu}}_-$, which represents signals.
    \item $\mathbf{x}_2, \ldots, \mathbf{x}_M$ are given by noise vectors $\boldsymbol{\boldsymbol{\xi}}_2, \ldots, \boldsymbol{\boldsymbol{\xi}}_M$, generated i.i.d from the Gaussian distribution $\mathcal{N}(0, \sigma_p^2 \cdot (\mathbf{I} - \boldsymbol{\boldsymbol{\mu}}_+ \boldsymbol{\boldsymbol{\mu}}_+^\top \cdot \|\boldsymbol{\boldsymbol{\mu}}\|_2^{-2} - \boldsymbol{\boldsymbol{\mu}}_- \boldsymbol{\boldsymbol{\mu}}_-^\top \cdot \|\boldsymbol{\boldsymbol{\mu}}\|_2^{-2})$, which represent noises.
\end{enumerate}
\end{definition}
Our data generation model is motivated by the patch-level structure of real image data, where some patches encode class-relevant signals (e.g. semantic information) while others capture irrelevant noise (e.g. background artifacts).
Similar constructions have been widely employed in the feature learning literature to analyze the generalization behavior of overparameterized classifiers \citep{allen2020towards,cao2022benign,jelassi2022towards,kou2023benign,zou2023benefits,jiang2024unveil,han2024feature,ding2025understanding}. In our setup, the noise component $\boldsymbol{\boldsymbol{\xi}}$ is modeled as a Gaussian variable, with its covariance structure designed to remain orthogonal to the signal component $\boldsymbol{\boldsymbol{\mu}}$, ensuring that the data noise is independent of and unrelated to the feature.

\textbf{Two-layer Transformer.} Following the architecture introduced by \citet{jiang2024unveil}, we consider a simplified two-layer Transformer consisting of a self-attention layer followed by a fixed linear layer, defined as the following, {where $\theta=(\mathbf{W}_Q, \mathbf{W}_K, \mathbf{W}_V)$}. 
\begin{equation}\label{eq:2trans}
f(\mathbf{X}, \theta) = \frac{1}{M} \sum_{l=1}^{M} \varphi(\mathbf{x}_l^\top \mathbf{W}_Q \mathbf{W}_K^\top \mathbf{X}^\top) \mathbf{X} \mathbf{W}_V \boldsymbol{w}_O.
\end{equation}
Here, $\varphi(\cdot) : \mathbb{R}^M \rightarrow \mathbb{R}^M$ denotes the softmax function; $\mathbf{W}_Q, \mathbf{W}_K \in \mathbb{R}^{d \times d_h}$ and $\mathbf{W}_V \in \mathbb{R}^{d \times d_v}$ represent the query, key, and value matrices, respectively; and $\boldsymbol{w}_O \in \mathbb{R}^{d_v}$ represents the weight vector of the linear layer. We use $\theta$ to denote the collection of all the model weights. This model is not reduced to a linear or single-layer attention architecture \citep{magen2024benign,sakamoto2024benign}; instead, it more closely resembles the structure of a real Transformer, with a correspondingly more complex parameter update process. Furthermore, our architecture readily extends to the multi-head attention mechanism, as detailed in Appendix~\ref{sec:mha}. Finally, the Feed-Forward Network (FFN) is simplified and subsumed into the value matrix $\mathbf{W}_V$.

\textbf{Loss Function.}
Let $S = \{(\mathbf{X}_n, y_n)\}_{n=1}^N$ denote the training dataset drawn from the distribution $D$ defined in Definition~\ref{def:data_gen}, where $n$ indexes the samples (so $(\mathbf{X}_n, y_n)$ represents the $n$-th sample). In this work, we adopt the empirical cross-entropy loss as a surrogate for the non-differentiable $0/1$ loss, and train the two-layer Transformer by minimizing this loss:
\begin{equation*}
L_S(\theta) = \frac{1}{N} \sum_{n=1}^{N} \ell(y_n f(\mathbf{X}_n, \theta)),
\end{equation*}
where $\ell(z) = \log(1 + \exp(-z))$ and $f(\mathbf{X}, \theta)$ is the two-layer Transformer. 
We measure the generalization ability of the two-layer Transformer using the \emph{test error}, defined as the expected $0/1$ loss over the data distribution $D$:
\begin{equation*}
    L_D(\theta) =\mathbb{E}_{(\mathbf{x},\mathbf{y})\sim D}\mathbbm{1} \left( y f(\mathbf{X}, \theta) \leq 0 \right).
\end{equation*}

\textbf{Robust Loss.}
We consider $\ell_p$-norm bounded $(p\geq2)$ adversarial perturbations\footnote{Our theory covers all norm attacks discussed in Appendix~\ref{sec:mul-norm}; for simplicity, we only report the $\ell_2$-norm setting.} applied to each component of the input sequence $\mathbf{X} = [\mathbf{x}_1, \dots, \mathbf{x}_M] \in \mathcal{X}^M$, where each $\mathbf{x}_m \in \mathbb{R}^d$ denotes a token (or patch) embedding. 
For a perturbation budget $\tau > 0$, the admissible perturbation set is
\(
B(\mathbf{X}, \tau) := \big\{\, \widetilde{\mathbf{X}} = [\widetilde{\mathbf{x}}_1, \dots, \widetilde{\mathbf{x}}_M] \ \big|\ \|\widetilde{\mathbf{x}}_m - \mathbf{x}_m\|_p \le \tau,\ \forall m \in [M] \,\big\}.
\)
Under this threat model, the robust $0/1$ loss is defined as
\(
\ell_{\mathrm{rob}}^{0/1}(y f(\mathbf{X}, \theta)) 
:= \max_{\widetilde{\mathbf{X}} \in B(\mathbf{X}, \tau)} \mathbbm{1}\big( y\, f(\widetilde{\mathbf{X}}, \theta) \le 0 \big)
\), and the robust loss is defined as \(
\ell_{\mathrm{rob}}(y_n f(\mathbf{X}_n, \theta)) 
:= \max_{\widetilde{\mathbf{X}}_n \in B(\mathbf{X}_n, \tau)} \ell\big( y_n f(\widetilde{\mathbf{X}}_n, \theta)\big).
\)
The robust test error and robust test loss are:
\[
L_D^{\mathrm{rob}}(\theta) 
:= \mathbb{E}_{(\mathbf{X},y) \sim \mathcal{D}} \ \ell_{\mathrm{rob}}^{0/1}(y f(\mathbf{X}, \theta))\]\[\quad L_S^{\mathrm{rob}}(\theta):=\frac{1}{N} \sum_{n=1}^{N} \ell_{\mathrm{rob}}(y_n f(\mathbf{X}_n, \theta))
\]
\textbf{Adversarial Training.} 
We adopt a Gradient Descent-based Adversarial Training algorithm to update the network parameters, as summarized in Algorithm~\ref{alg:gd-adversarial}.  Note that we initialize the network weights $\mathbf{W}_Q$, $\mathbf{W}_K$, and $\mathbf{W}_V$ with Gaussian distributions, where each entry of $\mathbf{W}_Q$ and $\mathbf{W}_K$ is drawn from $\mathcal{N}(0, \sigma_h^2)$, and each entry of $\mathbf{W}_V$ is drawn from $\mathcal{N}(0, \sigma_V^2)$. The algorithm iteratively constructs adversarial training examples by maximizing the training loss with respect to the input and updates the model parameters based on them. In our setting, only the attention projection matrices \(\mathbf{W}_Q, \mathbf{W}_K, \mathbf{W}_V\) are updated during training, while the output vector 
\(\boldsymbol{w}_O\) remains fixed.

\begin{algorithm}[t]
   \caption{Gradient Descent-based Adversarial Training}
   \label{alg:gd-adversarial}
\begin{algorithmic}[1]
   \STATE {\bfseries Input:} Learning rate $\eta$, perturbation budget per token $\tau$, iterations $T$, init variance $\sigma_V, \sigma_h$
   \STATE {\bfseries Initialize:} $(\mathbf{W}_Q^0, \mathbf{W}_K^0)_{ij} \sim \mathcal{N}(0, \sigma_h^2)$, $(\mathbf{W}_V^0)_{ij} \sim \mathcal{N}(0, \sigma_V^2)$ i.i.d. and $\theta^0 \gets (\mathbf{W}_Q^0, \mathbf{W}_K^0, \mathbf{W}_V^0)$
   
   \FOR{$t = 0$ {\bfseries to} $T-1$}
      \STATE {\small // Phase 1: Generate Adversarial Examples}
      \FOR{$n = 1$ {\bfseries to} $N$}
         \STATE $\widetilde{\mathbf{X}}_n^t \gets \arg\max_{\widetilde{\mathbf{X}} \in B(\mathbf{X}_n, \tau)} \ell\big(y_n f(\widetilde{\mathbf{X}}; \theta^t\big)$
      \ENDFOR
      
      \STATE {\small // Phase 2: Simultaneous Weight Update}
      \STATE $\theta^t \gets (\mathbf{W}_Q^t, \mathbf{W}_K^t, \mathbf{W}_V^t)$
      \STATE $\nabla \mathcal{L}(\theta^t) \gets \frac{1}{N} \sum_{n=1}^N \nabla_{\theta} \ell\big(y_n f(\widetilde{\mathbf{X}}_n^t; \theta^t)\big)$
      \STATE $\Theta^{t+1} \gets \Theta^t - \eta \nabla \mathcal{L}(\Theta^t)$
   \ENDFOR
   \STATE {\bfseries Output:} Final weights $\theta^T = (\mathbf{W}_Q^T, \mathbf{W}_K^T, \mathbf{W}_V^T)$
\end{algorithmic}
\end{algorithm}
\section{Main Results}
\label{sec:main_results}

In this section, we present our main theoretical results on the convergence and generalization of the ViT model, demonstrating how the signal-to-noise ratio $\operatorname{SNR} = \|\boldsymbol{\boldsymbol{\mu}}\|_2 / (\sigma_p \sqrt{d})$ and the sample size $N$ influence its adversarial training dynamics. We first introduce the following conditions.

\begin{condition}
\label{ass:1}
Given a sufficiently small failure probability $\delta > 0$ and a target training loss $\epsilon > 0$, suppose that:
\begin{enumerate}[leftmargin=0.3in, label = {(\arabic*)}, itemsep=-0.05in, topsep=-0.05in ]
    \item \label{con:dim} The dimension $d$ and $d_h$ are sufficiently large satisfying $d = \widetilde{\Omega} \left( \epsilon^{-2} N^2 d_h \right)$ and $d_h = \widetilde{\Omega} \left( \max\{\text{SNR}^4, \text{SNR}^{-4}\} \right) N^2 \epsilon^{-2}$. 
    \item \label{con:samplesize} The training sample size $N$ is large enough such that $N = \Omega(\text{poly}\log(d))$.
    \item \label{con:num_token} The number of input tokens is bounded as $M = \Theta(1)$, and the $\ell_2$-norm of linear layer weights satisfies $\|\boldsymbol{w}_O\|_2 = \Theta(1)$. 
    \item \label{con:lr} The learning rate $\eta$ is chosen sufficiently small so that $\eta \lesssim \widetilde{\mathcal{O}}(\min\{\|\boldsymbol{\boldsymbol{\mu}\|}_2^{-2}, (\sigma_p^2 d)^{-1}\} \cdot d_h^{-\frac{1}{2}})$. 
    \item \label{con:initialization} The Gaussian initialization  is appropriately chosen such that the standard deviation $\sigma_V$ satisfies {$\sigma_V \leq \widetilde{\mathcal{O}}(||\boldsymbol{w}_O||_2^{-1} \cdot \min\{\|\boldsymbol{\boldsymbol{\mu}}||_2^{-1}, (\sigma_p \sqrt{d})^{-1}\} \cdot d_h^{-\frac{1}{4}}d^{-\frac{1}{2}})$ ,and the variance $\sigma_h^2$ satisfies $\min\{\|\boldsymbol{\boldsymbol{\mu}}||_2^{-2}, (\sigma_p^2 d)^{-1}\} \cdot d_h^{-\frac{1}{2}} \cdot (\log(6N^2 M^2 / \delta))^{-2} \leq \sigma_h^2 \leq \min\{\|\boldsymbol{\boldsymbol{\mu}}||_2^{-2}, (\sigma_p^2 d)^{-1}\} \cdot d_h^{-\frac{1}{2}} \cdot (\log(6N^2 M^2 / \delta))^{-\frac{3}{2}}$}.
    \item \label{con:training_loss} The target training loss is satisfying $\epsilon \leq O(1 / \text{poly}\log(d))$.
\end{enumerate}
\end{condition}
Conditions~\ref{con:dim} and~\ref{con:samplesize} ensure that the learning problem is set in a sufficiently over-parameterized regime, allowing the model to fully capture the feature signal described in Definition~\ref{def:data_gen}. Condition~\ref{con:num_token} guarantees that each class contains enough samples 
with high probability. Conditions~\ref{con:lr} and~\ref{con:initialization} simplify the analysis, though they can be generalized to the settings $M = \Omega(1)$, $\|\boldsymbol{w}_O\|_2 = o(1)$, or $\|\boldsymbol{w}_O\|_2 = \omega(1)$. Together, Conditions~\ref{con:lr} and~\ref{con:initialization} further ensure that the Transformer can be effectively trained. Finally, Condition~\ref{con:training_loss} ensures that the Transformer sufficiently overfits the training data. Similar conditions are widely made in the theoretical analysis of benign overfitting in neural networks~\citep{allen2020towards,cao2022benign,frei2022benign,jelassi2022towards,chatterji2023deep,zou2023benefits,kou2023benign,frei2024trained,jiang2024unveil}. 

\begin{theorem}[Benign Overfitting under Adversarial Training]
\label{thm:main_thm}
Under Condition~\ref{ass:1}, we distinguish two cases:

\textbf{Case 1.} If we have $N \cdot \operatorname{SNR}^2 = \Omega(1)$ and $\tau \leq O(\tfrac{\|\boldsymbol{\mu}\|_2}{\log d_h})$, ViT’s attention head is effectively trainable
. In this case, let $T = \Theta(\eta^{-1} \epsilon^{-1} \|\boldsymbol{\mu}\|_2^{-2} \|\boldsymbol{w}_O\|_2^{-2})$.  

\textbf{Case 2.} If we have $N \cdot \operatorname{SNR}^2 = \Omega(\frac{1}{\epsilon})$ and $\omega(\tfrac{\|\boldsymbol{\mu}\|_2}{\log d_h})\leq \tau\leq O(\|\boldsymbol{\mu}\|_2)$ 
, the attention head parameters barely update, causing the attention weights to remain nearly uniform  and the ViT degenerates into a linear model 
. In this case, let $T =M\cdot \Theta(\eta^{-1} \epsilon^{-1} \|\boldsymbol{\mu}\|_2^{-2} \|\boldsymbol{w}_O\|_2^{-2})$.

In both cases, with probability at least $1 - d^{-1}$, the following holds:
\begin{enumerate}[leftmargin=0.2in, label = {\arabic*.}, itemsep=-0.05in, topsep=-0.05in ]
    \item The robust training loss converges to $\epsilon$: $$L_S^{\mathrm{rob}}(\theta(T)) \leq \epsilon.$$
    \item The clean test {error} satisfies: $$L_D(\theta(T))  \leq \exp\left(-C\cdot d \operatorname{SNR}^{2}\right).$$
    \item The robust test {error} satisfies: {$$L_D^{\mathrm{rob}}(\theta(T)) \leq  \exp\left(-C\cdot d \operatorname{SNR}^{2}(1-\frac{\tau}{\|\boldsymbol{\mu}\|_2})^2\right).$$}
\end{enumerate}
\end{theorem}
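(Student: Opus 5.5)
We follow the feature-learning template of \citet{cao2022benign,jiang2024unveil} and track the training dynamics through a signal--noise decomposition of the value and query/key weights. We write $\mathbf{W}_V^t\boldsymbol{w}_O$ as its initialization plus components along $\boldsymbol{\mu}_+$, $\boldsymbol{\mu}_-$ and the training noise vectors $\boldsymbol{\xi}_{n,m}$. The key scalar quantities are $\gamma_\pm^t:=\langle \mathbf{W}_V^t\boldsymbol{w}_O,\boldsymbol{\mu}_\pm\rangle$ and $\rho_{n,m}^t:=\langle \mathbf{W}_V^t\boldsymbol{w}_O,\boldsymbol{\xi}_{n,m}\rangle$. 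For the attention head we track the analogous inner products of $\mathbf{W}_Q^t,\mathbf{W}_K^t$ with signal and noise tokens, which govern the softmax scores.

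The first step is to solve the inner maximization in Algorithm~\ref{alg:gd-adversarial}. Since $f$ is nearly linear in each token once the attention is controlled, the worst-case $\ell_2$ perturbation of each token is, to leading order, $-\tau y\,\nabla_{\mathbf{x}_m}f/\|\nabla_{\mathbf{x}_m} f\|_2$. This reduces the robust margin to the clean margin minus $\tau$ times the token-wise gradient norms. The gradient with respect to the signal token is dominated by $\gamma^t$, so the perturbation effectively shrinks the signal from $\|\boldsymbol{\mu}\|_2$ to $\|\boldsymbol{\mu}\|_2-\tau$. Noise tokens are hit in the same way, but their effect is controlled by the near-orthogonality of the $\boldsymbol{\xi}$'s in high dimension, which is where $d=\widetilde\Omega(\epsilon^{-2}N^2d_h)$ enters.

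The second step establishes the two regimes. We expect this to be the \textbf{main obstacle}, because the perturbation enters the softmax nonlinearly. In Case~1, with $\tau\le O(\|\boldsymbol{\mu}\|_2/\log d_h)$, we show that the perturbed query/key logits differ from the clean ones by $O(1)$ multiplicatively. Hence the attention on the signal token grows as in clean training \citep{jiang2024unveil}. The argument is by induction over $t\le T$, maintaining bounds on the logit growth, $|\rho^t|$, and $\gamma^t$ increasing at rate $\Theta(\eta\|\boldsymbol{\mu}\|_2^2\|\boldsymbol{w}_O\|_2^2)\cdot(-\ell')$. In Case~2, with $\tau=\omega(\|\boldsymbol{\mu}\|_2/\log d_h)$, we show the adversary can always cancel the query/key gradient's signal direction. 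The inner products of $\mathbf{W}_Q,\mathbf{W}_K$ then stay at initialization scale $\widetilde O(\sigma_h^2\|\boldsymbol{\mu}\|_2^2\sqrt{d_h})=o(1)$ by Condition~\ref{ass:1}\ref{con:initialization}, so the softmax stays within $1\pm o(1)$ of uniform. In this regime $f\approx\frac1M\sum_m\langle\mathbf{x}_m,\mathbf{W}_V\boldsymbol{w}_O\rangle$ is linear, and the signal coefficient receives a $1/M$ factor. This explains why $T$ carries the extra factor $M$.

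The third step proves convergence. In both regimes we use the standard two-stage argument. First, $\gamma^t$ grows to a constant margin while $|\rho^t|$ stays controlled. In Case~2 this is where $N\cdot\mathrm{SNR}^2=\Omega(1/\epsilon)$ is used, so that signal growth dominates noise memorization even after the $(1-\tau/\|\boldsymbol{\mu}\|_2)$ shrinkage. Second, a convexity/potential argument comparing $\theta^t$ with a reference point $\theta^*$ yields $\frac1T\sum_t L_S^{\rm rob}(\theta^t)\le \|\theta^0-\theta^*\|^2/(\eta T)+\epsilon/2$, and monotonicity of the loss gives $L_S^{\rm rob}(\theta(T))\le\epsilon$ at the stated $T$.

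The fourth step proves generalization. For a fresh sample, $yf$ equals a signal term of order $\gamma^T\gtrsim \Theta(1)$ (scaled by the attention weight) plus $\sum_m\langle \boldsymbol{\xi}_m,\mathbf{W}_V^T\boldsymbol{w}_O\rangle$. The latter is Gaussian with variance $\sigma_p^2\|\mathbf{W}_V^T\boldsymbol{w}_O\|_2^2$. Bounding $\|\mathbf{W}_V^T\boldsymbol{w}_O\|_2^2\lesssim(\gamma^T)^2/(d\,\mathrm{SNR}^2\|\boldsymbol{\mu}\|_2^2)$-type ratios from the dynamics, a Gaussian tail bound gives the clean error $\exp(-Cd\,\mathrm{SNR}^2)$. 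For the robust error, the adversary reduces the signal term by a factor $(1-\tau/\|\boldsymbol{\mu}\|_2)$, since the attention is either robust (Case~1) or uniform (Case~2). The same tail bound with the shrunken mean then gives $\exp(-Cd\,\mathrm{SNR}^2(1-\tau/\|\boldsymbol{\mu}\|_2)^2)$. A union bound over the initialization and training-sample concentration events yields the stated probability $1-d^{-1}$.
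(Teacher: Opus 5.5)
Your Step~2 reaches the same two regimes as the paper. These are a constant softmax-ratio bound $C\le e/2$ for $\tau\le O(\|\boldsymbol{\mu}\|_2/\log d_h)$, and logits held at initialization scale for moderate $\tau$. For the latter the paper compares the one-step clean increment of $\langle\boldsymbol{q},\boldsymbol{k}\rangle$ with the perturbation-induced shift, whereas your \emph{cancellation} is only asserted. The real gap is Step~3, which does not go through as written.

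Your averaged bound $\frac1T\sum_tL_S^{\mathrm{rob}}(\theta^t)\le\|\theta^0-\theta^*\|^2/(\eta T)+\epsilon/2$ needs $\langle\nabla L(\theta^t),\theta^t-\theta^*\rangle\ge L(\theta^t)-\epsilon/2$, i.e.\ convexity or homogeneity of $\ell(yf)$ in the trained weights. Here $f$ depends on $\mathbf{W}_Q,\mathbf{W}_K$ through a softmax of $\mathbf{W}_Q\mathbf{W}_K^\top$. So the inequality holds only for the $\mathbf{W}_V$-block with the current adversarial attention frozen, and you would still need one $\mathbf{W}_V^*$ that robustly fits every training point under every attention pattern along the trajectory.

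Passing to the last iterate by \emph{monotonicity} is also unsupported. $L_S^{\mathrm{rob}}$ is an average of pointwise maxima over $B(\mathbf{X}_n,\tau)$, and the adversary is recomputed each step, so no descent lemma applies. Even if repaired, $\|\mathbf{W}_V^*\boldsymbol{w}_O\|_2\gtrsim\log(1/\epsilon)/\|\boldsymbol{\mu}\|_2$ forces $T\gtrsim\log^2(1/\epsilon)\,\eta^{-1}\epsilon^{-1}\|\boldsymbol{\mu}\|_2^{-2}\|\boldsymbol{w}_O\|_2^{-2}$, not the stated $\Theta(\cdot)$.

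The missing idea is the paper's direct last-iterate analysis. Once the signal attention is $1-o(1)$ (Case~1) or stays $\Theta(1/M)$ (Case~2), and $|V_{n,i}|=o(1)$, one has $-\widetilde{\ell}_n'\gtrsim e^{-V_+}$. Hence $V_+^{(s+1)}-V_+^{(s)}\ge\eta C(\|\boldsymbol{\mu}\|_2-\tau)^2\|\boldsymbol{w}_O\|_2^2e^{-V_+^{(s)}}$, with an extra $1/M$ in Case~2. Comparing the sum with $\int e^x\,dx$ gives $V_+^{(T)}\ge\log(C/\epsilon)$, so each training point has robust loss at most $\epsilon$ at $\theta(T)$.

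Second, your handling of noise tokens in Steps~1 and~4 is incorrect. Near-orthogonality of the $\boldsymbol{\xi}$'s does not control their perturbations. Moving each token by $-y\tau\mathbf{W}_V\boldsymbol{w}_O/\|\mathbf{W}_V\boldsymbol{w}_O\|_2$ lowers every $\langle\widetilde{\mathbf{x}}_m,\mathbf{W}_V\boldsymbol{w}_O\rangle$ by $\tau\|\mathbf{W}_V\boldsymbol{w}_O\|_2$, whatever the noise geometry. Your own Step~1 formula, summed over tokens, gives a margin loss of $\frac{\tau}{M}\sum_m\sum_iS_{im}\|\mathbf{W}_V\boldsymbol{w}_O\|_2=\tau\|\mathbf{W}_V\boldsymbol{w}_O\|_2$, where $S_{im}$ is the attention of token $i$ on token $m$.

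In Case~1 this is harmless because the signal term has weight $\frac1M\sum_iS_{i1}\approx1$. This is why the paper's robust-margin lemma carries the factor $M\|\mathbf{W}_V\boldsymbol{w}_O\|_2\tau$, offset by $\sum_iS_{i1}\approx M$. Under uniform attention, however, the signal term is only $\frac1M\langle\boldsymbol{\mu}_+,\mathbf{W}_V\boldsymbol{w}_O\rangle$. With $\|\mathbf{W}_V\boldsymbol{w}_O\|_2\asymp\langle\boldsymbol{\mu}_+,\mathbf{W}_V\boldsymbol{w}_O\rangle/\|\boldsymbol{\mu}\|_2$, the shrinkage is therefore $1-\Theta(M\tau/\|\boldsymbol{\mu}\|_2)$, not $1-\tau/\|\boldsymbol{\mu}\|_2$. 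Your Case~2 robust bound then follows only for $\tau\le c\|\boldsymbol{\mu}\|_2/M$; the paper absorbs this $M$ into constants.

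Finally, your bound $\|\mathbf{W}_V^{(T)}\boldsymbol{w}_O\|_2^2\lesssim(\gamma^T)^2/(d\,\mathrm{SNR}^2\|\boldsymbol{\mu}\|_2^2)$ contradicts Cauchy--Schwarz, since $\gamma^T\le\|\boldsymbol{\mu}\|_2\|\mathbf{W}_V^{(T)}\boldsymbol{w}_O\|_2$. What is needed, and what the paper asserts, is $\|\mathbf{W}_V^{(T)}\boldsymbol{w}_O\|_2\lesssim\gamma^T/\|\boldsymbol{\mu}\|_2$. That is, the noise-memorization part of the value vector must be of lower order than its signal part, which yields the exponent $\|\boldsymbol{\mu}\|_2^2/\sigma_p^2=d\,\mathrm{SNR}^2$.
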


Theorem~\ref{thm:main_thm} demonstrates that , under the assumption on $\tau$, the model attains adversarial robustness, while the SNR condition guarantees that it prioritizes the signal over the noise, thereby leading to benign overfitting.
The adversarially-trained two-layer Transformer model exhibits three key observations. 1) The model fits the training data well, with training loss converging to $\epsilon$. 2) For generalization guarantee, the clean test error decays rapidly with increasing
$\operatorname{SNR}$, and obviously less than $\epsilon$ by Condition~\ref{con:samplesize}. This is consistent with classical benign overfitting phenomena established in prior work \citep{jiang2024unveil}. 3) For robustness guarantees, Theorem~\ref{thm:main_thm} provides an explicit upper bound on the robust test error as a function of the perturbation radius $\tau$ and the signal strength $\|\boldsymbol{\mu}\|_2$, showing that the bound increases with increasing $\tau$, which is consist with prior empirical observations\citep{madry2017towards,schmidt2018adversarially}. 

\textbf{The impact of perturbation $\tau$ radius.} Here, we give more discussion on the perturbation radius $\tau$ to illustrate how it affects the different dynamics of ViTs, leading to the various results stated in Theorem~\ref{thm:main_thm}. First, it can be observed that the softmax structure in attention is highly sensitive to perturbations according to Lemma~\ref{lemma:rsoftmax1}. When $\tau \leq O(\tfrac{\|\boldsymbol{\mu}\|_2}{\log d_h})$, adversarial perturbations do not dominate the learning dynamics, and the adversarial training trajectory of ViT remains close to standard clean training, corresponding to Case 1 in Theorem~\ref{thm:main_thm}. In contrast, when $\omega(\tfrac{\|\boldsymbol{\mu}\|_2}{\log d_h})\leq \tau\leq O(\|\boldsymbol{\mu}\|_2)$, the updates between signal and noise in attention are effectively canceled out by the perturbation, forcing attention weights to remain close to their initialized uniform distribution, under which ViT degenerates into a linear model, corresponding to Case 2 in Theorem~\ref{thm:main_thm}. The relevant lemmas will be provided in Section~\ref{sec:ef_soft} later.


\textbf{The difference between ViT and degenerated linear model.} Although both cases can achieve benign overfitting, the attention mechanism in ViTs in Case 1 enables the model to learn the signal more rapidly, leading to faster convergence, and allows it to extract useful information even from sparser signals. According to Theorem~\ref{thm:main_thm}, for the degenerated linear model, the convergence time $T$
is $M$ times slower 
than that of the ViT, and achieving benign overfitting requires a higher signal-to-noise ratio, $N \cdot \operatorname{SNR}^2 = \Omega(1/\epsilon)$, highlighting the advantages of the ViT architecture. 

\textbf{Comparison with prior work.} The most closely related works to ours are \cite{chen2023benign, wang2024benign}, which also investigated the phenomenon of benign overfitting under adversarial training. However, \citet{chen2023benign} focused on linear regression models with moderate perturbations, and \citet{wang2024benign} studied simplified neural networks, leaving more complex architectures unexplored. Our results on ViTs therefore complement this line of research. In addition, even in the more complex setting, our analysis does not rely on the implicit assumption of a large $\|\boldsymbol{\mu}\|_2$, i.e.,$\|\boldsymbol{\mu}\|_2=\Theta(d^r)$ for some $r\in(1/4,1/2]$ in \citep{chen2023benign}.
Moreover, our results require a minimum convergence time of $T \sim O(N\max\{\operatorname{SNR}^2,\operatorname{SNR}^{-4}\})$, which is significantly smaller than the convergence time $T \sim O(\frac{d^2}{\|\boldsymbol{\mu}\|_2^2\epsilon^2})$ reported in prior studies on CNN models by \citep{wang2024benign} in the over-parameterized
regime.

Next, we show that once the perturbation radius $\tau$ exceeds the signal strength $\|\boldsymbol{\mu}\|_2$, no classifier can achieve nontrivial robust accuracy. This implies that excessive adversarial training leads to poor model performance, consistent with the findings of \citet{wang2024benign}.  Combining with Theorem~\ref{thm:main_thm}, we can see that the assumption regarding the relationship between $\tau$  and $\|\boldsymbol{\mu}\|_2$ is essential to understanding benign overfitting. Additionally, our radius for benign overfitting is almost tight. 
\begin{theorem}
\label{coro:larger_pert}
For any given classifier $f(\cdot; \boldsymbol{\theta})$, when $\tau \geq \|\boldsymbol{\mu}\|_2$, the robust test error satisfies $L_D^{\text{rob}}(\theta) \geq 0.25$.
\end{theorem}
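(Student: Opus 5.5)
The plan is a coupling argument: when $\tau \ge \|\boldsymbol{\mu}\|_2$, a positive example and a negative example can be pushed to the same adversarial input, so no classifier can be robustly correct on both. By Definition~\ref{def:data_gen}, a sample with $y=1$ is $\mathbf{X}^+=(\boldsymbol{\mu}_+,\boldsymbol{\xi}_2,\dots,\boldsymbol{\xi}_M)$ and a sample with $y=-1$ is $\mathbf{X}^-=(\boldsymbol{\mu}_-,\boldsymbol{\xi}_2,\dots,\boldsymbol{\xi}_M)$. The noise patches have the same law in both classes and are independent of the label. So we can couple the two classes by drawing one shared noise vector $\boldsymbol{\Xi}=(\boldsymbol{\xi}_2,\dots,\boldsymbol{\xi}_M)$ and forming $\mathbf{X}^+$ and $\mathbf{X}^-$ from it.

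Next we build a common perturbed input. Set $\mathbf{z}=\tfrac12(\boldsymbol{\mu}_++\boldsymbol{\mu}_-)$. Since $\langle\boldsymbol{\mu}_+,\boldsymbol{\mu}_-\rangle=0$, we have $\|\boldsymbol{\mu}_\pm-\mathbf{z}\|_2=\tfrac12\|\boldsymbol{\mu}_+-\boldsymbol{\mu}_-\|_2=\tfrac{\sqrt2}{2}\|\boldsymbol{\mu}\|_2\le\|\boldsymbol{\mu}\|_2\le\tau$. For the $\ell_p$ threat with $p\ge2$ we use $\|\cdot\|_p\le\|\cdot\|_2$, so the same bound holds. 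Define $\widetilde{\mathbf{X}}=(\mathbf{z},\boldsymbol{\xi}_2,\dots,\boldsymbol{\xi}_M)$, perturbing only the signal patch. Then $\widetilde{\mathbf{X}}\in B(\mathbf{X}^+,\tau)\cap B(\mathbf{X}^-,\tau)$. Alternatively, one can perturb $\boldsymbol{\mu}_+$ directly to $\boldsymbol{\mu}_-$ when $\tau\ge\sqrt2\|\boldsymbol{\mu}\|_2$, but the midpoint construction is what makes $\tau\ge\|\boldsymbol{\mu}\|_2$ suffice.

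Now fix any $\theta$ and any noise realization $\boldsymbol{\Xi}$. The value $f(\widetilde{\mathbf{X}},\theta)$ is a single real number. If it is $\le 0$, then $\ell^{0/1}_{\rm rob}(f(\mathbf{X}^+,\theta))=1$. If it is $\ge 0$, then $\ell^{0/1}_{\rm rob}(-f(\mathbf{X}^-,\theta))=1$. Because the indicator uses a non-strict inequality, at least one of the two robust losses equals $1$, so their sum is at least $1$ pointwise in $\boldsymbol{\Xi}$.

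Finally we average. Write $L_D^{\rm rob}(\theta)=\tfrac12\mathbb{E}_{\boldsymbol{\Xi}}[\ell^{0/1}_{\rm rob}(\mathbf{X}^+)]+\tfrac12\mathbb{E}_{\boldsymbol{\Xi}}[\ell^{0/1}_{\rm rob}(\mathbf{X}^-)]$. By the pointwise bound this is at least $\tfrac12$, which exceeds the claimed $0.25$.

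No step is a real obstacle. The only care needed is checking that the common perturbed input lies in both balls under the stated $\ell_p$ norm; orthogonality of the two signals together with $\|\cdot\|_p\le\|\cdot\|_2$ for $p\ge2$ handles this.
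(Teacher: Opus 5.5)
Your argument is correct and rests on the same idea as the paper's proof: both classes can be pushed onto a common perturbed input that shares the noise patches, so no classifier is robustly correct on both. The paper's common point is $[\mathbf{0},\boldsymbol{\xi}_2,\dots,\boldsymbol{\xi}_M]$ (it deletes the signal patch), which already needs only $\tau\ge\|\boldsymbol{\mu}\|_2$, so the midpoint is not what makes that threshold suffice; your explicit coupling over the shared noise, however, makes the paper's heuristic averaging rigorous and gives the sharper bound $L_D^{\mathrm{rob}}(\theta)\ge 1/2$ instead of $1/4$, and the midpoint shows this already holds for $\tau\ge\|\boldsymbol{\mu}\|_2/\sqrt{2}$.
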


\textbf{Practical Guidelines for Adversarial Training on ViTs.} 
Based on our theoretical framework, we provide several ``take-away'' tips for improving adversarial training for ViTs:

\begin{enumerate}[leftmargin=*]
    \item \textbf{Optimal Selection of Perturbation Budget $\tau$:} 
    our theoretical analysis (Theorem~\ref{thm:main_thm}) suggests that to maintain effective adversarial learning, one should choose the perturbation budget $\tau \leq O(\tfrac{\|\boldsymbol{\mu}\|_2}{\log d_h})$. 

    \item \textbf{Balancing Sample Scale $N$ and Signal-to-Noise Ratio:}
    Another key condition for preventing benign overfitting during adversarial training is ensuring that both $N$ and the SNR are large enough. In practice, researchers sometimes employ data augmentation by injecting controlled noise into the dataset. From the perspective of our theoretical results, this approach decreases the $SNR$ while increasing $N$, since noise injection yields ``new'' data points. Because reducing the SNR may be harmful to generalization, it is important to ensure that a sufficiently large number of data points is used to train the model.
    
\end{enumerate}    
\textbf{Discussion on Multi-Norm Attacks.}
For $\ell_p$-norm ($p\geq2$), they can be mapped to $\ell_2$ through standard norm-equivalence $\|\mathbf{x}\|_2 \le \|\mathbf{x}\|_p \le d^{\frac{1}{2} - \frac{1}{p}} \|\mathbf{x}\|_2$. Thus, an $\ell_p$ perturbation budget $\tau$ corresponds to an $\ell_2$ budget scaled by at most $d^{1/2-1/p}$. This implies that all proofs still hold, with the only difference being the perturbation radius $\tau$.

For the $\ell_0$-norm perturbation model, our theoretical lower bounds implicitly show that it cannot provide benign overfitting guarantees. According to Theorem~\ref{coro:larger_pert}, once the $\ell_2$-norm perturbation radius becomes sufficiently large (i.e., $\tau \ge \|\boldsymbol{\mu}\|_2$), the model incurs a large robust test error. This implies that even an $\ell_0$-norm radius of 1 can still lead to substantial robust test error in the worst case.

\section{Proof Sketch}
In this section, we provide a proof sketch of the different adversarial training dynamics due to different perturbation radii. Based on the ViT formulation in Eq.(\ref{eq:2trans}), when the perturbations are relatively small, we show its impact on the learning of the $\mathbf{W}_Q$ and $\mathbf{W}_K$ matrices is limited. If more attention is allocated to signal tokens, the value matrix $\mathbf{W}_V$ tends to align more closely with a perturbed signal vector $\widetilde{\boldsymbol{\mu}}$, while its direction remains dominated by the true signal $\boldsymbol{\mu}$. Consequently, the gradients propagated to the attention heads associated with the signal are larger than those to the noise, forming a positive feedback loop that facilitates better generalization, similar to the learning dynamics observed in the clean training~\citep{jiang2024unveil}.  
When the perturbations are moderate, adversarial training suppresses the learning of token-to-signal attention, keeping the attention weights near their initialization. When $\mathbf{W}_V$ and $\mathbf{W}_K$ are initialized with small Gaussian variance, the attention distribution remains nearly uniform, and the ViT degenerates into a linear model. In this regime, the model requires a larger SNR to align $\mathbf{W}_V {\boldsymbol{w}_O}$ with the signal $\boldsymbol{\mu}$, thereby achieving benign overfitting.
\subsection{Vectorized Q \& K and scalarized V with Time-Independent Directions}\label{sec:5_1}
First, unlike CNNs that treat convolutional kernels as vectors for signal–noise decomposition, our setting requires handling the more complex interactions among the matrices $\mathbf{W}_V$, $\mathbf{W}_K$, and $\mathbf{W}_Q$. Therefore, we consider vectorizing $\mathbf{W}_K$, and $\mathbf{W}_Q$ and scalarizing  $\mathbf{W}_V$. 
To analyze the learning dynamics under iteration-dependent adversarial perturbations 
(as in Algorithm~\ref{alg:gd-adversarial}), we decompose the weight updates by projecting them onto a set 
of reference perturbed directions. Specifically, we fix a universal reference 
$\widetilde{\mathbf{X}} = \big[\widetilde{\boldsymbol{\mu}}, \widetilde{\boldsymbol{\xi}}_{n,2}, \ldots, \widetilde{\boldsymbol{\xi}}_{n,M}\big]$ within the perturbation ball 
$B(\mathbf{X}, \tau)$. Crucially, while the actual adversary $\widetilde{\mathbf{X}}^{(t)}$ 
in Algorithm 1 changes per iteration, the following scalarization/vectorization 
relative to these reference directions allows us to track the growth of signal 
and noise components uniformly across the entire perturbation manifold.

For example, for $\mathbf{W}_V$ we have the following. 

\begin{definition}[Scalarized V]\label{def:2}
Let $\mathbf{W}_V^{(t)}$ denote the V matrix of the ViT at the $t$-th iteration of adversarial training by Algorithm~\ref{alg:gd-adversarial}. 
For the fixed perturbed vectors above, there exist scalars $\gamma_{V,+}^{(t)}$, $\gamma_{V,-}^{(t)}$, and $\rho_{V,n,i}^{(t)}$ such that 
\begin{align*}
\widetilde{\boldsymbol{\mu}}_+^\top \boldsymbol{W}_V^{(t)} \boldsymbol{w}_O &= \widetilde{\boldsymbol{\mu}}_+^\top \boldsymbol{W}_V^{(0)} \boldsymbol{w}_O + \gamma_{V,+}^{(t)} \|\boldsymbol{w}_O\|_2^2, \\
\widetilde{\boldsymbol{\mu}}_-^\top \boldsymbol{W}_V^{(t)} \boldsymbol{w}_O &= \widetilde{\boldsymbol{\mu}}_-^\top \boldsymbol{W}_V^{(0)} \boldsymbol{w}_O + \gamma_{V,-}^{(t)} \|\boldsymbol{w}_O\|_2^2, \\
\widetilde{\boldsymbol{\xi}}_{n,i}^\top \boldsymbol{W}_V^{(t)} \boldsymbol{w}_O &= \widetilde{\boldsymbol{\xi}}_{n,i}^\top \boldsymbol{W}_V^{(0)} \boldsymbol{w}_O + \rho_{V,n,i}^{(t)} \|\boldsymbol{w}_O\|_2^2,
\end{align*}
for $i \in [M] \setminus \{1\}$ and $n \in [N]$.

We further denote the $V_+^{(t)} := \widetilde{\boldsymbol{\mu}}_+^{\top} \mathbf{W}_V^{(t)} \boldsymbol{w}_O$, $V_-^{(t)} := \widetilde{\boldsymbol{\mu}}_-^{\top} \mathbf{W}_V^{(t)} \boldsymbol{w}_O$ and $V_{n,i}^{(t)} := \widetilde{\boldsymbol{\xi}}_{n,i}^{\top} \mathbf{W}_V^{(t)}$.
\end{definition}
With scalarized $V$ in Defintion~\ref{def:2}, we can provide the dynamics of matrix $\mathbf{W}_V^{(t)}$ by analyzing the update of coefficients $\gamma^{(t)}$ as follows:
\begin{align*}
\gamma_{V,+}^{(t+1)} 
&= \gamma_{V,+}^{(t)} 
- \frac{\eta}{NM} \sum_{n \in S_+} \widetilde{\ell}_n'^{(t)}\cdot \\&
\Bigg[
\sum_{l=1}^M \langle \widetilde{\boldsymbol{\mu}}_+, \widetilde{\boldsymbol{\mu}}_+^{(t)} \rangle \, \varphi(\widetilde{\mathbf{x}}_{n,l} \mathbf{W}_Q \mathbf{W}_K^\top (\widetilde{\mathbf{X}}_n)^\top)_1 \\
&\qquad + \sum_{i=2}^M \sum_{l=1}^M \langle \widetilde{\boldsymbol{\mu}}_+, \widetilde{\boldsymbol{\xi}}_{n,i}^{(t)} \rangle \, \varphi(\widetilde{\mathbf{x}}_{n,l} \mathbf{W}_Q \mathbf{W}_K^\top (\widetilde{\mathbf{X}}_n)^\top)_i
\Bigg].
\end{align*}

First, the perturbed signal and noise components break the orthogonality between signal and noise, resulting in the appearance of terms of the form $\langle \widetilde{\boldsymbol{\mu}}_+, \widetilde{\boldsymbol{\xi}}_{n,i}^{(t)} \rangle$.
The upper bound of $\langle \widetilde{\boldsymbol{\mu}}_+, \widetilde{\boldsymbol{\mu}}_+^{(t)} \rangle$ is given by $(\|\boldsymbol{\mu}\|_2+\tau)^2$, while the upper bound of $\langle \widetilde{\boldsymbol{\mu}}_+, \widetilde{\boldsymbol{\xi}}_{n,i}^{(t)} \rangle$ is $(\|\boldsymbol{\mu}\|_2\tau+\sigma_p\tau\sqrt{2\log(4NM/\delta)}+\tau^2)$. 
Therefore, when the perturbation magnitude is small, $\langle \widetilde{\boldsymbol{\mu}}_+, \widetilde{\boldsymbol{\mu}}_+^{(t)} \rangle$ dominates, and the training dynamics under adversarial training closely resemble those under clean training.

When bounding $V_+^{(t)}$ (similar for $V_-^{(t)}$), we can consider a special case $\widetilde{\boldsymbol{\mu}} = \widetilde{\boldsymbol{\mu}}^{(t)}$ and derive the bound via cumulative summation, yielding $|V_+^{(t)}| \leq |V_+^{(0)}| + \sum_{s=0}^{t-1} |\gamma_{V,+}^{(s+1)} - \gamma_{V,+}^{(s)}| \cdot \|\boldsymbol{w}_O\|_2^2$ by Definition~\ref{def:2}. 
In fact, we establish that for any $\widetilde{\boldsymbol{\mu}} \in B(\boldsymbol{\mu},\tau)$, the quantity $\widetilde{\boldsymbol{\mu}}^\top \mathbf{W}_V^{(t)} \boldsymbol{w}_O$ admits a uniform upper bound.

Similarly, we define vectorized $\mathbf{W}_Q$ and $\mathbf{W}_K$ as follows.
\begin{definition}[Vectorized Q \& K.]\label{def:3}
Let $\mathbf{W}_Q^{(t)}$ and $\mathbf{W}_K^{(t)}$ be the QK matrices of the ViT at the $t$-th iteration of gradient descent. Then we define the vectorized Q and vectorized K as follows
\begin{align*}
    \boldsymbol{q}_+^{(t)} &= \widetilde{\boldsymbol{\mu}}_+^{\top} \mathbf{W}_Q^{(t)}, \quad \boldsymbol{q}_-^{(t)} = \widetilde{\boldsymbol{\mu}}_-^\top \mathbf{W}_Q^{(t)}, \quad \boldsymbol{q}_{n,i}^{(t)} = \widetilde{\boldsymbol{\xi}}_{n,i}^\top \mathbf{W}_Q^{(t)}, \\
    \boldsymbol{k}_+^{(t)} &= \widetilde{\boldsymbol{\mu}}_+^\top \mathbf{W}_K^{(t)}, \quad \boldsymbol{k}_-^{(t)} = \widetilde{\boldsymbol{\mu}}_-^\top \mathbf{W}_K^{(t)}, \quad \boldsymbol{k}_{n,i}^{(t)} = \widetilde{\boldsymbol{\xi}}_{n,i}^\top \mathbf{W}_K^{(t)}
\end{align*}
for $i \in [M] \backslash \{1\}, n \in [N]$.

\end{definition}

With Definition~\ref{def:2} and~\ref{def:3}, we can analyze the learning dynamics of the transformed coefficients rather than the original matrices.
\subsection{Effects of Perturbation Magnitude on Attention}\label{sec:ef_soft}
Our second key technique is to analyze how the attention mechanism behaves under perturbations of varying magnitudes. First, we present the following lemma, which shows that the attention mechanism is robust to small perturbations. 

\begin{lemma}\label{lemma:rsoftmax1}
Under Condition~\ref{ass:1}, suppose the perturbation satisfies $\tau \leq O(\tfrac{\|\boldsymbol{\boldsymbol{\mu}}\|_2}{\log d_h})$ and $t \geq \Omega \left( \frac{1}{\eta \|\boldsymbol{\mu}\|_2^2 \|\boldsymbol{w}_O\|_2^2 \log(6N^2M^2 / \delta)} \right)$. Then, there exists a universal constant $ C \leq e/2$ such that
\[
\frac{\max_{\widetilde{\mathbf{X}}\in B(\mathbf{X},\tau)} 
\mathrm{softmax}\!\left(\langle\mathbf{q}^{(t)} ,\mathbf{k}^{(t)}\rangle\right)}
{\min_{\widetilde{\mathbf{X}}\in B(\mathbf{X},\tau)} 
\mathrm{softmax}\!\left(\langle\mathbf{q}^{(t)} ,\mathbf{k}^{(t)}\rangle\right)}
\;\leq\; C.
\]
\end{lemma}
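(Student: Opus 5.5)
The plan is to reduce the ratio to a bound on how much any single attention logit can move under a perturbation, and then use the lower bound on $t$ to show that, relative to that movement, the logits are already controlled. Fix a query token $l$ and a key index $j$. The softmax entry is $\varphi(\mathbf{z})_j = e^{z_j}/\sum_{m} e^{z_m}$ with $z_m = \widetilde{\mathbf{x}}_{l}^\top \mathbf{W}_Q^{(t)}\mathbf{W}_K^{(t)\top}\widetilde{\mathbf{x}}_m$. For any two admissible $\widetilde{\mathbf{X}},\widetilde{\mathbf{X}}'\in B(\mathbf{X},\tau)$ with logit vectors $\mathbf{z},\mathbf{z}'$,
\[
\frac{\varphi(\mathbf{z})_j}{\varphi(\mathbf{z}')_j}\le \exp\!\Big(2\max_m |z_m - z'_m|\Big).
\]
So it suffices to show $\sup_{\widetilde{\mathbf{X}}}\max_m |z_m(\widetilde{\mathbf{X}})-z_m(\mathbf{X})| \le \tfrac14\log(e/2)$, or any absolute constant that makes the exponent at most $\log C$. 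The constant $e/2$ suggests targeting a total logit oscillation of at most $1-\log 2$.

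Next I would bilinearly expand each logit. Write $\widetilde{\mathbf{x}}_l=\mathbf{x}_l+\boldsymbol{\delta}_l$ and $\widetilde{\mathbf{x}}_m=\mathbf{x}_m+\boldsymbol{\delta}_m$ with $\|\boldsymbol{\delta}\|_2\le\tau$. The logit change is then
\[
\boldsymbol{\delta}_l^\top\mathbf{W}_Q\mathbf{W}_K^\top\mathbf{x}_m+\mathbf{x}_l^\top\mathbf{W}_Q\mathbf{W}_K^\top\boldsymbol{\delta}_m+\boldsymbol{\delta}_l^\top\mathbf{W}_Q\mathbf{W}_K^\top\boldsymbol{\delta}_m,
\]
and each term is bounded by $\tau$ (or $\tau^2$) times operator-type norms of $\mathbf{W}_Q^{(t)},\mathbf{W}_K^{(t)}$ restricted to the relevant directions. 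I would split $\mathbf{W}_Q^{(t)}=\mathbf{W}_Q^{(0)}+\Delta_Q^{(t)}$, and likewise for $\mathbf{W}_K^{(t)}$.

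The initialization part is handled by Condition~\ref{ass:1}\ref{con:initialization}. Since $\sigma_h^2\lesssim \|\boldsymbol{\mu}\|_2^{-2}d_h^{-1/2}\log^{-3/2}$, standard Gaussian concentration gives $\|\mathbf{x}^\top\mathbf{W}_Q^{(0)}\|_2\lesssim \sigma_h\|\mathbf{x}\|_2\sqrt{d_h}$. Inner products with the unit direction of $\boldsymbol{\delta}$, after the union bound over the $N^2M^2$ relevant pairs built into $\log(6N^2M^2/\delta)$, give contributions of order $\tau\|\boldsymbol{\mu}\|_2\sigma_h^2\sqrt{d_h}\cdot\mathrm{polylog}$, which is $o(1)$. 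The trained part uses the vectorized $\mathbf{q}^{(t)},\mathbf{k}^{(t)}$ of Definition~\ref{def:3}. I would invoke the growth bounds from the dynamics, namely that $\|\mathbf{q}_\pm^{(t)}\|_2,\|\mathbf{k}_\pm^{(t)}\|_2$ grow at most like $\sqrt{\log d_h}$-scale quantities before the loss saturates, with updates driven by $\eta\|\boldsymbol{\mu}\|_2^2\|\boldsymbol{w}_O\|_2^2$. The change in a logit is then at most of order $(\tau/\|\boldsymbol{\mu}\|_2)$ times the clean logit scale, which is at most $O(\log d_h)$. With $\tau\le c\|\boldsymbol{\mu}\|_2/\log d_h$ for a small enough constant $c$, this gives an $O(c)$ oscillation.

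The role of the lower bound on $t$ is to place us in the phase where the attention logits have separated. Relative changes then matter, and the directions along which $\mathbf{W}_Q^{(t)},\mathbf{W}_K^{(t)}$ have grown are essentially $\mathrm{span}(\boldsymbol{\mu}_\pm,\boldsymbol{\xi}_{n,i})$, so projections of the perturbation onto them are controlled by $\tau$. I would verify that at such $t$ the clean logit magnitudes are $\Theta(\log\text{-scale})$ rather than vanishing. This keeps the denominators $\sum_m e^{z_m}$ comparable under perturbation, so no entry is exponentially small in a way that could be flipped.

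The main obstacle is obtaining uniform-in-time control of $\|\Delta_Q^{(t)}\|$ and $\|\Delta_K^{(t)}\|$ in the directions an adversary can choose. The adversary may align $\boldsymbol{\delta}$ with the top singular direction of the learned update, not merely with $\boldsymbol{\mu}$ or $\boldsymbol{\xi}$. My first attempt would be to show that $\Delta_Q^{(t)}$ and $\Delta_K^{(t)}$ are low-rank, lying in the span of the (perturbed) tokens, since gradients are outer products of tokens with $d_h$-vectors. Their operator norm would then equal the largest coefficient norm in Definition~\ref{def:3} divided by the token norm, i.e.\ $O(\log d_h/\|\boldsymbol{\mu}\|_2)$. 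Combined with $\tau\lesssim\|\boldsymbol{\mu}\|_2/\log d_h$, this closes the bound.
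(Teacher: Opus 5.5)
You are following the paper's route: expand the perturbed logit bilinearly, claim each cross term is at most $(\tau/\|\boldsymbol{\mu}\|_2)$ times a clean logit (which is $\le\log d_h^{1/2}$ once $\mathcal{E}(t)$ holds), and conclude an $O(1)$ logit shift and hence a constant softmax ratio. The step meant to make this rigorous fails, and it fails already for the initialization part.

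The perturbation is not a fixed direction. The lemma takes a max/min over $B(\mathbf{X},\tau)$ for the given $\theta^{(t)}$, and the training adversary is an argmax depending on $\theta^{(t)}$. So a union bound over the $N^2M^2$ token pairs gives nothing: your estimate $\tau\|\boldsymbol{\mu}\|_2\sigma_h^2\sqrt{d_h}$ is the size for a direction independent of the weights. The correct size is
\[
\sup_{\|\boldsymbol{\delta}\|_2\le\tau}\boldsymbol{\delta}^\top\mathbf{W}_Q^{(0)}\mathbf{W}_K^{(0)\top}\mathbf{x}_m=\tau\big\|\mathbf{W}_Q^{(0)}\mathbf{W}_K^{(0)\top}\mathbf{x}_m\big\|_2\approx\tau\sigma_h^2\sqrt{d\,d_h}\,\|\mathbf{x}_m\|_2,
\]
a factor $\sqrt d$ larger. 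Symmetrically, perturbing a key moves a logit by up to $\tau\|\mathbf{W}_K\mathbf{q}\|_2\approx\tau\sigma_h\sqrt d\,\|\mathbf{q}\|_2$.

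For the signal-versus-noise logit gap, the lower bound on $\sigma_h^2$ in Condition~\ref{ass:1} turns this into a shift $\gtrsim\tau\sqrt d/(\max\{\|\boldsymbol{\mu}\|_2,\sigma_p\sqrt d\}\,\mathrm{polylog})$. With $\tau\asymp\|\boldsymbol{\mu}\|_2/\log d_h$, $N\cdot\mathrm{SNR}^2=\Omega(1)$ and $d=\widetilde{\Omega}(\epsilon^{-2}N^2d_h)$, this is at least of order $\sqrt{N d_h}/\epsilon$ up to logarithms. That is far from $o(1)$, so the worst-case ratio in the lemma is not bounded by a constant.

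Training does not repair this. The updates of $\mathbf{W}_Q,\mathbf{W}_K$ are sums of outer products with the (perturbed) training tokens, so $\boldsymbol{\delta}^\top\mathbf{W}_Q^{(t)}=\boldsymbol{\delta}^\top\mathbf{W}_Q^{(0)}$ for $\boldsymbol{\delta}$ orthogonal to those tokens. Meanwhile, by the paper's own Case~1 estimates, $\|\mathbf{q}^{(t)}\|_2$ and $\|\mathbf{k}^{(t)}\|_2$ stay at initialization scale. The lower bound on $t$ therefore plays no role here.

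Your last paragraph names the right obstacle, an adversary aligned with a top singular direction. But it bites on $\mathbf{W}^{(0)}$, not only on $\Delta^{(t)}$. Your trained-part estimate also treats $\|\mathbf{q}^{(t)}\|_2,\|\mathbf{k}^{(t)}\|_2$ as logarithmic, whereas they are $\approx\|\boldsymbol{\mu}\|_2\sigma_h\sqrt{d_h}$ and $\approx\sigma_p\sigma_h\sqrt{d\,d_h}$. Only the inner products $\langle\mathbf{q},\mathbf{k}\rangle$ are logarithmic, through a near-orthogonality that any operator-norm or Cauchy--Schwarz bound discards. The heuristic ``logit shift $\approx(\tau/\|\boldsymbol{\mu}\|_2)\times$ logit'' is valid only for $\boldsymbol{\delta}$ parallel to the perturbed token.

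The paper's own proof makes exactly this move. It writes the cross terms $\langle\tau_+\mathbf{W}_Q^{(t)},\mathbf{k}_+-\mathbf{k}_{n,j}\rangle$, etc., and declares them $\pm o(1)$ from $\tau\le O(\|\boldsymbol{\mu}\|_2/\log d_h)$ and $|\langle\mathbf{q},\mathbf{k}\rangle|\le\log d_h^{1/2}$, so it does not supply the missing ingredient. For the worst case over $B(\mathbf{X},\tau)$, a valid statement needs either a restricted threat model (e.g.\ perturbations along the tokens) or a much smaller budget, roughly $\tau\lesssim\max\{\|\boldsymbol{\mu}\|_2,\sigma_p\sqrt d\}/\sqrt d$ up to logarithms.
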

This lemma implies that the relative change in attention weights under perturbations is uniformly bounded; hence, attention computed on perturbed inputs closely matches that on the clean inputs.
Thus, when considering the attention from the perturbed patch to another, it can be well-approximated by the attention from the clean signal to the clean noise.

Next, we present the following lemma, which characterizes the behavior of the attention mechanism under moderate perturbations.
\begin{lemma}
Under Condition~\ref{ass:1}, supposing the perturbation satisfies $\omega(\tfrac{\|\boldsymbol{\mu}\|_2}{\log d_h})\leq \tau\leq O(\|\boldsymbol{\mu}\|_2)$, for any $t \geq 0$, we have: 
\begin{equation*}
    \frac{1}{M} - o(1) \leq \mathrm{softmax}(\langle \mathbf{q}^{(t)}, \mathbf{k}^{(t)} \rangle) \leq \frac{1}{M} + o(1)
\end{equation*}
\end{lemma}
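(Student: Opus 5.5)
Our approach is to show that, throughout training, every pre-softmax logit $\langle \mathbf{q}^{(t)}_a, \mathbf{k}^{(t)}_b\rangle$ (with $a,b$ ranging over $\pm$ and the noise indices $(n,i)$, for any perturbed input in $B(\mathbf{X},\tau)$) stays $o(1)$ in absolute value. Once all logits in a row lie in $[-\epsilon_t,\epsilon_t]$ with $\epsilon_t=o(1)$, each softmax entry lies in $[e^{-2\epsilon_t}/M,\,e^{2\epsilon_t}/M]$. Since $M=\Theta(1)$, this is $\tfrac1M\pm o(1)$. We prove the logit bound by induction on $t$.

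\textbf{Initialization.} By Condition~\ref{ass:1}\ref{con:initialization}, $\sigma_h^2\le \min\{\|\boldsymbol{\mu}\|_2^{-2},(\sigma_p^2 d)^{-1}\}\, d_h^{-1/2}(\log(6N^2M^2/\delta))^{-3/2}$. For a perturbed token $\widetilde{\mathbf{x}}$ with $\|\widetilde{\mathbf{x}}\|_2\le \max\{\|\boldsymbol{\mu}\|_2,\sigma_p\sqrt d\}+\tau=O(\max\{\|\boldsymbol{\mu}\|_2,\sigma_p\sqrt{d}\})$ (using $\tau=O(\|\boldsymbol{\mu}\|_2)$), the entries of $\widetilde{\mathbf{x}}^\top\mathbf{W}_Q^{(0)}$ are Gaussian with variance at most $\|\widetilde{\mathbf{x}}\|_2^2\sigma_h^2$. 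Gaussian concentration, together with a union bound over the $O(N^2M^2)$ pairs, gives $|\langle \mathbf{q}^{(0)},\mathbf{k}^{(0)}\rangle| \lesssim \sqrt{d_h}\,\|\widetilde{\mathbf x}\|_2\|\widetilde{\mathbf x}'\|_2\sigma_h^2\sqrt{\log(\cdot)}\lesssim (\log(6N^2M^2/\delta))^{-1/2}=o(1)$. The perturbation enters only through $\|\widetilde{\mathbf x}\|_2$, so the bound holds uniformly over $B(\mathbf{X},\tau)$.

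\textbf{Suppressed growth.} Write the gradient updates of $\mathbf{q}^{(t)}_\pm,\mathbf{k}^{(t)}_\pm$ and the noise vectors in the scalarized/vectorized coordinates of Definitions~\ref{def:2}--\ref{def:3}. The update to $\mathbf{q}_+$ is a sum over keys $b$ of terms of the form $\varphi_b(\mathbf{V}_b-\bar{\mathbf{V}})\,\mathbf{k}_b$, where $\bar{\mathbf V}$ is the attention-weighted average of the values. In clean training, the signal-key term $V_+-\bar V$ grows, because $\gamma_{V,+}$ grows by Definition~\ref{def:2}; this drives the positive feedback loop described in the proof sketch. Under the adversary, however, the perturbed signal $\widetilde{\boldsymbol{\mu}}^{(t)}$ is chosen to minimize $y\,\widetilde{\boldsymbol\mu}^\top\mathbf W_V\boldsymbol w_O$. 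With $\tau=\omega(\|\boldsymbol\mu\|_2/\log d_h)$, the cross terms $\langle\widetilde{\boldsymbol\mu}_+,\widetilde{\boldsymbol\xi}^{(t)}_{n,i}\rangle$ and the reduction $(\|\boldsymbol\mu\|_2-\tau)$ make the value gap $V_+-\bar V$ smaller than the noise-value gaps by at least a $\log d_h$ factor. The signal-directed and noise-directed contributions therefore cancel up to a remainder. We will show that the per-step change of every logit is at most $\eta\, O(\max\{\|\boldsymbol\mu\|_2^2,\sigma_p^2d\})\,\|\mathbf{q}\|_2\|\mathbf{k}\|_2\cdot|V_b-\bar V|/N$. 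Summing over $t\le T$ and using $\eta\lesssim\min\{\|\boldsymbol\mu\|_2^{-2},(\sigma_p^2d)^{-1}\}d_h^{-1/2}$, the bound $\|\mathbf q\|_2\|\mathbf k\|_2\lesssim\sqrt{d_h}\sigma_h^2\|\widetilde{\mathbf x}\|_2^2$ inherited from initialization, and the fact that the loss derivatives $|\ell'|$ sum to $O(\eta^{-1}\|\boldsymbol\mu\|_2^{-2}\|\boldsymbol w_O\|_2^{-2}\cdot\epsilon^{-1})$ over $T$ steps, we aim for a total drift of $O(1/\log d_h)+o(1)=o(1)$. A standard bootstrap keeps $\|\mathbf q^{(t)}\|_2,\|\mathbf k^{(t)}\|_2$ within a constant factor of their initial values while this holds, which closes the induction.

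\textbf{Main obstacle.} The hard step is making the cancellation quantitative. We must show that the adversary's worst-case choice of $\widetilde{\mathbf X}^{(t)}$ in Algorithm~\ref{alg:gd-adversarial} keeps the effective value differences $|V_b-\bar V|$ small enough, uniformly over the perturbation ball, that the accumulated QK drift is $o(1)$ rather than the $\Theta(\log)$ growth seen in Lemma~\ref{lemma:rsoftmax1}'s regime. Our first attempt would be to control $\sum_t|\ell'^{(t)}|\cdot\max_b|V_b^{(t)}-\bar V^{(t)}|$ via the uniform bound on $\widetilde{\boldsymbol\mu}^\top\mathbf W_V^{(t)}\boldsymbol w_O$ from Section~\ref{sec:5_1}. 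We would then exploit the $\log d_h$ gap between $\tau$ and $\|\boldsymbol\mu\|_2$ to show the residual per-step logit change is a $1/\log d_h$ fraction of the clean-case change.
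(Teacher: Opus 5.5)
The gap is in your ``suppressed growth'' step, and the route you sketch for it would fail. Summing per-step magnitude bounds on the logit changes is exactly what the paper does in Stage~I (the lemma that the inner products keep their initial magnitude up to $T_0=O(1/(\eta d_h^{1/4}(\|\boldsymbol\mu\|_2+\tau)^2\|\boldsymbol w_O\|_2^2))$). It works there only because $|V_\pm^{(t)}|,|V_{n,i}^{(t)}|=O(d_h^{-1/4})$ and the horizon is short.

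Over the full horizon, the only Case-2-specific ingredient in your estimate is that the value gap $V_+-\bar V$ is a $\log d_h$ factor smaller than the noise value gaps. This is false, for three reasons:
\begin{itemize}
\item A \emph{lower} bound $\tau=\omega(\|\boldsymbol\mu\|_2/\log d_h)$ cannot create a small factor. For $\tau\le c\|\boldsymbol\mu\|_2$ the coefficients $(\|\boldsymbol\mu\|_2-\tau)^2$ and $\|\boldsymbol\mu\|_2\tau+\tau^2$ in the update rule are within constants of their clean values. For $\tau$ just above $\|\boldsymbol\mu\|_2/\log d_h$ they are essentially clean.
\item The paper's Case~2 invariant keeps $V_+^{(t)}\ge 3M|V_{n,i}^{(t)}|$, so the signal gap dominates the noise gaps rather than the reverse.
\item With near-uniform attention $f\approx\frac1M\sum_j V_j$. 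Driving the robust training loss to $\epsilon$ therefore forces $V_+\gtrsim M\log(1/\epsilon)$, so $V_+-\bar V\approx(1-\frac1M)V_+$ is large at the end of training.
\end{itemize}

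Without that ingredient your estimate is just the magnitude bound that also holds in Case~1, and the accounting does not close:
\begin{itemize}
\item $\|\mathbf q_+\|_2\|\mathbf k_+\|_2=\Theta(\|\boldsymbol\mu\|_2^2\sigma_h^2d_h)$. This is a factor $\sqrt{d_h}$ larger than the $\sqrt{d_h}\sigma_h^2\|\widetilde{\mathbf x}\|_2^2$ you use, which is the scale of the initial cross inner products, not of the norms.
\item The small learning rate does nothing in a cumulative bound, because $\eta T=\Theta(M\epsilon^{-1}\|\boldsymbol\mu\|_2^{-2}\|\boldsymbol w_O\|_2^{-2})$ is independent of $\eta$.
\item Even your stated target, a $1/\log d_h$ fraction of the clean change, would leave logits of order one, since in Case~1 the gaps $\Lambda_{n,\pm,j}$ reach order $\log d_h$.
\end{itemize}

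Separately, the uniformity over $B(\mathbf X,\tau)$ you claim at $t=0$ does not follow from pointwise Gaussian concentration. A weight-dependent $\boldsymbol\delta$ can align with $\mathbf W_Q\mathbf W_K^\top\mathbf x'$, and $\sup_{\|\boldsymbol\delta\|_2\le\tau}|\boldsymbol\delta^\top\mathbf W_Q\mathbf W_K^\top\mathbf x'|=\tau\|\mathbf W_Q\mathbf W_K^\top\mathbf x'\|_2\approx\tau\sigma_h^2\|\mathbf x'\|_2\sqrt{dd_h}$, which is not $o(1)$ under Condition~\ref{ass:1}. The paper's $\mathbf q^{(t)},\mathbf k^{(t)}$ are defined with respect to one fixed reference perturbation, and that is what makes the initial bound legitimate.

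The paper does not sum drifts after Stage~I. It reuses Stage~I of Case~1 (all logits $o(1)$ up to $T_1$) and then argues one step at a time (Claims~10 and~11 of the Case~2 analysis). The clean one-step increment of $\langle\mathbf q_+,\mathbf k_+\rangle$ is $\alpha_{+,+}^{(t)}\|\mathbf k_+^{(t)}\|_2^2+\beta_{+,+}^{(t)}\|\mathbf q_+^{(t)}\|_2^2$ plus lower-order terms. Here $\alpha_{+,+}^{(t)}\le\frac{3\eta}{2M}\|\boldsymbol\mu\|_2^2V_+^{(t)}=o(1/N)$, using $V_+^{(t)}\le 2\log(O(1/\epsilon))$ and the learning-rate condition, so the increment is a vanishing fraction of the current logit. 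The worst-case perturbation, by contrast, moves the logit by $((1+\tau/\|\boldsymbol\mu\|_2)^2-1)\langle\mathbf q_+,\mathbf k_+\rangle$. This is a constant fraction because $\tau$ and $\|\boldsymbol\mu\|_2$ are of the same order. The adversary therefore neutralizes each newly gained margin before it can accumulate. The logits, and by the same comparison $\|\mathbf q\|_2^2,\|\mathbf k\|_2^2$, stay at the bounded level $\Theta(\log C)$ reached after the early phase, so the attention stays near uniform. The idea your proposal is missing is this per-step use of the adversary's power, which relies on $\tau=\Theta(\|\boldsymbol\mu\|_2)$ rather than on a $\log d_h$ gap.
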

This lemma demonstrates that, under moderate perturbations, the attention distribution remains invariant and stays close to its initial uniform form.
\subsection{Generalization Guarantee}

For a new data point $(\mathbf{X},y)$ generated from the distribution defined in Definition~\ref{def:data_gen}, we interpret the test error as the probability that the noise component dominates the output.

For adversarial test data with added perturbations, we need to bound the maximal distance between them and the corresponding clean test data. First, from the analysis in Section~\ref{sec:5_1}, we know that bounding 
\(
V^{(t)} = \widetilde{\mathbf{x}}^{\top} \mathbf{W}_V^{(t)} {\boldsymbol{w}_O}
\) 
implies that its deviation from the unperturbed counterpart ${\mathbf{x}}^{\top}\mathbf{W}_V {\boldsymbol{w}_O}$ is at most
\(
\big|\langle \widetilde{\mathbf{x}}-\mathbf{x},\, \mathbf{W}_V {\boldsymbol{w}_O} \rangle\big| \leq \tau \|\mathbf{W}_V {\boldsymbol{w}_O}\|_2.
\)
Second, from the analysis in Section~\ref{sec:ef_soft}, we know that the attention component ${softmax}(\langle \boldsymbol{q}, \boldsymbol{k}\rangle)$ exhibits robustness under perturbations of bounded magnitude. Combining these insights, we state the following lemma.

\begin{lemma}\label{lem:532}
Under Condition~\ref{ass:1}, if\\ \(
t \;\geq\; \Omega\!\left( 
   \eta^{-1} \epsilon^{-1} \|\boldsymbol{\mu}\|_2^{-2} \|\boldsymbol{w}_O\|_2^{-2} 
   \log^{-1}\!\left(\tfrac{6N^2 M^2}{\delta}\right)
\right)
\) and $\tau \leq \tfrac{\|\boldsymbol{\mu}\|_2}{\log(d_h)}$, we have:  
\begin{equation*}
    \begin{aligned}
        & y f(\mathbf{X},\theta(t)) 
         -\min_{\widetilde{\mathbf{X}}\in B(\mathbf{X},\tau)} y f(\widetilde{\mathbf{X}},\theta(t)) \lesssim M\big\| \mathbf{W}_{V}^{(t)} {\boldsymbol{w}_O} \big\|_2  \tau.
    \end{aligned}
\end{equation*}
\end{lemma}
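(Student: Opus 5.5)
We write $v := \mathbf{W}_V^{(t)}\boldsymbol{w}_O\in\mathbb{R}^d$ and, for any input $\mathbf{Z}$, $a_l(\mathbf{Z}) := \varphi(\mathbf{z}_l^\top \mathbf{W}_Q\mathbf{W}_K^\top \mathbf{Z}^\top)\in\Delta^{M-1}$. Then
\[
f(\mathbf{Z},\theta(t)) = \frac1M\sum_{l=1}^M a_l(\mathbf{Z})^\top \mathbf{Z} v .
\]
Fix a worst-case $\widetilde{\mathbf{X}}\in B(\mathbf{X},\tau)$. We split the gap $f(\mathbf{X})-f(\widetilde{\mathbf{X}})$ into a \emph{value} part and an \emph{attention} part:
\[
\frac1M\sum_l a_l(\widetilde{\mathbf{X}})^\top(\mathbf{X}-\widetilde{\mathbf{X}})v \;+\; \frac1M\sum_l \big(a_l(\mathbf{X})-a_l(\widetilde{\mathbf{X}})\big)^\top \mathbf{X} v .
\]

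\textbf{Value part.} Each coordinate of $(\mathbf{X}-\widetilde{\mathbf{X}})v$ is $\langle \mathbf{x}_m-\widetilde{\mathbf{x}}_m, v\rangle$. By Cauchy--Schwarz (using $\|\cdot\|_2\le\|\cdot\|_p$ reasoning as in the paper, with $p=2$ reported), it is at most $\tau\|v\|_2$ in absolute value. Since $a_l$ is a probability vector, this part is at most $\tau\|v\|_2$, which is well within the $M\tau\|v\|_2$ budget.

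\textbf{Attention part.} This is the main obstacle: a softmax difference multiplies $\mathbf{X}v$, whose entries $\langle \boldsymbol{\mu},v\rangle$ and $\langle \boldsymbol{\xi}_{m},v\rangle$ are not themselves $O(\tau\|v\|_2)$.

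The plan is to use Lemma~\ref{lemma:rsoftmax1}. Its hypotheses hold here, because the time threshold in Lemma~\ref{lem:532} exceeds the one required there (as $\epsilon\le1$), and because $\tau\le\|\boldsymbol{\mu}\|_2/\log d_h$. The lemma gives $a_l(\widetilde{\mathbf{X}})_m \in [a_l(\mathbf{X})_m/C,\; C\,a_l(\mathbf{X})_m]$ with $C\le e/2$. However, a multiplicative bound alone only controls the difference by a constant times $|\mathbf{X}v|$, which is not enough.

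I would therefore first refine the Lipschitz estimate. The logits are $\langle \mathbf{q}_l,\mathbf{k}_m\rangle$, and perturbing inputs changes them by at most $\tau(\|\mathbf{W}_Q\|\,\|\mathbf{k}\|+\|\mathbf{W}_K\|\,\|\mathbf{q}\|+\tau\|\mathbf{W}_Q\|\|\mathbf{W}_K\|)$. Using the bounds on $\|\mathbf{W}_Q^{(t)}\|,\|\mathbf{W}_K^{(t)}\|$ from the Q/K dynamics (these scale like $\log d_h/\|\boldsymbol{\mu}\|_2$ per unit of $\tau$, which is exactly why Lemma~\ref{lemma:rsoftmax1} yields an $O(1)$ ratio), the logit perturbation is $\delta_{\mathrm{logit}}=O(\tau\log d_h/\|\boldsymbol{\mu}\|_2)\le O(1)$. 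Since softmax is $1$-Lipschitz from $\ell_\infty$ logits to $\ell_1$ probabilities up to a factor $2$, we get
\[
\big|(a_l(\mathbf{X})-a_l(\widetilde{\mathbf{X}}))^\top \mathbf{X}v\big| \le 2\,\delta_{\mathrm{logit}}\,\max_m|\langle \mathbf{x}_m,v\rangle| .
\]
Next I need $\max_m|\langle\mathbf{x}_m,v\rangle|\lesssim \|\boldsymbol{\mu}\|_2\|v\|_2/\log d_h$, or equivalently that the logit sensitivity times this is $\lesssim\tau\|v\|_2$. For the signal token this follows from $|\langle\boldsymbol{\mu},v\rangle|\le\|\boldsymbol{\mu}\|_2\|v\|_2$ combined with $\delta_{\mathrm{logit}}\lesssim\tau\log d_h/\|\boldsymbol{\mu}\|_2$, up to the log factor, which I expect to be absorbed by the more precise Q/K norm bound (of order $1/\|\boldsymbol{\mu}\|_2$ times the scalar coefficient growth). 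For noise tokens, a fresh test point satisfies $\langle\boldsymbol{\xi},v\rangle\sim\sigma_p\|v\|_2\sqrt{\log}$ with high probability, and the SNR condition keeps this comparable.

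Summing over $l$ and $m$ (there are $M$ tokens, hence the factor $M$) and adding the value part gives $\lesssim M\tau\|v\|_2$. If the refined Lipschitz route fails on log factors, the fallback is to bound the attention part using the $\ell_1$ softmax difference together with the $C$-ratio, applied only to the noise-driven coordinates, accepting the $M$ factor.
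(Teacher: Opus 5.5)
Your split of the gap into a value part and an attention part is the same as the paper's. Your value-part bound (Cauchy--Schwarz plus the fact that each attention row is a probability vector) also matches it; you even avoid the factor $M$, which the paper incurs only because its expansion of $yf$ omits the $1/M$. You are right that the ratio bound of Lemma~\ref{lemma:rsoftmax1} by itself controls the attention part only by a constant times $|\mathbf{X}v|$. The gap is that your remedy does not fix this, for two reasons.

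\emph{First, the operator-norm control you invoke is not available.} The paper's dynamics only track the projections $\boldsymbol{q}_\pm,\boldsymbol{q}_{n,i},\boldsymbol{k}_\pm,\boldsymbol{k}_{n,i}$ onto fixed token directions. The Gaussian initialization alone has $\|\mathbf{W}_Q^{(0)}\|_{\mathrm{op}}\approx\sigma_h\sqrt{d}$, which under Condition~\ref{ass:1} is far larger than $\log d_h/\|\boldsymbol{\mu}\|_2$. So a Cauchy--Schwarz bound on the logit shift over the whole $\tau$-ball does not give $\delta_{\mathrm{logit}}=O(\tau\log d_h/\|\boldsymbol{\mu}\|_2)$. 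If you instead use Lemma~\ref{lemma:rsoftmax1} as a black box, you get a bounded ratio $C$ but no shift proportional to $\tau$.

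\emph{Second, even granting $\delta_{\mathrm{logit}}\lesssim\tau\log d_h/\|\boldsymbol{\mu}\|_2$, the signal token breaks the bound.} It contributes $\delta_{\mathrm{logit}}|\langle\boldsymbol{\mu}_\pm,v\rangle|$, and $v=\mathbf{W}_V^{(t)}\boldsymbol{w}_O$ is essentially aligned with the signal (the paper itself uses $\|v\|_2\sim(V_+^{(t)}-V_-^{(t)})/\|\boldsymbol{\mu}\|_2$). Hence $|\langle\boldsymbol{\mu}_\pm,v\rangle|=\Theta(\|\boldsymbol{\mu}\|_2\|v\|_2)$ and your bound is of order $\tau\log d_h\,\|v\|_2$. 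The inequality you say you need, $\max_m|\langle\mathbf{x}_m,v\rangle|\lesssim\|\boldsymbol{\mu}\|_2\|v\|_2/\log d_h$, is false for $m=1$. A sharper Q/K bound cannot absorb the $\log d_h$: the signal logits grow to order $\log d_h$ in Stage~II, which is exactly what the threshold $\tau\lesssim\|\boldsymbol{\mu}\|_2/\log d_h$ is calibrated against. So rescaling $\boldsymbol{\mu}$ by $1\pm\tau/\|\boldsymbol{\mu}\|_2$ genuinely moves them by order $\tau\log d_h/\|\boldsymbol{\mu}\|_2$. Your fallback, the ratio bound on the noise coordinates, is essentially the paper's argument and meets the same obstruction.

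For comparison, the paper never shows that the attention part is $O(\tau\|v\|_2)$. It writes $S_{11}V_+-\widetilde{S}_{11}\widetilde{V}_+=(S_{11}-\widetilde{S}_{11})V_++\widetilde{S}_{11}(V_+-\widetilde{V}_+)$ and bounds the first term by $(1-1/C)S_{11}V_+$ via Lemma~\ref{lemma:rsoftmax1}. It disposes of the noise-token analogues using $\widetilde{V}_{\boldsymbol{\xi}_j}=o(1)$ and arrives at $M\|v\|_2\tau+(1-1/C)(\sum_i S_{i1})(\widetilde{V}_++\widetilde{V}_-)$. The residual is declared lower order (``$C-1=o(1)$'') and later absorbed into constants in the robust test-error bound, so the $\lesssim$ in the statement is implicitly hiding it. To reproduce the paper's argument, carry that residual explicitly rather than trying to push the attention part below $\tau\|v\|_2$.

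To get the inequality literally you would need an ingredient neither argument supplies. Write $a_{l,m}$ for $a_l(\mathbf{X})_m$ and $\widetilde{a}_{l,m}$ for $a_l(\widetilde{\mathbf{X}})_m$. Using $\sum_m(a_{l,m}-\widetilde{a}_{l,m})=0$, rewrite the attention part as $\sum_{m\ge2}(a_{l,m}-\widetilde{a}_{l,m})(\langle\mathbf{x}_m,v\rangle-\langle\mathbf{x}_1,v\rangle)$. Then apply $|a_{l,m}-\widetilde{a}_{l,m}|\le(C-1)a_{l,m}$ together with the Stage~III saturation of the noise weights ($\widetilde{O}(N d_h^{-1/2})$ up to $\operatorname{SNR}$ factors, from the proof of Claim~5). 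This would beat the $\log d_h$, but only if $C-1=O(\tau\log d_h/\|\boldsymbol{\mu}\|_2)$ and the saturation also holds for the noise queries of a fresh test input. The paper establishes neither.
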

As a result, the robust test error can be interpreted as the probability of misclassification when the output under clean test data is perturbed by its maximal adversarial deviation.

\begin{lemma}

Under the same conditions of Lemma~\ref{lem:532}, with high probability, we have for some constant $c>0$:
{\small
    \begin{equation*}
    \begin{aligned}
       &P(\exists \ \widetilde{\mathbf{X}} \in B(\mathbf{X}, \tau) : y f(\widetilde{\mathbf{X}}, \theta(t)) \le 0) \\&=P\big(yf(\mathbf{X},\theta(t))+(yf( \mathbf{X},\theta(t))-\min_{\widetilde{\mathbf{X}}\in B(\mathbf{X},\tau)}yf( \mathbf{\widetilde{X}},\theta(t))) \leq 0\big)\\
       &\leq  \exp\big(-c (\frac{  (V_{+}^{(t)} - V_{-}^{(t)})- \|\mathbf{W}_{V}^{(t)} {\boldsymbol{w}_O}\|_2\tau}{\sigma_p  \|\mathbf{W}_{V}^{(t)} {\boldsymbol{w}_O}\|_2})^2\big). 
    \end{aligned}
    \end{equation*}}
    
\end{lemma}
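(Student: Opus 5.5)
The equality in the first line is a reformulation. The event $\exists\,\widetilde{\mathbf{X}}\in B(\mathbf{X},\tau)$ with $yf(\widetilde{\mathbf{X}},\theta(t))\le 0$ holds exactly when $\min_{\widetilde{\mathbf{X}}\in B(\mathbf{X},\tau)} yf(\widetilde{\mathbf{X}},\theta(t))\le 0$. Adding and subtracting $yf(\mathbf{X},\theta(t))$ gives the stated form, with the sign convention in which the bracketed nonnegative gap is subtracted. For the inequality, I would apply Lemma~\ref{lem:532} to replace this gap by its upper bound $CM\|\mathbf{W}_V^{(t)}\boldsymbol{w}_O\|_2\tau$. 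Since $M=\Theta(1)$, this factor is absorbed into the constants. So it suffices to bound $P\big(yf(\mathbf{X},\theta(t))\le C'\|\mathbf{W}_V^{(t)}\boldsymbol{w}_O\|_2\tau\big)$ over a fresh clean sample.

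Next I would decompose the clean output. Take $y=1$ (the case $y=-1$ is symmetric). Then
\[
f(\mathbf{X},\theta(t))=\sum_{j}\bar\varphi_j\,\mathbf{x}_j^\top\mathbf{W}_V^{(t)}\boldsymbol{w}_O ,
\]
where $\bar\varphi_j=\frac1M\sum_l\varphi(\cdot)_j$ are the averaged attention weights. The signal term is $\bar\varphi_1\boldsymbol{\mu}_+^\top\mathbf{W}_V^{(t)}\boldsymbol{w}_O$, and each noise term is $\bar\varphi_j\boldsymbol{\xi}_j^\top\mathbf{W}_V^{(t)}\boldsymbol{w}_O$.

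I would lower bound the signal part by a constant multiple of $V_+^{(t)}-V_-^{(t)}$. This uses the training-dynamics facts established before this point: $V_+^{(t)}$ grows positively while $V_-^{(t)}$ stays nonpositive or small for the positive class, and the attention onto the signal token is bounded below by a constant. Lemma~\ref{lemma:rsoftmax1} in Case 1 and the uniform-attention lemma in Case 2 supply that lower bound, and the ratio of perturbed to clean attention is at most $e/2$. Together these show the margin is at least $c_1(V_+^{(t)}-V_-^{(t)})$, up to lower-order initialization terms controlled by Condition~\ref{ass:1}\ref{con:initialization}.

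The noise part is where the probability comes from. Conditional on the training set, $\mathbf{W}_V^{(t)}\boldsymbol{w}_O$ is fixed and the fresh noise $\boldsymbol{\xi}_j$ is Gaussian with covariance $\sigma_p^2(\mathbf{I}-\text{proj})$. Hence each $\boldsymbol{\xi}_j^\top\mathbf{W}_V^{(t)}\boldsymbol{w}_O$ is Gaussian with standard deviation at most $\sigma_p\|\mathbf{W}_V^{(t)}\boldsymbol{w}_O\|_2$.

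The main obstacle is that the attention weights $\bar\varphi_j$ themselves depend on $\boldsymbol{\xi}_j$, so the noise term is not linear in the noise. I would handle this in two steps. First, use $0\le\bar\varphi_j\le 1$ to bound $\big|\sum_{j\ge2}\bar\varphi_j\boldsymbol{\xi}_j^\top\mathbf{W}_V^{(t)}\boldsymbol{w}_O\big|\le\sum_{j\ge 2}|\boldsymbol{\xi}_j^\top\mathbf{W}_V^{(t)}\boldsymbol{w}_O|$. Second, keep the signal weight $\bar\varphi_1$ bounded below by restricting to a high-probability event on the query/key projections of the noise, on which the attention lemmas apply; its failure probability is absorbed. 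Then a union bound over the $M-1=O(1)$ noise tokens, together with the Gaussian tail $P(|Z|\ge s)\le 2e^{-s^2/2}$, gives
\[
P\Big(\textstyle\sum_{j}|\boldsymbol{\xi}_j^\top\mathbf{W}_V^{(t)}\boldsymbol{w}_O|\ge c_1(V_+^{(t)}-V_-^{(t)})-C'\|\mathbf{W}_V^{(t)}\boldsymbol{w}_O\|_2\tau\Big)\le \exp\Big(-c\Big(\tfrac{(V_+^{(t)}-V_-^{(t)})-\|\mathbf{W}_V^{(t)}\boldsymbol{w}_O\|_2\tau}{\sigma_p\|\mathbf{W}_V^{(t)}\boldsymbol{w}_O\|_2}\Big)^2\Big).
\]
Here constants are rescaled into $c$, which is valid whenever the numerator is positive; otherwise the bound is trivial. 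The "with high probability" qualifier refers to the training-set event on which the dynamics bounds hold.
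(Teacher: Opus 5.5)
Your outline matches the paper's: rewrite the event with the gap \emph{subtracted}, bound the gap by Lemma~\ref{lem:532}, split the clean output into signal and noise tokens, and use that $\boldsymbol{\xi}^\top\mathbf{W}_V^{(t)}\boldsymbol{w}_O$ is Gaussian with scale at most $\sigma_p\|\mathbf{W}_V^{(t)}\boldsymbol{w}_O\|_2$. You are right that the displayed ``$+$'' should be a ``$-$''; the paper's own next line subtracts $M\|\mathbf{W}_V^{(t)}\boldsymbol{w}_O\|_2\tau$ from the margin. The difference is the noise term. The paper applies Lipschitz concentration (Theorem~\ref{thm:2}) to $\sum_{j\ge2}\alpha_j|\langle\boldsymbol{\xi}_j,\mathbf{W}_V^{(t)}\boldsymbol{w}_O\rangle|$, treating $\alpha_j=\sum_i S_{ij}$ as deterministic; it centers at the mean and pushes that mean into the prefactor $\exp(c_{12}/\pi)$. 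Your bound via $0\le\bar\varphi_j\le 1$ and a union bound loses factors of $M$. Unlike the paper's, it stays valid even though the weights depend on $\boldsymbol{\xi}_j$, which is a real improvement.

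Two steps, however, do not go through as written.

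(i) The failure probability of the event keeping $\bar\varphi_1$ bounded below enters additively inside $P$. To be ``absorbed'' it must itself be at most $\exp(-cr^2)$, where $r$ is the ratio in the statement, and nothing gives this. Lemma~\ref{lemma:rsoftmax1} and the Stage~III attention bounds concern query/key vectors of \emph{training} tokens. For a fresh sample, the logits $\langle\boldsymbol{q}_+^{(t)},\boldsymbol{\xi}_j^\top\mathbf{W}_K^{(t)}\rangle$ are new Gaussians whose tails are set by the initialization scale in Condition~\ref{ass:1}\ref{con:initialization}, not by $r$. In the target regime $r^2\asymp d\operatorname{SNR}^2$, which is far larger than the exponent such a tail provides. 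The paper does not close this either (it simply fixes $\alpha_1$), so you must either carry an extra additive term or adopt that convention openly.

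(ii) The final rescaling is not ``valid whenever the numerator is positive''. When $C'>c_1$ (the typical case, since $C'$ carries the factor $M$), $c_1(V_+^{(t)}-V_-^{(t)})-C'\|\mathbf{W}_V^{(t)}\boldsymbol{w}_O\|_2\tau$ can be negative while $(V_+^{(t)}-V_-^{(t)})-\|\mathbf{W}_V^{(t)}\boldsymbol{w}_O\|_2\tau>0$. Your bound is then vacuous while the target is exponentially small. You need $\|\mathbf{W}_V^{(t)}\boldsymbol{w}_O\|_2\tau\le\kappa(V_+^{(t)}-V_-^{(t)})$ for a small constant $\kappa$. This follows from $\tau\le\|\boldsymbol{\mu}\|_2/\log d_h$ together with the alignment $\|\mathbf{W}_V^{(t)}\boldsymbol{w}_O\|_2\asymp(V_+^{(t)}-V_-^{(t)})/\|\boldsymbol{\mu}\|_2$ that the paper invokes. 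The same regime makes the exponent large enough to swallow your prefactor $2(M-1)$.

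Smaller points:
\begin{itemize}
\item The inequality $\bar\varphi_1V_+^{(t)}\ge c_1(V_+^{(t)}-V_-^{(t)})$ rests on the two-sided comparability $V_+^{(t)}\asymp -V_-^{(t)}$ from $\mathcal{F}(t)$, not on $V_-^{(t)}$ being nonpositive; a very negative $V_-^{(t)}$ would hurt it.
\item The uniform-attention lemma is irrelevant here, because $\tau\le\|\boldsymbol{\mu}\|_2/\log d_h$ places you in Case~1.
\end{itemize}
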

Consequently, the robust test error differs from the clean test error only by an additive factor that scales with both the perturbation radius and the model complexity. More details are in Appendix.
\begin{figure*}[t]
    \centering
    \begin{subfigure}{0.23\linewidth}
        \centering
        \includegraphics[width=\linewidth]{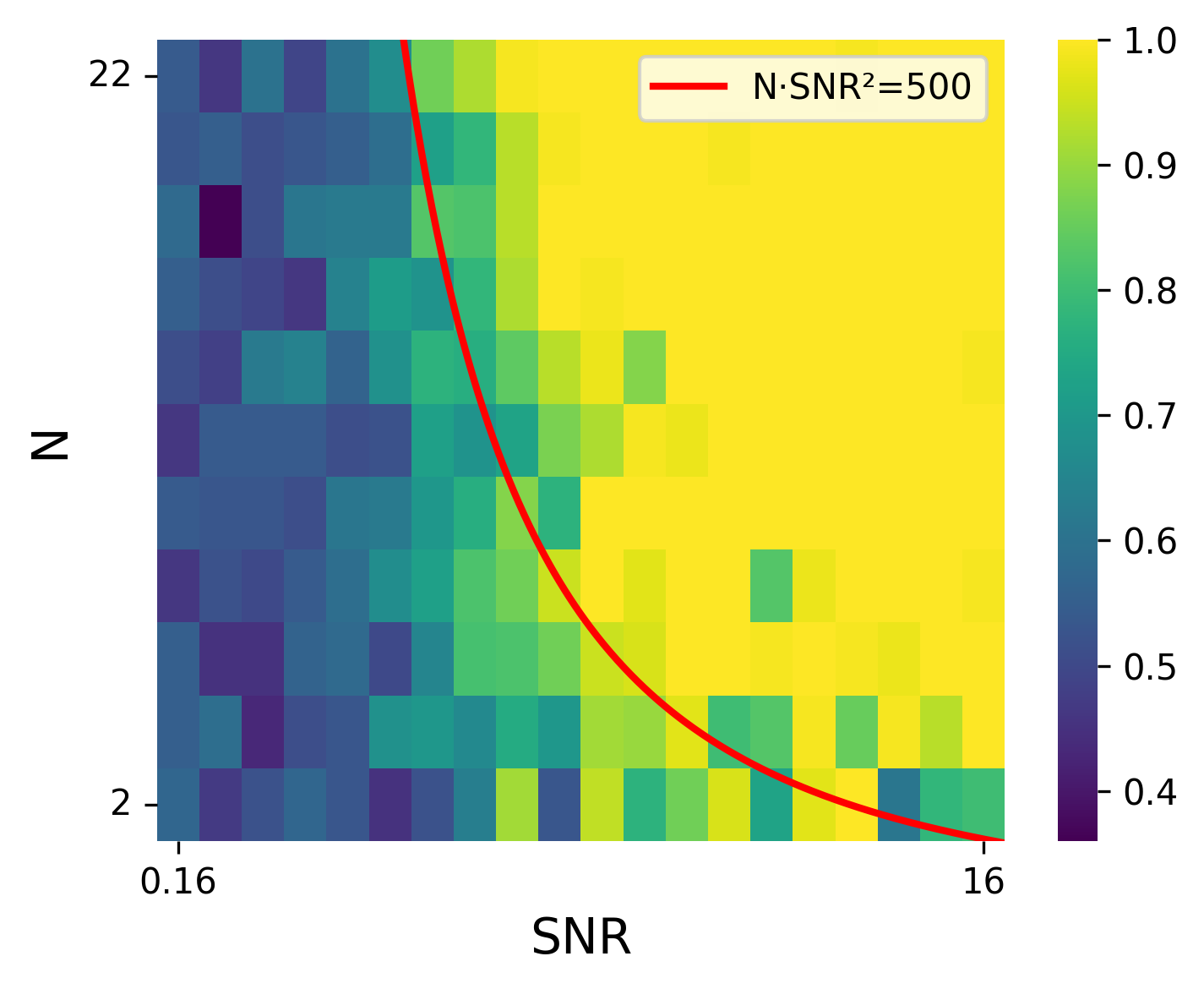}
        \caption{Clean Heatmap}
        \label{fig:clean_heatmap_hc}
    \end{subfigure}
    \hfill
    \begin{subfigure}{0.23\linewidth}
        \centering
        \includegraphics[width=\linewidth]{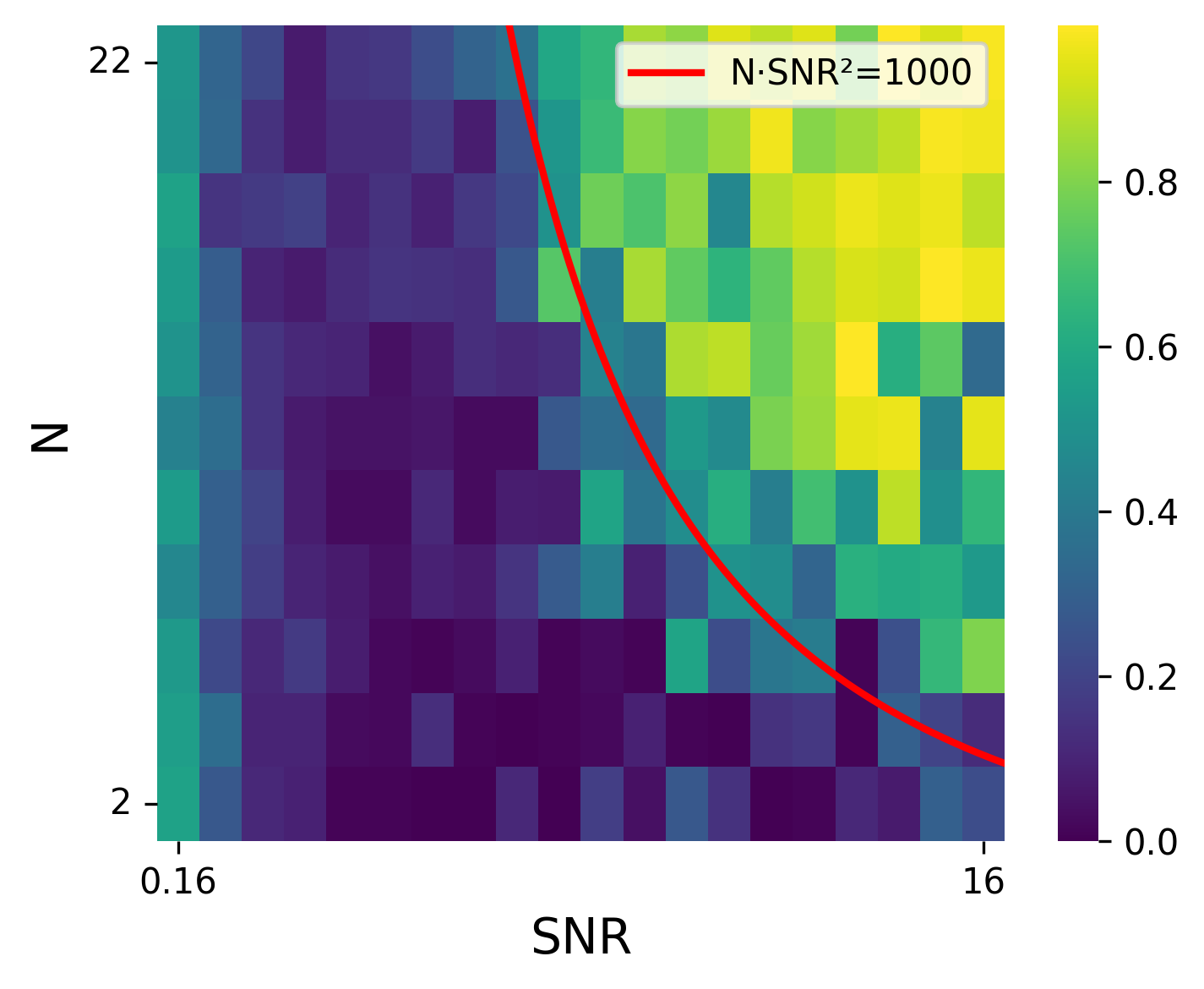}
        \caption{Robust Heatmap}
        \label{fig:robust_heatmap_hc}
    \end{subfigure}
    \hfill
    \begin{subfigure}{0.23\linewidth}
        \centering
        \includegraphics[width=\linewidth]{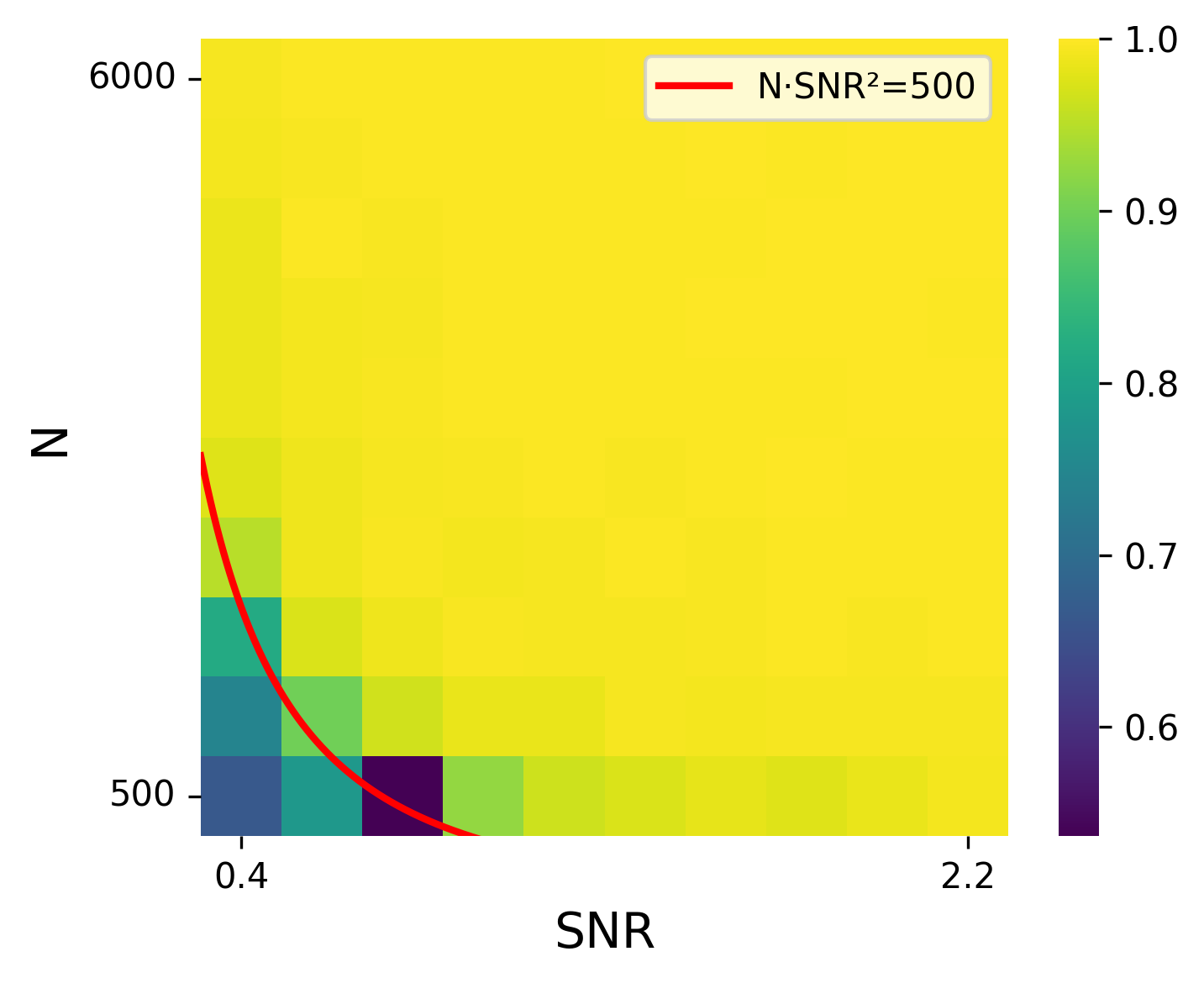}
        \caption{Clean Heatmap}
        \label{fig:clean_heatmap}
    \end{subfigure}
    \hfill
   \begin{subfigure}{0.23\linewidth}
        \centering
        \includegraphics[width=\linewidth]{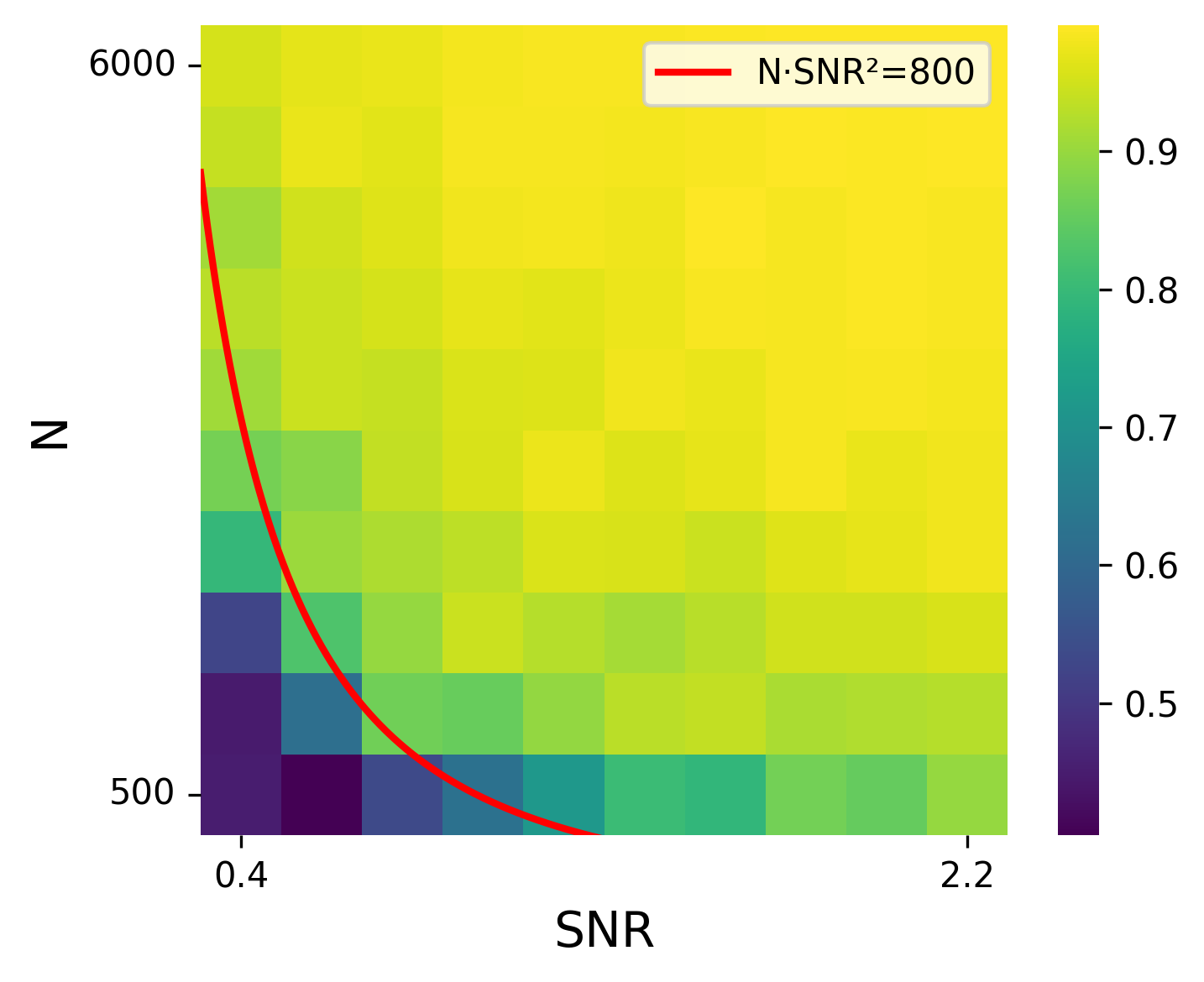}
        \caption{Robust Heatmap}
        \label{fig:robust_heatmap}
    \end{subfigure}
    \caption{
    Clean and robust test accuracy under adversarial training across various signal-to-noise ratios ($\operatorname{SNR}$) and sample sizes ($N$). (a)\&(b): results on synthetic data; (c)\&(d): results on real-world data.
    High test accuracy is colored in yellow, whereas low test accuracy is colored in purple.}
    \label{fig:all_heatmaps}
\end{figure*}
\begin{figure*}[t]
    \centering
    \begin{subfigure}{0.23\linewidth}
        \centering
        \includegraphics[width=\linewidth]{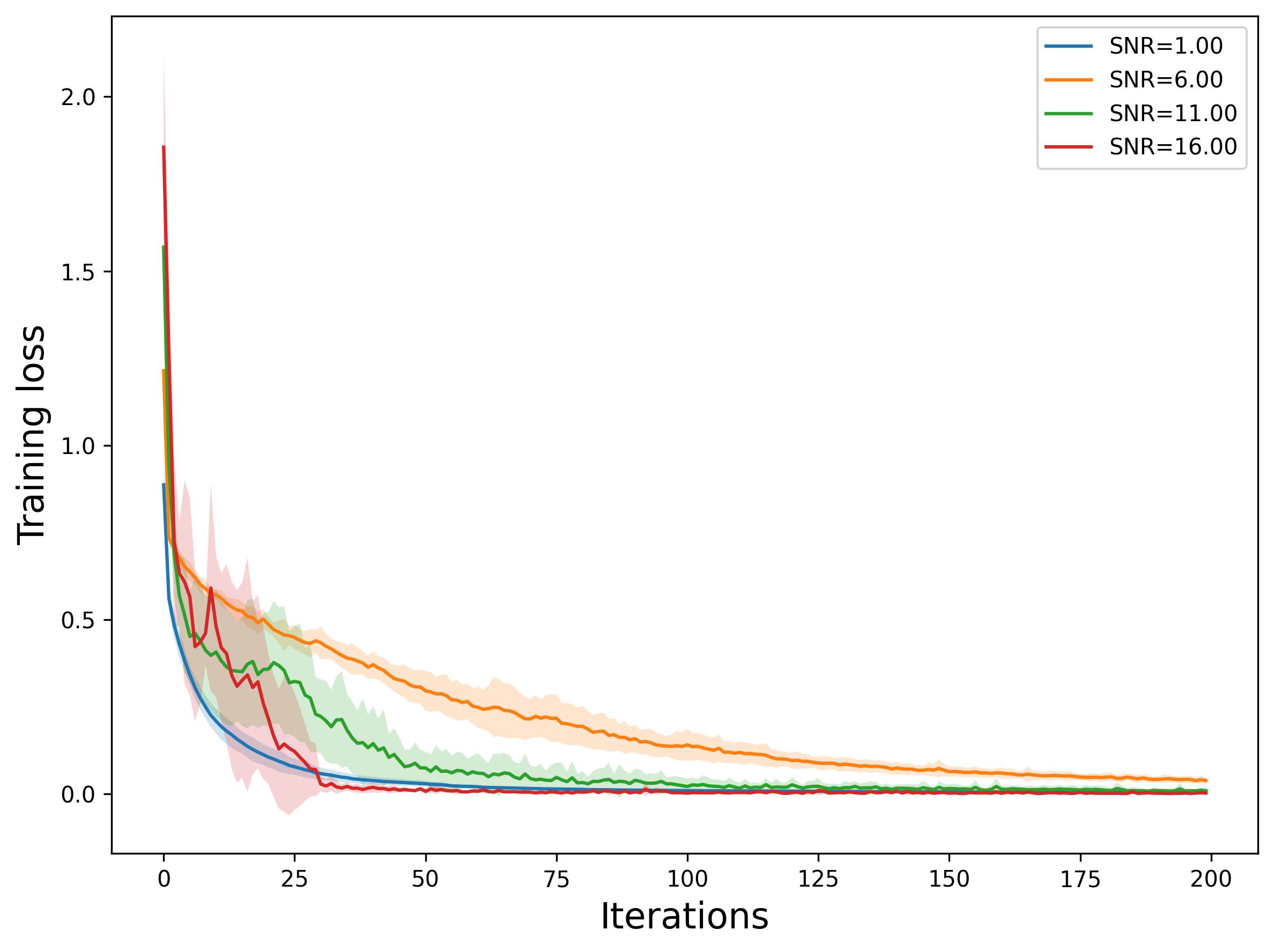}
        \caption{Training Loss}
        \label{fig:loss_fixn}
    \end{subfigure}
    \hfill
    \begin{subfigure}{0.23\linewidth}
        \centering
        \includegraphics[width=\linewidth]{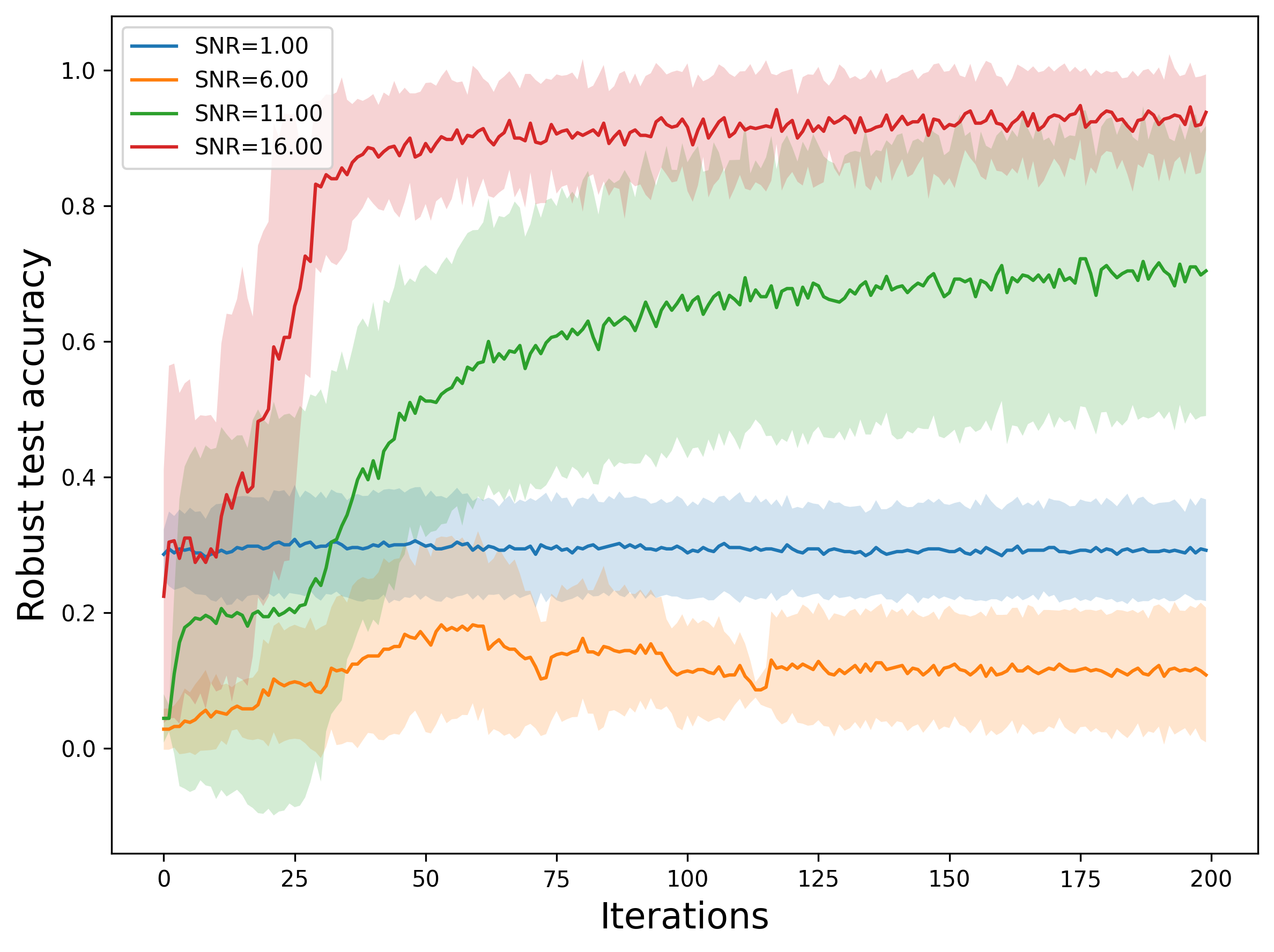}
        \caption{Robust Accuracy}
        \label{fig:rob_acc_fixn}
    \end{subfigure}
    \hfill
    \begin{subfigure}{0.23\linewidth}
        \centering
        \includegraphics[width=\linewidth]{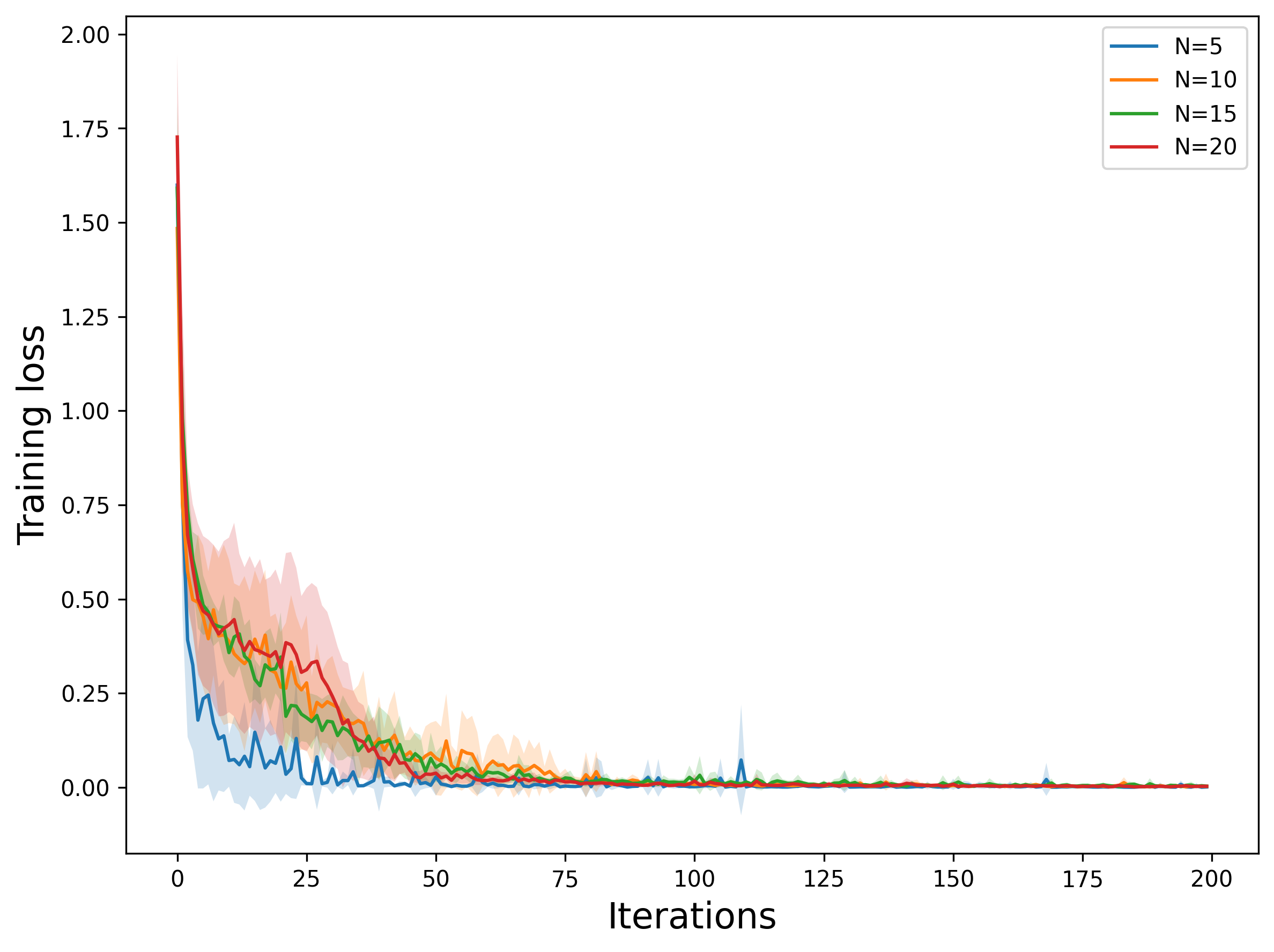}
        \caption{Training Loss}
        \label{fig:loss_fixsnr}
    \end{subfigure}
    \hfill
   \begin{subfigure}{0.23\linewidth}
        \centering
        \includegraphics[width=\linewidth]{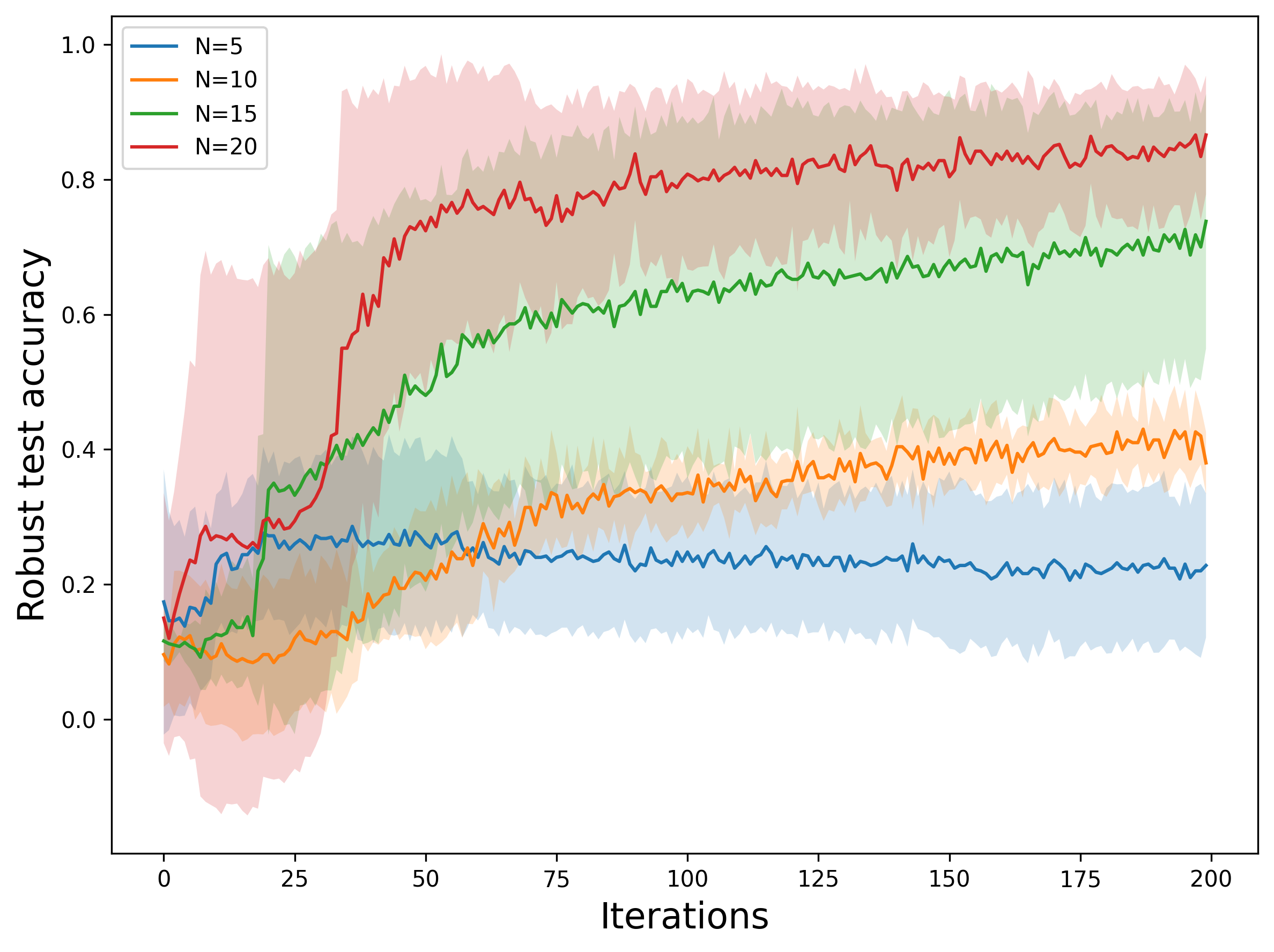}
        \caption{Robust Accuracy}
        \label{fig:rob_acc_fixsnr}
    \end{subfigure}
    \caption{
    (a)\&(b): Curves of robust training loss and robust test accuracy versus training iteration under fixed training data number $N = 22$.
    (c)\&(d): Curves of robust training loss and robust test accuracy versus training iteration under fixed data $\text{SNR} = 12$.
    }
    \label{fig:fixline}
    \vspace{-0.1in}
\end{figure*}

\section{Experiments}\label{sec:6}
\subsection{Experimental Setup}
\textbf{Datasets.}
For the experiments on synthetic data, we synthesize each sample following the distribution in Definition~\ref{def:data_gen}.
Specifically, we first formalize the two signal vectors in Definition~\ref{def:data_gen} as
$\boldsymbol{\mu}_{+} = \lVert \boldsymbol{\mu} \rVert_{2} \cdot [1, 0, \ldots, 0]^\top$
and
$\boldsymbol{\mu}_{-} = \lVert \boldsymbol{\mu} \rVert_{2} \cdot [0, 1, 0, \ldots, 0]^\top$,
where the signal dimension $d$ is set as $1024$. 
Then, for each generated sample, the number $M$ of tokens in this sample is set as $16$ and every noise vector in this sample is drawn independently from the Gaussian distribution $\boldsymbol{\xi}_i \sim \mathcal{N}(0, 0.4 \cdot \mathbb{I}_d)$. We generate up to $22$ samples for training and $100$ samples for evaluation.


For the experiments on real-world data, we employ MNIST, CIFAR-10, and Tiny-ImageNet.
To better simulate the feature learning setting in our theory, we transform each image to a vector with the following procedures:
(1)~Normalize the image $\ell_2$-norm to $5$ to keep the same signal strength.
(2)~Add independent Gaussian noise $\mathcal{N}(0,\sigma^2)$ to each pixel to create a noise map, with the variance $\sigma^2$ determining the SNR.
(3)~Superimpose this noise map onto the original clean image to produce the sample used for our experiments.

\textbf{Model architectures.} For synthetic data experiments, we adopt the two-layer Transformer defined in (\ref{eq:2trans}), where both of the hidden dimension $d_h$ and the value dimension $d_v$ are set as $128$.
For real-world data experiments, We implement realistic ViT architecture~\citep{dosovitskiy2020image} and a simplified ViT model consisting of two attention layers, each with four self-attention heads, followed by an MLP layer with ReLU activation.
The hidden dimension of this ViT is set as $128$.
Model parameters in all experiments are initialized by PyTorch’s default method, followed by an additional scaling factor of \(1/16\), matching the requirement of Condition~\ref{con:initialization}. 


\textbf{Model training \& evaluation.}
We use full-batch gradient descent to train all models in our experiments, with a learning rate of $0.1$. 
Each model will be trained until its training loss falls below a target threshold $0.01$.
Besides, in each adversarial training step, we leverage projected gradient descent (PGD; \citealt{madry2017towards}) and state-of-the-art APGD~\citep{croce2020reliable} to generate adversarial examples under both single $\ell_2$-norm and multi-norm threat models. For $\ell_2$-norm PGD attack, we set base attack strength \(\tau / \lVert \boldsymbol{\mu} \rVert_2 = 0.05\) for $20$ steps with a step size of $0.2\tau$, using per-token $L_2$-normalized gradient updates with projection onto the $L_2$ ball of radius $\tau$. Detailed parameters for APGD and multi-norm attacks are specified in Appendix~\ref{sec:b2},\ref{sec:b3},\ref{sec:b4}.
To assess the performance of trained models, we report both their clean and robust classification errors calculated on test datasets.




\subsection{Results Analysis}

\textbf{Phase Transition in benign overfitting.} 
We perform adversarial training 
with different number $N$ of training data ranging from 2 to 22 and different SNR ranging from 0.16 to 16 on synthetic data.
The clean and robust test accuracies of these models are collected and presented as heatmaps in Figures~\ref{fig:clean_heatmap_hc} and \ref{fig:robust_heatmap_hc}.
From the figure, we observe a clear decision boundary in the form $N \cdot \mathrm{SNR}^2 = \Omega(1)$ 
separates the high-accuracy and low-accuracy regions.
This observation aligns well with our Theorem~\ref{thm:main_thm} that $N \cdot \mathrm{SNR}^2 = \Omega(1)$ is necessary for models in adversarial training to produce benign overfitting.

To demonstrate the generality and robustness of our findings, we conduct an extensive suite of experiments spanning diverse datasets (MNIST, CIFAR-10, and Tiny-ImageNet), various threat models (PGD and state-of-the-art APGD attack~\citep{croce2020reliable}), and multiple norm constraints ($\ell_1$, $\ell_2$, and $\ell_\infty$). Notably, our conclusions remain remarkably consistent across all these settings. Specifically, results on MNIST are shown in Figures~\ref{fig:clean_heatmap} and~\ref{fig:robust_heatmap}, while detailed evaluations on other datasets, multi-norm attacks, and realistic ViT-Base models~\citep{dosovitskiy2020image} are provided in Appendices~\ref{sec:b2}, \ref{sec:b3}, and \ref{sec:b4}, respectively.

    
\textbf{Effects of signal-to-noise ratio and dataset size.}
We then fix the number $N$ of training data and plot curves of the training loss/robust test accuracy versus the training iteration under different SNR in Figures~\ref{fig:loss_fixn} and~\ref{fig:rob_acc_fixn}.
From them, we observe that as the training iteration increases, the model overfits to training data, but its robustness does not consistently increase unless the SNR is large.
We also fix the SNR and plot similar curves with different number of training data in Figures~\ref{fig:loss_fixsnr} and~\ref{fig:rob_acc_fixsnr}, where we find that even the model is overfitting, its robustness improves only when the training data number $N$ is large.
All these results indicate that benign overfitting can emerge in adversarial training only when the training data number $N$ and the data SNR are both not too small, which coincides with the requirements that $N \cdot \text{SNR}^2 = \Omega(1)$ or $N \cdot \text{SNR}^2 = \Omega(\frac{1}{\epsilon})$ in our Theorem~\ref{thm:main_thm}.


\textbf{Empirical validation under varying perturbation radii.} Our theoretical training dynamic of the different regimes is further supported by experiments in Appendix~\ref{sec:b1}, where we track the evolution of the attention mechanism and model weights across various perturbation radii $\tau$.

\section{Conclusion}
Our paper presents the first comprehensive theoretical analysis of the generalization behavior after adversarial training on a two-layer Vision Transformer. We demonstrate that, under appropriate relationships between signal-to-noise ratio and perturbation magnitude, adversarially trained ViTs can interpolate the training data with vanishing robust loss while still achieving small robust test error. Our analysis reveals three perturbation regimes that clarify how adversarial training shapes the learning dynamics of attention heads, from clean-like behavior to linear collapse and eventual failure beyond the benign regime. Empirical results on synthetic data and real world data corroborate the theory, aligning closely with the predicted conditions under which robust benign overfitting emerges.

\section*{Impact Statement}

This paper presents work whose goal is to advance the field of Machine
Learning. There are many potential societal consequences of our work, none
which we feel must be specifically highlighted here.

\nocite{langley00}

\bibliography{example_paper}
\bibliographystyle{icml2026}

\newpage
\appendix
\onecolumn
\section{Additional Related Work}
{
\textbf{Differences in Adversarial Robustness Between Transformer and CNN.} Several studies have compared Transformers and CNNs under adversarial attacks. ~\citet{bai2021transformers} observe that under unified training settings, Transformers are not inherently more robust, with their OOD generalization mainly attributed to self-attention. ~\citet{mo2022adversarial} show that under standard training, Transformers do not necessarily outperform CNNs under adversarial attack, and propose training strategies to improve ViT robustness. ~\citet{dingeto2024comparative} propose a regularization method that enables ViTs to exhibit stronger adversarial robustness than CNNs. These studies are largely empirical, focusing on performance differences, and do not analyze the learning dynamics or the role of attention in adversarial training.
}

{\section{Additional Experiments}
\subsection{Empirical validation of theoretical regimes}\label{sec:b1}
In this section, we follow a similar setting as the experiments on synthetic data in Section~\ref{sec:6}, and add more experiments about training dynamics with different perturbation $\tau$ radius.

\textbf{Experiments setting.}

We focus on tracking the dynamics of attention entropy, training loss, and \(W_V\) norm under benign  overfitting regime with different perturbation $\tau$ radius, and the key parameters are as follows:
\begin{itemize}
    \item $N = 25$
    \item $SNR = 16$
    \item $M = 2$
    \item $d = 1024$
    \item $d_h = 512$
    \item $d_v = 512$
    \item $\sigma_p = 0.05$
    \item $\frac{\tau}{\|\boldsymbol \mu\|_2} = 0.02,0.1,0.5$
\end{itemize}

\begin{figure}[h]
    \centering
   \begin{subfigure}{0.32\linewidth}
        \centering
        \includegraphics[width=\linewidth]{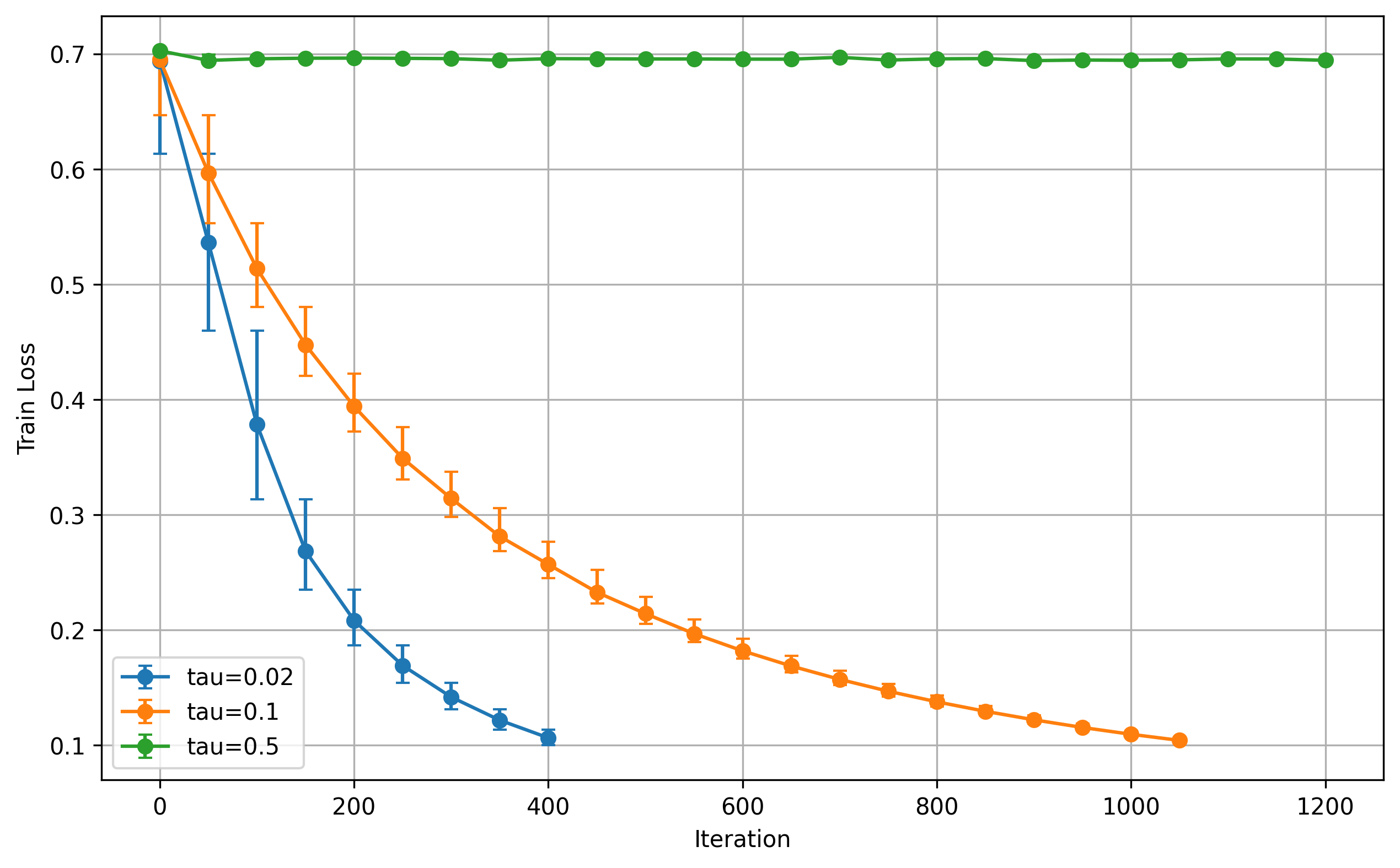}
        \caption{Train Loss}
        \label{fig:Train_Loss}
    \end{subfigure}
    \hfill
     \begin{subfigure}{0.32\linewidth}
        \centering
        \includegraphics[width=\linewidth]{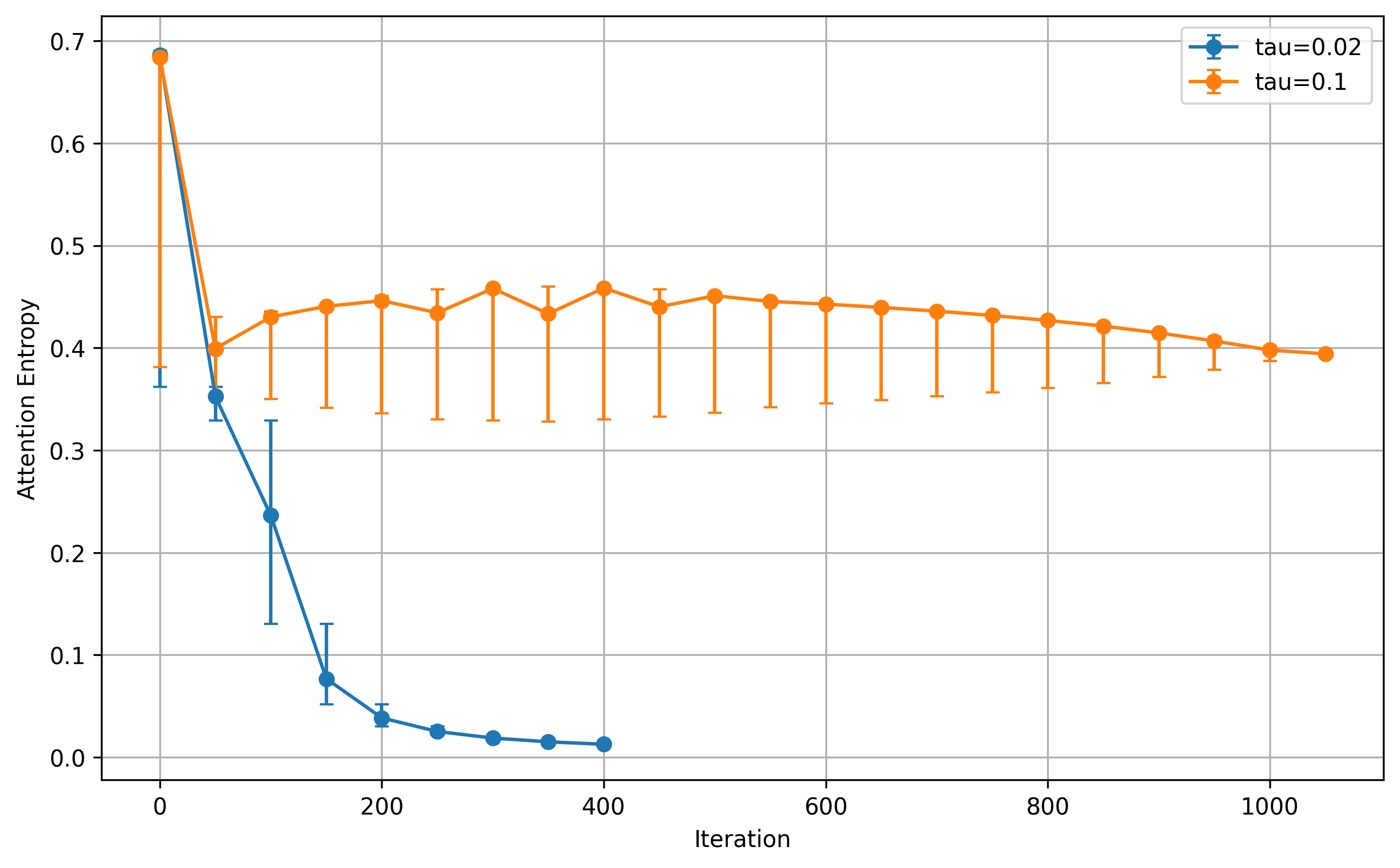}
        \caption{Attention Entropy}
        \label{fig:Attention_Entropy}
    \end{subfigure}
    \hfill
    \begin{subfigure}{0.32\linewidth}
        \centering
        \includegraphics[width=\linewidth]{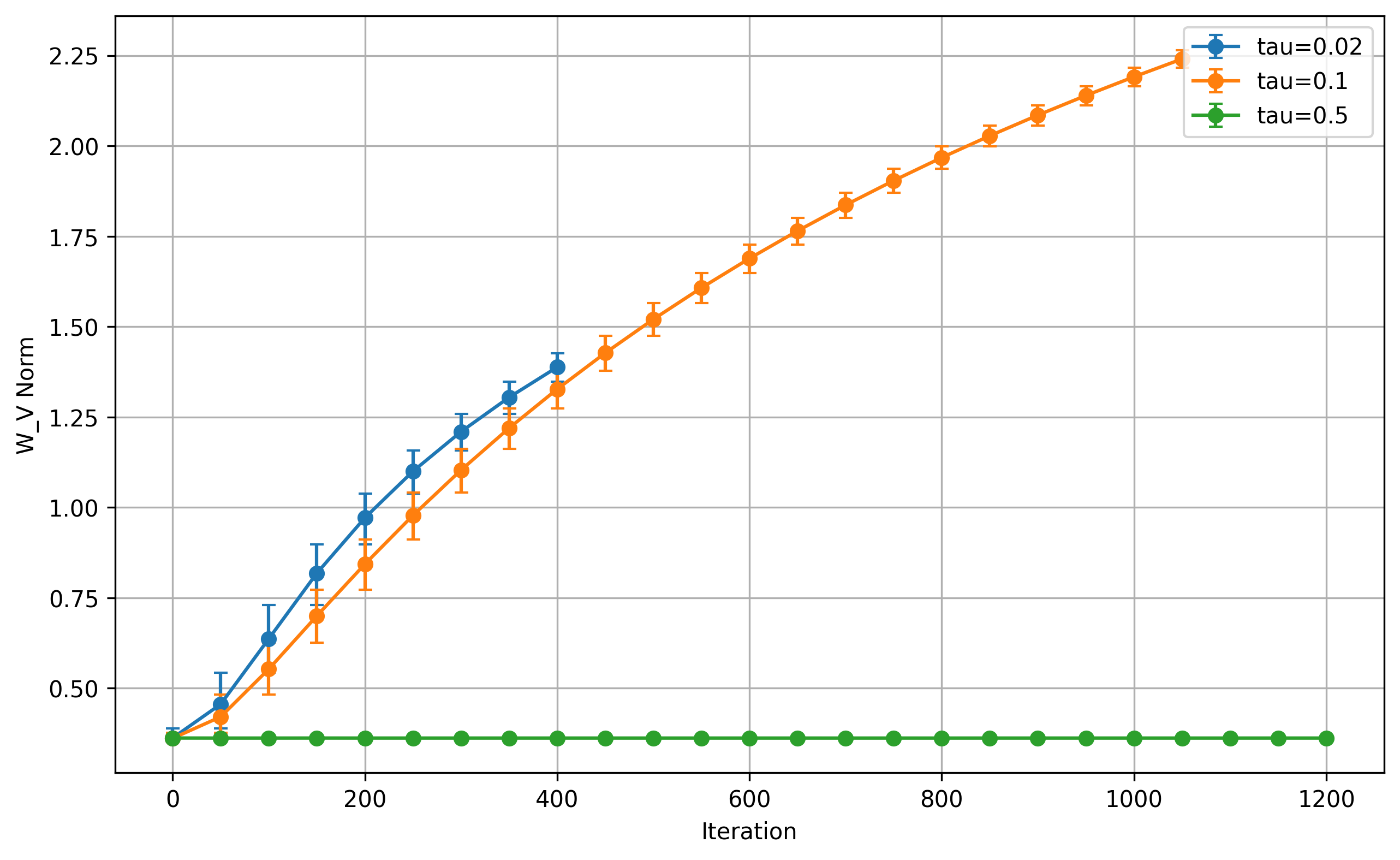}
        \caption{\(W_V\) Norm}
        \label{fig:W_V_Norm}
    \end{subfigure}
    \caption{Training dynamics with different perturbation $\tau$ radius: attention entropy, training loss, and \(W_V\) norm.}
    \label{fig:training_dynamics}
\end{figure}

\textbf{Experiments results.}

1. In Figure~\ref{fig:Train_Loss}, when \(\frac{\tau}{\|\boldsymbol{\mu}\|_2} = 0.02\) or \(0.1\), the adversarial training loss converges to zero, indicating that the model successfully interpolates all noise-corrupted training samples, consistent with the benign overfitting behavior in Theorem~\ref{thm:main_thm}. In contrast, when \(\frac{\tau}{\|\boldsymbol{\mu}\|_2} = 0.5\), the adversarial training loss fails to decrease, aligning with the non-convergence regime characterized in Theorem~\ref{coro:larger_pert}. Moreover, the case \(\frac{\tau}{\|\boldsymbol{\mu}\|_2} = 0.02\) exhibits a faster convergence rate than the \(\frac{\tau}{\|\boldsymbol{\mu}\|_2} = 0.1\) setting, highlighting the role of the attention mechanism in accelerating convergence, which is in agreement with our theoretical predictions in Theorem~\ref{thm:main_thm}.

2. In Figure~\ref{fig:Attention_Entropy}, when $\frac{\tau}{\|\boldsymbol{\mu}\|_2} = 0.02$, the attention entropy decreases to nearly zero, indicating that the attention mechanism correctly concentrates on the signal patch. This behavior is consistent with Case~1 of Theorem~\ref{thm:main_thm}, where the perturbation level is sufficiently small for the model to recover the underlying signal structure. 

In contrast, when $\frac{\tau}{\|\boldsymbol{\mu}\|_2} = 0.1$, the attention entropy fails to decrease and instead remains high, demonstrating that moderate perturbations hinder the learning of attention weights. As a result, the attention distribution remains nearly uniform rather than focusing on the signal patch. This phenomenon aligns with Case~2 of Theorem~\ref{thm:main_thm}, where the perturbation magnitude prevents the attention mechanism from identifying the true signal.

3. In Figure~\ref{fig:W_V_Norm}, when $\frac{\tau}{\|\boldsymbol{\mu}\|_2} = 0.1$, the $\|W_V\|_2$ norm exhibits the largest growth. This behavior is consistent with Case~2 of Theorem~\ref{thm:main_thm}, where the ViT effectively collapses into a linear model and the value projection $W_V$ becomes the dominant component driving the learning dynamics. 

In contrast, for $\frac{\tau}{\|\boldsymbol{\mu}\|_2} = 0.02$, the attention mechanism remains effective, so only mild updates to $W_V$ are required for the model to fit the noisy training data and achieve benign overfitting. When $\frac{\tau}{\|\boldsymbol{\mu}\|_2} = 0.5$, the perturbation is too large for the model to learn meaningful structure, resulting in $W_V$ failing to make progress during training.

\subsection{Additional Experiments on MNIST, CIFAR-10 and Tiny-ImageNet with APGD}\label{sec:b2}

In this section, we follow the same experimental setup as in Section~\ref{sec:6} for the MNIST dataset and further extend our evaluation by conducting additional experiments on both MNIST,CIFAR-10 and Tiny-ImageNet under APGD attacks. These results demonstrate that our theoretical insights continue to hold for larger and more complex datasets, as well as under stronger adversarial attacks.

\textbf{Experiments setting.} We conduct adversarial training using the state-of-the-art APGD attack model. 
For the MNIST dataset, we consider an attack strength of $\frac{\tau}{\|\boldsymbol{\mu}\|_2} = 0.05$, 
and vary the number of training samples $N$ from $1000$ to $6000$ and SNR from $0.4$ to $2$. 
For the CIFAR-10 dataset, we set a weaker attack strength of $\frac{\tau}{\|\boldsymbol{\mu}\|_2} = 0.01$, 
and vary the number of training samples $N$ from $1000$ to $10000$ and the SNR from $0.4$ to $10$.
For the Tiny-ImageNet dataset, we set a weaker attack strength of $\frac{\tau}{\|\boldsymbol{\mu}\|_2} = 0.10$, 
and vary the number of training samples $N$ from $100$ to $1000$ and the SNR from $0.4$ to $10$.
 
\textbf{Experiments results.} The clean and robust test accuracies on MNIST, CIFAR-10 and Tiny-ImageNet are collected and presented as heatmaps in Figures~\ref{fig:minst_heatmaps_agpd},~ \ref{fig:cifar10_heatmaps_agpd} and \ref{fig:imagenet_heatmaps_agpd}. In both figures, we observe a clear phase transition phenomenon. Moreover, as both the sample size $N$ and the SNR increase, the clean and robust test error consistently decreases. This behavior is fully aligned with our theoretical analysis as well as the empirical findings reported in previous experiments.

\begin{figure}[h]
    \centering
    \begin{subfigure}{0.24\linewidth}
        \centering
        \includegraphics[width=\linewidth]{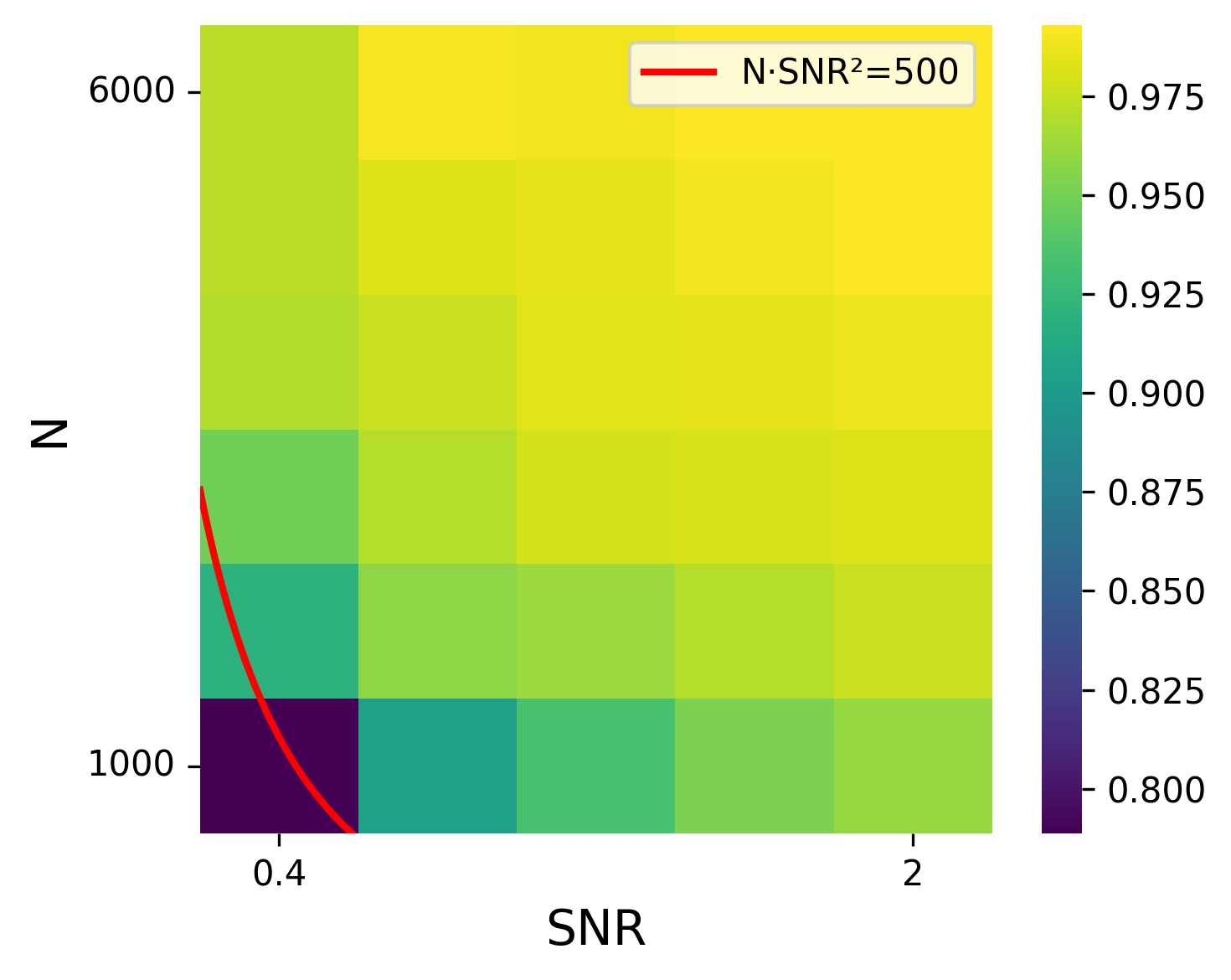}
        \caption{Clean Heatmap}
        \label{fig:minst_test_acc_apgd}
    \end{subfigure}
    \hfill
    \begin{subfigure}{0.24\linewidth}
        \centering
        \includegraphics[width=\linewidth]{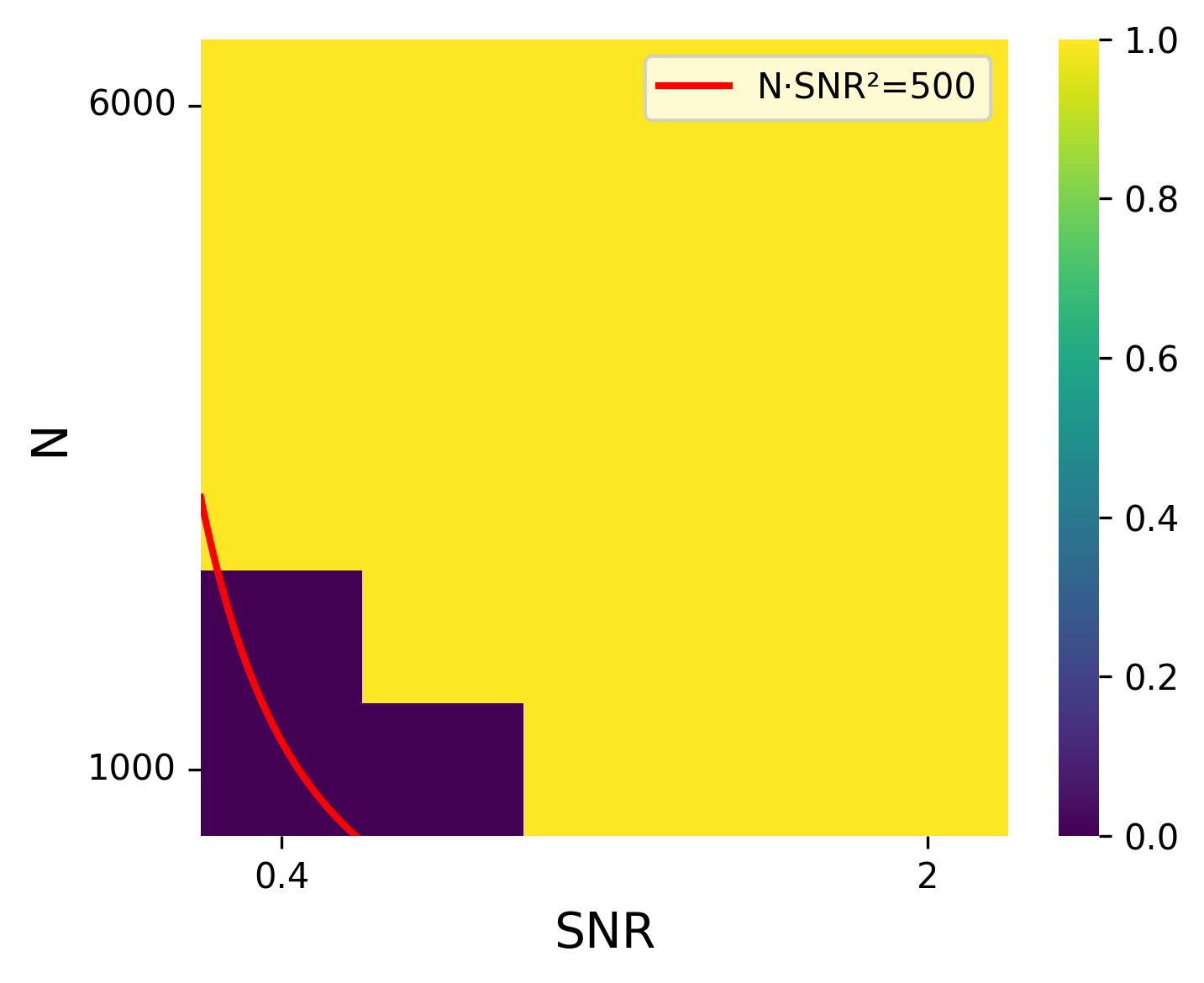}
        \caption{Clean Cutoff}
        \label{fig:minst_test_acc_apgd_cutoff}
    \end{subfigure}
    \hfill
    \begin{subfigure}{0.24\linewidth}
        \centering
        \includegraphics[width=\linewidth]{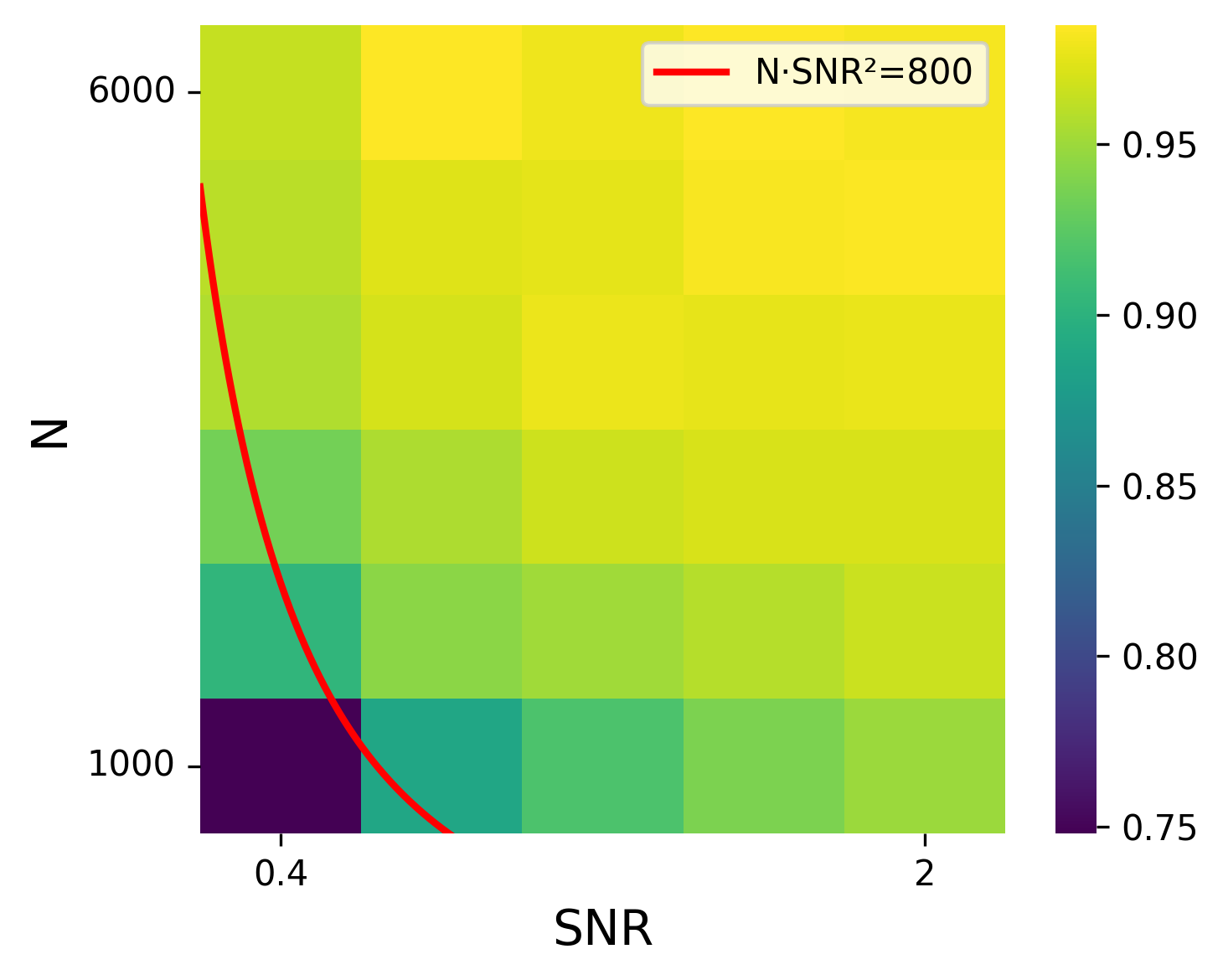}
        \caption{Robust Heatmap}
        \label{fig:minst_test_acc_ad_apgd}
    \end{subfigure}
    \hfill
    \begin{subfigure}{0.24\linewidth}
        \centering
        \includegraphics[width=\linewidth]{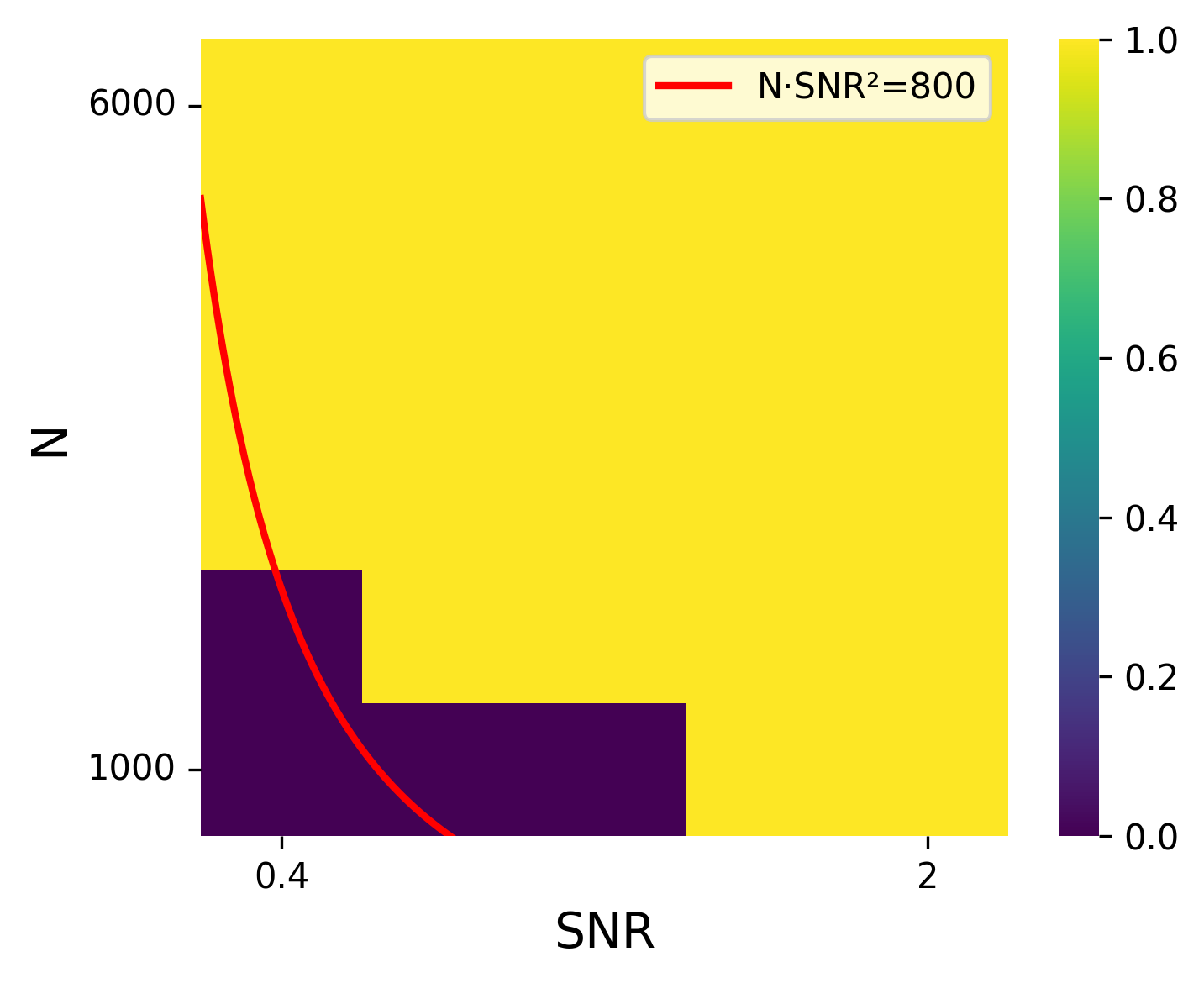}
        \caption{Robust Cutoff}
        \label{fig:minst_test_acc_ad_apgd_cutoff}
    \end{subfigure}
    \caption{Clean and robust test accuracy heatmaps on MNIST with APGD attack across various signal-to-noise ratios ($\operatorname{SNR}$) and sample sizes ($N$).  (b)\&(d) are a heatmap that applies a cutoff value 0.93.}
    \label{fig:minst_heatmaps_agpd}
\end{figure}

\begin{figure}[h]
    \centering
    \begin{subfigure}{0.24\linewidth}
        \centering
        \includegraphics[width=\linewidth]{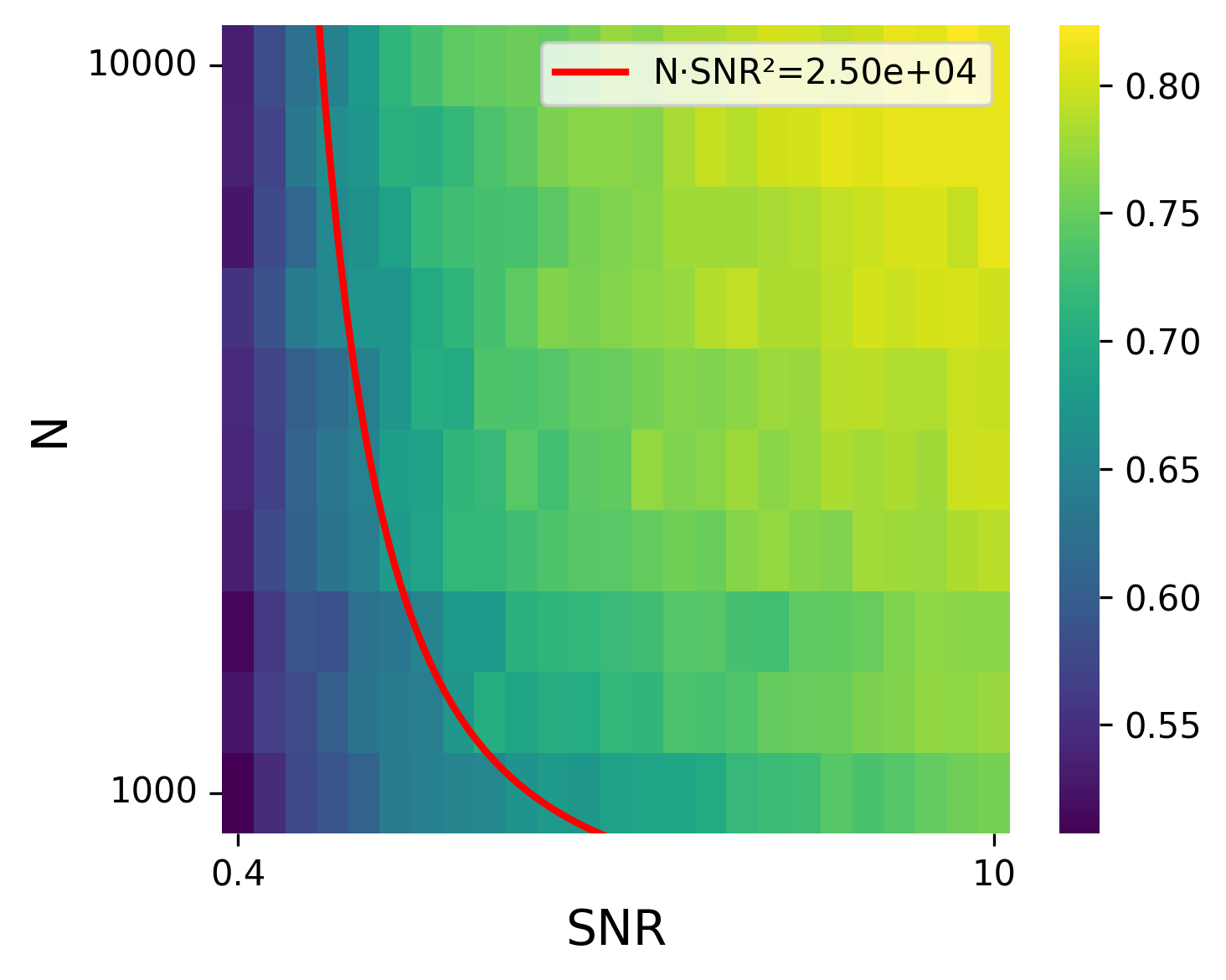}
        \caption{Clean Heatmap}
        \label{fig:test_acc_apgd}
    \end{subfigure}
    \hfill
    \begin{subfigure}{0.24\linewidth}
        \centering
        \includegraphics[width=\linewidth]{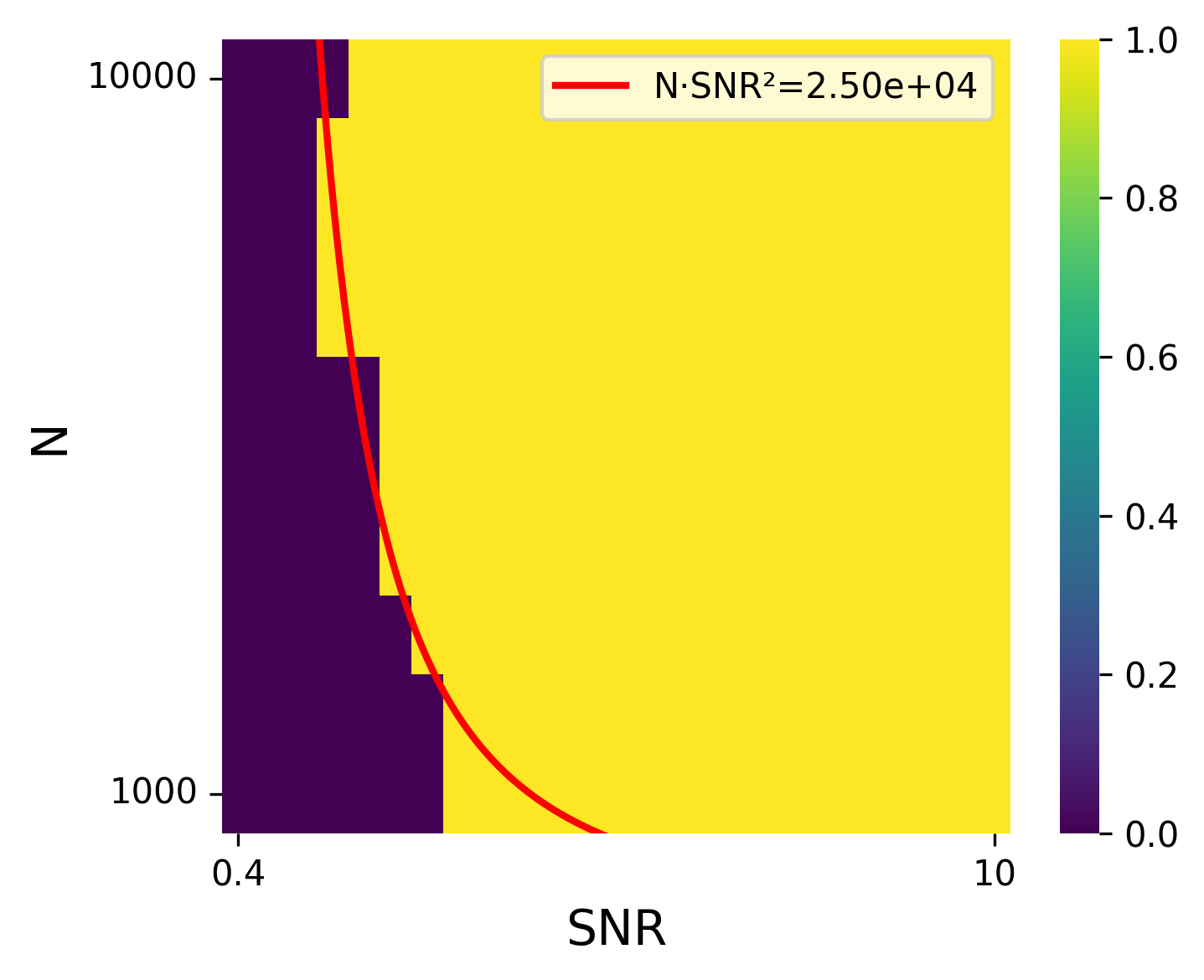}
        \caption{Clean Cutoff}
        \label{fig:test_acc_apgd_cutoff}
    \end{subfigure}
    \hfill
    \begin{subfigure}{0.24\linewidth}
        \centering
        \includegraphics[width=\linewidth]{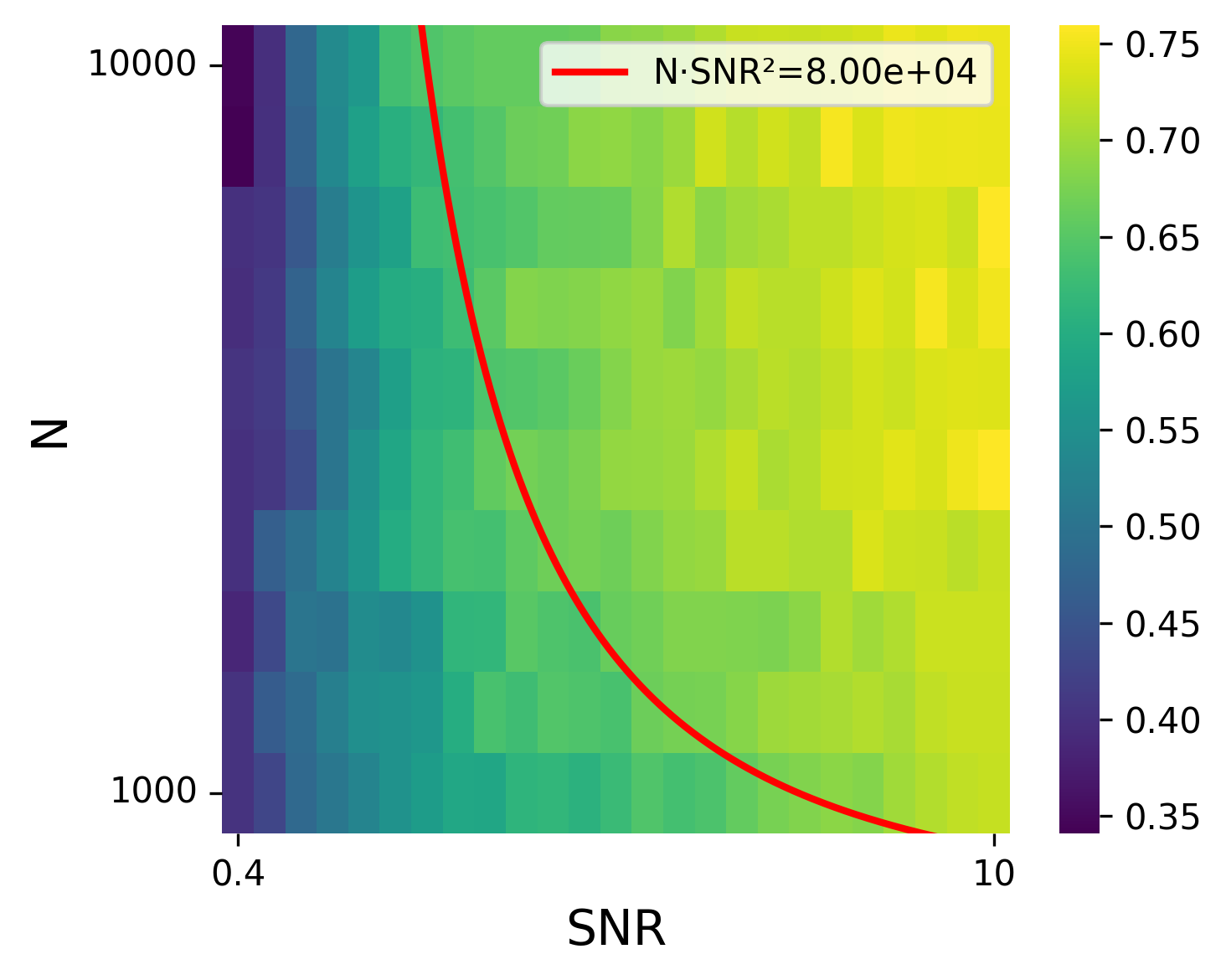}
        \caption{Robust Heatmap}
        \label{fig:test_acc_ad_apgd}
    \end{subfigure}
    \hfill
    \begin{subfigure}{0.24\linewidth}
        \centering
        \includegraphics[width=\linewidth]{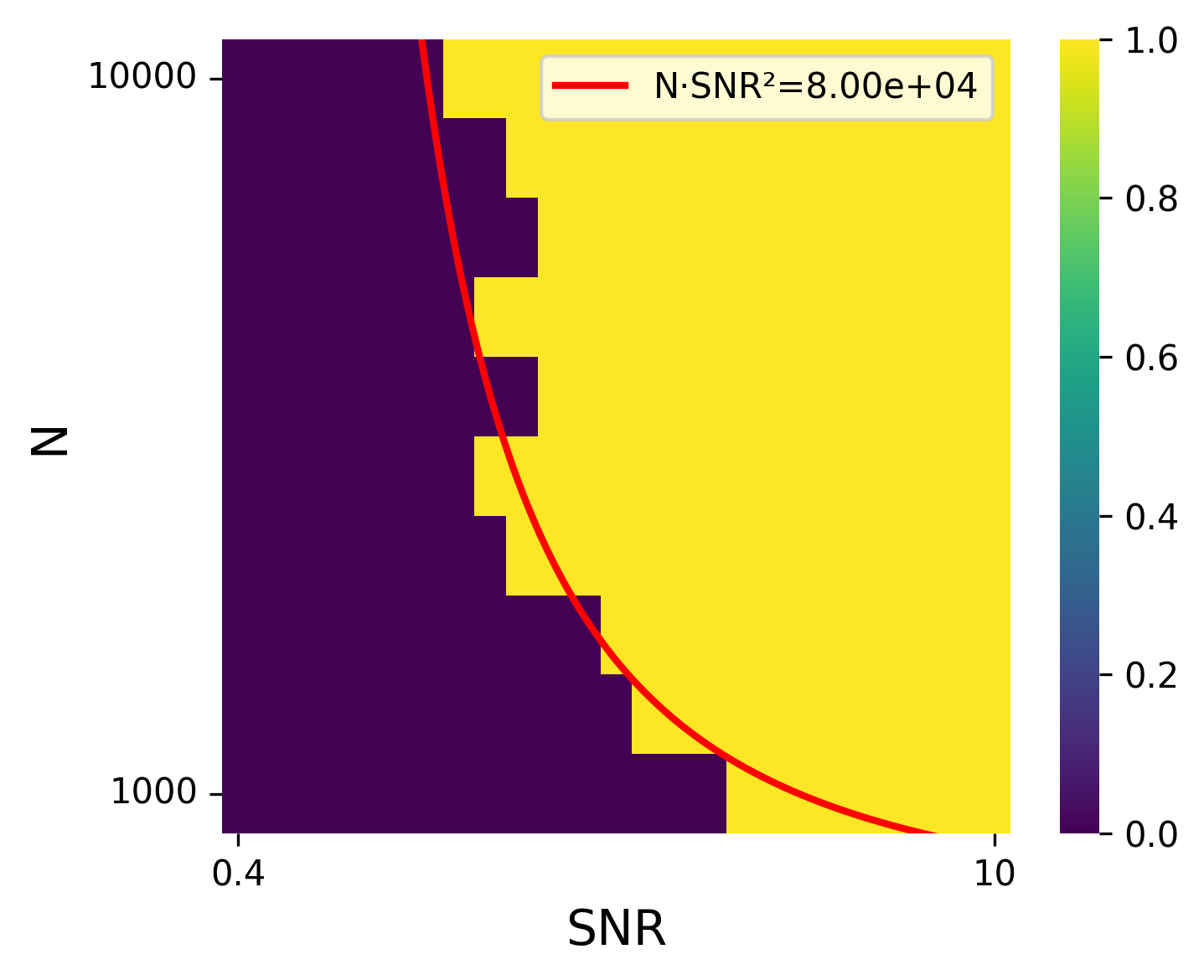}
        \caption{Robust Cutoff}
        \label{fig:test_acc_ad_apgd_cutoff}
    \end{subfigure}
    \caption{Clean and robust test accuracy heatmaps on CIFAR-10 with APGD attack across various signal-to-noise ratios ($\operatorname{SNR}$) and sample sizes ($N$). (b)\&(d) are a heatmap that applies a cutoff value 0.65.}
    \label{fig:cifar10_heatmaps_agpd}
\end{figure}

\begin{figure}[h]
    \centering
    \begin{subfigure}{0.24\linewidth}
        \centering
        \includegraphics[width=\linewidth]{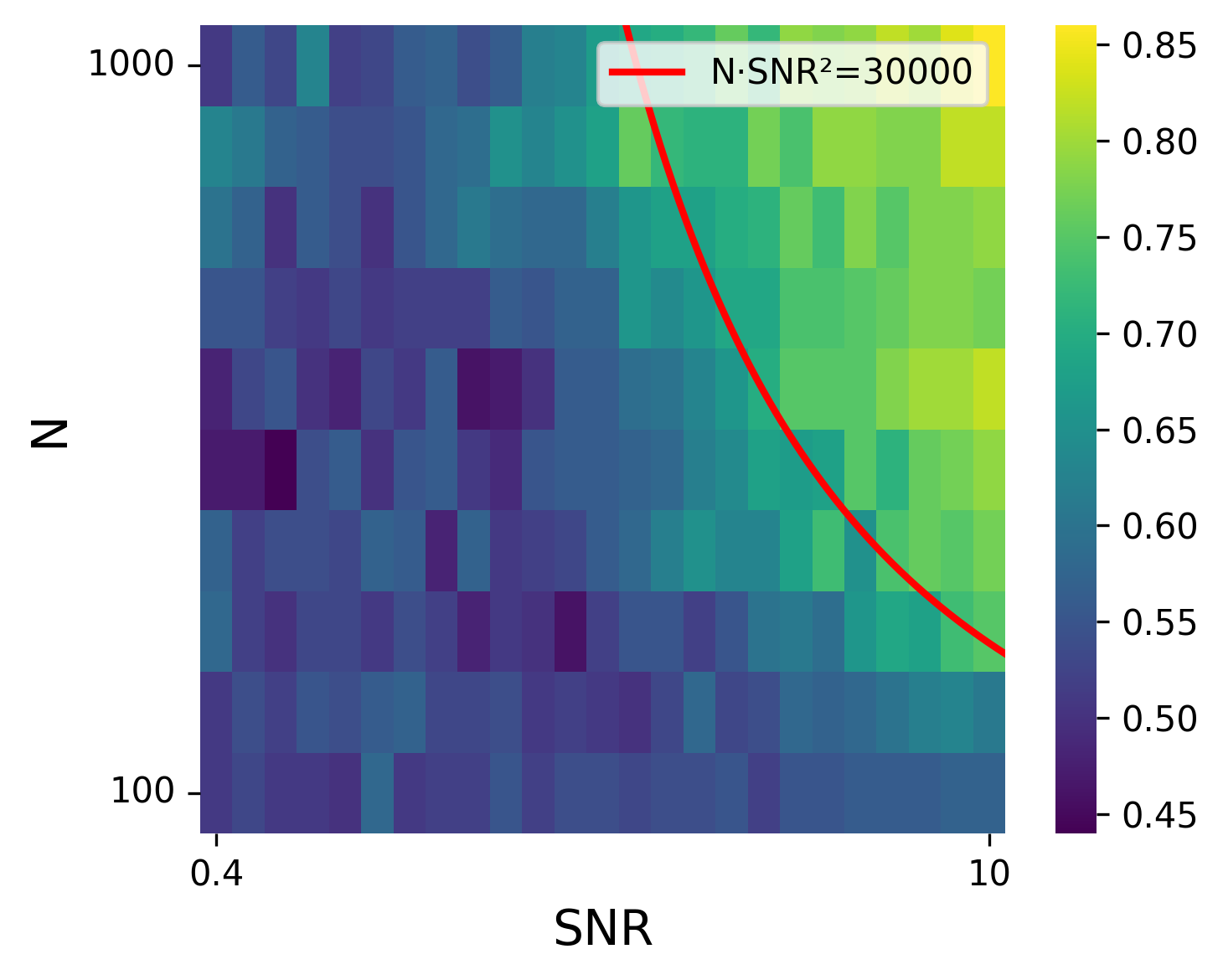}
        \caption{Clean Heatmap}
        \label{fig:test_acc_apgd}
    \end{subfigure}
    \hfill
    \begin{subfigure}{0.24\linewidth}
        \centering
        \includegraphics[width=\linewidth]{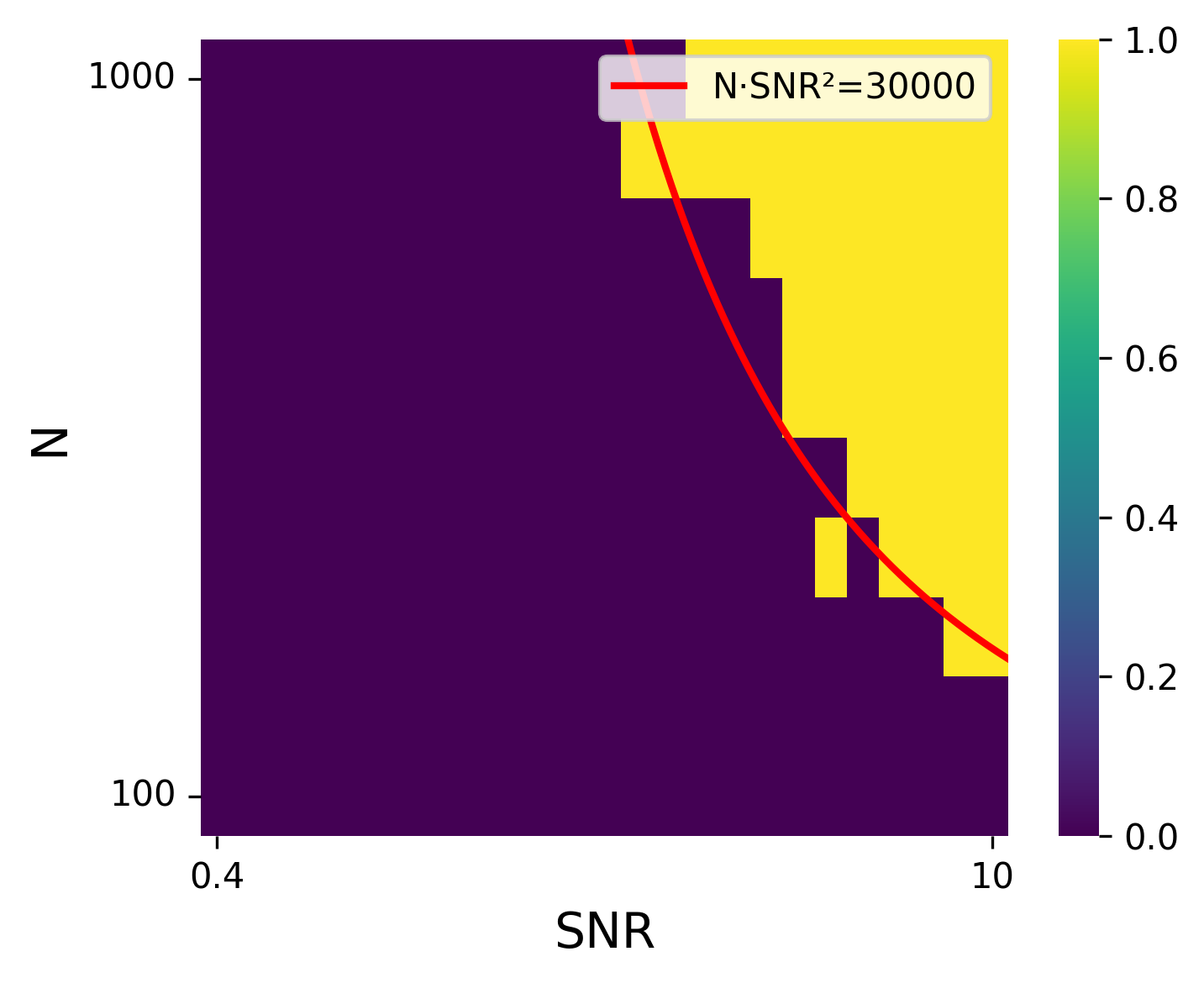}
        \caption{Clean Cutoff}
        \label{fig:test_acc_apgd_cutoff}
    \end{subfigure}
    \hfill
    \begin{subfigure}{0.24\linewidth}
        \centering
        \includegraphics[width=\linewidth]{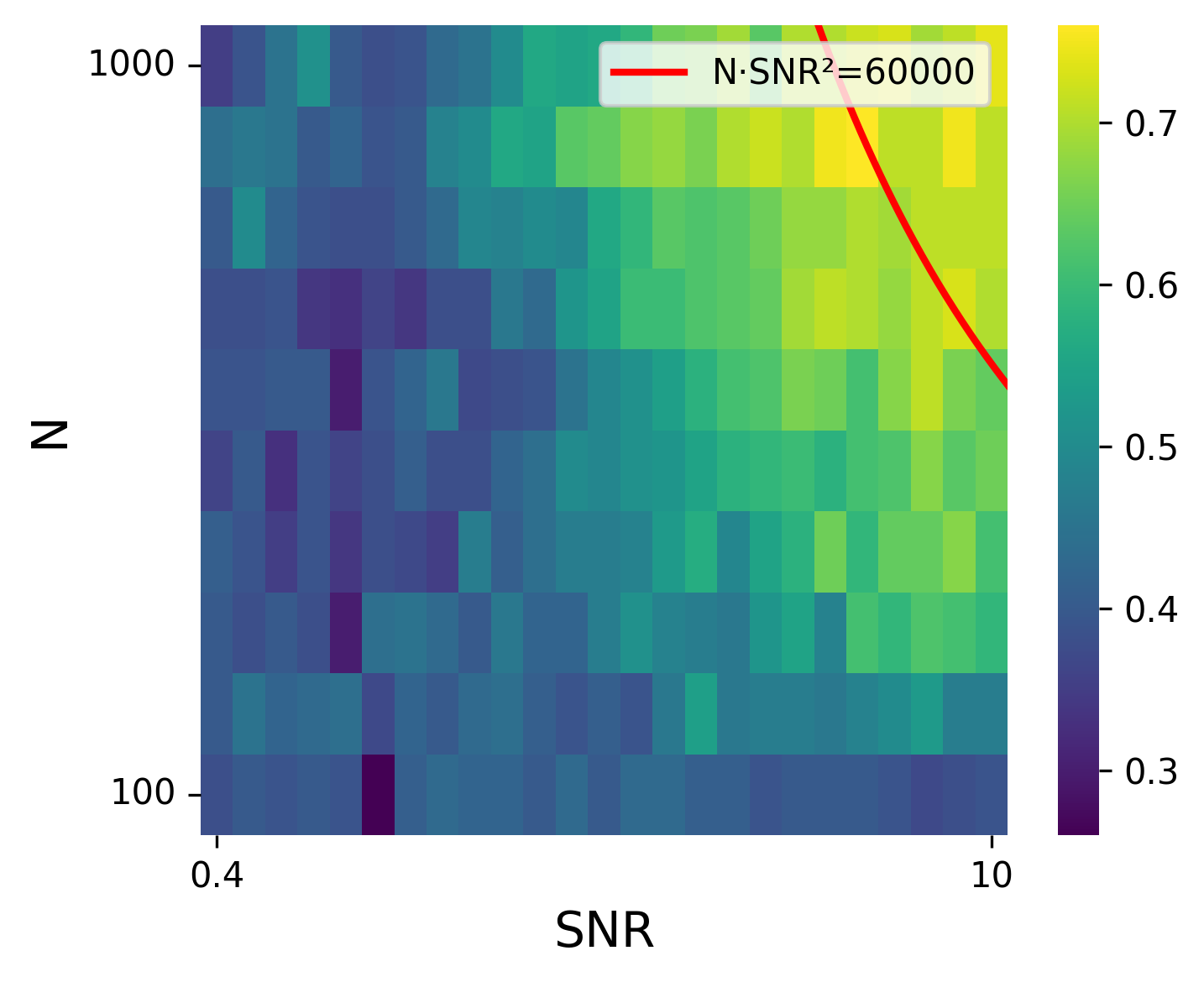}
        \caption{Robust Heatmap}
        \label{fig:test_acc_ad_apgd}
    \end{subfigure}
    \hfill
    \begin{subfigure}{0.24\linewidth}
        \centering
        \includegraphics[width=\linewidth]{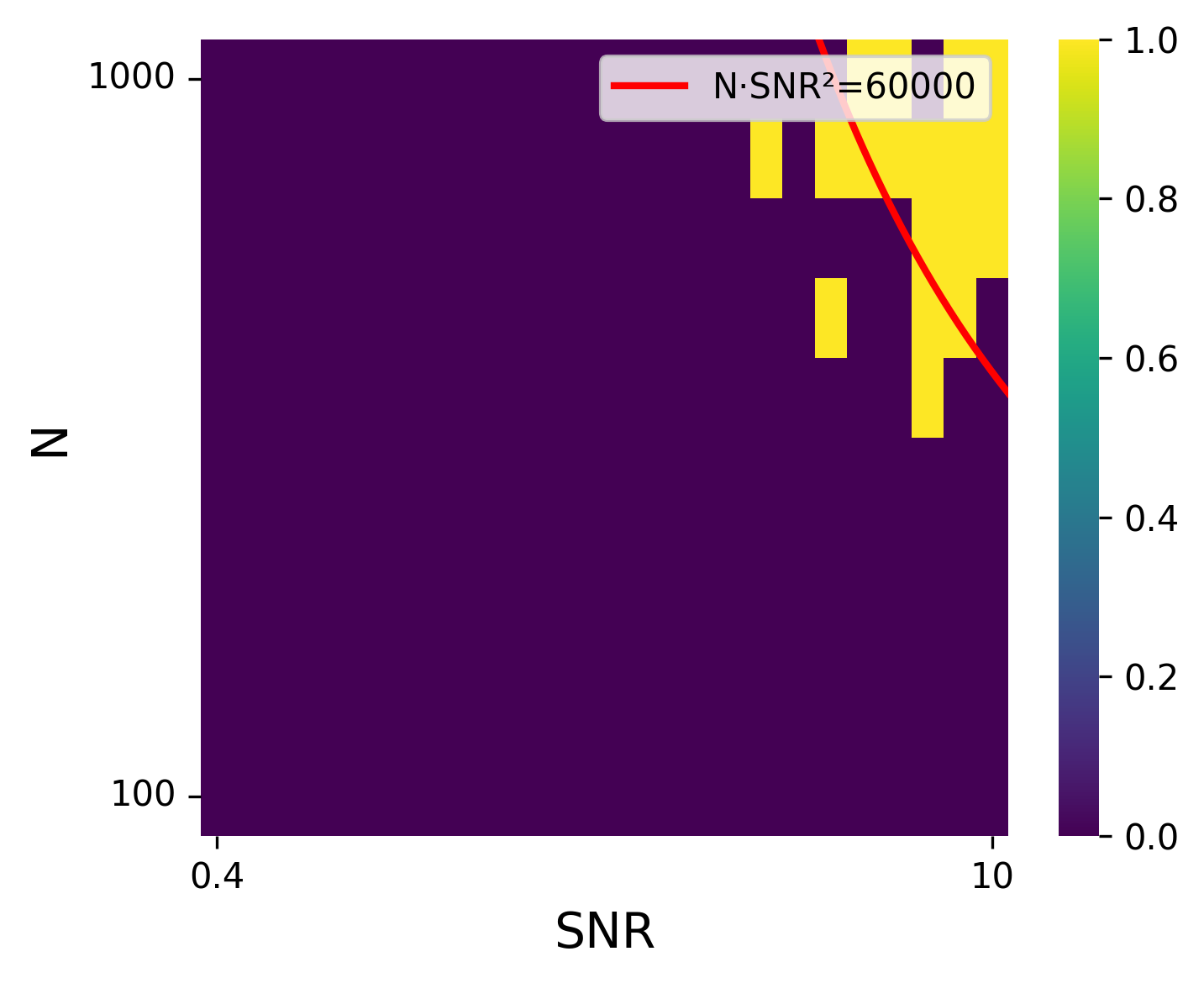}
        \caption{Robust Cutoff}
        \label{fig:test_acc_ad_apgd_cutoff}
    \end{subfigure}
    \caption{Clean and robust test accuracy heatmaps on Tiny Imagenet with APGD attack across various signal-to-noise ratios ($\operatorname{SNR}$) and sample sizes ($N$). (b)\&(d) are a heatmap that applies a cutoff value 0.70.}
    \label{fig:imagenet_heatmaps_agpd}
\end{figure}

\subsection{Additional Experiments on Multi-Norm Attacks}\label{sec:b3}

In this section, we follow the same experimental setup as in Section~\ref{sec:6} for the MNIST dataset and further extend our evaluation by conducting additional experiments on both MNIST and CIFAR-10 under multi-norm attacks. These results demonstrate that although the model’s robust test accuracy decreases under multi-norm attacks, our theoretical insights continue to hold.

\textbf{Experiments setting.} We perform adversarial training under a multi-norm PGD attack model spanning $l_1$,$l_2$ and $l_{\infty}$ perturbations.

For the \textbf{MNIST} dataset, we consider a base attack strength of \(\text{eps}=\frac{\tau_2}{\|\boldsymbol{\mu}\|_2}=0.05\), and set \((\frac{\tau_1}{\|\boldsymbol{\mu}\|_2},\frac{\tau_2}{\|\boldsymbol{\mu}\|_2},\frac{\tau_{\infty}}{\|\boldsymbol{\mu}\|_2})=(\text{eps}*20,\text{eps},\text{eps}/30)\). 
We vary vary the number of training samples $N$ from $1000$ to $6000$ and SNR from $0.4$ to $2$. 

For the \textbf{CIFAR-10} dataset, we set a weaker base attack strength of $\text{eps}=\frac{\tau_2}{\|\boldsymbol{\mu}\|_2} = 0.01$, and set\((\frac{\tau_1}{\|\boldsymbol{\mu}\|_2},\frac{\tau_2}{\|\boldsymbol{\mu}\|_2},\frac{\tau_{\infty}}{\|\boldsymbol{\mu}\|_2})=(\text{eps}*20,\text{eps},\text{eps}/30)\). 
We vary vary the number of training samples $N$ from $1000$ to $10000$ and the SNR from $0.4$ to $10$.

\textbf{Experiments results.}  The clean and robust test accuracies on MNIST and CIFAR-10 are collected and presented as heatmaps in Figures~\ref{fig:multi}. We observe that although the robust test accuracy decreases under multi-norm attacks, a phase transition phenomenon still persists, and increasing either the sample size $N$ or the SNR further reduces both clean and robust test error in a manner fully aligned with our theoretical results.

\begin{figure}[h]
    \centering
    \begin{subfigure}{0.24\linewidth}
        \centering
        \includegraphics[width=\linewidth]{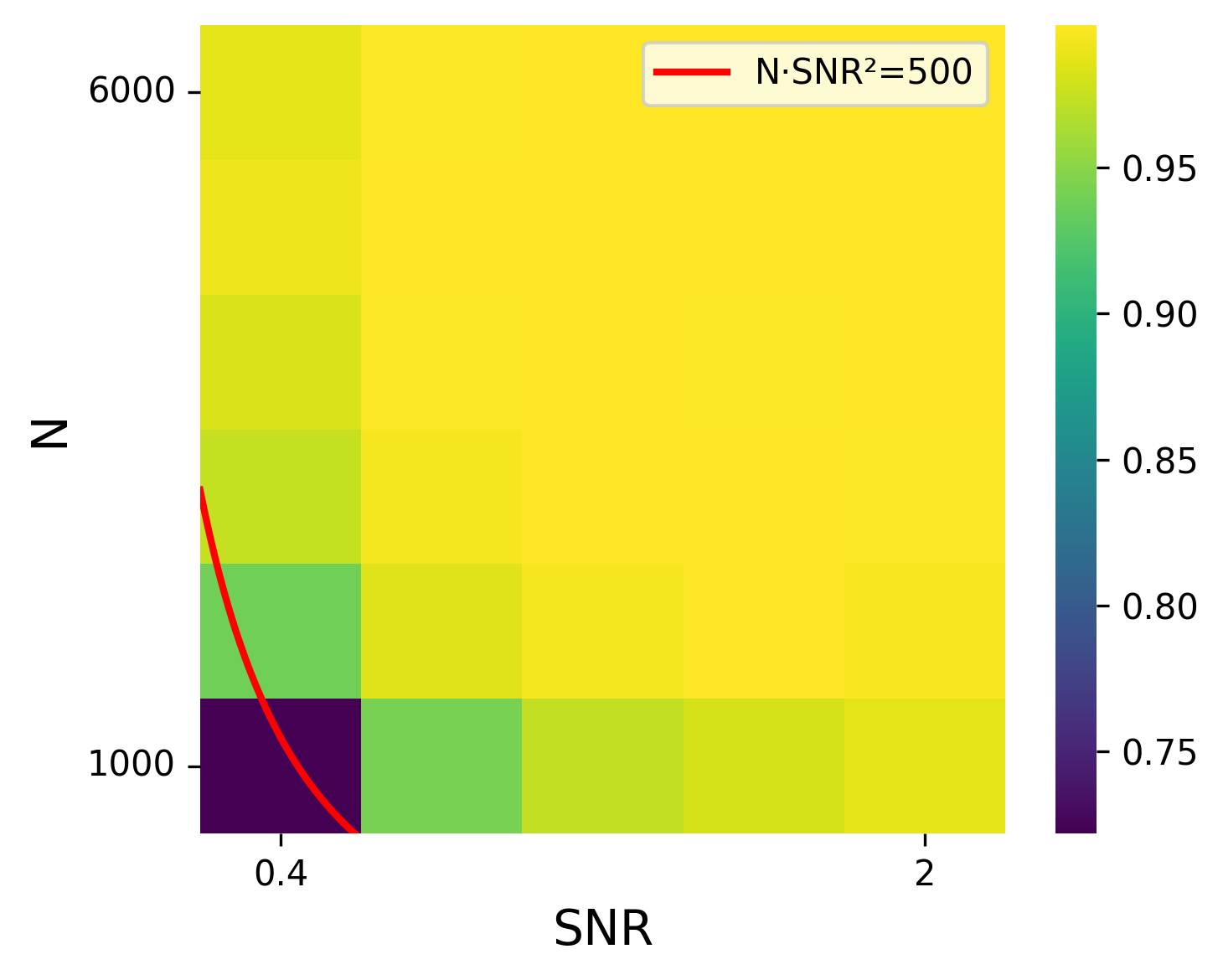}
        \caption{Clean Heatmap}
        \label{fig:test_acc_Mmulti}
    \end{subfigure}
    \hfill
    \begin{subfigure}{0.24\linewidth}
        \centering
        \includegraphics[width=\linewidth]{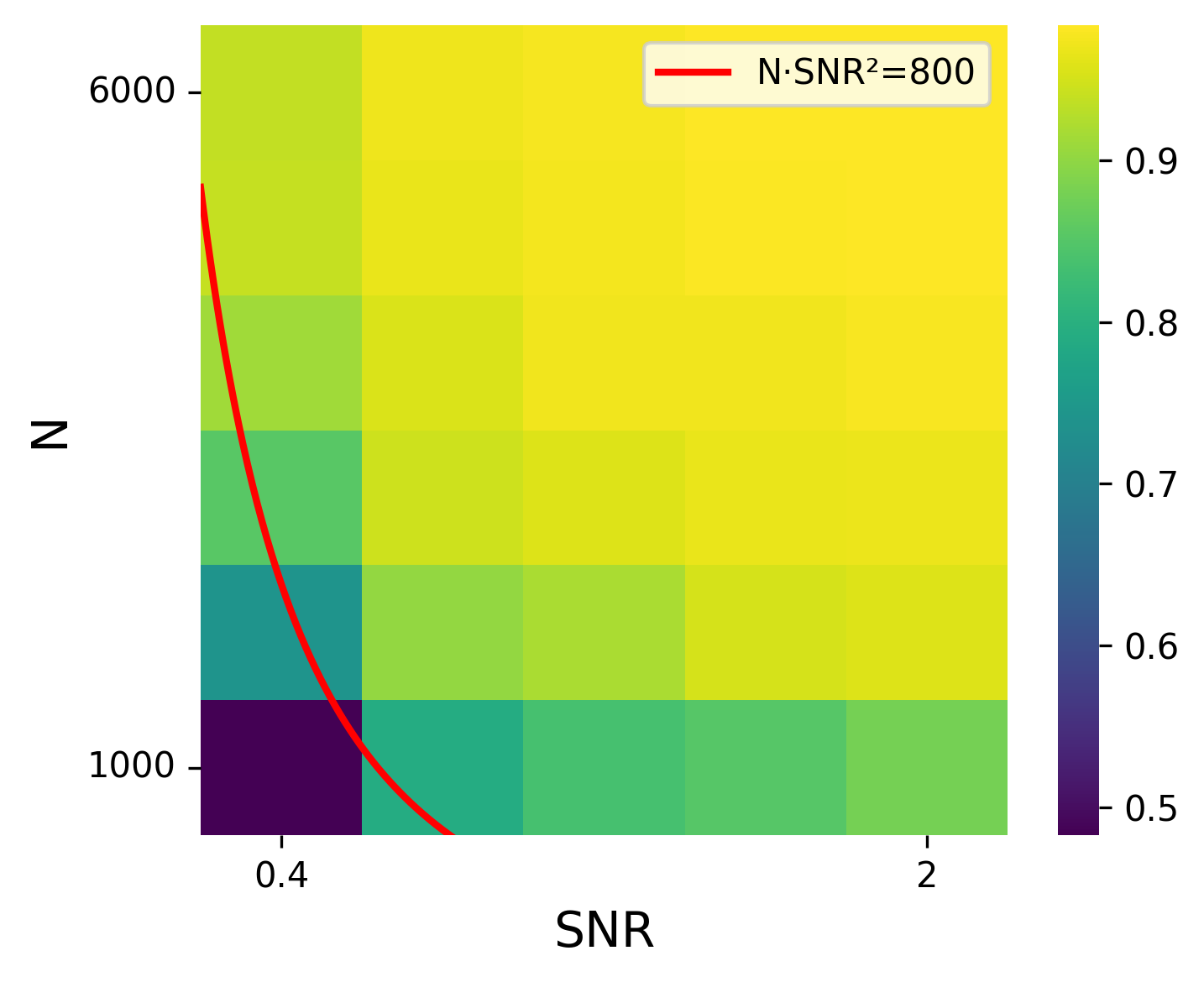}
        \caption{Robust Heatmap}
        \label{fig:test_acc_ad_Mmulti}
    \end{subfigure}
    \hfill
    \begin{subfigure}{0.24\linewidth}
        \centering
        \includegraphics[width=\linewidth]{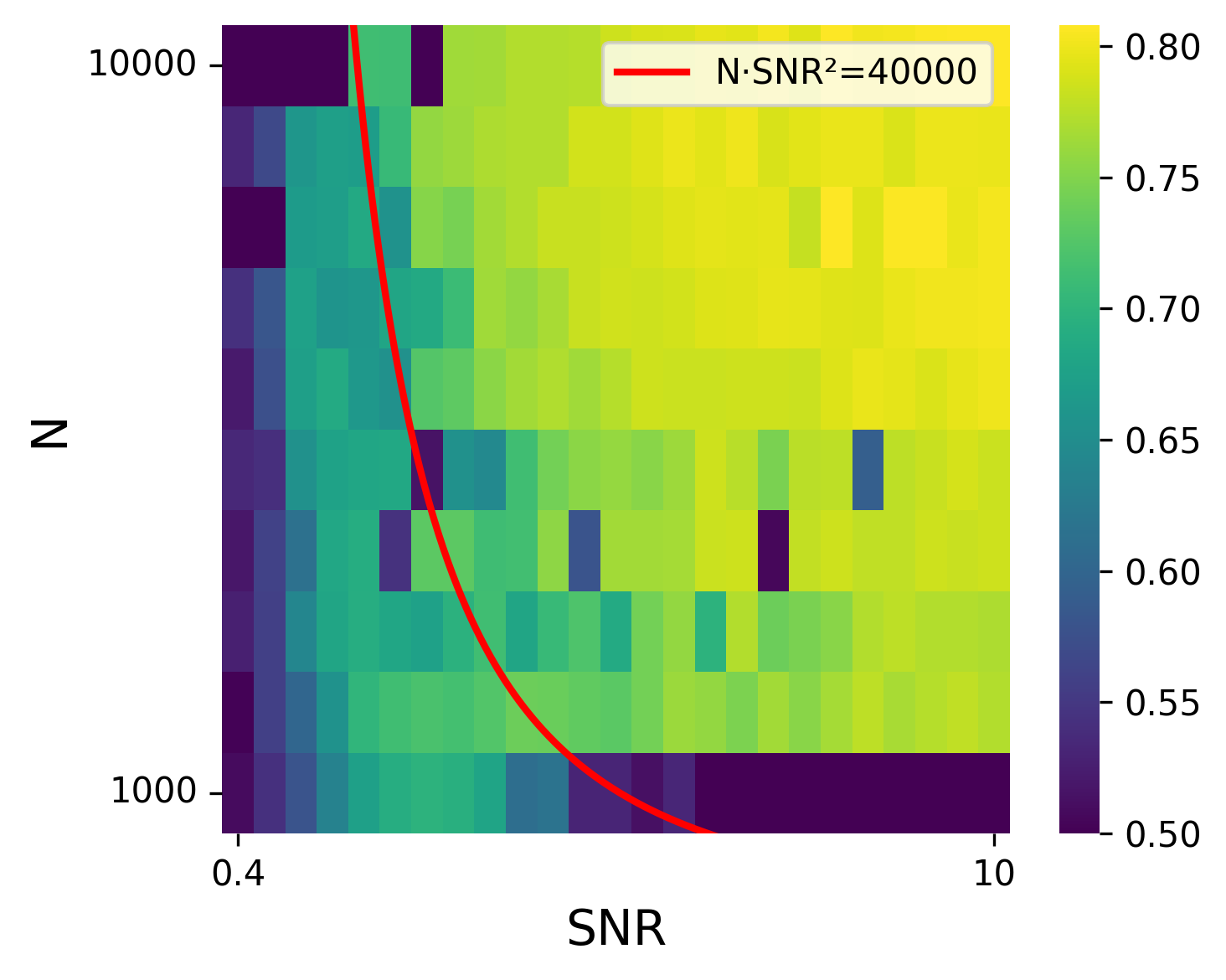}
        \caption{Clean Heatmap}
        \label{fig:test_acc_Cmulti}
    \end{subfigure}
    \hfill
    \begin{subfigure}{0.24\linewidth}
        \centering
        \includegraphics[width=\linewidth]{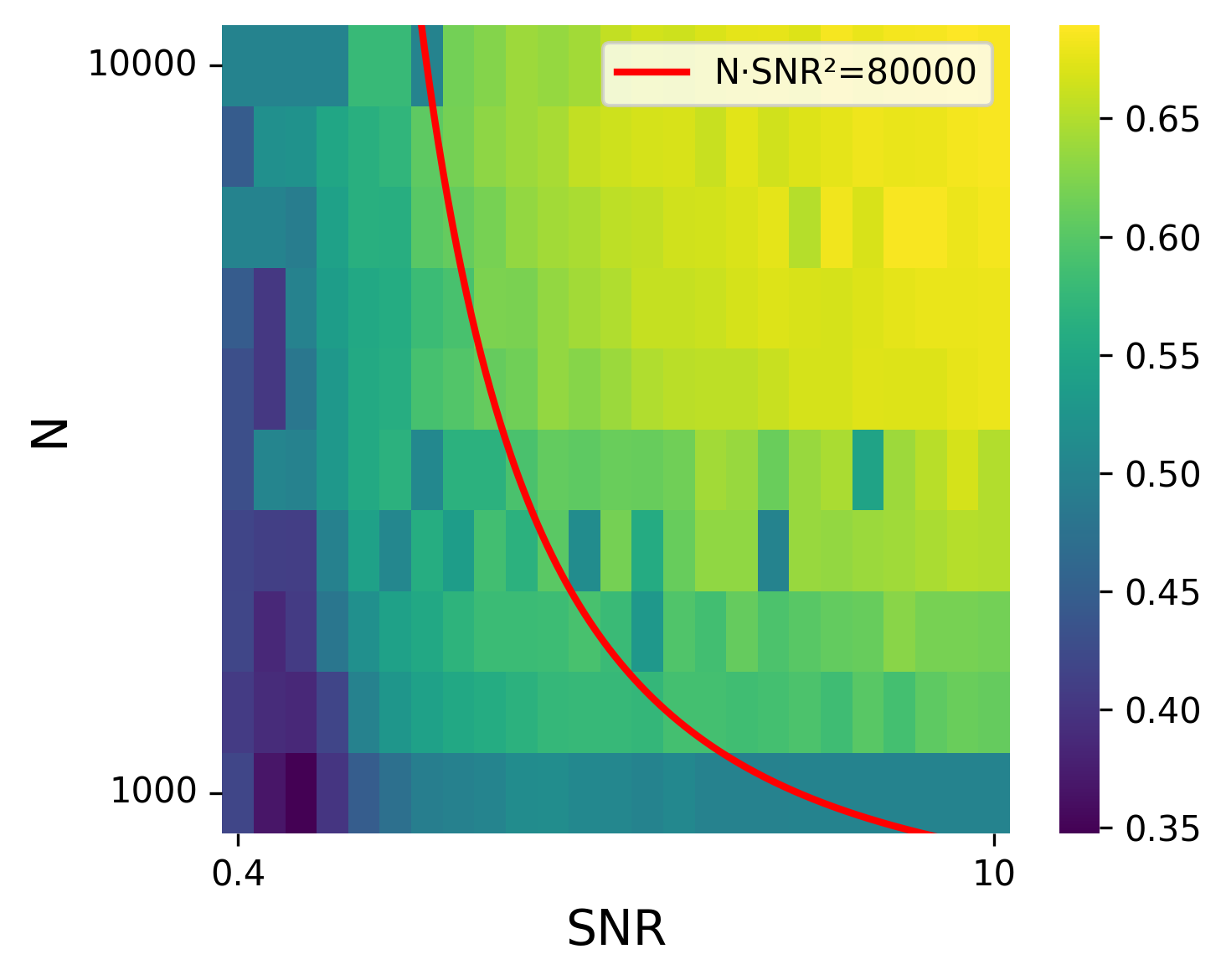}
        \caption{Robust Heatmap}
        \label{fig:test_acc_ad_Cmulti}
    \end{subfigure}
    \caption{Clean and robust test accuracy under multi-norm attack adversarial training across various signal-to-noise ratios ($\operatorname{SNR}$) and sample sizes ($N$). (a)\&(b): results on MNIST data; (c)\&(d): results on CIFAR-10 data.}
    \label{fig:multi}
\end{figure}

\subsection{Additional Experiments on realistic ViT}\label{sec:b4}

In this section, we conduct real-world experiments on image classification benchmarks, including MNIST, CIFAR-10, and Tiny-ImageNet, using a realistic ViT architecture~\citep{dosovitskiy2020image}. The results show that when our model is scaled from the simplified two-layer ViT to a full-fledged ViT model, our theoretical insights continue to hold.

\textbf{Experiments setting.}
We adopt \texttt{google/vit-base-patch16-224-in21k}~\citep{dosovitskiy2020image} as the backbone model to extend our analysis to real-world scenarios. To align with our theoretical setting, we freeze all parameters except for the QKV matrices in the final attention layer. This setup effectively treats all preceding layers as a fixed feature-extraction encoder, whose output serves as the input to the last Transformer layer.

We conduct adversarial training on MNIST, CIFAR-10, and Tiny-ImageNet, using PGD as the threat model with a perturbation radius of $\frac{\|\boldsymbol \mu\|_2}{20}$ and 5 attack steps.

\textbf{Experiments results.} The clean and robust test accuracies of ViT-base on MNIST, CIFAR-10 and Tiny-ImageNet are collected and presented as heatmaps in Figures~\ref{fig:vitpgd}. We observe that when the model is scaled from the simplified two-layer ViT to a realistic ViT architecture, a phase transition phenomenon still persists, and increasing either the sample size $N$ or the SNR further reduces both clean and robust test error in a manner fully aligned with our theoretical results.

\begin{figure*}[h]
    \centering
    \begin{subfigure}{0.32\textwidth}
        \includegraphics[width=\linewidth]{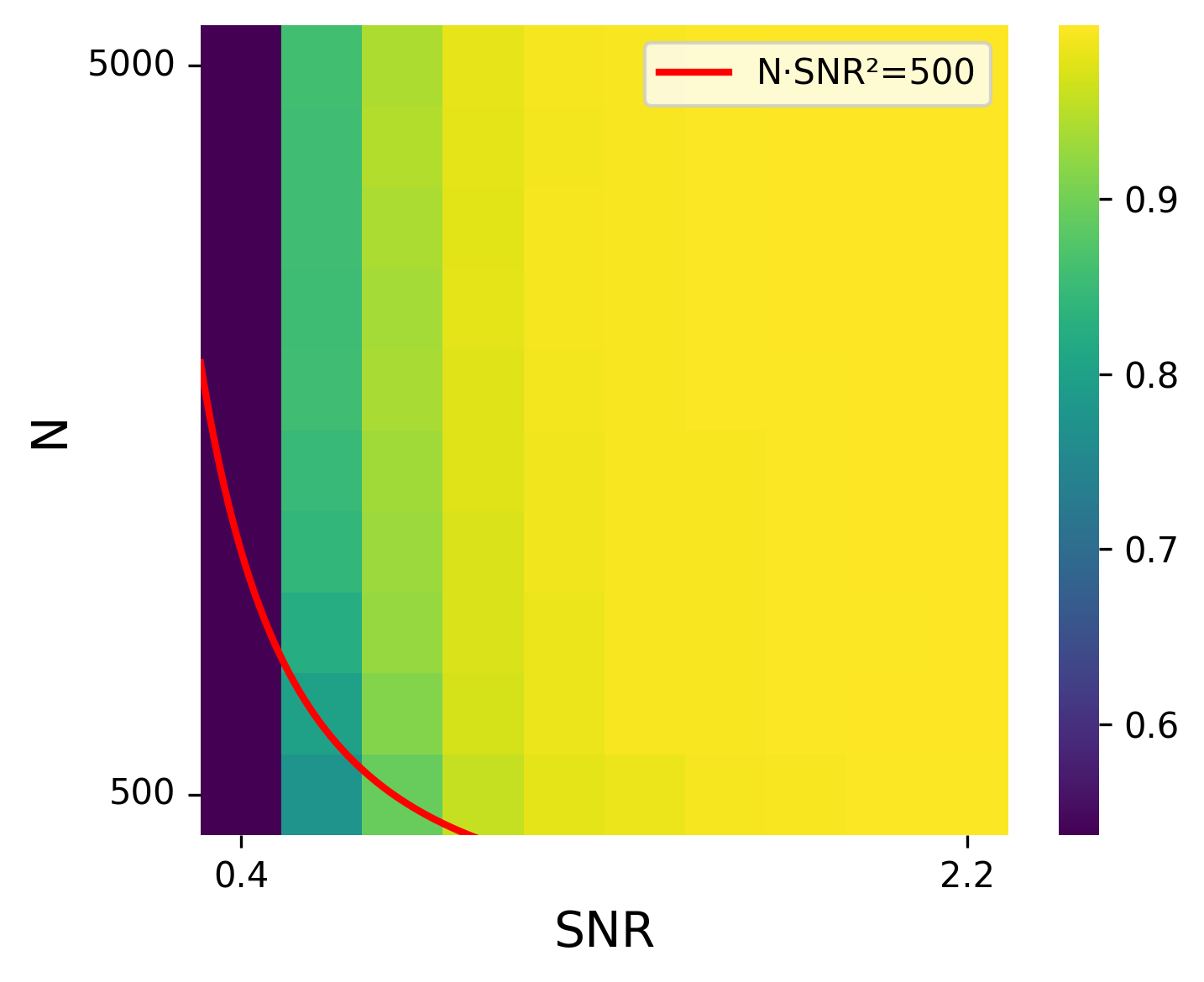}
    \end{subfigure}
    \begin{subfigure}{0.32\textwidth}
        \includegraphics[width=\linewidth]{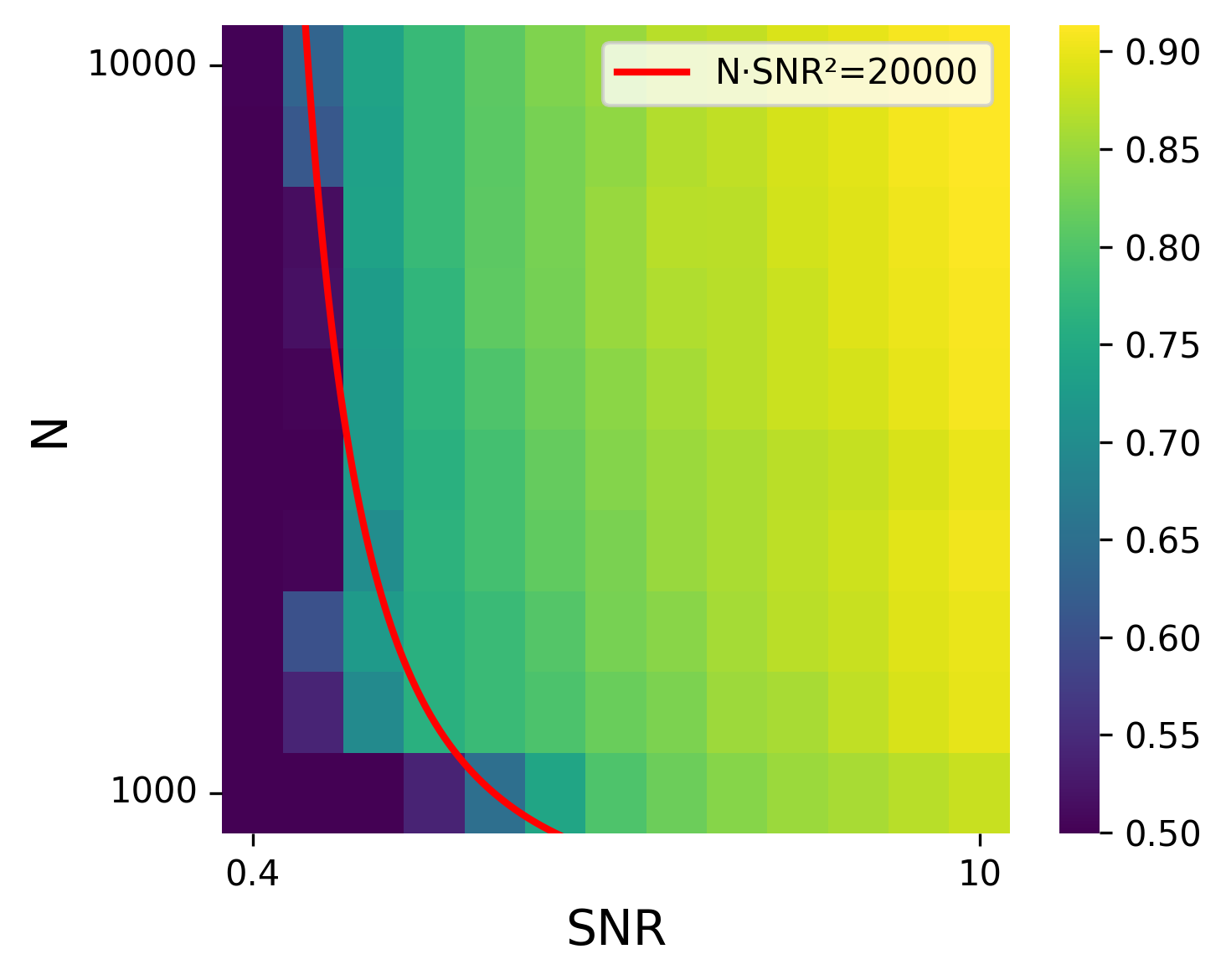}
    \end{subfigure}
    \begin{subfigure}{0.32\textwidth}
        \includegraphics[width=\linewidth]{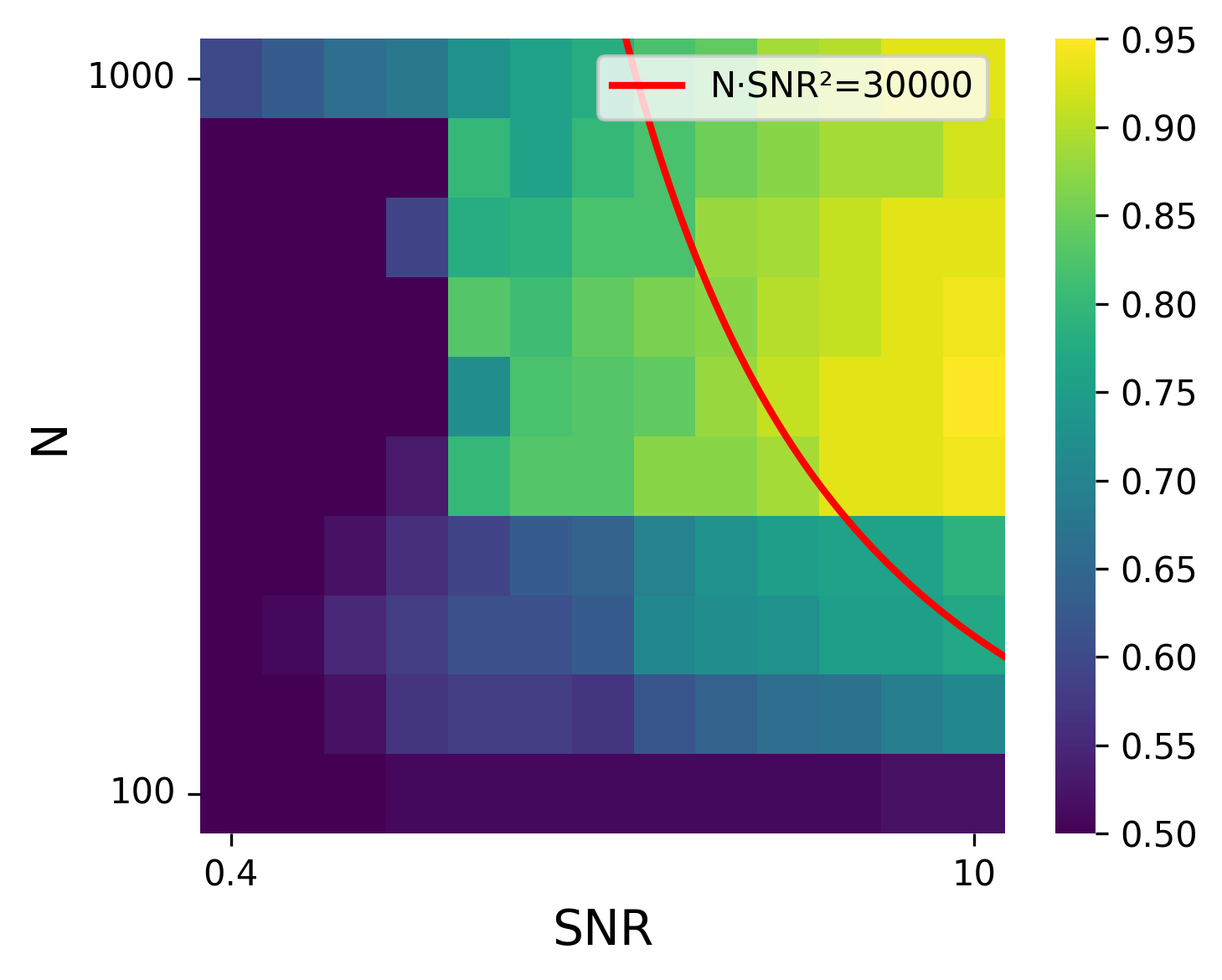}
    \end{subfigure}

    \vspace{0.2cm}

    \begin{subfigure}{0.32\textwidth}
        \includegraphics[width=\linewidth]{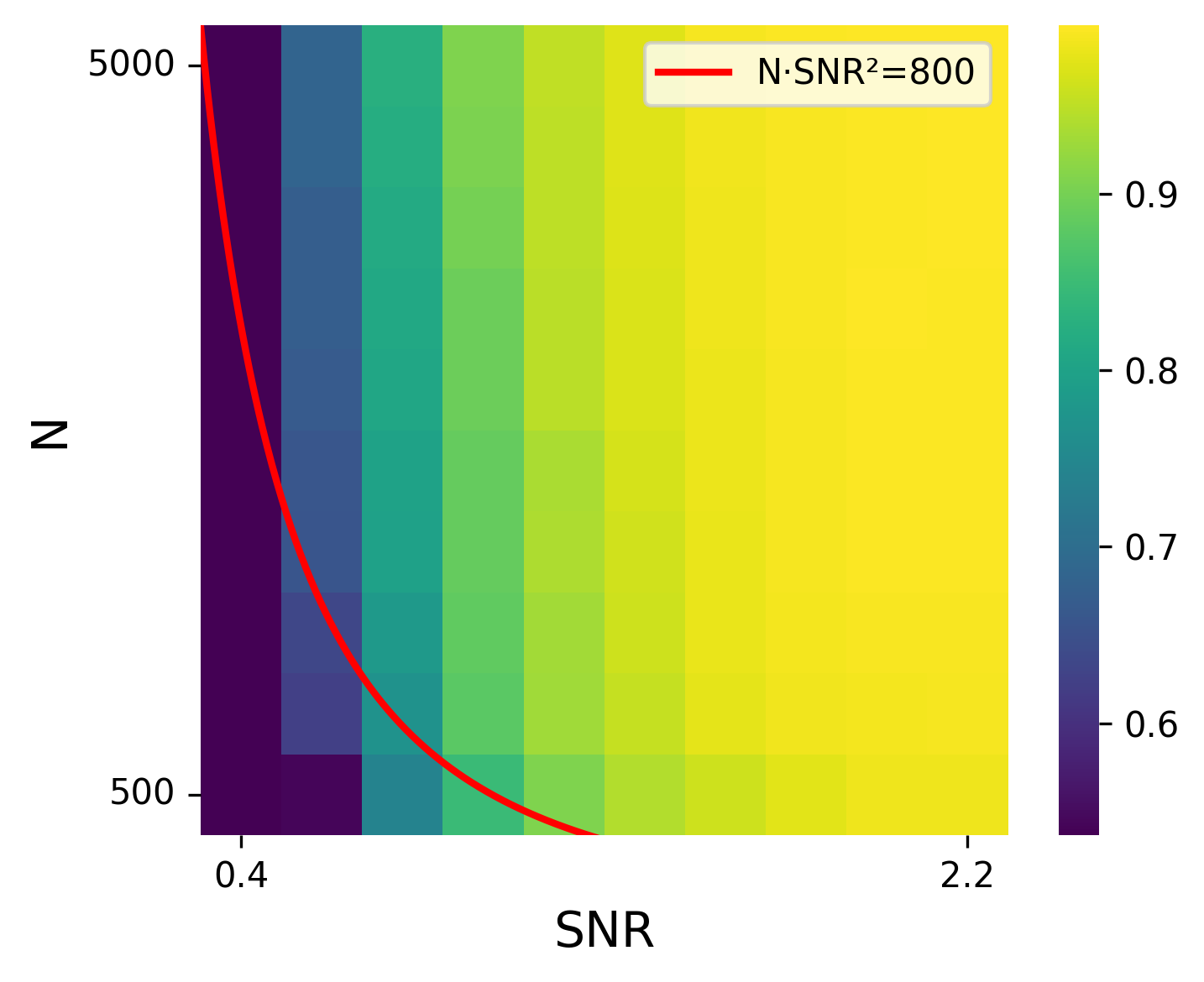}
        \caption{MNIST}
    \end{subfigure}
    \begin{subfigure}{0.32\textwidth}
        \includegraphics[width=\linewidth]{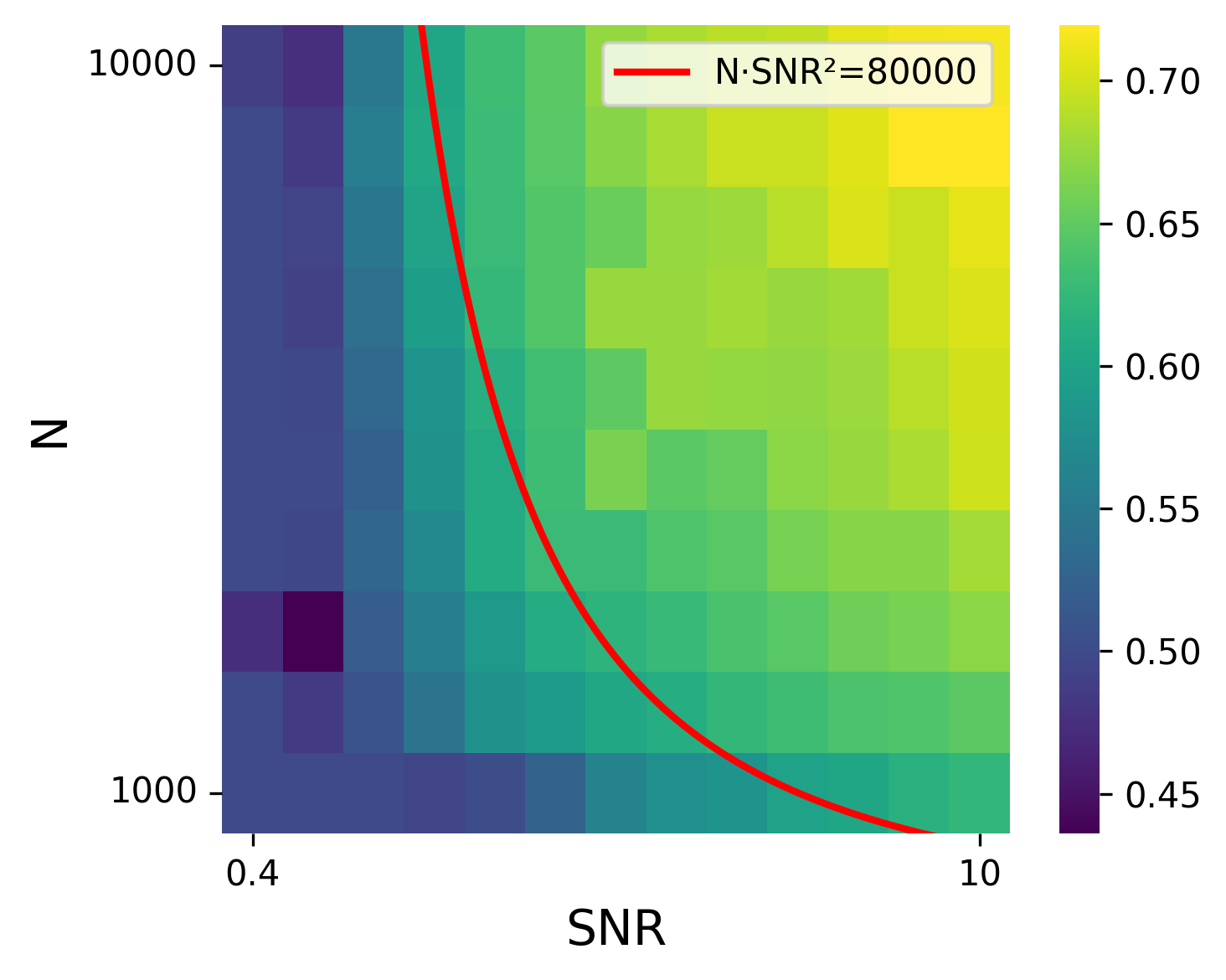}
        \caption{CIFAR-10}
    \end{subfigure}
    \begin{subfigure}{0.32\textwidth}
        \includegraphics[width=\linewidth]{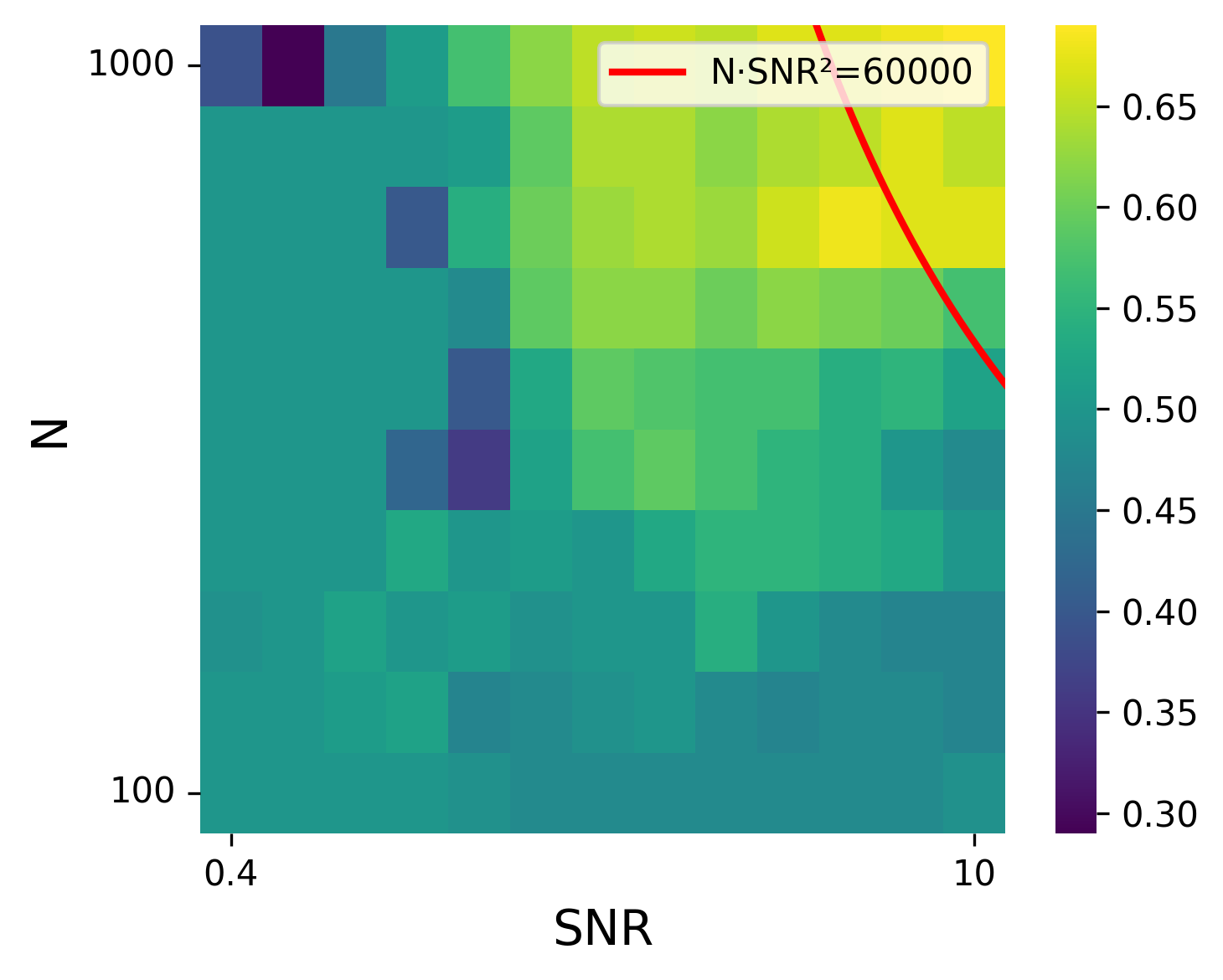}
        \caption{Tiny-ImageNet}
    \end{subfigure}

    \caption{Clean and robust test accuracy of ViT-base under adversarial training across various signal-to-noise ratios (SNR) and sample sizes (N). Top row: clean test accuracy. Bottom row: robust test accuracy. }
    \label{fig:vitpgd}
\end{figure*}

\section{Extension to MHA}\label{sec:mha}
{We let the parameters be \(\theta := \{(W_{Q,h}, W_{K,h}, W_{V,h})\}_{h=1}^H\), where \(W_{Q,h}, W_{K,h} \in \mathbb{R}^{d \times d_h}\) and \(W_{V,h} \in \mathbb{R}^{d \times d_v}\) for each \(h \in [H]\). Here \(H\) denotes the number of attention heads, which we treat as a fixed constant. Under this parameterization, the network can be written as:}

{
\[
f(\mathbf{X}, \theta)=\sum_{h=1}^Hf_h(\mathbf{X}, \theta)
\]
where,
\[
f_h(\mathbf{X}, \theta) = \frac{1}{M} \sum_{l=1}^{M} \varphi(\mathbf{x}_l^\top \mathbf{W}_{Q,h} \mathbf{W}_{K,h}^\top \mathbf{X}^\top) \mathbf{X} \mathbf{W}_{V,h} \boldsymbol{w}_O.
\]
}
{
The gradients in the multi-head attention module,
\(\frac{\partial f_{h}}{\partial W_{K,h}}, \frac{\partial f_{h}}{\partial W_{Q,h}}, \frac{\partial f_{h}}{\partial W_{V,h}}\),
remain unchanged. However, the gradient of the loss with respect to the output of each single head, i.e., \(\frac{\partial \ell}{\partial f_h}\), does change.}

{Intuitively, the model output increases by approximately an \(H\)-fold factor, which causes the scale of the loss \(\ell'\) to decrease accordingly.}

{More concretely, following our analysis of the signal attention head, $\ell'^{(t)}=\frac{1}{M}\pm o(1)$ stay when $t \leq T_2 = \Theta\left(\frac{1}{\eta(\|\mu\|_2 + \tau)^2 \|w_O\|_2^2}\right)$. Thus, this implies $f_h^{(T_2)}(\mathbf{X}, \theta)=o(1)$. The 
\( H \)-fold increase in the multi-head model outputs does not alter this result, so the effect of the changes in \(\frac{\partial \ell}{\partial f_h}\) can be ignored.}

{Therefore, under the MHA setting, the training dynamics of the model still follow those of the single-head attention case, and our conclusions remain unchanged.}
}
\section{Discussion on Multi-Norm Attacks}\label{sec:mul-norm}
 Our analysis covers perturbations under all norm types. For $\ell_\infty$ or $\ell_1$ norms, they can be mapped to $\ell_2$ through standard norm-equivalence. For any $\mathbf{x} \in \mathbb{R}^d$ and $1 \le p \le q \le \infty$, the following inequality holds:
    \begin{equation}
        \|\mathbf{x}\|_q \le d^{(\frac{1}{q} - \frac{1}{p})} \|\mathbf{x}\|_p.
    \end{equation}
    In particular, the $\ell_\infty$ and $\ell_1$ norms satisfy:
    \begin{equation}
        \|\mathbf{x}\|_\infty \le \|\mathbf{x}\|_2 \le \sqrt{d} \|\mathbf{x}\|_\infty, \quad \|\mathbf{x}\|_2 \le \|\mathbf{x}\|_1 \le \sqrt{d} \|\mathbf{x}\|_2.
    \end{equation}
    Thus, an $\ell_\infty$ or $\ell_1$ perturbation budget $\tau$ corresponds to an $\ell_2$ budget scaled by at most $\sqrt{d}$. This implies that all proofs still hold, with the only difference being the perturbation radius $\tau$.

     For the $\ell_0$-norm perturbation model, our theoretical lower bounds implicitly show that it cannot provide benign overfitting guarantees. According to Theorem~\ref{coro:larger_pert}, once the $\ell_2$-norm perturbation radius becomes sufficiently large (i.e., $\tau \ge \|\boldsymbol{\mu}\|_2$), the model incurs a large robust test error. This implies that even an $\ell_0$-norm radius of 1 can still lead to substantial robust test error in the worst case.

\section{Basic Calculation}
\subsection{Notion}
\begin{table}[h]
\centering
\caption{Notations}
\begin{tabular}{ll}
\toprule
\textbf{Symbols} & \textbf{Definitions} \\ \toprule
$x_{n,i}$ & the i-th token in the n-th training sample \\ 
 & if $i \in [M] \backslash \{1\}$, $x_{n,i} = \boldsymbol{\xi}_{n,i}$. \\ \hline
$\varphi_{n,i}^{(t)}$ & the i-th row of attention for the n-th sample, i.e., $\varphi_{n,i}^{(t)} := \varphi(x_{n,i}^\top \mathbf{W}_Q^{(t)} \mathbf{W}_K^{(t)\top} X_n^\top)$ \\ \hline
$S_+, S_-$ & the training samples with +1 labels and -1 labels, \\ 
 & i.e., $S_+ := \{n \in [N] : y_n = 1\}$, $S_- := \{n \in [N] : y_n = -1\}$ \\ \hline
$\boldsymbol{q}_+^{(t)}, \boldsymbol{q}_-^{(t)}, \boldsymbol{q}_{n,i}^{(t)}$ & vectorized Q, defined as $\boldsymbol{q}_+^{(t)} = \widetilde{\boldsymbol{\mu}}_+^\top \mathbf{W}_Q^{(t)}, \boldsymbol{q}_-^{(t)} = \widetilde{\boldsymbol{\mu}}_-^\top \mathbf{W}_Q^{(t)}, \boldsymbol{q}_{n,i}^{(t)} = \widetilde{\boldsymbol{\xi}}_{n,i} \mathbf{W}_Q^{(t)}$ \\ \hline
$\boldsymbol{k}_+^{(t)}, \boldsymbol{k}_-^{(t)}, \boldsymbol{k}_{n,i}^{(t)}$ & vectorized K, defined as $\boldsymbol{k}_+^{(t)} = \widetilde{\boldsymbol{\mu}}_+^\top \mathbf{W}_K^{(t)}, \boldsymbol{k}_-^{(t)} = \widetilde{\boldsymbol{\mu}}_-^\top \mathbf{W}_K^{(t)}, \boldsymbol{k}_{n,i}^{(t)} = \widetilde{\boldsymbol{\xi}}_{n,i} \mathbf{W}_K^{(t)}$ \\ \hline
$V_+^{(t)}, V_-^{(t)}, V_{n,i}^{(t)}$ & scalarized V, defined as $V_+^{(t)} := \widetilde{\boldsymbol{\mu}}_+^\top \mathbf{W}_V^{(t)} \boldsymbol{w}_O, V_-^{(t)} := \widetilde{\boldsymbol{\mu}}_-^\top \mathbf{W}_V^{(t)} \boldsymbol{w}_O, V_{n,i}^{(t)} := \widetilde{\boldsymbol{\xi}}_{n,i}^\top \mathbf{W}_V^{(t)} \boldsymbol{w}_O$ \\ \hline
$\alpha_{\pm,\pm}^{(t)}, \alpha_{n,\pm,i}^{(t)}$ & linear combinations coefficients for the dynamics of $\boldsymbol{q}_+^{(t)}$ and $\boldsymbol{q}_-^{(t)}$, \\ 
 & i.e., $\boldsymbol q_\pm^{(t+1)} - \boldsymbol q_\pm^{(t)} = \alpha_{\pm,\pm}^{(t)} \boldsymbol k_\pm^{(t)} + \sum_{n \in S_\pm} \sum_{i=2}^M \alpha_{n,\pm,i}^{(t)} \boldsymbol{k}_{n,i}^{(t)}$ \\ \hline
$\alpha_{n,i,\pm}^{(t)}, \alpha_{n,i,n',i'}^{(t)}$ & linear combinations coefficients for the dynamics of $\boldsymbol{q}_{n,i}^{(t)}$, \\ 
 & i.e., $\boldsymbol{q}_{n,i}^{(t+1)} - \boldsymbol{q}_{n,i}^{(t)} = \alpha_{n,i,+}^{(t)} \boldsymbol{k}_+^{(t)} + \alpha_{n,i,-}^{(t)} \boldsymbol{k}_-^{(t)} + \sum_{n'=1}^N \sum_{i'=2}^M \alpha_{n,i,n',i'}^{(t)} \boldsymbol{k}_{n',i'}^{(t)}$ \\ \hline
$\beta_{\pm,\pm}^{(t)}, \beta_{n,\pm,i}^{(t)}$ & linear combinations coefficients for the dynamics of $\boldsymbol{k}_+^{(t)}$ and $\boldsymbol{k}_-^{(t)}$, \\ 
 & i.e., $\boldsymbol k_\pm^{(t+1)} - \boldsymbol k_\pm^{(t)} = \beta_{\pm,\pm}^{(t)} \boldsymbol q_\pm^{(t)} + \sum_{n \in S_\pm} \sum_{i=2}^M \beta_{n,\pm,i}^{(t)} \boldsymbol{q}_{n,i}^{(t)}$ \\ \hline
$\beta_{n,i,\pm}^{(t)}, \beta_{n,i,n',i'}^{(t)}$ & linear combinations coefficients for the dynamics of $\boldsymbol{k}_{n,i}^{(t)}$, \\ 
 & i.e., $\boldsymbol{k}_{n,i}^{(t+1)} - \boldsymbol{k}_{n,i}^{(t)} = \beta_{n,i,+}^{(t)} \boldsymbol{q}_+^{(t)} + \beta_{n,i,-}^{(t)} \boldsymbol{q}_-^{(t)} + \sum_{n'=1}^N \sum_{i'=2}^M \beta_{n,i,n',i'}^{(t)} \boldsymbol{q}_{n',i'}^{(t)}$ \\ \hline
$\text{softmax}(\langle \boldsymbol q_\pm^{(t)}, \boldsymbol k_\pm^{(t)} \rangle)$ & a general references to $\frac{\exp(\langle \boldsymbol q_\pm^{(t)}, \boldsymbol k_\pm^{(t)} \rangle)}{\exp(\langle \boldsymbol q_\pm^{(t)}, \boldsymbol k_\pm^{(t)} \rangle) + \sum_{k=2}^M \exp(\langle \boldsymbol q_\pm^{(t)}, \boldsymbol{k}_{n,k}^{(t)} \rangle)}$ for $n \in S_+, i \in [M] \backslash \{1\}$, \\ 
 & and $\frac{\exp(\langle \boldsymbol q_\pm^{(t)}, \boldsymbol k_\pm^{(t)} \rangle)}{\exp(\langle \boldsymbol q_\pm^{(t)}, \boldsymbol k_\pm^{(t)} \rangle) + \sum_{k=2}^M \exp(\langle \boldsymbol q_\pm^{(t)}, \boldsymbol{k}_{n,k}^{(t)} \rangle)}$ for $n \in S_-, i \in [M] \backslash \{1\}$ \\ \hline
$\text{softmax}(\langle \boldsymbol q_\pm^{(t)}, \boldsymbol{k}_{n,j}^{(t)} \rangle)$ & a general references to $\frac{\exp(\langle \boldsymbol q_\pm^{(t)}, \boldsymbol{k}_{n,j}^{(t)} \rangle)}{\exp(\langle \boldsymbol q_\pm^{(t)}, \boldsymbol k_\pm^{(t)} \rangle) + \sum_{k=2}^M \exp(\langle \boldsymbol q_\pm^{(t)}, \boldsymbol{k}_{n,k}^{(t)} \rangle)}$ for $n \in S_+, i, j \in [M] \backslash \{1\}$, \\ 
 & and $\frac{\exp(\langle \boldsymbol q_\pm^{(t)}, \boldsymbol{k}_{n,j}^{(t)} \rangle)}{\exp(\langle \boldsymbol q_\pm^{(t)}, \boldsymbol k_\pm^{(t)} \rangle) + \sum_{k=2}^M \exp(\langle \boldsymbol q_\pm^{(t)}, \boldsymbol{k}_{n,k}^{(t)} \rangle)}$ for $n \in S_-, i, j \in [M] \backslash \{1\}$ \\ \hline
$\text{softmax}(\langle \boldsymbol{q}_{n,i}^{(t)}, \boldsymbol{k}_{n,j}^{(t)} \rangle)$ & a general references to $\frac{\exp(\langle \boldsymbol{q}_{n,i}^{(t)}, \boldsymbol{k}_{n,j}^{(t)} \rangle)}{\exp(\langle \boldsymbol{q}_{n,i}^{(t)}, \boldsymbol{k}_{n,+}^{(t)} \rangle) + \sum_{k=2}^{M} \exp(\langle \boldsymbol{q}_{n,i}^{(t)}, \boldsymbol{k}_{n,k}^{(t)} \rangle)}$ for $n \in S_+, i,j \in [M] \backslash \{1\}$, \\ 
&and $\frac{\exp(\langle \boldsymbol{q}_{n,i}^{(t)}, \boldsymbol{k}_{n,j}^{(t)} \rangle)}{\exp(\langle \boldsymbol{q}_{n,i}^{(t)}, \boldsymbol{k}_{n,-}^{(t)} \rangle) + \sum_{k=2}^{M} \exp(\langle \boldsymbol{q}_{n,i}^{(t)}, \boldsymbol{k}_{n,k}^{(t)} \rangle)}$ for $n \in S_-, i,j \in [M] \backslash \{1\}$
 \\
 \hline
$\Lambda_{n,\pm,j}^{(t)}, \Lambda_{n,i,\pm,j}^{(t)}$ & $\Lambda_{n,\pm,j}^{(t)} := \langle \boldsymbol{q}_\pm^{(t)}, \boldsymbol{k}_{\pm}^{(t)} \rangle - \langle \boldsymbol q_\pm^{(t)}, \boldsymbol k_{n,j}^{(t)} \rangle$, $\Lambda_{n,i,\pm,j}^{(t)} := \langle \boldsymbol{q}_{n,i}^{(t)}, \boldsymbol k_\pm^{(t)} \rangle - \langle \boldsymbol{q}_{n,i}^{(t)}, \boldsymbol{k}_{n,j}^{(t)} \rangle$ \\ \bottomrule
\end{tabular}
\end{table}
\subsection{Update rules}\label{sec:uprule}
We fix a universal perturbed input
$
\widetilde{\boldsymbol{X}} = \big[\widetilde{\boldsymbol{\mu}}, \widetilde{\boldsymbol{\xi}}_{n,2}, \ldots, \widetilde{\boldsymbol{\xi}}_{n,M}\big],
$
where $\widetilde{\boldsymbol{\mu}}_+ \in B(\boldsymbol{\mu}_+,\tau)$, $\widetilde{\boldsymbol{\mu}}_- \in B(\boldsymbol{\mu}_-,\tau)$, and $\widetilde{\boldsymbol{\xi}}_{n,i} \in B(\boldsymbol{\xi}_{n,i},\tau)$ are chosen once and remain fixed for all iterations $t$. 
These perturbations are universal and do not correspond to iteration-specific adversarial examples.
{\begin{definition}[Scalarized V]\label{def:sca_v}
Let $\mathbf{W}_V^{(t)}$ be the V matrix of the ViT at the $t$-th iteration of adversarial training. Then there exist coefficients $\gamma_{V,+}^{(t)}, \gamma_{V,-}^{(t)}, \rho_{V,n,i}^{(t)}$ such that
\begin{align*}
    \widetilde{\boldsymbol{\mu}}_+^\top \mathbf{W}_V^{(t)} \boldsymbol{w}_O &= \widetilde{\boldsymbol{\mu}}_+^\top \mathbf{W}_V^{(0)} \boldsymbol{w}_O + \gamma_{V,+}^{(t)} \|\boldsymbol{w}_O\|_2^2, \\
    \widetilde{\boldsymbol{\mu}}_-^\top \mathbf{W}_V^{(t)} \boldsymbol{w}_O &= \widetilde{\boldsymbol{\mu}}_-^\top \mathbf{W}_V^{(0)} \boldsymbol{w}_O + \gamma_{V,-}^{(t)} \|\boldsymbol{w}_O\|_2^2, \\
    \widetilde{\boldsymbol{\xi}}_{n,i}^\top \mathbf{W}_V^{(t)} \boldsymbol{w}_O &= \widetilde{\boldsymbol{\xi}}_{n,i}^\top \mathbf{W}_V^{(0)} \boldsymbol{w}_O + \rho_{V,n,i}^{(t)} \|\boldsymbol{w}_O\|_2^2
\end{align*}
for $i \in [M] \backslash \{1\}, n \in [N]$.

We further denote the $V_+^{(t)} := \widetilde{\boldsymbol{\mu}}_+^{\top} \mathbf{W}_V^{(t)} \boldsymbol{w}_O$, $V_-^{(t)} := \widetilde{\boldsymbol{\mu}}_-^{\top} \mathbf{W}_V^{(t)} \boldsymbol{w}_O$ and $V_{n,i}^{(t)} := \widetilde{\boldsymbol{\xi}}_{n,i}^{\top} \mathbf{W}_V^{(t)} \boldsymbol{w}_O$ with time-independent perturbations, and $\widetilde{V}_+^{(t)} := \widetilde{\boldsymbol{\mu}}_+^{(t)\top} \mathbf{W}_V^{(t)} \boldsymbol{w}_O$, $\widetilde{V}_-^{(t)} := \widetilde{\boldsymbol{\mu}}_-^{(t)\top} \mathbf{W}_V^{(t)} \boldsymbol{w}_O$ and $\widetilde{V}_{n,i}^{(t)} := \widetilde{\boldsymbol{\xi}}_{n,i}^{(t)\top} \mathbf{W}_V^{(t)} \boldsymbol{w}_O$ with time-dependent perturbations, where $\widetilde{\boldsymbol{\mu}}_+^{(t)},\widetilde{\boldsymbol{\mu}}_-^{(t)},\widetilde{\boldsymbol{\xi}}_{n,i}^{(t)}$ is the adversarial sample at $t$-th iteration. We refer to it as scalarized V.
\end{definition}}

Similarly, we define the vectorized queries and keys as $\boldsymbol q^{(t)},\boldsymbol k^{(t)}$ with time-independent perturbations, and as $\widetilde{\boldsymbol q}^{(t)},\widetilde{\boldsymbol k}^{(t)}$ with time-dependent perturbations.
\begin{definition}[Vectorized Q \& K]\label{def:vec_qk}
Let $\mathbf{W}_Q^{(t)}$ and $\mathbf{W}_K^{(t)}$ be the QK matrices of the ViT at the $t$-th iteration of adversarial training. Then we define the vectorized Q and vectorized K as follows
\begin{align*}
    \boldsymbol q_+^{(t)} &= \widetilde{\boldsymbol{\mu}}_+^{\top} \mathbf{W}_Q^{(t)}, \quad \boldsymbol q_-^{(t)} = \widetilde{\boldsymbol{\mu}}_-^\top \mathbf{W}_Q^{(t)}, \quad \boldsymbol q_{n,i}^{(t)} = \widetilde{\boldsymbol{\xi}}_{n,i}^\top \mathbf{W}_Q^{(t)}, \\
    \boldsymbol k_+^{(t)} &= \widetilde{\boldsymbol{\mu}}_+^\top \mathbf{W}_K^{(t)}, \quad \boldsymbol k_-^{(t)} = \widetilde{\boldsymbol{\mu}}_-^\top \mathbf{W}_K^{(t)}, \quad \boldsymbol k_{n,i}^{(t)} = \widetilde{\boldsymbol{\xi}}_{n,i}^\top \mathbf{W}_K^{(t)}, \\
    \widetilde{\boldsymbol q}_+^{(t)} &= \widetilde{\boldsymbol{\mu}}_+^{(t)\top} \mathbf{W}_Q^{(t)}, \quad \widetilde{\boldsymbol q}_-^{(t)} = \widetilde{\boldsymbol{\mu}}_-^{(t)\top} \mathbf{W}_Q^{(t)}, \quad \widetilde{\boldsymbol q}_{n,i}^{(t)} = \widetilde{\boldsymbol{\xi}}_{n,i}^{(t)\top} \mathbf{W}_Q^{(t)}, \\
    \widetilde{\boldsymbol k}_+^{(t)} &= \widetilde{\boldsymbol{\mu}}_+^{(t)\top} \mathbf{W}_K^{(t)}, \quad \widetilde{\boldsymbol k}_-^{(t)} = \widetilde{\boldsymbol{\mu}}_-^{(t)\top} \mathbf{W}_K^{(t)}, \quad \widetilde{\boldsymbol k}_{n,i}^{(t)} = \widetilde{\boldsymbol{\xi}}_{n,i}^{(t)\top} \mathbf{W}_K^{(t)}  
\end{align*}

for $i \in [M] \backslash \{1\}, n \in [N]$.
\end{definition}

\begin{definition}[Gradient Decomposition]\label{def:gra_de}
There exist coefficients $\alpha_{+,+}^{(t)}$, 
$\alpha_{n,+,i}^{(t)}$, 
$\alpha_{-,-}^{(t)}$, 
$\alpha_{n,-,i}^{(t)}$, 
$\alpha_{n,i,+}^{(t)}$, 
$\alpha_{n,i,-}^{(t)}$, 
$\alpha_{n,i,n',i'}^{(t)}$, 
$\beta_{+,+}^{(t)}$, 
$\beta_{n,+,i}^{(t)}$, 
$\beta_{-,-}^{(t)}$, 
$\beta_{n,-,i}^{(t)}$, 
$\beta_{n,i,+}^{(t)}$, 
$\beta_{n,i,-}^{(t)}$, 
$\beta_{n,i,n',i'}^{(t)}$ such that
\begin{align*}
\Delta \boldsymbol{q}_+^{(t)} &:= \boldsymbol{q}_+^{(t+1)} - \boldsymbol{q}_+^{(t)} = \alpha_{+,+}^{(t)} \boldsymbol{k}_+^{(t)} + \sum_{n \in S_+} \sum_{i=2}^{M} \alpha_{n,+,i}^{(t)} \boldsymbol{k}_{n,i}^{(t)}, \\
\Delta \boldsymbol{q}_-^{(t)} &:= \boldsymbol{q}_-^{(t+1)} - \boldsymbol{q}_-^{(t)} = \alpha_{-,-}^{(t)} \boldsymbol{k}_-^{(t)} + \sum_{n \in S_-} \sum_{i=2}^{M} \alpha_{n,-,i}^{(t)} \boldsymbol{k}_{n,i}^{(t)}, \\
\Delta \boldsymbol{q}_{n,i}^{(t)} &:= \boldsymbol{q}_{n,i}^{(t+1)} - \boldsymbol{q}_{n,i}^{(t)} = \alpha_{n,i,+}^{(t)} \boldsymbol{k}_+^{(t)} + \alpha_{n,i,-}^{(t)} \boldsymbol{k}_-^{(t)} + \sum_{n'=1}^{N} \sum_{i'=2}^{M} \alpha_{n,i,n',i'}^{(t)} \boldsymbol{k}_{n',i'}^{(t)}, \\
\Delta \boldsymbol{k}_+^{(t)} &:= \boldsymbol{k}_+^{(t+1)} - \boldsymbol{k}_+^{(t)} = \beta_{+,+}^{(t)} \boldsymbol{q}_+^{(t)} + \sum_{n \in S_+} \sum_{i=2}^{M} \beta_{n,+,i}^{(t)} \boldsymbol{q}_{n,i}^{(t)}, \\
\Delta \boldsymbol{k}_-^{(t)} &:= \boldsymbol{k}_-^{(t+1)} - \boldsymbol{k}_-^{(t)} = \beta_{-,-}^{(t)} \boldsymbol{q}_-^{(t)} + \sum_{n \in S_-} \sum_{i=2}^{M} \beta_{n,-,i}^{(t)} \boldsymbol{q}_{n,i}^{(t)}, \\
\Delta \boldsymbol{k}_{n,i}^{(t)} &:= \boldsymbol{k}_{n,i}^{(t+1)} - \boldsymbol{k}_{n,i}^{(t)} = \beta_{n,i,+}^{(t)} \boldsymbol{q}_+^{(t)} + \beta_{n,i,-}^{(t)} \boldsymbol{q}_-^{(t)} + \sum_{n'=1}^{N} \sum_{i'=2}^{M} \beta_{n,i,n',i'}^{(t)} \boldsymbol{q}_{n',i'}^{(t)}.
\end{align*}

for $i, i' \in [M]\backslash\{1\}$ and $n, n' \in [N]$.    
\end{definition}
\begin{lemma}
[Update Rule for V]\label{lem:update_v}
The coefficients $\gamma_{V,+}^{(t)}, \gamma_{V,-}^{(t)}, \rho_{V,n,i}^{(t)}$ defined in Definition 1 satisfy the following iterative equations:
\begin{align*}
    \gamma_{V,+}^{(t+1)} = \gamma_{V,+}^{(t)} \nonumber
     &- \frac{\eta \langle\widetilde{\boldsymbol{\mu}}_+,\widetilde{\boldsymbol{\mu}}_+^{(t)}\rangle}{NM} \sum_{n \in S_+} \widetilde{\ell}_n'^{(t)} \left( \frac{\exp(\langle \widetilde{\mathbf{q}}_+^{(t)}, \widetilde{\mathbf{k}}_+^{(t)} \rangle)}{\exp(\langle \widetilde{\mathbf{q}}_+^{(t)}, \widetilde{\mathbf{k}}_+^{(t)} \rangle) + \sum_{k=2}^M \exp(\langle \widetilde{\mathbf{q}}_+^{(t)}, \widetilde{\mathbf{k}}_{n,k}^{(t)} \rangle)} \right. \nonumber \\
    &\quad\quad\quad\quad\quad\quad\quad\quad\quad \left. + \sum_{j=2}^M \frac{\exp(\langle \widetilde{\mathbf{q}}_{n,j}^{(t)}, \widetilde{\mathbf{k}}_+^{(t)} \rangle)}{\exp(\langle \widetilde{\mathbf{q}}_{n,j}^{(t)}, \widetilde{\mathbf{k}}_+^{(t)} \rangle) + \sum_{k=2}^M \exp(\langle \widetilde{\mathbf{q}}_{n,j}^{(t)}, \widetilde{\mathbf{k}}_{n,k}^{(t)} \rangle)} \right)  
    \\&+\sum_{n \in S_+}\widetilde{\ell}_n'^{(t)}\sum_{i=2}^M\frac{-\eta\langle\widetilde{\boldsymbol{\mu}}_+,\widetilde{\boldsymbol{\xi}}_{n,i}^{(t)}\rangle}{NM}\bigg(\frac{\exp(\langle \widetilde{\mathbf{q}}_+^{(t)}, \widetilde{\mathbf{k}}_{n,i}^{(t)} \rangle)}{\exp(\langle \widetilde{\mathbf{q}}_+^{(t)}, \widetilde{\mathbf{k}}_+^{(t)} \rangle) + \sum_{k=2}^M \exp(\langle \widetilde{\mathbf{q}}_+^{(t)}, \widetilde{\mathbf{k}}_{n,k}^{(t)} \rangle)} 
    \\&\quad\quad\quad\quad\quad\quad\quad\quad\quad+\sum_{j=2}^M\frac{\exp(\langle \widetilde{\mathbf{q}}_{n,j}^{(t)}, \widetilde{\mathbf{k}}_{n,i}^{(t)} \rangle)}{\exp(\langle \widetilde{\mathbf{q}}_{n,j}^{(t)}, \widetilde{\mathbf{k}}_+^{(t)} \rangle) + \sum_{k=2}^M \exp(\langle \widetilde{\mathbf{q}}_{n,j}^{(t)}, \widetilde{\mathbf{k}}_{n,k}^{(t)} \rangle)}\bigg) \\
    \gamma_{V,-}^{(t+1)} = \gamma_{V,-}^{(t)} \nonumber
     &- \frac{\eta \langle\widetilde{\boldsymbol{\mu}}_-,\widetilde{\boldsymbol{\mu}}_-^{(t)}\rangle}{NM} \sum_{n \in S_-} \widetilde{\ell}_n'^{(t)} \left( \frac{\exp(\langle \widetilde{\mathbf{q}}_-^{(t)}, \widetilde{\mathbf{k}}_-^{(t)} \rangle)}{\exp(\langle \widetilde{\mathbf{q}}_-^{(t)}, \widetilde{\mathbf{k}}_-^{(t)} \rangle) + \sum_{k=2}^M \exp(\langle \widetilde{\mathbf{q}}_-^{(t)}, \widetilde{\mathbf{k}}_{n,k}^{(t)} \rangle)} \right. \nonumber \\
    &\quad\quad\quad\quad\quad\quad\quad\quad\quad \left. + \sum_{j=2}^M \frac{\exp(\langle \widetilde{\mathbf{q}}_{n,j}^{(t)}, \widetilde{\mathbf{k}}_-^{(t)} \rangle)}{\exp(\langle \widetilde{\mathbf{q}}_{n,j}^{(t)}, \widetilde{\mathbf{k}}_-^{(t)} \rangle) + \sum_{k=2}^M \exp(\langle \widetilde{\mathbf{q}}_{n,j}^{(t)}, \widetilde{\mathbf{k}}_{n,k}^{(t)} \rangle)} \right)  
    \\&+\sum_{n \in S_+}\widetilde{\ell}_n'^{(t)}\sum_{i=2}^M\frac{-\eta\langle\widetilde{\boldsymbol{\mu}}_-,\widetilde{\boldsymbol{\xi}}_{n,i}^{(t)}\rangle}{NM}\bigg(\frac{\exp(\langle \widetilde{\mathbf{q}}_-^{(t)}, \widetilde{\mathbf{k}}_{n,i}^{(t)} \rangle)}{\exp(\langle \widetilde{\mathbf{q}}_-^{(t)}, \widetilde{\mathbf{k}}_-^{(t)} \rangle) + \sum_{k=2}^M \exp(\langle \widetilde{\mathbf{q}}_-^{(t)}, \widetilde{\mathbf{k}}_{n,k}^{(t)} \rangle)} 
    \\&\quad\quad\quad\quad\quad\quad\quad\quad\quad+\sum_{j=2}^M\frac{\exp(\langle \widetilde{\mathbf{q}}_{n,j}^{(t)}, \widetilde{\mathbf{k}}_{n,i}^{(t)} \rangle)}{\exp(\langle \widetilde{\mathbf{q}}_{n,j}^{(t)}, \widetilde{\mathbf{k}}_-^{(t)} \rangle) + \sum_{k=2}^M \exp(\langle \widetilde{\mathbf{q}}_{n,j}^{(t)}, \widetilde{\mathbf{k}}_{n,k}^{(t)} \rangle)}\bigg), \\
    \rho_{V,n,i}^{(t+1)} = \rho_{V,n,i}^{(t)} 
    &  -\frac{\eta}{NM} \sum_{n' \in S_+} \widetilde{\ell}_{n'}'^{(t)} \left(\left( \langle\widetilde{\boldsymbol{\xi}}_{n,i},\widetilde{\boldsymbol{\mu}}_+^{(t)}\rangle \frac{\exp(\langle \widetilde{\mathbf{q}}_+^{(t)}, \widetilde{\mathbf{k}}_+^{(t)} \rangle)}{\exp(\langle \widetilde{\mathbf{q}}_+^{(t)}, \widetilde{\mathbf{k}}_+^{(t)} \rangle) + \sum_{k=2}^M \exp(\langle \widetilde{\mathbf{q}}_+^{(t)}, \widetilde{\mathbf{k}}_{n,k}^{(t)} \rangle)}  \right.\right.\nonumber \\
    &\left.\quad\quad\quad\quad\quad\quad\quad\quad + \sum_{j=2}^M \langle\widetilde{\boldsymbol{\xi}}_{n,i},\widetilde{\boldsymbol{\mu}}_+^{(t)}\rangle \frac{\exp(\langle \widetilde{\mathbf{q}}_{n,j}^{(t)}, \widetilde{\mathbf{k}}_+^{(t)} \rangle)}{\exp(\langle \widetilde{\mathbf{q}}_{n,j}^{(t)}, \widetilde{\mathbf{k}}_+^{(t)} \rangle) + \sum_{k=2}^M \exp(\langle \widetilde{\mathbf{q}}_{n,j}^{(t)}, \widetilde{\mathbf{k}}_{n,k}^{(t)} \rangle)} \right) \\
        &\quad\quad\quad\quad\quad+\sum_{i=2}^M \left(\langle \widetilde{\boldsymbol{\xi}}_{n,i}, \widetilde{\boldsymbol{\xi}}_{n',i'}^{(t)} \rangle
        \frac{\exp(\langle \widetilde{\mathbf{q}}_{n',i'}^{(t)}, \widetilde{\mathbf{k}}_{n',i'}^{(t)} \rangle)}{\exp(\langle \widetilde{\mathbf{q}}_{n',i'}^{(t)}, \widetilde{\mathbf{k}}_{n',i'}^{(t)} \rangle) + \sum_{k=2}^M \exp(\langle \widetilde{\mathbf{q}}_{n',i'}^{(t)}, \widetilde{\mathbf{k}}_{n',k}^{(t)} \rangle)} \right.\nonumber \\
    &\left.\left.\quad\quad\quad\quad\quad\quad\quad\quad + \sum_{j=2}^M  \langle \widetilde{\boldsymbol{\xi}}_{n,i}, \widetilde{\boldsymbol{\xi}}_{n',i'}^{(t)} \rangle 
        \frac{\exp(\langle \widetilde{\mathbf{q}}_{n',j}^{(t)}, \widetilde{\mathbf{k}}_{n',i'}^{(t)} \rangle)}{\exp(\langle \widetilde{\mathbf{q}}_{n',j}^{(t)}, \widetilde{\mathbf{k}}_{n',i'}^{(t)} \rangle) + \sum_{k=2}^M \exp(\langle \widetilde{\mathbf{q}}_{n',j}^{(t)}, \widetilde{\mathbf{k}}_{n',k}^{(t)} \rangle)}
    \right)\right) \nonumber \\
    & - \frac{\eta}{NM} \sum_{n' \in S_-} \left(\cdot\right)
\end{align*}

for $i \in [M] \backslash \{1\}, n \in [N]$.
\end{lemma}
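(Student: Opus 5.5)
The plan is to compute the gradient of the adversarial empirical loss with respect to $\mathbf{W}_V$ in closed form, contract it against the fixed reference vectors of Definition~\ref{def:sca_v}, and read off how each scalar coefficient changes in one step. Write $\widetilde{\ell}_n'^{(t)} := \ell'(y_n f(\widetilde{\mathbf{X}}_n^{(t)},\theta^{(t)}))$ for the loss derivative at the adversarial example. The model is linear in $\mathbf{W}_V$:
\[
f(\widetilde{\mathbf{X}}_n^{(t)},\theta)=\frac{1}{M}\sum_{l=1}^M\sum_{m=1}^M \varphi_{n,l,m}^{(t)}\,\widetilde{\mathbf{x}}_{n,m}^{(t)\top}\mathbf{W}_V\boldsymbol{w}_O ,
\]
where $\varphi_{n,l,m}^{(t)}$ is the $m$-th entry of the $l$-th attention row on the perturbed input. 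Differentiating gives
\[
\nabla_{\mathbf{W}_V} f=\frac{1}{M}\sum_{l,m}\varphi_{n,l,m}^{(t)}\,\widetilde{\mathbf{x}}_{n,m}^{(t)}\boldsymbol{w}_O^\top .
\]
The gradient-descent step in Algorithm~\ref{alg:gd-adversarial} is therefore
\[
\mathbf{W}_V^{(t+1)}-\mathbf{W}_V^{(t)}=-\frac{\eta}{NM}\sum_{n}\widetilde{\ell}_n'^{(t)}y_n\sum_{l,m}\varphi_{n,l,m}^{(t)}\,\widetilde{\mathbf{x}}_{n,m}^{(t)}\boldsymbol{w}_O^\top .
\]

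Next I would multiply this update on the left by a fixed reference vector $\widetilde{\boldsymbol{\mu}}_+^\top$ (respectively $\widetilde{\boldsymbol{\mu}}_-^\top$ or $\widetilde{\boldsymbol{\xi}}_{n,i}^\top$) and on the right by $\boldsymbol{w}_O$. The right multiplication produces the common factor $\|\boldsymbol{w}_O\|_2^2$. Dividing by it and comparing with Definition~\ref{def:sca_v} identifies $\gamma_{V,+}^{(t+1)}-\gamma_{V,+}^{(t)}$ as the sum
\[
-\frac{\eta}{NM}\sum_n \widetilde{\ell}_n'^{(t)}y_n\sum_{l,m}\varphi_{n,l,m}^{(t)}\langle\widetilde{\boldsymbol{\mu}}_+,\widetilde{\mathbf{x}}_{n,m}^{(t)}\rangle .
\]
The existence of the coefficients is immediate: each is defined recursively from $\gamma^{(0)}=\rho^{(0)}=0$ by exactly this increment.

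For each sample I would split the token index $m$ into the signal token $m=1$, where $\widetilde{\mathbf{x}}_{n,1}^{(t)}=\widetilde{\boldsymbol{\mu}}_{y_n}^{(t)}$, and the noise tokens $m\ge2$, where $\widetilde{\mathbf{x}}_{n,m}^{(t)}=\widetilde{\boldsymbol{\xi}}_{n,m}^{(t)}$. In the same way, the query index $l$ splits into $l=1$, which gives the softmax row with query $\widetilde{\mathbf{q}}_\pm^{(t)}$, and $l=j\ge2$, which gives the rows with query $\widetilde{\mathbf{q}}_{n,j}^{(t)}$. Each softmax entry is written explicitly as a ratio of exponentials of $\langle\widetilde{\mathbf{q}},\widetilde{\mathbf{k}}\rangle$, using Definition~\ref{def:vec_qk}. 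This gives two contributions:
\begin{itemize}
\item the signal-token terms, carrying the factor $\langle\widetilde{\boldsymbol{\mu}}_+,\widetilde{\boldsymbol{\mu}}_+^{(t)}\rangle$;
\item the cross terms, carrying $\langle\widetilde{\boldsymbol{\mu}}_+,\widetilde{\boldsymbol{\xi}}_{n,i}^{(t)}\rangle$.
\end{itemize}
This matches the displayed formula. The $\rho_{V,n,i}$ recursion is derived the same way, with $\widetilde{\boldsymbol{\xi}}_{n,i}$ as the left vector, and it sums over both $S_+$ and $S_-$.

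The main obstacle is bookkeeping rather than analysis, in two places.
\begin{itemize}
\item \emph{Signs.} I would absorb $y_n$ into the loss derivative (so $\widetilde{\ell}_n'$ is the derivative with its sign) and state this convention explicitly.
\item \emph{Dropped terms.} Unlike the clean case, the perturbation destroys the orthogonality $\langle\boldsymbol{\mu}_+,\boldsymbol{\mu}_-\rangle=0$. So the $\gamma_{V,+}$ update in principle also receives terms from $n\in S_-$, through $\langle\widetilde{\boldsymbol{\mu}}_+,\widetilde{\boldsymbol{\mu}}_-^{(t)}\rangle$ and $\langle\widetilde{\boldsymbol{\mu}}_+,\widetilde{\boldsymbol{\xi}}_{n,i}^{(t)}\rangle$.
\end{itemize}
I would either keep these as an additional sum over $S_-$, of size $O(\|\boldsymbol{\mu}\|_2\tau+\tau^2)$ per term, or note that the displayed recursion lists only the leading $S_+$ terms and that the omitted ones are controlled by the same cross-term bound used for $\langle\widetilde{\boldsymbol{\mu}}_+,\widetilde{\boldsymbol{\xi}}_{n,i}^{(t)}\rangle$ in Section~\ref{sec:5_1}.
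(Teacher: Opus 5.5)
Your derivation is the paper's own: differentiate the model (linear in $\mathbf{W}_V$) at the fixed adversarial inputs, contract the update with the reference vector on the left and $\boldsymbol{w}_O$ on the right, divide by $\|\boldsymbol{w}_O\|_2^2$, and split by signal versus noise key and by query row. Your caveat is also correct, and it marks the one place where the paper's proof is loose: the paper discards the $n\in S_-$ contributions to the $\gamma_{V,+}$ increment (the terms carrying $\langle\widetilde{\boldsymbol{\mu}}_+,\widetilde{\boldsymbol{\mu}}_-^{(t)}\rangle$ and $\langle\widetilde{\boldsymbol{\mu}}_+,\widetilde{\boldsymbol{\xi}}_{n,i}^{(t)}\rangle$ with $y_n=-1$) by appealing to an orthogonality that holds only for clean vectors, so the displayed recursion is exact only once your extra $S_-$ sum is added (with $y_n$ absorbed into $\widetilde{\ell}_n'$ as you propose, which is also the convention under which the $\gamma_{V,-}$ line has the right sign), and those extra terms are of the same order $\|\boldsymbol{\mu}\|_2\tau+\sigma_p\tau\sqrt{2\log(4NM/\delta)}+\tau^2$ as the $S_+$ cross terms the display already carries.
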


\begin{proof}

The gradient of \( \mathbf{W}_V \) can be obtained using the chain rule as follows
\begin{align*}
\nabla_{\mathbf{W}_V} L_S(\theta) &= \frac{1}{N} \sum_{n=1}^{N} y_n \ell'(y_n f(\mathbf{X}_n, \theta)) \nabla_{\mathbf{W}_V} f(\mathbf{X}_n, \theta)  \\
&= \frac{1}{NM} \sum_{n=1}^{N} y_n \ell'_n(\theta) \left[ \boldsymbol{w}_O \sum_{l=1}^{M} \varphi(\mathbf{x}_{n,l} \mathbf{W}_Q \mathbf{W}_K^\top (\mathbf{X}_n)^\top) \mathbf{X}_n \right]^\top 
\end{align*}

Base on above, we have
\begin{align*}
\mathbf{x}^\top \nabla_{\mathbf{w}_V} \widetilde{L_S}(\theta) \boldsymbol{w}_O
&= \frac{1}{NM} \sum_{n \in S_+} \widetilde{\ell}'_n(\theta) \big( \langle \mathbf{x}, \widetilde{\boldsymbol{\boldsymbol{\mu}}}_+ \rangle \frac{\exp(\widetilde{\boldsymbol{\boldsymbol{\mu}}}_+^\top \mathbf{W}_Q \mathbf{W}_K^\top \widetilde{\boldsymbol{\boldsymbol{\mu}}}_+)}{\exp(\widetilde{\boldsymbol{\boldsymbol{\mu}}}_+^\top \mathbf{W}_Q \mathbf{W}_K^\top \widetilde{\boldsymbol{\boldsymbol{\mu}}}_+) + \sum_{k=2}^{M} \exp(\widetilde{\boldsymbol{\boldsymbol{\mu}}}_+^\top \mathbf{W}_Q \mathbf{W}_K^\top \widetilde{\boldsymbol{\boldsymbol{\xi}}}_{n,k})} \\
&+ \sum_{j=2}^{M} \langle \mathbf{x}, \widetilde{\boldsymbol{\boldsymbol{\mu}}}_+ \rangle \frac{\exp(\widetilde{\boldsymbol{\boldsymbol{\xi}}}_{n,j}^\top \mathbf{W}_Q \mathbf{W}_K^\top \widetilde{\boldsymbol{\boldsymbol{\mu}}}_+)}{\exp(\widetilde{\boldsymbol{\boldsymbol{\xi}}}_{n,j}^\top \mathbf{W}_Q \mathbf{W}_K^\top \widetilde{\boldsymbol{\boldsymbol{\mu}}}_+) + \sum_{k=2}^{M} \exp(\widetilde{\boldsymbol{\boldsymbol{\xi}}}_{n,j}^\top \mathbf{W}_Q \mathbf{W}_K^\top \widetilde{\boldsymbol{\boldsymbol{\xi}}}_{n,k})}\big) \\
&+ \sum_{i=2}^{M} \big(\langle \mathbf{x}, \widetilde{\boldsymbol{\boldsymbol{\xi}}}_{n,i} \rangle \frac{\exp(\widetilde{\boldsymbol{\boldsymbol{\mu}}}_+^\top \mathbf{W}_Q \mathbf{W}_K^\top \widetilde{\boldsymbol{\boldsymbol{\xi}}}_{n,i})}{\exp(\widetilde{\boldsymbol{\boldsymbol{\mu}}}_+^\top \mathbf{W}_Q \mathbf{W}_K^\top \widetilde{\boldsymbol{\boldsymbol{\mu}}}_+) + \sum_{k=2}^{M} \exp(\widetilde{\boldsymbol{\boldsymbol{\mu}}}_+^\top \mathbf{W}_Q \mathbf{W}_K^\top \widetilde{\boldsymbol{\boldsymbol{\xi}}}_{n,k})} \\
&+ \sum_{j=2}^{M} \langle \mathbf{x}, \widetilde{\boldsymbol{\boldsymbol{\xi}}}_{n,i} \rangle \frac{\exp(\widetilde{\boldsymbol{\boldsymbol{\xi}}}_{n,j}^\top \mathbf{W}_Q \mathbf{W}_K^\top \widetilde{\boldsymbol{\boldsymbol{\xi}}}_{n,i})}{\exp(\widetilde{\boldsymbol{\boldsymbol{\xi}}}_{n,j}^\top \mathbf{W}_Q \mathbf{W}_K^\top \widetilde{\boldsymbol{\boldsymbol{\mu}}}_+) + \sum_{k=2}^{M} \exp(\widetilde{\boldsymbol{\boldsymbol{\xi}}}_{n,j}^\top \mathbf{W}_Q \mathbf{W}_K^\top \widetilde{\boldsymbol{\boldsymbol{\xi}}}_{n,k})} \big)\bigg) \| \boldsymbol{w}_O \|_2^2
\\&+\frac{1}{NM} \sum_{n \in S_-} (\cdot)
\end{align*}

where the second equality we expand $X_n$ into vectors and make inner products with $x$, the third equality we materializing all the $x_{n,i}$ (e.g., $x_{n,1} = \boldsymbol{\mu}_+$ for $n \in S_+$). Note the orthogonality between $\boldsymbol{\mu}$ and $\boldsymbol{\xi}_{n,i}$, we can remove many of the terms in this equation. For any $x = \widetilde{\boldsymbol{\mu}}_+'\in B(\boldsymbol{\mu}_+,\tau)$, we have

{\begin{align*}
\widetilde{\boldsymbol{\mu}}_{+}^{'\top} \nabla_{\mathbf{w}_V} \widetilde{L_S}(\theta) \boldsymbol{w}_O
&= \frac{1}{NM} \sum_{n \in S_+} \widetilde{\ell}'_n(\theta) \bigg(\big( \langle \widetilde{\boldsymbol{\mu}}_{+}', \widetilde{\boldsymbol{\boldsymbol{\mu}}}_+ \rangle \frac{\exp(\widetilde{\boldsymbol{\boldsymbol{\mu}}}_+^\top \mathbf{W}_Q \mathbf{W}_K^\top \widetilde{\boldsymbol{\boldsymbol{\mu}}}_+)}{\exp(\widetilde{\boldsymbol{\boldsymbol{\mu}}}_+^\top \mathbf{W}_Q \mathbf{W}_K^\top \widetilde{\boldsymbol{\boldsymbol{\mu}}}_+) + \sum_{k=2}^{M} \exp(\widetilde{\boldsymbol{\boldsymbol{\mu}}}_+^\top \mathbf{W}_Q \mathbf{W}_K^\top \widetilde{\boldsymbol{\boldsymbol{\xi}}}_{n,k})} \\
&+ \sum_{j=2}^{M} \langle \widetilde{\boldsymbol{\mu}}_{+}', \widetilde{\boldsymbol{\boldsymbol{\mu}}}_+ \rangle \frac{\exp(\widetilde{\boldsymbol{\boldsymbol{\xi}}}_{n,j}^\top \mathbf{W}_Q \mathbf{W}_K^\top \widetilde{\boldsymbol{\boldsymbol{\mu}}}_+)}{\exp(\widetilde{\boldsymbol{\boldsymbol{\xi}}}_{n,j}^\top \mathbf{W}_Q \mathbf{W}_K^\top \widetilde{\boldsymbol{\boldsymbol{\mu}}}_+) + \sum_{k=2}^{M} \exp(\widetilde{\boldsymbol{\boldsymbol{\xi}}}_{n,j}^\top \mathbf{W}_Q \mathbf{W}_K^\top \widetilde{\boldsymbol{\boldsymbol{\xi}}}_{n,k})}\big) \\
&+ \sum_{i=2}^{M} \big( \langle \widetilde{\boldsymbol{\mu}}_{+}', \widetilde{\boldsymbol{\boldsymbol{\xi}}}_{n,i} \rangle \frac{\exp(\widetilde{\boldsymbol{\boldsymbol{\mu}}}_+^\top \mathbf{W}_Q \mathbf{W}_K^\top \widetilde{\boldsymbol{\boldsymbol{\xi}}}_{n,i})}{\exp(\widetilde{\boldsymbol{\boldsymbol{\mu}}}_+^\top \mathbf{W}_Q \mathbf{W}_K^\top \widetilde{\boldsymbol{\boldsymbol{\mu}}}_+) + \sum_{k=2}^{M} \exp(\widetilde{\boldsymbol{\boldsymbol{\mu}}}_+^\top \mathbf{W}_Q \mathbf{W}_K^\top \widetilde{\boldsymbol{\boldsymbol{\xi}}}_{n,k})} \\
&+ \sum_{j=2}^{M} \langle \widetilde{\boldsymbol{\mu}}_{+}', \widetilde{\boldsymbol{\boldsymbol{\xi}}}_{n,i} \rangle \frac{\exp(\widetilde{\boldsymbol{\boldsymbol{\xi}}}_{n,j}^\top \mathbf{W}_Q \mathbf{W}_K^\top \widetilde{\boldsymbol{\boldsymbol{\xi}}}_{n,i})}{\exp(\widetilde{\boldsymbol{\boldsymbol{\xi}}}_{n,j}^\top \mathbf{W}_Q \mathbf{W}_K^\top \widetilde{\boldsymbol{\boldsymbol{\mu}}}_+) + \sum_{k=2}^{M} \exp(\widetilde{\boldsymbol{\boldsymbol{\xi}}}_{n,j}^\top \mathbf{W}_Q \mathbf{W}_K^\top \widetilde{\boldsymbol{\boldsymbol{\xi}}}_{n,k})} \big)\bigg) \| \boldsymbol{w}_O \|_2^2
\end{align*}}

Then we have
{\begin{align*}
    &\widetilde{\boldsymbol{\mu}}_+^\top \mathbf{W}_V^{(t+1)} \boldsymbol{w}_O - \widetilde{\boldsymbol{\mu}}_+^\top \mathbf{W}_V^{(t)} \boldsymbol{w}_O = \widetilde{\boldsymbol{\mu}}_+^\top (-\eta \nabla_{\mathbf{W}_V} \widetilde{L}_S(\theta(t))) \boldsymbol{w}_O \nonumber \\
    &\quad= - \frac{\eta \langle\widetilde{\boldsymbol{\mu}}_+,\widetilde{\boldsymbol{\mu}}_+^{(t)}\rangle}{NM} \sum_{n \in S_+} \widetilde{\ell}_n'^{(t)} \left( \frac{\exp(\langle \widetilde{\mathbf{q}}_+^{(t)}, \widetilde{\mathbf{k}}_+^{(t)} \rangle)}{\exp(\langle \widetilde{\mathbf{q}}_+^{(t)}, \widetilde{\mathbf{k}}_+^{(t)} \rangle) + \sum_{k=2}^M \exp(\langle \widetilde{\mathbf{q}}_+^{(t)}, \widetilde{\mathbf{k}}_{n,k}^{(t)} \rangle)} \right. \nonumber \\
    &\quad \left. + \sum_{j=2}^M \frac{\exp(\langle \widetilde{\mathbf{q}}_{n,j}^{(t)}, \widetilde{\mathbf{k}}_+^{(t)} \rangle)}{\exp(\langle \widetilde{\mathbf{q}}_{n,j}^{(t)}, \widetilde{\mathbf{k}}_+^{(t)} \rangle) + \sum_{k=2}^M \exp(\langle \widetilde{\mathbf{q}}_{n,j}^{(t)}, \widetilde{\mathbf{k}}_{n,k}^{(t)} \rangle)} \right) \|\boldsymbol{w}_O\|_2^2 
    \\&+\sum_{n \in S_+}\widetilde{\ell}_n'^{(t)}\sum_{i=2}^M\frac{-\eta\langle\widetilde{\boldsymbol{\mu}}_+,\widetilde{\boldsymbol{\xi}}_{n,i}^{(t)}\rangle}{NM}\bigg(\frac{\exp(\langle \widetilde{\mathbf{q}}_+^{(t)}, \widetilde{\mathbf{k}}_{n,i}^{(t)} \rangle)}{\exp(\langle \widetilde{\mathbf{q}}_+^{(t)}, \widetilde{\mathbf{k}}_+^{(t)} \rangle) + \sum_{k=2}^M \exp(\langle \widetilde{\mathbf{q}}_+^{(t)}, \widetilde{\mathbf{k}}_{n,k}^{(t)} \rangle)} 
    \\&+\sum_{j=2}^M\frac{\exp(\langle \widetilde{\mathbf{q}}_{n,j}^{(t)}, \widetilde{\mathbf{k}}_{n,i}^{(t)} \rangle)}{\exp(\langle \widetilde{\mathbf{q}}_{n,j}^{(t)}, \widetilde{\mathbf{k}}_+^{(t)} \rangle) + \sum_{k=2}^M \exp(\langle \widetilde{\mathbf{q}}_{n,j}^{(t)}, \widetilde{\mathbf{k}}_{n,k}^{(t)} \rangle)}\bigg)\|\boldsymbol{w}_O\|_2^2
\end{align*}}

Dividing by $\|\boldsymbol{w}_O\|_2^2$ we get
\begin{align*}
    \gamma_{V,+}^{(t+1)} = \gamma_{V,+}^{(t)} \nonumber
     &- \frac{\eta \langle\widetilde{\boldsymbol{\mu}}_+,\widetilde{\boldsymbol{\mu}}_+^{(t)}\rangle}{NM} \sum_{n \in S_+} \widetilde{\ell}_n'^{(t)} \left( \frac{\exp(\langle \widetilde{\mathbf{q}}_+^{(t)}, \widetilde{\mathbf{k}}_+^{(t)} \rangle)}{\exp(\langle \widetilde{\mathbf{q}}_+^{(t)}, \widetilde{\mathbf{k}}_+^{(t)} \rangle) + \sum_{k=2}^M \exp(\langle \widetilde{\mathbf{q}}_+^{(t)}, \widetilde{\mathbf{k}}_{n,k}^{(t)} \rangle)} \right. \nonumber \\
    &\quad\quad\quad\quad\quad\quad\quad\quad\quad \left. + \sum_{j=2}^M \frac{\exp(\langle \widetilde{\mathbf{q}}_{n,j}^{(t)}, \widetilde{\mathbf{k}}_+^{(t)} \rangle)}{\exp(\langle \widetilde{\mathbf{q}}_{n,j}^{(t)}, \widetilde{\mathbf{k}}_+^{(t)} \rangle) + \sum_{k=2}^M \exp(\langle \widetilde{\mathbf{q}}_{n,j}^{(t)}, \widetilde{\mathbf{k}}_{n,k}^{(t)} \rangle)} \right)  
    \\&+\sum_{n \in S_+}\widetilde{\ell}_n'^{(t)}\sum_{i=2}^M\frac{-\eta\langle\widetilde{\boldsymbol{\mu}}_+,\widetilde{\boldsymbol{\xi}}_{n,i}^{(t)}\rangle}{NM}\bigg(\frac{\exp(\langle \widetilde{\mathbf{q}}_+^{(t)}, \widetilde{\mathbf{k}}_{n,i}^{(t)} \rangle)}{\exp(\langle \widetilde{\mathbf{q}}_+^{(t)}, \widetilde{\mathbf{k}}_+^{(t)} \rangle) + \sum_{k=2}^M \exp(\langle \widetilde{\mathbf{q}}_+^{(t)}, \widetilde{\mathbf{k}}_{n,k}^{(t)} \rangle)} 
    \\&\quad\quad\quad\quad\quad\quad\quad\quad\quad+\sum_{j=2}^M\frac{\exp(\langle \widetilde{\mathbf{q}}_{n,j}^{(t)}, \widetilde{\mathbf{k}}_{n,i}^{(t)} \rangle)}{\exp(\langle \widetilde{\mathbf{q}}_{n,j}^{(t)}, \widetilde{\mathbf{k}}_+^{(t)} \rangle) + \sum_{k=2}^M \exp(\langle \widetilde{\mathbf{q}}_{n,j}^{(t)}, \widetilde{\mathbf{k}}_{n,k}^{(t)} \rangle)}\bigg)
\end{align*}
This proves the update rule for $\gamma_{V,+}^{(t)}$. The proof for $\gamma_{V,-}^{(t)}$ and $\rho_{V,n,i}^{(t)}$ is similar to it.
\end{proof}

\begin{lemma}[Update Rule for QK, Lemma B.3 in \citet{jiang2024unveil}]
The dynamics of $\boldsymbol x^\top \boldsymbol{W}_Q \boldsymbol{W}_K \boldsymbol x$ can be characterized as follows:
\begin{equation}
\begin{aligned}
&\langle \boldsymbol{q}_+^{(t+1)}, \boldsymbol{k}_+^{(t+1)} \rangle - \langle \boldsymbol{q}_+^{(t)}, \boldsymbol{k}_+^{(t)} \rangle \\
&= \alpha_+^{(t)} \|\boldsymbol{k}_+^{(t)}\|_2^2 + \sum_{n \in S_+} \sum_{i=2}^{M} \alpha_{n,+,i}^{(t)} \langle \boldsymbol{k}_+^{(t)}, \boldsymbol{k}_{n,i}^{(t)} \rangle \\
&\quad + \beta_+^{(t)} \|\boldsymbol{q}_+^{(t)}\|_2^2 + \sum_{n \in S_+} \sum_{i=2}^{M} \beta_{n,+,i}^{(t)} \langle \boldsymbol{q}_+^{(t)}, \boldsymbol{q}_{n,i}^{(t)} \rangle \\
&\quad + \left( \alpha_+^{(t)} \boldsymbol{k}_+^{(t)} + \sum_{n \in S_+} \sum_{i=2}^{M} \alpha_{n,+,i}^{(t)} \boldsymbol{k}_{n,i}^{(t)} \right) \\
&\qquad \cdot \left( \beta_+^{(t)} \boldsymbol{q}_+^{(t)\top} + \sum_{n \in S_+} \sum_{i=2}^{M} \beta_{n,+,i}^{(t)} \boldsymbol{q}_{n,i}^{(t)\top} \right),
\end{aligned}
\end{equation}

\begin{equation}
\begin{aligned}
&\langle \boldsymbol{q}_-^{(t+1)}, \boldsymbol{k}_-^{(t+1)} \rangle - \langle \boldsymbol{q}_-^{(t)}, \boldsymbol{k}_-^{(t)} \rangle \\
&= \alpha_{-,-}^{(t)} \|\boldsymbol{k}_-^{(t)}\|_2^2 + \sum_{n \in S_-} \sum_{i=2}^{M} \alpha_{n,-,i}^{(t)} \langle \boldsymbol{k}_-^{(t)}, \boldsymbol{k}_{n,i}^{(t)} \rangle \\
&\quad + \beta_{-,-}^{(t)} \|\boldsymbol{q}_-^{(t)}\|_2^2 + \sum_{n \in S_-} \sum_{i=2}^{M} \beta_{n,-,i}^{(t)} \langle \boldsymbol{q}_-^{(t)}, \boldsymbol{q}_{n,i}^{(t)} \rangle \\
&\quad + \left( \alpha_{-,-}^{(t)} \boldsymbol{k}_-^{(t)} + \sum_{n \in S_-} \sum_{i=2}^{M} \alpha_{n,-,i}^{(t)} \boldsymbol{k}_{n,i}^{(t)} \right) \\
&\qquad \cdot \left( \beta_{-,-}^{(t)} \boldsymbol{q}_-^{(t)\top} + \sum_{n \in S_-} \sum_{i=2}^{M} \beta_{n,-,i}^{(t)} \boldsymbol{q}_{n,i}^{(t)\top} \right),
\end{aligned}
\end{equation}

\begin{equation}
\begin{aligned}
&\langle \boldsymbol{q}_{n,i}^{(t+1)}, \boldsymbol{k}_+^{(t+1)} \rangle - \langle \boldsymbol{q}_{n,i}^{(t)}, \boldsymbol{k}_+^{(t)} \rangle \\
&= \alpha_{n,i,+}^{(t)} \|\boldsymbol{k}_+^{(t)}\|_2^2 + \alpha_{n,i,-}^{(t)} \langle \boldsymbol{k}_+^{(t)}, \boldsymbol{k}_-^{(t)} \rangle + \sum_{n'=1}^{N} \sum_{l=2}^{M} \alpha_{n,i,n',l}^{(t)} \langle \boldsymbol{k}_+^{(t)}, \boldsymbol{k}_{n',l}^{(t)} \rangle \\
&\quad + \beta_{+,+}^{(t)} \langle \boldsymbol{q}_+^{(t)}, \boldsymbol{q}_{n,i}^{(t)} \rangle + \sum_{n' \in S_+} \sum_{l=2}^{M} \beta_{n',+,l}^{(t)} \langle \boldsymbol{q}_{n,i}^{(t)}, \boldsymbol{q}_{n',l}^{(t)} \rangle \\
&\quad + \left( \alpha_{n,i,+}^{(t)} \boldsymbol{k}_+^{(t)} + \alpha_{n,i,-}^{(t)} \boldsymbol{k}_-^{(t)} + \sum_{n'=1}^{N} \sum_{l=2}^{M} \alpha_{n,i,n',l}^{(t)} \boldsymbol{k}_{n',l}^{(t)} \right) \\
&\qquad \cdot \left( \beta_{+,+}^{(t)} \boldsymbol{q}_+^{(t)\top} + \sum_{n' \in S_+} \sum_{l=2}^{M} \beta_{n',+,l}^{(t)} \boldsymbol{q}_{n',l}^{(t)\top} \right),
\end{aligned}
\end{equation}

\begin{equation}
\begin{aligned}
&\langle \boldsymbol{q}_{n,i}^{(t+1)}, \boldsymbol{k}_-^{(t+1)} \rangle - \langle \boldsymbol{q}_{n,i}^{(t)}, \boldsymbol{k}_-^{(t)} \rangle \\
&= \alpha_{n,i,-}^{(t)} \|\boldsymbol{k}_-^{(t)}\|_2^2 + \alpha_{n,i,+}^{(t)} \langle \boldsymbol{k}_+^{(t)}, \boldsymbol{k}_-^{(t)} \rangle + \sum_{n'=1}^{N} \sum_{l=2}^{M} \alpha_{n,i,n',l}^{(t)} \langle \boldsymbol{k}_-^{(t)}, \boldsymbol{k}_{n',l}^{(t)} \rangle \\
&\quad + \beta_{-,-}^{(t)} \langle \boldsymbol{q}_-^{(t)}, \boldsymbol{q}_{n,i}^{(t)} \rangle + \sum_{n' \in S_-} \sum_{l=2}^{M} \beta_{n',-,l}^{(t)} \langle \boldsymbol{q}_{n,i}^{(t)}, \boldsymbol{q}_{n',l}^{(t)} \rangle \\
&\quad + \left( \alpha_{n,i,+}^{(t)} \boldsymbol{k}_+^{(t)} + \alpha_{n,i,-}^{(t)} \boldsymbol{k}_-^{(t)} + \sum_{n'=1}^{N} \sum_{l=2}^{M} \alpha_{n,i,n',l}^{(t)} \boldsymbol{k}_{n',l}^{(t)} \right) \\
&\qquad \cdot \left( \beta_{-,-}^{(t)} \boldsymbol{q}_-^{(t)\top} + \sum_{n' \in S_-} \sum_{l=2}^{M} \beta_{n',-,l}^{(t)} \boldsymbol{q}_{n',l}^{(t)\top} \right),
\end{aligned}
\end{equation}

\begin{equation}
\begin{aligned}
&\langle \boldsymbol{q}_+^{(t+1)}, \boldsymbol{k}_{n,j}^{(t+1)} \rangle - \langle \boldsymbol{q}_+^{(t)}, \boldsymbol{k}_{n,j}^{(t)} \rangle \\
&= \alpha_{+,+}^{(t)} \langle \boldsymbol{k}_+^{(t)}, \boldsymbol{k}_{n,j}^{(t)} \rangle + \sum_{n' \in S_+} \sum_{l=2}^{M} \alpha_{n',+,l}^{(t)} \langle \boldsymbol{k}_{n,j}^{(t)}, \boldsymbol{k}_{n',l}^{(t)} \rangle \\
&\quad + \beta_{n,j,+}^{(t)} \|\boldsymbol{q}_+^{(t)}\|_2^2 + \beta_{n,j,-}^{(t)} \langle \boldsymbol{q}_+^{(t)}, \boldsymbol{q}_-^{(t)} \rangle + \sum_{n'=1}^{N} \sum_{l=2}^{M} \beta_{n,j,n',l}^{(t)} \langle \boldsymbol{q}_+^{(t)}, \boldsymbol{q}_{n',l}^{(t)} \rangle \\
&\quad + \left( \alpha_{+,+}^{(t)} \boldsymbol{k}_+^{(t)} + \sum_{n' \in S_+} \sum_{l=2}^{M} \alpha_{n',+,l}^{(t)} \boldsymbol{k}_{n',l}^{(t)} \right) \\
&\qquad \cdot \left( \beta_{n,j,+}^{(t)} \boldsymbol{q}_+^{(t)\top} + \beta_{n,j,-}^{(t)} \boldsymbol{q}_-^{(t)\top} + \sum_{n'=1}^{N} \sum_{l=2}^{M} \beta_{n,j,n',l}^{(t)} \boldsymbol{q}_{n',l}^{(t)\top} \right),
\end{aligned}
\end{equation}

\begin{equation}
\begin{aligned}
&\langle \boldsymbol{q}_-^{(t+1)}, \boldsymbol{k}_{n,j}^{(t+1)} \rangle - \langle \boldsymbol{q}_-^{(t)}, \boldsymbol{k}_{n,j}^{(t)} \rangle \\
&= \alpha_{-,-}^{(t)} \langle \boldsymbol{k}_-^{(t)}, \boldsymbol{k}_{n,j}^{(t)} \rangle + \sum_{n' \in S_-} \sum_{l=2}^{M} \alpha_{n',-,l}^{(t)} \langle \boldsymbol{k}_{n,j}^{(t)}, \boldsymbol{k}_{n',l}^{(t)} \rangle \\
&\quad + \beta_{n,j,-}^{(t)} \|\boldsymbol{q}_-^{(t)}\|_2^2 + \beta_{n,j,+}^{(t)} \langle \boldsymbol{q}_+^{(t)}, \boldsymbol{q}_-^{(t)} \rangle + \sum_{n'=1}^{N} \sum_{l=2}^{M} \beta_{n,j,n',l}^{(t)} \langle \boldsymbol{q}_-^{(t)}, \boldsymbol{q}_{n',l}^{(t)} \rangle \\
&\quad + \left( \alpha_{-,-}^{(t)} \boldsymbol{k}_-^{(t)} + \sum_{n' \in S_-} \sum_{l=2}^{M} \alpha_{n',-,l}^{(t)} \boldsymbol{k}_{n',l}^{(t)} \right) \\
&\qquad \cdot \left( \beta_{n,j,+}^{(t)} \boldsymbol{q}_+^{(t)\top} + \beta_{n,j,-}^{(t)} \boldsymbol{q}_-^{(t)\top} + \sum_{n'=1}^{N} \sum_{l=2}^{M} \beta_{n,j,n',l}^{(t)} \boldsymbol{q}_{n',l}^{(t)\top} \right),
\end{aligned}
\end{equation}

\begin{equation}
\begin{aligned}
&\langle \boldsymbol{q}_{n,i}^{(t+1)}, \boldsymbol{k}_{n,j}^{(t+1)} \rangle - \langle \boldsymbol{q}_{n,i}^{(t)}, \boldsymbol{k}_{n,j}^{(t)} \rangle \\
&= \alpha_{n,i,+}^{(t)} \langle \boldsymbol{k}_+^{(t)}, \boldsymbol{k}_{n,j}^{(t)} \rangle + \alpha_{n,i,-}^{(t)} \langle \boldsymbol{k}_-^{(t)}, \boldsymbol{k}_{n,j}^{(t)} \rangle + \sum_{n'=1}^{N} \sum_{l=2}^{M} \alpha_{n,i,n',l}^{(t)} \langle \boldsymbol{k}_{n',l}^{(t)}, \boldsymbol{k}_{n,j}^{(t)} \rangle \\
&\quad + \beta_{n,j,+}^{(t)} \langle \boldsymbol{q}_+^{(t)}, \boldsymbol{q}_{n,i}^{(t)} \rangle + \beta_{n,j,-}^{(t)} \langle \boldsymbol{q}_-^{(t)}, \boldsymbol{q}_{n,i}^{(t)} \rangle + \sum_{n'=1}^{N} \sum_{l=2}^{M} \beta_{n,j,n',l}^{(t)} \langle \boldsymbol{q}_{n',l}^{(t)}, \boldsymbol{q}_{n,i}^{(t)} \rangle \\
&\quad + \left( \alpha_{n,i,+}^{(t)} \boldsymbol{k}_+^{(t)} + \alpha_{n,i,-}^{(t)} \boldsymbol{k}_-^{(t)} + \sum_{n'=1}^{N} \sum_{l=2}^{M} \alpha_{n,i,n',l}^{(t)} \boldsymbol{k}_{n',l}^{(t)} \right) \\
&\qquad \cdot \left( \beta_{n,j,+}^{(t)} \boldsymbol{q}_+^{(t)\top} + \beta_{n,j,-}^{(t)} \boldsymbol{q}_-^{(t)\top} + \sum_{n'=1}^{N} \sum_{l=2}^{M} \beta_{n,j,n',l}^{(t)} \boldsymbol{q}_{n',l}^{(t)\top} \right),
\end{aligned}
\end{equation}

 for  $i, j \in [M]\backslash\{1\}, n \in [N]$.
\end{lemma}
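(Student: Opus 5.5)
This is a purely algebraic identity, so I will prove it directly from the linear-combination form of the increments in Definition~\ref{def:gra_de}, with no dynamical estimates. For any pair of row vectors $\boldsymbol{q}^{(t)}$ (one of $\boldsymbol{q}_\pm^{(t)},\boldsymbol{q}_{n,i}^{(t)}$) and $\boldsymbol{k}^{(t)}$ (one of $\boldsymbol{k}_\pm^{(t)},\boldsymbol{k}_{n,j}^{(t)}$), write $\boldsymbol{q}^{(t+1)}=\boldsymbol{q}^{(t)}+\Delta\boldsymbol{q}^{(t)}$ and $\boldsymbol{k}^{(t+1)}=\boldsymbol{k}^{(t)}+\Delta\boldsymbol{k}^{(t)}$. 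Bilinearity then gives
\[
\langle \boldsymbol{q}^{(t+1)},\boldsymbol{k}^{(t+1)}\rangle-\langle \boldsymbol{q}^{(t)},\boldsymbol{k}^{(t)}\rangle=\langle \Delta\boldsymbol{q}^{(t)},\boldsymbol{k}^{(t)}\rangle+\langle \boldsymbol{q}^{(t)},\Delta\boldsymbol{k}^{(t)}\rangle+\langle\Delta\boldsymbol{q}^{(t)},\Delta\boldsymbol{k}^{(t)}\rangle .
\]
Each of the seven displayed identities should be exactly this three-term expansion.

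The first step is to justify that Definition~\ref{def:gra_de} holds, i.e.\ that the increments have the stated spans. Since the reference vectors $\widetilde{\boldsymbol{\mu}}_\pm,\widetilde{\boldsymbol{\xi}}_{n,i}$ are fixed in time, we have $\Delta\boldsymbol{q}_+^{(t)}=-\eta\,\widetilde{\boldsymbol{\mu}}_+^\top\nabla_{\mathbf{W}_Q}L$. Differentiating the softmax score $\mathbf{x}_l^\top\mathbf{W}_Q\mathbf{W}_K^\top\mathbf{x}_j$ gives
\[
\nabla_{\mathbf{W}_Q}L=\sum c_{l,j}\,\mathbf{x}_l\,\mathbf{x}_j^\top\mathbf{W}_K \quad\text{and}\quad \nabla_{\mathbf{W}_K}L=\sum c_{l,j}\,\mathbf{x}_j\,\mathbf{x}_l^\top\mathbf{W}_Q,
\]
with scalar coefficients $c_{l,j}$ built from $\ell'$, the softmax Jacobian and $V$. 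Left-multiplying by a reference vector therefore yields a combination of the row vectors $\mathbf{x}_j^\top\mathbf{W}_K$. These coincide with the $\boldsymbol{k}$'s when the gradient tokens are identified with the reference directions, which is the same identification the paper already uses for $V$ in Lemma~\ref{lem:update_v}. I will also use the orthogonality in Definition~\ref{def:data_gen} to see which keys can appear: $\boldsymbol{q}_+$ only picks up $\boldsymbol{k}_+$ and $\boldsymbol{k}_{n,i}$ with $n\in S_+$, while $\boldsymbol{q}_{n,i}$ picks up all of them. This is precisely where the index sets in Definition~\ref{def:gra_de} come from. The same reasoning with $\mathbf{W}_Q$ and $\mathbf{W}_K$ swapped handles the $\beta$'s.

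The second step substitutes these expansions into the three-term identity. For the $(+,+)$ case:
\begin{itemize}
\item $\langle\Delta\boldsymbol{q}_+,\boldsymbol{k}_+\rangle$ gives $\alpha_{+,+}\|\boldsymbol{k}_+\|_2^2+\sum\alpha_{n,+,i}\langle\boldsymbol{k}_+,\boldsymbol{k}_{n,i}\rangle$.
\item $\langle\boldsymbol{q}_+,\Delta\boldsymbol{k}_+\rangle$ gives $\beta_{+,+}\|\boldsymbol{q}_+\|_2^2+\sum\beta_{n,+,i}\langle\boldsymbol{q}_+,\boldsymbol{q}_{n,i}\rangle$.
\item The cross term is exactly the displayed product of the two sums.
\end{itemize}
The $(-,-)$ case is identical with the signs flipped. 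The mixed cases $\langle\boldsymbol{q}_{n,i},\boldsymbol{k}_\pm\rangle$, $\langle\boldsymbol{q}_\pm,\boldsymbol{k}_{n,j}\rangle$ and $\langle\boldsymbol{q}_{n,i},\boldsymbol{k}_{n,j}\rangle$ go the same way, pairing the appropriate $\Delta\boldsymbol{q}$ expansion (three groups of terms for noise queries) with the appropriate $\Delta\boldsymbol{k}$ expansion. In those cases I only need to track which index sets ($S_\pm$ versus all of $[N]$) appear.

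The only real obstacle is the bookkeeping in the first step, not the algebra. The actual gradients are computed at the time-dependent adversarial inputs $\widetilde{\mathbf{X}}^{(t)}_n$, so strictly the increments lie in the span of the $\widetilde{\boldsymbol{k}}^{(t)}$'s, not the fixed-reference $\boldsymbol{k}^{(t)}$'s. I would handle this as the paper does for $V$: treat the identity as holding for the reference choice $\widetilde{\boldsymbol{\mu}}=\widetilde{\boldsymbol{\mu}}^{(t)}$ and $\widetilde{\boldsymbol{\xi}}_{n,i}=\widetilde{\boldsymbol{\xi}}_{n,i}^{(t)}$, and absorb the mismatch into the coefficients. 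Once that is done, the rest follows the clean-data argument of Lemma B.3 in \citet{jiang2024unveil} verbatim.
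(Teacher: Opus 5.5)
Your second step is the whole proof, and it is the argument behind the cited Lemma~B.3 of \citet{jiang2024unveil}; the paper itself only cites that lemma. You write $\boldsymbol q^{(t+1)}=\boldsymbol q^{(t)}+\Delta\boldsymbol q^{(t)}$ and $\boldsymbol k^{(t+1)}=\boldsymbol k^{(t)}+\Delta\boldsymbol k^{(t)}$, expand to $\langle\Delta\boldsymbol q,\boldsymbol k\rangle+\langle\boldsymbol q,\Delta\boldsymbol k\rangle+\langle\Delta\boldsymbol q,\Delta\boldsymbol k\rangle$, and substitute Definition~\ref{def:gra_de}. All seven displays check out term by term, reading $\alpha_+^{(t)},\beta_+^{(t)}$ in the first as $\alpha_{+,+}^{(t)},\beta_{+,+}^{(t)}$. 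Because the coefficients in the statement are by definition those of Definition~\ref{def:gra_de}, nothing more is needed.

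Your first step, which tries to derive Definition~\ref{def:gra_de} from the adversarial gradient, should be dropped, because it asserts things that are false here.

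\begin{itemize}
\item \textbf{Orthogonality fails.} The orthogonality of Definition~\ref{def:data_gen} does not survive perturbation. The inner products $\langle\widetilde{\boldsymbol\mu}_+,\widetilde{\boldsymbol\mu}_-^{(t)}\rangle$ and $\langle\widetilde{\boldsymbol\mu}_+,\widetilde{\boldsymbol\xi}_{n,l}^{(t)}\rangle$ for $n\in S_-$ are generically nonzero; Lemma~\ref{lem:con_ineq} bounds such terms but does not make them vanish. So $\widetilde{\boldsymbol\mu}_+^\top\nabla_{\mathbf W_Q}L$ also has components along $\widetilde{\boldsymbol\mu}_-^{(t)\top}\mathbf W_K$ and along the $S_-$ noise keys. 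The $S_+$-indexed span in Definition~\ref{def:gra_de} excludes exactly these directions, so orthogonality cannot justify that restriction.
\item \textbf{The mismatch cannot be ``absorbed into the coefficients''.} The vector $(\widetilde{\mathbf x}_{n,j}^{(t)}-\widetilde{\mathbf x}_{n,j})^\top\mathbf W_K^{(t)}$ is generically outside the span of the $2+N(M-1)$ fixed-reference keys in $\mathbb R^{d_h}$, since $d_h\gg NM$.
\item \textbf{The analogy with $V$ does not transfer.} In Lemma~\ref{lem:update_v} one only evaluates the scalar $\widetilde{\boldsymbol\mu}_+^\top\Delta\mathbf W_V\boldsymbol w_O$. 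It never requires the increment to lie in a prescribed finite span.
\item \textbf{Step-dependent references do not rescue it.} Taking $\widetilde{\boldsymbol\mu}=\widetilde{\boldsymbol\mu}^{(t)}$ removes the span problem for one step. It still forces extra cross-class coefficients, on $\boldsymbol k_\mp$ and on $\boldsymbol k_{n,j}$ for all $n$. It also breaks the fixed-reference convention that the later telescoping sums over $s$ rely on.
\end{itemize}

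Either present the lemma purely as a consequence of Definition~\ref{def:gra_de}, which is what the paper does, or enlarge the index sets if you want a decomposition actually realized by the adversarial gradient. In the second case the identities hold in the same bilinear form, with additional terms.
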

\section{Concentration Inequalities}\label{sec:con_ineq}

In this section, we will give some concentration inequalities that show some important properties of the data and the ViT parameters at random initialization.

\begin{lemma}[Lemma B.1 in~\citet{cao2022benign}]\label{lemma:num_pos}
Suppose that $\delta > 0$ and $n \geq 8 \log(4/\delta)$. Then with probability at least $1 - \delta$,
\[
\frac{N}{4} \leq |\{n \in [N] : y_n = 1\}|, |\{n \in [N] : y_n = -1\}| \leq \frac{3N}{4}.
\]
\end{lemma}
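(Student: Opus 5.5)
The statement is the standard label-balance bound (Lemma B.1 of \citet{cao2022benign}): since each $y_n$ is Rademacher and independent, $|S_+| = \sum_{n=1}^N \mathbbm{1}(y_n=1)$ is a sum of $N$ i.i.d.\ Bernoulli$(1/2)$ variables with mean $N/2$. I will prove it with Hoeffding's inequality; the condition in the statement ``$n \ge 8\log(4/\delta)$'' should be read as $N \ge 8\log(4/\delta)$, which is the sample-size requirement guaranteed by Condition~\ref{ass:1}\ref{con:samplesize}.

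First, apply Hoeffding's inequality to the bounded summands in $[0,1]$:
\[
\mathbb{P}\Big(\big||S_+| - \tfrac{N}{2}\big| \ge \tfrac{N}{4}\Big) \le 2\exp\!\Big(-\tfrac{2 (N/4)^2}{N}\Big) = 2\exp(-N/8).
\]
Second, use $N \ge 8\log(4/\delta)$ to get $2\exp(-N/8) \le 2\cdot \delta/4 = \delta/2 \le \delta$. So with probability at least $1-\delta$,
\[
\tfrac{N}{4} \le |S_+| \le \tfrac{3N}{4}.
\]
Third, since $|S_-| = N - |S_+|$, the same interval holds for $|S_-|$ on the same event; no additional union bound is needed.

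There is no real obstacle here. The only care needed is the constant bookkeeping in the exponent, namely $2t^2/N$ with $t = N/4$, which gives $N/8$ and matches the assumed threshold $8\log(4/\delta)$. This lemma is then used later to ensure both classes contain $\Theta(N)$ samples when bounding the sums over $S_\pm$ in the update rules (Lemma~\ref{lem:update_v}).
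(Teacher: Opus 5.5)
Your proof is correct and follows the same Hoeffding argument as the cited Lemma B.1 of \citet{cao2022benign}, which the paper invokes without reproving: deviation $N/4$ around the mean $N/2$, with the exponent matched to the threshold $N \ge 8\log(4/\delta)$. The only difference is cosmetic. The cited proof bounds each class separately, each with failure probability $\delta/2$, and then takes a union bound. You instead apply the two-sided bound once and use $|S_-| = N - |S_+|$, which gives both classes, and the upper bound $3N/4$, on a single event.
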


\begin{lemma}[Initialization of V, Lemma C.2 in \citet{jiang2024unveil}]\label{lemma:in_V}
Suppose that $\delta > 0$. Then with probability at least $1 - \delta$,
\[
|V_\pm^{(0)}| \leq d_h^{-\frac{1}{4}}, \quad |V_{n,i}^{(0)}| \leq d_h^{-\frac{1}{4}}
\]
for $i \in [M]\backslash\{1\}, n \in [N]$.
\end{lemma}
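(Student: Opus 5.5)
The statement to prove is Lemma~\ref{lemma:in_V}: with probability at least $1-\delta$, $|V_\pm^{(0)}|\le d_h^{-1/4}$ and $|V_{n,i}^{(0)}|\le d_h^{-1/4}$ for all $n\in[N]$ and $i\in[M]\setminus\{1\}$. The plan is to condition on the (fixed) reference vectors and use Gaussianity of $\mathbf{W}_V^{(0)}$.

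For any fixed $\mathbf{x}\in\mathbb{R}^d$, the scalar $\mathbf{x}^\top\mathbf{W}_V^{(0)}\boldsymbol{w}_O=\sum_{a,b}x_a(\mathbf{W}_V^{(0)})_{ab}(\boldsymbol{w}_O)_b$ is a linear combination of i.i.d. $\mathcal{N}(0,\sigma_V^2)$ entries. It is therefore distributed as $\mathcal{N}(0,\sigma_V^2\|\mathbf{x}\|_2^2\|\boldsymbol{w}_O\|_2^2)$. We take $\mathbf{x}$ to be $\widetilde{\boldsymbol\mu}_\pm$ or $\widetilde{\boldsymbol\xi}_{n,i}$. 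Since these reference perturbations are chosen independently of the initialization, we may condition on the data and the reference vectors.

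Next we bound the norms of these vectors. We have $\|\widetilde{\boldsymbol\mu}_\pm\|_2\le\|\boldsymbol\mu\|_2+\tau\le 2\|\boldsymbol\mu\|_2$ in the regimes considered, where $\tau=O(\|\boldsymbol\mu\|_2)$. For the noise, Gaussian norm concentration gives $\|\boldsymbol\xi_{n,i}\|_2\le 2\sigma_p\sqrt d$ with probability $1-\delta/2$, using a union bound over $NM$ vectors and $d\gtrsim\log(NM/\delta)$. Hence $\|\widetilde{\boldsymbol\xi}_{n,i}\|_2\le 2\sigma_p\sqrt d+\tau$. Since $\tau\lesssim\|\boldsymbol\mu\|_2$, every reference vector has norm at most $C\max\{\|\boldsymbol\mu\|_2,\sigma_p\sqrt d\}$.

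We then apply the Gaussian tail bound $\mathbb{P}(|Z|>s)\le 2e^{-s^2/(2v)}$ and a union bound over the $2+N(M-1)$ scalars. With probability $1-\delta/2$, every value is at most
\[
C\sigma_V\|\boldsymbol{w}_O\|_2\max\{\|\boldsymbol\mu\|_2,\sigma_p\sqrt d\}\sqrt{2\log(8NM/\delta)}.
\]
Condition~\ref{ass:1}\ref{con:initialization} gives
\[
\sigma_V\le\widetilde{O}\big(\|\boldsymbol{w}_O\|_2^{-1}\min\{\|\boldsymbol\mu\|_2^{-1},(\sigma_p\sqrt d)^{-1}\}d_h^{-1/4}d^{-1/2}\big).
\]
Substituting, the product is at most $d_h^{-1/4}d^{-1/2}\cdot\mathrm{polylog}\cdot\max/\min$. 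If $\mathrm{SNR}$ is far from $1$, the ratio $\max/\min$ equals $\max\{\mathrm{SNR},\mathrm{SNR}^{-1}\}$, which is polynomially bounded in $d$ by Condition~\ref{ass:1}\ref{con:dim}. The spare factor $d^{-1/2}$ then absorbs both this ratio and the $\sqrt{\log(NM/\delta)}$ factor, using $N=\mathrm{poly}\log d$ and the hidden logarithmic factors in the $\widetilde O$. The result is $\le d_h^{-1/4}$.

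The main obstacle is exactly this last absorption step. The condition on $\sigma_V$ is stated with a $\min$ of the two scales, while the bound needs a $\max$. I would handle it by invoking $d^{-1/2}\cdot\max\{\mathrm{SNR},\mathrm{SNR}^{-1}\}\le 1$, which follows from $d_h\gtrsim\mathrm{SNR}^{\pm4}$ and $d\gtrsim d_h$. If that fails, the fallback is to read $\widetilde{O}$ as allowing the needed constant adjustment. The $\tau$ contribution to $\widetilde{\boldsymbol\xi}$ is harmless since $\tau\lesssim\|\boldsymbol\mu\|_2$.
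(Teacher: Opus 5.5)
Your proposal is correct and is essentially the standard argument behind the cited Lemma C.2 of \citet{jiang2024unveil}. Conditionally on the reference vectors, $\mathbf{x}^\top\mathbf{W}_V^{(0)}\boldsymbol{w}_O\sim\mathcal{N}(0,\sigma_V^2\|\mathbf{x}\|_2^2\|\boldsymbol{w}_O\|_2^2)$; you then use the norm bound $\|\widetilde{\mathbf{x}}\|_2\le\|\mathbf{x}\|_2+\tau$, a Gaussian tail bound with a union bound, and finally the $\sigma_V$ condition.

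The obstacle you flag is not real, because $\min\{\|\boldsymbol{\mu}\|_2^{-1},(\sigma_p\sqrt{d})^{-1}\}=(\max\{\|\boldsymbol{\mu}\|_2,\sigma_p\sqrt{d}\})^{-1}$. Substituting the condition therefore gives $\widetilde{O}(d_h^{-1/4}d^{-1/2})$ directly, with no SNR ratio to absorb; your detour is valid but unnecessary. The spare $d^{-1/2}$ is better understood as what makes the bound uniform over the ball: $|\widetilde{\mathbf{x}}^\top\mathbf{W}_V^{(0)}\boldsymbol{w}_O|\le|\mathbf{x}^\top\mathbf{W}_V^{(0)}\boldsymbol{w}_O|+\tau\|\mathbf{W}_V^{(0)}\boldsymbol{w}_O\|_2$ with $\|\mathbf{W}_V^{(0)}\boldsymbol{w}_O\|_2\lesssim\sigma_V\|\boldsymbol{w}_O\|_2\sqrt{d}$. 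That uniform version is what the paper tacitly relies on when it later substitutes weight-dependent adversarial examples for the fixed references.
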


\begin{lemma}[Initialization of QK,  Lemma C.3 in~\citet{jiang2024unveil}]\label{lem:in_qk}
Suppose that $\delta > 0$. Then with probability at least $1 - \delta$,
\\
\begin{align*}
    \frac{\|\boldsymbol{\mu}\|_2^2 \sigma_p^2 d_h}{2} &\leq \|\boldsymbol{q}_\pm^{(0)}\|_2^2 \leq \frac{3\|\boldsymbol{\mu}\|_2^2 \sigma_p^2 d_h}{2}, \\
    \frac{\sigma_p^2 \sigma_h^2 d d_h}{2} &\leq \|\boldsymbol{q}_{n,i}^{(0)}\|_2^2 \leq \frac{3\sigma_p^2 \sigma_h^2 d d_h}{2}, \\
    \frac{\|\boldsymbol{\mu}\|_2^2 \sigma_p^2 d_h}{2} &\leq \|\boldsymbol{k}_\pm^{(0)}\|_2^2 \leq \frac{3\|\boldsymbol{\mu}\|_2^2 \sigma_p^2 d_h}{2}, \\
    \frac{\sigma_p^2 \sigma_h^2 d d_h}{2} &\leq \|\boldsymbol{k}_{n,i}^{(0)}\|_2^2 \leq \frac{3\sigma_p^2 \sigma_h^2 d d_h}{2}, \\
    |\langle \boldsymbol{q}_+^{(0)}, \boldsymbol{q}_-^{(0)} \rangle| &\leq 2\|\boldsymbol{\mu}\|_2^2 \sigma_h^2 \cdot \sqrt{d_h \log(6N^2 M^2 / \delta)}, \\
    |\langle \boldsymbol{q}_\pm^{(0)}, \boldsymbol{q}_{n,i}^{(0)} \rangle| &\leq 2\|\boldsymbol{\mu}\|_2 \sigma_p \sigma_h^2 d^{\frac{3}{2}} \cdot \sqrt{d_h \log(6N^2 M^2 / \delta)}, \\
    |\langle \boldsymbol{k}_\pm^{(0)}, \boldsymbol{k}_\pm^{(0)} \rangle| &\leq 2\|\boldsymbol{\mu}\|_2^2 \sigma_h^2 \cdot \sqrt{d_h \log(6N^2 M^2 / \delta)}, \\
    |\langle \boldsymbol{q}_\pm^{(0)}, \boldsymbol{k}_\pm^{(0)} \rangle| &\leq 2\|\boldsymbol{\mu}\|_2^2 \sigma_h^2 \cdot \sqrt{d_h \log(6N^2 M^2 / \delta)}, \\
    |\langle \boldsymbol{q}_\pm^{(0)}, \boldsymbol{k}_\mp^{(0)} \rangle| &\leq 2\|\boldsymbol{\mu}\|_2^2 \sigma_h^2 \cdot \sqrt{d_h \log(6N^2 M^2 / \delta)}, \\
    |\langle \boldsymbol{q}_{n,i}^{(0)}, \boldsymbol{k}_\pm^{(0)} \rangle| &\leq 2\|\boldsymbol{\mu}\|_2 \sigma_p \sigma_h^2 d^{\frac{3}{2}} \cdot \sqrt{d_h \log(6N^2 M^2 / \delta)}, \\
    |\langle \boldsymbol{q}_{n,i}^{(0)}, \boldsymbol{q}_{n',j}^{(0)} \rangle| &\leq 2\sigma_p^2 \sigma_h^2 d \cdot \sqrt{d_h \log(6N^2 M^2 / \delta)}, \\
    |\langle \boldsymbol{k}_{n,i}^{(0)}, \boldsymbol{k}_{n',j}^{(0)} \rangle| &\leq 2\sigma_p^2 \sigma_h^2 d \cdot \sqrt{d_h \log(6N^2 M^2 / \delta)}, \\
    |\langle \boldsymbol{k}_\pm^{(0)}, \boldsymbol{k}_{n,i}^{(0)} \rangle| &\leq 2\|\boldsymbol{\mu}\|_2 \sigma_p \sigma_h^2 d^{\frac{3}{2}} \cdot \sqrt{d_h \log(6N^2 M^2 / \delta)}, \\
    |\langle \boldsymbol{q}_\pm^{(0)}, \boldsymbol{k}_{n,i}^{(0)} \rangle| &\leq 2\|\boldsymbol{\mu}\|_2 \sigma_p \sigma_h^2 d^{\frac{3}{2}} \cdot \sqrt{d_h \log(6N^2 M^2 / \delta)}, \\
    |\langle \boldsymbol{q}_{n,i}^{(0)}, \boldsymbol{k}_{n',j}^{(0)} \rangle| &\leq 2\sigma_p^2 \sigma_h^2 d \cdot \sqrt{d_h \log(6N^2 M^2 / \delta)}
\end{align*}

for $i, j \in [M] \setminus \{1\}   \text{ and } n, n' \in [N]$.
\end{lemma}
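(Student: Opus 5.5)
The statement to prove is Lemma~\ref{lem:in_qk}: the initialization bounds on the vectorized queries and keys. My plan is to condition on the data and treat the rows of $\mathbf{W}_Q^{(0)}$ and $\mathbf{W}_K^{(0)}$ as the only randomness. For a fixed vector $\mathbf{x}\in\mathbb{R}^d$, the vector $\mathbf{x}^\top\mathbf{W}_Q^{(0)}\in\mathbb{R}^{d_h}$ has i.i.d.\ $\mathcal{N}(0,\sigma_h^2\|\mathbf{x}\|_2^2)$ entries. So $\|\mathbf{x}^\top \mathbf{W}_Q^{(0)}\|_2^2/(\sigma_h^2\|\mathbf{x}\|_2^2)$ is $\chi^2_{d_h}$.

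\textbf{Norm bounds.} By Bernstein (Laurent--Massart), with probability $1-\delta'$ this $\chi^2_{d_h}$ variable lies in $[d_h/2,3d_h/2]$ once $d_h\gtrsim\log(1/\delta')$, which holds under Condition~\ref{ass:1}. This needs the token norms.
\begin{itemize}
\item For the signal tokens, $\|\widetilde{\boldsymbol{\mu}}_\pm\|_2=\|\boldsymbol{\mu}\|_2(1\pm O(\tau/\|\boldsymbol{\mu}\|_2))$. The perturbation only changes constants; I would absorb it by slightly widening the $1/2,3/2$ window, or state the bound for the clean reference as in \citet{jiang2024unveil}.
\item For the noise tokens, $\|\boldsymbol{\xi}_{n,i}\|_2^2=\sigma_p^2(d-2)(1\pm\widetilde O(d^{-1/2}))$ with high probability, by $\chi^2_{d-2}$ concentration.
\end{itemize}
This yields the $\sigma_p^2\sigma_h^2 dd_h$ scaling. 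The signal lines presumably should read $\|\boldsymbol{\mu}\|_2^2\sigma_h^2d_h$; I would prove that version and note the typo.

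\textbf{Inner products.} Consider $\langle\mathbf{x}^\top\mathbf{W}_Q,\mathbf{y}^\top\mathbf{W}_K\rangle=\sum_{j\le d_h}g_jh_j$. Here $g_j\sim\mathcal{N}(0,\sigma_h^2\|\mathbf{x}\|^2)$ and $h_j\sim\mathcal{N}(0,\sigma_h^2\|\mathbf{y}\|^2)$ are independent, so the summands are independent sub-exponential with scale $\sigma_h^2\|\mathbf{x}\|\|\mathbf{y}\|$. Bernstein gives $|\cdot|\le 2\sigma_h^2\|\mathbf{x}\|\|\mathbf{y}\|\sqrt{d_h\log(6N^2M^2/\delta)}$, since the $\sqrt{d_h\log}$ term dominates the $\log$ term when $d_h\gg\log$.

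For same-matrix products such as $\langle\mathbf{q}_+,\mathbf{q}_-\rangle$, I use the fact that $\mathbf{x}^\top\mathbf{W}_Q$ and $\mathbf{y}^\top\mathbf{W}_Q$ have coordinate covariance $\sigma_h^2\langle\mathbf{x},\mathbf{y}\rangle$. I decompose $\mathbf{y}=\frac{\langle\mathbf{x},\mathbf{y}\rangle}{\|\mathbf{x}\|^2}\mathbf{x}+\mathbf{y}_\perp$. Orthogonality of the clean signal and noise gives $\langle\boldsymbol{\mu}_\pm,\boldsymbol{\xi}\rangle=0$ and $\langle\boldsymbol{\mu}_+,\boldsymbol{\mu}_-\rangle=0$. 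Gaussian concentration gives $|\langle\boldsymbol{\xi}_{n,i},\boldsymbol{\xi}_{n',j}\rangle|\lesssim\sigma_p^2\sqrt{d\log}$. So the mean term is of lower order and the same Bernstein bound applies to the cross term.

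Plugging in the norms gives each listed bound. The stated $d^{3/2}$ in the mixed signal--noise bounds is looser than the $\|\boldsymbol{\mu}\|_2\sigma_p\sqrt d$ obtained, so it follows a fortiori.

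\textbf{Union bound.} There are at most $O(N^2M^2)$ pairs of tokens and $O(1)$ pair types. Allocating $\delta/(6N^2M^2)$ per event and union bounding with the data-norm events gives total probability $1-\delta$.

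\textbf{Main obstacle.} The one delicate point is the perturbed references $\widetilde{\boldsymbol{\mu}},\widetilde{\boldsymbol{\xi}}$. If they are chosen after seeing $\mathbf{W}^{(0)}$, independence fails. I would handle this by writing $\widetilde{\mathbf{x}}=\mathbf{x}+\Delta$ with $\|\Delta\|\le\tau$, and bounding $\|\Delta^\top\mathbf{W}_Q^{(0)}\|\le\tau\|\mathbf{W}_Q^{(0)}\|_{op}\lesssim\tau\sigma_h(\sqrt d+\sqrt{d_h})$. This error is then shown to be absorbed into the stated bounds using $\tau\le O(\|\boldsymbol{\mu}\|_2)$ and the dimension conditions. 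I expect this absorption, especially for the $\|\boldsymbol{\mu}\|_2^2$-scale signal--signal inner products, to be where the constants need care.
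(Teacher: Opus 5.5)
The paper gives no argument of its own here. It cites Lemma C.3 of \citet{jiang2024unveil}, which is a clean-data statement: no $\tau$ appears in any of the bounds. Your core argument is exactly that proof: $\chi^2_{d_h}$ concentration for $\|\mathbf{x}^\top\mathbf{W}^{(0)}\|_2^2$, a sub-exponential tail for $\sum_j g_jh_j$, splitting off the mean $d_h\sigma_h^2\langle\mathbf{x},\mathbf{y}\rangle$ for same-matrix products, and a union bound over $O(N^2M^2)$ pairs. It is correct for the references $\boldsymbol{\mu}_\pm,\boldsymbol{\xi}_{n,i}$. Besides the $\sigma_p\to\sigma_h$ typo you caught, two smaller points need fixing. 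First, the line $|\langle\boldsymbol{k}_\pm^{(0)},\boldsymbol{k}_\pm^{(0)}\rangle|\le 2\|\boldsymbol{\mu}\|_2^2\sigma_h^2\sqrt{d_h\log(6N^2M^2/\delta)}$ contradicts the norm lower bound if read literally, so it must mean $\langle\boldsymbol{k}_+^{(0)},\boldsymbol{k}_-^{(0)}\rangle$; your claim that plugging in the norms gives \emph{each} listed bound needs that correction. Second, to get the explicit constant $2$ you should use the exact moment generating function $(1-\lambda^2)^{-1/2}$ of a product of independent standard normals, not Bernstein with unspecified constants.

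The step that would fail is your treatment of the perturbed references. The $\tau$-error cannot be absorbed into the stated bounds, and the obstruction is polynomial factors, not constants.
\begin{itemize}
\item \textbf{Independent perturbations.} Even if $\widetilde{\boldsymbol{\mu}}_\pm$ are fixed independently of $\mathbf{W}^{(0)}$, the mean term no longer vanishes. Take $\widetilde{\boldsymbol{\mu}}_+=\boldsymbol{\mu}_+$ and $\widetilde{\boldsymbol{\mu}}_-=\boldsymbol{\mu}_-+\tau\boldsymbol{\mu}_+/\|\boldsymbol{\mu}\|_2$. Then $\langle\boldsymbol{q}_+^{(0)},\boldsymbol{q}_-^{(0)}\rangle\approx\tau\|\boldsymbol{\mu}\|_2\sigma_h^2d_h$. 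This violates the bound $2\|\boldsymbol{\mu}\|_2^2\sigma_h^2\sqrt{d_h\log(6N^2M^2/\delta)}$ once $\tau\gg\|\boldsymbol{\mu}\|_2\sqrt{\log(6N^2M^2/\delta)/d_h}$, which is far below the Case 1 scale $\|\boldsymbol{\mu}\|_2/\log d_h$.
\item \textbf{Perturbations depending on $\mathbf{W}^{(0)}$.} Your operator-norm error $\tau\|\mathbf{W}_Q^{(0)}\|_{\mathrm{op}}\|\boldsymbol{k}^{(0)}\|_2\approx\tau\sigma_h^2\|\boldsymbol{\mu}\|_2\sqrt{dd_h}$ is essentially attained. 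For $\langle\boldsymbol{q}_+^{(0)},\boldsymbol{k}_-^{(0)}\rangle$, perturb $\boldsymbol{\mu}_+$ along $\mathbf{W}_Q^{(0)}\mathbf{W}_K^{(0)\top}\boldsymbol{\mu}_-$. This exceeds the target unless $\tau\lesssim\|\boldsymbol{\mu}\|_2\sqrt{\log(6N^2M^2/\delta)/d}$, so the condition $d\gg d_h$ hurts rather than helps.
\item \textbf{Noise norms.} $\|\widetilde{\boldsymbol{\xi}}_{n,i}\|_2^2$ can be of order $\tau^2\gg\sigma_p^2d$ when $\operatorname{SNR}\gg\log d_h$, which Condition~\ref{ass:1} permits. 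That breaks the $\Theta(\sigma_p^2\sigma_h^2dd_h)$ noise-norm bound.
\end{itemize}
So prove the lemma for the clean references, as the citation does. A perturbed version would need explicit additive terms such as $d_h\sigma_h^2(\|\boldsymbol{\mu}\|_2\tau+\tau^2)$, and it cannot hold at the stated scales over the paper's range of $\tau$.
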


\begin{lemma}[Lemma B.2 in \citet{cao2022benign} and Lemma B.4 in \citet{kou2023benign}]\label{lem:con_ineq}
Suppose that $\delta > 0$ and $d = \Omega(\log(4NM/\delta))$. Then with probability at least $1 - \delta$
\begin{align*}
    \frac{\sigma_p^2 d}{2} &\leq \|\boldsymbol{\xi}_{n,i}\|_2^2 \leq \frac{3\sigma_p^2 d}{2}, \\
    |\langle \boldsymbol{\xi}_{n,i}, \boldsymbol{\xi}_{n',i'} \rangle| &\leq 2\sigma_p^2 \cdot \sqrt{d \log(4N^2M^2/\delta)},\\
    \frac{\sigma_p^2 d}{2}-2\sigma_p\tau\sqrt{2\log(4NM/\delta)}-\tau^2 &\leq \|\widetilde{\boldsymbol{\xi}}_{n,i}\|_2^2 \leq \frac{3\sigma_p^2 d}{2}+2\sigma_p\tau\sqrt{2\log(4NM/\delta)}+\tau^2,\\
    (\|\boldsymbol{\mu}\|_2-\tau)^2&\leq\langle\widetilde{\boldsymbol{\mu}}_{\pm},\widetilde{\boldsymbol{\mu}}'_{\pm}\rangle\leq (\|\boldsymbol{\mu}\|_2+\tau)^2,\\
    |\langle\widetilde{\boldsymbol{\mu}}_{\pm},\widetilde{\boldsymbol{\xi}}_{n,i}\rangle|&\leq \|\boldsymbol{\mu}\|_2\tau+\sigma_p\tau\sqrt{2\log(4NM/\delta)}+\tau^2\\
    |\langle\widetilde{\boldsymbol{\xi}}_{n,i},\widetilde{\boldsymbol{\xi}}_{n',i'}\rangle|&\leq 2\sigma_p^2 \cdot \sqrt{d \log(4N^2M^2/\delta)}+2\sigma_p\tau\sqrt{2\log(4NM/\delta)}+\tau^2   
\end{align*}
for $i, i' \in [M]\backslash\{1\}, n, n' \in [N], i \neq i' \text{ or } n \neq n'$.
\end{lemma}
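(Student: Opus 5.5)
The plan is to first establish the two unperturbed bounds by standard Gaussian concentration. Then each perturbed bound will follow by expanding the perturbed vectors as clean vector plus a bounded perturbation and controlling the cross terms. Throughout, I write $\widetilde{\boldsymbol{\xi}}_{n,i}=\boldsymbol{\xi}_{n,i}+\boldsymbol{\Delta}_{n,i}$ and $\widetilde{\boldsymbol{\mu}}_\pm=\boldsymbol{\mu}_\pm+\boldsymbol{\Delta}_\pm$ with $\|\boldsymbol{\Delta}\|_2\le\tau$. I treat the perturbation directions as the universal reference perturbations fixed in Section~\ref{sec:uprule}, so that they do not depend on the realized noise.

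\textbf{Step 1 (clean noise).} The noise $\boldsymbol{\xi}_{n,i}$ is Gaussian with covariance $\sigma_p^2\mathbf{P}$, where $\mathbf{P}$ is a projection of rank $d-2$. Hence $\|\boldsymbol{\xi}_{n,i}\|_2^2/\sigma_p^2$ is $\chi^2_{d-2}$, and Bernstein's inequality gives $|\|\boldsymbol{\xi}_{n,i}\|_2^2-\sigma_p^2(d-2)|\lesssim\sigma_p^2\sqrt{d\log(4NM/\delta)}$. Under $d=\Omega(\log(4NM/\delta))$ this is at most $\sigma_p^2 d/2$. For $(n,i)\neq(n',i')$, the vectors are independent, so conditionally on $\boldsymbol{\xi}_{n',i'}$ the product $\langle\boldsymbol{\xi}_{n,i},\boldsymbol{\xi}_{n',i'}\rangle$ is Gaussian with variance at most $\sigma_p^2\|\boldsymbol{\xi}_{n',i'}\|_2^2$. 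This gives the $2\sigma_p^2\sqrt{d\log(4N^2M^2/\delta)}$ bound. A union bound over the at most $N^2M^2$ pairs, splitting $\delta$, yields the first two lines, exactly as in \citet{cao2022benign,kou2023benign}.

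\textbf{Step 2 (noise against a fixed direction).} The key extra fact is that for any fixed $\mathbf{v}$ with $\|\mathbf{v}\|_2\le\tau$, the product $\langle\boldsymbol{\xi}_{n,i},\mathbf{v}\rangle$ is $\mathcal{N}(0,\sigma_p^2\|\mathbf{P}\mathbf{v}\|_2^2)$. Therefore $|\langle\boldsymbol{\xi}_{n,i},\mathbf{v}\rangle|\le\sigma_p\tau\sqrt{2\log(4NM/\delta)}$ simultaneously for all $n,i$ and for the finitely many reference perturbations, after a union bound. This step is where I expect the main obstacle. The bound is only of order $\sqrt{\log}$, rather than the worst-case $\sigma_p\sqrt d\,\tau$, if the perturbation is independent of the noise. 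I would therefore make explicit that the lemma is applied to the fixed reference perturbations. Any data-dependent adversarial perturbation would be handled elsewhere via the uniform bounds of Section~\ref{sec:5_1}.

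\textbf{Step 3 (expansions).} Expanding gives $\|\widetilde{\boldsymbol{\xi}}_{n,i}\|_2^2=\|\boldsymbol{\xi}_{n,i}\|_2^2+2\langle\boldsymbol{\xi}_{n,i},\boldsymbol{\Delta}_{n,i}\rangle+\|\boldsymbol{\Delta}_{n,i}\|_2^2$. Steps 1 and 2 then give the third line; the factor $2$ is absorbed into the $\sqrt{2\log}$ constant, and $\|\boldsymbol{\Delta}\|_2^2\in[0,\tau^2]$.

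For the signal, Cauchy–Schwarz gives $\langle\widetilde{\boldsymbol{\mu}}_\pm,\widetilde{\boldsymbol{\mu}}'_\pm\rangle\le(\|\boldsymbol{\mu}\|_2+\tau)^2$. It also gives the lower bound $\|\boldsymbol{\mu}\|_2^2-2\tau\|\boldsymbol{\mu}\|_2-\tau^2$, which equals $(\|\boldsymbol{\mu}\|_2-\tau)^2$ up to the harmless $2\tau^2$ term. This matches exactly when the two perturbations coincide, via $\|\boldsymbol{\mu}+\boldsymbol{\Delta}\|_2\ge\|\boldsymbol{\mu}\|_2-\tau$.

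For the cross term, use $\langle\boldsymbol{\mu}_\pm,\boldsymbol{\xi}_{n,i}\rangle=0$ by the covariance structure. This gives $\langle\widetilde{\boldsymbol{\mu}}_\pm,\widetilde{\boldsymbol{\xi}}_{n,i}\rangle=\langle\boldsymbol{\mu}_\pm,\boldsymbol{\Delta}_{n,i}\rangle+\langle\boldsymbol{\Delta}_\pm,\boldsymbol{\xi}_{n,i}\rangle+\langle\boldsymbol{\Delta}_\pm,\boldsymbol{\Delta}_{n,i}\rangle$. The three terms are bounded by $\|\boldsymbol{\mu}\|_2\tau$, by $\sigma_p\tau\sqrt{2\log(4NM/\delta)}$ (Step 2), and by $\tau^2$ respectively. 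The last line follows the same way: $\langle\widetilde{\boldsymbol{\xi}}_{n,i},\widetilde{\boldsymbol{\xi}}_{n',i'}\rangle$ splits into the clean inner product (Step 1), two noise-against-fixed-direction terms (Step 2), and $\langle\boldsymbol{\Delta},\boldsymbol{\Delta}'\rangle\le\tau^2$. A final union bound over all events gives probability at least $1-\delta$ after rescaling $\delta$.
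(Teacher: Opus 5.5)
The step that does not go through is line 4. You obtain $\langle\widetilde{\boldsymbol{\mu}}_\pm,\widetilde{\boldsymbol{\mu}}'_\pm\rangle\ge\|\boldsymbol{\mu}\|_2^2-2\tau\|\boldsymbol{\mu}\|_2-\tau^2$ and dismiss the missing $2\tau^2$ as harmless. The statement, however, asserts $(\|\boldsymbol{\mu}\|_2-\tau)^2$ for two different perturbations, and termwise Cauchy--Schwarz cannot reach it. Indeed, $\langle\boldsymbol{\mu},\boldsymbol{\Delta}\rangle=\langle\boldsymbol{\mu},\boldsymbol{\Delta}'\rangle=-\tau\|\boldsymbol{\mu}\|_2$ forces $\boldsymbol{\Delta}=\boldsymbol{\Delta}'=-\tau\boldsymbol{\mu}/\|\boldsymbol{\mu}\|_2$, and then $\langle\boldsymbol{\Delta},\boldsymbol{\Delta}'\rangle=+\tau^2$, not $-\tau^2$.

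Minimize over the second vector first instead. With $\mathbf{x}=\widetilde{\boldsymbol{\mu}}_\pm$ you have $\langle\mathbf{x},\widetilde{\boldsymbol{\mu}}'_\pm\rangle\ge\langle\mathbf{x},\boldsymbol{\mu}_\pm\rangle-\tau\|\mathbf{x}\|_2$ and $\langle\mathbf{x},\boldsymbol{\mu}_\pm\rangle=\tfrac12(\|\mathbf{x}\|_2^2+\|\boldsymbol{\mu}\|_2^2-\|\mathbf{x}-\boldsymbol{\mu}_\pm\|_2^2)$. Hence $\langle\mathbf{x},\widetilde{\boldsymbol{\mu}}'_\pm\rangle\ge\tfrac12(\|\mathbf{x}\|_2-\tau)^2+\tfrac12\|\boldsymbol{\mu}\|_2^2-\tau^2$. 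Since $\|\mathbf{x}\|_2\ge\|\boldsymbol{\mu}\|_2-\tau$, this is at least $(\|\boldsymbol{\mu}\|_2-\tau)^2$ whenever $\|\boldsymbol{\mu}\|_2-\tau\ge\tau$. The condition $\tau\le\|\boldsymbol{\mu}\|_2/2$ is genuinely needed. For $\tau=\|\boldsymbol{\mu}\|_2$, take $\widetilde{\boldsymbol{\mu}}=\boldsymbol{\mu}_++\tau\mathbf{e}$ with $\mathbf{e}\perp\boldsymbol{\mu}_+$ a unit vector, and $\widetilde{\boldsymbol{\mu}}'=\boldsymbol{\mu}_+-\tau\widetilde{\boldsymbol{\mu}}/\|\widetilde{\boldsymbol{\mu}}\|_2$; this gives $(1-\sqrt2)\|\boldsymbol{\mu}\|_2^2<0$. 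So line 4 holds in Case~1 but not over the whole range $\tau\le O(\|\boldsymbol{\mu}\|_2)$ of Case~2, and you should prove it under that explicit restriction.

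Otherwise your route is the right one, and it is what the paper implicitly does. The paper's only justification is a citation of the clean-noise lemmas of Cao et al.\ and Kou et al., and the $\sigma_p\tau\sqrt{2\log(4NM/\delta)}$ terms can only come from your expansion plus a Gaussian tail for a fixed direction. You are right that this requires the displacement $\boldsymbol{\Delta}$ to be independent of the noise, a hypothesis the paper never states. For $\boldsymbol{\Delta}=\tau\boldsymbol{\xi}_{n,i}/\|\boldsymbol{\xi}_{n,i}\|_2$ the cross term is $\tau\|\boldsymbol{\xi}_{n,i}\|_2\approx\tau\sigma_p\sqrt d$.

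However, your deferral of data-dependent perturbations to uniform bounds in Section~\ref{sec:5_1} has nothing to land on. That section and the proof of Lemma~\ref{lem:up_v} apply this very lemma to the iteration-dependent $\widetilde{\boldsymbol{\xi}}_{n,i}^{(t)}$. There the correct uniform bound is $\tau\|\boldsymbol{\xi}_{n,i}\|_2\le\sigma_p\tau\sqrt{3d/2}$, which the paper even writes once before reverting to the $\sqrt{\log}$ form. State the independence as a hypothesis of the lemma rather than suggesting it is covered elsewhere.

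Finally, a bookkeeping point: line 6 needs $\langle\boldsymbol{\xi}_{n,i},\boldsymbol{\Delta}_{n',i'}\rangle$ for all $\Theta(N^2M^2)$ pairs. Your union bound therefore yields $\log(cN^2M^2/\delta)$ in place of $\log(4NM/\delta)$, so the stated constants are recovered only up to a constant factor.
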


\section{Benign Overfitting in Case 1}
In this section, we consider the benign overfitting regime under the condition that $N \cdot \text{SNR}^2 = \Omega(1)$ and $\tau \leq O(\tfrac{\|\boldsymbol{\mu}\|_2}{\log d_h})$. We analyze the dynamics of $V_\pm$, $V_{n,i}$, the inner product $\boldsymbol q_\pm$, $\boldsymbol{q}_{n,i}$, and $\boldsymbol k_\pm$, $\boldsymbol{k}_{n,i}$ during adversarial training, and further give the upper bound for clean test error and robust test error. The proofs in this section are based on the results in Section~\ref{sec:con_ineq}, which hold with high probability.
\subsection{Stage i}\label{sec:s1}
In Stage I, $V_\pm^{(t)}$, $V_{n,i}^{(t)}$ begin to pull apart until $|V_\pm^{(t)}|$ is sufficiently larger than $|V_{n,i}^{(t)}|$. At the same time, the inner products of $q$ and $k$ maintain their magnitude.

\begin{lemma}[Gradient of Loss]\label{lem:gradient_loss}
As long as $\max\{|V_+^{(t)}|, |V_-^{(t)}|, |V_{n,i}^{(t)}|\} = o(1)$, we have $-\ell'(y_n f(\widetilde{\mathbf{X}}_n, \theta(t)))$ remains $1/2 \pm o(1)$.
\end{lemma}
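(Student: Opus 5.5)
The plan is to reduce the claim to a bound on the network output evaluated at the adversarial input, and then use continuity of the logistic derivative at zero. Since $\ell(z)=\log(1+e^{-z})$, we have $-\ell'(z)=1/(1+e^{z})$. This function is smooth, equals $1/2$ at $z=0$, and satisfies $|\ell'(z)-\ell'(0)|\le |z|/4$. It therefore suffices to show that $|y_n f(\widetilde{\mathbf{X}}_n,\theta(t))|=o(1)$ for every $n$, whenever $\max\{|V_+^{(t)}|,|V_-^{(t)}|,|V_{n,i}^{(t)}|\}=o(1)$.

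To bound the output, I expand $f$ using the definition in Eq.~(\ref{eq:2trans}). Evaluated at the adversarial input $\widetilde{\mathbf{X}}_n=\widetilde{\mathbf{X}}_n^{(t)}$, it reads
\[
f(\widetilde{\mathbf{X}}_n,\theta(t))=\frac{1}{M}\sum_{l=1}^M\sum_{j=1}^M \varphi_{n,l,j}^{(t)}\,\widetilde{\mathbf{x}}_{n,j}^{(t)\top}\mathbf{W}_V^{(t)}\boldsymbol{w}_O .
\]
The terms $\widetilde{\mathbf{x}}_{n,j}^{(t)\top}\mathbf{W}_V^{(t)}\boldsymbol{w}_O$ are exactly the time-dependent scalarized values $\widetilde V_\pm^{(t)}$ and $\widetilde V_{n,j}^{(t)}$ from Definition~\ref{def:sca_v}. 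Each row of softmax weights is nonnegative and sums to one. Hence $|f|\le \max_j|\widetilde{\mathbf{x}}_{n,j}^{(t)\top}\mathbf{W}_V^{(t)}\boldsymbol{w}_O|$, and no control of the attention weights is needed at all. This is convenient, because the QK dynamics play no role in this lemma.

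The one genuine step is to pass from the reference quantities $V^{(t)}$, which are defined with the fixed perturbation $\widetilde{\boldsymbol\mu}$, to the iteration-dependent adversarial ones $\widetilde V^{(t)}$. I would handle this in the way sketched in Section~\ref{sec:5_1}. The reference point $\widetilde{\mathbf{X}}$ in the ball is arbitrary. The $o(1)$ hypothesis, together with the cumulative-sum bound $|V_\pm^{(t)}|\le|V_\pm^{(0)}|+\sum_{s<t}|\gamma_{V,\pm}^{(s+1)}-\gamma_{V,\pm}^{(s)}|\|\boldsymbol w_O\|_2^2$, is uniform over the choice of reference in $B(\cdot,\tau)$. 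The reason is that the update of Lemma~\ref{lem:update_v} depends on the reference vector only through inner products, and those are bounded uniformly by Lemma~\ref{lem:con_ineq}: by $(\|\boldsymbol\mu\|_2+\tau)^2$, by $\|\boldsymbol\mu\|_2\tau+\sigma_p\tau\sqrt{2\log(4NM/\delta)}+\tau^2$, and so on. The initial values are controlled by Lemma~\ref{lemma:in_V}, which gives $d_h^{-1/4}=o(1)$. One can therefore instantiate the reference at $\widetilde{\boldsymbol\mu}=\widetilde{\boldsymbol\mu}^{(t)}_\pm$ and $\widetilde{\boldsymbol\xi}_{n,i}=\widetilde{\boldsymbol\xi}^{(t)}_{n,i}$, and obtain $|\widetilde V^{(t)}|=o(1)$.

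Alternatively, one can bound the difference directly. This gives $|\widetilde V^{(t)}-V^{(t)}|\le 2\tau\|\mathbf{W}_V^{(t)}\boldsymbol w_O\|_2$, and it remains to show that this is $o(1)$. This should follow while the $V$'s are $o(1)$: at initialization $\sigma_V$ is small by Condition~\ref{ass:1}\ref{con:initialization}, and the accumulated update lies in the span of the (perturbed) tokens with coefficients of the same order as the changes in $\gamma$ and $\rho$. Since $\tau\le O(\|\boldsymbol\mu\|_2)$, this gives $\tau\|\mathbf{W}_V\boldsymbol w_O\|_2\lesssim \max|V|+o(1)$.

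Combining the pieces, $|y_nf(\widetilde{\mathbf{X}}_n,\theta(t))|=o(1)$, and so $-\ell'=1/(1+e^{o(1)})=1/2\pm o(1)$. I expect the main obstacle to be the uniformity step, namely justifying that the $o(1)$ hypothesis transfers from the fixed reference perturbation to the actual adversarial perturbation at step $t$. I would address it first by the reference-instantiation argument and fall back on the $\tau\|\mathbf{W}_V\boldsymbol w_O\|_2$ bound if needed.
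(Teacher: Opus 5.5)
Your proposal is correct and follows the paper's proof. The paper likewise writes $-\ell'=1/(1+e^{f})$ (taking $y_n=1$ without loss of generality) and sandwiches $f(\widetilde{\mathbf X}_n,\theta(t))$ between $\pm\max\{|V_+^{(t)}|,|V_-^{(t)}|,|V_{n,i}^{(t)}|\}$, since the output is an average of convex combinations of the value scalars; it then concludes $1/(1+e^{\pm o(1)})=1/2\pm o(1)$, silently treating the values at the adversarial tokens as the reference $V$'s, which is exactly your reference-instantiation route. One caution about your fallback: $\tau\|\mathbf W_V^{(t)}\boldsymbol w_O\|_2\lesssim\max|V|+o(1)$ does not follow from the $o(1)$ hypothesis alone, because the noise directions contribute roughly $\sqrt{NM}\,\tau(\sigma_p\sqrt d)^{-1}\max_{n,i}|V_{n,i}^{(t)}|$ (an extra factor of up to $\sqrt{NM}\,\operatorname{SNR}$), so that route needs a direct bound on the accumulated update using $t\le T_2$ together with $N\cdot\operatorname{SNR}^2=\Omega(1)$.
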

\begin{proof}
    
 Note that $\ell(z) = \log(1 + \exp(-z))$ and $-\ell' = \exp(-z)/(1 + \exp(-z))$, without loss of generality, we assume $y_n = 1$, we have
\begin{align*}
    -\ell'(f(\widetilde{\mathbf{X}}_n, \theta(t))) &= \frac{1}{1 + \exp\left(\frac{1}{M} \sum_{l=1}^M \varphi(\widetilde{\mathbf{x}}_{n,l}^\top \mathbf{W}_Q^{(t)} \mathbf{W}_K^{(t)\top} \widetilde{\mathbf{X}}_n) \widetilde{\mathbf{X}}_n^\top \mathbf{W}_V^{(t)}\boldsymbol{w}_O\right)}.
\end{align*}
Note that
\begin{align*}
    -\max\{|V_+^{(t)}|, |V_-^{(t)}|, |V_{n,i}^{(t)}|\} \leq \frac{1}{M} \sum_{l=1}^M \varphi(\widetilde{\mathbf{x}}_{n,l}^\top \mathbf{W}_Q^{(t)\top} \widetilde{\mathbf{X}}_n^\top) \widetilde{\mathbf{X}}_n \mathbf{W}_V^{(t)}\boldsymbol{w}_O \leq \max\{|V_+^{(t)}|, |V_-^{(t)}|, |V_{n,i}^{(t)}|\}.
\end{align*}
Then we have
\begin{align*}
    -\ell'(f(\mathbf{\widetilde{X}}_n, \theta(t))) &\geq \frac{1}{1 + \exp(0 + o(1))} \geq \frac{1}{2 + o(1)} \geq \frac{1}{2} - o(1), \\
    -\ell'(f(\mathbf{\widetilde{X}}_n, \theta(t))) &\leq \frac{\exp(0 + o(1))}{1 + \exp(0 + o(1))} \leq \frac{1 + o(1)}{1 + 1 + o(1)} \leq \frac{1}{2} + o(1).
\end{align*}
\end{proof}

\begin{lemma}[Bound of Attention]\label{lem:bound_att}
As long as $|\langle \mathbf{q}_\pm^{(t)}, \mathbf{k}_\pm^{(t)} \rangle|, |\langle \mathbf{q}_{n,i}^{(t)}, \mathbf{k}_\pm^{(t)} \rangle|, |\langle \mathbf{q}_\pm^{(t)}, \mathbf{k}_{n,j}^{(t)} \rangle|, |\langle \mathbf{q}_{n,i}^{(t)}, \mathbf{k}_{n,j}^{(t)} \rangle| = o(1)$, we have
\begin{align*}
    \frac{1}{M} - o(1) &\leq \text{softmax}(\langle \mathbf{q}_\pm^{(t)}, \mathbf{k}_\pm^{(t)} \rangle) \leq \frac{1}{M} + o(1), \\
    \frac{1}{M} - o(1) &\leq \text{softmax}(\langle \mathbf{q}_{n,i}^{(t)}, \mathbf{k}_\pm^{(t)} \rangle) \leq \frac{1}{M} + o(1), \\
    \frac{1}{M} - o(1) &\leq \text{softmax}(\langle \mathbf{q}_\pm^{(t)}, \mathbf{k}_{n,j}^{(t)} \rangle) \leq \frac{1}{M} + o(1), \\
    \frac{1}{M} - o(1) &\leq \text{softmax}(\langle \mathbf{q}_{n,i}^{(t)}, \mathbf{k}_{n,j}^{(t)} \rangle) \leq \frac{1}{M} + o(1).
\end{align*}
\end{lemma}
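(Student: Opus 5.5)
Because softmax is invariant to adding a common constant to all logits, I would write each attention weight in ratio form and bound it directly. Take the signal-query case first, say $\text{softmax}(\langle \mathbf{q}_+^{(t)}, \mathbf{k}_+^{(t)} \rangle)$. I would express it as
\[
\frac{1}{1+\sum_{k=2}^M \exp\bigl(\langle \mathbf{q}_+^{(t)}, \mathbf{k}_{n,k}^{(t)} \rangle - \langle \mathbf{q}_+^{(t)}, \mathbf{k}_+^{(t)} \rangle\bigr)}
= \frac{1}{1+\sum_{k=2}^M \exp\bigl(-\Lambda_{n,+,k}^{(t)}\bigr)},
\]
using the notation $\Lambda$ from the notation table. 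By hypothesis every logit is $o(1)$ in absolute value, so each difference satisfies $|\Lambda_{n,+,k}^{(t)}| \le 2\cdot o(1) = o(1)$.

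Next I would apply elementary exponential bounds. Let $\epsilon_0 := \max$ of the absolute values of all the relevant logits, so $\epsilon_0 = o(1)$. Each term $\exp(-\Lambda)$ lies in $[e^{-2\epsilon_0}, e^{2\epsilon_0}] = [1-O(\epsilon_0),\,1+O(\epsilon_0)]$. Summing over $k=2,\dots,M$, the denominator lies in $[M - (M-1)O(\epsilon_0),\; M + (M-1)O(\epsilon_0)]$. Condition~\ref{ass:1}\ref{con:num_token} gives $M=\Theta(1)$, so the factor $M-1$ is absorbed into the $O(\cdot)$. Therefore the weight equals $\frac{1}{M(1\pm O(\epsilon_0))} = \frac{1}{M} \pm O(\epsilon_0) = \frac{1}{M}\pm o(1)$.

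The same computation handles the other three cases. For $\text{softmax}(\langle \mathbf{q}_{n,i}^{(t)}, \mathbf{k}_\pm^{(t)} \rangle)$, the numerator uses the signal key and the denominator sums over the signal key and the noise keys of sample $n$. For $\text{softmax}(\langle \mathbf{q}_\pm^{(t)}, \mathbf{k}_{n,j}^{(t)} \rangle)$ and $\text{softmax}(\langle \mathbf{q}_{n,i}^{(t)}, \mathbf{k}_{n,j}^{(t)} \rangle)$, the numerator uses a noise key. In every case the hypotheses control all logits appearing in both numerator and denominator, namely the $M$ logits of a single query row, so the argument goes through verbatim with $\Lambda_{n,i,\pm,j}^{(t)}$ in place of $\Lambda_{n,\pm,j}^{(t)}$.

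There is no substantive obstacle. The only point needing care is that the $o(1)$ hypothesis must cover every logit in the row, not just the numerator's. The listed hypotheses do include all four types ($\mathbf{q}_\pm$ or $\mathbf{q}_{n,i}$ paired with $\mathbf{k}_\pm$ or $\mathbf{k}_{n,j}$). The lemma is stated for the fixed-reference $\mathbf{q},\mathbf{k}$, and the same argument would apply unchanged to the time-dependent $\widetilde{\mathbf{q}},\widetilde{\mathbf{k}}$ whenever their logits are $o(1)$. The other point is that $M=\Theta(1)$ is needed so that summing $M-1$ error terms does not destroy the $o(1)$ rate.
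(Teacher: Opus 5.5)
Your argument is correct and is essentially the paper's. Both use that every logit in the query row is $o(1)$, so each exponential is $1+o(1)$, and then sandwich the weight; the paper bounds the numerator and each denominator term by $\exp(\pm o(1))$ directly, while you first subtract the numerator logit, which is the same computation. One small correction to your closing remark: $M=\Theta(1)$ is not actually needed here, because the leading $1$ in your denominator also lies in $[e^{-2\epsilon_0},e^{2\epsilon_0}]$, so the weight lies in $[e^{-2\epsilon_0}/M,\,e^{2\epsilon_0}/M]$ and the error is $O(\epsilon_0)/M$ uniformly in $M$.
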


\begin{proof}
    
 It is clear that $\exp(o(1)) = 1 + o(1)$. Therefore, as long as $|\langle \mathbf{q}_\pm^{(t)}, \mathbf{k}_\pm^{(t)} \rangle| = o(1)$, we have
\begin{align*}
    \frac{1}{M} - o(1) &= \frac{1}{1 + (M-1) + (M-1)o(1)} = \frac{1}{1 + (M-1)\exp(o(1))} = \\
    &= \frac{\exp(-o(1))}{\exp(-o(1)) + (M-1)\exp(o(1))} \leq \text{softmax}(\langle \mathbf{q}_\pm^{(t)}, \mathbf{k}_\pm^{(t)} \rangle) \leq \frac{\exp(o(1))}{\exp(o(1)) + (M-1)\exp(-o(1))} \\
    &= \frac{\exp(o(1))}{\exp(o(1)) + (M-1)} = \frac{1 + o(1)}{1 + o(1) + (M-1)} = \frac{1}{M} + o(1)
\end{align*}
Similarly, we have
\begin{align*}
    \frac{1}{M} - o(1) &\leq \text{softmax}(\langle \mathbf{q}_{n,i}^{(t)}, \mathbf{k}_\pm^{(t)} \rangle) \leq \frac{1}{M} + o(1), \\
    \frac{1}{M} - o(1) &\leq \text{softmax}(\langle \mathbf{q}_\pm^{(t)}, \mathbf{k}_{n,j}^{(t)} \rangle) \leq \frac{1}{M} + o(1), \\
    \frac{1}{M} - o(1) &\leq \text{softmax}(\langle \mathbf{q}_{n,i}^{(t)}, \mathbf{k}_{n,j}^{(t)} \rangle) \leq \frac{1}{M} + o(1).
\end{align*}
\end{proof}

\begin{lemma}[Upper bound of V]\label{lem:up_v}
Let $T_0 = \mathcal{O}\left(\frac{1}{\eta d_h^{\frac{1}{4}} (\|\boldsymbol{\mu}\|_2+\tau)^2 \|\boldsymbol{w}_O\|_2^2}\right)$. Then under the same conditions as Theorem~\ref{thm:main_thm} we have
\[
|V_+^{(t)}|, |V_-^{(t)}|, |V_{n,i}^{(t)}| = \mathcal{O}(d_h^{-\frac{1}{4}})
\]
for $t \in [0, T_0]$.
\end{lemma}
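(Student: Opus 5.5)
The plan is an induction on $t\in[0,T_0]$ that combines the initialization bound with a crude per-step bound on the increments of the scalarized coefficients. By Lemma~\ref{lemma:in_V}, $|V_\pm^{(0)}|,|V_{n,i}^{(0)}|\le d_h^{-1/4}$. By Definition~\ref{def:sca_v},
\[
V_+^{(t)} = V_+^{(0)} + \gamma_{V,+}^{(t)}\|\boldsymbol{w}_O\|_2^2 ,
\]
and similarly for $V_-^{(t)}$ and $V_{n,i}^{(t)}$. It therefore suffices to show
\[
|\gamma_{V,\pm}^{(t)}|\,\|\boldsymbol{w}_O\|_2^2,\quad |\rho_{V,n,i}^{(t)}|\,\|\boldsymbol{w}_O\|_2^2 \;\le\; t\cdot O\big(\eta(\|\boldsymbol{\mu}\|_2+\tau)^2\|\boldsymbol{w}_O\|_2^2\big),
\]
which is $O(d_h^{-1/4})$ at $t\le T_0$ by the choice of $T_0$. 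As in Section~\ref{sec:5_1}, we write $|V_+^{(t)}|\le |V_+^{(0)}|+\sum_{s<t}|\gamma_{V,+}^{(s+1)}-\gamma_{V,+}^{(s)}|\,\|\boldsymbol{w}_O\|_2^2$. Since the increment bounds below are uniform over the reference $\widetilde{\boldsymbol{\mu}}\in B(\boldsymbol{\mu},\tau)$, the same argument covers the time-dependent $\widetilde V^{(t)}$ as well.

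For the signal coefficient, we bound each term in the update rule of Lemma~\ref{lem:update_v}. Every softmax factor lies in $[0,1]$, so each bracket is at most $M$. We also have $|\widetilde\ell_n'^{(t)}|\le 1$ trivially, and $|\widetilde\ell_n'^{(t)}|=1/2\pm o(1)$ under the induction hypothesis by Lemma~\ref{lem:gradient_loss}. By Lemma~\ref{lem:con_ineq}, $\langle\widetilde{\boldsymbol{\mu}}_+,\widetilde{\boldsymbol{\mu}}_+^{(t)}\rangle\le(\|\boldsymbol{\mu}\|_2+\tau)^2$. The cross terms satisfy
\[
|\langle\widetilde{\boldsymbol{\mu}}_+,\widetilde{\boldsymbol{\xi}}^{(t)}_{n,i}\rangle|\le \|\boldsymbol{\mu}\|_2\tau+\sigma_p\tau\sqrt{2\log(4NM/\delta)}+\tau^2 .
\]
Under Case~1, $\tau\le\|\boldsymbol{\mu}\|_2/\log d_h$, and we need $\sigma_p\sqrt{\log}\lesssim\|\boldsymbol{\mu}\|_2$ up to logs. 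I would use that $N\cdot\mathrm{SNR}^2=\Omega(1)$ together with $d$ large makes $\sigma_p\sqrt{\log(4NM/\delta)}\ll\sigma_p\sqrt d\lesssim\sqrt N\|\boldsymbol{\mu}\|_2$. If a $\sqrt N$ factor survives, it is absorbed because that term carries an extra $\tau$. The cross terms are then $O((\|\boldsymbol{\mu}\|_2+\tau)^2)$. Summing over $n\in S_+$ with the prefactor $1/(NM)$ gives
\[
|\gamma_{V,+}^{(s+1)}-\gamma_{V,+}^{(s)}|\le O\big(\eta(\|\boldsymbol{\mu}\|_2+\tau)^2\big),
\]
and the same holds for $\gamma_{V,-}$.

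The main obstacle is the noise coefficient $\rho_{V,n,i}$. Its update contains the term $\langle\widetilde{\boldsymbol{\xi}}_{n,i},\widetilde{\boldsymbol{\xi}}_{n,i}^{(t)}\rangle\approx\sigma_p^2 d$ from the sample's own noise, which is only weighted by $1/(NM)$ and not by $(\|\boldsymbol{\mu}\|_2+\tau)^2$. The remaining pieces are small:
\begin{itemize}
\item the cross-sample noise terms are $\widetilde O(\sigma_p^2\sqrt d)$ each, so after summing over $n'$ and dividing by $N$ they contribute at most $\widetilde O(\sigma_p^2\sqrt d)$;
\item the signal-noise terms are again bounded via Lemma~\ref{lem:con_ineq}.
\end{itemize}
The per-step increment of $\rho$ is therefore $O\big(\eta(\sigma_p^2 d/N+\widetilde O(\sigma_p^2\sqrt d)+(\|\boldsymbol{\mu}\|_2+\tau)^2)\big)$. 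I would invoke $N\cdot\mathrm{SNR}^2=\Omega(1)$, i.e.\ $\sigma_p^2 d/N=O(\|\boldsymbol{\mu}\|_2^2)$, and $d$ large to reduce this to $O(\eta(\|\boldsymbol{\mu}\|_2+\tau)^2)$.

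If the cross-noise sum proves too large, the fallback is to use the learning-rate condition in Condition~\ref{ass:1}, which gives $\eta\le\widetilde O((\sigma_p^2 d)^{-1}d_h^{-1/2})$. Together with the definition of $T_0$ (adjusting constants), this keeps the accumulated noise growth at $O(d_h^{-1/4})$. Multiplying by $t\le T_0$ and by $\|\boldsymbol{w}_O\|_2^2$ closes the induction. The $o(1)$ hypothesis of Lemma~\ref{lem:gradient_loss} holds throughout, since $d_h^{-1/4}=o(1)$.
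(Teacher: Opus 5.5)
Your proposal is correct and follows the paper's proof. The paper bounds each increment of $\gamma_{V,\pm}$ and $\rho_{V,n,i}$ from the update rule of Lemma~\ref{lem:update_v}, using that every attention weight is at most $1$, $|\widetilde{\ell}_n'|\le 1$, Lemma~\ref{lemma:num_pos} and Lemma~\ref{lem:con_ineq}; it absorbs the $\sigma_p^2 d/N$ and cross-noise terms via $N\cdot\mathrm{SNR}^2=\Omega(1)$ and $d\gg N^2$, and sums over $t\le T_0$ starting from Lemma~\ref{lemma:in_V}, exactly as you do.

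A few minor points. The induction is unnecessary because these bounds are uniform in $t$. The learning-rate fallback would not help in any case, since $\eta$ cancels in $\eta T_0$. The $\sigma_p\tau\sqrt{\log(4NM/\delta)}$ cross term (which the paper leaves unjustified) is handled cleanly by $\sigma_p\sqrt{\log(4NM/\delta)}\lesssim\sqrt{N\log(4NM/\delta)/d}\,\|\boldsymbol{\mu}\|_2\ll\|\boldsymbol{\mu}\|_2$, not by the extra factor of $\tau$.
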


\begin{proof}
By Lemma~\ref{lem:update_v}, we have
{\begin{align*}
    |\gamma_{V,+}^{(t+1)} - \gamma_{V,+}^{(t)}| &\leq-  \frac{\eta \langle\widetilde{\boldsymbol{\mu}}_+,\widetilde{\boldsymbol{\mu}}_+^{(t)}\rangle}{NM} \sum_{n \in S_+} \widetilde{\ell}_n'^{(t)} \left( \frac{\exp(\langle \widetilde{\mathbf{q}}_+^{(t)}, \widetilde{\mathbf{k}}_+^{(t)} \rangle)}{\exp(\langle \widetilde{\mathbf{q}}_+^{(t)}, \widetilde{\mathbf{k}}_+^{(t)} \rangle) + \sum_{k=2}^M \exp(\langle \widetilde{\mathbf{q}}_+^{(t)}, \widetilde{\mathbf{k}}_{n,k}^{(t)} \rangle)} \right. \nonumber \\
    &\quad \left. + \sum_{j=2}^M \frac{\exp(\langle \widetilde{\mathbf{q}}_{n,j}^{(t)}, \widetilde{\mathbf{k}}_+^{(t)} \rangle)}{\exp(\langle \widetilde{\mathbf{q}}_{n,j}^{(t)}, \widetilde{\mathbf{k}}_+^{(t)} \rangle) + \sum_{k=2}^M \exp(\langle \widetilde{\mathbf{q}}_{n,j}^{(t)}, \widetilde{\mathbf{k}}_{n,k}^{(t)} \rangle)} \right) \|\boldsymbol{w}_O\|_2^2 
    \\&+\frac{-\eta\langle\widetilde{\boldsymbol{\mu}}_+,\widetilde{\boldsymbol{\xi}}_{n,i}^{(t)}\rangle}{NM}\sum_{n \in S_+}\sum_{i=2}^M\bigg(\frac{\exp(\langle \widetilde{\mathbf{q}}_+^{(t)}, \widetilde{\mathbf{k}}_{n,i}^{(t)} \rangle)}{\exp(\langle \widetilde{\mathbf{q}}_+^{(t)}, \widetilde{\mathbf{k}}_+^{(t)} \rangle) + \sum_{k=2}^M \exp(\langle \widetilde{\mathbf{q}}_+^{(t)}, \widetilde{\mathbf{k}}_{n,k}^{(t)} \rangle)} 
    \\&+\sum_{j=2}^M\frac{\exp(\langle \widetilde{\mathbf{q}}_{n,j}^{(t)}, \widetilde{\mathbf{k}}_{n,i}^{(t)} \rangle)}{\exp(\langle \widetilde{\mathbf{q}}_{n,j}^{(t)}, \widetilde{\mathbf{k}}_+^{(t)} \rangle) + \sum_{k=2}^M \exp(\langle \widetilde{\mathbf{q}}_{n,j}^{(t)}, \widetilde{\mathbf{k}}_{n,k}^{(t)} \rangle)}\bigg) \\
    &\leq \frac{\eta (\|\boldsymbol{\mu}\|_2+\tau)^2}{NM} \cdot \frac{3N}{4} (1 + (M - 1))+ \frac{\eta (\|\boldsymbol{\mu}\|_2\tau+\sigma_p\tau\sqrt{2\log(4NM/\delta)}+\tau^2)}{NM} \cdot \frac{3N}{4}  M(M - 1)\\
    &\leq O(\eta (\|\boldsymbol{\mu}\|_2+\tau)^2),
\end{align*}}
where the second inequality is by Lemma~\ref{lemma:num_pos} and Lemma~\ref{lem:con_ineq} and $-\widetilde{\ell}_n'^{(t)} \leq 1$.

Similarly, we have
\[
|\gamma_{V,-}^{(t+1)} - \gamma_{V,-}^{(t)}| \leq O(\eta (\|\boldsymbol{\mu}\|_2+\tau)^2).
\]
By Definition~\ref{def:sca_v}, we have
\begin{align*}
    |V_+^{(t)}| &= |V_+^{(0)} + \sum_{s=0}^{t-1} (\gamma_{V,+}^{(s+1)} - \gamma_{V,+}^{(s)}) \|\boldsymbol{w}_O\|_2^2| \\
    &\leq |V_+^{(0)}| + \sum_{s=0}^{t-1} |\gamma_{V,+}^{(s+1)} - \gamma_{V,+}^{(s)}| \cdot \|\boldsymbol{w}_O\|_2^2 \\
    &\leq d_h^{-\frac{1}{4}} + O(\eta (\|\boldsymbol{\mu}\|_2+\tau)^2) \cdot \|\boldsymbol{w}_O\|_2^2 \cdot O\left(\frac{1}{\eta d_h^{\frac{1}{4}} (\|\boldsymbol{\mu}\|_2+\tau)^2 \|\boldsymbol{w}_O\|_2^2}\right) \\
    &= O(d_h^{-\frac{1}{4}}),
\end{align*}
where the first inequality is by triangle inequality, the second inequality is by Lemma~\ref{lemma:in_V}. Similarly, we have $|V_-^{(t)}| = O(d_h^{-\frac{1}{4}})$.
By Lemma~\ref{lem:update_v}, we have
\begin{equation}\label{eq:rho_up1}
\begin{aligned}
    |\rho_{V,n,i}^{(t+1)} - \rho_{V,n,i}^{(t)}|
    &\leq \Bigg| -\frac{\eta}{NM} \sum_{n' \in S_+} \widetilde{\ell}_{n'}'^{(t)} \left(\left( \langle\widetilde{\boldsymbol{\xi}}_{n,i},\widetilde{\boldsymbol{\mu}}_+^{(t)}\rangle \frac{\exp(\langle \widetilde{\mathbf{q}}_+^{(t)}, \widetilde{\mathbf{k}}_+^{(t)} \rangle)}{\exp(\langle \widetilde{\mathbf{q}}_+^{(t)}, \widetilde{\mathbf{k}}_+^{(t)} \rangle) + \sum_{k=2}^M \exp(\langle \widetilde{\mathbf{q}}_+^{(t)}, \widetilde{\mathbf{k}}_{n,k}^{(t)} \rangle)}  \right.\right.\nonumber \\
    &\left.\quad\quad\quad\quad\quad\quad + \sum_{j=2}^M \langle\widetilde{\boldsymbol{\xi}}_{n,i},\widetilde{\boldsymbol{\mu}}_+^{(t)}\rangle \frac{\exp(\langle \widetilde{\mathbf{q}}_{n,j}^{(t)}, \widetilde{\mathbf{k}}_+^{(t)} \rangle)}{\exp(\langle \widetilde{\mathbf{q}}_{n,j}^{(t)}, \widetilde{\mathbf{k}}_+^{(t)} \rangle) + \sum_{k=2}^M \exp(\langle \widetilde{\mathbf{q}}_{n,j}^{(t)}, \widetilde{\mathbf{k}}_{n,k}^{(t)} \rangle)} \right) \\
        &\quad\quad\quad\quad\quad+\sum_{i=2}^M \left(\langle \widetilde{\boldsymbol{\xi}}_{n,i}, \widetilde{\boldsymbol{\xi}}_{n',i'}^{(t)} \rangle
        \frac{\exp(\langle \widetilde{\mathbf{q}}_{n',i'}^{(t)}, \widetilde{\mathbf{k}}_{n',i'}^{(t)} \rangle)}{\exp(\langle \widetilde{\mathbf{q}}_{n',i'}^{(t)}, \widetilde{\mathbf{k}}_{n',i'}^{(t)} \rangle) + \sum_{k=2}^M \exp(\langle \widetilde{\mathbf{q}}_{n',i'}^{(t)}, \widetilde{\mathbf{k}}_{n',k}^{(t)} \rangle)} \right.\nonumber \\
    &\left.\left.\quad\quad\quad\quad\quad\quad + \sum_{j=2}^M  \langle \widetilde{\boldsymbol{\xi}}_{n,i}, \widetilde{\boldsymbol{\xi}}_{n',i'}^{(t)} \rangle 
        \frac{\exp(\langle \widetilde{\mathbf{q}}_{n',j}^{(t)}, \widetilde{\mathbf{k}}_{n',i'}^{(t)} \rangle)}{\exp(\langle \widetilde{\mathbf{q}}_{n',j}^{(t)}, \widetilde{\mathbf{k}}_{n',i'}^{(t)} \rangle) + \sum_{k=2}^M \exp(\langle \widetilde{\mathbf{q}}_{n',j}^{(t)}, \widetilde{\mathbf{k}}_{n',k}^{(t)} \rangle)}
    \right)\right) \nonumber \\
    &\quad - \frac{\eta}{NM} \sum_{n' \in S_-} \left(\cdot\right)\Bigg| \\
    &\leq \frac{3\eta \sigma_p^2 d}{2NM} \cdot M + \frac{\eta}{NM} \cdot MN \cdot 2\sigma_p^2 \cdot \sqrt{d \log(4N^2M^2/\delta)}\\
    & \quad+ \frac{\eta (\|\boldsymbol{\mu}\|\tau+(2M-1)\sigma_p\tau\sqrt{3d/2}+M\tau^2)}{NM} \cdot N M \\
    &\leq \frac{2\eta \sigma_p^2 d}{N}+\frac{\eta (\|\boldsymbol{\mu}\|\tau+\sigma_p\tau\sqrt{2\log(4MN/\delta)}+\tau^2)}{NM} \cdot N M^2 \\
    &= O(\eta(\max\{(\|\boldsymbol{\mu}\|_2+\tau)^2, \frac{ \sigma_p^2 d}{N}\}))\\
    &= O(\eta(\|\boldsymbol{\mu}\|_2+\tau)^2)
\end{aligned}
\end{equation}

where the second inequality is by Lemma~\ref{lem:con_ineq} and $-\ell_n'^{(t)} \leq 1$, the third inequality is by $d = \widetilde{\Omega}(\epsilon^{-2} N^2 d_h) \geq 4N \sqrt{\log(4N^2 M^2/\delta)}$, the last inequality is by $N \cdot \text{SNR}^2 = \Omega(1)$. Then by Definition~\ref{def:sca_v}, we have
\begin{align*}
    |V_{n,i}^{(t)}| &= |V_{n,i}^{(0)} + \sum_{s=0}^{t-1} (\rho_{V,n,i}^{(s+1)} - \rho_{V,n,i}^{(s)}) \|\boldsymbol{w}_O\|_2^2| \\
    &\leq |V_{n,i}^{(0)}| + \sum_{s=0}^{t-1} |\rho_{V,n,i}^{(s+1)} - \rho_{V,n,i}^{(s)}| \cdot \|\boldsymbol{w}_O\|_2^2 \\
    &\leq d_h^{-\frac{1}{4}} + O(\eta (\|\boldsymbol{\mu}\|_2+\tau)^2) \cdot \|\boldsymbol{w}_O\|_2^2 \cdot O\left(\frac{1}{\eta d_h^{\frac{1}{4}} (\|\boldsymbol{\mu}\|_2+\tau)^2\|\boldsymbol{w}_O\|_2^2}\right) \\
    &= O(d_h^{-\frac{1}{4}}),
\end{align*}
where the first inequality is by triangle inequality, the second inequality is by Lemma C.2, which completes the proof.
\end{proof}

\begin{lemma}[Inner Products Hold Magnitude]\label{lem:innerpro_s1} Let $T_0 = O\left(\frac{1}{\eta d_h^{\frac{1}{4}}(\|\boldsymbol{\mu}\|_2+\tau)^2 \|{\boldsymbol{w}_O}\|_2^2}\right)$. Then under the same conditions as Theorem~\ref{thm:main_thm}, we have
\begin{align*}
&|\langle\boldsymbol{q}_{\pm}^{(t)},\boldsymbol{k}_{\pm}^{(t)} \rangle|, |\langle\boldsymbol{q}_{n,i}^{(t)},\boldsymbol{k}_{\pm}^{(t)} \rangle|, |\langle\boldsymbol{q}_{\pm}^{(t)},\boldsymbol{k}_{n,j}^{(t)} \rangle|, |\langle\boldsymbol{q}_{n,i}^{(t)},\boldsymbol{k}_{n',j}^{(t)} \rangle| \\
&= O\left( \max\{\|\boldsymbol{\mu}\|_2^2, \sigma_p^2 d\} \cdot \sigma_h^2 \cdot \sqrt{d_h} \log(6N^2M^2/\delta) \right),
\end{align*}
\begin{align*}
&|\langle\boldsymbol{q}_{\pm}^{(t)},\boldsymbol{q}_{\mp}^{(t)} \rangle|, |\langle\boldsymbol{q}_{n,i}^{(t)},\boldsymbol{q}_{\mp}^{(t)} \rangle|, |\langle\boldsymbol{q}_{n,i}^{(t)},\boldsymbol{q}_{n',j}^{(t)} \rangle| \\
&= O\left( \max\{\|\boldsymbol{\mu}\|_2^2, \sigma_p^2 d\} \cdot \sigma_h^2 \cdot \sqrt{d_h} \log(6N^2M^2/\delta) \right),
\end{align*}
\begin{align*}
&|\langle\boldsymbol{k}_{\pm}^{(t)},\boldsymbol{k}_{\mp}^{(t)} \rangle|, |\langle\boldsymbol{k}_{n,i}^{(t)},\boldsymbol{k}_{\pm}^{(t)} \rangle|, |\langle\boldsymbol{k}_{n,i}^{(t)},\boldsymbol{k}_{n',j}^{(t)} \rangle| \\
&= O\left( \max\{\|\boldsymbol{\mu}\|_2^2, \sigma_p^2 d\} \cdot \sigma_h^2 \cdot \sqrt{d_h} \log(6N^2M^2/\delta) \right),
\end{align*}
\[
\|\boldsymbol{q}_{\pm}^{(t)}\|_2^2, \|\boldsymbol{k}_{\pm}^{(t)}\|_2^2 = \Theta(\|\boldsymbol{\mu}\|_2^2 \sigma_h^2 d_h),
\]
\[
\|\boldsymbol{q}_{n,i}^{(t)}\|_2^2, \|\boldsymbol{k}_{n,i}^{(t)}\|_2^2 = \Theta(\sigma_p^2 \sigma_h^2 d d_h)
\]
for $i, j \in [M] \backslash \{1\}$, $n, n' \in [N]$ and $t \in [0, T_0]$.

\end{lemma}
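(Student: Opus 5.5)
We will argue by induction on $t\in[0,T_0]$, in the style of the Stage I analysis of \citet{jiang2024unveil}. The inductive hypothesis is that all the stated bounds hold at every $s\le t$. The base case $t=0$ is exactly Lemma~\ref{lem:in_qk}. There, $\|\boldsymbol q_\pm^{(0)}\|_2^2,\|\boldsymbol k_\pm^{(0)}\|_2^2=\Theta(\|\boldsymbol\mu\|_2^2\sigma_h^2 d_h)$ once we account for the perturbation: $\|\widetilde{\boldsymbol\mu}_\pm\|_2\in[\|\boldsymbol\mu\|_2-\tau,\|\boldsymbol\mu\|_2+\tau]$ and $\tau\le O(\|\boldsymbol\mu\|_2/\log d_h)$, so Gaussian concentration gives the same order. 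Similarly $\|\boldsymbol q_{n,i}^{(0)}\|_2^2,\|\boldsymbol k_{n,i}^{(0)}\|_2^2=\Theta(\sigma_p^2\sigma_h^2dd_h)$ via Lemma~\ref{lem:con_ineq}. All cross inner products are $O(\max\{\|\boldsymbol\mu\|_2^2,\sigma_p^2d\}\sigma_h^2\sqrt{d_h\log(6N^2M^2/\delta)})$, which is within the target bound.

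For the inductive step, we first bound the gradient-decomposition coefficients $\alpha^{(s)},\beta^{(s)}$ of Definition~\ref{def:gra_de}. Differentiating $f$ with respect to $\mathbf W_Q$ and $\mathbf W_K$ produces factors of the following kinds:
\begin{itemize}
\item $\eta/(NM)$;
\item $|\widetilde\ell_n'^{(s)}|\le 1$;
\item softmax entries and their derivatives, which are $\Theta(1/M)$ by Lemma~\ref{lem:bound_att} (valid because the inductive hypothesis together with Condition~\ref{con:initialization} makes all $\langle\boldsymbol q,\boldsymbol k\rangle$ equal to $o(1)$);
\item differences of value scalars $V^{(s)}$, which are $O(d_h^{-1/4})$ by Lemma~\ref{lem:up_v};
\item data inner products $\langle\widetilde{\mathbf x},\widetilde{\mathbf x}'\rangle$, bounded by Lemma~\ref{lem:con_ineq} by $O(\max\{(\|\boldsymbol\mu\|_2+\tau)^2,\sigma_p^2d\})$.
\end{itemize}
Combining these gives
\[
|\alpha^{(s)}|,|\beta^{(s)}|\lesssim \eta\|\boldsymbol w_O\|_2\, d_h^{-1/4}\max\{\|\boldsymbol\mu\|_2^2,\sigma_p^2d\}.
\]

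We then substitute these bounds into the update rule for QK (Lemma B.3 of \citet{jiang2024unveil}). Each increment of an inner product $\langle\boldsymbol q,\boldsymbol k\rangle$, $\langle\boldsymbol q,\boldsymbol q\rangle$ or $\langle\boldsymbol k,\boldsymbol k\rangle$ is bounded by $O(NM)$ times a coefficient bound times the current norms, $\Theta(\max\{\|\boldsymbol\mu\|_2^2,\sigma_p^2 d\}\sigma_h^2d_h)$. Squared norms are updated the same way, since $\|\boldsymbol q^{(s+1)}\|_2^2-\|\boldsymbol q^{(s)}\|_2^2=2\langle\boldsymbol q^{(s)},\Delta\boldsymbol q^{(s)}\rangle+\|\Delta\boldsymbol q^{(s)}\|_2^2$. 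Summing these increments over $s\le T_0=O(1/(\eta d_h^{1/4}(\|\boldsymbol\mu\|_2+\tau)^2\|\boldsymbol w_O\|_2^2))$, the factor $\eta$ cancels. The accumulated change is then of order $d_h^{-1/2}\cdot(\text{ratio of }\max\{\|\boldsymbol\mu\|_2^2,\sigma_p^2 d\}\text{ to }\|\boldsymbol\mu\|_2^2)$ times the current scales. We must show this is at most the initialization-level bound $\max\{\|\boldsymbol\mu\|_2^2,\sigma_p^2d\}\sigma_h^2\sqrt{d_h}\log(\cdot)$ for the inner products, and a $o(1)$ relative change for the norms, so that the $\Theta(\cdot)$ statements persist.

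We expect this last comparison to be the main obstacle. The factor $\max\{\|\boldsymbol\mu\|_2^2,\sigma_p^2d\}/\|\boldsymbol\mu\|_2^2$ can be as large as $\mathrm{SNR}^{-2}$. Our first attempt will absorb it using $N\cdot\mathrm{SNR}^2=\Omega(1)$ together with $d_h=\widetilde\Omega(\max\{\mathrm{SNR}^4,\mathrm{SNR}^{-4}\})N^2\epsilon^{-2}$, which leaves a residual factor $d_h^{-1/2}\mathrm{SNR}^{-2}\le o(1)$. If that is not enough, we will sharpen the coefficient bound using the fact that cross terms between signal and noise carry $\langle\widetilde{\boldsymbol\mu},\widetilde{\boldsymbol\xi}\rangle$ factors. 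By Lemma~\ref{lem:con_ineq} these are only $O(\tau\|\boldsymbol\mu\|_2+\tau\sigma_p\sqrt{\log})$, so the large $\sigma_p^2d$ scale multiplies only the noise–noise terms. Those terms are normalized by $1/N$ and summed over at most $NM$ samples, exactly as in \eqref{eq:rho_up1}. Once this holds, the induction closes on all of $[0,T_0]$.
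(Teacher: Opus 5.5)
Your overall route is the paper's: induction from Lemma~\ref{lem:in_qk}, coefficient bounds from $|V^{(s)}|=O(d_h^{-1/4})$ (Lemma~\ref{lem:up_v}), $-\widetilde\ell_n'\le 1$ and bounded softmax, then the QK update rule summed over $s\le T_0$ so that $\eta$ cancels. However, the step you call the ``first attempt'' cannot work. The sharpening you list as a fallback is therefore not optional; it is the core of the proof.

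With the uniform bound $|\alpha|,|\beta|\lesssim \eta d_h^{-1/4}\max\{\|\boldsymbol\mu\|_2^2,\sigma_p^2 d\}$, the diagonal term $\alpha^{(s)}_{n,i,n,j}\|\boldsymbol k^{(s)}_{n,j}\|_2^2$ in the increment of $\langle \boldsymbol q_{n,i},\boldsymbol k_{n,j}\rangle$ accumulates over $T_0$ to roughly $\mathrm{SNR}^{-2}\,\sigma_p^2 d\,\sigma_h^2\sqrt{d_h}$. The target is $\max\{\|\boldsymbol\mu\|_2^2,\sigma_p^2 d\}\sigma_h^2\sqrt{d_h}\log(6N^2M^2/\delta)$. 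Both carry the same $\sqrt{d_h}$, so the condition on $d_h$ is of no use here. Meanwhile $\mathrm{SNR}^{-2}$ may be of order $N$, which need not be $O(\log(6N^2M^2/\delta))$. Your residual $d_h^{-1/2}\mathrm{SNR}^{-2}$ only controls the relative change of the \emph{norms}.

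The paper uses your second idea from the outset. It bounds $\alpha_{\pm,\pm},\beta_{\pm,\pm}=O(\eta(\|\boldsymbol\mu\|_2+\tau)^2d_h^{-1/4})$. Every per-sample coefficient carries a $1/N$, e.g.\ $O(\eta(\|\boldsymbol\mu\|_2\tau+\sigma_p^2 d)/(Nd_h^{1/4}))$. This is because $\|\widetilde{\boldsymbol\xi}_{n,i}\|_2^2$ enters only through its own sample with weight $\eta/(NM)$, while cross-sample inner products are $\widetilde O(\sigma_p^2\sqrt d)$. Then $N\cdot\mathrm{SNR}^2=\Omega(1)$ makes all coefficients $O(\eta(\|\boldsymbol\mu\|_2+\tau)^2d_h^{-1/4})$. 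The accumulated change is then $O(\max\{\|\boldsymbol\mu\|_2^2,\sigma_p^2 d\}\sigma_h^2\sqrt{d_h})$, within the target.

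Relatedly, drop the bookkeeping ``$O(NM)$ times a coefficient times the current norms''. The per-sample $1/N$ can pay either for converting $\sigma_p^2d/N$ into $O(\|\boldsymbol\mu\|_2^2)$ or for the $N$-fold count, not both. So $NM$ copies multiplied by a norm would bring back a factor $M\,\mathrm{SNR}^{-2}$.

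Only the $O(1)$ diagonal terms multiply a norm, such as $\alpha^{(s)}_{+,+}\|\boldsymbol k^{(s)}_+\|_2^2$, $\beta^{(s)}_{+,+}\|\boldsymbol q^{(s)}_+\|_2^2$ and $\alpha^{(s)}_{n,i,n,j}\|\boldsymbol k^{(s)}_{n,j}\|_2^2$. The $NM$-fold sums multiply cross inner products. By the inductive hypothesis these are a factor $\widetilde O(d_h^{-1/2})$ below the norm scale, and $d_h=\widetilde\Omega(\mathrm{SNR}^{-4}N^2\epsilon^{-2})$ then absorbs the count. This is how the paper writes the increment of $\langle\boldsymbol q_+,\boldsymbol k_+\rangle$. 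With these two corrections your induction closes on $[0,T_0]$.
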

The proof for Lemma~\ref{lem:innerpro_s1} is in Section~\ref{sec:pf_inner_pro1}. Note that $\sigma_h^2 \leq \min\{\|\boldsymbol{\mu}\|_2^{-2}, (\sigma_p^2 d)^{-1}\} \cdot d_h^{-\frac{1}{2}} \cdot (\log(6N^2M^2/\delta))^{-\frac{3}{2}}$, thus $O\left( \max\{\|\boldsymbol{\mu}\|_2^2, \sigma_p^2 d\} \cdot \sigma_h^2 \cdot \sqrt{d_h} \log(6N^2M^2/\delta) \right) = o(1)$.

\begin{lemma}[V's Beginning of Learning Signals]
Under the same conditions as Theorem~\ref{thm:main_thm}, there exists
\[
T_1 = \frac{10M(3M+1)N}{\eta d_h^{\frac{1}{4}} \left(N (\|\boldsymbol{\mu}\|_2-\tau)^2 -(N+30M^2)(\|\boldsymbol{\mu}\|\tau+2\sigma_p\tau\sqrt{2\log(4NM/\delta)}+\tau^2)- 60M^2  \sigma_p^2 d\right) \|\boldsymbol{w}_O\|_2}
\]
such that the first element of the vector $\widetilde{X}_n \mathbf{W}_V^{(t)}\boldsymbol{w}_O$ dominates its other elements, that is,
\[
V_+^{(t)} \geq 3M \cdot |V_{n,i}^{(t)}|, \quad \text{for all } n \in S_+, \; i \in [M] \setminus \{1\},
\]
\[
V_-^{(t)} \leq -3M \cdot |V_{n,i}^{(t)}|, \quad \text{for all } n \in S_-, \; i \in [M] \setminus \{1\}.
\]
\end{lemma}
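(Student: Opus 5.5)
The plan is to track the signs and magnitudes of the coefficients $\gamma_{V,\pm}^{(t)}$ and $\rho_{V,n,i}^{(t)}$ over $t \le T_1$, working inside the Stage I window. Throughout, $T_1 \le T_0$ under Condition~\ref{ass:1}, so the facts proved for $t\in[0,T_0]$ remain available. By Lemma~\ref{lem:up_v}, all scalarized values are $O(d_h^{-1/4})=o(1)$, so Lemma~\ref{lem:gradient_loss} gives $-\widetilde{\ell}_n'^{(t)} = 1/2 \pm o(1)$. By Lemma~\ref{lem:innerpro_s1} together with the choice of $\sigma_h^2$, every query–key inner product is $o(1)$, so Lemma~\ref{lem:bound_att} makes every softmax entry equal to $1/M \pm o(1)$. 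Both facts apply equally to the time-dependent adversarial quantities $\widetilde{\mathbf q}^{(t)},\widetilde{\mathbf k}^{(t)}$, because the bounds of Lemma~\ref{lem:innerpro_s1} only use $\|\widetilde{\mathbf x}-\mathbf x\|\le\tau$ and Lemma~\ref{lem:con_ineq}, which is uniform over the perturbation ball.

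\textbf{Lower bound on signal growth.} Insert these approximations into the $\gamma_{V,+}$ update of Lemma~\ref{lem:update_v}. The main term is
\[
\frac{\eta\langle\widetilde{\boldsymbol\mu}_+,\widetilde{\boldsymbol\mu}_+^{(t)}\rangle}{NM}\sum_{n\in S_+}\Bigl(\tfrac12-o(1)\Bigr)\Bigl(\tfrac1M+\tfrac{M-1}{M}-o(1)\Bigr),
\]
which, using $|S_+|\ge N/4$ (Lemma~\ref{lemma:num_pos}) and $\langle\widetilde{\boldsymbol\mu}_+,\widetilde{\boldsymbol\mu}_+^{(t)}\rangle\ge(\|\boldsymbol\mu\|_2-\tau)^2$, is at least $c\,\eta(\|\boldsymbol\mu\|_2-\tau)^2/M$. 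The cross term involving $\langle\widetilde{\boldsymbol\mu}_+,\widetilde{\boldsymbol\xi}^{(t)}_{n,i}\rangle$ is bounded in absolute value by $\eta M(\|\boldsymbol\mu\|_2\tau+\sigma_p\tau\sqrt{2\log(4NM/\delta)}+\tau^2)$ via Lemma~\ref{lem:con_ineq}. This gives a per-step lower bound on $\gamma_{V,+}^{(t+1)}-\gamma_{V,+}^{(t)}$ of the form $\frac{\eta}{c'M N}\bigl[N(\|\boldsymbol\mu\|_2-\tau)^2-N(\text{cross})\bigr]$, and the analogous upper bound holds for $\gamma_{V,-}$.

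\textbf{Upper bound on noise growth.} The per-step bound on $|\rho_{V,n,i}^{(t+1)}-\rho_{V,n,i}^{(t)}|$ is the one already derived in \eqref{eq:rho_up1}. It is at most $\frac{2\eta\sigma_p^2 d}{N}$ plus $O(\eta)$ times a $\sigma_p^2\sqrt{d\log}$ term plus $\eta M^2(\text{cross})$. Without dividing by $N$, these become the $60M^2\sigma_p^2 d$ and $30M^2(\text{cross})$ pieces in the denominator of $T_1$.

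\textbf{Combining the bounds.} Summing over $s<t$ gives
\[
V_+^{(t)} \ge -d_h^{-1/4} + t\,\|\boldsymbol w_O\|_2^2\,\underline{g}
\qquad\text{and}\qquad
|V_{n,i}^{(t)}| \le d_h^{-1/4} + t\,\|\boldsymbol w_O\|_2^2\,\overline{g}_\xi .
\]
Requiring $V_+^{(t)}-3M|V_{n,i}^{(t)}|\ge0$ reduces to
\[
t\,\|\boldsymbol w_O\|_2^2\bigl(\underline g-3M\overline g_\xi\bigr)\ge(3M+1)d_h^{-1/4}.
\]
Solving for $t$ produces exactly the stated $T_1$: the factor $10M(3M+1)N$ collects the constants $1/2$, $1/M$ and $N/4$, and the denominator is $N\underline g-3MN\overline g_\xi$ written out. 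The noise term $\sigma_p^2 d$ is compared against $N\|\boldsymbol\mu\|_2^2$, and $N\cdot\mathrm{SNR}^2=\Omega(1)$ with a large enough constant makes the denominator positive. The hypothesis $\tau\le O(\|\boldsymbol\mu\|_2/\log d_h)$ ensures that $(\|\boldsymbol\mu\|_2-\tau)^2$ dominates the cross terms. Once the inequality holds at $T_1$ it persists on $[T_1,T_0]$, since the signal drift dominates the noise drift at every step. The $S_-$ case is symmetric.

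\textbf{Main obstacle.} The hardest step is keeping the adversarial cross terms $\langle\widetilde{\boldsymbol\mu},\widetilde{\boldsymbol\xi}^{(t)}\rangle$ under control uniformly in the time-varying adversary. Their sign is not controlled, so I can only bound them in absolute value. I must also confirm that $T_1\le T_0$, which requires the denominator of $T_1$ to be $\Omega(N\|\boldsymbol\mu\|_2^2)$. For that I would invoke $\tau\le\|\boldsymbol\mu\|_2/\log d_h$ and absorb $\sigma_p\tau\sqrt{\log}$ into $\sigma_p^2 d/N$ using the SNR condition.
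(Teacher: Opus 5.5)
Your proposal follows the paper's proof essentially step for step: stay inside the Stage~I window (with $T_1\le T_0$ secured by making the denominator $\Omega(N\|\boldsymbol{\mu}\|_2^2)$), use Lemmas~\ref{lem:gradient_loss} and~\ref{lem:bound_att} to get $-\widetilde{\ell}_n^{\prime(t)}=\tfrac{1}{2}\pm o(1)$ and attention weights $\tfrac{1}{M}\pm o(1)$, lower-bound each increment of $\gamma_{V,\pm}$ by the $(\|\boldsymbol{\mu}\|_2-\tau)^2$ term minus the cross term, upper-bound each increment of $\rho_{V,n,i}$ via \eqref{eq:rho_up1}, sum, and solve $V_+^{(t)}-3M|V_{n,i}^{(t)}|\ge 0$ for $t$. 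The issues you carry over are the paper's own looseness rather than new gaps: the displayed $T_1$ does not come out literally (multiplying the drift $\tfrac{2\eta\sigma_p^2 d}{N}+\eta(\cdot)$ by $3M$ and clearing $\tfrac{\eta}{10MN}$ gives $60M^2\sigma_p^2 d$ but $30M^2N$, not $30M^2$, in front of the cross term, and $\|\boldsymbol{w}_O\|_2^2$ rather than $\|\boldsymbol{w}_O\|_2$), and, contrary to your stated justification, Lemmas~\ref{lem:con_ineq} and~\ref{lem:in_qk} hold for perturbations fixed independently of the data and weights, not uniformly over the ball (e.g.\ $\langle\boldsymbol{\mu}_+ +\tau\boldsymbol{\xi}_{n,i}/\|\boldsymbol{\xi}_{n,i}\|_2,\boldsymbol{\xi}_{n,i}\rangle=\tau\|\boldsymbol{\xi}_{n,i}\|_2\approx\tau\sigma_p\sqrt{d}$, the term visible midway through \eqref{eq:rho_up1}), so transferring them to the adversarial tokens is an assumption you share with the paper rather than something your argument proves.
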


\begin{proof}
Let $C$ be a constant larger than $10M(3M + 1)$. As long as

\[
N (\|\boldsymbol{\mu}\|_2-\tau)^2 -(N+30M^2)(\|\boldsymbol{\mu}\|\tau+2\sigma_p\tau\sqrt{2\log(4NM/\delta)}+\tau^2)- 60M^2  \sigma_p^2 d\geq\frac{10M(3M + 1)}{C} N (\|\boldsymbol{\mu}\|_2+\tau)^2.
\]
Thus, we further get

{
\[
T_1 = \frac{10M(3M + 1)N}{\eta d_h^{\frac{1}{4}} \left(N (\|\boldsymbol{\mu}\|_2-\tau)^2 -(N+30M^2)(\|\boldsymbol{\mu}\|\tau+2\sigma_p\tau\sqrt{2\log(4NM/\delta)}+\tau^2)- 60M^2  \sigma_p^2 d\right) \|\boldsymbol{w}_O\|_2}
\]
\[
\leq \frac{C}{\eta d_h^{\frac{1}{4}} (\|\boldsymbol{\mu}\|_2+\tau)^2 \|\boldsymbol{w}_O\|_2}
= \mathcal{O} \left( \frac{1}{\eta d_h^{\frac{1}{4}} (\|\boldsymbol{\mu}\|_2+\tau)^2 \|\boldsymbol{w}_O\|_2} \right),
\]
}
which satisfies the time condition in Lemma~\ref{lem:up_v} and Lemma~\ref{lem:innerpro_s1}. Then by Lemma~\ref{lem:gradient_loss} and Lemma~\ref{lem:bound_att} we have:
\[
-\ell_n^{\prime(t)} = \frac{1}{2} \pm o(1),
\]
\[
\frac{1}{M} - o(1) \leq \text{softmax}(\langle\boldsymbol{q}_+^{(t)},\boldsymbol{k}_+^{(t)} \rangle) \leq \frac{1}{M} + o(1),
\]
\[
\frac{1}{M} - o(1) \leq \text{softmax}(\langle\boldsymbol{q}_{n,i}^{(t)},\boldsymbol{k}_+^{(t)} \rangle) \leq \frac{1}{M} + o(1),
\]
\[
\frac{1}{M} - o(1) \leq \text{softmax}(\langle\boldsymbol{q}_{n,i}^{(t)},\boldsymbol{k}_{n,j}^{(t)} \rangle) \leq \frac{1}{M} + o(1).
\]
For $i,j \in [M] \setminus \{1\}$, $n \in [N]$, and $t \in [0, T_1]$, plugging these into the update rule for $\gamma_{V_+}^{(t)}$ shown in Lemma~\ref{lem:update_v}, we have:

\begin{align*}
\gamma_{V_+}^{(t+1)} - \gamma_{V_+}^{(t)} 
&= - \frac{\eta \langle\widetilde{\boldsymbol{\mu}}_+,\widetilde{\boldsymbol{\mu}}_+^{(t)}\rangle}{NM} \sum_{n \in S_+} \widetilde{\ell}_n'^{(t)} \left( \frac{\exp(\langle \widetilde{\mathbf{q}}_+^{(t)}, \widetilde{\mathbf{k}}_+^{(t)} \rangle)}{\exp(\langle \widetilde{\mathbf{q}}_+^{(t)}, \widetilde{\mathbf{k}}_+^{(t)} \rangle) + \sum_{k=2}^M \exp(\langle \widetilde{\mathbf{q}}_+^{(t)}, \widetilde{\mathbf{k}}_{n,k}^{(t)} \rangle)} \right. \nonumber \\
    &\quad\quad\quad\quad\quad\quad\quad\quad\quad \left. + \sum_{j=2}^M \frac{\exp(\langle \widetilde{\mathbf{q}}_{n,j}^{(t)}, \widetilde{\mathbf{k}}_+^{(t)} \rangle)}{\exp(\langle \widetilde{\mathbf{q}}_{n,j}^{(t)}, \widetilde{\mathbf{k}}_+^{(t)} \rangle) + \sum_{k=2}^M \exp(\langle \widetilde{\mathbf{q}}_{n,j}^{(t)}, \widetilde{\mathbf{k}}_{n,k}^{(t)} \rangle)} \right)  
    \\&\quad+\sum_{n \in S_+}\widetilde{\ell}_n'^{(t)}\sum_{i=2}^M\frac{-\eta\langle\widetilde{\boldsymbol{\mu}}_+,\widetilde{\boldsymbol{\xi}}_{n,i}^{(t)}\rangle}{NM}\bigg(\frac{\exp(\langle \widetilde{\mathbf{q}}_+^{(t)}, \widetilde{\mathbf{k}}_{n,i}^{(t)} \rangle)}{\exp(\langle \widetilde{\mathbf{q}}_+^{(t)}, \widetilde{\mathbf{k}}_+^{(t)} \rangle) + \sum_{k=2}^M \exp(\langle \widetilde{\mathbf{q}}_+^{(t)}, \widetilde{\mathbf{k}}_{n,k}^{(t)} \rangle)} 
    \\&\quad\quad\quad\quad\quad\quad\quad\quad\quad+\sum_{j=2}^M\frac{\exp(\langle \widetilde{\mathbf{q}}_{n,j}^{(t)}, \widetilde{\mathbf{k}}_{n,i}^{(t)} \rangle)}{\exp(\langle \widetilde{\mathbf{q}}_{n,j}^{(t)}, \widetilde{\mathbf{k}}_+^{(t)} \rangle) + \sum_{k=2}^M \exp(\langle \widetilde{\mathbf{q}}_{n,j}^{(t)}, \widetilde{\mathbf{k}}_{n,k}^{(t)} \rangle)}\bigg)\\
&\geq \frac{\eta (\|\boldsymbol{\mu}\|_2-\tau)_2^2}{NM} \cdot \frac{N}{4} \cdot \left( \frac{1}{2} \pm o(1) \right) \cdot M \left( \frac{1}{M} \pm o(1) \right)\\
&-\frac{\eta (\|\boldsymbol{\mu}\|\tau+2\sigma_p\tau\sqrt{2\log(4NM/\delta)}+\tau^2)}{NM} \cdot \frac{N}{4} \cdot \left( \frac{1}{2} \pm o(1) \right) \cdot M(M-1) \left( \frac{1}{M} \pm o(1) \right)\\
&\geq \frac{\eta \left((\|\boldsymbol{\mu}\|_2-\tau)_2^2-(\|\boldsymbol{\mu}\|\tau+2\sigma_p\tau\sqrt{2\log(4NM/\delta)}+\tau^2)\right)}{10M}
\end{align*}

Then by Definition~\ref{def:sca_v} and summing over $T_1$ steps, we have:

\begin{equation}\label{eq:19}
\begin{aligned}
V_+^{(T_1)} &\geq -|V_+^{(0)}| + T_1 \cdot \frac{\eta \left((\|\boldsymbol{\mu}\|_2-\tau)_2^2-(\|\boldsymbol{\mu}\|\tau+2\sigma_p\tau\sqrt{2\log(4NM/\delta)}+\tau^2)\right)}{10M} \cdot \|\boldsymbol{w}_O\|_2^2 \\
&= -d_h^{-\frac{1}{4}} + T_1 \cdot \frac{\eta \left((\|\boldsymbol{\mu}\|_2-\tau)_2^2-(\|\boldsymbol{\mu}\|\tau+2\sigma_p\tau\sqrt{2\log(4NM/\delta)}+\tau^2)\right)}{10M} \cdot \|\boldsymbol{w}_O\|_2^2
\end{aligned}    
\end{equation}

Similarly, we have:
\begin{equation}\label{eq:20}
V_-^{(T_1)} \leq d_h^{-\frac{1}{4}} - T_1 \cdot \frac{\eta \left((\|\boldsymbol{\mu}\|_2-\tau)_2^2-(\|\boldsymbol{\mu}\|\tau+2\sigma_p\tau\sqrt{2\log(4NM/\delta)}+\tau^2)\right)}{10M} \cdot \|\boldsymbol{w}_O\|_2^2
\end{equation}

Similarly, by the bound
{
\[
|\rho_{V_{n,i}^{(t+1)}} - \rho_{V_{n,i}^{(t)}}| \leq \frac{2 \eta \sigma_p^2 d}{N}+\eta (\|\boldsymbol{\mu}\|\tau+2\sigma_p\tau\sqrt{2\log(4NM/\delta)}+\tau^2)
\]
}

given in equation (\ref{eq:rho_up1}), we have

\begin{equation}\label{eq:21}
\begin{aligned}
|V_{n,i}^{(T_1)}| 
&\leq |V_{n,i}^{(0)}| + T_1 \cdot \left(\frac{2 \eta \sigma_p^2 d}{N}+\eta (\|\boldsymbol{\mu}\|\tau+2\sigma_p\tau\sqrt{2\log(4NM/\delta)}+\tau^2)\right) \cdot \|\boldsymbol{w}_O\|_2^2 \\
&= d_h^{-\frac{1}{4}} + T_1 \cdot \left(\frac{2 \eta \sigma_p^2 d}{N}+\eta (\|\boldsymbol{\mu}\|\tau+2\sigma_p\tau\sqrt{2\log(4NM/\delta)}+\tau^2)\right) \cdot \|\boldsymbol{w}_O\|_2^2
\end{aligned}
\end{equation}

According to equations~(\ref{eq:19}), (\ref{eq:20}), and (\ref{eq:21}), it is easy to verify that
\[
V_+^{(T_1)} - 3M \cdot |V_{n,i}^{(T_1)}| \geq 0 \quad \text{and} \quad V_-^{(T_1)} + 3M \cdot |V_{n,i}^{(T_1)}| \leq 0,
\]
which completes the proof.

\end{proof}

\subsection{Stage ii}\label{sec:stage2}
In stage II, $\langle\boldsymbol{q}_+,\boldsymbol{k}_+ \rangle, \langle\boldsymbol{q}_{n,i},\boldsymbol{k}_+ \rangle$ grows while $\langle\boldsymbol{q}_+,\boldsymbol{k}_{n,j} \rangle, \langle\boldsymbol{q}_{n,i},\boldsymbol{k}_{n,j} \rangle$ decreases, resulting in attention focusing more and more on the signals and less on the noises. By the results of stage I, we have the following conditions at the beginning of stage II
\begin{equation*}
V_+^{(T_1)} \geq 3M \cdot |V_{n,i}^{(T_1)}|,
\end{equation*}
\begin{equation*}
V_-^{(T_1)} \leq -3M \cdot |V_{n,i}^{(T_1)}|,
\end{equation*}

\begin{equation*}
|V_+^{(T_1)}|, |V_-^{(T_1)}|, |V_{n,i}^{(T_1)}| = O(d_h^{-\frac{1}{4}}),
\end{equation*}

\begin{equation*}
\begin{aligned}
&|\langle\boldsymbol{q}_\pm^{(T_1)},\boldsymbol{k}_\pm^{(T_1)} \rangle|, |\langle\boldsymbol{q}_{n,i}^{(T_1)},\boldsymbol{k}_\pm^{(T_1)} \rangle|, |\langle\boldsymbol{q}_\pm^{(T_1)},\boldsymbol{k}_{n,j}^{(T_1)} \rangle|, |\langle\boldsymbol{q}_{n,i}^{(T_1)},\boldsymbol{k}_{n',j}^{(T_1)} \rangle| \\
&= O \left( \max \{ \|\boldsymbol{\mu}\|_2^2, \sigma_p^2 d \} \cdot \sigma_h^2 \cdot \sqrt{d_h \log(6N^2M^2 / \delta) }\right),
\end{aligned}
\end{equation*}

\begin{equation*}
\begin{aligned}
&|\langle\boldsymbol{q}_\pm^{(T_1)},\boldsymbol{q}_\mp^{(T_1)} \rangle|, |\langle\boldsymbol{q}_{n,i}^{(T_1)},\boldsymbol{q}_\pm^{(T_1)} \rangle|, |\langle\boldsymbol{q}_{n,i}^{(T_1)},\boldsymbol{q}_{n',j}^{(T_1)} \rangle| \\
&= O \left( \max \{ \|\boldsymbol{\mu}\|_2^2 , \sigma_p^2 d \} \cdot \sigma_h^2 \cdot \sqrt{d_h \log(6N^2M^2 / \delta)} \right),
\end{aligned}
\end{equation*}

\begin{equation*}
\begin{aligned}
&|\langle\boldsymbol{k}_\pm^{(T_1)},\boldsymbol{k}_\pm^{(T_1)} \rangle|, |\langle\boldsymbol{k}_{n,i}^{(T_1)},\boldsymbol{k}_\pm^{(T_1)} \rangle|, |\langle\boldsymbol{k}_{n,i}^{(T_1)},\boldsymbol{k}_{n',j}^{(T_1)} \rangle| \\
&= O \left( \max \{ \|\boldsymbol{\mu}\|_2^2 , \sigma_p^2 d \} \cdot \sigma_h^2 \cdot \sqrt{d_h \log(6N^2M^2 / \delta) }\right),
\end{aligned}
\end{equation*}

\begin{equation*}
\|\boldsymbol{q}_\pm^{(T_1)} \|_2, \|\boldsymbol{k}_\pm^{(T_1)} \|_2 = \Theta(\|\boldsymbol{\mu}\|_2^2 \sigma_h^2 d_h),
\end{equation*}

\begin{equation*}
\|\boldsymbol{q}_{n,i}^{(T_1)} \|_2, \|\boldsymbol{k}_{n,i}^{(T_1)} \|_2 = \Theta(\sigma_p^2 \sigma_h^2 d  d_h)
\end{equation*}
for $i, j \in [M] \backslash \{1\}, n, n' \in [N]$.

\textit{Notations. To better characterize the gap between different inner products, we define the following notations:}
\begin{itemize}
    \item denote $\Lambda_{n,+,j}^{(t)} = \langle\boldsymbol{q}_+^{(t)},\boldsymbol{k}_+^{(t)} \rangle - \langle\boldsymbol{q}_+^{(t)},\boldsymbol{k}_{n,j}^{(t)} \rangle, \quad n \in S_+$.
    \item denote $\Lambda_{n,-,j}^{(t)} = \langle\boldsymbol{q}_-^{(t)},\boldsymbol{k}_-^{(t)} \rangle - \langle\boldsymbol{q}_-^{(t)},\boldsymbol{k}_{n,j}^{(t)} \rangle, \quad n \in S_-$.
    \item denote $\Lambda_{n,i,+,j}^{(t)} = \langle\boldsymbol{q}_{n,i}^{(t)},\boldsymbol{k}_+^{(t)} \rangle - \langle\boldsymbol{q}_{n,i}^{(t)},\boldsymbol{k}_{n,j}^{(t)} \rangle, \quad n \in S_+$.
    \item denote $\Lambda_{n,i,-,j}^{(t)} = \langle\boldsymbol{q}_{n,i}^{(t)},\boldsymbol{k}_-^{(t)} \rangle - \langle\boldsymbol{q}_{n,i}^{(t)},\boldsymbol{k}_{n,j}^{(t)} \rangle, \quad n \in S_-$.
\end{itemize}

\begin{lemma}[Upper bound of V]\label{lem:up_v_2}
Let $T_0 = O \left( \frac{1}{\eta (\|\boldsymbol{\mu}\|_2+\tau)^2 \|\boldsymbol{w}_O\|_2^2 \log(6N^2M^2 / \delta)} \right)$. Then under the same conditions as Theorem~\ref{thm:main_thm}, we have
\begin{equation*}
|V_+^{(t)}|, |V_-^{(t)}|, |V_{n,i}^{(t)}| = o(1)
\end{equation*}
for $t \in [0, T_0]$.
\end{lemma}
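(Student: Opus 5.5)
The plan is to rerun the argument of Lemma~\ref{lem:up_v} with the longer horizon $T_0 = O\big(1/(\eta(\|\boldsymbol{\mu}\|_2+\tau)^2\|\boldsymbol{w}_O\|_2^2\log(6N^2M^2/\delta))\big)$. The only structural change is that the accumulated drift is now $o(1)$ rather than $O(d_h^{-1/4})$. No control of the attention weights is needed, since every softmax entry lies in $[0,1]$ and $-\widetilde{\ell}_n'^{(t)}\le 1$ always. The per-step increment bounds from the proof of Lemma~\ref{lem:up_v} therefore remain valid for every $t$, not just inside Stage~I.

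First I would bound the signal increments. From Lemma~\ref{lem:update_v}, together with Lemma~\ref{lemma:num_pos} and Lemma~\ref{lem:con_ineq},
\[
|\gamma_{V,\pm}^{(t+1)}-\gamma_{V,\pm}^{(t)}| \le \frac{\eta(\|\boldsymbol{\mu}\|_2+\tau)^2}{NM}\cdot\frac{3N}{4}\cdot M + \frac{\eta(\|\boldsymbol{\mu}\|_2\tau+\sigma_p\tau\sqrt{2\log(4NM/\delta)}+\tau^2)}{NM}\cdot\frac{3N}{4}\cdot M(M-1).
\]
With $M=\Theta(1)$ and $\tau\le O(\|\boldsymbol{\mu}\|_2/\log d_h)$, I would absorb the cross term into $O(\eta(\|\boldsymbol{\mu}\|_2+\tau)^2)$. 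This absorption uses the fact that $\sigma_p\tau\sqrt{\log(NM/\delta)}$ is dominated because $\sigma_p\sqrt{d}\gg\sigma_p\sqrt{\log}$; combined with $N\cdot\mathrm{SNR}^2=\Omega(1)$ this gives a valid bound. For the noise increments I would reuse (\ref{eq:rho_up1}), which gives $|\rho_{V,n,i}^{(t+1)}-\rho_{V,n,i}^{(t)}|\le O\big(\eta\max\{(\|\boldsymbol{\mu}\|_2+\tau)^2,\sigma_p^2 d/N\}\big)$. The condition $N\cdot\mathrm{SNR}^2=\Omega(1)$ then reduces this to $O(\eta(\|\boldsymbol{\mu}\|_2+\tau)^2)$.

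Next I would telescope, using Definition~\ref{def:sca_v} and the initialization bound of Lemma~\ref{lemma:in_V}:
\[
|V_\pm^{(t)}|,|V_{n,i}^{(t)}| \le d_h^{-1/4} + t\cdot O(\eta(\|\boldsymbol{\mu}\|_2+\tau)^2)\|\boldsymbol{w}_O\|_2^2.
\]
For $t\le T_0$ this is at most $d_h^{-1/4}+O(1/\log(6N^2M^2/\delta))$. Both terms are $o(1)$: the first because $d_h\to\infty$ under Condition~\ref{ass:1}, and the second because $\log(6N^2M^2/\delta)\to\infty$, given that $\delta$ is small and $N=\Omega(\mathrm{poly}\log d)$.

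The main obstacle is making the $o(1)$ claim honest, which means fixing the constant hidden in $T_0$ against the constant in the per-step bound. It also means checking that the perturbation-induced cross terms $\langle\widetilde{\boldsymbol{\mu}}_\pm,\widetilde{\boldsymbol{\xi}}_{n,i}^{(t)}\rangle$, summed over $M^2$ index pairs, stay within $O((\|\boldsymbol{\mu}\|_2+\tau)^2)$ uniformly over the time-dependent adversarial inputs. I would handle this with the uniform bounds of Lemma~\ref{lem:con_ineq}, which hold for every point of $B(\mathbf{X},\tau)$ and hence for every iterate's adversarial example.
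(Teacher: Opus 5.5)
Your proposal is the paper's proof: the paper just reruns Lemma~\ref{lem:up_v} with the longer horizon, and your telescoped bound $d_h^{-1/4}+O\big(1/\log(6N^2M^2/\delta)\big)=o(1)$, which uses only that attention weights lie in $[0,1]$ and $-\widetilde{\ell}_n'\le 1$, is exactly what that gives. One caveat, inherited from the paper rather than introduced by you: Lemma~\ref{lem:con_ineq} does not hold uniformly over $B(\mathbf{X},\tau)$ once a data-dependent adversarial perturbation meets a Gaussian noise vector, so in the $\rho_{V,n,i}$ recursion the term $\langle\boldsymbol{\xi}_{n,i},\widetilde{\boldsymbol{\mu}}_\pm^{(t)}-\boldsymbol{\mu}_\pm\rangle$ is a priori only bounded by $\tau\|\boldsymbol{\xi}_{n,i}\|_2\approx\tau\sigma_p\sqrt{d}$ (the intermediate bound in (\ref{eq:rho_up1}) records exactly this before replacing it by $\sigma_p\tau\sqrt{2\log(4NM/\delta)}$), and its accumulated contribution $O\big(1/(\mathrm{SNR}\cdot\log d_h\cdot\log(6N^2M^2/\delta))\big)$ is not automatically $o(1)$ when $\mathrm{SNR}$ is small, so that part of your argument is only as rigorous as the paper's.
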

The proof of Lemma~\ref{lem:up_v_2} is similar to that of Lemma~\ref{lem:up_v}, except that the time $T_0$ is changed. Let $T_2 = \Omega \left( \frac{1}{\eta (\|\boldsymbol{\mu}\|_2+\tau)^2 \|\boldsymbol{w}_O\|_2^2 \log(6N^2M^2 / \delta)} \right)$, then by Lemma~\ref{lem:up_v_2} and Lemma~\ref{lem:gradient_loss} we have $\frac{1}{2} - o(1) \leq -\widetilde{\ell}_n'^{(t)} \leq \frac{1}{2} + o(1)$ for $n \in [N], t \in [T_1, T_2]$, which can simplify the calculations of $\alpha$ and $\beta$ defined in Definition~\ref{def:gra_de} by their bounds. Next we prove the following four propositions $\mathcal{B}(t), \mathcal{C}(t), \mathcal{D}(t), \mathcal{E}(t)$ by induction on $t$ for $t \in [T_1, T_2]$:

\begin{itemize}
    \item $\mathcal{B}(t)$:
    \begin{equation*}
    V_+^{(t)} \geq \eta C_3 (\|\boldsymbol{\mu}\|_2-\tau)^2 \|\boldsymbol{w}_O\|_2^2 (t - T_1)
    \end{equation*}
    \begin{equation*}
    V_- \leq\eta C_3 (\|\boldsymbol{\mu}\|_2-\tau)^2 \|\boldsymbol{w}_O\|_2^2 (t - T_1)
    \end{equation*}
    \begin{equation*}
    V_+^{(t)} \geq 3M \cdot |V_{n,i}^{(t)}|,
    \end{equation*}
    \begin{equation*}
    V_-^{(t)} \leq -3M \cdot |V_{n,i}^{(t)}|,
    \end{equation*}
    \begin{equation*}
    |V_+^{(t)}| \leq O(d_h^{-\frac{1}{4}}) + \eta C_4 (\|\boldsymbol{\mu}\|_2+\tau)^2 \|\boldsymbol{w}_O\|_2^2 (t - T_1)
    \end{equation*}
    \begin{equation*}
    |V_-^{(t)}| \leq O(d_h^{-\frac{1}{4}}) + \eta C_4 (\|\boldsymbol{\mu}\|_2+\tau)^2 \|\boldsymbol{w}_O\|_2^2 (t - T_1)
    \end{equation*}
for $i \in [M] \backslash \{1\}, n \in [N]$.

    \item $\mathcal{C}(t)$:
    \begin{equation*}
    \begin{aligned}
    & \|\boldsymbol q_\pm^{(t)}\|_2, \|\boldsymbol k_\pm^{(t)}\|_2 = \Theta(\|\boldsymbol{\mu}\|_2^2 \sigma_h^2 d_h), \\
    & \|\boldsymbol{q}_{n,i}^{(t)}\|_2, \|\boldsymbol{k}_{n,i}^{(t)}\|_2 = \Theta(\sigma_p^2 \sigma_h^2 d d d_h),
    \end{aligned}
    \end{equation*}
    
    \begin{equation*}
    |\langle\boldsymbol{q}_+^{(t)},\boldsymbol{q}_-^{(t)} \rangle|, |\langle\boldsymbol{q}_+^{(t)},\boldsymbol{q}_{n,i}^{(t)} \rangle|, |\langle\boldsymbol{q}_{n,i}^{(t)},\boldsymbol{q}_{n',j}^{(t)} \rangle| = o(1),
    \end{equation*}
    \begin{equation*}
    |\langle\boldsymbol{k}_+^{(t)},\boldsymbol{k}_-^{(t)} \rangle|, |\langle\boldsymbol{k}_+^{(t)},\boldsymbol{k}_{n,i}^{(t)} \rangle|, |\langle\boldsymbol{k}_{n,i}^{(t)},\boldsymbol{k}_{n',j}^{(t)} \rangle| = o(1),
    \end{equation*}
    \begin{equation*}
    \text{for } i, j \in [M] \backslash \{1\}, n, n' \in [N], i \neq j \text{ or } n \neq n'.
    \end{equation*}

\item $\mathcal{D}(t)$:
\begin{equation*}
\begin{aligned}
& \langle\boldsymbol{q}_+^{(t+1)},\boldsymbol{k}_+^{(t+1)} \rangle \geq \langle\boldsymbol{q}_+^{(t)},\boldsymbol{k}_+^{(t)} \rangle \\
& \langle\boldsymbol{q}_{n,i}^{(t+1)},\boldsymbol{k}_+^{(t+1)} \rangle \geq \langle\boldsymbol{q}_{n,i}^{(t)},\boldsymbol{k}_+^{(t)} \rangle \\
& \langle\boldsymbol{q}_+^{(t+1)},\boldsymbol{k}_{n,j}^{(t+1)} \rangle \leq \langle\boldsymbol{q}_+^{(t)},\boldsymbol{k}_{n,j}^{(t)} \rangle \\
& \langle\boldsymbol{q}_{n,i}^{(t+1)},\boldsymbol{k}_{n,j}^{(t+1)} \rangle \leq \langle\boldsymbol{q}_{n,i}^{(t)},\boldsymbol{k}_{n,j}^{(t)} \rangle
\end{aligned}
\end{equation*}
\begin{equation*}
\Lambda_{n,\pm,j}^{(t+1)} \geq \log \left( \exp(\Lambda_{n,\pm,j}^{(T_1)})  + \frac{\eta^2 C_8 (\|\boldsymbol{\mu}\|_2+\tau)^2\|\boldsymbol{\mu}\|_2^2 \|\boldsymbol{w}_O\|_2^2 d_h^{\frac{1}{2}}} {N \left( \log(6N^2 M^2 / \delta \right) )^2} \cdot (t - T_1)(t - T_1 + 1)\right)
\end{equation*}
\begin{equation*}
\begin{aligned}
\Lambda_{n,i,\pm,j}^{(t+1)}
&\ge \log\Bigg(\exp\big(\Lambda_{n,i,\pm,j}^{(T_1)}\big)
\\&+ \frac{\eta^2 C_8\,\big(\sigma_p^2 d
  + \sigma_p\tau\sqrt{2\log(4NM/\delta)}+\tau^2\big)
  \|\boldsymbol{\mu}\|_2^2\|\boldsymbol{w}_O\|_2^2 d_h}
 {N\big(\log(6N^2M^2/\delta)\big)^2}\cdot
\ (t-T_1)(t-T_1+1)\Bigg).
\end{aligned}
\end{equation*}
$\text{for } i, j \in [M] \backslash \{1\}, n \in [N].$

\item $\mathcal{E}(t)$:
\begin{equation*}
\begin{aligned}
& |\langle\boldsymbol q_\pm^{(t)},\boldsymbol k_\pm^{(t)} \rangle|, |\langle\boldsymbol{q}_{n,j}^{(t)},\boldsymbol k_\pm^{(t)} \rangle|, |\langle\boldsymbol{q}_{n,i}^{(t)},\boldsymbol{k}_{n,j}^{(t)} \rangle|, |\langle\boldsymbol{q}_{n,i}^{(t)},\boldsymbol{k}_{n',j}^{(t)} \rangle| \leq \log(d_h^{\frac{1}{4}}) \\
& |\langle\boldsymbol  q_\pm^{(t)},\boldsymbol k_\pm^{(t)} \rangle, |\langle\boldsymbol{q}_{n,i}^{(t)},\boldsymbol{k}_{n,j}^{(t)} \rangle| = o(1)
\end{aligned}
\end{equation*}
$\text{for } i, j \in [M] \backslash \{1\}, n, \bar{n} \in [N], n \neq \bar{n}.$
\end{itemize}

By the results of Stage I, we know that $\mathcal{B}(T_1), \mathcal{C}(T_1), \mathcal{E}(T_1)$ are true. To prove that $\mathcal{B}(t), \mathcal{C}(t), \mathcal{D}(t)$ and $\mathcal{E}(t)$ are true in Stage II, we will prove the following claims holds for $t \in [T_1, T_2]$:
\begin{claim}
\label{cl:1}$\mathcal{D}(T_1), \ldots, \mathcal{D}(t - 1),\mathcal{E}(T_1), \ldots, \mathcal{E}(t) \implies \mathcal{B}(t + 1)$
\end{claim}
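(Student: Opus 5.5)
We prove $\mathcal{B}(t+1)$ by a one-step analysis of the $V$-coefficients, using the inductive hypotheses only through the attention weights. First, from $\mathcal{E}(T_1),\dots,\mathcal{E}(t)$, every inner product $\langle\boldsymbol q,\boldsymbol k\rangle$ entering a softmax is at most $\log(d_h^{1/4})$ in absolute value. The signal–signal and noise–noise products are even $o(1)$.

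Second, from $\mathcal{D}(T_1),\dots,\mathcal{D}(t-1)$, the gaps $\Lambda_{n,\pm,j}^{(s)}$ and $\Lambda_{n,i,\pm,j}^{(s)}$ are nondecreasing and nonnegative up to $o(1)$. Hence for every $s\le t$ the attention mass on the signal token, $\mathrm{softmax}(\langle\boldsymbol q_\pm^{(s)},\boldsymbol k_\pm^{(s)}\rangle)$ and $\mathrm{softmax}(\langle\boldsymbol q_{n,j}^{(s)},\boldsymbol k_\pm^{(s)}\rangle)$, is at least $1/M-o(1)$. It is also at most $1$. Each noise-token softmax is at most $1/M+o(1)$.

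Lemma~\ref{lemma:rsoftmax1} (with $\tau\le O(\|\boldsymbol\mu\|_2/\log d_h)$) transfers these bounds from the fixed reference directions $\boldsymbol q,\boldsymbol k$ to the time-dependent adversarial $\widetilde{\boldsymbol q}^{(s)},\widetilde{\boldsymbol k}^{(s)}$, at the cost of a constant factor $C\le e/2$. Together with Lemma~\ref{lem:up_v_2} and Lemma~\ref{lem:gradient_loss}, which give $-\widetilde\ell_n'^{(s)}=\tfrac12\pm o(1)$ on $[T_1,T_2]$, every factor in the update rule of Lemma~\ref{lem:update_v} is controlled.

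Next we plug these into Lemma~\ref{lem:update_v} for $\gamma_{V,+}$. The main term carries $\langle\widetilde{\boldsymbol\mu}_+,\widetilde{\boldsymbol\mu}_+^{(s)}\rangle\ge(\|\boldsymbol\mu\|_2-\tau)^2$, a positive loss derivative, at least $N/4$ samples in $S_+$ (Lemma~\ref{lemma:num_pos}), and signal attention at least $1/M-o(1)$ summed over $M$ rows. It therefore contributes at least $c\,\eta(\|\boldsymbol\mu\|_2-\tau)^2$. The cross term involves $|\langle\widetilde{\boldsymbol\mu}_+,\widetilde{\boldsymbol\xi}_{n,i}^{(s)}\rangle|\le\|\boldsymbol\mu\|_2\tau+\sigma_p\tau\sqrt{2\log(4NM/\delta)}+\tau^2$ from Lemma~\ref{lem:con_ineq}. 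Since $\tau=O(\|\boldsymbol\mu\|_2/\log d_h)$, this is $o((\|\boldsymbol\mu\|_2-\tau)^2)$ once $\sigma_p\sqrt{\log}\lesssim\|\boldsymbol\mu\|_2$, which is where the SNR condition enters. The cross term can thus be absorbed, giving
\[
\gamma_{V,+}^{(s+1)}-\gamma_{V,+}^{(s)}\ge C_3\eta(\|\boldsymbol\mu\|_2-\tau)^2 .
\]
Summing from $T_1$ to $t$ and using $V_+^{(T_1)}\ge0$ from Stage I yields the first inequality of $\mathcal{B}(t+1)$; the bound for $V_-$ is symmetric. The upper bounds follow exactly as in Lemma~\ref{lem:up_v}: each increment is at most $C_4\eta(\|\boldsymbol\mu\|_2+\tau)^2$, and $|V_\pm^{(T_1)}|=O(d_h^{-1/4})$.

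The main obstacle is preserving $V_+^{(t+1)}\ge 3M|V_{n,i}^{(t+1)}|$. We compare increments. Using \eqref{eq:rho_up1}, the increment of $\rho_{V,n,i}$ is at most $2\eta\sigma_p^2 d/N+\eta(\|\boldsymbol\mu\|_2\tau+\sigma_p\tau\sqrt{\cdot}+\tau^2)M$. Now use the sharper attention information: the noise-token softmax is at most $1/M+o(1)$, and decreases further by $\mathcal{D}$. Combined with $N\cdot\mathrm{SNR}^2=\Omega(1)$ (choosing its constant large relative to $M$), this makes each $\rho$-increment at most $\tfrac{1}{3M}$ times the $\gamma_{V,+}$-increment lower bound. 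Adding the per-step comparison to the Stage I/inductive base $V_+^{(t)}\ge3M|V_{n,i}^{(t)}|$, via the triangle inequality, gives the claim at $t+1$.

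If the constant bookkeeping here is too tight, the fallback is to prove the dominance cumulatively from $T_1$ instead of stepwise. The growth $C_3\eta(\|\boldsymbol\mu\|_2-\tau)^2(t-T_1)$ then beats the accumulated noise growth plus the $O(d_h^{-1/4})$ initial slack, mirroring the Stage I argument that established $T_1$.
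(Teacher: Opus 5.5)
Your proposal follows the paper's proof of Claim~\ref{cl:1}: the signal-attention lower bound $1/M-o(1)$ from the monotonicity in $\mathcal{D}$, the transfer to adversarial inputs via Lemma~\ref{lemma:rsoftmax1}, $-\widetilde\ell_n'=\tfrac12\pm o(1)$, a per-step lower bound on the $\gamma_{V,\pm}$-increments summed from $T_1$, crude per-step upper bounds, and $3M$-dominance from comparing $\gamma$- and $\rho$-increments under $N\cdot\mathrm{SNR}^2\ge cM^2$. The one difference is that the paper runs the dominance argument cumulatively from $T_1$ (your fallback); that is the cleaner route, because your stepwise version uses $V_+^{(t)}\ge 3M|V_{n,i}^{(t)}|$ from $\mathcal{B}(t)$, which is not among the claim's hypotheses.
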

\begin{claim}\label{cl:2}$\mathcal{B}(T_1), \ldots, \mathcal{B}(t), \mathcal{C}(T_1), \ldots, \mathcal{C}(t), \mathcal{D}(T_1), \ldots, \mathcal{D}(t - 1){, \mathcal{E}(T_1), \ldots, \mathcal{E}(t) } \implies \mathcal{D}(t)$
\end{claim}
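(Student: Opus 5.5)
The plan is to prove $\mathcal{D}(t)$ directly from the QK update rule (Lemma B.3 of \citet{jiang2024unveil}, restated above). Under the hypotheses $\mathcal{B}(T_1),\dots,\mathcal{B}(t)$, $\mathcal{C}(T_1),\dots,\mathcal{C}(t)$ and $\mathcal{E}(T_1),\dots,\mathcal{E}(t)$, I will pin down the signs and sizes of the coefficients $\alpha^{(t)},\beta^{(t)}$ of Definition~\ref{def:gra_de}.

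\textbf{Step 1: signs and sizes of the coefficients.} Differentiate $f$ through the softmax with respect to $\mathbf{W}_Q$. For $n\in S_+$ this gives
\[
\alpha^{(t)}_{+,+}\propto \frac{\eta}{NM}\sum_{n\in S_+}(-\widetilde\ell_n'^{(t)})\,\langle\widetilde{\boldsymbol\mu}_+,\widetilde{\boldsymbol\mu}_+^{(t)}\rangle\,\varphi_1\Big(\widetilde V_+^{(t)}-\sum_l\varphi_l\widetilde V_l^{(t)}\Big),
\]
and similarly $\alpha^{(t)}_{n,+,j}\propto \varphi_j\big(\widetilde V^{(t)}_{n,j}-\sum_l\varphi_l\widetilde V_l^{(t)}\big)$. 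The inputs are as follows.
\begin{itemize}
\item $\mathcal{B}(t)$ gives $V_+^{(t)}\ge 3M|V_{n,i}^{(t)}|$.
\item Lemma~\ref{lem:up_v_2} with Lemma~\ref{lem:gradient_loss} gives $-\widetilde\ell'_n=\tfrac12\pm o(1)$.
\item $\mathcal{E}(t)$ bounds every attention logit by $\log d_h^{1/4}$.
\item The $\varphi_l$ are computed at the adversarial inputs $\widetilde{\mathbf{q}}^{(t)},\widetilde{\mathbf{k}}^{(t)}$. Lemma~\ref{lemma:rsoftmax1} (valid since $\tau\le O(\|\boldsymbol\mu\|_2/\log d_h)$) replaces them by the reference-direction softmax at the cost of a constant factor $C\le e/2$.
\item The same replacement applies to $\widetilde V$ versus $V$, at additive cost $\tau\|\mathbf W_V\boldsymbol w_O\|_2$.
\end{itemize}
The signal-token entries $V_+^{(t)}$ exceed every weighted average of the noise entries by at least $(1-\tfrac1{3})V_+^{(t)}$. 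Hence $\alpha_{+,+}^{(t)},\beta_{+,+}^{(t)}>0$, and $\alpha^{(t)}_{n,+,j},\beta^{(t)}_{n,+,j}<0$ with magnitude at most $O(\varphi_j)$ times the same scale. The perturbation cross terms $\langle\widetilde{\boldsymbol\mu}_+,\widetilde{\boldsymbol\xi}^{(t)}_{n,i}\rangle\le \|\boldsymbol\mu\|_2\tau+\sigma_p\tau\sqrt{2\log(4NM/\delta)}+\tau^2$ from Lemma~\ref{lem:con_ineq} enter only additively. They are smaller than $(\|\boldsymbol\mu\|_2-\tau)^2$ by a $1/\log d_h$ factor, so they cannot flip these signs. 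The coefficients $\alpha_{n,i,\pm},\alpha_{n,i,n',i'}$ for noise queries are treated the same way, with $\langle\widetilde{\boldsymbol\xi},\widetilde{\boldsymbol\xi}\rangle\approx\sigma_p^2 d$ playing the role of $\|\boldsymbol\mu\|^2$.

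\textbf{Step 2: monotonicity (first four lines of $\mathcal{D}(t)$).} Substitute the coefficients into the expansion of $\langle\boldsymbol q_+^{(t+1)},\boldsymbol k_+^{(t+1)}\rangle-\langle\boldsymbol q_+^{(t)},\boldsymbol k_+^{(t)}\rangle$.
\begin{itemize}
\item The diagonal terms $\alpha_{+,+}\|\boldsymbol k_+\|^2+\beta_{+,+}\|\boldsymbol q_+\|^2$ are positive. By $\mathcal{C}(t)$ they have size $\Theta(\|\boldsymbol\mu\|^2\sigma_h^2 d_h)$ times the coefficient.
\item Every off-diagonal inner product is $o(1)$ relative to these norms by $\mathcal{C}(t)$. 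The negative coefficients multiply only those inner products.
\item The product term $(\sum\alpha\boldsymbol k)(\sum\beta\boldsymbol q)^\top$ is $O(\eta^2)$. It is dominated by the learning-rate condition~\ref{con:lr}.
\end{itemize}
The same bookkeeping gives the increase of $\langle\boldsymbol q_{n,i},\boldsymbol k_+\rangle$ and the decreases of $\langle\boldsymbol q_+,\boldsymbol k_{n,j}\rangle$ and $\langle\boldsymbol q_{n,i},\boldsymbol k_{n,j}\rangle$. In the two decreasing cases the diagonal term carries the negative coefficient.

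\textbf{Step 3: growth of $\Lambda$.} Step 2 gives the one-step bound
\[
\Lambda^{(t+1)}_{n,+,j}-\Lambda^{(t)}_{n,+,j}\ \ge\ c\,\frac{\eta}{M}\,(\|\boldsymbol\mu\|_2+\tau)^2\,\|\boldsymbol\mu\|_2^2\sigma_h^2 d_h\cdot V_+^{(t)}\cdot \varphi_j^{(t)} .
\]
Three substitutions turn this into a recursion.
\begin{itemize}
\item By $\mathcal{B}(t)$, $V_+^{(t)}\ge \eta C_3(\|\boldsymbol\mu\|_2-\tau)^2\|\boldsymbol w_O\|_2^2(t-T_1)$.
\item The lower bound in Condition~\ref{con:initialization} gives $\sigma_h^2\|\boldsymbol\mu\|^2 d_h\gtrsim d_h^{1/2}/\log^2(6N^2M^2/\delta)$.
\item The attention weights satisfy $\varphi_1/\varphi_j=\exp(\Lambda^{(t)}_{n,+,j})$, and $\varphi_1\ge 1/M$. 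So the $\varphi_j$-dependent factor is of order $\exp(-\Lambda^{(t)})$.
\end{itemize}
This yields $\Lambda^{(t+1)}\ge\Lambda^{(t)}+a_t e^{-\Lambda^{(t)}}$ with $a_t\asymp \eta^2 C_8(\|\boldsymbol\mu\|+\tau)^2\|\boldsymbol\mu\|^2\|\boldsymbol w_O\|^2d_h^{1/2}(t-T_1)/(N\log^2)$. Using $e^{x+y}\ge e^x(1+y)$ gives $e^{\Lambda^{(t+1)}}\ge e^{\Lambda^{(t)}}+a_t$. Telescoping with the inductive form of $\mathcal{D}(t-1)$ then gives the claimed $\log\big(e^{\Lambda^{(T_1)}}+\text{const}\cdot(t-T_1)(t-T_1+1)\big)$. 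The $\Lambda_{n,i,\pm,j}$ bound is the same computation with $\|\boldsymbol q_{n,i}\|^2=\Theta(\sigma_p^2\sigma_h^2dd_h)$. That is where the factor $\sigma_p^2 d+\sigma_p\tau\sqrt{2\log}+\tau^2$ comes from.

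\textbf{Main obstacle.} The delicate step is Step 1's transfer from the time-dependent adversarial quantities $\widetilde{\mathbf q}^{(t)},\widetilde{\mathbf k}^{(t)},\widetilde V^{(t)}$ to the fixed reference directions while keeping the sign of every leading coefficient. I will first try to absorb all such discrepancies into constant multiplicative factors via Lemma~\ref{lemma:rsoftmax1}. The remaining additive errors, all of order $\tau\|\boldsymbol\mu\|$, are then absorbed using $\tau\le\|\boldsymbol\mu\|_2/\log d_h$ and the $3M$ margin from $\mathcal{B}(t)$. If the constant $e/2$ turns out to be too lossy for the $3M$ margin, I will instead shrink the implied constant in $\tau=O(\|\boldsymbol\mu\|_2/\log d_h)$ until the margin survives.
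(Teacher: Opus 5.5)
Your architecture matches the paper's proof of Claim~\ref{cl:2}:
\begin{itemize}
\item signs and sizes of $\alpha,\beta$ from the softmax Jacobian, $\mathcal{B}(t)$ and Lemma~\ref{lemma:rsoftmax1};
\item monotonicity from diagonal dominance under $\mathcal{C}(t)$;
\item an exponential-potential telescoping for $\Lambda$. Your step $e^{\Lambda^{(t+1)}}\ge e^{\Lambda^{(t)}}+a_t$ via $e^{y}\ge 1+y$ is a discrete form of the paper's comparison with $\int e^{x}\,dx$, and it has the small advantage of not needing monotonicity of $\Lambda$.
\end{itemize}

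The gap is in Step~3, where you turn the signal-side increment into the rate $d_h^{1/2}/\log^2(6N^2M^2/\delta)$. Condition~\ref{con:initialization} only gives $\|\boldsymbol\mu\|_2^2\sigma_h^2d_h\ge\min\{1,\mathrm{SNR}^2\}\,d_h^{1/2}(\log(6N^2M^2/\delta))^{-2}$. The matching upper bound in the same condition makes this quantity genuinely of order $\mathrm{SNR}^2d_h^{1/2}$ up to logarithms. So your claim $\|\boldsymbol\mu\|_2^2\sigma_h^2 d_h\gtrsim d_h^{1/2}/\log^2$ fails once $\|\boldsymbol\mu\|_2\ll\sigma_p\sqrt d$. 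Case~1 permits this, since it assumes only $N\cdot\mathrm{SNR}^2=\Omega(1)$.

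You also cannot pay for the lost $\mathrm{SNR}^2$ with the extra factor $N$ hidden in your displayed prefactor $\eta/M$. For a fixed pair $(n,j)$ the bound must be sample-specific. The only contributions without a $1/N$ are $\alpha_{+,+}\|\boldsymbol k_+\|_2^2+\beta_{+,+}\|\boldsymbol q_+\|_2^2$, and these average $\varphi_1^{(n')}(1-\varphi_1^{(n')})$ over all $n'\in S_+$. Relating that average to $e^{-\Lambda^{(t)}_{n,+,j}}$ would require different samples' attention patterns to be comparable. Nothing in $\mathcal{B},\mathcal{C},\mathcal{D},\mathcal{E}$ gives this; under $\mathcal{E}$ they may differ by factors polynomial in $d_h$. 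Keeping only the $n'=n$ summand, as the paper does, costs the $1/N$ you correctly put into $a_t$, and then the $\mathrm{SNR}^2$ loss is real.

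The missing ingredient is to use both halves of $\Delta\Lambda_{n,+,j}$. The decrease of $\langle\boldsymbol q_+,\boldsymbol k_{n,j}\rangle$, whose sign you already obtain in Step~2, has leading terms $\alpha_{n,+,j}\|\boldsymbol k_{n,j}\|_2^2$ and $\beta_{n,j,+}\|\boldsymbol q_+\|_2^2$. Each is of magnitude $\frac{\eta}{NM}\|\boldsymbol\mu\|_2^2\,\sigma_p^2d\,\sigma_h^2d_h\,V_+^{(t)}\varphi_j$. The factor $\sigma_p^2d$ enters through $\|\boldsymbol k_{n,j}\|_2^2=\Theta(\sigma_p^2\sigma_h^2dd_h)$ in the first term, and through $\langle\widetilde{\boldsymbol\xi}_{n,j},\widetilde{\boldsymbol\xi}_{n,j}^{(t)}\rangle$ inside $\beta_{n,j,+}$ in the second. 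Adding this to the single-sample signal term gives the factor $\max\{\|\boldsymbol\mu\|_2^2,\sigma_p^2d\}\,\sigma_h^2d_h\ge d_h^{1/2}/\log^2(6N^2M^2/\delta)$ for every SNR. This is exactly the paper's $C_7$ step.

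The same repair is needed for $\Lambda_{n,i,\pm,j}$. Beyond a common factor $\sigma_p^2d$, the increase of $\langle\boldsymbol q_{n,i},\boldsymbol k_\pm\rangle$ carries $\|\boldsymbol\mu\|_2^2\sigma_h^2d_h$, while the decrease of $\langle\boldsymbol q_{n,i},\boldsymbol k_{n,j}\rangle$ carries $\sigma_p^2d\,\sigma_h^2d_h$. Either one alone fails in one SNR regime, so you need their sum.

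A minor point: a lower bound should use $(\|\boldsymbol\mu\|_2-\tau)^2$ rather than $(\|\boldsymbol\mu\|_2+\tau)^2$, though this only changes constants. With these changes the rest of your argument goes through.
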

\begin{claim}\label{cl:3}$\mathcal{B}(T_1), \ldots, \mathcal{B}(t), \mathcal{D}(T_1), \ldots, \mathcal{D}(t - 1), \mathcal{E}(T_1), \ldots, \mathcal{E}(t) \implies \mathcal{C}(t + 1)$
\end{claim}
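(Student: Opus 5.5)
The plan is to control $\boldsymbol{q}^{(t+1)}$ and $\boldsymbol{k}^{(t+1)}$ by telescoping from $T_1$ via the gradient decomposition of Definition~\ref{def:gra_de}. Take $\boldsymbol{q}_+$ as the model case:
\[
\boldsymbol{q}_+^{(t+1)} = \boldsymbol{q}_+^{(T_1)} + \sum_{s=T_1}^{t}\Big(\alpha_{+,+}^{(s)}\boldsymbol{k}_+^{(s)} + \sum_{n\in S_+}\sum_{i=2}^M \alpha_{n,+,i}^{(s)}\boldsymbol{k}_{n,i}^{(s)}\Big).
\]
The analogous expansions hold for $\boldsymbol{q}_-$, $\boldsymbol{q}_{n,i}$, $\boldsymbol{k}_\pm$ and $\boldsymbol{k}_{n,i}$. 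The argument is then: bound every coefficient $\alpha^{(s)},\beta^{(s)}$ uniformly in $s\in[T_1,t]$, sum over at most $T_2-T_1$ steps, and show the accumulated drift is negligible against the initial scale. At $T_1$, that scale is $\|\boldsymbol{q}_\pm\|_2^2=\Theta(\|\boldsymbol{\mu}\|_2^2\sigma_h^2 d_h)$ and $\|\boldsymbol{q}_{n,i}\|_2^2=\Theta(\sigma_p^2\sigma_h^2 d d_h)$, with cross inner products $o(1)$.

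\textbf{Step 1: coefficient bounds.} Differentiating through the softmax shows that each coefficient is a product of four factors:
\begin{itemize}
\item $\eta/(NM)$ times $-\widetilde{\ell}_n'^{(s)}$, which is at most $1$ and in fact $\tfrac12\pm o(1)$ by Lemma~\ref{lem:up_v_2} and Lemma~\ref{lem:gradient_loss};
\item a softmax factor of the form $\varphi_a(\mathbbm{1}_{a=b}-\varphi_b)$, which is $O(1)$ because $\mathcal{E}(s)$ keeps every logit below $\log(d_h^{1/4})$;
\item a difference of scalarized values $V^{(s)}$, which by $\mathcal{B}(s)$ is at most $O(d_h^{-1/4})+\eta C_4(\|\boldsymbol{\mu}\|_2+\tau)^2\|\boldsymbol{w}_O\|_2^2(s-T_1)=o(1)$ on $[T_1,T_2]$;
\item a data inner product, bounded by Lemma~\ref{lem:con_ineq}.
\end{itemize}
The data inner product is the term that differs from clean training. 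In Case~1 we have $\tau\le O(\|\boldsymbol{\mu}\|_2/\log d_h)$, so the perturbation-induced cross terms $\|\boldsymbol{\mu}\|_2\tau+\sigma_p\tau\sqrt{2\log(4NM/\delta)}+\tau^2$ are dominated by $(\|\boldsymbol{\mu}\|_2+\tau)^2$. Likewise $|\langle\widetilde{\boldsymbol{\xi}}_{n,i},\widetilde{\boldsymbol{\xi}}_{n',i'}\rangle|$ is dominated by $\sigma_p^2 d$. This gives, for example,
\[
|\alpha_{+,+}^{(s)}|,\ \sum_{n,i}|\alpha_{n,+,i}^{(s)}| \lesssim \eta(\|\boldsymbol{\mu}\|_2+\tau)^2\cdot V\text{-scale},
\]
with the noise-indexed coefficients carrying an extra factor $1/N$ (or $\sigma_p^2 d/N$) relative to the signal-indexed ones.

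\textbf{Step 2: norms.} I sum the per-step bounds over $t-T_1\le T_2=O\big(1/(\eta(\|\boldsymbol{\mu}\|_2+\tau)^2\|\boldsymbol{w}_O\|_2^2\log(6N^2M^2/\delta))\big)$ steps. This makes the total coefficient mass $O(1/\log(6N^2M^2/\delta))\cdot o(1)$. Combining it with the norm bounds inductively available from $\mathcal{C}(T_1),\dots,\mathcal{C}(t)$ (which the claim's hypotheses include implicitly through the induction), I get
\[
\|\boldsymbol{q}_+^{(t+1)}-\boldsymbol{q}_+^{(T_1)}\|_2\le o(1)\cdot\max\{\|\boldsymbol{k}_+\|_2,\sqrt{N}\|\boldsymbol{k}_{n,i}\|_2\}.
\]
This must be compared with $\|\boldsymbol{q}_+^{(T_1)}\|_2$. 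The ratio $\|\boldsymbol{k}_{n,i}\|_2/\|\boldsymbol{q}_+\|_2=\mathrm{SNR}^{-1}$ is absorbed by the $1/N$ factor together with $N\cdot\mathrm{SNR}^2=\Omega(1)$. The $\Theta(\cdot)$ norm statements in $\mathcal{C}(t+1)$ then follow from the triangle inequality.

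\textbf{Step 3: cross inner products (the main obstacle).} Expanding $\langle\boldsymbol{q}_+^{(t+1)},\boldsymbol{q}_-^{(t+1)}\rangle$, and similarly for the other pairs, produces three kinds of terms:
\begin{itemize}
\item the initial value, which is $o(1)$ by Lemma~\ref{lem:innerpro_s1};
\item linear terms such as $\alpha\langle\boldsymbol{k}^{(s)},\boldsymbol{q}^{(T_1)}\rangle$;
\item quadratic drift terms.
\end{itemize}
The danger is that a naive Cauchy--Schwarz bound multiplies the coefficient mass by a squared norm of order $\|\boldsymbol{\mu}\|_2^2\sigma_h^2 d_h\le d_h^{1/2}/\log^{3/2}(6N^2M^2/\delta)$ (Condition~\ref{ass:1}\ref{con:initialization}). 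That product is not obviously $o(1)$.

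My first attempt is to avoid the naive bound. I would use the fact that the linear terms involve only \emph{cross} inner products $\langle\boldsymbol{k}_\cdot,\boldsymbol{q}_\cdot\rangle$, which are $o(1)$ inductively by $\mathcal{E}(s)$ and $\mathcal{C}(s)$, so these terms are (coefficient mass)$\times o(1)$. The only genuinely diagonal pieces are of the form $\alpha\beta\|\cdot\|^2$, which are second order in $\eta$. The condition $\eta\lesssim\widetilde{O}\big(\min\{\|\boldsymbol{\mu}\|_2^{-2},(\sigma_p^2 d)^{-1}\}\,d_h^{-1/2}\big)$ then gives $\eta\|\boldsymbol{k}\|_2^2=O(1)$ per step, so the quadratic terms summed over $T_2$ steps remain $o(1)$.

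If this bookkeeping is too lossy for the $\boldsymbol{q}_{n,i}$--$\boldsymbol{q}_{n',j}$ pairs, I would fall back on the smallness of the $V$-differences. On $[T_1,T_2]$ these are at most $O(d_h^{-1/4})$ plus a term that is $o(1)$ because of the extra $1/\log$ factor in $T_2$. That factor should cancel the $d_h^{1/4}$ loss from a single norm and leave $o(1)$.
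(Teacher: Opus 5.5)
There is a genuine gap in Step~3, and it originates in Step~1. You bound the softmax-Jacobian factors in $\alpha^{(s)},\beta^{(s)}$ by $O(1)$, so the only smallness you keep comes from $|V^{(s)}|$. By $\mathcal{B}(s)$ this grows linearly and reaches $\Theta(1/\log(6N^2M^2/\delta))$ near $T_2$. Your total coefficient mass over $[T_1,T_2]$ is therefore only of order $1/\log^2(6N^2M^2/\delta)$; for instance $\sum_s|\alpha_{+,+}^{(s)}|\lesssim\eta^2(\|\boldsymbol{\mu}\|_2+\tau)^4T_2^2$. That is logarithmically small, not polynomially.

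Your rescue asserts that the linear terms involve only $q$--$k$ inner products that are $o(1)$, and this is false. For example:
\begin{itemize}
\item $\Delta\langle\boldsymbol{q}_+,\boldsymbol{q}_{n,i}\rangle$ contains $\alpha_{+,+}^{(s)}\langle\boldsymbol{k}_+^{(s)},\boldsymbol{q}_{n,i}^{(s)}\rangle$ and $\alpha_{n,i,+}^{(s)}\langle\boldsymbol{q}_+^{(s)},\boldsymbol{k}_+^{(s)}\rangle$;
\item $\Delta\langle\boldsymbol{q}_{n,i},\boldsymbol{q}_{n,j}\rangle$ contains $\alpha_{n,i,+}^{(s)}\langle\boldsymbol{k}_+^{(s)},\boldsymbol{q}_{n,j}^{(s)}\rangle$;
\item $\Delta\langle\boldsymbol{k}_+,\boldsymbol{k}_{n,i}\rangle$ contains $\beta_{+,+}^{(s)}\langle\boldsymbol{q}_+^{(s)},\boldsymbol{k}_{n,i}^{(s)}\rangle$.
\end{itemize}
These are exactly the within-sample logits that Stage~II pushes apart: by $\mathcal{D}$ the margins $\Lambda$ grow to order $\log d_h$ by $T_2$, and $\mathcal{E}$ bounds these logits only by $\log(d_h^{1/4})$. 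Your linear terms are therefore bounded only by $\log d_h/\log^2(6N^2M^2/\delta)$. Condition~\ref{ass:1} does not make this $o(1)$; for example, take fixed $\delta$, $N$ polylogarithmic in $d$, and $d_h$ polynomial in $d$. The fallback fails as well. Near $T_2$, $|V^{(s)}|$ is $\Theta(1/\log(6N^2M^2/\delta))$, not $O(d_h^{-1/4})$, and no logarithmic factor can cancel the $d_h^{1/4}$ lost to a norm.

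The missing idea is to use the hypothesis $\mathcal{D}(T_1),\dots,\mathcal{D}(t-1)$, which your proposal never invokes. Its margin bound is
\[
\Lambda^{(s)}\ge\log\big(\exp(\Lambda^{(T_1)})+c\,(s-T_1)^2\big),\qquad c\asymp\eta^2(\|\boldsymbol{\mu}\|_2-\tau)^2\|\boldsymbol{\mu}\|_2^2\|\boldsymbol{w}_O\|_2^2d_h^{1/2}/\big(N\log^2(6N^2M^2/\delta)\big).
\]
It forces every attention weight on a noise key to decay like $1/(C+c(s-T_1)^2)$. Hence so does every Jacobian factor $\varphi(1-\varphi)$ or $\varphi_a\varphi_b$ that enters $\alpha,\beta$. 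Balancing this decay against the linear growth of $V^{(s)}$, via $ax+b/x\ge2\sqrt{ab}$, gives
\[
|\alpha_{+,+}^{(s)}|=O\big(\eta(\|\boldsymbol{\mu}\|_2+\tau)^2N^{1/2}d_h^{-1/4}\log(6N^2M^2/\delta)\big).
\]
Summing over Stage~II, the signal-indexed masses are $O(N^{1/2}d_h^{-1/4})$ and each $\alpha_{n,\pm,i}$ has mass $O(N^{-1/2}d_h^{-1/4})$; these are polynomially small in $d_h$.

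With these masses, every increment of a $q$--$q$ or $k$--$k$ inner product is, up to second-order terms, (mass)$\times$(a logit bounded by $\mathcal{E}$). The cross inner products therefore move by $O(N^{1/2}d_h^{-1/4}\log d_h)=o(1)$. The squared norms move by $o(\|\boldsymbol{\mu}\|_2^2\sigma_h^2d_h)$ and $o(\sigma_p^2\sigma_h^2d\,d_h)$ respectively. No Cauchy--Schwarz on norms is needed. This is the paper's route: the summed-coefficient bounds come first, then the proof of Claim~3.
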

\begin{claim}\label{cl:4}$\mathcal{B}(T_1), \ldots, \mathcal{B}(t), \mathcal{C}(T_1), \ldots, \mathcal{C}(t), \mathcal{D}(T_1), \ldots, \mathcal{D}(t - 1) ,\mathcal{E}(T_1), \ldots, \mathcal{E}(t)\implies \mathcal{E}(t + 1)$
\end{claim}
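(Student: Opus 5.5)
We will prove $\mathcal{E}(t+1)$ by bounding the total drift of each query--key inner product between $T_1$ and $t+1$. The idea is to write each inner product at time $t+1$ as its value at $T_1$ plus a telescoping sum of one-step increments, and to control every increment with the update rule for QK (Lemma B.3 of \citet{jiang2024unveil}). At $T_1$, all inner products are $O(\max\{\|\boldsymbol{\mu}\|_2^2,\sigma_p^2 d\}\sigma_h^2\sqrt{d_h}\log(6N^2M^2/\delta))=o(1)$ by Lemma~\ref{lem:innerpro_s1} and the choice of $\sigma_h^2$ in Condition~\ref{ass:1}.

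\textbf{Step 1: size the coefficients.} We first bound the coefficients $\alpha^{(s)},\beta^{(s)}$ of Definition~\ref{def:gra_de} for $s\in[T_1,t]$. By the chain rule, each coefficient is a product of four factors: $\eta/(NM)$, the loss derivative $-\widetilde{\ell}'^{(s)}=\tfrac12\pm o(1)$ from Lemma~\ref{lem:up_v_2} and Lemma~\ref{lem:gradient_loss}, a softmax weight times a softmax-difference factor, and a difference of scalarized values $V^{(s)}_{\pm}-V^{(s)}_{n,i}$.

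\begin{itemize}
\item The value differences are at most $O(d_h^{-1/4})+\eta C_4(\|\boldsymbol{\mu}\|_2+\tau)^2\|\boldsymbol{w}_O\|_2^2(s-T_1)$ by $\mathcal{B}(s)$.
\item The softmax factors are at most $1$. Lemma~\ref{lemma:rsoftmax1} lets us replace the time-dependent adversarial quantities $\widetilde{\boldsymbol q}^{(s)},\widetilde{\boldsymbol k}^{(s)}$ by the fixed reference ones at the cost of a constant factor $C\le e/2$. This uses $\tau\le O(\|\boldsymbol{\mu}\|_2/\log d_h)$ together with $\mathcal{E}(s)$, which keeps all logits at most $\log(d_h^{1/4})$.
\item The perturbed Gram entries $\langle\widetilde{\boldsymbol\mu},\widetilde{\boldsymbol\mu}^{(s)}\rangle$ and $\langle\widetilde{\boldsymbol\xi},\widetilde{\boldsymbol\xi}^{(s)}\rangle$ are controlled by Lemma~\ref{lem:con_ineq}.
\end{itemize}

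This yields $|\alpha^{(s)}|,|\beta^{(s)}|\lesssim \eta(\|\boldsymbol{\mu}\|_2+\tau)^2\cdot|V^{(s)}|$ for signal-indexed coefficients. For noise-indexed coefficients the analogous bound carries $\sigma_p^2 d$ in place of $(\|\boldsymbol{\mu}\|_2+\tau)^2$ and an extra $1/N$ from the single-sample sum.

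\textbf{Step 2: bound the one-step increments.} We plug these coefficient bounds into the seven QK update formulas. The norms $\|\boldsymbol q\|_2,\|\boldsymbol k\|_2$ are $\Theta(\|\boldsymbol{\mu}\|_2^2\sigma_h^2 d_h)$ or $\Theta(\sigma_p^2\sigma_h^2 d d_h)$ by $\mathcal{C}(s)$, and all cross inner products are $o(1)$ by $\mathcal{C}(s)$. The quadratic $\alpha\beta$ term is lower order because $\eta$ satisfies Condition~\ref{ass:1}\ref{con:lr}.

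Each increment is then at most $\eta\,|V^{(s)}|\cdot\widetilde O(\max\{\|\boldsymbol{\mu}\|_2^2,\sigma_p^2d\}\sigma_h^2 d_h)\cdot(\|\boldsymbol{\mu}\|_2+\tau)^2$. With $\sigma_h^2\le \min\{\|\boldsymbol{\mu}\|_2^{-2},(\sigma_p^2d)^{-1}\}d_h^{-1/2}$, this is at most $\eta\sqrt{d_h}(\|\boldsymbol{\mu}\|_2+\tau)^2\,|V^{(s)}|$ up to log factors. Summing over $s\in[T_1,t]$ and using the linear growth of $|V^{(s)}|$ from $\mathcal{B}$ gives a total drift of at most
\[
\eta^2 (\|\boldsymbol{\mu}\|_2+\tau)^4\|\boldsymbol{w}_O\|_2^2\sqrt{d_h}\,(T_2-T_1)^2/\mathrm{polylog}.
\]
By the definition $T_2=\Theta\big(1/(\eta(\|\boldsymbol{\mu}\|_2+\tau)^2\|\boldsymbol{w}_O\|_2^2\log(6N^2M^2/\delta))\big)$, this becomes $\sqrt{d_h}/(\|\boldsymbol{w}_O\|_2^2\log^2)$ times the $\sigma_h$-normalization. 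We will tune the normalization so that this is at most $\tfrac14\log d_h=\log(d_h^{1/4})$. This is the main obstacle: the drift is quadratic in time, so the log-factor accounting must be exact. If the naive bound overshoots, we will instead use $\mathcal{D}$ to compare the drift directly with the growth of $\Lambda$, and use the softmax saturation once $\Lambda\gtrsim\log d_h^{1/4}$, which shrinks the softmax-difference factor and makes the coefficients decay.

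\textbf{Step 3: the $o(1)$ entries.} For the entries required to stay $o(1)$ (the signal--signal term and the within-sample noise--noise term listed in $\mathcal{E}$), we use monotonicity from $\mathcal{D}(T_1),\dots,\mathcal{D}(t-1)$. This gives one-sided control: noise keys decrease and signal keys increase. For the other side, the relevant coefficients involve only off-diagonal Gram entries. These are $O(\sigma_p^2\sqrt{d\log})$ for distinct noises, and for the perturbed signal--noise pair $O(\|\boldsymbol{\mu}\|_2\tau+\sigma_p\tau\sqrt{\log}+\tau^2)$ by Lemma~\ref{lem:con_ineq}. This gains a factor $\sqrt{\log(\cdot)/d}$ or $\tau/\|\boldsymbol{\mu}\|_2\le 1/\log d_h$ over the Step 2 estimate, so the accumulated drift is $o(1)$ by Condition~\ref{ass:1}\ref{con:dim}. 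Combining these with the $o(1)$ initial values at $T_1$ gives $\mathcal{E}(t+1)$.
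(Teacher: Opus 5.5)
There is a genuine gap, in the step you yourself flag as the main obstacle. Step~2 bounds each increment additively, without feeding back the current size of the inner product. Your own computation then gives a drift over $[T_1,T_2]$ of order $d_h^{1/2}/\mathrm{polylog}$.

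There is nothing left to ``tune''. You already used the upper bound on $\sigma_h^2$ to reach $d_h^{1/2}$. The lower bound in Condition~\ref{ass:1}\ref{con:initialization} gives $\max\{\|\boldsymbol\mu\|_2^2,\sigma_p^2d\}\,\sigma_h^2d_h\ge d_h^{1/2}/\log^2(6N^2M^2/\delta)$, so your estimate is polynomial in $d_h$, not $O(\log d_h)$.

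The missing idea is the self-referential saturation bound the paper uses (Section~\ref{sec:f6} and the proof of this claim). Set $x_s:=\langle\boldsymbol q_+^{(s)},\boldsymbol k_+^{(s)}\rangle$.
The coefficients $\alpha_{+,+}^{(s)},\beta_{+,+}^{(s)}$ carry a softmax-Jacobian factor bounded by $1-\mathrm{softmax}(x_s)\le\sum_j e^{\langle\boldsymbol q_+^{(s)},\boldsymbol k_{n,j}^{(s)}\rangle}/e^{x_s}\le Ce^{-x_s}$. This uses $\mathcal{D}$, under which $\langle\boldsymbol q_+^{(s)},\boldsymbol k_{n,j}^{(s)}\rangle$ is non-increasing from $o(1)$.
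That gives $x_{s+1}-x_s\le Ae^{-x_s}$ with $A=\eta C_{10}\|\boldsymbol\mu\|_2^2(\|\boldsymbol\mu\|_2+\tau)^2\sigma_h^2d_h$. Since $Ae^{-x_s}=o(1)$, one has $e^{x_{s+1}}\le C_{11}e^{x_s}$.
Comparing $\sum_s e^{x_{s+1}}(x_{s+1}-x_s)$ with $\int_{x_{T_1}}^{x_{t+1}}e^x\,dx$ yields $e^{x_{t+1}}\le e^{x_{T_1}}+C_{11}A(t-T_1+1)\le e^{x_{T_1}}+O(d_h^{1/2})$, which is a logarithmic bound. The decreasing products $\langle\boldsymbol q_\pm,\boldsymbol k_{n,j}\rangle$ and $\langle\boldsymbol q_{n,i},\boldsymbol k_{n,j}\rangle$ are treated the same way with $e^{+x}$.

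Your fallback names saturation but does not supply this bound. A one-shot threshold argument does not close: after $x_s$ passes $\tfrac14\log d_h$, summing the per-step bound $Ad_h^{-1/4}$ over the remaining steps leaves an overshoot of order $T_2Ad_h^{-1/4}$, which is not $O(\log d_h)$ in general. Note also that, with the paper's constants, the integration delivers $\log(d_h^{1/2})$, the level Lemma~\ref{lemma:rsoftmax} actually assumes, rather than the $\log(d_h^{1/4})$ printed in $\mathcal{E}$.

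Step~3 treats the wrong entries, because the $o(1)$ line of $\mathcal{E}$ is misprinted. The trailing ``$n\neq\bar n$'', the analogous proposition $\mathcal{I}(t)$ in Stage~III, and the paper's proof all show that the entries which must stay $o(1)$ are the cross-class $\langle\boldsymbol q_\pm,\boldsymbol k_\mp\rangle$ and the cross-sample $\langle\boldsymbol q_{n,i},\boldsymbol k_{\bar n,j}\rangle$.

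Your literal reading contradicts the hypothesis $\mathcal{D}(t-1)$. That hypothesis forces $\Lambda_{n,+,j}=\langle\boldsymbol q_+,\boldsymbol k_+\rangle-\langle\boldsymbol q_+,\boldsymbol k_{n,j}\rangle$ to reach $\log(\Theta(d_h^{1/2}/(N\,\mathrm{polylog})))\gg 1$ near $T_2$, so these two products cannot both be $o(1)$.
Your justification fails for those entries anyway. The growth of $\langle\boldsymbol q_+,\boldsymbol k_+\rangle$ is driven by $\alpha_{+,+}$, which contains the diagonal Gram entry $\langle\widetilde{\boldsymbol\mu}_+,\widetilde{\boldsymbol\mu}_+^{(s)}\rangle\ge(\|\boldsymbol\mu\|_2-\tau)^2$, not only off-diagonal ones.

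For the correct entries, the paper bounds the drift by the summed coefficients of Section~\ref{sec:f7}, e.g.\ $\sum_s|\alpha_{+,+}^{(s)}|=O(N^{1/2}d_h^{-1/4})$. These multiply two kinds of terms:
\begin{enumerate}
\item the off-diagonal $\boldsymbol q$--$\boldsymbol q$ and $\boldsymbol k$--$\boldsymbol k$ products that $\mathcal{C}$ keeps $o(1)$, as in $\alpha_{+,+}^{(s)}\langle\boldsymbol k_+^{(s)},\boldsymbol k_-^{(s)}\rangle$ and $\beta_{-,-}^{(s)}\langle\boldsymbol q_+^{(s)},\boldsymbol q_-^{(s)}\rangle$;
\item cross-sample coefficients such as $\alpha_{n,i,\bar n,j}^{(s)}$ times the diagonal norms $\|\boldsymbol k_{\bar n,j}^{(s)}\|_2^2$.
\end{enumerate}
The argument closes with $d=\widetilde{\Omega}(\epsilon^{-2}N^2d_h)$ and $d_h=\widetilde{\Omega}(\max\{\mathrm{SNR}^4,\mathrm{SNR}^{-4}\}N^2\epsilon^{-2})$. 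Your off-diagonal-Gram gain only covers the second kind of term.
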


First, we emphasize that when the perturbation is sufficiently small, 
the softmax of the inner product $\langle \mathbf{q}, \mathbf{k} \rangle$ remains robust 
with respect to such perturbations. This is formally established in 
Lemma~\ref{lemma:rsoftmax1}. Importantly, this lemma enables us to uniformly control 
the behavior of the softmax function, rather than being restricted to the 
specific case of $\widetilde{q}^{(t)}$ and $\widetilde{k}^{(t)}$ at 
iteration~$t$.
\begin{lemma}\label{lemma:rsoftmax}
Suppose the perturbation satisfies $\tau \leq O(\tfrac{\|\boldsymbol{\boldsymbol{\mu}}\|_2}{\log d_h})$ and $t \geq \Omega \left( \frac{1}{\eta (\|\boldsymbol{\mu}\|_2+\tau)^2 \|\boldsymbol{w}_O\|_2^2 \log(6N^2M^2 / \delta)} \right)$($\mathcal{E}(t)$ holds). Then there exists a universal constant $C \leq e/2$ such that
\[
{\max\limits_{\widetilde{X}\in B(X,\tau)} 
{softmax}\!\left(\langle\mathbf{q}_+^{(t)} ,\mathbf{k}_{n,j}^{(t)}\rangle\right)}/
{\min\limits_{\widetilde{X}\in B(X,\tau)} 
{softmax}\!\left(\langle\mathbf{q}_+^{(t)} ,\mathbf{k}_{n,j}^{(t)}\rangle\right)}
\;\leq\; C.
\]
\end{lemma}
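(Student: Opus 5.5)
We bound the perturbation sensitivity of each logit and then of the softmax ratio. Write the softmax entry as
\[
\mathrm{softmax}(\langle \mathbf{q}_+^{(t)},\mathbf{k}_{n,j}^{(t)}\rangle)=\Big(1+\sum_{k\neq j}\exp\big(\langle\mathbf{q}_+^{(t)},\mathbf{k}_{n,k}^{(t)}-\mathbf{k}_{n,j}^{(t)}\rangle\big)\Big)^{-1},
\]
where the sum runs over the signal key $\mathbf{k}_+^{(t)}$ and the remaining noise keys. Each logit is the bilinear form $\widetilde{\mathbf{x}}^\top \mathbf{W}_Q^{(t)}\mathbf{W}_K^{(t)\top}\widetilde{\mathbf{x}}'$. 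Replacing $\widetilde{\mathbf{x}}=\mathbf{x}+\boldsymbol{\delta}$ and $\widetilde{\mathbf{x}}'=\mathbf{x}'+\boldsymbol{\delta}'$ with $\|\boldsymbol{\delta}\|_2,\|\boldsymbol{\delta}'\|_2\le\tau$ changes the logit by at most
\[
\tau\|\mathbf{W}_K^{(t)}\mathbf{W}_Q^{(t)\top}\mathbf{x}\|_2+\tau\|\mathbf{W}_Q^{(t)}\mathbf{W}_K^{(t)\top}\mathbf{x}'\|_2+\tau^2\|\mathbf{W}_Q^{(t)}\mathbf{W}_K^{(t)\top}\|_{op}.
\]
If every logit moves by at most $\Delta$ over $B(\mathbf{X},\tau)$, then every ratio $\exp(\cdot)$ in the denominator moves by a factor in $[e^{-2\Delta},e^{2\Delta}]$. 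The max/min ratio of the softmax is therefore at most $e^{4\Delta}$, and it suffices to show $\Delta\le \tfrac14\log(e/2)$, i.e.\ $\Delta=O(1)$ with a small constant.

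The first step is to control $\|\mathbf{W}_Q^{(t)}\|$ and $\|\mathbf{W}_K^{(t)}\|$ along the relevant directions. At initialization, $\|\mathbf{W}_Q^{(0)}\|_{op},\|\mathbf{W}_K^{(0)}\|_{op}=O(\sigma_h\sqrt{d})$ by standard Gaussian matrix bounds. The gradient updates, by Definition~\ref{def:gra_de}, lie in the span of the $\mathbf{q},\mathbf{k}$ vectors, whose norms are controlled by $\mathcal{C}(t)$. So a perturbation direction $\boldsymbol{\delta}$ produces a query/key displacement of norm $O(\tau\sigma_h\sqrt{d_h})$ up to lower-order terms; the update contribution is controlled via $\mathcal{C}(t)$ and the small learning rate in Condition~\ref{ass:1}. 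The second step is to pair this displacement with the unperturbed key or query. For the signal, Lemma~\ref{lem:innerpro_s1} and $\mathcal{C}(t)$ give $\|\mathbf{q}_+^{(t)}\|_2,\|\mathbf{k}_+^{(t)}\|_2=\Theta(\|\boldsymbol{\mu}\|_2\sigma_h\sqrt{d_h})$. Hence
\[
\Delta\lesssim \tau\|\boldsymbol{\mu}\|_2\sigma_h^2 d_h+\tau^2\sigma_h^2 d_h .
\]

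The third step uses $\mathcal{E}(t)$, which says all logits are at most $\log(d_h^{1/4})$ in magnitude during Stage II. I would argue that the logit growth is driven along $\boldsymbol{\mu}$-type directions, so the bilinear form has effective scale at most $O(\log d_h)/\|\boldsymbol{\mu}\|_2^2$ per unit input norm in the signal direction. The perturbed part then contributes at most $O(\log d_h)\cdot\tau/\|\boldsymbol{\mu}\|_2$. Under $\tau\le c\|\boldsymbol{\mu}\|_2/\log d_h$ with $c$ small, this is at most a small constant, and the $\tau^2$ term is even smaller. The noise keys $\mathbf{k}_{n,j}^{(t)}$ are handled the same way, using $\|\widetilde{\boldsymbol{\xi}}\|_2=\Theta(\sigma_p\sqrt d)$ from Lemma~\ref{lem:con_ineq} and the $o(1)$ noise inner products in $\mathcal{E}(t)$. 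Combining the steps gives $\Delta\le\tfrac14\log(e/2)$, hence the ratio bound $C\le e/2$.

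The main obstacle is the third step. Turning the scalar bound $|\langle\mathbf{q},\mathbf{k}\rangle|\le\log(d_h^{1/4})$ into a bound on the \emph{directional} sensitivity of $\mathbf{W}_Q\mathbf{W}_K^\top$ requires that the accumulated updates do not amplify arbitrary perturbation directions. I would try to write $\mathbf{W}_Q^{(t)}\mathbf{W}_K^{(t)\top}-\mathbf{W}_Q^{(0)}\mathbf{W}_K^{(0)\top}$ explicitly as a sum of outer products of data tokens weighted by the $\alpha,\beta$ coefficients. Its action on $\boldsymbol{\delta}$ would then be bounded by $\tau/\|\boldsymbol{\mu}\|_2$ (or $\tau/(\sigma_p\sqrt d)$) times the corresponding logit magnitude, plus near-orthogonality error terms from Lemma~\ref{lem:con_ineq}.
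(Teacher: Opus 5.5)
Your route is the paper's route. You expand each perturbed logit $\widetilde{\mathbf{x}}^\top\mathbf{W}_Q^{(t)}\mathbf{W}_K^{(t)\top}\widetilde{\mathbf{x}}'$ into the clean logit plus cross terms in $\boldsymbol{\delta},\boldsymbol{\delta}'$, argue the cross terms are a small constant because $\tau\lesssim\|\boldsymbol{\mu}\|_2/\log d_h$ and the logits are $O(\log d_h)$ under $\mathcal{E}(t)$, and then exponentiate; your $e^{4\Delta}$ bookkeeping is correct. The gap is your third step, and it is not a technicality. It cannot follow from $\mathcal{C}(t)$, $\mathcal{E}(t)$ and Condition~\ref{ass:1}, because all three hold at $t=0$, where the conclusion fails.

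These hypotheses constrain $\mathbf{W}_Q\mathbf{W}_K^\top$ only on the $O(NM)$ data tokens, while the adversary picks $\boldsymbol{\delta}$ anywhere in the $\tau$-ball of $\mathbb{R}^d$. Perturb only the key $\boldsymbol{\xi}_{n,j}$ by $\pm\tau\mathbf{u}$, where $\mathbf{u}$ is the unit vector along $\mathbf{W}_K\mathbf{q}_+^{\top}$. This moves only the logit $\langle\mathbf{q}_+,\mathbf{k}_{n,j}\rangle$, by $\pm L$ with $L=\tau\|\mathbf{W}_K\mathbf{q}_+^{\top}\|_2$, so the max/min softmax ratio is at least of order $e^{L}$. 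Since $\mathbf{W}_K^{(0)}\in\mathbb{R}^{d\times d_h}$ with $d\gg d_h$ has all singular values $\approx\sigma_h\sqrt{d}$, at initialization $L\approx\tau\sigma_h^2\|\boldsymbol{\mu}\|_2\sqrt{d d_h}$. Take the lower bound on $\sigma_h^2$ from Condition~\ref{ass:1} and $\tau\asymp\|\boldsymbol{\mu}\|_2/\log d_h$. Then $L\gtrsim\sqrt{d}\min\{1,\operatorname{SNR}^2\}/\mathrm{polylog}\gtrsim\sqrt{d_h}/(\epsilon\,\mathrm{polylog})$, using $N\cdot\operatorname{SNR}^2=\Omega(1)$ and $d=\widetilde{\Omega}(\epsilon^{-2}N^2 d_h)$. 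So the ratio is exponentially large, not at most $e/2$.

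Your proposed repair, expanding $\mathbf{W}_Q^{(t)}\mathbf{W}_K^{(t)\top}-\mathbf{W}_Q^{(0)}\mathbf{W}_K^{(0)\top}$ in token outer products, does not touch the culprit, which is the initialization term $\mathbf{W}_Q^{(0)}\mathbf{W}_K^{(0)\top}$ itself. Nothing in $\mathcal{C}(t)$ or $\mathcal{E}(t)$ rules out the same behaviour at later times. Step~2 has the same defect: $\|\boldsymbol{\delta}^\top\mathbf{W}_Q\|_2=O(\tau\sigma_h\sqrt{d_h})$ holds only for $\boldsymbol{\delta}$ independent of the weights, whereas adversarially it is $\approx\tau\sigma_h\sqrt{d}$. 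Even your bound $\tau\|\boldsymbol{\mu}\|_2\sigma_h^2 d_h$ can be as large as $(\tau/\|\boldsymbol{\mu}\|_2)\sqrt{d_h}/\mathrm{polylog}$, which is not $O(1)$.

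The paper's proof has exactly this hole. It writes the same four-term decomposition of the logit difference and declares the perturbation terms $\pm o(1)$ from $\tau\le O(\|\boldsymbol{\mu}\|_2/\log d_h)$ and the $\log(d_h^{1/2})$ logit bound. That is justified only if a perturbation acts like a rescaling of the token along the signal directions. The same assumption is explicit in the Case~2 step $\max_{\widetilde{\mathbf{X}}}\langle\widetilde{\mathbf{q}}_+,\widetilde{\mathbf{k}}_+\rangle-\langle\mathbf{q}_+,\mathbf{k}_+\rangle=((1+\tau/\|\boldsymbol{\mu}\|_2)^2-1)\langle\mathbf{q}_+,\mathbf{k}_+\rangle$.

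So you cannot close step~3 under the lemma's hypotheses; you need an extra restriction. One option is to confine perturbations to $\operatorname{span}\{\boldsymbol{\mu}_+,\boldsymbol{\mu}_-\}$, which makes your step~3 rigorous. Each first-order cross term is then $O(\tau/\|\boldsymbol{\mu}\|_2)$ times one of the logits the argument assumes are $O(\log d_h)$, and the second-order term is $O(\tau^2\log d_h/\|\boldsymbol{\mu}\|_2^2)$. So $\Delta$ is a small constant when the constant hidden in $\tau\le O(\|\boldsymbol{\mu}\|_2/\log d_h)$ is small. The other option is to keep general $\ell_2$ perturbations but shrink the radius until $\tau\|\mathbf{W}_K^{(t)}\mathbf{W}_Q^{(t)\top}\widetilde{\mathbf{x}}\|_2$ and $\tau\|\mathbf{W}_Q^{(t)}\mathbf{W}_K^{(t)\top}\widetilde{\mathbf{x}}\|_2$ are $o(1)$ for the tokens involved. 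At initialization-scale weights this radius is polynomially smaller than $\|\boldsymbol{\mu}\|_2/\log d_h$.
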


\begin{proof}
For any perturbed query-key pair, the softmax weight can be written as
\begin{equation}\label{eq:softmax_perturb}
\begin{aligned}
softmax\!\left(\langle\mathbf{q}_+^{(t)} ,\mathbf{k}_{n,j}^{(t)}\rangle\right)
&= \frac{\exp\!\left(\langle{\boldsymbol{q}}_+^{(t)}, {\boldsymbol{k}}_{n,j}^{(t)}\rangle\right)}{
\exp\!\left(\langle{\boldsymbol{q}}_+^{(t)}, {\boldsymbol{k}}_+^{(t)}\rangle\right)
+ \sum_{j'=2}^M \exp\!\left(\langle{\boldsymbol{q}}_+^{(t)}, {\boldsymbol{k}}_{n,j'}^{(t)}\rangle\right)} \\
&= \frac{1}{\exp\!\big(\langle{\boldsymbol{q}}_+^{(t)},{\boldsymbol{k}}_+^{(t)}\rangle 
- \langle{\boldsymbol{q}}_+^{(t)}, {\boldsymbol{k}}_{n,j}^{(t)}\rangle\big) 
+ \sum_{j'=2}^M \exp\!\big(\langle{\boldsymbol{q}}_+^{(t)},{\boldsymbol{k}}_{n,j'}^{(t)}\rangle 
- \langle{\boldsymbol{q}}_+^{(t)},{\boldsymbol{k}}_{n,j}^{(t)}\rangle\big)}.
\end{aligned}
\end{equation}

Next, we analyze the difference in the logits. Expanding the perturbation terms yields
\begin{equation}\label{eq:logit_diff}
\begin{aligned}
&\langle{\boldsymbol{q}}_+^{(t)}, {\boldsymbol{k}}_+^{(t)}\rangle 
- \langle{\boldsymbol{q}}_+^{(t)}, {\boldsymbol{k}}_{n,j}^{(t)}\rangle \\
&= \langle \boldsymbol{q}_+, \boldsymbol{k}_+ - \boldsymbol{k}_{n,j}\rangle
+ \langle \tau_+ \mathbf{W}_Q^{(t)}, \boldsymbol{k}_+ - \boldsymbol{k}_{n,j}\rangle \\
&\quad + \langle \boldsymbol{q}_+, \tau_+\mathbf{W}_K^{(t)} - \tau_{n,j}\mathbf{W}_K^{(t)}\rangle
+ \langle \tau_+ \mathbf{W}_Q^{(t)}, \tau_+\mathbf{W}_K^{(t)} - \tau_{n,j}\mathbf{W}_K^{(t)}\rangle \\
&= \langle \boldsymbol{q}_+^{(t)}, \boldsymbol{k}_+^{(t)} - \boldsymbol{k}_{n,j}^{(t)}\rangle \;\pm o(1).
\end{aligned}
\end{equation}

The first equality follows from decomposing the perturbed terms, while the second uses the perturbation bound 
$\tau \leq O(\tfrac{\|\boldsymbol{\boldsymbol{\mu}}\|_2}{\log d_h})$, together with the assumption that the logit magnitudes satisfy
\[
|\langle \boldsymbol{q}_\pm^{(t)}, \boldsymbol{k}_\pm^{(t)} \rangle|,\;
|\langle \boldsymbol{q}_\pm^{(t)}, \boldsymbol{k}_{n,j}^{(t)} \rangle|,\;
|\langle \boldsymbol{q}_{n,i}^{(t)}, \boldsymbol{k}_\pm^{(t)} \rangle|,\;
|\langle \boldsymbol{q}_{n,i}^{(t)}, \boldsymbol{k}_{n,j}^{(t)} \rangle|
\;\leq\; \log\!\left( d_h^{1/2}\right).
\]
An analogous relation holds for differences involving $\widetilde{\boldsymbol{k}}_{n,j'}^{(t)}$ as well.

Substituting~\eqref{eq:logit_diff} into~\eqref{eq:softmax_perturb}, we obtain
\[
\frac{1}{C}\,softmax\!\left(\langle\mathbf{q}_+^{(t)} ,\mathbf{k}_{n,j}^{(t)}\rangle\right)
\;\leq\;
softmax\!\left(\langle\mathbf{q}_+^{(t)} ,\mathbf{k}_{n,j}^{(t)}\rangle\right)
\;\leq\;
C\,softmax\!\left(\langle\mathbf{q}_+^{(t)} ,\mathbf{k}_{n,j}^{(t)}\rangle\right),
\]
for some absolute constant $1 \leq C \leq e/2$. This establishes the claim.

Thus, in the following discussion, we show that the updates related to $\boldsymbol q^{(t)}$ and $\boldsymbol k^{(t)}$ during adversarial training can be bounded, while the effect of the perturbation does not accumulate over time. Since $C$ is a very small function, when computing the single-step update, we approximate the perturbed softmax at the previous time step by its clean state.
\end{proof}
    
\subsubsection{ Proof of Claim~\ref{cl:1}}\label{sec:cl1}
By the results of Stage I, we have
\begin{equation*}
|\langle\boldsymbol q_\pm^{(T_1)},\boldsymbol k_\pm^{(T_1)} \rangle|, |\langle\boldsymbol q_\pm^{(T_1)},\boldsymbol{k}_{n,j}^{(T_1)} \rangle|, |\langle\boldsymbol{q}_{n,i}^{(T_1)},\boldsymbol k_\pm^{(T_1)} \rangle|, |\langle\boldsymbol{q}_{n,i}^{(T_1)},\boldsymbol{k}_{n,j}^{(T_1)} \rangle| = o(1)
\end{equation*}

Assume that $\mathcal{D}(T_1), \ldots, \mathcal{D}(t - 1)$ ($t \in [T_1, T_2]$) are true, then $\langle\boldsymbol q_\pm^{(s)},\boldsymbol k_\pm^{(s)}\rangle$, $\langle\boldsymbol{q}_{n,i}^{(s)},\boldsymbol k_\pm^{(s)}\rangle$ are monotonically non-decreasing and $\langle q_\pm^{(s)},\boldsymbol{k}_{n,j}^{(s)}\rangle$, $\langle\boldsymbol{q}_{n,i}^{(s)},\boldsymbol{k}_{n,j}^{(s)}\rangle$ are monotonically non-increasing for $s \in [T_1, t - 1]$, so we have
\begin{equation*}
\langle\boldsymbol q_\pm^{(s)},\boldsymbol k_\pm^{(s)} \rangle, \langle\boldsymbol{q}_{n,i}^{(s)},\boldsymbol k_\pm^{(s)} \rangle \geq -o(1),
\end{equation*}
\begin{equation*}
\langle\boldsymbol q_\pm^{(s)},\boldsymbol{k}_{n,j}^{(s)} \rangle, \langle\boldsymbol{q}_{n,i}^{(s)},\boldsymbol{k}_{n,j}^{(s)} \rangle \leq o(1),
\end{equation*}

for  $s \in [T_1, t]$.  Further we have the lower bounds for the attention on signal  $\boldsymbol{\mu}_+ \text{ as follows for } s \in [T_1, t]$:
\begin{equation}
\begin{aligned}
\text{softmax}(\langle\boldsymbol q_\pm^{(s)},\boldsymbol k_\pm^{(s)} \rangle) &\geq \frac{\exp(-o(1))}{\exp(-o(1)) + (M - 1) \exp(o(1))} \\
&= \frac{1}{1 + (M - 1) \exp(o(1))} \\
&= \frac{1}{1 + (M - 1) + (M - 1) o(1)} \\
&= \frac{1}{M} - o(1),
\end{aligned}
\end{equation} 
where the second equality is by  $\exp(o(1)) = 1 + o(1)$.  Similarly, we have 
\begin{equation}
\text{softmax}(\langle\boldsymbol{q}_{n,i}^{(s)},\boldsymbol k_\pm^{(s)} \rangle) \geq \frac{1}{M} - o(1).
\end{equation}

Plugging them in the update rule for  $\gamma_{V_+}^{(s)}$ shown in Lemma~\ref{lem:update_v} and we have
\begin{equation}
\begin{aligned}
&\gamma_{V_+}^{(s+1)} - \gamma_{V_+}^{(s)} \\
&= - \frac{\eta \langle\widetilde{\boldsymbol{\mu}}_+,\widetilde{\boldsymbol{\mu}}_+^{(t)}\rangle}{NM} \sum_{n \in S_+} \widetilde{\ell}_n'^{(t)} \left( \frac{\exp(\langle {\boldsymbol{q}}_+^{(t)}, {\boldsymbol{k}}_+^{(t)} \rangle)}{\exp(\langle {\boldsymbol{q}}_+^{(t)}, {\boldsymbol{k}}_+^{(t)} \rangle) + \sum_{k=2}^M \exp(\langle {\boldsymbol{q}}_+^{(t)}, {\boldsymbol{k}}_{n,k}^{(t)} \rangle)} \right. \nonumber \\
    &\quad\quad\quad\quad\quad\quad\quad\quad\quad \left. + \sum_{j=2}^M \frac{\exp(\langle {\boldsymbol{q}}_{n,j}^{(t)}, {\boldsymbol{k}}_+^{(t)} \rangle)}{\exp(\langle {\boldsymbol{q}}_{n,j}^{(t)}, {\boldsymbol{k}}_+^{(t)} \rangle) + \sum_{k=2}^M \exp(\langle {\boldsymbol{q}}_{n,j}^{(t)}, {\boldsymbol{k}}_{n,k}^{(t)} \rangle)} \right)  
    \\&+\sum_{n \in S_+}\widetilde{\ell}_n'^{(t)}\sum_{i=2}^M\frac{-\eta\langle\widetilde{\boldsymbol{\mu}}_+,\widetilde{\boldsymbol{\xi}}_{n,i}^{(t)}\rangle}{NM}\bigg(\frac{\exp(\langle {\boldsymbol{q}}_+^{(t)}, {\boldsymbol{k}}_{n,i}^{(t)} \rangle)}{\exp(\langle {\boldsymbol{q}}_+^{(t)}, {\boldsymbol{k}}_+^{(t)} \rangle) + \sum_{k=2}^M \exp(\langle {\boldsymbol{q}}_+^{(t)}, {\boldsymbol{k}}_{n,k}^{(t)} \rangle)} 
    \\&\quad\quad\quad\quad\quad\quad\quad\quad\quad+\sum_{j=2}^M\frac{\exp(\langle {\boldsymbol{q}}_{n,j}^{(t)}, {\boldsymbol{k}}_{n,i}^{(t)} \rangle)}{\exp(\langle {\boldsymbol{q}}_{n,j}^{(t)}, {\boldsymbol{k}}_+^{(t)} \rangle) + \sum_{k=2}^M \exp(\langle {\boldsymbol{q}}_{n,j}^{(t)}, {\boldsymbol{k}}_{n,k}^{(t)} \rangle)}\bigg)\\
&\geq \frac{\eta (\|\boldsymbol{\mu}\|_2-\tau)_2^2}{NMC} \cdot \frac{N}{4} \cdot \left( \frac{1}{2} \pm o(1) \right) \cdot M \left( \frac{1}{M} \pm o(1) \right)\\
&-\frac{\eta (\|\boldsymbol{\mu}\|\tau+\sigma_p\tau\sqrt{2\log(4NM/\delta)}+\tau^2)}{NMC} \cdot \frac{N}{4} \cdot \left( \frac{1}{2} \pm o(1) \right) \cdot M(M-1) \left( \frac{1}{M} \pm o(1) \right)\\
&\geq \frac{\eta \left((\|\boldsymbol{\mu}\|_2-\tau)^2-(\|\boldsymbol{\mu}\|\tau+\sigma_p\tau\sqrt{2\log(4NM/\delta)}+\tau^2)\right)}{10MC}=O\left(\frac{\eta (\|\boldsymbol{\mu}\|_2-{\tau})^2}{10MC}\right),    
\end{aligned}
\end{equation} 

for  $s \in [T_1, t]$.  The first inequality is by Lemma~\ref{lemma:rsoftmax1}. Then by Definition~\ref{def:sca_v} and taking a summation, we have
\begin{equation}\label{eq:24}
\begin{aligned}
V_+^{(t+1)} &\geq V_+^{(T_1)} + (t - T_1 + 1) \frac{\eta (\|\boldsymbol{\mu}\|_2-{\tau})^2 \|\boldsymbol{w}_O\|_2^2}{10MC} \\
&\geq \eta C_3 (\|\boldsymbol{\mu}\|_2-{\tau})^2 \|\boldsymbol{w}_O\|_2^2 (t - T_1 + 1),    
\end{aligned}
\end{equation}
where the last inequality is by  $V_+^{(T_1)} \geq 0$  and  $M = \Theta(1)$.  Similarly, we have
\begin{equation}
V_-^{(t+1)} \leq -\eta C_3 (\|\boldsymbol{\mu}\|_2- \textcolor{red}{\tau})^2 \|\boldsymbol{w}_O\|_2^2 (t - T_1 + 1).
\end{equation}

By $|\rho_{V_{n,i}^{(t+1)}} - \rho_{V_{n,i}^{(t)}}| \leq \eta(\|\boldsymbol{\mu}\|\tau+\sigma_p\tau\sqrt{2\log(4NM/\delta)}+\tau^2+ \frac{ 2\sigma_p^2 d}{N})$ in (\ref{eq:rho_up1}) and taking a summation, we have
\begin{align}\label{eq:25}
|V_{n,i}^{(t+1)}| &\leq |V_{n,i}^{(t)}| + (t - T_1 + 1) \eta(\|\boldsymbol{\mu}\|\tau+\sigma_p\tau\sqrt{2\log(4NM/\delta)}+\tau^2+ \frac{ 2\sigma_p^2 d}{N}) \|\boldsymbol{w}_O\|_2^2 .
\end{align} 

Combining (\ref{eq:24}) and (\ref{eq:25}) we have
\begin{equation}
\begin{aligned}
&V_+^{(t+1)} - 3M \cdot |V_{n,i}^{(t+1)}| \\
&\geq V_+^{(T_1)} + (t - T_1 + 1) \frac{\eta (\|\boldsymbol{\mu}\|_2-\tau)^2}{10M} \|\boldsymbol{w}_O\|_2^2 \\
&\quad- 3M \cdot (|V_{n,i}^{(T_1)}| + (t - T_1 + 1) \eta(\max\{\|\boldsymbol{\mu}\|\tau+\tau^2, \frac{ 2C_p^2\sigma_p^2 d}{N}\}) \|\boldsymbol{w}_O\|_2^2) \\
&\geq V_+^{(T_1)} - 3M \cdot |V_{n,i}^{(T_1)}| + (t - T_1 + 1) \frac{\eta (\|\boldsymbol{\mu}\|_2-\tau)^2}{10M} \|\boldsymbol{w}_O\|_2^2 \\
&\quad- 3M \cdot (|V_{n,i}^{(T_1)}| + (t - T_1 + 1) \eta(\max\{\|\boldsymbol{\mu}\|\tau+\tau^2, \frac{ 2C_p^2\sigma_p^2 d}{N}\}) \|\boldsymbol{w}_O\|_2^2) \\
&\geq 0,
\end{aligned}
\end{equation} 

where the last inequality is by $V_+^{(T_1)} \geq 3M \cdot |V_{n,i}^{(T_1)}|$ and requires $N \cdot \text{SNR}^2 \geq 60M^2 C_p^2$. The proof for $V_-^{(t+1)} \leq -3M \cdot |V_{n,i}^{(t+1)}|$ is the same.

Next, we prove the upper bound for $V_+$ and $V_{n,i}$. Based on the upper bound of attention($<1$) and $-\widetilde{\ell}'_n \leq 1$ we have
\begin{equation}
\begin{aligned}
\gamma_{V_+}^{(s+1)} &\leq \gamma_{V_+}^{(s)} - \frac{\eta}{NM} \sum_{n \in S_+} \widetilde{\ell}_n^{\prime(s)} ((\|\boldsymbol{\mu}\|_2+\tau)^2 + \sum_{j=2}^{M} (\|\boldsymbol{\mu}\|_2+\tau)^2) \\
&- \frac{\eta}{NM} \sum_{n \in S_+} \widetilde{\ell}_n^{\prime(s)} (\|\boldsymbol{\mu}\|\tau + \sigma_p \tau \sqrt{2\log(4NM/\delta)} + \tau^2)\cdot M \\
&\leq \gamma_{V_+}^{(s)} + \frac{3 \eta (\|\boldsymbol{\mu}\|_2+\tau)^2}{4} \\
&\leq \gamma_{V_+}^{(s)} + \eta C_4 (\|\boldsymbol{\mu}\|_2+\tau)^2
\end{aligned}
\end{equation} 

Then we can get that
\begin{equation}
\begin{aligned}
|V_+^{(t+1)}| &\leq V_+^{(T_1)} + (\gamma_{V_+}^{(t+1)} - \gamma_{V_+}^{(T_1)}) \|\boldsymbol{w}_O\|_2 \\
&\leq V_+^{(T_1)} + \sum_{s=T_1}^{t} \eta C_4 (\|\boldsymbol{\mu}\|_2+\tau)^2 \|\boldsymbol{w}_O\|_2^2 \\
&\leq O(d_h^{-\frac{1}{4}}) + \eta C_4 (\|\boldsymbol{\mu}\|_2+\tau)^2 \|\boldsymbol{w}_O\|_2^2 (t - T_1 + 1)
\end{aligned}
\end{equation} 

where the first inequality is by the monotonicity of $\gamma_{V_+}$ and the definition of $V_+$, the last inequality is by the result of stage 1 where $V_+^{(T_1)} = O(d^{-1})$. Similarly, we have
\begin{equation}
|V_-^{(t+1)}| \leq O(d_h^{-\frac{1}{4}}) + \eta C_4 (\|\boldsymbol{\mu}\|_2+\tau)^2 \|\boldsymbol{w}_O\|_2^2 (t - T_1 + 1)
\end{equation} 

which completes the proof for the upper bound of $V_+$.

Expanding (\ref{eq:25}) yields
\begin{equation}
\begin{aligned}
|V_{n,i}^{(t+1)}| &\leq |V_{n,i}^{(T_1)}| + \eta(\max\{\|\boldsymbol{\mu}\|\tau+\tau^2, \frac{ 2C_p^2\sigma_p^2 d}{N}\}) \cdot \|\boldsymbol{w}_O\|_2^2 (t - T_1 + 1) \\
&\leq O(d_h^{-\frac{1}{4}}) + \eta C_4 (\|\boldsymbol{\mu}\|_2+\tau)_2^2 \|\boldsymbol{w}_O\|_2 (t - T_1 + 1)
\end{aligned}
\end{equation} 

where the last inequality is by the result of phase 1 where $|V_{n,i}^{(T_1)}| = O(d_h^{-\frac{1}{4}})$ and the condition that $N \cdot \text{SNR}^2 \geq \Omega(1)$.

\subsubsection{Proof of Claim 2}
By the results of \ref{sec:low_qk}, we have the dynamic of $\langle\boldsymbol q,\boldsymbol k \rangle$ as follows
\begin{equation}
\begin{aligned}
&\langle\boldsymbol{q}_+^{(s+1)},\boldsymbol{k}_+^{(s+1)} \rangle - \langle\boldsymbol{q}_+^{(s)},\boldsymbol{k}_+^{(s)} \rangle \\
&\geq \frac{\eta^2 C_6 (\|\boldsymbol{\mu}\|_2-\tau)^4\|\boldsymbol{\mu}\|_2^2 \|\boldsymbol{w}_O\|_2^2 \sigma_h^2 d_h (s - T_1)}{N} \cdot \frac{1}{\exp(\Lambda_{n,+,j}^{(s)})},
\end{aligned}
\end{equation}
\begin{equation}
\begin{aligned}
&\langle\boldsymbol{q}_-^{(s+1)},\boldsymbol{k}_-^{(s+1)} \rangle - \langle\boldsymbol{q}_-^{(s)},\boldsymbol{k}_-^{(s)} \rangle \\
&\geq \frac{\eta^2 C_6 (\|\boldsymbol{\mu}\|_2-\tau)^4\|\boldsymbol{\mu}\|_2^2 \|\boldsymbol{w}_O\|_2^2 \sigma_h^2 d_h (s - T_1)}{N} \cdot \frac{1}{\exp(\Lambda_{n,-,j}^{(s)})},
\end{aligned} 
\end{equation}
\begin{equation}
\begin{aligned}
&\langle\boldsymbol{q}_+^{(s+1)},\boldsymbol{k}_{n,j}^{(s+1)} \rangle - \langle\boldsymbol{q}_+^{(s)},\boldsymbol{k}_{n,j}^{(s)} \rangle \\
&\leq -\frac{\eta^2 C_6 (\sigma_p^2 d+\sigma_p\tau\sqrt{2\log(4NM/\delta)}+\tau^2) (\|\boldsymbol{\mu}\|_2-\tau)^2\|\boldsymbol{\mu}\|_2^2 \|\boldsymbol{w}_O\|_2^2 \sigma_h^2 d_h (s - T_1)}{N} \cdot \frac{1}{\exp(\Lambda_{n,+,j}^{(s)})},
\end{aligned}
\end{equation}
\begin{equation}
\begin{aligned}
&\langle\boldsymbol{q}_-^{(s+1)},\boldsymbol{k}_{n,j}^{(s+1)} \rangle - \langle\boldsymbol{q}_-^{(s)},\boldsymbol{k}_{n,j}^{(s)} \rangle \\
&\leq -\frac{\eta^2 C_6 (\sigma_p^2 d+\sigma_p\tau\sqrt{2\log(4NM/\delta)}+\tau^2) (\|\boldsymbol{\mu}\|_2-\tau)^2\|\boldsymbol{\mu}\|_2^2 \|\boldsymbol{w}_O\|_2^2 \sigma_h^2 d_h (s - T_1)}{N} \cdot \frac{1}{\exp(\Lambda_{n,-,j}^{(s)})}
\end{aligned} 
\end{equation}
\begin{align}
\langle &\boldsymbol{q}_{n,i}^{(s+1)},\boldsymbol{k}_+^{(s+1)} \rangle - \langle\boldsymbol{q}_{n,i}^{(s)},\boldsymbol{k}_+^{(s)} \rangle \nonumber\\&\geq \frac{\eta^2 C_6 (\sigma_p^2 d+\sigma_p\tau\sqrt{2\log(4NM/\delta)}+\tau^2) (\|\boldsymbol{\mu}\|_2-\tau)^2\|\boldsymbol{\mu}\|_2^2 \|\boldsymbol{w}_O\|_2^2 \sigma_h^2 d_h (s - T_1)}{N} \cdot \frac{1}{\exp(\Lambda_{n,i,+,j})}  \end{align}
\begin{align}
&\langle\boldsymbol{q}_{n,i}^{(s+1)},\boldsymbol{k}_-^{(s+1)} \rangle - \langle\boldsymbol{q}_{n,i}^{(s)},\boldsymbol{k}_-^{(s)} \rangle \nonumber\\&\geq \frac{\eta^2 C_6 (\sigma_p^2 d+\sigma_p\tau\sqrt{2\log(4NM/\delta)}+\tau^2) (\|\boldsymbol{\mu}\|_2-\tau)^2\|\boldsymbol{\mu}\|_2^2 \|\boldsymbol{w}_O\|_2^2 \sigma_h^2 d_h (s - T_1)}{N} \cdot \frac{1}{\exp(\Lambda_{n,i,-,j})}     \end{align}
\begin{align}
&\langle\boldsymbol{q}_{n,i}^{(s+1)},\boldsymbol{k}_{n,j}^{(s+1)} \rangle - \langle\boldsymbol{q}_{n,i}^{(s)},\boldsymbol{k}_{n,j}^{(s)} \rangle \nonumber\\&\leq -\frac{\eta^2 C_6 (\sigma_p^2 d+\sigma_p\tau\sqrt{2\log(4NM/\delta)}+\tau^2)^2  \|\boldsymbol{\mu}\|_2^2 \|\boldsymbol{w}_O\|_2^2 \sigma_h^2 d_h (s - T_1)}{N} \cdot \frac{1}{\exp(\Lambda_{n,i,\pm,j})}
\end{align}
for $s \in [T_1, t]$. The seven equations above show that $\langle\boldsymbol q_\pm^{(s)},\boldsymbol k_\pm^{(s)} \rangle$, $\langle\boldsymbol{q}_{n,i}^{(s)},\boldsymbol k_\pm^{(s)} \rangle$ are monotonically increasing and $\langle\boldsymbol q_\pm^{(s)},\boldsymbol{k}_{n,j}^{(s)} \rangle$, $\langle\boldsymbol{q}_{n,i}^{(s)},\boldsymbol{k}_{n,j}^{(s)} \rangle$ are monotonically decreasing. Next, we provide the logarithmic increasing lower bounds of $\Lambda_{n,\pm,j}^{(s+1)}$ and $\Lambda_{n,\pm,j}^{(s+1)}$.

We have
\begin{equation}
\begin{aligned}
&\Lambda_{n,+,j}^{(s+1)} - \Lambda_{n,+,j}^{(s)}= (\langle\boldsymbol{q}_+^{(s+1)},\boldsymbol{k}_+^{(s+1)} \rangle - \langle\boldsymbol{q}_+^{(s)},\boldsymbol{k}_+^{(s)} \rangle) - (\langle\boldsymbol{q}_{n,j}^{(s+1)},\boldsymbol{k}_{n,j}^{(s+1)} \rangle - \langle\boldsymbol{q}_{n,j}^{(s)},\boldsymbol{k}_{n,j}^{(s)} \rangle) \\
&\geq \frac{\eta^2 C_6 (\|\boldsymbol{\mu}\|_2-\tau)^4\|\boldsymbol{\mu}\|_2^2 \|\boldsymbol{w}_O\|_2^2 \sigma_h^2 d_h (s - T_1)}{N} \cdot \frac{1}{\exp(\Lambda_{n,+,j}^{(s)})} \\
&+ \frac{\eta^2 C_6 (\sigma_p^2 d+\sigma_p\tau\sqrt{2\log(4NM/\delta)}+\tau^2) (\|\boldsymbol{\mu}\|_2-\tau)^2\|\boldsymbol{\mu}\|_2^2 \|\boldsymbol{w}_O\|_2^2 \sigma_h^2 d_h}{N} (s - T_1)(s - T_1 + 1) \cdot \frac{1}{\exp(\Lambda_{n,+,j}^{(s)})}\\
&\geq \frac{\eta^2 C_7 \max \{ \|\boldsymbol{\mu}\|_2^2, \sigma_p^2 d \} (\|\boldsymbol{\mu}\|_2-\tau)^2\|\boldsymbol{\mu}\|_2^2 \|\boldsymbol{w}_O\|_2^2 \sigma_h^2 d_h (s - T_1)}{N} \cdot \frac{1}{\exp(\Lambda_{n,+,j}^{(s)})}\\
&\geq \frac{\eta^2 C_7 (\|\boldsymbol{\mu}\|_2-\tau)^2\|\boldsymbol{\mu}\|_2^2 \|\boldsymbol{w}_O\|_2^2 d_h^{\frac{1}{2}} (s - T_1)^2}{N (\log(6N^2 M^2 / \delta))^2} \cdot \frac{1}{\exp(\Lambda_{n,+,j}^{(s)})}
\end{aligned}
\end{equation}

where the last inequality is by $\sigma_h^2 \geq \min\{\|\boldsymbol{\mu}\|_2^{-2}, (\sigma_p^2 d)^{-1}\} d_h^{-\frac{1}{2}} (\log(6N^2M^2 / \delta))^{-2}$. Multiply both sides simultaneously by $\exp(\Lambda_{n,+,j}^{(s)})$ and get
\begin{equation}
\exp(\Lambda_{n,+,j}^{(s)}) (\Lambda_{n,+,j}^{(s+1)} - \Lambda_{n,+,j}^{(s)}) \geq \frac{\eta^2 C_7 (\|\boldsymbol{\mu}\|_2-\tau)^2\|\boldsymbol{\mu}\|_2^2 \|\boldsymbol{w}_O\|_2^2 d_h^{\frac{1}{2}} (s - T_1)}{N (\log(6N^2M^2 / \delta))^2} \cdot \frac{1}{\exp(\Lambda_{n,+,j}^{(s)})}.
\end{equation} 

Taking a summation from $T_1$ to $t$ and get
\begin{align}
\sum_{s=T_1}^{t} \exp(\Lambda_{n,+,j}^{(s)}) (\Lambda_{n,+,j}^{(s+1)} - \Lambda_{n,+,j}^{(s)}) \geq \sum_{s=T_1}^{t} \frac{\eta^2 C_7 (\|\boldsymbol{\mu}\|_2-\tau)^2\|\boldsymbol{\mu}\|_2^2 \|\boldsymbol{w}_O\|_2^2 d_h^{\frac{1}{2}} (s - T_1)}{N (\log(6N^2M^2 / \delta))^2} \\
\geq \frac{\eta^2 C_8 (\|\boldsymbol{\mu}\|_2-\tau)^2\|\boldsymbol{\mu}\|_2^2 \|\boldsymbol{w}_O\|_2^2 d_h^{\frac{1}{2}}}{N (\log(6N^2M^2 / \delta))^2} \cdot (t - T_1)(t - T_1 + 1).
\end{align} 

By the property that $\Lambda_{n,+,j}^{(t)}$ is monotonically increasing, we have
\begin{equation}
\int_{\Lambda_{n,+,j}^{(T_1)}}^{T_1} \exp(x) dx \geq \sum_{s=T_1}^{t} \exp(\Lambda_{n,+,j}^{(s)} )(\Lambda_{n,+,j}^{(s+1)} - \Lambda_{n,+,j}^{(s)}) \geq \frac{\eta^2 C_8 (\|\boldsymbol{\mu}\|_2-\tau)^2\|\boldsymbol{\mu}\|_2^2 \|\boldsymbol{w}_O\|_2^2 d_h^{\frac{1}{2}}}{N (\log(6N^2M^2 / \delta))^2} \cdot (t - T_1)(t - T_1 + 1).
\end{equation} 

By
$\int_{\Lambda_{n,+,j}^{(T_1)}}^{\Lambda_{n,+,j}^{(t+1)}} \exp(x) dx = \exp(\Lambda_{n,+,j}^{(t+1)}) - \exp(\Lambda_{n,+,j}^{(T_1)})$ we get
\[\Lambda_{n,+,j}^{(t+1)} \geq \log \left( \exp(\Lambda_{n,+,j}^{(T_1)}) + \frac{\eta^2 C_8 (\|\boldsymbol{\mu}\|_2-\tau)^2\|\boldsymbol{\mu}\|_2^2 \|\boldsymbol{w}_O\|_2^2 d_h^{\frac{1}{2}}}{N (\log(6N^2M^2/\delta))^2} \cdot (t - T_1)(t - T_1 + 1) \right).\]

Similarly, we have
\[
\Lambda_{n,-,j}^{(t+1)} \geq \log \left( \exp(\Lambda_{n,-,j}^{(T_1)}) + \frac{\eta^2 C_8 (\|\boldsymbol{\mu}\|_2-\tau)^2\|\boldsymbol{\mu}\|_2^2 \|\boldsymbol{w}_O\|_2^2 d_h^{\frac{1}{2}}}{N (\log(6N^2M^2/\delta))^2} \cdot (t - T_1)(t - T_1 + 1) \right).
\]

Thus, we have
\begin{equation}
\begin{aligned}
&\Lambda_{n,i,+,j}^{(s+1)} - \Lambda_{n,i,+,j}^{(s)} = \langle (\boldsymbol{q}_{n,i}^{(s+1)},\boldsymbol{k}_{+}^{(s+1)}) - (\boldsymbol{q}_{n,i}^{(s)},\boldsymbol{k}_{+}^{(s)}) \rangle - \langle (\boldsymbol{q}_{n,i}^{(s+1)},\boldsymbol{k}_{n,j}^{(s+1)}) - (\boldsymbol{q}_{n,i}^{(s)},\boldsymbol{k}_{n,j}^{(s)}) \rangle \\
&\geq \frac{\eta^2 C_6 (\sigma_p^2 d+\sigma_p\tau\sqrt{2\log(4NM/\delta)}+\tau^2) (\|\boldsymbol{\mu}\|_2-\tau)^2\|\boldsymbol{\mu}\|_2^2 \|\boldsymbol{w}_O\|_2^2 \sigma_h^2 d_h (s - T_1)}{N} \cdot \frac{1}{\exp(\Lambda_{n,i,+,j}^{(s)})} \\
&+ \frac{\eta^2 C_6 (\sigma_p^2 d+\sigma_p\tau\sqrt{2\log(4NM/\delta)}+\tau^2)^2 \|\boldsymbol{\mu}\|_2^2 \|\boldsymbol{w}_O\|_2^2 \sigma_h^2 d_h (s - T_1)}{N} \cdot \frac{1}{\exp(\Lambda_{n,i,+,j}^{(s)})} \\
&\geq \frac{\eta^2 C_7 \max\{\|\boldsymbol{\mu}\|_2^2, \sigma_p^2 d\}  (\sigma_p^2 d+\sigma_p\tau\sqrt{2\log(4NM/\delta)}+\tau^2)\|\boldsymbol{\mu}\|_2^2 \|\boldsymbol{w}_O\|_2^2 \sigma_h^2 d_h (s - T_1)}{N} \cdot \frac{1}{\exp(\Lambda_{n,i,+,j}^{(s)})} \\
&\geq \frac{\eta^2 C_7 (\sigma_p^2 d+\sigma_p\tau\sqrt{2\log(4NM/\delta)}+\tau^2) \|\boldsymbol{\mu}\|_2^2 \|\boldsymbol{w}_O\|_2^2 d_h^{\frac{1}{2}} (s - T_1)}{N (\log(6N^2M^2/\delta))^2} \cdot \frac{1}{\exp(\Lambda_{n,i,+,j}^{(s)})}.
\end{aligned}
\end{equation}
Then using the similar method as for \(\Lambda_{n,i,+,j}^{(t)}\), we get
\[
\Lambda_{n,i,+,j}^{(t+1)} \geq \log \left( \exp(\Lambda_{n,i,+,j}^{(T_1)}) + \frac{\eta^2 C_8 (\sigma_p^2 d+\sigma_p\tau\sqrt{2\log(4NM/\delta)}+\tau^2) \|\boldsymbol{\mu}\|_2^2 \|\boldsymbol{w}_O\|_2^2 d_h^{\frac{1}{2}}}{N (\log(6N^2M^2/\delta))^2} \cdot (t - T_1)(t - T_1 + 1) \right),
\]
which complete the proof. The proof for Claim 3 is in Section~\ref{sec:f8}.

\subsubsection{Proof of Claim 4}
By the results of ~\ref{sec:f6}, we have
\begin{align*}
\langle\boldsymbol{q}_+^{(s+1)},\boldsymbol{k}_+^{(s+1)} \rangle &- \langle\boldsymbol{q}_+^{(s)},\boldsymbol{k}_+^{(s)} \rangle \leq \frac{\eta C_{10} \|\boldsymbol{\mu}\|_2^2(\|\boldsymbol{\mu}\|_2+\tau)^2 \sigma_h^2 d_h}{\exp(\langle\boldsymbol{q}_+^{(s)},\boldsymbol{k}_+^{(s)} \rangle)}
\end{align*}
for $s \in [T_1, t]$. Further we have
\begin{equation}\label{eq:46}
\begin{aligned}
\exp(\langle\boldsymbol{q}_+^{(s+1)},\boldsymbol{k}_+^{(s+1)} \rangle) &\leq \exp\Big(\langle\boldsymbol{q}_+^{(s)},\boldsymbol{k}_+^{(s)} \rangle +\frac{\eta C_{10} \|\boldsymbol{\mu}\|_2^2(\|\boldsymbol{\mu}\|_2+\tau)^2 \sigma_h^2 d_h}{\exp(\langle\boldsymbol{q}_+^{(s)},\boldsymbol{k}_+^{(s)} \rangle)}
\\&
= \exp(\langle\boldsymbol{q}_+^{(s)},\boldsymbol{k}_+^{(s)} \rangle) \cdot \exp\left(\frac{\eta C_{10} \|\boldsymbol{\mu}\|_2^2(\|\boldsymbol{\mu}\|_2+\tau)^2 \sigma_h^2 d_h}{\exp(\langle\boldsymbol{q}_+^{(s)},\boldsymbol{k}_+^{(s)} \rangle)}\right)
\\&
\leq C_{11} \exp(\langle\boldsymbol{q}_+^{(s)},\boldsymbol{k}_+^{(s)} \rangle).
\end{aligned}
\end{equation}

For the last inequality, by $\eta \leq \widetilde{O}(\min\{\|\boldsymbol{\mu}\|_2^{-2}, (\sigma_p^2 d)^{-1}\} \cdot d_h^{-\frac{1}{2}})$, $\sigma_h^2 \leq \min\{\|\boldsymbol{\mu}\|_2^{-2}, (\sigma_p^2 d)^{-1}\} d_h^{-\frac{1}{2}} (\log(6N^2M^2/\delta))^{-\frac{3}{2}}$, $\langle\boldsymbol{q}_+^{(T_1)},\boldsymbol{k}_+^{(T_1)} \rangle = o(1)$ and the monotonicity of $\langle\boldsymbol{q}_+^{(s)},\boldsymbol{k}_+^{(s)} \rangle$ for $s \in [T_1, t]$, we have $\exp\left(\frac{\eta C_{10} \|\boldsymbol{\mu}\|_2^2(\|\boldsymbol{\mu}\|_2+\tau)^2 \sigma_h^2 d_h}{\exp(\langle\boldsymbol{q}_+^{(s)},\boldsymbol{k}_+^{(s)} \rangle)}\right) \leq \exp(o(1)) \leq C_{11}$. Multiplying both sides by $\left(\langle\boldsymbol{q}_+^{(s+1)},\boldsymbol{k}_+^{(s+1)} \rangle - \langle\boldsymbol{q}_+^{(s)},\boldsymbol{k}_+^{(s)} \rangle\right)$ simultaneously gives

\begin{equation}
\begin{aligned}
   &\exp(\langle\boldsymbol{q}_+^{(s+1)},\boldsymbol{k}_+^{(s+1)} \rangle) 
   \left(\langle\boldsymbol{q}_+^{(s+1)},\boldsymbol{k}_+^{(s+1)} \rangle - \langle\boldsymbol{q}_+^{(s)},\boldsymbol{k}_+^{(s)} \rangle\right) \\
   &\quad \leq C_{11} \exp\!\left(\langle\boldsymbol{q}_+^{(s)},\boldsymbol{k}_+^{(s)} \rangle\right) 
   \cdot \left(\langle\boldsymbol{q}_+^{(s+1)},\boldsymbol{k}_+^{(s+1)} \rangle - \langle\boldsymbol{q}_+^{(s)},\boldsymbol{k}_+^{(s)} \rangle\right) \\
   &\quad \leq \eta C_{12} \|\boldsymbol{\mu}\|_2^2(\|\boldsymbol{\mu}\|_2+\tau)^2 \sigma_h^2 d_h,
\end{aligned}
\end{equation}

where the last inequality is by plugging (\ref{eq:46}). Taking a summation, we obtain
\begin{equation}
\begin{aligned}
   &\int_{\langle\boldsymbol{q}_+^{(T_1)},\boldsymbol{k}_+^{(T_1)} \rangle}^{\langle\boldsymbol{q}_+^{(t+1)},\boldsymbol{k}_+^{(t+1)} \rangle} \exp(x) \, dx \\
   &\quad \leq \sum_{s=T_1}^{t} \exp(\langle\boldsymbol{q}_+^{(s+1)},\boldsymbol{k}_+^{(s+1)} \rangle) 
   \left(\langle\boldsymbol{q}_+^{(s+1)},\boldsymbol{k}_+^{(s+1)} \rangle - \langle\boldsymbol{q}_+^{(s)},\boldsymbol{k}_+^{(s)} \rangle\right) \\
   &\quad \leq \sum_{s=T_1}^{t} \eta C_{12} \|\boldsymbol{\mu}\|_2^2(\|\boldsymbol{\mu}\|_2+\tau)^2 \sigma_h^2 d_h \\
   &\quad \leq T_2 \cdot \eta C_{12} \|\boldsymbol{\mu}\|_2^2(\|\boldsymbol{\mu}\|_2+\tau)^2 \sigma_h^2 d_h \\
   &\quad \leq \frac{d_h^{1/2}}{\log^2(6N^2 M^2 / \delta)}.
\end{aligned}
\end{equation}
where the first inequality is due to $\langle\boldsymbol{q}_+^{(s)},\boldsymbol{k}_+^{(s)} \rangle$ is monotone increasing, the last inequality is by $T_2 = \Theta(\eta^{-1} \|\boldsymbol{\mu}\|_2^{-2} \|\boldsymbol{w}_O\|_2^{-2} \log(6N^2M^2/\delta)^{-1})$ and $\sigma_h^2 \leq \min\{\|\boldsymbol{\mu}\|_2^{-2}, (\sigma_p^2 d)^{-1}\} d_h^{-\frac{1}{2}} (\log(6N^2M^2/\delta))^{-\frac{3}{2}}$. By
\[
\int_{\langle\boldsymbol{q}_+^{(T_1)},\boldsymbol{k}_+^{(T_1)} \rangle}^{\langle\boldsymbol{q}_+^{(t+1)},\boldsymbol{k}_+^{(t+1)} \rangle} \exp(x) dx = \exp(\langle\boldsymbol{q}_+^{(t+1)},\boldsymbol{k}_+^{(t+1)} \rangle) - \exp(\langle\boldsymbol{q}_+^{(T_1)},\boldsymbol{k}_+^{(T_1)} \rangle),
\]
we have
\begin{equation}
\langle\boldsymbol{q}_+^{(t+1)},\boldsymbol{k}_+^{(t+1)} \rangle \leq \log\left(\langle\boldsymbol{q}_+^{(T_1)},\boldsymbol{k}_+^{(T_1)} \rangle + \frac{d_h^{\frac{1}{2}}}{\log(6N^2M^2/\delta)}\right) \leq \log\left(d_h^{\frac{1}{2}}\right),
\end{equation}
By the results of \ref{sec:f6}, we also have
\begin{equation}
\langle\boldsymbol{q}_-^{(s+1)},\boldsymbol{k}_-^{(s+1)} \rangle - \langle\boldsymbol{q}_-^{(s)},\boldsymbol{k}_-^{(s)} \rangle \leq \frac{\eta C_{10} \|\boldsymbol{\mu}\|_2^2(\|\boldsymbol{\mu}\|_2+\tau)^2 \sigma_h^2 d_h}{\exp(\langle\boldsymbol{q}_-^{(s)},\boldsymbol{k}_-^{(s)} \rangle)}
\end{equation}
\begin{equation}
\langle\boldsymbol{q}_{\pm}^{(s+1)},\boldsymbol{k}_{n,j}^{(s+1)} \rangle - \langle\boldsymbol{q}_{\pm}^{(s)},\boldsymbol{k}_{n,j}^{(s)} \rangle \geq -\frac{\eta C_{10} \sigma_p^2 d (\|\boldsymbol{\mu}\|_2+\tau)^2 \sigma_h^2 d_h}{N} \cdot \exp(\langle\boldsymbol{q}_{\pm}^{(s)},\boldsymbol{k}_{n,j}^{(s)} \rangle).
\end{equation}
\begin{equation}
\langle\boldsymbol{q}_{n,i}^{(s+1)},\boldsymbol{k}_{\pm}^{(s+1)} \rangle - \langle\boldsymbol{q}_{n,i}^{(s)},\boldsymbol{k}_{\pm}^{(s)} \rangle \leq \frac{\eta C_{10} \sigma_p^2 d (\|\boldsymbol{\mu}\|_2+\tau)^2 \sigma_h^2 d_h}{N \exp(\langle\boldsymbol{q}_{n,i}^{(s)},\boldsymbol{k}_{\pm}^{(s)} \rangle)}.
\end{equation}
\begin{equation}
\langle\boldsymbol{q}_{n,i}^{(s+1)},\boldsymbol{k}_{n,j}^{(s+1)} \rangle - \langle\boldsymbol{q}_{n,i}^{(s)},\boldsymbol{k}_{n,j}^{(s)} \rangle \geq -\frac{\eta C_{10} \sigma_p^2 d(\sigma_p^2 d+\sigma_p\tau\sqrt{2\log(4NM/\delta)}+\tau^2) \sigma_h^2 d_h}{N} \cdot \exp(\langle\boldsymbol{q}_{n,i}^{(s)},\boldsymbol{k}_{n,j}^{(s)} \rangle).
\end{equation}
Then using the similar method as for $\langle\boldsymbol{q}_+^{(t+1)},\boldsymbol{k}_+^{(t+1)} \rangle$, we get
\begin{equation}
\langle\boldsymbol{q}_-^{(t+1)},\boldsymbol{k}_-^{(t+1)} \rangle \leq \log\left(d_h^{\frac{1}{2}}\right),
\end{equation}
\begin{equation}
\langle\boldsymbol{q}_{\pm}^{(t+1)},\boldsymbol{k}_{n,j}^{(t+1)} \rangle \geq -\log\left(d_h^{\frac{1}{2}}\right),
\end{equation}
\begin{equation}
\langle\boldsymbol{q}_{n,i}^{(t+1)},\boldsymbol{k}_{\pm}^{(t+1)} \rangle \leq \log\left(d_h^{\frac{1}{2}}\right),
\end{equation}

\begin{equation}
\langle\boldsymbol{q}_{n,i}^{(t+1)},\boldsymbol{k}_{n,j}^{(t+1)} \rangle \geq -\log\left(d_h^{\frac{1}{2}}\right).
\end{equation}

Next we provide the upper bound for $|\langle\boldsymbol{q}_{\pm}^{(t+1)},\boldsymbol{k}_{\pm}^{(t+1)} \rangle|, |\langle\boldsymbol{q}_{n,i}^{(t+1)},\boldsymbol{k}_{n',j}^{(t+1)} \rangle|$. By the results of~\ref{sec:f7}, we have
\begin{equation}
\sum_{s=T_1}^{t} |\alpha_{+,+}^{(s)}|, \sum_{s=T_1}^{t} |\alpha_{-,-}^{(s)}|, \sum_{s=T_1}^{t} |\beta_{+,+}^{(s)}|, \sum_{s=T_1}^{t} |\beta_{-,-}^{(s)}|, \sum_{s=T_1}^{t} |\beta_{n,i,+}^{(s)}|, \sum_{s=T_1}^{t} |\beta_{n,i,-}^{(s)}| = O\left(N^{\frac{1}{2}} d_h^{-\frac{1}{4}}\right),
\end{equation}
for $i \in [M]\backslash\{1\}, n \in S_{\pm}$.
\begin{equation}
\sum_{s=T_1}^{t} |\alpha_{n,+,i}^{(s)}|, \sum_{s=T_1}^{t} |\alpha_{n,-,i}^{(s)}| = O\left(N^{-\frac{1}{2}} d_h^{-\frac{1}{4}}\right),
\end{equation}
for $i \in [M]\backslash\{1\}, n \in S_{\pm}$.
\begin{equation}
\sum_{s=T_1}^{t} |\beta_{n,+,i}^{(s)}|, \sum_{s=T_1}^{t} |\beta_{n,-,i}^{(s)}| = O\left(\text{SNR} \cdot N^{-\frac{1}{2}} d_h^{-\frac{1}{4}}\right)
\end{equation}
for $i \in [M]\backslash\{1\}, n \in S_{\pm}$.
\begin{equation}
\sum_{s=T_1}^{t} |\alpha_{n,i,n',j}^{(s)}|, \sum_{s=T_1}^{t} |\beta_{n,j,n',i}^{(s)}| = O\left(d^{-\frac{1}{2}} d_h^{-\frac{1}{4}} \log(6N^2M^2/\delta)\right)
\end{equation}
for $i, j \in [M]\backslash\{1\}, n, n' \in [N], n \neq n'$. Plugging these into the update rule of $\langle\boldsymbol{q}_{\pm}^{(t)},\boldsymbol{k}_{\pm}^{(t)} \rangle, \langle\boldsymbol{q}_{n,i}^{(t)},\boldsymbol{k}_{n,j}^{(t)} \rangle$ and assume that propositions $\mathcal{C}(T_1), \ldots, \mathcal{C}(t)$ hold, we have
\begin{equation}
\begin{aligned}
    |\langle\boldsymbol{q}_+^{(t+1)},\boldsymbol{k}_-^{(t+1)} \rangle| &\leq |\langle\boldsymbol{q}_+^{(T_1)},\boldsymbol{k}_-^{(T_1)} \rangle| + \sum_{s=T_1}^{t} |\langle\boldsymbol{q}_+^{(s+1)},\boldsymbol{k}_-^{(s+1)} \rangle - \langle\boldsymbol{q}_+^{(s)},\boldsymbol{k}_-^{(s)} \rangle| \\
    &\leq |\langle\boldsymbol{q}_+^{(T_1)},\boldsymbol{k}_-^{(T_1)} \rangle| \\
    &\quad + \sum_{s=T_1}^{t} \left| \alpha_{+,+}^{(s)} \langle\boldsymbol{k}_+^{(s)},\boldsymbol{k}_-^{(s)} \rangle + \sum_{n \in S_+} \sum_{i=2}^{M} \alpha_{n,+,i}^{(s)} \langle\boldsymbol{k}_{n,i}^{(s)},\boldsymbol{k}_-^{(s)} \rangle \right. \\
    &\quad \left. + \beta_{-,-}^{(s)} \langle\boldsymbol{q}_+^{(s)},\boldsymbol{q}_-^{(s)} \rangle + \sum_{n \in S_-} \sum_{i=2}^{M} \beta_{n,-,i}^{(s)} \langle\boldsymbol{q}_{n,i}^{(s)},\boldsymbol{q}_+^{(s)} \rangle \right. \\
    &\quad \left. + \left( \alpha_{+,+}^{(s)}\boldsymbol{k}_+^{(s)} + \sum_{n \in S_+} \sum_{i=2}^{M} \alpha_{n,+,i}^{(s)}\boldsymbol{k}_{n,i}^{(s)} \right) \right. \\
    &\quad \left. \cdot \left( \beta_{-,-}^{(s) \top}\boldsymbol{q}_-^{(s) \top} + \sum_{n \in S_-} \sum_{i=2}^{M} \beta_{n,-,i}^{(s) \top}\boldsymbol{q}_{n,i}^{(s) \top} \right) \right| \\
    &\leq |\langle\boldsymbol{q}_+^{(T_1)},\boldsymbol{k}_-^{(T_1)} \rangle| \\
    &\quad + \sum_{s=T_1}^{t} |\alpha_{+,+}^{(s)}| |\langle\boldsymbol{k}_+^{(s)},\boldsymbol{k}_-^{(s)} \rangle| + \sum_{n \in S_+} \sum_{i=2}^{M} \sum_{s=T_1}^{t} |\alpha_{n,+,i}^{(s)}| |\langle\boldsymbol{k}_{n,i}^{(s)},\boldsymbol{k}_-^{(s)} \rangle| \\
    &\quad + \sum_{s=T_1}^{t} |\beta_{-,-}^{(s)}| |\langle\boldsymbol{q}_+^{(s)},\boldsymbol{q}_-^{(s)} \rangle| + \sum_{n \in S_-} \sum_{i=2}^{M} \sum_{s=T_1}^{t} |\beta_{n,-,i}^{(s)}| |\langle\boldsymbol{q}_{n,i}^{(s)},\boldsymbol{q}_+^{(s)} \rangle| \\
    &\quad + \{ \text{lower order term} \} \\
    &= |\langle\boldsymbol{q}_+^{(T_1)},\boldsymbol{k}_-^{(T_1)} \rangle| \\
    &\quad + O\left(N^{\frac{1}{2}} d_h^{-\frac{1}{4}}\right) \cdot o(1) + N \cdot M \cdot O\left(N^{-\frac{1}{2}} d_h^{-\frac{1}{4}}\right) \cdot o(1) \\
    &\quad + O\left(N^{\frac{1}{2}} d_h^{-\frac{1}{4}}\right) \cdot o(1) + N \cdot M \cdot O\left(\text{SNR} \cdot N^{-\frac{1}{2}} d_h^{-\frac{1}{4}}\right) \cdot o(1) \\
    &= |\langle\boldsymbol{q}_+^{(T_1)},\boldsymbol{k}_-^{(T_1)} \rangle| + o\left(N^{\frac{1}{2}} d_h^{-\frac{1}{4}}\right) + o\left(\text{SNR} \cdot N^{\frac{1}{2}} d_h^{-\frac{1}{4}}\right) \\
    &= o(1),
\end{aligned}
\end{equation}
where the first inequality is by triangle inequality, the last equality is by $|\langle\boldsymbol{q}_+^{(T_1)},\boldsymbol{k}_-^{(T_1)} \rangle| = o(1)$ and $d_h = \widetilde{\Omega}\left(\max\{\text{SNR}^4, \text{SNR}^{-4}\} N^2 \epsilon^{-2}\right)$. Similarly we have $|\langle\boldsymbol{q}_-^{(t+1)},\boldsymbol{k}_+^{(t+1)} \rangle| = o(1)$.
\begin{equation}
\begin{aligned}
    |\langle\boldsymbol{q}_{n,i}^{(t+1)},\boldsymbol{k}_{\overline{n},j}^{(t+1)} \rangle| &\leq |\langle\boldsymbol{q}_{n,i}^{(T_1)},\boldsymbol{k}_{\overline{n},j}^{(T_1)} \rangle| + \sum_{s=T_1}^{t} |\langle\boldsymbol{q}_{n,i}^{(s+1)},\boldsymbol{k}_{\overline{n},j}^{(s+1)} \rangle - \langle\boldsymbol{q}_{n,i}^{(s)},\boldsymbol{k}_{\overline{n},j}^{(s)} \rangle| \\
    &\leq |\langle\boldsymbol{q}_{n,i}^{(T_1)},\boldsymbol{k}_{\overline{n},j}^{(T_1)} \rangle| \\
    &\quad + \sum_{s=T_1}^{t} \left| \alpha_{n,i,+}^{(s)} \langle\boldsymbol{k}_+^{(s)},\boldsymbol{k}_{\overline{n},j}^{(s)} \rangle + \alpha_{n,i,-}^{(s)} \langle\boldsymbol{k}_-^{(s)},\boldsymbol{k}_{\overline{n},j}^{(s)} \rangle + \sum_{n'=1}^{N} \sum_{l=2}^{M} \alpha_{n,i,n',l}^{(s)} \langle\boldsymbol{k}_{n',l}^{(s)},\boldsymbol{k}_{\overline{n},j}^{(s)} \rangle \right. \\
    &\quad \left. + \beta_{\overline{n},j,+}^{(s)} \langle\boldsymbol{q}_+^{(s)},\boldsymbol{q}_{n,i}^{(s)} \rangle + \beta_{\overline{n},j,-}^{(s)} \langle\boldsymbol{q}_-^{(s)},\boldsymbol{q}_{n,i}^{(s)} \rangle + \sum_{n'=1}^{N} \sum_{l=2}^{M} \beta_{\overline{n},j,n',l}^{(s)} \langle\boldsymbol{q}_{n',l}^{(s)},\boldsymbol{q}_{n,i}^{(s)} \rangle \right. \\
    &\quad \left. + \left( \alpha_{n,i,+}^{(s)}\boldsymbol{k}_+^{(s)} + \alpha_{n,i,-}^{(s)}\boldsymbol{k}_-^{(s)} + \sum_{n'=1}^{N} \sum_{l=2}^{M} \alpha_{n,i,n',l}^{(s)}\boldsymbol{k}_{n',l}^{(s)} \right) \right. \\
    &\quad \left. \cdot \left( \beta_{\overline{n},j,+}^{(s) \top}\boldsymbol{q}_+^{(s) \top} + \beta_{\overline{n},j,-}^{(s) \top}\boldsymbol{q}_-^{(s) \top} + \sum_{n'=1}^{N} \sum_{l=2}^{M} \beta_{\overline{n},j,n',l}^{(s) \top}\boldsymbol{q}_{n',l}^{(s) \top} \right) \right| \\
    &\leq |\langle\boldsymbol{q}_{n,i}^{(T_1)},\boldsymbol{k}_{\overline{n},j}^{(T_1)} \rangle| \\
    &\quad + \sum_{s=T_1}^{t} |\alpha_{n,i,+}^{(s)}| |\langle\boldsymbol{k}_+^{(s)},\boldsymbol{k}_{\overline{n},j}^{(s)} \rangle| + \sum_{s=T_1}^{t} |\alpha_{n,i,-}^{(s)}| |\langle\boldsymbol{k}_-^{(s)},\boldsymbol{k}_{\overline{n},j}^{(s)} \rangle| \\
    &\quad + \sum_{s=T_1}^{t} |\alpha_{n,i,\overline{n},j}^{(s)}| |\langle\boldsymbol{k}_{\overline{n},j}^{(s)},\boldsymbol{k}_{\overline{n},j}^{(s)} \rangle| + \sum_{l=2}^{M} \sum_{s=T_1}^{t} |\alpha_{n,i,n,l}^{(s)}| |\langle\boldsymbol{k}_{n,l}^{(s)},\boldsymbol{k}_{\overline{n},j}^{(s)} \rangle| \\
    &\quad + \sum_{n' \neq \overline{n} \land (l \neq j \lor n' \neq \overline{n})} \sum_{s=T_1}^{t} \sum_{n'=1}^{N} \sum_{l=2}^{M} |\alpha_{n,i,n',l}^{(s)}| |\langle\boldsymbol{k}_{n',l}^{(s)},\boldsymbol{k}_{\overline{n},j}^{(s)} \rangle| \\
    &\quad + \sum_{s=T_1}^{t} |\beta_{\overline{n},j,+}^{(s)}| |\langle\boldsymbol{q}_+^{(s)},\boldsymbol{q}_{n,i}^{(s)} \rangle| + \sum_{s=T_1}^{t} |\beta_{\overline{n},j,-}^{(s)}| |\langle\boldsymbol{q}_-^{(s)},\boldsymbol{q}_{n,i}^{(s)} \rangle| \\
    &\quad + \sum_{s=T_1}^{t} |\beta_{\overline{n},j,n,i}^{(s)}| |\langle\boldsymbol{q}_{n,i}^{(s)},\boldsymbol{q}_{n,i}^{(s)} \rangle| + \sum_{l=2}^{M} \sum_{s=T_1}^{t} |\beta_{\overline{n},j,n,l}^{(s)}| |\langle\boldsymbol{q}_{n,l}^{(s)},\boldsymbol{q}_{n,i}^{(s)} \rangle| \\
    &\quad + \sum_{n' \neq \overline{n} \land (l \neq i \lor n' \neq n)} \sum_{s=T_1}^{t} \sum_{n'=1}^{N} \sum_{l=2}^{M} |\beta_{\overline{n},j,n',l}^{(s)}| |\langle\boldsymbol{q}_{n',l}^{(s)},\boldsymbol{q}_{n,i}^{(s)} \rangle| \\
    &\quad + \{\text{lower order term}\} \\
    &= |\langle\boldsymbol{q}_{n,i}^{(T_1)},\boldsymbol{k}_{\overline{n},j}^{(T_1)} \rangle| \\
    &\quad + O(d_h^{-\frac{1}{4}}) \cdot o(1) + O(d^{-\frac{1}{2}} d_h^{-\frac{1}{4}} \log(6N^2M^2/\delta)) \cdot \Theta(\sigma_p^2 \sigma_h^2 d d_h) \\
    &\quad + M \cdot O(d_h^{-\frac{1}{4}}) \cdot o(1) + N \cdot M \cdot O(d^{-\frac{1}{2}} d_h^{-\frac{1}{4}} \log(6N^2M^2/\delta)) \cdot o(1) \\
    &\quad + O(N^{\frac{1}{2}} d_h^{-\frac{1}{4}}) \cdot o(1) \\
    &= |\langle\boldsymbol{q}_{n,i}^{(T_1)},\boldsymbol{k}_{\overline{n},j}^{(T_1)} \rangle| + o(d_h^{-\frac{1}{4}}) \\
    &\quad + O(d^{-\frac{1}{2}} d_h^{\frac{1}{4}}) + o(N d^{-\frac{1}{2}} d_h^{-\frac{1}{4}} \log(6N^2M^2/\delta)) \\
    &= o(1),
\end{aligned}
\end{equation}
where the first inequality is by triangle inequality, the second equality is by 
\(
\sigma_h^2 \leq \min\{\|\boldsymbol{\mu}\|_2^{-2}, (\sigma_p^2 d)^{-1}\} \cdot d_h^{-\frac{1}{2}} \cdot (\log(6N^2M^2/\delta))^{-\frac{3}{2}},
\)
the last equality is by $|\langle\boldsymbol{q}_{n,i}^{(T_1)},\boldsymbol{k}_{\overline{n},j}^{(T_1)} \rangle| = o(1)$, $d = \widetilde{\Omega}(\epsilon^{-2} N^2 d_h)$ and $d_h = \widetilde{\Omega}\left(\max\{\text{SNR}^4, \text{SNR}^{-4}\} N^2 \epsilon^{-2}\right)$.
\subsection{Stage iii}\label{sec:stage3}
In Stage III, the outputs of ViT grow up and the loss derivatives are no longer at $o(1)$. We will carefully compute the growth rate of $V_{\pm}$ and $V_{n,i}$ while keeping monitoring the monotonicity of $\langle q, k \rangle$. By substituting $t = T_2 = \Theta\left(\frac{1}{\eta (\|\boldsymbol{\mu}\|_2+\tau)^2 \|\boldsymbol{w}_O\|_2^2}\right)$ into propositions $\mathcal{B}(t)$, $\mathcal{C}(t)$, $\mathcal{D}(t)$, $\mathcal{E}(t)$ in Stage II, we have the following conditions at the beginning of stage III
\begin{align*}
    |V_+^{(T_2)}|, |V_-^{(T_2)}|&, |V_{n,i}^{(T_2)}| = o(1), \\
    V_+^{(T_2)} &\geq 3M \cdot |V_{n,i}^{(T_2)}|, \\
    V_-^{(T_2)} &\leq -3M \cdot |V_{n,i}^{(T_2)}|, \\
    \|\boldsymbol{q}_+^{(T_2)}\|_2^2, \|\boldsymbol{k}_+^{(T_2)}\|_2^2 &= \Theta(\|\boldsymbol{\mu}\|_2^2 \sigma_h^2 d_h), \\
    \|\boldsymbol{q}_{n,i}^{(T_2)}\|_2^2, \|\boldsymbol{k}_{n,i}^{(T_2)}\|_2^2 &= \Theta(\sigma_p^2 \sigma_h^2 d d_h),
\end{align*}
\[
|\langle\boldsymbol{q}_+^{(T_2)},\boldsymbol{q}_-^{(T_2)} \rangle|, |\langle\boldsymbol{q}_+^{(T_2)},\boldsymbol{q}_{n,i}^{(T_2)} \rangle|, |\langle\boldsymbol{q}_{n,i}^{(T_2)},\boldsymbol{q}_{n',j}^{(T_2)} \rangle| = o(1),
\]
\[
|\langle\boldsymbol{k}_+^{(T_2)},\boldsymbol{k}_-^{(T_2)} \rangle|, |\langle\boldsymbol{k}_+^{(T_2)},\boldsymbol{k}_{n,i}^{(T_2)} \rangle|, |\langle\boldsymbol{k}_{n,i}^{(T_2)},\boldsymbol{k}_{n',j}^{(T_2)} \rangle| = o(1),
\]
for $i, j \in [M]\backslash\{1\}, n, n' \in [N], i \neq j \text{ or } n \neq n'$.
\begin{equation*}
\Lambda_{n,\pm,j}^{(T_2)} \geq \log\left(\exp(\Lambda_{n,\pm,j}^{(T_1)}) + \Theta\left(\frac{d_h^{\frac{1}{2}}}{N(\log(6N^2M^2/\delta))^3}\right)\right) 
\end{equation*}
\begin{equation*}
\Lambda_{n,i,\pm,j}^{(T_2)} \geq \log\left(\exp(\Lambda_{n,i,\pm,j}^{(T_1)}) + \Theta\left(\frac{\sigma_p^2 d d_h^{\frac{1}{2}}}{N\|\boldsymbol{\mu}\|_2^2 (\log(6N^2M^2/\delta))^3}\right)\right)
\end{equation*}
\[
|\langle\boldsymbol{q}_+^{(T_2)},\boldsymbol{k}_+^{(T_2)} \rangle|, |\langle\boldsymbol{q}_+^{(T_2)},\boldsymbol{k}_{n,j}^{(T_2)} \rangle|, |\langle\boldsymbol{q}_{n,i}^{(T_2)},\boldsymbol{k}_+^{(T_2)} \rangle|, |\langle\boldsymbol{q}_{n,i}^{(T_2)},\boldsymbol{k}_{n',j}^{(T_2)} \rangle| \leq \log(d_h^{\frac{1}{2}})
\]
\[
|\langle\boldsymbol{q}_+^{(T_2)},\boldsymbol{k}_-^{(T_2)} \rangle|, |\langle\boldsymbol{q}_{n,i}^{(T_2)},\boldsymbol{k}_{\overline{n},j}^{(T_2)} \rangle| = o(1)
\]
for $i, j \in [M]\backslash\{1\}, n, \overline{n} \in [N], n \neq \overline{n}$.

Let $T_3 = \Theta\left(\frac{1}{\eta \epsilon (\|\boldsymbol{\mu}\|_2+\tau)^2 \|\boldsymbol{w}_O\|_2^2}\right)$. Next we prove the following four propositions $\mathcal{F}(t)$, $\mathcal{G}(t)$, $\mathcal{H}(t)$, $\mathcal{I}(t)$ by induction on $t$ for $t \in [T_2, T_3]$:
\begin{itemize}
    \item $\mathcal{F}(t)$:
    \begin{align*}
        &\qquad\qquad\qquad\qquad\qquad\qquad\qquad V_+^{(t)} \geq 3M \cdot |V_{n,i}^{(t)}|, \\
        &\qquad\qquad\qquad\qquad\qquad\qquad\qquad V_-^{(t)} \leq -3M \cdot |V_{n,i}^{(t)}|, \\
        &\qquad\qquad\qquad\qquad\qquad\qquad\qquad\quad |V_{n,i}^{(t)}| = o(1), \\
        &\log\left( \exp(V_+^{(T_2)})  + \eta C_{17} (\|\boldsymbol{\mu}\|_2-\tau)^2 \|\boldsymbol{w}_O\|_2^2 (t - T_2)\right) \leq V_+^{(t)}\leq 2 \log\left(O\left(\frac{1}{\epsilon}\right)\right), \\
        &-2 \log\left(O\left(\frac{1}{\epsilon}\right)\right) \leq V_-^{(t)} \leq -\log\left(\exp(-V_-^{(T_2)}) + \eta C_{17} (\|\boldsymbol{\mu}\|_2-\tau)^2 \|\boldsymbol{w}_O\|_2^2 (t - T_2)\right)
    \end{align*}
    for $i \in [M]\backslash\{1\}, n \in [N]$.
    \item $\mathcal{G}(t)$:
    \begin{align*}
        \|\boldsymbol q_\pm^{(t)}\|_2^2, \|\boldsymbol k_\pm^{(t)}\|_2^2 &= \Theta(\|\boldsymbol{\mu}\|_2^2 \sigma_h^2 d_h), \\
        \|\boldsymbol{q}_{n,i}^{(t)}\|_2^2, \|\boldsymbol{k}_{n,i}^{(t)}\|_2^2 &= \Theta\left(\sigma_p^2 \sigma_h^2 d d_h\right), \\
        |\langle\boldsymbol{q}_+^{(t)},\boldsymbol{q}_-^{(t)} \rangle|, |\langle\boldsymbol q_\pm^{(t)},\boldsymbol{q}_{n,i}^{(t)} \rangle|, &|\langle\boldsymbol{q}_{n,i}^{(t)},\boldsymbol{q}_{n',j}^{(t)} \rangle| = o(1), \\
        |\langle\boldsymbol{k}_+^{(t)},\boldsymbol{k}_-^{(t)} \rangle|, |\langle\boldsymbol k_\pm^{(t)},\boldsymbol{k}_{n,i}^{(t)} \rangle|, &|\langle\boldsymbol{k}_{n,i}^{(t)},\boldsymbol{k}_{n',j}^{(t)} \rangle| = o(1)
    \end{align*}
    for $i, j \in [M]\backslash\{1\}, n, n' \in [N], i \neq j \text{ or } n \neq n'$.
    \item $\mathcal{H}(t)$:
    \begin{align*}
        \langle\boldsymbol q_\pm^{(t+1)},\boldsymbol k_\pm^{(t+1)} \rangle &\geq \langle\boldsymbol q_\pm^{(t)},\boldsymbol k_\pm^{(t)} \rangle, \\
        \langle\boldsymbol{q}_{n,i}^{(t+1)},\boldsymbol k_\pm^{(t+1)} \rangle &\geq \langle\boldsymbol{q}_{n,i}^{(t)},\boldsymbol k_\pm^{(t)} \rangle, \\
        \langle\boldsymbol q_\pm^{(t+1)},\boldsymbol{k}_{n,j}^{(t+1)} \rangle &\leq \langle\boldsymbol q_\pm^{(t)},\boldsymbol{k}_{n,j}^{(t)} \rangle, \\
        \langle\boldsymbol{q}_{n,i}^{(t+1)},\boldsymbol{k}_{n,j}^{(t+1)} \rangle &\leq \langle\boldsymbol{q}_{n,i}^{(t)},\boldsymbol{k}_{n,j}^{(t)} \rangle
    \end{align*}
    for $i, j \in [M]\backslash\{1\}, n \in [N]$.
    \item $\mathcal{I}(t)$:
    \begin{align*}
        |\langle\boldsymbol q_\pm^{(t)},\boldsymbol k_\pm^{(t)} \rangle|, |\langle\boldsymbol q_\pm^{(t)},\boldsymbol{k}_{n,j}^{(t)} \rangle|, |\langle\boldsymbol{q}_{n,i}^{(t)},\boldsymbol k_\pm^{(t)} \rangle|, |\langle\boldsymbol{q}_{n,i}^{(t)},\boldsymbol{k}_{n,j}^{(t)} \rangle| &\leq \log(\epsilon^{-1} d_h^{\frac{1}{2}}), \\
        |\langle\boldsymbol q_\pm^{(t)},\boldsymbol k_\mp^{(t)} \rangle|, |\langle\boldsymbol{q}_{n,i}^{(t)},\boldsymbol{k}_{\overline{n},j}^{(t)} \rangle| &= o(1)
    \end{align*}
    for $i, j \in [M]\backslash\{1\}, n, n' \in [N], n \neq \overline{n}$.
\end{itemize}

By the results of Stage II, we know that $\mathcal{F}(T_2)$, $\mathcal{G}(T_2)$, $\mathcal{I}(T_2)$ are true. To prove that $\mathcal{F}(t)$, $\mathcal{G}(t)$, $\mathcal{H}(t)$ and $\mathcal{I}(t)$ are true in stage 3, we will prove the following claims holds for $t \in [T_2, T_3]$:

\begin{itemize}
    \item Claim 5. $\mathcal{H}(T_2), \ldots, \mathcal{H}(t-1),\mathcal{I}(T_2), \ldots, \mathcal{I}(t) \implies \mathcal{F}(t+1)$
    \item Claim 6. $\mathcal{F}(t), \mathcal{G}(t), \mathcal{H}(T_2), \ldots, \mathcal{H}(t-1), \mathcal{I}(T_2), \ldots, \mathcal{I}(t-1)\implies \mathcal{H}(t)$
    \item Claim 7. $\mathcal{F}(T_2), \ldots, \mathcal{F}(t), \mathcal{G}(t), \mathcal{H}(T_2), \ldots, \mathcal{H}(t-1), \mathcal{I}(T_2), \ldots, \mathcal{I}(t) \implies \mathcal{G}(t+1)$
    \item Claim 8. $\mathcal{F}(T_2), .., \mathcal{F}(t), \mathcal{G}(T_2), .., \mathcal{G}(t), \mathcal{H}(T_2), .., \mathcal{H}(t-1),\mathcal{I}(T_2), .., \mathcal{I}(t) \implies \mathcal{I}(t+1)$
\end{itemize}
\subsubsection{Proof of Claim 5}\label{sec:cl5}

The proofs for $V_+^{(t)} \geq 3M \cdot |V_{n,i}^{(t)}|$ and $V_-^{(t)} \leq -3M \cdot |V_{n,i}^{(t)}|$ are the same as for ~\ref{sec:cl1}. Based on $\mathcal{H}(T_2), \ldots, \mathcal{H}(t)$ where $\langle\boldsymbol q_\pm^{(s)},\boldsymbol k_\pm^{(s)} \rangle$ and $\langle\boldsymbol{q}_{n,i}^{(s)},\boldsymbol k_\pm^{(s)} \rangle$ are monotonically non-decreasing and $\max_j \langle\boldsymbol q_\pm^{(s)},\boldsymbol{k}_{n,j}^{(s)} \rangle$, $\max_j \langle\boldsymbol{q}_{n,i}^{(s)},\boldsymbol{k}_{n,j}^{(s)} \rangle$ are monotonically non-increasing for $s \in [T_2, t-1]$, we have
\begin{equation}\label{eq:62}
    \Lambda_{n,\pm,j}^{(s)} \geq \Lambda_{n,\pm,j}^{(T_2)} \geq \log\left(\exp(\Lambda_{n,\pm,j}^{(T_1)}) + \Theta\left(\frac{d_h^{\frac{1}{2}}}{N(\log(6N^2M^2/\delta))^3}\right)\right),
\end{equation}
\begin{equation}\label{eq:63}
    \Lambda_{n,i,\pm,j}^{(s)} \geq \Lambda_{n,i,\pm,j}^{(T_2)} \geq \log\left(\exp(\Lambda_{n,i,\pm,j}^{(T_1)}) + \Theta\left(\frac{\sigma_p^2 d d_h^{\frac{1}{2}}}{N\|\boldsymbol{\mu}\|_2^2 (\log(6N^2M^2/\delta))^3}\right)\right)
\end{equation}
for $i, j \in [M]\backslash\{1\}, n \in [N], s \in [T_2, t]$. We further get
\begin{equation}
\begin{aligned}\label{eq:64}
    \frac{\exp(\langle\boldsymbol q_\pm^{(s)},\boldsymbol k_\pm^{(s)} \rangle)}{\exp(\langle\boldsymbol q_\pm^{(s)},\boldsymbol k_\pm^{(s)} \rangle) + \sum_{j'=2}^{M} \exp(\langle\boldsymbol q_\pm^{(s)},\boldsymbol{k}_{n,j'}^{(s)} \rangle)} \leq \frac{\exp(\langle\boldsymbol q_\pm^{(s)},\boldsymbol{k}_{n,j}^{(s)} \rangle)}{C \exp(\langle\boldsymbol q_\pm^{(s)},\boldsymbol k_\pm^{(s)} \rangle)} = \frac{1}{C \exp(\Lambda_{n,\pm,j}^{(s)})}
\\
    \leq \frac{1}{C \exp(\Lambda_{n,\pm,j}^{(T_1)}) + \Theta\left(\frac{d_h^{\frac{1}{2}}}{N(\log(6N^2M^2/\delta))^3}\right)} = O\left(\frac{N(\log(6N^2M^2/\delta))^3}{d_h^{\frac{1}{2}}}\right).    
\end{aligned}
\end{equation}

For the first inequality, by the monotonicity of $\langle\boldsymbol q_\pm^{(s)},\boldsymbol k_\pm^{(s)} \rangle$ ($\langle\boldsymbol q_\pm^{(s)},\boldsymbol k_\pm^{(s)} \rangle$ is increasing and $\langle\boldsymbol q_\pm^{(s)},\boldsymbol{k}_{n,j}^{(s)} \rangle$ is decreasing), there exist a constant $C$ such that $C \exp(\langle\boldsymbol q_\pm^{(s)},\boldsymbol k_\pm^{(s)} \rangle) \geq \exp(\langle\boldsymbol q_\pm^{(s)},\boldsymbol k_\pm^{(s)} \rangle) + \sum_{j'=2}^{M} \exp(\langle\boldsymbol q_\pm^{(s)},\boldsymbol{k}_{n,j'}^{(s)} \rangle)$. The second inequality is by plugging \ref{eq:62}. Similarly, we have
\begin{equation}\label{eq:65}
\begin{aligned}
    &\frac{\exp(\langle\boldsymbol{q}_{n,i}^{(s)},\boldsymbol{k}_{n,j}^{(s)} \rangle)}{\exp(\langle\boldsymbol{q}_{n,i}^{(s)},\boldsymbol k_\pm^{(s)} \rangle) + \sum_{j'=2}^{M} \exp(\langle\boldsymbol{q}_{n,i}^{(s)},\boldsymbol{k}_{n,j'}^{(s)} \rangle)} \leq \frac{1}{C \exp(\Lambda_{n,i,\pm,j}^{(s)})}
\\
   & = O\left(\frac{N\|\boldsymbol{\mu}\|_2^2 (\log(6N^2M^2/\delta))^3}{\sigma_p^2 d d_h^{\frac{1}{2}}}\right).    
\end{aligned}
\end{equation}
Plugging (\ref{eq:64}) and (\ref{eq:65}) into the update rule of $\rho_{V,n,i}$ in Lemma ~\ref{lem:update_v} and get
{\small
\begin{equation}
\begin{aligned}
    &|\rho_{V,n,i}^{(s+1)} - \rho_{V,n,i}^{(s)}| \leq \frac{\eta}{NM} |\widetilde{\ell}_n^{'(s)}| \cdot \langle\widetilde{\boldsymbol{\xi}}_{n,i},\widetilde{\boldsymbol{\xi}}_{n,i}^{(t)}\rangle \cdot \left(O\left(\frac{N(\log(6N^2M^2/\delta))^3}{d_h^{\frac{3}{2}}}\right) + O\left(\frac{N\|\boldsymbol{\mu}\|_2^2 (\log(6N^2M^2/\delta))^3}{\sigma_p^2 d d_h^{\frac{1}{2}}}\right)\right) \\
    &\quad + \frac{\eta}{NM} \sum_{n' \neq n \lor i \neq i'} |\widetilde{\ell}_{n'}^{'(t)}| \cdot \langle \widetilde{\boldsymbol{\xi}}_{n,i}, \widetilde{\boldsymbol{\xi}}_{n',i'}^{(t)} \rangle \cdot \left(O\left(\frac{N(\log(6N^2M^2/\delta))^3}{d_h^{\frac{3}{2}}}\right) + O\left(\frac{N\|\boldsymbol{\mu}\|_2^2 (\log(6N^2M^2/\delta))^3}{\sigma_p^2 d d_h^{\frac{1}{2}}}\right)\right)+o(1) \\
    &\leq \frac{3\eta (\sigma_p^2 d+o(1))}{2NM} \left(O\left(\frac{N(\log(6N^2M^2/\delta))^3}{d_h^{\frac{1}{2}}}\right) + O\left(\frac{N\|\boldsymbol{\mu}\|_2^2 (\log(6N^2M^2/\delta))^3}{\sigma_p^2 d d_h^{\frac{1}{2}}}\right)\right) \\
    &+ \frac{\eta}{NM} N  M  (2{\sigma}_p^2 \sqrt{d \log(4N^2M^2/\delta)} \sqrt{d}+o(1)) \cdot \left(O\left(\frac{N(\log(6N^2M^2/\delta))^3}{d_h^{\frac{1}{2}}}\right) + O\left(\frac{N\|\boldsymbol{\mu}\|_2^2 (\log(6N^2M^2/\delta))^3}{\sigma_p^2 d d_h^{\frac{1}{2}}}\right)\right) \\
    &\leq \frac{2\eta}{NM} \left(O\left(\frac{N (\sigma_p^2 d +o(1))(\log(6N^2M^2/\delta))^3}{d_h^{\frac{3}{2}}}\right) + O\left(\frac{N\|\boldsymbol{\mu}\|_2^2 (\log(6N^2M^2/\delta))^3}{d_h^{\frac{3}{2}}}\right)\right) \\
    &= O\left(\frac{\eta (\sigma_p^2 d +o(1))(\log(6N^2M^2/\delta))^3}{d_h^{\frac{1}{2}}} + \frac{\eta \|\boldsymbol{\mu}\|_2^2 (\log(6N^2M^2/\delta))^3}{d_h^{\frac{1}{2}}}\right)
\end{aligned}
\end{equation}
}
where the second inequality is by Lemma ~\ref{lem:con_ineq} and $|\widetilde{\ell}_n^{(t)}| \leq 1$. For the last inequality, since $d = \widetilde{\Omega}(\epsilon^{-2} N^2 d_h)$, we have $N \cdot M \cdot 2{\sigma}_p^2 \sqrt{d \log(4N^2M^2/\delta)} \leq \frac{1}{2} {\sigma}_p^2 d$. By Definition~\ref{def:sca_v} and taking a summation we have
{\small
\begin{equation}
\begin{aligned}
    &|V_{n,i}^{(t+1)}| \leq |V_{n,i}^{(T_2)}| + \sum_{s=T_2}^{t} |\rho_{V,n,i}^{(s+1)} - \rho_{V,n,i}^{(s)}| \cdot \|\boldsymbol{w}_O\|_2^2 \\
    &\leq |V_{n,i}^{(T_2)}| + T_3 \cdot |\rho_{V,n,i}^{(t+1)} - \rho_{V,n,i}^{(t)}| \cdot \|\boldsymbol{w}_O\|_2^2 \\
    &\leq o(1) + \Theta\left(\frac{1}{\eta \epsilon (\|\boldsymbol{\mu}\|_2+\tau)^2 \|\boldsymbol{w}_O\|_2^2}\right) \cdot O\left(\frac{\eta (\sigma_p^2 d +o(1))(\log(6N^2M^2/\delta))^3}{d_h^{\frac{1}{2}}} + \frac{\eta \|\boldsymbol{\mu}\|_2^2 (\log(6N^2M^2/\delta))^3}{d_h^{\frac{1}{2}}}\right) \cdot \|\boldsymbol{w}_O\|_2^2 \\
    &= o(1) + O\left(\frac{(\sigma_p^2 d +o(1))(\log(6N^2M^2/\delta))^3}{\epsilon (\|\boldsymbol{\mu}\|_2+\tau)^2 d_h^{\frac{1}{2}}} + \frac{(\log(6N^2M^2/\delta))^3}{\epsilon d_h^{\frac{1}{2}}}\right) \\
    &= o(1) + o(1) \\
    &= o(1),
\end{aligned}
\end{equation}
}
where the first equality is by $N \cdot \text{SNR}^2 \geq \Omega(1)$, 
the second equality is by 
$d_h = \widetilde{\Omega}\!\left(\max\{\text{SNR}^4, \text{SNR}^{-4}\} N^2 \epsilon^{-2}\right)$. 
Then we have a constant upper bound for the sum of $V_{n,i}$ as follows:
\begin{equation*}
\begin{aligned}
   \sum_{i \in [M]\backslash\{1\}} |V_{n,i}^{(s)}| 
   &= (M - 1) \cdot o(1) \leq C_{15},
\end{aligned}
\end{equation*}
for $n \in [N], s \in [T_2, t]$.  

Expanding (\ref{eq:64}) and (\ref{eq:65}), we have
\begin{equation}
\begin{aligned}
   \frac{\exp(\langle\boldsymbol q_\pm^{(s)},\boldsymbol{k}_{n,j}^{(s)} \rangle)}
   {\exp(\langle\boldsymbol q_\pm^{(s)},\boldsymbol k_\pm^{(s)} \rangle) + \sum_{j'=2}^{M} \exp(\langle\boldsymbol q_\pm^{(s)},\boldsymbol{k}_{n,j'}^{(s)} \rangle)} = O\!\left(\frac{N (\log(6N^2M^2/\delta))^3}{d_h^{3/2}}\right) = o(1),
\end{aligned}
\end{equation}
where the equality is by 
$d_h = \widetilde{\Omega}\!\left(\max\{\text{SNR}^4, \text{SNR}^{-4}\} N^2 \epsilon^{-2}\right)$.  

Similarly,
\begin{equation}
\begin{aligned}
   \frac{\exp(\langle\boldsymbol{q}_{n,i}^{(s)},\boldsymbol{k}_{n,j}^{(s)} \rangle)}
   {\exp(\langle\boldsymbol{q}_{n,i}^{(s)},\boldsymbol k_\pm^{(s)} \rangle) + \sum_{j'=2}^{M} \exp(\langle\boldsymbol{q}_{n,i}^{(s)},\boldsymbol{k}_{n,j'}^{(s)} \rangle)} = O\!\left(\frac{N \|\boldsymbol{\mu}\|_2^2 (\log(6N^2M^2/\delta))^3}{\sigma_p^2 d\, d_h^{1/2}}\right)= o(1),
\end{aligned}
\end{equation}
where the equality is by the same choice of $d_h$.  

Then we have
\begin{equation}
\begin{aligned}
   \text{softmax}(\langle\boldsymbol q_\pm^{(s)},\boldsymbol k_\pm^{(s)} \rangle) 
   &\geq 1 - (M - 1)\cdot o(1)\geq 1 - o(1),
\end{aligned}
\end{equation}
\begin{equation}
\begin{aligned}
   \text{softmax}(\langle\boldsymbol{q}_{n,i}^{(s)},\boldsymbol k_\pm^{(s)} \rangle) 
   &\geq 1 - (M - 1)\cdot o(1)\geq 1 - o(1),
\end{aligned}
\end{equation}
for $i \in [M]\backslash\{1\}, n \in [N], s \in [T_2, t]$.  

Thus, in the following discussion, we omit the noise-related terms as $o(1)$.

Next we provide the bounds for $-\widetilde{\ell}_n^{'(s)}$.  
Note that $\ell(z) = \log(1 + \exp(-z))$ and $-\ell'(z) = \exp(-z)/(1 + \exp(-z))$.  
Without loss of generality, assume $y_n = 1$. We have
\begin{equation}
\begin{aligned}
   -\ell'(f(\widetilde{\boldsymbol X}_n, \theta(s))) 
   &= \frac{1}{1 + \exp\!\left(\tfrac{1}{M} \sum_{l=1}^{M} \varphi(\widetilde{\boldsymbol x}_{n,l}^{(s)\top} \mathbf{W}_Q^{(s)} \mathbf{W}_K^{(s)\top} (\widetilde{\boldsymbol X}_n^{(s)})^{\top}) \widetilde{\boldsymbol X}_n^{(s)} \mathbf{W}_V^{(s)}\boldsymbol{w}_O\right)} \\
   &= \frac{1}{M} \Big( \text{softmax}(\langle\boldsymbol q_\pm^{(s)},\boldsymbol k_\pm^{(s)} \rangle) 
      + \sum_{l=2}^{M} \text{softmax}(\langle\boldsymbol{q}_{n,l}^{(s)},\boldsymbol k_\pm^{(s)} \rangle) \Big) 
      \cdot \widetilde{\boldsymbol{\mu}}_+^{(s)\top} \mathbf{W}_V^{(s)}\boldsymbol{w}_O \\
   &\quad + \sum_{j \in [M]\backslash\{1\}} 
      \Big( \text{softmax}(\langle\boldsymbol{q}_{n,j}^{(s)},\boldsymbol{k}_{n,j}^{(s)} \rangle) 
      + \sum_{l=2}^{M} \text{softmax}(\langle\boldsymbol{q}_{n,l}^{(s)},\boldsymbol{k}_{n,j}^{(s)} \rangle) \Big) 
      \cdot \widetilde{\boldsymbol{\xi}}_{n,j}^{(s)\top} \mathbf{W}_V^{(s)}\boldsymbol{w}_O \\
   &= \frac{1}{M} \Big( M \cdot (1 - o(1)) \cdot V_+^{(s)} 
      + M \cdot o(1) \cdot \sum_{j \in [M]\backslash\{1\}} V_{n,i}^{(s)} \Big) \\
   &\geq \tfrac{1}{2} V_+^{(s)},
\end{aligned}
\end{equation}
for $s \in [T_2, t]$,  
where the second equality is by plugging equations above, and the inequality follows from $V_+^{(s)} \geq 3M \cdot |V_{n,i}^{(s)}|$.  

Similarly, we have
\begin{equation}\label{eq:74}
\begin{aligned}
   \frac{1}{M} \sum_{l=1}^{M} \varphi(\widetilde{\boldsymbol x}_{n,l}^{(s)\top} \mathbf{W}_Q^{(s)} \mathbf{W}_K^{(s)\top} (\widetilde{\boldsymbol X}_n^{(s)})^{\top}) \widetilde{\boldsymbol X}_n^{(s)} \mathbf{W}_V^{(s)}\boldsymbol{w}_O
   &\leq \max_{i \in [M]\backslash\{1\}} \{V_+^{(s)}, V_{n,i}^{(s)}\} = V_+^{(s)}.
\end{aligned}
\end{equation}

Then we have
\begin{equation}\label{eq:75}
\begin{aligned}
    -\ell'(f(\widetilde{X}_n, \theta(s))) 
   &= \frac{1}{1 + \exp\!\left(\tfrac{1}{M} \sum_{l=1}^{M} \varphi(\widetilde{x}_{n,l}^{(s)\top} \mathbf{W}_Q^{(s)} \mathbf{W}_K^{(s)\top} (\widetilde{X}_n^{(s)})^{\top}) \widetilde{X}_n^{(s)} \mathbf{W}_V^{(s)}\boldsymbol{w}_O\right)} \\
    &\geq \frac{1}{1 + \exp(V_+^{(s)})} \\
    &\geq \frac{C_{16}}{\exp(V_+^{(s)})}
\end{aligned}
\end{equation}
where the first inequality is by plugging (\ref{eq:74}). For the last inequality, note that $V_+^{(T_2)} \geq 0$ and $V_+^{(s)}$ is monotonically increasing, so there exist a constant $C_{16}$ such that $\frac{1}{1 + \exp(V_+^{(s)})} \geq \frac{C_{16}}{\exp(V_+^{(s)})}$. We also have the upper bound
\begin{equation}\label{eq:76}
\begin{aligned}
    -\ell'(f(\boldsymbol X_n, \theta(s))) &= \frac{1}{1 + \exp\left(\frac{1}{M} \sum_{l=1}^{M} \varphi(\boldsymbol x_{n,l}^\top \mathbf{W}_Q^{(s)} \mathbf{W}_K^{(s) \top} (\boldsymbol X_n)^\top)\boldsymbol X_n \mathbf{W}_V^{(s)}\boldsymbol{w}_O\right)} \\
    &\leq \frac{1}{1 + \exp(V_+^{(s)}/2)} \\
    &\leq \frac{1}{\exp(V_+^{(s)}/2)}
\end{aligned}
\end{equation}

Then by the update rule of $\gamma_{V,+}^{(t)}$ and in Lemma~\ref{lem:update_v} and get
\begin{equation}
\begin{aligned}
    \gamma_{V,+}^{(s+1)} - \gamma_{V,+}^{(s)} &= -\eta \langle\widetilde{\boldsymbol{\mu}}_+,\widetilde{\boldsymbol{\mu}}_+^{(s)}\rangle \sum_{n \in S_+} \widetilde{\ell}_n^{'(s)} \left(\frac{\exp(\langle\boldsymbol{q}_+^{(s)},\boldsymbol{k}_+^{(s)} \rangle)}{\exp(\langle\boldsymbol{q}_+^{(s)},\boldsymbol{k}_+^{(s)} \rangle) + \sum_{k=2}^{M} \exp(\langle\boldsymbol{q}_+^{(s)},\boldsymbol{k}_{n,k}^{(s)} \rangle)} \right. \\
    &\quad \left. + \sum_{j=2}^{M} \frac{\exp(\langle\boldsymbol{q}_{n,j}^{(s)},\boldsymbol{k}_+^{(s)} \rangle)}{\exp(\langle\boldsymbol{q}_{n,j}^{(s)},\boldsymbol{k}_+^{(s)} \rangle) + \sum_{k=2}^{M} \exp(\langle\boldsymbol{q}_{n,j}^{(s)},\boldsymbol{k}_{n,k}^{(s)} \rangle)} \right)+o(1) \\
    &\geq -\eta (\|\boldsymbol{\mu}\|_2-\tau)^2 \sum_{n \in S_+} \widetilde{\ell}_n^{'(s)} (M \cdot (1 - o(1))) \\
    &\geq \eta (\|\boldsymbol{\mu}\|_2-\tau)^2 \cdot \frac{N}{4} \cdot (1 - o(1)) \cdot \frac{C_{16}}{\exp(V_+^{(s)})} \\
    &\geq \eta C_{17} (\|\boldsymbol{\mu}\|_2-\tau)_2^2 \frac{1}{\exp(V_+^{(s)})}
\end{aligned}
\end{equation}
where the second inequality is by (\ref{eq:75}). Then by definition~\ref{def:sca_v}, we get
\begin{equation}
\begin{aligned}
    V_+^{(s+1)} - V_+^{(s)} &= (\gamma_{V,+}^{(s+1)} - \gamma_{V,+}^{(s)}) \|\boldsymbol{w}_O\|_2^2 \geq \frac{\eta C_{17} (\|\boldsymbol{\mu}\|_2-\tau)^2 \|\boldsymbol{w}_O\|_2^2}{\exp(V_+^{(s)})} \\
    \end{aligned}
\end{equation}
Multiply both sides simultaneously by $\exp(V_+^{(s)})$ and get
\begin{equation}
\begin{aligned}
    \exp(V_+^{(s)})(V_+^{(s+1)} - V_+^{(s)}) &\geq \eta C_{17} (\|\boldsymbol{\mu}\|_2-\tau)^2 \|\boldsymbol{w}_O\|_2^2
\end{aligned}
\end{equation}
Taking a summation from $T_2$ to $t$ and get
\begin{equation}
\begin{aligned}
    \sum_{s=T_2}^{t} \exp(V_+^{(s)})(V_+^{(s+1)} - V_+^{(s)}) &\geq \sum_{s=T_2}^{t} \eta C_{17} (\|\boldsymbol{\mu}\|_2-\tau)^2 \|\boldsymbol{w}_O\|_2^2 \\
    &\geq \eta C_{17} (\|\boldsymbol{\mu}\|_2-\tau)^2 \|\boldsymbol{w}_O\|_2^2 (t - T_2 + 1)
\end{aligned}
\end{equation}

By the property that $V_+^{(s)}$ is monotonically increasing, we have
\begin{equation}
\begin{aligned}
    \int_{V_+^{(T_2)}}^{V_+^{(t+1)}} \exp(x) dx &\geq \sum_{s=T_2}^{t} \exp(V_+^{(s)})(V_+^{(s+1)} - V_+^{(s)}) \\
    &\geq \eta C_{17} (\|\boldsymbol{\mu}\|_2-\tau)^2 \|\boldsymbol{w}_O\|_2^2 (t - T_2 + 1)
\end{aligned}
\end{equation}

By $\int_{V_+^{(T_2)}/2}^{V_+^{(t+1)}/2} \exp(x) dx = \exp(V_+^{(t+1)}/2) - \exp(V_+^{(T_2)}/2)$ we get
\begin{equation}
\begin{aligned}
    V_+^{(t+1)} &\geq \log \left( \exp(V_+^{(T_2)}/2) + \eta C_{17} (\|\boldsymbol{\mu}\|_2-\tau)^2 \|\boldsymbol{w}_O\|_2^2 (t - T_2 + 1) \right) \\
    &\geq \eta C_{17} (\|\boldsymbol{\mu}\|_2-\tau)^2 \|\boldsymbol{w}_O\|_2^2 (t - T_2 + 1)
\end{aligned}
\end{equation}

Similarly, we have
\begin{equation}
\begin{aligned}
    V_-^{(t+1)} &\leq -\log \left( \exp\left(V_-^{(T_2)}  \right) 
    + \eta C_{17} (\|\boldsymbol{\mu}\|_2-\tau)_2^2 \|\boldsymbol{w}_O\|_2^2 (t - T_2 + 1)\right)
\end{aligned}
\end{equation}

Next we provide upper bounds for $V_+^{(t+1)}$ and $V_-^{(t+1)}$. By the update rule of $\gamma_{V,+}^{(t)}$ and in Lemma~\ref{lem:update_v} we have
\begin{equation}
\begin{aligned}
\gamma_{V,+}^{(s+1)} - \gamma_{V,+}^{(s)}
&= - \eta \langle\widetilde{\boldsymbol{\mu}}_+,\widetilde{\boldsymbol{\mu}}_+^{(s)}\rangle \sum_{n \in S_+} \widetilde{\ell}_n^{'(s)}
\Bigg(
    \frac{\exp\big(\langle\boldsymbol{q}_+^{(s)},\boldsymbol{k}_+^{(s)} \rangle\big)}
         {\exp\big(\langle\boldsymbol{q}_+^{(s)},\boldsymbol{k}_+^{(s)} \rangle\big)
          + \sum_{k=2}^{M} \exp\big(\langle\boldsymbol{q}_+^{(s)},\boldsymbol{k}_{n,k}^{(s)} \rangle\big)} \\[6pt]
&\qquad\qquad\qquad
    + \sum_{j=2}^{M}
    \frac{\exp\big(\langle\boldsymbol{q}_{n,j}^{(s)},\boldsymbol{k}_+^{(s)} \rangle\big)}
         {\exp\big(\langle\boldsymbol{q}_{n,j}^{(s)},\boldsymbol{k}_+^{(s)} \rangle\big)
          + \sum_{k=2}^{M} \exp\big(\langle\boldsymbol{q}_{n,j}^{(s)},\boldsymbol{k}_{n,k}^{(s)} \rangle\big)}
\Bigg)+o(1) \\[6pt]
&\leq \eta (\|\boldsymbol{\mu}\|_2+\tau)^2 \sum_{n \in S_+} \big(-\widetilde{\ell}_n^{'(s)}\big) \cdot M \\[4pt]
&\leq \eta (\|\boldsymbol{\mu}\|_2+\tau)^2 \cdot \frac{3N}{4} \cdot \exp\!\big(-V_+^{(s)}/2\big) \\[4pt]
&= \frac{3\eta (\|\boldsymbol{\mu}\|_2+\tau)^2}{4}\, \exp\!\big(-V_+^{(s)}/2\big).
\end{aligned}
\end{equation}
where the second inequality is by (\ref{eq:76}). Then by definition~\ref{def:sca_v}, we get
\begin{equation}
\begin{aligned}
    V_+^{(s+1)} - V_+^{(s)} &= (\gamma_{V,+}^{(s+1)} - \gamma_{V,+}^{(s)}) \|\boldsymbol{w}_O\|_2^2 \\
    &\leq \frac{3\eta (\|\boldsymbol{\mu}\|_2+\tau)^2 \|\boldsymbol{w}_O\|_2^2}{4 \exp(V_+^{(s)}/2)}
\end{aligned}
\end{equation}

Further we have
\begin{equation}\label{eq:82}
\begin{aligned}
    \exp(V_+^{(s+1)}/2) &\leq \exp(V_+^{(s)}/2 + \frac{3\eta (\|\boldsymbol{\mu}\|_2+\tau)^2 \|\boldsymbol{w}_O\|_2^2}{8 \exp(V_+^{(s)}/2)} \\
    &= \exp(V_+^{(s)}/2) \cdot \exp(\frac{3\eta (\|\boldsymbol{\mu}\|_2+\tau)^2 \|\boldsymbol{w}_O\|_2^2}{8 \exp(V_+^{(s)}/2)}) \\
    &\leq C_{18} \exp(V_+^{(s)}/2)
\end{aligned}
\end{equation}

For the last inequality, by $\eta \leq \widetilde{O}(\min\{\|\boldsymbol{\mu}\|_2^{-2}, (\sigma_p^2 d)^{-1}\} \cdot d_h^{-\frac{1}{2}})$, $V_+^{(T_2)} = \Theta(1)$ and the monotonicity of $V_+^{(s)}$, we have $\exp(\frac{3\eta \|\boldsymbol{\mu}\|_2^2 \|\boldsymbol{w}_O\|_2^2}{8 \exp(V_+^{(s)}/2)}) \leq C_{18}$. Multiplying both sides by $(V_+^{(s+1)}/2 - V_+^{(s)}/2)$ simultaneously gives
\begin{equation}
\begin{aligned}
    \exp(V_+^{(s)})(V_+^{(s+1)}/2 - V_+^{(s)}/2) &\leq C_{18} \exp(V_+^{(s)}/2)(V_+^{(s+1)}/2 - V_+^{(s)}/2) \\
    &\leq \frac{3\eta C_{18} (\|\boldsymbol{\mu}\|_2+\tau)^2 \|\boldsymbol{w}_O\|_2^2}{8}
\end{aligned}
\end{equation}

where the last inequality is by plugging (\ref{eq:82}). Taking a summation we have
\begin{equation}
\begin{aligned}
    \int_{V_+^{(T_2)}/2}^{V_+^{(t+1)}/2} \exp(x) dx &\leq \sum_{s=T_2}^{T_3} \exp(V_+^{(s+1)}/2)(V_+^{(s+1)}/2 - V_+^{(s)}/2) \\
    &\leq \sum_{s=T_2}^{T_3} \frac{3\eta C_{18} (\|\boldsymbol{\mu}\|_2+\tau)^2 \|\boldsymbol{w}_O\|_2^2}{8} \\
    &\leq \Theta\left(\frac{1}{\eta \epsilon (\|\boldsymbol{\mu}\|_2+\tau)^2 \|\boldsymbol{w}_O\|_2^2}\right) \cdot \frac{3\eta C_{18} (\|\boldsymbol{\mu}\|_2+\tau)^2 \|\boldsymbol{w}_O\|_2^2}{8} 
= O\left(\frac{1}{\epsilon}\right)
\end{aligned}
\end{equation}
By $\int_{V_+^{(T_2)}/2}^{V_+^{(t+1)}/2} \exp(x) dx = \exp(V_+^{(t+1)}/2) - \exp(V_+^{(T_2)}/2)$ we have
\begin{equation*}
    V_+^{(t+1)} \leq 2 \log \left( \exp(V_+^{(T_2)}/2) + O\left(\frac{1}{\epsilon}\right) \right) = 2 \log \left( O\left(\frac{1}{\epsilon}\right) \right)
\end{equation*}

Similarly, we have
\begin{equation*}
    V_-^{(t+1)} \geq -2 \log \left( O\left(\frac{1}{\epsilon}\right) \right)
\end{equation*}
\subsubsection{Proof of Claim 6}

By $\mathcal{H}(T_2), \ldots, \mathcal{H}(t-1)$, we have $\text{softmax}(\langle\boldsymbol q_\pm^{(t)},\boldsymbol k_\pm^{(t)} \rangle), \text{softmax}(\langle\boldsymbol{q}_{n,i}^{(t)},\boldsymbol k_\pm^{(t)} \rangle) = 1 - o(1)$ and $\text{softmax}(\langle\boldsymbol q_\pm^{(t)},\boldsymbol{k}_{n,j}^{(t)} \rangle), \text{softmax}(\langle\boldsymbol{q}_{n,i}^{(t)},\boldsymbol{k}_{n,j}^{(t)} \rangle) = o(1)$, which have been proved in ~\ref{sec:cl5}. By the results of ~\ref{sec:low_ab}, we have the signs of $\alpha$ and $\beta$ as follows:
\begin{equation*}
\begin{aligned}
    \alpha_{+,+}^{(t)}, \alpha_{-,-}^{(t)}, \beta_{+,+}^{(t)}, \beta_{-,-}^{(t)}, \alpha_{n,i,+}^{(t)}, \alpha_{n,i,-}^{(t)}, \beta_{n,+,i}^{(t)}, \beta_{n,-,i}^{(t)} &\geq 0, \\
    \alpha_{n,+,i}^{(t)}, \alpha_{n,-,i}^{(t)}, \alpha_{n,i,n,j}^{(t)}, \beta_{n,i,+}^{(t)}, \beta_{n,i,-}^{(t)}, \beta_{n,j,n,i}^{(t)} &\leq 0.
\end{aligned}
\end{equation*}

Then combined with $\mathcal{G}(T)$ and we have the dynamics of $\langle\boldsymbol q,\boldsymbol k \rangle$ as follows:
\begin{equation}
\begin{aligned}
    \langle\boldsymbol{q}_+^{(t+1)},\boldsymbol{k}_+^{(t+1)} \rangle - \langle\boldsymbol{q}_+^{(t)},\boldsymbol{k}_+^{(t)} \rangle &= \alpha_{+,+}^{(t)} \|\boldsymbol{k}_+^{(t)}\|_2^2 + \sum_{n \in S_+} \sum_{i=2}^M \alpha_{n,+,i}^{(t)} \langle\boldsymbol{k}_+^{(t)},\boldsymbol{k}_{n,i}^{(t)} \rangle \\
    &\quad + \beta_{+,+}^{(t)} \|\boldsymbol{q}_+^{(t)}\|_2^2 + \sum_{n \in S_+} \sum_{i=2}^M \beta_{n,+,i}^{(t)} \langle\boldsymbol{q}_+^{(t)},\boldsymbol{q}_{n,i}^{(t)} \rangle \\
    &\quad + \left( \alpha_{+,+}^{(t)}\boldsymbol{k}_+^{(t)} + \sum_{n \in S_+} \sum_{i=2}^M \alpha_{n,+,i}^{(t)}\boldsymbol{k}_{n,i}^{(t)} \right) \\
    &\quad \cdot \left( \beta_{+,+}^{(t)}\boldsymbol{q}_+^{(t)} + \sum_{n \in S_+} \sum_{i=2}^M \beta_{n,+,i}^{(t)}\boldsymbol{q}_{n,i}^{(t)} \right)^\top \\
    &= \alpha_{+,+}^{(t)} \|\boldsymbol{k}_+^{(t)}\|_2^2 + \beta_{+,+}^{(t)} \|\boldsymbol{q}_+^{(t)}\|_2^2 + \{\text{lower order term\}} \\
    &\geq 0
\end{aligned}
\end{equation}

Similarly, we have
\begin{align*}
        \langle\boldsymbol q_\pm^{(t+1)},\boldsymbol k_\pm^{(t+1)} \rangle -\langle\boldsymbol q_\pm^{(t)},\boldsymbol k_\pm^{(t)} \rangle&\geq 0, \\
        \langle\boldsymbol{q}_{n,i}^{(t+1)},\boldsymbol k_\pm^{(t+1)} \rangle-\langle\boldsymbol{q}_{n,i}^{(t)},\boldsymbol  k_\pm^{(t)} \rangle &\geq ,0 \\
        \langle\boldsymbol q_\pm^{(t+1)},\boldsymbol{k}_{n,j}^{(t+1)} \rangle -\langle\boldsymbol q_\pm^{(t)},\boldsymbol{k}_{n,j}^{(t)} \rangle&\leq 0, \\
        \langle\boldsymbol{q}_{n,i}^{(t+1)},\boldsymbol{k}_{n,j}^{(t+1)} \rangle -\langle\boldsymbol{q}_{n,i}^{(t)},\boldsymbol{k}_{n,j}^{(t)} \rangle&\leq 0
    \end{align*}
which completes the proof. The proof for Claim 7 is in Section ~\ref{sec:pf_cl7}
\subsubsection{Proof of Claim 8}

By the results of ~\ref{sec:up_qk9}, we have
\begin{equation}\label{eq:93}
\langle\boldsymbol{q}_+^{(t+1)},\boldsymbol{k}_+^{(t+1)} \rangle - \langle\boldsymbol{q}_+^{(t)},\boldsymbol{k}_+^{(t)} \rangle 
\leq \frac{\eta C_{10} (\|\boldsymbol{\mu}\|_2+\tau)^2\|\boldsymbol{\mu}\|_2^2 \sigma_h^2 d_h \log \left( O\!\left( \tfrac{1}{\epsilon} \right) \right)}{\exp\!\left(\langle\boldsymbol{q}_+^{(t)},\boldsymbol{k}_+^{(t)} \rangle\right)},
\end{equation}
Further we have
\begin{equation*}
\begin{aligned}
&\exp(\langle\boldsymbol{q}_+^{(t+1)},\boldsymbol{k}_+^{(t+1)} \rangle) \leq \exp\left( \langle\boldsymbol{q}_+^{(t)},\boldsymbol{k}_+^{(t)} \rangle + \frac{\eta C_{10} (\|\boldsymbol{\mu}\|_2+\tau)^2\|\boldsymbol{\mu}\|_2^2 \sigma_h^2 d_h \log \left( O\left( \frac{1}{\epsilon} \right) \right)}{\exp(\langle\boldsymbol{q}_+^{(t)},\boldsymbol{k}_+^{(t)} \rangle)} \right)
\\
&= \exp\left( \langle\boldsymbol{q}_+^{(t)},\boldsymbol{k}_+^{(t)} \rangle \right) \cdot \exp\left( \frac{\eta C_{10} (\|\boldsymbol{\mu}\|_2+\tau)^2\|\boldsymbol{\mu}\|_2^2 \sigma_h^2 d_h \log \left( O\left( \frac{1}{\epsilon} \right) \right)}{\exp(\langle\boldsymbol{q}_+^{(t)},\boldsymbol{k}_+^{(t)} \rangle)} \right) \\
&\leq C_{11} \exp\left( \langle\boldsymbol{q}_+^{(t)},\boldsymbol{k}_+^{(t)} \rangle \right).    
\end{aligned}
\end{equation*}

For the last inequality, by $\eta \leq \widetilde{O}(\min\{\boldsymbol{\mu}\|_2^{-2}, (\sigma_p^2 d)^{-1}\} \cdot d_h^{-\frac{1}{2}})$, $\sigma_h^2 \leq \min\{\|\boldsymbol{\mu}\|_2^{-2}, (\sigma_p^2 d)^{-1}\} \cdot d_h^{-\frac{1}{2}} \cdot (\log(6N^2M^2/\delta))^{-\frac{3}{2}}$, $\langle\boldsymbol{q}_+^{(T_1)},\boldsymbol{k}_+^{(T_1)} \rangle = o(1)$ and the monotonicity of $\langle\boldsymbol{q}_+^{(s)},\boldsymbol{k}_+^{(s)} \rangle$ for $s \in [T_1, t]$, we have $\exp\left( \frac{\eta C_{10} \|\boldsymbol{\mu}\|_2^4 \sigma_h^2 d_h \log \left( O\left( \frac{1}{\epsilon} \right) \right)}{\exp(\langle\boldsymbol{q}_+^{(t)},\boldsymbol{k}_+^{(t)} \rangle)} \right) \leq \exp(o(1)) \leq C_{11}$. Multiplying both sides by $\left( \langle\boldsymbol{q}_+^{(t+1)},\boldsymbol{k}_+^{(t+1)} \rangle - \langle\boldsymbol{q}_+^{(t)},\boldsymbol{k}_+^{(t)} \rangle \right)$ simultaneously gives
\begin{equation}
\begin{aligned}
&\exp(\langle\boldsymbol{q}_+^{(t+1)},\boldsymbol{k}_+^{(t+1)} \rangle) \left( \langle\boldsymbol{q}_+^{(t+1)},\boldsymbol{k}_+^{(t+1)} \rangle - \langle\boldsymbol{q}_+^{(t)},\boldsymbol{k}_+^{(t)}\rangle \right) \\
& \leq C_{11} \exp\left( \langle\boldsymbol{q}_+^{(t)},\boldsymbol{k}_+^{(t)} \rangle \right) \cdot \left( \langle\boldsymbol{q}_+^{(t+1)},\boldsymbol{k}_+^{(t+1)} \rangle - \langle\boldsymbol{q}_+^{(t)},\boldsymbol{k}_+^{(t)} \rangle \right) \\
&\leq \eta C_{12} (\|\boldsymbol{\mu}\|_2+\tau)^2\|\boldsymbol{\mu}\|_2^2 \sigma_h^2 d_h \log \left( O\left( \frac{1}{\epsilon} \right) \right), 
\end{aligned}
\end{equation}
where the last inequality is by plugging (\ref{eq:93}). Taking a summation we have
\begin{equation}
\begin{aligned}
\int_{\langle\boldsymbol{q}_+^{(T_2)},\boldsymbol{k}_+^{(T_2)} \rangle}^{\langle\boldsymbol{q}_+^{(t+1)},\boldsymbol{k}_+^{(t+1)} \rangle} \exp(x) \, dx 
&\leq \sum_{s=T_2}^{t} \exp\!\big(\langle\boldsymbol{q}_+^{(s+1)},\boldsymbol{k}_+^{(s+1)} \rangle\big) 
   \Big( \langle\boldsymbol{q}_+^{(s+1)},\boldsymbol{k}_+^{(s+1)} \rangle - \langle\boldsymbol{q}_+^{(s)},\boldsymbol{k}_+^{(s)} \rangle \Big) \\
&\leq \sum_{s=T_2}^{t} \eta C_{12} (\|\boldsymbol{\mu}\|_2+\tau)^2 \|\boldsymbol{\mu}\|_2^2 \sigma_h^2 d_h 
   \log \!\left( O\!\left( \tfrac{1}{\epsilon} \right) \right) \\
&\leq T_3 \cdot \eta C_{12} (\|\boldsymbol{\mu}\|_2+\tau)^2 \|\boldsymbol{\mu}\|_2^2 \sigma_h^2 d_h 
   \log \!\left( O\!\left( \tfrac{1}{\epsilon} \right) \right) \\
&= O\!\left( \frac{d_h^{\tfrac{1}{2}} \log \!\left( O\!\left( \tfrac{1}{\epsilon} \right) \right)}{\epsilon (\log(6N^2M^2/\delta))^{\tfrac{3}{2}}} \right).
\end{aligned}
\end{equation}
where the first inequality is due to $\langle\boldsymbol{q}_+^{(s)},\boldsymbol{k}_+^{(s)} \rangle$ is monotone increasing, the last equality is by $T_3 = \Theta(\eta^{-1} \epsilon^{-1} (\|\boldsymbol{\mu}\|_2+\tau)^{-2} \|\boldsymbol{w}_O\|_2^{-2})$, $\|\boldsymbol{w}_O\|_2^2 = \Theta(1)$ and $\sigma_h^2 \leq \min\{\|\boldsymbol{\mu}\|_2^{-2}, (\sigma_p^2 d)^{-1}\} \cdot d_h^{-\frac{1}{2}} \cdot (\log(6N^2M^2/\delta))^{-\frac{3}{2}}$. By $\int_{\langle\boldsymbol{q}_+^{(T_2)},\boldsymbol{k}_+^{(T_2)} \rangle}^{\langle\boldsymbol{q}_+^{(t+1)},\boldsymbol{k}_+^{(t+1)} \rangle} \exp(x) dx = \exp(\langle\boldsymbol{q}_+^{(t+1)},\boldsymbol{k}_+^{(t+1)} \rangle) - \exp(\langle\boldsymbol{q}_+^{(T_2)},\boldsymbol{k}_+^{(T_2)} \rangle)$, we have
\[
\langle\boldsymbol{q}_+^{(t+1)},\boldsymbol{k}_+^{(t+1)} \rangle \leq \log \left( \exp(\langle\boldsymbol{q}_+^{(T_2)},\boldsymbol{k}_+^{(T_2)} \rangle) + O\left( \frac{d_h^{\frac{1}{2}} \log \left( O\left( \frac{1}{\epsilon} \right) \right)}{\epsilon (\log(6N^2M^2/\delta))^{\frac{3}{2}}} \right) \right) \leq \log(\epsilon^{-1} d_h^{\frac{1}{2}})\] 
where the last inequality is by $\langle\boldsymbol{q}_+^{(T_2)},\boldsymbol{k}_+^{(T_2)} \rangle \leq \log(d_h^{\frac{1}{2}})$. By the results of ~\ref{sec:up_qk9}, we also have
\begin{equation}
\langle\boldsymbol{q}_-^{(t+1)},\boldsymbol{k}_-^{(t+1)} \rangle - \langle\boldsymbol{q}_-^{(t)},\boldsymbol{k}_-^{(t)} \rangle \leq \frac{\eta C_{10} \|\boldsymbol{\mu}\|_2^2(\|\boldsymbol{\mu}\|_2+\tau)^2 \sigma_h^2 d_h \log \left( O\left( \frac{1}{\epsilon} \right) \right)}{\exp(\langle q-^{(t)},\boldsymbol{k}_-^{(t)} \rangle)}. 
\end{equation}
\begin{equation}
\langle\boldsymbol{q}_{\pm}^{(s+1)},\boldsymbol{k}_{n,j}^{(s+1)} \rangle - \langle\boldsymbol{q}_{\pm}^{(s)},\boldsymbol{k}_{n,j}^{(s)} \rangle \geq -\frac{\eta C_{10} \sigma_p^2 d (\|\boldsymbol{\mu}\|_2+\tau)^2 \sigma_h^2 d_h\log \left( O\left( \frac{1}{\epsilon} \right)\right)}{N} \cdot \exp(\langle\boldsymbol{q}_{\pm}^{(s)},\boldsymbol{k}_{n,j}^{(s)} \rangle).
\end{equation}
\begin{equation}
\langle\boldsymbol{q}_{n,i}^{(s+1)},\boldsymbol{k}_{\pm}^{(s+1)} \rangle - \langle\boldsymbol{q}_{n,i}^{(s)},\boldsymbol{k}_{\pm}^{(s)} \rangle \leq \frac{\eta C_{10} \sigma_p^2 d (\|\boldsymbol{\mu}\|_2+\tau)^2 \sigma_h^2 d_h\log \left( O\left( \frac{1}{\epsilon} \right)\right)}{N \exp(\langle\boldsymbol{q}_{n,i}^{(s)},\boldsymbol{k}_{\pm}^{(s)} \rangle)}.
\end{equation}
\begin{equation}
\begin{aligned}
\langle\boldsymbol{q}_{n,i}^{(s+1)},\boldsymbol{k}_{n,j}^{(s+1)} \rangle - \langle\boldsymbol{q}_{n,i}^{(s)},\boldsymbol{k}_{n,j}^{(s)} \rangle \geq -&\frac{\eta C_{10} \sigma_p^2 d(\sigma_p^2 d+\sigma_p\tau\sqrt{2\log(4NM/\delta)}+\tau^2) \sigma_h^2 d_h\log \left( O\left( \frac{1}{\epsilon} \right)\right)}{N} \\&\cdot \exp(\langle\boldsymbol{q}_{n,i}^{(s)},\boldsymbol{k}_{n,j}^{(s)} \rangle).
\end{aligned}
\end{equation}
Then using the similar method as for $\langle\boldsymbol{q}_+^{(t+1)},\boldsymbol{k}_+^{(t+1)} \rangle$, we get
\begin{equation}
\begin{aligned}
\langle\boldsymbol{q}_-^{(t+1)},\boldsymbol{k}_-^{(t+1)} \rangle &\leq \log(\epsilon^{-1} d_h^{\frac{1}{2}}), \\
\langle\boldsymbol q_\pm^{(t+1)},\boldsymbol{k}_{n,j}^{(t+1)} \rangle &\geq -\log(\epsilon^{-1} d_h^{\frac{1}{2}}), \\
\langle\boldsymbol{q}_{n,i}^{(t+1)},\boldsymbol k_\pm^{(t+1)} \rangle &\leq \log(\epsilon^{-1} d_h^{\frac{1}{2}}), \\
\langle\boldsymbol{q}_{n,i}^{(t+1)},\boldsymbol{k}_{n,j}^{(t+1)} \rangle &\geq -\log(\epsilon^{-1} d_h^{\frac{1}{2}}),
\end{aligned}
\end{equation}

Next we provide the upper bound for $|\langle\boldsymbol q_\pm^{(t+1)},\boldsymbol k_\pm^{(t+1)} \rangle|, |\langle\boldsymbol{q}_{n,i}^{(t+1)},\boldsymbol{k}_{n',j}^{(t+1)} \rangle|$. By the results of~\ref{sec:f10}, we have
\begin{equation}
\begin{aligned}
\sum_{s=T_2}^{t} |\beta_{n,+,i}^{(t)}|, \sum_{s=T_2}^{t} |\beta_{n,-,i}^{(t)}| &= O\left(\frac{\text{SNR}^2 (\log(6N^2M^2/\delta))^3 \log \left( O\left( \frac{1}{\epsilon} \right) \right)}{\epsilon d_h^{\frac{1}{2}}}\right), 
\end{aligned}
\end{equation}
for $i \in [M]\backslash\{1\}, n \in S_\pm$.
\begin{equation}
\begin{aligned}
&\sum_{s=T_2}^{t} |\alpha_{+,+}^{(t)}|, \sum_{s=T_2}^{t} |\alpha_{-,-}^{(t)}|, \sum_{s=T_2}^{t} |\beta_{+,+}^{(t)}|, \sum_{s=T_2}^{t} |\beta_{-,-}^{(t)}|, \sum_{s=T_2}^{t} |\alpha_{n,i,+}^{(t)}|, \sum_{s=T_2}^{t} |\beta_{n,i,-}^{(t)}| \\&= O\left(\frac{N (\log(6N^2M^2/\delta))^3 \log \left( O\left( \frac{1}{\epsilon} \right) \right)}{\epsilon d_h^{\frac{1}{2}}}\right),
\end{aligned}
\end{equation}
for $i \in [M]\backslash\{1\}, n \in S_\pm$.
\begin{equation}
\begin{aligned}
&\sum_{s=T_2}^{t} |\alpha_{n,+,i}^{(t)}|, \sum_{s=T_2}^{t} |\alpha_{n,-,i}^{(t)}|, \sum_{s=T_2}^{t} |\alpha_{n,i,+}^{(t)}|, \sum_{s=T_2}^{t} |\alpha_{n,i,-}^{(t)}|, \sum_{s=T_2}^{t} |\alpha_{n,i,n,j}^{(t)}|, \sum_{s=T_2}^{t} |\beta_{n,j,n,i}^{(t)}| \\&= O\left(\frac{(\log(6N^2M^2/\delta))^3 \log \left( O\left( \frac{1}{\epsilon} \right) \right)}{\epsilon d_h^{\frac{1}{2}}}\right), 
\end{aligned}
\end{equation}
for $i, j \in [M]\backslash\{1\}, n \in S_\pm$.
\begin{equation}
\begin{aligned}
\sum_{s=T_2}^{t} |\alpha_{n,i,n',j}^{(t)}|, \sum_{s=T_2}^{t} |\beta_{n,j,n',i}^{(t)}| &= O\left(\frac{(\log(6N^2M^2/\delta))^4 \log \left( O\left( \frac{1}{\epsilon} \right) \right)}{\epsilon d_h^{\frac{1}{2}}}\right) 
\end{aligned}
\end{equation}
for $i, j \in [M]\backslash\{1\}, n, n' \in [N], n \neq n'$. Plugging these and proposition $\mathcal{G}(t)$ into the update rule of $|\langle\boldsymbol q_\pm^{(t)},\boldsymbol k_\mp^{(t)} \rangle|, |\langle\boldsymbol{q}_{n,i}^{(t)},\boldsymbol{k}_{\bar{n},j}^{(t)} \rangle|$ and get
\begin{equation}
\begin{aligned}
&|\langle\boldsymbol{q}_+^{(t+1)},\boldsymbol{k}_-^{(t+1)} \rangle| \leq |\langle\boldsymbol{q}_+^{(T_2)},\boldsymbol{k}_-^{(T_2)} \rangle| + \sum_{s=T_2}^{t} |\langle\boldsymbol{q}_+^{(s+1)},\boldsymbol{k}_-^{(s+1)} \rangle - \langle\boldsymbol{q}_+^{(s)},\boldsymbol{k}_-^{(s)} \rangle| \\
&\leq |\langle\boldsymbol{q}_+^{(T_2)},\boldsymbol{k}_-^{(T_2)} \rangle| \\
&\quad + \sum_{s=T_2}^{t} \left| \alpha_{+,+}^{(s)} \langle\boldsymbol{k}_+^{(s)},\boldsymbol{k}_-^{(s)} \rangle + \sum_{n \in S_+} \sum_{i=2}^{M} \alpha_{n,+,i}^{(s)} \langle\boldsymbol{k}_{n,i}^{(s)},\boldsymbol{k}_-^{(s)} \rangle \right. \\
&\quad \left. + \beta_{-,-}^{(s)} \langle\boldsymbol{q}_+^{(s)},\boldsymbol{q}_-^{(s)} \rangle + \sum_{n \in S_-} \sum_{i=2}^{M} \beta_{n,-,i}^{(s)} \langle\boldsymbol{q}_{n,i}^{(s)},\boldsymbol{q}_-^{(s)} \rangle \right| \\
&\quad + \left( \alpha_{+,+}^{(s)}\boldsymbol{k}_+^{(s)} + \sum_{n \in S_+} \sum_{i=2}^{M} \alpha_{n,+,i}^{(s)}\boldsymbol{k}_{n,i}^{(s)} \right) \\
&\quad \cdot \left( \beta_{-,-}^{(s)}\boldsymbol{q}_-^{(s)\top} + \sum_{n \in S_-} \sum_{i=2}^{M} \beta_{n,-,i}^{(s)}\boldsymbol{q}_{n,i}^{(s)\top} \right) \bigg| \\
&\leq |\langle\boldsymbol{q}_+^{(T_2)},\boldsymbol{k}_-^{(T_2)} \rangle| \\
&\quad + \sum_{s=T_2}^{t} |\alpha_{+,+}^{(t)}| |\langle\boldsymbol{k}_+^{(t)},\boldsymbol{k}_-^{(t)} \rangle| + \sum_{n \in S_+} \sum_{i=2}^{M} \sum_{s=T_2}^{t} |\alpha_{n,+,i}^{(t)}| |\langle\boldsymbol{k}_{n,i}^{(t)},\boldsymbol{k}_-^{(t)} \rangle| \\
&\quad + \sum_{s=T_2}^{t} |\beta_{-,-}^{(t)}| |\langle\boldsymbol{q}_+^{(t)},\boldsymbol{q}_-^{(t)} \rangle| + \sum_{n \in S_-} \sum_{i=2}^{M} \sum_{s=T_2}^{t} |\beta_{n,-,i}^{(t)}| |\langle\boldsymbol{q}_{n,i}^{(t)},\boldsymbol{q}_+^{(t)} \rangle| \\
&\quad + \{\text{lower order term}\} \\
&= |\langle\boldsymbol{q}_+^{(T_2)},\boldsymbol{k}_-^{(T_2)} \rangle| \\
&\quad + O\left(\frac{N (\log(6N^2M^2/\delta))^3 \log \left( O\left( \frac{1}{\epsilon} \right) \right)}{\epsilon d_h^{\frac{1}{2}}}\right) \cdot o(1) + N \cdot M \cdot O\left(\frac{\text{SNR}^2 (\log(6N^2M^2/\delta))^3 \log \left( O\left( \frac{1}{\epsilon} \right) \right)}{\epsilon d_h^{\frac{1}{2}}}\right) \cdot o(1) \\
&\quad + O\left(\frac{N (\log(6N^2M^2/\delta))^3 \log \left( O\left( \frac{1}{\epsilon} \right) \right)}{\epsilon d_h^{\frac{1}{2}}}\right) \cdot o(1) + N \cdot M \cdot O\left(\frac{\text{SNR}^2 (\log(6N^2M^2/\delta))^3 \log \left( O\left( \frac{1}{\epsilon} \right) \right)}{\epsilon d_h^{\frac{1}{2}}}\right) \cdot o(1) \\
&= |\langle\boldsymbol{q}_+^{(T_2)},\boldsymbol{k}_-^{(T_2)} \rangle| + o\left(\frac{N (\log(6N^2M^2/\delta))^3 \log \left( O\left( \frac{1}{\epsilon} \right) \right)}{\epsilon d_h^{\frac{1}{2}}}\right) + o\left(\frac{N \cdot \text{SNR}^2 (\log(6N^2M^2/\delta))^3 \log \left( O\left( \frac{1}{\epsilon} \right) \right)}{\epsilon d_h^{\frac{1}{2}}}\right) \\
&= o(1),
\end{aligned}
\end{equation}

where the first inequality is by triangle inequality, the second inequality is by results in \ref{sec:uprule}, the last equality is by $|\langle\boldsymbol{q}_+^{(T_2)},\boldsymbol{k}_-^{(T_2)} \rangle| = o(1)$ and $d_h = \widetilde{\Omega}\left(\max\{\text{SNR}^4, \text{SNR}^{-4}\}N^2\epsilon^{-2}\right)$. Similarly we have $|\langle\boldsymbol{q}_-^{(t+1)},\boldsymbol{k}_+^{(t+1)} \rangle| = o(1)$ and $|\langle\boldsymbol{q}_{n,i}^{(t+1)},\boldsymbol{k}_{\bar{n},j}^{(t+1)} \rangle|=o(1)$.

\begin{lemma}[Convergence of Training Loss, Lemma D.7 in \citep{jiang2024unveil}]\label{sec:convergence}
There exist $T = \frac{C_{19}}{\eta \epsilon (\|\boldsymbol{\mu}\|_2+\tau)^2 \|\boldsymbol{w}_O\|_2^2}$ such that
\begin{equation}
L_S(\theta(T)) \leq \epsilon
\end{equation}

\begin{proof}
    As we have the same conditions at the end of stage III as \citep{jiang2024unveil}, thus we have:
    Substituting $t = T = \frac{C_{19}}{\eta \epsilon (\|\boldsymbol \mu\|_2+\tau)^2 \|\boldsymbol w_O\|_2^2}$ into propositions $\mathcal{F}(t)$ and get
\begin{align*}
V_+^{(t)} &\geq \log \left( \exp(V_+^{(T_2)}) + \eta C_{17} (\|\boldsymbol \mu\|_2-\tau)^2 \|\boldsymbol w_O\|_2^2 (t - T_2) \right) \\
&\geq \log \left( \exp(V_+^{(T_2)}) + \frac{C_{20}}{\epsilon} \right) \\
&\geq \log \left( \frac{C_{20}}{\epsilon} \right),
\end{align*}

\[
|V_{n,i}^{(t)}| = O(1).
\]
Thus, for  $n \in S_+$, { we bound } $f(\widetilde{\boldsymbol X}_n, \theta(t))$ { as follows} :
\begin{equation}
    f(\widetilde{\boldsymbol X}_n,\theta)=\tfrac{1}{M} \sum_{l=1}^{M} \varphi(\widetilde{\boldsymbol x}_{n,l}^{\top} \mathbf{W}_Q \mathbf{W}_K^{\top} (\widetilde{\boldsymbol X}_n^{})^{\top}) \widetilde{\boldsymbol X}_n \mathbf{W}_V\boldsymbol{w}_O\geq \log\left(\frac{1}{\epsilon}\right)
\end{equation}
And
\begin{align*}
\widetilde{\ell}_n^{(t)} &= \log \left( 1 + \exp(-f(\widetilde{\boldsymbol X}_n, \theta(t))) \right) \\
&\leq \exp(-f(\widetilde{\boldsymbol X}_n, \theta(t))) \\
&\leq \exp \left( -\log \left( \frac{1}{\epsilon} \right) \right) \\
&\leq \epsilon.
\end{align*}
Similarly, we have $\widetilde{\ell}_n^{(t)} \leq \epsilon$ for $n \in S_-$. Therefore, we have 
\[
L_S(\theta(T)) = \frac{1}{N} \sum_{n=1}^{N} \widetilde{\ell}_n^{(t)} \leq \epsilon.
\]

\end{proof}
\end{lemma}
\subsection{Test error}\label{sec:test_error}
In this section, we denote clean $V_+,V_-$ and $V_{\boldsymbol{\xi}}$ as $\boldsymbol{\mu}_+^{\top}\mathbf{W}_V{\boldsymbol{w}_O},\boldsymbol{\mu}_-^{\top}\mathbf{W}_V{\boldsymbol{w}_O}$ and $\boldsymbol{\xi}^{\top}\mathbf{W}_V{\boldsymbol{w}_O}$, and perturbed $\widetilde{V}_+,\widetilde{V}_-$ and $\widetilde{V}_{\boldsymbol{\xi}}$ as $\widetilde{\boldsymbol{\mu}}_+^{\top}\mathbf{W}_V{\boldsymbol{w}_O},\widetilde{\boldsymbol{\mu}}_-^{\top}\mathbf{W}_V{\boldsymbol{w}_O}$ and $\widetilde{\boldsymbol{\xi}}^{\top}\mathbf{W}_V{\boldsymbol{w}_O}$.
\subsubsection{Clean test error}
\begin{theorem}\label{thm:2}
Under Assumption \ref{ass:1}, in the theoretical analysis of test error in the second and third stages of benign overfitting, we define $g(\boldsymbol{\xi})$ as 
\(V_{\boldsymbol{\xi}}^{(t)} =  \left\langle\boldsymbol{\xi},  \mathbf{W}_{V}^{(t)}{\boldsymbol{w}_O} \right\rangle \). Then, we know that for any $x \geq 0$, if $g : \mathbb{R}^n \rightarrow \mathbb{R}$ is a Lipschitz function and $c$ is a constant, the following inequality holds for the test loss.
\begin{equation*}
\mathbb{P}(\sum_{j=2}^M \alpha_i (g(\boldsymbol{\xi}_j) - \mathbb{E}g(\boldsymbol{\xi}_j) )\geq x) \leq \exp \left( -\frac{cx^2}{\sigma_p^2(\sum_{j=2}^M \alpha_j^2)  \left\| \mathbf{W}_{V}^{(t)} {\boldsymbol{w}_O} \right\|_2^2} \right). 
\end{equation*}
    
\end{theorem}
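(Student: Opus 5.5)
The plan is to treat the test-time noise patches as independent Gaussian vectors and $g$ as a linear, hence Lipschitz, functional of each one. Then I would apply Gaussian concentration of measure to the weighted sum.

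\textbf{Step 1: independence from the training data.} The parameters $\mathbf{W}_V^{(t)}$ and $\boldsymbol{w}_O$ depend only on the initialization and the training sample. A fresh test point $(\mathbf{X},y)\sim D$ has noise patches $\boldsymbol{\xi}_2,\ldots,\boldsymbol{\xi}_M$ that are i.i.d. $\mathcal{N}(0,\sigma_p^2 \mathbf{P})$, where $\mathbf{P}=\mathbf{I}-\boldsymbol{\mu}_+\boldsymbol{\mu}_+^\top\|\boldsymbol{\mu}\|_2^{-2}-\boldsymbol{\mu}_-\boldsymbol{\mu}_-^\top\|\boldsymbol{\mu}\|_2^{-2}$, and they are independent of $\theta(t)$. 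So we may condition on $\theta(t)$ and treat $\mathbf{v}:=\mathbf{W}_V^{(t)}\boldsymbol{w}_O$ as a fixed vector. The weights $\alpha_j$ must also be fixed (or conditioned on) for the argument to go through. In the later application they are the softmax factors, which are $1/M\pm o(1)$ or bounded by constants by the Stage II/III analysis and Lemma~\ref{lemma:rsoftmax1}. I will state the inequality for deterministic $\alpha_j$ and bound the attention weights separately.

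\textbf{Step 2: the Lipschitz constant.} Write $\boldsymbol{\xi}_j=\sigma_p\mathbf{P}^{1/2}\mathbf{z}_j$ with $\mathbf{z}_j\sim\mathcal{N}(0,\mathbf{I}_d)$ independent. Define $G(\mathbf{z}_2,\ldots,\mathbf{z}_M)=\sum_{j=2}^M\alpha_j\, g(\sigma_p\mathbf{P}^{1/2}\mathbf{z}_j)$, where $g(\boldsymbol{\xi})=\langle\boldsymbol{\xi},\mathbf{v}\rangle$. For any two inputs, Cauchy--Schwarz and $\|\mathbf{P}^{1/2}\|_{op}\le 1$ give
\[
|G(\mathbf{z})-G(\mathbf{z}')|\le \sigma_p\|\mathbf{v}\|_2\sum_j|\alpha_j|\,\|\mathbf{z}_j-\mathbf{z}_j'\|_2\le \sigma_p\|\mathbf{v}\|_2\Big(\sum_j\alpha_j^2\Big)^{1/2}\|\mathbf{z}-\mathbf{z}'\|_2 .
\]
So $G$ is $L$-Lipschitz on $\mathbb{R}^{d(M-1)}$ with $L=\sigma_p\|\mathbf{W}_V^{(t)}\boldsymbol{w}_O\|_2(\sum_j\alpha_j^2)^{1/2}$.

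\textbf{Step 3: concentration.} The Gaussian concentration inequality (Tsirelson--Ibragimov--Sudakov) for the standard Gaussian vector $\mathbf{z}$ gives $\mathbb{P}(G-\mathbb{E}G\ge x)\le\exp(-x^2/(2L^2))$. Since $\mathbb{E}G=\sum_j\alpha_j\mathbb{E}g(\boldsymbol{\xi}_j)$, this is exactly the claimed bound with $c=1/2$. In fact $\mathbb{E}g=0$ here, which I will use later when plugging into the test-error bound.

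\textbf{Main obstacle.} The only delicate point is the status of the $\alpha_j$. If they are the attention weights, they depend on $\boldsymbol{\xi}_j$ themselves through the softmax, and then $G$ is not linear in the noise. To handle this, I would first condition on the high-probability event that all attention weights lie in their deterministic ranges from Stage III (softmax on the signal $\ge 1-o(1)$, on the noise $=o(1)$). On that event I would bound the output by a deterministic-weight linear combination, using the worst-case weights as upper bounds. I would then apply Steps 2--3 to that surrogate and absorb the failure probability into the final error bound.
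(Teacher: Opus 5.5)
Your argument is correct and is essentially the paper's proof. Both bound the Lipschitz constant of $\boldsymbol{\xi}\mapsto\langle\boldsymbol{\xi},\mathbf{W}_V^{(t)}\boldsymbol{w}_O\rangle$ by $\|\mathbf{W}_V^{(t)}\boldsymbol{w}_O\|_2$ via Cauchy--Schwarz and then invoke Gaussian concentration for Lipschitz functions (Vershynin, Theorem 5.2.2), treating the $\alpha_j$ as fixed just as you do; your whitening by $\mathbf{P}^{1/2}$ and explicit Lipschitz bound for the weighted sum $G$ (giving $c=1/2$) fill in steps the paper only asserts. One small remark: the paper's proof actually takes $g(\boldsymbol{\xi})=|\langle\boldsymbol{\xi},\mathbf{W}_V^{(t)}\boldsymbol{w}_O\rangle|$, which is equally $\|\mathbf{W}_V^{(t)}\boldsymbol{w}_O\|_2$-Lipschitz (so the inequality is unaffected) and is also what your worst-case-weight surrogate would need, so downstream $\mathbb{E}g=\sqrt{2/\pi}\,\sigma_p\|\mathbf{P}\mathbf{W}_V^{(t)}\boldsymbol{w}_O\|_2$ rather than $0$.
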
 

\begin{proof}

 According to Theorem 5.2.2 in ~\citet{vershynin2018high}, we know that for any $x \geq 0$, if $g : \mathbb{R}^n \rightarrow \mathbb{R}$ is a Lipschitz function, it holds that
\begin{equation}\label{eq:con1}
    \mathbb{P}(\sum_{i=1}^N \alpha_i (g(\boldsymbol{\xi}_i) - \mathbb{E}g(\boldsymbol{\xi}_i) ) \geq x) \leq \exp \left( -\frac{cx^2}{\sigma_p^2(\sum_{i=1}^N \alpha_i^2) \|g\|_{\text{Lip}}^2} \right)
\end{equation}
where $g(\boldsymbol{\xi})$ is defined as \(|V_{\boldsymbol{\xi}}^{(t)}| =  |\langle\boldsymbol{\xi},  \mathbf{W}_{V}^{(t)}{\boldsymbol{w}_O} \rangle |\), we have
\begin{align*}
|g(\boldsymbol{\xi}) - g(\boldsymbol{\xi}')| &= \left|  \left|\langle\boldsymbol{\xi},  \mathbf{W}_{V}^{(t)}{\boldsymbol{w}_O} \rangle\right| - \left|\langle\boldsymbol{\xi}',  \mathbf{W}_{V}^{(t)}{\boldsymbol{w}_O} \rangle \right| \right| \\
&\leq  \left| \left\langle\boldsymbol{\xi}-\boldsymbol{\xi}',  \mathbf{W}_{V}^{(t)}{\boldsymbol{w}_O} \right\rangle \right| \\
&\leq  \|\mathbf{W}_{V}^{(t)}{\boldsymbol{w}_O}\|_2 \|\boldsymbol{\xi} -\boldsymbol{\xi}'\|_2
\end{align*}
So, we can get
\begin{equation}\label{eq:glip}
\|g\|_{\text{Lip}} \leq  \left\| \mathbf{W}_{V}^{(t)}{\boldsymbol{w}_O} \right\|_2.     
\end{equation} 
By plugging (\ref{eq:glip}) into (\ref{eq:con1}), we can get the result.  
\end{proof}

The following inequality holds according to the update rules of the $V$ vector in Lemma~\ref{lem:update_v}, the first equality is derived from the update of triangle inequality, and the second equality is due to the initialization of the $V$ vector.
\begin{equation}
\begin{aligned}
\left\| \mathbf{W}_V^{(t)} {\boldsymbol{w}_O} \right\|_2 &\leq \left\| \mathbf{W}_V^{(0)} {\boldsymbol{w}_O} \right\|_2 + \sum_{t'=0}^{t-1} \left\| \mathbf{W}_V^{(t'+1)}{\boldsymbol{w}_O} - \mathbf{W}_V^{(t')}{\boldsymbol{w}_O} \right\|_2 \\
&= \left\| \mathbf{W}_V^{(0)} {\boldsymbol{w}_O} \right\|_2 + t O \left(  \eta \cdot \max \left\{ \left\|\boldsymbol{\mu} \right\|_2, \sigma_p \sqrt{d} \right\} \cdot \left\| {\boldsymbol{w}_O} \right\|^2 \right) \\
&= O \left( \sigma_V \left\| {\boldsymbol{w}_O} \right\|_2 \sqrt{d} + t \eta \left\| {\boldsymbol{w}_O} \right\|^2 \max \left\{ \left\|\boldsymbol{\mu} \right\|_2, \sigma_p \sqrt{d} \right\} \right)\\
&\leq O \left(  t \eta \left\| {\boldsymbol{w}_O} \right\|^2 \max \left\{ \left\|\boldsymbol{\mu} \right\|_2, \sigma_p \sqrt{d} \right\} \right)
\end{aligned}
\end{equation}

Since $g(\boldsymbol{\xi})$ as \(|V_{\boldsymbol{\xi}}^{(t)}| =  |\langle\boldsymbol{\xi},  \mathbf{W}_{V}^{(t)}{\boldsymbol{w}_O} \rangle |\), and since $\left\langle\boldsymbol{\xi},  \mathbf{W}_{V}^{(t)}{\boldsymbol{w}_O} \right\rangle \sim \mathcal{N}(0, \|\mathbf{W}_{V}^{(t)}{\boldsymbol{w}_O}\|_2^2 \sigma_p^2)$, so we can get:
\begin{equation*}
\mathbb{E} g(\boldsymbol{\xi}) =  \mathbb{E} |\langle\boldsymbol{\xi},  \mathbf{W}_{V}^{(t)}{\boldsymbol{w}_O} \rangle |=\sqrt{\frac{2}{\pi}}\|\mathbf{W}_{V}^{(t)}{\boldsymbol{w}_O}\|_2\sigma_p
\end{equation*}

The test error can be interpreted as the probability that the noise term dominates the signal term. 
Formally, this corresponds to the event that the cumulative contribution of the random perturbation 
exceeds the deterministic signal margin.
After centralization, we can apply Theorem~\ref{thm:2} to obtain a high-probability upper bound on the test error.

\begin{align*}
P(y(f(\theta, \mathbf{X})) \leq 0) &= P \left(  \left[ (\sum_{i=1}^MS_{i,1})  (V_{+}^{(t)} -  V_{-}^{(t)}) + \sum_{j=2}^M((\sum_{i=1}^MS_{i,j} )V_{\boldsymbol{\xi}_j}^{(t)} )\right]\leq0 \right)\\
&\leq P\left(\sum_{j=2}^M((\sum_{i=1}^MS_{i,j} )|V_{\boldsymbol{\xi}_j}^{(t)}| ) \geq (\sum_{i=1}^MS_{i,1})  \left(V_{+}^{(t)} - V_{-}^{(t)}\right)\right) \\
&= P\left(\sum_{j=2}^M \alpha_j(g(\boldsymbol{\xi}_j) - E\{g(\boldsymbol{\xi}_j)\} \geq  \alpha_1 \left(V_{+}^{(t)} - V_{-}^{(t)}\right)- \sigma_p\sqrt{\frac{2}{{\pi}} }(\sum_{j=2}^M\alpha_j)\ \left\|\mathbf{W}_{V}^{(t)} {\boldsymbol{w}_O}\right\|_2\right) \\
&\leq \exp\left[-\frac{c_2 \left( \alpha_1\left(V_{+}^{(t)} - V_{-}^{(t)}\right) - \sigma_p\sqrt{\frac{2}{{\pi}} }(\sum_{j=2}^M\alpha_j)\ \left\|\mathbf{W}_{V}^{(t)} {\boldsymbol{w}_O}\right\|_2\right)^2}{\sigma_p^2 (\sum_{j=2}^M \alpha_j^2) \left( \left\|\mathbf{W}_{V}^{(t)} {\boldsymbol{w}_O}\right\|_2\right)^2}\right] \\
&\leq \exp\left(\frac{c_4}{\pi}\right) \cdot \exp\left[-\frac{c_5}{2} \left(\frac{ \alpha_1\left(V_{+}^{(t)} - V_{-}^{(t)}\right)}{\sigma_p \sqrt{\sum_{j=2}^M \alpha_j^2} \left\|\mathbf{W}_{-V}^{(t)} {\boldsymbol{w}_O}\right\|_2}\right)^2\right]
\end{align*}

We denote $\alpha_j$ as $\sum_iS_{i,j}$, where $S_{i,j}$ is $softmax(\langle \mathbf{q}_i^{(t)}, \mathbf{k}_j^{(t)} \rangle)$. When the subscript is 1, it represents the signal, and when the subscript is from 2 to M, it represents noise. 

Then, by the lower bound of $V_\pm^{(t)}$ and upper bound of $\left\| \mathbf{W}_V^{(t)} {\boldsymbol{w}_O} \right\|_2$, we can further bound the test error with following inequality:
\begin{align*}
P(y(f(\theta, \mathbf{X}) \leq 0) &\leq \exp\left(\frac{c_4}{\pi}\right) \cdot \exp\left[-\frac{c_5}{2} \left(\frac{ \alpha_1\left(V_{+}^{(t)} - V_{-}^{(t)}\right)}{\sigma_p \sqrt{\sum_{j=2}^M \alpha_j^2} \left\|\mathbf{W}_{V}^{(t)} {\boldsymbol{w}_O}\right\|_2}\right)^2\right] \\
&\leq \exp\left(\frac{c_{12}}{\pi}\right) \exp\left[-\frac{c_{13}}{2} O\left(\frac{V_+^{(t)}-V_-^{(t)}}{\frac{\sigma_p(V_+^{(t)}-V_-^{(t)})}{ \|\boldsymbol{\mu}\|_2}}\right)^2\right]\\
&=\exp\left(\frac{c_{12}}{\pi}\right) \exp\left[-\frac{c_{13}}{2} O\left(d SNR^{2}\right)\right]
\end{align*}
where the second inequality is by $\mathbf{W}_{V}^{(t)} {\boldsymbol{w}_O}$ is almost aligned with $V_+^{(t)}$ and $V_-^{(t)}$, thus $\|\mathbf{W}_{V}^{(t)} {\boldsymbol{w}_O}\|_2\sim\frac{V_+^{(t)}-V_-^{(t)}}{\|\boldsymbol{\mu}\|_2}$.
\subsubsection{Robust test error}
We start by writing the prediction score under perturbation $\widetilde{X}$ as
\begin{equation*}
    \begin{aligned}
        yf(\theta,\widetilde{X})
        =     \Big(\sum_{i=1}^M \widetilde{S}_{i1}\Big) \big(\widetilde{V}_{+}^{(t)} + \widetilde{V}_{-}^{(t)}\big) 
            + \sum_{j=2}^M \Big(\sum_{i=1}^M \widetilde{S}_{ij}\Big) \widetilde{V}_{\boldsymbol{\xi}_j}^{(t)},
    \end{aligned}
\end{equation*}

We first take the first term as an example and derive an upper bound on the maximum discrepancy between the perturbed input $\widetilde{X} \in B(X,\tau)$ and the clean input.

\begin{align*}
     S_{11}V_{+}^{(t)}-\widetilde{S}_{11}\widetilde{V}_{+}^{(t)} &=({S}_{11}-\widetilde{S}_{11})V_+^{(t)}+\widetilde{S}_{11}(V_{+}^{(t)}-\widetilde{V}_{+}^{(t)})
    \\&\leq (1-1/C)S_{11}{V}_+^{(t)}+\widetilde{S}_{11} |\langle \widetilde{\boldsymbol{\mu}}_+-\boldsymbol{\mu}_+, \mathbf{W}_{V}^{(t)}{\boldsymbol{w}_O} \rangle|\\
    &\leq(1-1/C)S_{11}{V}_+^{(t)}+\widetilde{S}_{11}\big\| \mathbf{W}_{V}^{(t)} {\boldsymbol{w}_O} \big\|  \tau,
\end{align*}
where the inequality comes from Lemma~\ref{lemma:rsoftmax}, and the definition of $V_+^{(t)}$. This shows that the deviation can be bounded linearly in the perturbation magnitude, with only a small residual term since $(C-1)=o(1)$. 
Similarly, an analogous upper bound holds for $\widetilde{S}_{i1}\widetilde{V}_{\pm}^{(t)} - S_{i1}V_{\pm}^{(t)}$ and $\widetilde{S}_{ij}\widetilde{V}_{\boldsymbol{\xi}_j}^{(t)} - S_{ij}V_{\boldsymbol{\xi}_j}^{(t)}$ for $i \in [M], \, j \in [M]\backslash\{1\}$.

Aggregating the deviations across all components, the worst-case perturbation satisfies
\begin{equation}\label{eq:max_mar}
    \begin{aligned}
        & y f(\theta,X)-\min_{\widetilde{X}\in B(X,\tau)} y f(\theta,\widetilde{X})  \\&\quad\quad\quad\leq  
            \Big(\sum_i {S}_{i1}\Big)  \|\mathbf{W}_{V}^{(t)}{\boldsymbol{w}_O}\| \tau 
            + \sum_{j=2}^M \Big(\sum_i {S}_{ij}\Big) \|\mathbf{W}_{V}^{(t)}{\boldsymbol{w}_O}\| \tau +(1-1/C)S_{11}(\widetilde{V}_+^{(t)}+\widetilde{V}_+^{(t)})+o(1)
         \\
        &\quad\quad\quad\lesssim \Bigg(\sum_{j=1}^M \sum_i {S}_{ij} \Bigg)
         \big\| \mathbf{W}_{V}^{(t)} {\boldsymbol{w}_O} \big\|  \tau+(1-1/C)\Big(\sum_i {S}_{i1}\Big)(\widetilde{V}_+^{(t)}+\widetilde{V}_-^{(t)})\\
         &\quad\quad\quad=M\big\| \mathbf{W}_{V}^{(t)} {\boldsymbol{w}_O} \big\|  \tau+(1-1/C)\Big(\sum_i {S}_{i1}\Big)(\widetilde{V}_+^{(t)}+\widetilde{V}_-^{(t)})
    \end{aligned}
\end{equation}

where the first inequality comes from $\widetilde{V}_{\boldsymbol{\xi}_j}^{(t)}=o(1)$ for $j\in[M]\backslash\{1\}$. Thus the adversarial effect scales with both the cumulative magnitude of the perturbed coefficients and the operator norm of the weight matrices. Then we can bound the robust test error:

\begin{align*}
P&\Bigg(\min_{\widetilde{X}\in B(X,\tau)} y f(\theta,\widetilde{X}) \leq 0\Bigg) = P\Bigg(y f(\theta,X) 
+ \big(y f(\theta,X)-\min_{\widetilde{X}\in B(X,\tau)} y f(\theta,\widetilde{X}) \big) \leq 0\Bigg) \\
&\leq P\Bigg(\sum_{j=2}^M \alpha_j \big(g(\boldsymbol{\xi}_j) - E\{g(\boldsymbol{\xi}_j)\}\big) 
\geq \alpha_1 \big(V_{+}^{(t)} - V_{-}^{(t)}\big) 
- \sigma_p \sqrt{\tfrac{2}{\pi}} \Big(\sum_{j=2}^M \alpha_j\Big) \big\|\mathbf{W}_{V}^{(t)} {\boldsymbol{w}_O}\big\|_2
\\&\quad\quad\quad- M \big\|\mathbf{W}_{V}^{(t)} {\boldsymbol{w}_O}\big\|_2 \tau-(1-1/C)\alpha_1 ({V}_+^{(t)}-{V}_-^{(t)}) \Bigg) \\
&\leq \exp\!\left(\tfrac{c_{12}}{\pi}\right) 
   \exp\!\left[-\tfrac{c_{13}}{2} 
   \left(\frac{\alpha_1 (V_{+}^{(t)} - V_{-}^{(t)})}
   {\sigma_p \sqrt{\sum_{j=2}^M \alpha_j^2} \|\mathbf{W}_{V}^{(t)} {\boldsymbol{w}_O}\|_2}
   - \frac{M\|\mathbf{W}_{V}^{(t)} {\boldsymbol{w}_O}\|\tau}
   {\sigma_p \sqrt{\sum_{j=2}^M \alpha_j^2}\|\mathbf{W}_{V}^{(t)} {\boldsymbol{w}_O}\|_2} \right)^2\right] \\
    &\leq \exp\!\left(\tfrac{c_{12}}{\pi}\right) 
   \exp\!\left[-\tfrac{c_{13}}{2} \, O\!\left(\sqrt{d}SNR(1-\frac{\tau}{\|\boldsymbol{\mu}\|_2})\right)^2 \right] \\
\end{align*}

The first inequality follows from (\ref{eq:max_mar}). 
The third inequality uses the fact that $C \leq e/2$, which we absorb into the constant term. 
The last inequality follows from the bound on $V_{\pm}^{(t)}$ and  $\|\mathbf{W}_{V}^{(t)} {\boldsymbol{w}_O}\|_2$.

This completes the proof.
\section{Benign Overfitting in Case 2}
Stage I stay same with ~\ref{sec:s1}. Next, we aim to prove that under condition $\tau=(1-o(1))\|\boldsymbol{\mu}\|$, the attention component in a ViT will remain in its initialization state and fail to learn meaningful signal-to-signal or noise-to-signal interactions. This is because, under such perturbations, any newly emerging margin can be immediately neutralized, preventing the signal-to-signal attention from accumulating advantages. In this regime, the ViT effectively degenerates into a linear model.

At the end of Stage~I, since $V_{+}^{T_1} \geq 3M |V_{n,i}^{T_1}|$ and $q = o(1)$ at this point, the perturbation has no significant effect. Consequently, $\langle\boldsymbol{q}_+,\boldsymbol{k}_+ \rangle$ and$\langle\boldsymbol{q}_{n,i},\boldsymbol{k}_+ \rangle$ experiences a temporary increase, but it does not exceed $\Theta(\log C)$ (as we will demonstrate later). Therefore, we assume that after Stage~II, when $\langle q, k \rangle$ has stabilized and the loss derivatives are no longer at the $o(1)$ scale. However, $|V_{n,i}^{T_3}|$ is not $o(1)$; it is of the same order as $|V_{+}^{T_3}|$, which implies that a larger SNR is required. The following conditions hold at the beginning of Stage~III.
\begin{align*}
    |V_+^{(T_2)}|, |V_-^{(T_2)}|,& |V_{n,i}^{(T_2)}| = o(1), \\
    V_+^{(T_2)} &\geq 3M \cdot |V_{n,i}^{(T_2)}|, \\
    V_-^{(T_2)} &\leq -3M \cdot |V_{n,i}^{(T_2)}|, \\
    \|\boldsymbol{q}_+^{(T_2)}\|_2^2, \|\boldsymbol{k}_+^{(T_2)}\|_2^2 &= \Theta(\log{C}), \\
    \|\boldsymbol{q}_{n,i}^{(T_2)}\|_2^2, \|\boldsymbol{k}_{n,i}^{(T_2)}\|_2^2 &= \Theta(\log{C}),
\end{align*}
\[
|\langle\boldsymbol{q}_+^{(T_2)},\boldsymbol{q}_-^{(T_2)} \rangle|, |\langle\boldsymbol{q}_+^{(T_2)},\boldsymbol{q}_{n,i}^{(T_2)} \rangle|, |\langle\boldsymbol{q}_{n,i}^{(T_2)},\boldsymbol{q}_{n',j}^{(T_2)} \rangle| = o(1),
\]
\[
|\langle\boldsymbol{k}_+^{(T_2)},\boldsymbol{k}_-^{(T_2)} \rangle|, |\langle\boldsymbol{k}_+^{(T_2)},\boldsymbol{k}_{n,i}^{(T_2)} \rangle|, |\langle\boldsymbol{k}_{n,i}^{(T_2)},\boldsymbol{k}_{n',j}^{(T_2)} \rangle| = o(1),
\]
for $i, j \in [M]\backslash\{1\}, n, n' \in [N], i \neq j \text{ or } n \neq n'$.

\[
|\langle\boldsymbol{q}_+^{(T_2)},\boldsymbol{k}_+^{(T_2)} \rangle|, |\langle\boldsymbol{q}_+^{(T_2)},\boldsymbol{k}_{n,j}^{(T_2)} \rangle|, |\langle\boldsymbol{q}_{n,i}^{(T_2)},\boldsymbol{k}_+^{(T_2)} \rangle|, |\langle\boldsymbol{q}_{n,i}^{(T_2)},\boldsymbol{k}_{n',j}^{(T_2)} \rangle| =\Theta(\log{C})
\]
\[
|\langle\boldsymbol{q}_+^{(T_2)},\boldsymbol{k}_-^{(T_2)} \rangle|, |\langle\boldsymbol{q}_{n,i}^{(T_2)},\boldsymbol{k}_{\overline{n},j}^{(T_2)} \rangle| = o(1)
\]
for $i, j \in [M]\backslash\{1\}, n, \overline{n} \in [N], n \neq \overline{n}$.

Let $T_3 = \Theta\left(\frac{M}{\eta \epsilon (\|\boldsymbol{\mu}\|_2+\tau)^2 \|\boldsymbol{w}_O\|_2^2}\right)$. Next we prove the following four propositions $\mathcal{J}(t)$, $\mathcal{K}(t)$, $\mathcal{L}(t)$ by induction on $t$ for $t \in [T_2, T_3]$:

\begin{itemize}
    \item $\mathcal{J}(t)$:
    \begin{align*}
        &\qquad\qquad\qquad\qquad\qquad\qquad\qquad\qquad  V_+^{(t)} \geq 3M \cdot |V_{n,i}^{(t)}|, \\
        &\qquad\qquad\qquad\qquad\qquad\qquad\qquad\qquad V_-^{(t)} \leq -3M \cdot |V_{n,i}^{(t)}|, \\
        &\qquad\qquad\qquad\qquad\qquad\qquad\qquad\qquad\quad |V_{n,i}^{(t)}| = o(1), \\
        &\log\left( \exp(V_+^{(T_2)})  + \frac{\eta}{M} C_{17} (\|\boldsymbol{\mu}\|_2-\tau)^2 \|\boldsymbol{w}_O\|_2^2 (t - T_2)\right) \leq V_+^{(t)}\leq 2 \log\left(O\left(\frac{1}{\epsilon}\right)\right), \\
        &-2 \log\left(O\left(\frac{1}{\epsilon}\right)\right) \leq V_-^{(t)} \leq -\log\left(\exp(-V_-^{(T_2)}) + \frac{\eta}{M} C_{17} (\|\boldsymbol{\mu}\|_2-\tau)^2 \|\boldsymbol{w}_O\|_2^2 (t - T_2)\right)
    \end{align*}
    for $i \in [M]\backslash\{1\}, n \in [N]$.
    \item $\mathcal{K}(t)$:
    \begin{align*}
        \|\boldsymbol q_\pm^{(t)}\|_2^2, \|\boldsymbol k_\pm^{(t)}\|_2^2 &= \Theta(logC), \\
        \|\boldsymbol{q}_{n,i}^{(t)}\|_2^2, \|\boldsymbol{k}_{n,i}^{(t)}\|_2^2 &= \Theta\left(logC\right), \\
        |\langle\boldsymbol{q}_+^{(t)},\boldsymbol{q}_-^{(t)} \rangle|, |\langle\boldsymbol q_\pm^{(t)},\boldsymbol{q}_{n,i}^{(t)} \rangle|&, |\langle\boldsymbol{q}_{n,i}^{(t)},\boldsymbol{q}_{n',j}^{(t)} \rangle| = o(1), \\
        |\langle\boldsymbol{k}_+^{(t)},\boldsymbol{k}_-^{(t)} \rangle|, |\langle\boldsymbol k_\pm^{(t)},\boldsymbol{k}_{n,i}^{(t)} \rangle|&, |\langle\boldsymbol{k}_{n,i}^{(t)},\boldsymbol{k}_{n',j}^{(t)} \rangle| = o(1)
    \end{align*}
    for $i, j \in [M]\backslash\{1\}, n, n' \in [N], i \neq j \text{ or } n \neq n'$,  $C=O(1)$.
    
    \item $\mathcal{L}(t)$:
    \begin{align*}
        |\langle\boldsymbol q_\pm^{(t)},\boldsymbol k_\pm^{(t)} \rangle|, |\langle\boldsymbol q_\pm^{(t)},\boldsymbol{k}_{n,j}^{(t)} \rangle|, |\langle\boldsymbol{q}_{n,i}^{(t)},\boldsymbol k_\pm^{(t)} \rangle|, |\langle\boldsymbol{q}_{n,i}^{(t)},\boldsymbol{k}_{n,j}^{(t)} \rangle| &=\Theta(\log{C}), \\
        |\langle\boldsymbol q_\pm^{(t)},\boldsymbol k_\mp^{(t)} \rangle|, |\langle\boldsymbol{q}_{n,i}^{(t)},\boldsymbol{k}_{\overline{n},j}^{(t)} \rangle| &= o(1)
    \end{align*}
    for $i, j \in [M]\backslash\{1\}, n, n' \in [N], n \neq \overline{n}$,  $C=O(1)$.
\end{itemize}

By the results of Stage II, we know that $\mathcal{F}(T_1)$, $\mathcal{G}(T_2)$, $\mathcal{I}(T_2)$ are true. To prove that $\mathcal{F}(t)$, $\mathcal{G}(t)$, $\mathcal{H}(t)$ and $\mathcal{I}(t)$ are true in stage III, we will prove the following claims holds for $t \in [T_2, T_3]$:
\begin{itemize}
    \item Claim 9. $\mathcal{L}(T_2), \ldots, \mathcal{L}(t) \implies \mathcal{J}(t+1)$
    \item Claim 10. $\mathcal{J}(t), \mathcal{L}(t),  \mathcal{K}(t) \implies \mathcal{K}(t+1)$
    \item Claim 11. $\mathcal{J}(t),  \mathcal{K}(t),  \mathcal{L}(t),  \implies \mathcal{L}(t+1)$
\end{itemize}

\subsection{Proof of Claim 9}
The proofs for $V_+^{(t)} \geq 3M \cdot |V_{n,i}^{(t)}|$ and $V_-^{(t)} \leq -3M \cdot |V_{n,i}^{(t)}|$ are the same as for ~\ref{sec:cl1}.

 we provide the bounds for $-\widetilde{\ell}_n^{'(s)}$.  
Note that $\ell(z) = \log(1 + \exp(-z))$ and $-\ell'(z) = \exp(-z)/(1 + \exp(-z))$.  
Without loss of generality, assume $y_n = 1$. We have
\begin{equation}
\begin{aligned}
   -\widetilde{\ell}'(f(\widetilde{\boldsymbol X}_n, \theta(s))) 
   &= \frac{1}{1 + \exp\!\left(\tfrac{1}{M} \sum_{l=1}^{M} \varphi(\widetilde{\boldsymbol x}_{n,l}^\top \mathbf{W}_Q^{(s)} \mathbf{W}_K^{(s)\top} (\widetilde{\boldsymbol X}_n)^\top) \widetilde{\boldsymbol X}_n \mathbf{W}_V^{(s)}\boldsymbol{w}_O\right)} \\
   &= \frac{1}{M} \Big( \text{softmax}(\langle \widetilde{\boldsymbol q}_\pm^{(s)}, \widetilde{\boldsymbol k}_\pm^{(s)} \rangle) 
      + \sum_{l=2}^{M} \text{softmax}(\langle \widetilde{\boldsymbol{q}}_{n,l}^{(s)}, \widetilde{\boldsymbol k}_\pm^{(s)} \rangle) \Big) 
      \cdot \widetilde{\boldsymbol{\mu}}_+^\top \mathbf{W}_V^{(s)}\boldsymbol{w}_O \\
   &\quad + \sum_{j \in [M]\backslash\{1\}} 
      \Big( \text{softmax}(\langle \widetilde{\boldsymbol{q}}_{n,j}^{(s)}, \widetilde{\boldsymbol{k}_{n,j}}^{(s)} \rangle) 
      + \sum_{l=2}^{M} \text{softmax}(\langle \widetilde{\boldsymbol{q}}_{n,l}^{(s)}, \widetilde{\boldsymbol{k}}_{n,j}^{(s)} \rangle) \Big) 
      \cdot \widetilde{\boldsymbol{\xi}}_{n,j}^\top \mathbf{W}_V^{(s)}\boldsymbol{w}_O \\
   &= \frac{1}{M} \Big( M \cdot \frac{C}{C+M-1} \cdot V_+^{(s)} 
      + M \cdot \frac{1}{C+M-1} \cdot \sum_{j \in [M]\backslash\{1\}} V_{n,i}^{(s)} \Big) \\
   &\geq \tfrac{1}{2M} V_+^{(s)},
\end{aligned}
\end{equation}
for $s \in [T_2, t]$,  
where the second equality is by $\mathcal{L}(t)$ that $\Lambda_{n,\pm,j}^{(s)}=\Theta(\log{C})$ and $\Lambda_{n,i,\pm,j}^{(s)}=\Theta(\log{C})$, and the inequality follows from $V_+^{(s)} \geq 3M \cdot |V_{n,i}^{(s)}|$.  

Similarly, we have
\begin{equation}
\begin{aligned}
   \tfrac{1}{M} \sum_{l=1}^{M} \varphi(\widetilde{\boldsymbol x}_{n,l}^\top \mathbf{W}_Q^{(s)} \mathbf{W}_K^{(s)\top} (\widetilde{\boldsymbol X}_n)^\top) \widetilde{\boldsymbol X}_n \mathbf{W}_V^{(s)}\boldsymbol{w}_O
   &\leq \max_{i \in [M]\backslash\{1\}} \{V_+^{(s)}, V_{n,i}^{(s)}\} \\
   &= V_+^{(s)}.
\end{aligned}
\end{equation}
Then, we have
\begin{equation}\label{eq:75_2}
\begin{aligned}
    -\widetilde{\ell}'(f(\widetilde{\boldsymbol X}_n, \theta(s))) &= \frac{1}{1 + \exp\left(\tfrac{1}{M} \sum_{l=1}^{M} \varphi(\widetilde{\boldsymbol x}_{n,l}^\top \mathbf{W}_Q^{(s)} \mathbf{W}_K^{(s)\top} (\widetilde{\boldsymbol X}_n)^\top) \widetilde{\boldsymbol X}_n \mathbf{W}_V^{(s)}\boldsymbol{w}_O\right)} \\
    &\geq \frac{1}{1 + \exp(V_+^{(s)})} \\
    &\geq \frac{C_{16}}{\exp(V_+^{(s)})}
\end{aligned}
\end{equation}
For the last inequality, note that $V_+^{(T_2)} \geq 0$ and $V_+^{(s)}$ is monotonically increasing, so there exist a constant $C_{16}$ such that $\frac{1}{1 + \exp(V_+^{(s)})} \geq \frac{C_{16}}{\exp(V_+^{(s)})}$. We also have the upper bound
\begin{equation}\label{eq:76_2}
\begin{aligned}
    -\ell'(f(X_n, \theta(s))) &= \frac{1}{1 + \exp\left(\frac{1}{M} \sum_{l=1}^{M} \varphi(x_{n,l}^\top \mathbf{W}_Q^{(s)} \mathbf{W}_K^{(s) \top} (X_n)^\top) X_n \mathbf{W}_V^{(s)}\boldsymbol{w}_O\right)} \\
    &\leq \frac{1}{1 + \exp(V_+^{(s)}/2M)} \\
    &\leq \frac{1}{\exp(V_+^{(s)}/2M)}
\end{aligned}
\end{equation}

By the update rule of $\gamma_{V,+}^{(t)}$ and in Lemma~\ref{lem:update_v} and get
\begin{equation}
\begin{aligned}
    \gamma_{V,+}^{(s+1)} - \gamma_{V,+}^{(s)} &=- \frac{\eta \langle\widetilde{\boldsymbol{\mu}}_+,\widetilde{\boldsymbol{\mu}}_+^{(t)}\rangle}{NM} \sum_{n \in S_+} \widetilde{\ell}_n'^{(t)} \left( \frac{\exp(\langle \widetilde{\mathbf{q}}_+^{(t)}, \widetilde{\mathbf{k}}_+^{(t)} \rangle)}{\exp(\langle \widetilde{\mathbf{q}}_+^{(t)}, \widetilde{\mathbf{k}}_+^{(t)} \rangle) + \sum_{k=2}^M \exp(\langle \widetilde{\mathbf{q}}_+^{(t)}, \widetilde{\mathbf{k}}_{n,k}^{(t)} \rangle)} \right. \nonumber \\
    &\quad\quad\quad\quad\quad\quad\quad\quad\quad \left. + \sum_{j=2}^M \frac{\exp(\langle \widetilde{\mathbf{q}}_{n,j}^{(t)}, \widetilde{\mathbf{k}}_+^{(t)} \rangle)}{\exp(\langle \widetilde{\mathbf{q}}_{n,j}^{(t)}, \widetilde{\mathbf{k}}_+^{(t)} \rangle) + \sum_{k=2}^M \exp(\langle \widetilde{\mathbf{q}}_{n,j}^{(t)}, \widetilde{\mathbf{k}}_{n,k}^{(t)} \rangle)} \right)  
    \\&+\sum_{n \in S_+}\widetilde{\ell}_n'^{(t)}\sum_{i=2}^M\frac{-\eta\langle\widetilde{\boldsymbol{\mu}}_+,\widetilde{\boldsymbol{\xi}}_{n,i}^{(t)}\rangle}{NM}\bigg(\frac{\exp(\langle \widetilde{\mathbf{q}}_+^{(t)}, \widetilde{\mathbf{k}}_{n,i}^{(t)} \rangle)}{\exp(\langle \widetilde{\mathbf{q}}_+^{(t)}, \widetilde{\mathbf{k}}_+^{(t)} \rangle) + \sum_{k=2}^M \exp(\langle \widetilde{\mathbf{q}}_+^{(t)}, \widetilde{\mathbf{k}}_{n,k}^{(t)} \rangle)} 
    \\&\quad\quad\quad\quad\quad\quad\quad\quad\quad+\sum_{j=2}^M\frac{\exp(\langle \widetilde{\mathbf{q}}_{n,j}^{(t)}, \widetilde{\mathbf{k}}_{n,i}^{(t)} \rangle)}{\exp(\langle \widetilde{\mathbf{q}}_{n,j}^{(t)}, \widetilde{\mathbf{k}}_+^{(t)} \rangle) + \sum_{k=2}^M \exp(\langle \widetilde{\mathbf{q}}_{n,j}^{(t)}, \widetilde{\mathbf{k}}_{n,k}^{(t)} \rangle)}\bigg) \\
    &\geq -\frac{\eta (\|\boldsymbol{\mu}\|_2-\tau)^2}{NM}\ \sum_{n \in S_+} \ell_n^{'(s)} (M \cdot \frac{C}{C+M-1})+\frac{\eta \tau\|\boldsymbol{\mu}\|_2}{NM}\ \sum_{n \in S_+} \ell_n^{'(s)} (M \cdot \frac{1}{C+M-1}) \\
    &\geq \frac{\eta (\|\boldsymbol{\mu}\|_2-\tau)^2}{NM} \cdot \frac{N}{4}  \cdot \frac{C_{16}}{\exp(V_+^{(s)})}-  \frac{\eta \tau\|\boldsymbol{\mu}\|_2}{NM} \cdot \frac{N}{4}  \cdot \frac{C_{16}}{\exp(V_+^{(s)}/2M)}\\
    &\geq  \frac{\eta C_{17}(\|\boldsymbol{\mu}\|_2-\tau)^2}{M} \frac{1}{\exp(V_+^{(s)})}
\end{aligned}
\end{equation}
where the second inequality is by (\ref{eq:75_2})(\ref{eq:76_2}), the last inequality is by $N\cdot SNR^2=\Omega(\frac{1}{\epsilon})$ Then by definition~\ref{def:sca_v}, we get
\begin{equation}
\begin{aligned}
    V_+^{(s+1)} - V_+^{(s)} &= (\gamma_{V,+}^{(s+1)} - \gamma_{V,+}^{(s)}) \|\boldsymbol{w}_O\|_2^2 \geq \frac{\eta C_{17} (\|\boldsymbol{\mu}\|_2-\tau)^2 \|\boldsymbol{w}_O\|_2^2}{M\exp(V_+^{(s)})} \\
    \end{aligned}
\end{equation}
Multiply both sides simultaneously by $\exp(V_+^{(s)})$ and get
\begin{equation}
\begin{aligned}
    \exp(V_+^{(s)})(V_+^{(s+1)} - V_+^{(s)}) &\geq \frac{\eta}{M} C_{17} (\|\boldsymbol{\mu}\|_2-\tau)^2 \|\boldsymbol{w}_O\|_2^2
\end{aligned}
\end{equation}
Taking a summation from $T_2$ to $t$ and get
\begin{equation}
\begin{aligned}
    \sum_{s=T_2}^{t} \exp(V_+^{(s)})(V_+^{(s+1)} - V_+^{(s)}) &\geq \sum_{s=T_2}^{t} \frac{\eta}{M} C_{17} (\|\boldsymbol{\mu}\|_2-\tau)^2 \|\boldsymbol{w}_O\|_2^2 \\
    &\geq \frac{\eta}{M} C_{17} (\|\boldsymbol{\mu}\|_2-\tau)^2 \|\boldsymbol{w}_O\|_2^2 (t - T_2 + 1)
\end{aligned}
\end{equation}

By the property that $V_+^{(s)}$ is monotonically increasing, we have
\begin{equation}
\begin{aligned}
    \int_{V_+^{(T_2)}}^{V_+^{(t+1)}} \exp(x) dx &\geq \sum_{s=T_2}^{t} \exp(V_+^{(s)})(V_+^{(s+1)} - V_+^{(s)}) \\
    &\geq \frac{\eta}{M} C_{17} (\|\boldsymbol{\mu}\|_2-\tau)^2 \|\boldsymbol{w}_O\|_2^2 (t - T_2 + 1)
\end{aligned}
\end{equation}

By $\int_{V_+^{(T_2)}}^{V_+^{(t+1)}} \exp(x) dx = \exp(V_+^{(t+1)}) - \exp(V_+^{(T_2)})$ we get
\begin{equation}
\begin{aligned}
    V_+^{(t+1)} &\geq \log \left( \exp(V_+^{(T_2)}) + \frac{\eta}{M} C_{17} (\|\boldsymbol{\mu}\|_2-\tau)^2 \|\boldsymbol{w}_O\|_2^2 (t - T_2 + 1) \right)   
\end{aligned}
\end{equation}

Similarly, we have
\begin{equation}
\begin{aligned}
    V_-^{(t+1)} &\leq -\log \left( \exp\left(V_-^{(T_2)} \right)  
    + \eta C_{17} (\|\boldsymbol{\mu}\|_2-\tau)^2 \|\boldsymbol{w}_O\|_2^2 (t - T_2 + 1)\right)
\end{aligned}
\end{equation}

Similarly, we have the lower bound, the proofs are the same as for ~\ref{sec:cl5} 

\subsection{Proof of Claim 10}\label{sec:cl10}
We first consider the increment of $\langle\boldsymbol q,\boldsymbol k\rangle$ at the $t$-th update when using the original clean data. We then show that, at this step, the effect introduced by the perturbation under adversarial samples dominates the increment learned from the clean data; hence $\langle\boldsymbol q,\boldsymbol k\rangle$ remains stable and bounded.

By the update rule of $\langle\boldsymbol q,\boldsymbol k\rangle$ we have
\begin{equation}
\begin{aligned}
    &\langle\boldsymbol{q}_+^{(t+1)},\boldsymbol{k}_+^{(t+1)} \rangle - \langle\boldsymbol{q}_+^{(t)},\boldsymbol{k}_+^{(t)} \rangle \\
    &= \alpha_{+,+}^{(t)} \|\boldsymbol{k}_+^{(t)}\|_2^2 + \beta_{+,+}^{(t)} \|\boldsymbol{q}_+^{(t)}\|_2^2 +\text{\{lower order term\}}
\end{aligned}
\end{equation}

Subsequently, we establish upper bounds for $\alpha$ and $\beta$ for clean data.
\begin{equation}\label{eq:103_2}
\begin{aligned}
    \alpha_{+,+}^{(t)} &= \frac{\eta}{NM} \sum_{n \in S_+} -\ell_n^{'(t)} \|\boldsymbol{\mu}\|_2^2 \\
    &\cdot \left( V_+^{(t)} \frac{\exp(\langle\boldsymbol{q}_+^{(t)},\boldsymbol{k}_+^{(t)} \rangle)}{\exp(\langle\boldsymbol{q}_+^{(t)},\boldsymbol{k}_+^{(t)} \rangle) + \sum_{j=2}^{M} \exp(\langle\boldsymbol{q}_+^{(t)},\boldsymbol{k}_{n,j}^{(t)} \rangle)} \right. \\
    &\quad - \left( \frac{\exp(\langle\boldsymbol{q}_+^{(t)},\boldsymbol{k}_+^{(t)} \rangle)}{\exp(\langle\boldsymbol{q}_+^{(t)},\boldsymbol{k}_+^{(t)} \rangle) + \sum_{j=2}^{M} \exp(\langle\boldsymbol{q}_+^{(t)},\boldsymbol{k}_{n,j}^{(t)} \rangle)} \right)^2 \\
    &\quad - \sum_{i=2}^{M} (V_{n,i}^{(t)} \cdot \frac{\exp(\langle\boldsymbol{q}_+^{(t)},\boldsymbol{k}_+^{(t)} \rangle)}{\exp(\langle\boldsymbol{q}_+^{(t)},\boldsymbol{k}_+^{(t)} \rangle) + \sum_{j=2}^{M} \exp(\langle\boldsymbol{q}_+^{(t)},\boldsymbol{k}_{n,j}^{(t)} \rangle)} \\
    &\quad \left. \cdot \frac{\exp(\langle\boldsymbol{q}_+^{(t)},\boldsymbol{k}_{n,i}^{(t)} \rangle)}{\exp(\langle\boldsymbol{q}_+^{(t)},\boldsymbol{k}_+^{(t)} \rangle) + \sum_{j=2}^{M} \exp(\langle\boldsymbol{q}_+^{(t)},\boldsymbol{k}_{n,j}^{(t)} \rangle)} \right)\\
    &\leq \frac{\eta}{NM} \sum_{n \in S_+} \|\boldsymbol{\mu}\|_2^2 (V_{+}^{(t)})\\
    &\leq  \frac{3\eta}{2M}  \|\boldsymbol{\mu}\|_2^2 V_{+}^{(t)}
\end{aligned}
\end{equation}

Then, we can then compute the incremental growth of $\langle\boldsymbol{q}_+^{(t)},\boldsymbol{k}_+^{(t)}\rangle$ after one update step on clean data.

\begin{equation}
\begin{aligned}
    &\langle\boldsymbol{q}_+^{(t+1)},\boldsymbol{k}_+^{(t+1)} \rangle - \langle\boldsymbol{q}_+^{(t)},\boldsymbol{k}_+^{(t)} \rangle \\
    &= \alpha_{+,+}^{(t)} \|\boldsymbol{k}_+^{(t)}\|_2^2 + \beta_{+,+}^{(t)} \|\boldsymbol{q}_+^{(t)}\|_2^2 +\text{\{lower order term\}}\\
    &\leq \frac{3\eta}{2M}  \|\boldsymbol{\mu}\|_2^2 V_{+}^{(t)}\Theta(\log{C})+\frac{3\eta}{2M}  \|\boldsymbol{\mu}\|_2^2 V_{+}^{(t)}\Theta(\log{C})
\end{aligned}
\end{equation}

Subsequently, we compute at the $t^{th}$ iteration the magnitude of the effect that the perturbation imposes on $\langle\boldsymbol{q}_+^{(t)},\boldsymbol{k}_+^{(t)}\rangle$.
\begin{equation}
\begin{aligned}
    &max_{\widetilde{X}^{(t)}\in B({X}^{(t)},\tau)}\langle \widetilde{\boldsymbol q}_+^{(t)}, \widetilde{\boldsymbol k}_+^{(t)} \rangle - \langle\boldsymbol{q}_+^{(t)},\boldsymbol{k}_+^{(t)} \rangle \\
    &=((1+\frac{\tau}{\|\boldsymbol{\mu}\|_2})^2-1)\langle\boldsymbol{q}_+^{(t)},\boldsymbol{k}_+^{(t)} \rangle\\
    &=((1+\frac{\tau}{\|\boldsymbol{\mu}\|_2})^2-1) \Theta(log{C})
\end{aligned}
\end{equation}

As $\tau$ and $\|\boldsymbol{\mu}\|$ are same order, and $\frac{3\eta}{2M}  \|\boldsymbol{\mu}\|_2^2 V_{+}^{(t)}=o(\frac{1}{N})$ by $V_{+}^{(t)}\leq2\log{\left(O(\frac{1}{\epsilon})\right)}$, $d_h = \widetilde{\Omega} \left( \max \{ \text{SNR}^4, \text{SNR}^{-4} \} N^2 \epsilon^{-2} \right)$ and $\eta\leq \widetilde{O}(\min \{ \|\boldsymbol{\mu}\|_2^{-2}, (\sigma_p^2 d)^{-1} \} \cdot d_h^{-\frac{1}{2}})$. Thus, we have 
\[\langle\boldsymbol{q}_+^{(t+1)},\boldsymbol{k}_+^{(t+1)} \rangle - \langle\boldsymbol{q}_+^{(t)},\boldsymbol{k}_+^{(t)} \rangle\leq max_{\widetilde{\boldsymbol X}^{(t)}\in B({\boldsymbol X}^{(t)},\tau)}\langle \widetilde{\boldsymbol q}_+^{(t)}, \widetilde{\boldsymbol k}_+^{(t)} \rangle - \langle\boldsymbol{q}_+^{(t)},\boldsymbol{k}_+^{(t)} \rangle\]

which indicates that the perturbation’s effect exceeds the one-step update on clean data. Therefore, $\langle\boldsymbol{q}_+^{(t)},\boldsymbol{k}_-^{(t)}\rangle$ stay $\Theta(log{C})$. By similar methods, we can bound the other $\langle\boldsymbol q^{(t)},\boldsymbol k^{(t)}\rangle$, which complete the proof.

\subsection{Proof of Claim 11}
We use similar methods in ~\ref{sec:cl10}. We first consider the increment of $\|\boldsymbol q\|_2^2$ and $\|\boldsymbol k\|_2^2$ at the $t$-th update when using the original clean data. We then show that, at this step, the effect introduced by the perturbation under adversarial samples dominates the increment learned from the clean data; hence $\|\boldsymbol q\|_2^2$ and $\|\boldsymbol k\|_2^2$ remains stable and bounded.

By the update rule of $\|\boldsymbol q\|_2^2$ and $\|\boldsymbol k\|_2^2$ we have
\begin{equation}
\begin{aligned}
    \|\boldsymbol{q}_+^{(t+1)}\|_2^2 - \|\boldsymbol{q}_+^{(t)}\|_2^2 &= 2 \langle \Delta\boldsymbol{q}_+^{(t)},\boldsymbol{q}_+^{(t)} \rangle + \langle \Delta\boldsymbol{q}_+^{(t)}, \Delta\boldsymbol{q}_+^{(t)} \rangle \\
    &= 2 \alpha_{+,+}^{(t)} \langle\boldsymbol{q}_+^{(t)},\boldsymbol{k}_+^{(t)} \rangle + 2 \sum_{n \in S_+} \sum_{i=2}^{M} \alpha_{n,+,i}^{(t)} \langle\boldsymbol{q}_+^{(t)},\boldsymbol{k}_{n,i}^{(t)} \rangle \\
    &\quad + \left( \alpha_{+,+}^{(t)}\boldsymbol{k}_+^{(t)} + \sum_{n \in S_+} \sum_{i=2}^{M} \alpha_{n,+,i}^{(t)}\boldsymbol{k}_{n,i}^{(t)} \right) \\
    &\quad \cdot \left( \alpha_{+,+}^{(t)}\boldsymbol{k}_+^{(t)\top} + \sum_{n \in S_+} \sum_{i=2}^{M} \alpha_{n,+,i}^{(t)}\boldsymbol{k}_{n,i}^{(t)\top} \right)\\
    &\leq 2 |\alpha_{+,+}^{(t)}| |\langle\boldsymbol{q}_+^{(t)},\boldsymbol{k}_+^{(t)} \rangle| + 2 \sum_{n \in S_+} \sum_{i=2}^{M} |\alpha_{n,+,i}^{(t)} ||\langle\boldsymbol{q}_+^{(t)},\boldsymbol{k}_{n,i}^{(t)} \rangle| +\text{\{lower order term\}}\\
    &\leq 2\frac{3\eta}{2M}  \|\boldsymbol{\mu}\|_2^2 \log{\left(O(\frac{1}{\epsilon})\right)}\Theta(\log{C})+2\sum_{n \in S_+} \sum_{i=2}^{M}\frac{3\eta}{2NM}  \|\boldsymbol{\mu}\|_2^2 \log{\left(O(\frac{1}{\epsilon})\right)}\Theta(\log{C})\\
    &\leq 12\eta  \|\boldsymbol{\mu}\|_2^2 \log{\left(O(\frac{1}{\epsilon})\right)}\Theta(\log{C})
\end{aligned}
\end{equation}
where the second inequality comes form the upper bounds for $\alpha$ and $\beta$ on clean data similar to (\ref{eq:103_2}).

Subsequently, we compute at the $t^{th}$ iteration the magnitude of the effect that the perturbation imposes on  $\|\boldsymbol{q}_+^{(t)}\|_2^2$.

\begin{equation}
\begin{aligned}
    &max_{\widetilde{X}^{(t)}\in B({X}^{(t)},\tau)}\|\widetilde{\boldsymbol q}_+^{(t)}\|_2^2 - \|\boldsymbol{q}_+^{(t)}\|_2^2\\
    &\leq ((1+\frac{\tau}{\|\boldsymbol{\mu}\|_2})^2-1)\|\boldsymbol{q}_+^{(t)}\|_2^2\\
    &\leq  ((1+\frac{\tau}{\|\boldsymbol{\mu}\|_2})^2-1)\Theta(\log{C})
\end{aligned}
\end{equation}

As $\tau$ and $\|\boldsymbol{\mu}\|_2$ are same order, and ${12\eta}  \|\boldsymbol{\mu}\|_2^2 \log{\left(O(\frac{1}{\epsilon})\right)}=o(\frac{1}{NM})$ by $d_h = \widetilde{\Omega} \left( \max \{ \text{SNR}^4, \text{SNR}^{-4} \} N^2 \epsilon^{-2} \right)$ and $\eta\leq \widetilde{O}(\min \{ \|\boldsymbol{\mu}\|_2^{-2}, (\sigma_p^2 d)^{-1} \} \cdot d_h^{-\frac{1}{2}})$. Thus, we have 
\[\|\boldsymbol{q}_+^{(t+1)}\|_2^2 - \|\boldsymbol{q}_+^{(t)}\|_2^2\leq max_{\widetilde{\boldsymbol X}^{(t)}\in B({\boldsymbol X}^{(t)},\tau)}\|\widetilde{\boldsymbol q}_+^{(t)}\|_2^2 - \|\boldsymbol{\boldsymbol q}_+^{(t)}\|_2^2\]

which indicates that the perturbation’s effect exceeds the one-step update on clean data. Therefore, $\|\boldsymbol{q}_+^{(t+1)}\|_2^2$ stay $\Theta(log{C})$.  By similar methods, we can bound the other $\|\boldsymbol q\|_2^2$ and $\|\boldsymbol k\|_2^2$,  which complete the proof.

The proof of convergence and test error is similar with Lemma~\ref{sec:convergence} and Section~\ref{sec:test_error}

\section{Proof of Theorem 3}
\begin{proof}
Fix an arbitrary \(\theta\). Consider the two classes separately. For the positive class (\(y=+\)):
\[
\delta_1^{(+)} = -\boldsymbol{\mu}_+, \quad \delta_2^{(+)} = 0
\]
is a valid perturbation since \(\|\delta_1^{(+)}\|_2 = \|\boldsymbol{\mu}_+\|_2 \le \tau\), \(\|\delta_2^{(+)}\|_2 = 0 \le \tau\).  
Then the adversarially perturbed point
\[
\widetilde x^{(+)} = [\,\boldsymbol{\mu}_+ -\boldsymbol{\mu}_+,\boldsymbol{\xi}_2,...,\boldsymbol{\xi}_M] = [0,\boldsymbol{\xi}_2,...,\boldsymbol{\xi}_M]
\]
lies in \(\mathcal{B}([\boldsymbol{\mu},\boldsymbol{\xi}_2,...,\boldsymbol{\xi}_M],\tau)\), so there exists a perturbation that can potentially flip the classifier's output for the positive class.

Similarly, for the negative class (\(y=-\)):
\[
\delta_1^{(-)} = -\boldsymbol{\mu}_-, \quad \delta_2^{(-)} = 0
\]
produces \(\widetilde x^{(-)} = [0,\boldsymbol{\xi}_2,...,\boldsymbol{\xi}_M] \in \mathcal{B}([\boldsymbol{\mu},\boldsymbol{\xi}_2,...,\boldsymbol{\xi}_M],\tau)\).

For each class independently, there exists a perturbation that can potentially flip its label.  

Thus, at least one class can be adversarially fooled with probability at least \(\min\{\Pr[y=+], \Pr[y=-]\}\).  
For uniform labels, this gives
\[
L_D^{\mathrm{rob}}(\theta) \ge \frac12 \cdot \frac12 = \frac14.
\]

This completes the proof.
\end{proof}
\section{Complete Calculation Process For Benign Overfitting}
\subsection{Calculations for $\alpha$ and $\beta$}\label{sec:cal_ab}
In this subsection, we give the calculactions for $\alpha$ and $\beta$ defined in Definition~\ref{def:vec_qk}.

\begin{equation*}
\begin{aligned}
\alpha_{+,+}^{(t)} &= \frac{\eta}{NM} \sum_{n \in S_+} -\widetilde{\ell}_n'(\theta) \langle \widetilde{\boldsymbol{\mu}}_+,\widetilde{\boldsymbol{\mu}}_+^{(t)} \rangle \\
&\quad \cdot \left( V_{+}^{(t)} \left( \frac{\exp(\langle \widetilde{\boldsymbol{q}}_{+}^{(t)}, \widetilde{\boldsymbol{k}}_{+}^{(t)} \rangle)}{\exp(\langle \widetilde{\boldsymbol{q}}_{+}^{(t)}, \widetilde{\boldsymbol{k}}_{+}^{(t)} \rangle) + \sum_{j=2}^M \exp(\langle \widetilde{\boldsymbol{q}}_{+}^{(t)}, \widetilde{\boldsymbol{k}}_{n,j}^{(t)} \rangle)} \right. \right. \\
&\quad \left. \left. - \left( \frac{\exp(\langle \widetilde{\boldsymbol{q}}_{+}^{(t)}, \widetilde{\boldsymbol{k}}_{+}^{(t)} \rangle)}{\exp(\langle \widetilde{\boldsymbol{q}}_{+}^{(t)}, \widetilde{\boldsymbol{k}}_{+}^{(t)} \rangle) + \sum_{j=2}^M \exp(\langle \widetilde{\boldsymbol{q}}_{+}^{(t)}, \widetilde{\boldsymbol{k}}_{n,j}^{(t)} \rangle)} \right)^2 \right) \right. \\
&\quad \left. - \sum_{i=2}^M \left( V_{n,i}^{(t)} \cdot \frac{\exp(\langle \widetilde{\boldsymbol{q}}_{+}^{(t)}, \widetilde{\boldsymbol{k}}_{+}^{(t)} \rangle)}{\exp(\langle \widetilde{\boldsymbol{q}}_{+}^{(t)}, \widetilde{\boldsymbol{k}}_{+}^{(t)} \rangle) + \sum_{j=2}^M \exp(\langle \widetilde{\boldsymbol{q}}_{+}^{(t)}, \widetilde{\boldsymbol{k}}_{n,j}^{(t)} \rangle)} \right. \right. \\
&\quad \left. \left. \cdot \frac{\exp(\langle \widetilde{\boldsymbol{q}}_{+}^{(t)}, \widetilde{\boldsymbol{k}}_{n,i}^{(t)} \rangle)}{\exp(\langle \widetilde{\boldsymbol{q}}_{+}^{(t)}, \widetilde{\boldsymbol{k}}_{+}^{(t)} \rangle) + \sum_{j=2}^M \exp(\langle \widetilde{\boldsymbol{q}}_{+}^{(t)}, \widetilde{\boldsymbol{k}}_{n,j}^{(t)} \rangle)} \right) \right)
\\&+\sum_{i=2}^M\langle\widetilde{\boldsymbol{\mu}}_+,\widetilde{\boldsymbol{\xi}}_{n,i}^{(t)}\rangle \cdot \left( V_+^{(t)} \left( \frac{\exp(\langle \widetilde{\boldsymbol{q}}_{n,i}^{(t)}, \widetilde{\boldsymbol{k}}_+^{(t)} \rangle)}{\exp(\langle \widetilde{\boldsymbol{q}}_{n,i}^{(t)}, \widetilde{\boldsymbol{k}}_+^{(t)} \rangle) + \sum_{j=2}^{M} \exp(\langle \widetilde{\boldsymbol{q}}_{n,i}^{(t)}, \widetilde{\boldsymbol{k}}_{n,j}^{(t)} \rangle)} \right. \right. \\
& \left. - \left( \frac{\exp(\langle \widetilde{\boldsymbol{q}}_{n,i}^{(t)}, \widetilde{\boldsymbol{k}}_+^{(t)} \rangle)}{\exp(\langle \widetilde{\boldsymbol{q}}_{n,i}^{(t)}, \widetilde{\boldsymbol{k}}_+^{(t)} \rangle) + \sum_{j=2}^{M} \exp(\langle \widetilde{\boldsymbol{q}}_{n,i}^{(t)}, \widetilde{\boldsymbol{k}}_{n,j}^{(t)} \rangle)} \right)^2 \right) \\
& - \sum_{k=2}^{M} \left( V_{n,i}^{(t)} \cdot \frac{\exp(\langle \widetilde{\boldsymbol{q}}_{n,i}^{(t)}, \widetilde{\boldsymbol{k}}_+^{(t)} \rangle)}{\exp(\langle \widetilde{\boldsymbol{q}}_{n,i}^{(t)}, \widetilde{\boldsymbol{k}}_+^{(t)} \rangle) + \sum_{j=2}^{M} \exp(\langle \widetilde{\boldsymbol{q}}_{n,i}^{(t)}, \widetilde{\boldsymbol{k}}_{n,j}^{(t)} \rangle)} \right. \\
& \left. \cdot \frac{\exp(\langle \widetilde{\boldsymbol{q}}_{n,i}^{(t)}, \widetilde{\boldsymbol{k}}_{n,k}^{(t)} \rangle)}{\exp(\langle \widetilde{\boldsymbol{q}}_{n,i}^{(t)}, \widetilde{\boldsymbol{k}}_+^{(t)} \rangle) + \sum_{j=2}^{M} \exp(\langle \widetilde{\boldsymbol{q}}_{n,i}^{(t)}, \widetilde{\boldsymbol{k}}_{n,j}^{(t)} \rangle)} \right)
\end{aligned}
\end{equation*}

\begin{equation*}
\begin{aligned}
\alpha_{n,+,i}^{(t)} &=  -\frac{\eta}{NM} \ell_n'^{(t)} \langle\widetilde{\boldsymbol{\mu}}_+, \widetilde{\boldsymbol{\mu}}_+^{(t)}\rangle\nonumber \\
&\quad \cdot \left( -V_{+}^{(t)} \cdot \frac{\exp(\langle \widetilde{\boldsymbol{q}}_{+}^{(t)}, \widetilde{\boldsymbol{k}}_{+}^{(t)} \rangle)}{\exp(\langle \widetilde{\boldsymbol{q}}_{+}^{(t)}, \widetilde{\boldsymbol{k}}_{+}^{(t)} \rangle) + \sum_{j=2}^M \exp(\langle \widetilde{\boldsymbol{q}}_{+}^{(t)}, \widetilde{\boldsymbol{k}}_{n,j}^{(t)} \rangle)} \right. \nonumber \\
&\qquad \left. + \frac{\exp(\langle \widetilde{\boldsymbol{q}}_{+}^{(t)}, \widetilde{\boldsymbol{k}}_{n,i}^{(t)} \rangle)}{\exp(\langle \widetilde{\boldsymbol{q}}_{+}^{(t)}, \widetilde{\boldsymbol{k}}_{+}^{(t)} \rangle) + \sum_{j=2}^M \exp(\langle \widetilde{\boldsymbol{q}}_{+}^{(t)}, \widetilde{\boldsymbol{k}}_{n,j}^{(t)} \rangle)} \right. \nonumber \\
&\quad + V_{n,i}^{(t)} \left( \frac{\exp(\langle \widetilde{\boldsymbol{q}}_{+}^{(t)}, \widetilde{\boldsymbol{k}}_{n,i}^{(t)} \rangle)}{\exp(\langle \widetilde{\boldsymbol{q}}_{+}^{(t)}, \widetilde{\boldsymbol{k}}_{+}^{(t)} \rangle) + \sum_{j=2}^M \exp(\langle \widetilde{\boldsymbol{q}}_{+}^{(t)}, \widetilde{\boldsymbol{k}}_{n,j}^{(t)} \rangle)} \right. \nonumber \\
&\qquad \left. - \left( \frac{\exp(\langle \widetilde{\boldsymbol{q}}_{+}^{(t)}, \widetilde{\boldsymbol{k}}_{n,i}^{(t)} \rangle)}{\exp(\langle \widetilde{\boldsymbol{q}}_{+}^{(t)}, \widetilde{\boldsymbol{k}}_{+}^{(t)} \rangle) + \sum_{j=2}^M \exp(\langle \widetilde{\boldsymbol{q}}_{+}^{(t)}, \widetilde{\boldsymbol{k}}_{n,j}^{(t)} \rangle)} \right)^2 \right) \nonumber \\
&\quad - \sum_{k \neq i} \left( V_{n,k}^{(t)} \cdot \frac{\exp(\langle \widetilde{\boldsymbol{q}}_{+}^{(t)}, \widetilde{\boldsymbol{k}}_{n,i}^{(t)} \rangle)}{\exp(\langle \widetilde{\boldsymbol{q}}_{+}^{(t)}, \widetilde{\boldsymbol{k}}_{+}^{(t)} \rangle) + \sum_{j=2}^M \exp(\langle \widetilde{\boldsymbol{q}}_{+}^{(t)}, \widetilde{\boldsymbol{k}}_{n,j}^{(t)} \rangle)} \right. \nonumber \\
&\qquad \left. \cdot \frac{\exp(\langle \widetilde{\boldsymbol{q}}_{+}^{(t)}, \widetilde{\boldsymbol{k}}_{n,k}^{(t)} \rangle)}{\exp(\langle \widetilde{\boldsymbol{q}}_{+}^{(t)}, \widetilde{\boldsymbol{k}}_{+}^{(t)} \rangle) + \sum_{j=2}^M \exp(\langle \widetilde{\boldsymbol{q}}_{+}^{(t)}, \widetilde{\boldsymbol{k}}_{n,j}^{(t)} \rangle)} \right) )\nonumber\\
& +\sum_{k=2}^M\langle\widetilde{\boldsymbol{\mu}}_+,\widetilde{\boldsymbol{\xi}}_{n,k}^{(t)}\rangle\cdot \left( -V_+^{(t)} \cdot \frac{\exp(\langle \widetilde{\boldsymbol{q}}_{n,k}^{(t)}, \widetilde{\boldsymbol{k}}_+^{(t)} \rangle)}{\exp(\langle \widetilde{\boldsymbol{q}}_{n,k}^{(t)}, \widetilde{\boldsymbol{k}}_+^{(t)} \rangle) + \sum_{j=2}^{M} \exp(\langle \widetilde{\boldsymbol{q}}_{n,k}^{(t)}, \widetilde{\boldsymbol{k}}_{n,j}^{(t)} \rangle)} \right. \nonumber\\
& \left. \cdot \frac{\exp(\langle \widetilde{\boldsymbol{q}}_{n,k}^{(t)}, \widetilde{\boldsymbol{k}}_{n,i}^{(t)} \rangle)}{\exp(\langle \widetilde{\boldsymbol{q}}_{n,k}^{(t)}, \widetilde{\boldsymbol{k}}_+^{(t)} \rangle) + \sum_{j=2}^{M} \exp(\langle \widetilde{\boldsymbol{q}}_{n,k}^{(t)}, \widetilde{\boldsymbol{k}}_{n,j}^{(t)} \rangle)} \right. \nonumber\\
& + V_{n,i}^{(t)} \left( \frac{\exp(\langle \widetilde{\boldsymbol{q}}_{n,k}^{(t)}, \widetilde{\boldsymbol{k}}_{n,i}^{(t)} \rangle)}{\exp(\langle \widetilde{\boldsymbol{q}}_{n,k}^{(t)}, \widetilde{\boldsymbol{k}}_+^{(t)} \rangle) + \sum_{j=2}^{M} \exp(\langle \widetilde{\boldsymbol{q}}_{n,k}^{(t)}, \widetilde{\boldsymbol{k}}_{n,j}^{(t)} \rangle)} \right. \nonumber\\
& \left. - \left( \frac{\exp(\langle \widetilde{\boldsymbol{q}}_{n,k}^{(t)}, \widetilde{\boldsymbol{k}}_{n,i}^{(t)} \rangle)}{\exp(\langle \widetilde{\boldsymbol{q}}_{n,k}^{(t)}, \widetilde{\boldsymbol{k}}_+^{(t)} \rangle) + \sum_{j=2}^{M} \exp(\langle \widetilde{\boldsymbol{q}}_{n,k}^{(t)}, \widetilde{\boldsymbol{k}}_{n,j}^{(t)} \rangle)} \right)^2 \right) \nonumber\\
& - \sum_{l \neq i} \left( V_{n,l}^{(t)} \cdot \frac{\exp(\langle \widetilde{\boldsymbol{q}}_{n,k}^{(t)}, \widetilde{\boldsymbol{k}}_{n,i}^{(t)} \rangle)}{\exp(\langle \widetilde{\boldsymbol{q}}_{n,k}^{(t)}, \widetilde{\boldsymbol{k}}_+^{(t)} \rangle) + \sum_{j=2}^{M} \exp(\langle \widetilde{\boldsymbol{q}}_{n,k}^{(t)}, \widetilde{\boldsymbol{k}}_{n,j}^{(t)} \rangle)} \right. \nonumber\\
& \left. \cdot \frac{\exp(\langle \widetilde{\boldsymbol{q}}_{n,k}^{(t)}, \widetilde{\boldsymbol{k}}_{n,l}^{(t)} \rangle)}{\exp(\langle \widetilde{\boldsymbol{q}}_{n,k}^{(t)}, \widetilde{\boldsymbol{k}}_+^{(t)} \rangle) + \sum_{j=2}^{M} \exp(\langle \widetilde{\boldsymbol{q}}_{n,k}^{(t)}, \widetilde{\boldsymbol{k}}_{n,j}^{(t)} \rangle)} \right)
\end{aligned}
\end{equation*}
We can also derive the calculations for other $\alpha$ and$\beta$, since they follow the same procedure as in Section F.1 of \citet{jiang2024unveil}.
\subsection{Proof of Lemma~\ref{lem:innerpro_s1}}\label{sec:pf_inner_pro1}

\subsection{Update Rules for Inner Products}\label{sec:update:innerp}
In this subsection, we give the update rules for the inner products of $\boldsymbol q$ and $\boldsymbol k$.

\begin{equation*}
\begin{aligned}
&\langle\boldsymbol{q}_+^{(t+1)},\boldsymbol{k}_+^{(t+1)} \rangle - \langle\boldsymbol{q}_+^{(t)},\boldsymbol{k}_+^{(t)} \rangle \\
&= \alpha_{+,+}^{(t)} \|\boldsymbol{k}_+^{(t)}\|_2^2 + \sum_{n \in S_+} \sum_{i=2}^{M} \alpha_{n,+,i}^{(t)} \langle\boldsymbol{k}_+^{(t)},\boldsymbol{k}_{n,i}^{(t)} \rangle \\
&\quad + \beta_{+,+}^{(t)} \|\boldsymbol{q}_+^{(t)}\|_2^2 + \sum_{n \in S_+} \sum_{i=2}^{M} \beta_{n,+,i}^{(t)} \langle\boldsymbol{q}_+^{(t)},\boldsymbol{q}_{n,i}^{(t)} \rangle \\
&\quad + \left( \alpha_{+,+}^{(t)}\boldsymbol{k}_+^{(t)} + \sum_{n \in S_+} \sum_{i=2}^{M} \alpha_{n,+,i}^{(t)}\boldsymbol{k}_{n,i}^{(t)} \right) \\
&\qquad \cdot \left( \beta_{+,+}^{(t)}\boldsymbol{q}_+^{(t)\top} + \sum_{n \in S_+} \sum_{i=2}^{M} \beta_{n,+,i}^{(t)}\boldsymbol{q}_{n,i}^{(t)\top} \right),
\end{aligned}
\end{equation*}

\begin{equation*}
\begin{aligned}
&\langle\boldsymbol{q}_-^{(t+1)},\boldsymbol{k}_-^{(t+1)} \rangle - \langle\boldsymbol{q}_-^{(t)},\boldsymbol{k}_-^{(t)} \rangle \\
&= \alpha_{-,-}^{(t)} \|\boldsymbol{k}_-^{(t)}\|_2^2 + \sum_{n \in S_-} \sum_{i=2}^{M} \alpha_{n,-,i}^{(t)} \langle\boldsymbol{k}_-^{(t)},\boldsymbol{k}_{n,i}^{(t)} \rangle \\
&\quad + \beta_{-,-}^{(t)} \|\boldsymbol{q}_-^{(t)}\|_2^2 + \sum_{n \in S_-} \sum_{i=2}^{M} \beta_{n,-,i}^{(t)} \langle\boldsymbol{q}_-^{(t)},\boldsymbol{q}_{n,i}^{(t)} \rangle \\
&\quad + \left( \alpha_{-,-}^{(t)}\boldsymbol{k}_-^{(t)} + \sum_{n \in S_-} \sum_{i=2}^{M} \alpha_{n,-,i}^{(t)}\boldsymbol{k}_{n,i}^{(t)} \right) \\
&\qquad \cdot \left( \beta_{-,-}^{(t)}\boldsymbol{q}_-^{(t)\top} + \sum_{n \in S_-} \sum_{i=2}^{M} \beta_{n,-,i}^{(t)}\boldsymbol{q}_{n,i}^{(t)\top} \right),
\end{aligned}
\end{equation*}
We can also derive the update rules for other $\boldsymbol q$ and $\boldsymbol k$, since they follow the same procedure as in Section F.2 of \citet{jiang2024unveil}.
\subsection{Proof of Lemma~\ref{lem:innerpro_s1}}\label{sec:pf_inner_pro1}
Let $T_0 = O \left( \frac{1}{\eta d_h^{\frac{1}{4}} (\|\boldsymbol{\mu}\|_2+\tau)^2 \|\boldsymbol{w}_O\|_2^2} \right)$. By Lemma~\ref{lem:up_v}, we have $|V_+^{(t)}|, |V_-^{(t)}|, |V_{n,i}^{(t)}| = O(d_h^{-\frac{1}{4}})$ for $t \in [0, T_0]$ by Lemma~\ref{lem:up_v}. Plugging this into the expression for $\alpha$ and $\beta$ gives
{
\begin{equation}
\begin{aligned}
|\alpha_{+,+}^{(t)}| &= \bigg|\frac{\eta}{NM} \sum_{n \in S_+} -\widetilde{\ell}_n'(\theta) \langle \widetilde{\boldsymbol{\mu}}_+,\widetilde{\boldsymbol{\mu}}_+^{(t)} \rangle \\
&\quad \cdot \left( V_{+}^{(t)} \left( \frac{\exp(\langle\boldsymbol{q}_{+}^{(t)},\boldsymbol{k}_{+}^{(t)} \rangle)}{\exp(\langle\boldsymbol{q}_{+}^{(t)},\boldsymbol{k}_{+}^{(t)} \rangle) + \sum_{j=2}^M \exp(\langle\boldsymbol{q}_{+}^{(t)},\boldsymbol{k}_{n,j}^{(t)} \rangle)} \right. \right. \\
&\quad \left. \left. - \left( \frac{\exp(\langle\boldsymbol{q}_{+}^{(t)},\boldsymbol{k}_{+}^{(t)} \rangle)}{\exp(\langle\boldsymbol{q}_{+}^{(t)},\boldsymbol{k}_{+}^{(t)} \rangle) + \sum_{j=2}^M \exp(\langle\boldsymbol{q}_{+}^{(t)},\boldsymbol{k}_{n,j}^{(t)} \rangle)} \right)^2 \right) \right. \\
&\quad \left. - \sum_{i=2}^M \left( V_{n,i}^{(t)} \cdot \frac{\exp(\langle\boldsymbol{q}_{+}^{(t)},\boldsymbol{k}_{+}^{(t)} \rangle)}{\exp(\langle\boldsymbol{q}_{+}^{(t)},\boldsymbol{k}_{+}^{(t)} \rangle) + \sum_{j=2}^M \exp(\langle\boldsymbol{q}_{+}^{(t)},\boldsymbol{k}_{n,j}^{(t)} \rangle)} \right. \right. \\
&\quad \left. \left. \cdot \frac{\exp(\langle\boldsymbol{q}_{+}^{(t)},\boldsymbol{k}_{n,i}^{(t)} \rangle)}{\exp(\langle\boldsymbol{q}_{+}^{(t)},\boldsymbol{k}_{+}^{(t)} \rangle) + \sum_{j=2}^M \exp(\langle\boldsymbol{q}_{+}^{(t)},\boldsymbol{k}_{n,j}^{(t)} \rangle)} \right) \right)
\\&+\sum_{i=2}^M\langle\widetilde{\boldsymbol{\mu}}_+,\widetilde{\boldsymbol{\xi}}_{n,i}^{(t)}\rangle \cdot \left( V_+^{(t)} \left( \frac{\exp(\langle\boldsymbol{q}_{n,i}^{(t)},\boldsymbol{k}_+^{(t)} \rangle)}{\exp(\langle\boldsymbol{q}_{n,i}^{(t)},\boldsymbol{k}_+^{(t)} \rangle) + \sum_{j=2}^{M} \exp(\langle\boldsymbol{q}_{n,i}^{(t)},\boldsymbol{k}_{n,j}^{(t)} \rangle)} \right. \right. \\
& \left. - \left( \frac{\exp(\langle\boldsymbol{q}_{n,i}^{(t)},\boldsymbol{k}_+^{(t)} \rangle)}{\exp(\langle\boldsymbol{q}_{n,i}^{(t)},\boldsymbol{k}_+^{(t)} \rangle) + \sum_{j=2}^{M} \exp(\langle\boldsymbol{q}_{n,i}^{(t)},\boldsymbol{k}_{n,j}^{(t)} \rangle)} \right)^2 \right) \\
& - \sum_{k=2}^{M} \left( V_{n,i}^{(t)} \cdot \frac{\exp(\langle\boldsymbol{q}_{n,i}^{(t)},\boldsymbol{k}_+^{(t)} \rangle)}{\exp(\langle\boldsymbol{q}_{n,i}^{(t)},\boldsymbol{k}_+^{(t)} \rangle) + \sum_{j=2}^{M} \exp(\langle\boldsymbol{q}_{n,i}^{(t)},\boldsymbol{k}_{n,j}^{(t)} \rangle)} \right. \\
& \left. \cdot \frac{\exp(\langle\boldsymbol{q}_{n,i}^{(t)},\boldsymbol{k}_{n,k}^{(t)} \rangle)}{\exp(\langle\boldsymbol{q}_{n,i}^{(t)},\boldsymbol{k}_+^{(t)} \rangle) + \sum_{j=2}^{M} \exp(\langle\boldsymbol{q}_{n,i}^{(t)},\boldsymbol{k}_{n,j}^{(t)} \rangle)} \right)\bigg|
\\
&\leq \frac{\eta (\|\boldsymbol{\mu}\|_2+\tau)^2}{NM} \cdot \frac{3NM}{4} \cdot O(d_h^{-\frac{1}{4}})+\frac{\eta (\|\boldsymbol{\mu}\|\tau+2\sigma_p\tau\sqrt{2\log(4NM/\delta)}+\tau^2)}{NM} \cdot \frac{3NM(M-1)}{4} \cdot O(d_h^{-\frac{1}{4}})
\\
&= O \left( \frac{\eta (\|\boldsymbol{\mu}\|_2+\tau)^2}{d_h^{\frac{1}{4}}} \right)
\end{aligned}
\end{equation}
}

where the inequality is by \( -\widetilde{\ell}_n^{\prime(t)} \leq 1 \) and the property that attention is smaller than 1
\(( \text{e.g. }\frac{\exp(\langle\boldsymbol{q}_+^{(t)},\boldsymbol{k}_+^{(t)} \rangle)}{\exp(\langle\boldsymbol{q}_+^{(t)},\boldsymbol{k}_+^{(t)} \rangle) + \sum_{j=2}^{M} \exp(\langle\boldsymbol{q}_+^{(t)},\boldsymbol{k}_{n,j}^{(t)} \rangle)} \leq 1 \) ). We also have
\begin{equation*}
\begin{aligned}
|\alpha_{n,+,i}^{(t)}| &= \left| -\frac{\eta}{NM} \ell_n'^{(t)} \langle\widetilde{\boldsymbol{\mu}}_+, \widetilde{\boldsymbol{\mu}}_+^{(t)}\rangle\nonumber \right.\\
&\quad \cdot \left( -V_{+}^{(t)} \cdot \frac{\exp(\langle\boldsymbol{q}_{+}^{(t)},\boldsymbol{k}_{+}^{(t)} \rangle)}{\exp(\langle\boldsymbol{q}_{+}^{(t)},\boldsymbol{k}_{+}^{(t)} \rangle) + \sum_{j=2}^M \exp(\langle\boldsymbol{q}_{+}^{(t)},\boldsymbol{k}_{n,j}^{(t)} \rangle)} \right. \nonumber \\
&\qquad \left. + \frac{\exp(\langle\boldsymbol{q}_{+}^{(t)},\boldsymbol{k}_{n,i}^{(t)} \rangle)}{\exp(\langle\boldsymbol{q}_{+}^{(t)},\boldsymbol{k}_{+}^{(t)} \rangle) + \sum_{j=2}^M \exp(\langle\boldsymbol{q}_{+}^{(t)},\boldsymbol{k}_{n,j}^{(t)} \rangle)} \right. \nonumber \\
&\quad + V_{n,i}^{(t)} \left( \frac{\exp(\langle\boldsymbol{q}_{+}^{(t)},\boldsymbol{k}_{n,i}^{(t)} \rangle)}{\exp(\langle\boldsymbol{q}_{+}^{(t)},\boldsymbol{k}_{+}^{(t)} \rangle) + \sum_{j=2}^M \exp(\langle\boldsymbol{q}_{+}^{(t)},\boldsymbol{k}_{n,j}^{(t)} \rangle)} \right. \nonumber \\
&\qquad \left. - \left( \frac{\exp(\langle\boldsymbol{q}_{+}^{(t)},\boldsymbol{k}_{n,i}^{(t)} \rangle)}{\exp(\langle\boldsymbol{q}_{+}^{(t)},\boldsymbol{k}_{+}^{(t)} \rangle) + \sum_{j=2}^M \exp(\langle\boldsymbol{q}_{+}^{(t)},\boldsymbol{k}_{n,j}^{(t)} \rangle)} \right)^2 \right) \nonumber \\
&\quad - \sum_{k \neq i} \left( V_{n,k}^{(t)} \cdot \frac{\exp(\langle\boldsymbol{q}_{+}^{(t)},\boldsymbol{k}_{n,i}^{(t)} \rangle)}{\exp(\langle\boldsymbol{q}_{+}^{(t)},\boldsymbol{k}_{+}^{(t)} \rangle) + \sum_{j=2}^M \exp(\langle\boldsymbol{q}_{+}^{(t)},\boldsymbol{k}_{n,j}^{(t)} \rangle)} \right. \nonumber \\
&\qquad \left. \cdot \frac{\exp(\langle\boldsymbol{q}_{+}^{(t)},\boldsymbol{k}_{n,k}^{(t)} \rangle)}{\exp(\langle\boldsymbol{q}_{+}^{(t)},\boldsymbol{k}_{+}^{(t)} \rangle) + \sum_{j=2}^M \exp(\langle\boldsymbol{q}_{+}^{(t)},\boldsymbol{k}_{n,j}^{(t)} \rangle)} \right) )\nonumber\\
& +\sum_{k=2}^M\langle\widetilde{\boldsymbol{\mu}}_+,\widetilde{\boldsymbol{\xi}}_{n,k}^{(t)}\rangle\cdot \left( -V_+^{(t)} \cdot \frac{\exp(\langle\boldsymbol{q}_{n,k}^{(t)},\boldsymbol{k}_+^{(t)} \rangle)}{\exp(\langle\boldsymbol{q}_{n,k}^{(t)},\boldsymbol{k}_+^{(t)} \rangle) + \sum_{j=2}^{M} \exp(\langle\boldsymbol{q}_{n,k}^{(t)},\boldsymbol{k}_{n,j}^{(t)} \rangle)} \right. \nonumber\\
& \left. \cdot \frac{\exp(\langle\boldsymbol{q}_{n,k}^{(t)},\boldsymbol{k}_{n,i}^{(t)} \rangle)}{\exp(\langle\boldsymbol{q}_{n,k}^{(t)},\boldsymbol{k}_+^{(t)} \rangle) + \sum_{j=2}^{M} \exp(\langle\boldsymbol{q}_{n,k}^{(t)},\boldsymbol{k}_{n,j}^{(t)} \rangle)} \right. \nonumber\\
& + V_{n,i}^{(t)} \left( \frac{\exp(\langle\boldsymbol{q}_{n,k}^{(t)},\boldsymbol{k}_{n,i}^{(t)} \rangle)}{\exp(\langle\boldsymbol{q}_{n,k}^{(t)},\boldsymbol{k}_+^{(t)} \rangle) + \sum_{j=2}^{M} \exp(\langle\boldsymbol{q}_{n,k}^{(t)},\boldsymbol{k}_{n,j}^{(t)} \rangle)} \right. \nonumber\\
& \left. - \left( \frac{\exp(\langle\boldsymbol{q}_{n,k}^{(t)},\boldsymbol{k}_{n,i}^{(t)} \rangle)}{\exp(\langle\boldsymbol{q}_{n,k}^{(t)},\boldsymbol{k}_+^{(t)} \rangle) + \sum_{j=2}^{M} \exp(\langle\boldsymbol{q}_{n,k}^{(t)},\boldsymbol{k}_{n,j}^{(t)} \rangle)} \right)^2 \right) \nonumber\\
& - \sum_{l \neq i} \left( V_{n,l}^{(t)} \cdot \frac{\exp(\langle\boldsymbol{q}_{n,k}^{(t)},\boldsymbol{k}_{n,i}^{(t)} \rangle)}{\exp(\langle\boldsymbol{q}_{n,k}^{(t)},\boldsymbol{k}_+^{(t)} \rangle) + \sum_{j=2}^{M} \exp(\langle\boldsymbol{q}_{n,k}^{(t)},\boldsymbol{k}_{n,j}^{(t)} \rangle)} \right. \nonumber\\
& \left.\left. \cdot \frac{\exp(\langle\boldsymbol{q}_{n,k}^{(t)},\boldsymbol{k}_{n,l}^{(t)} \rangle)}{\exp(\langle\boldsymbol{q}_{n,k}^{(t)},\boldsymbol{k}_+^{(t)} \rangle) + \sum_{j=2}^{M} \exp(\langle\boldsymbol{q}_{n,k}^{(t)},\boldsymbol{k}_{n,j}^{(t)} \rangle)} \right)\right | \\
&\leq \frac{\eta (\|\boldsymbol{\mu}\|_2+\tau)^2}{NM} \cdot M \cdot O(d_h^{-\frac{1}{4}}) \\
&{=O \left( \frac{\eta (\|\boldsymbol{\mu}\|_2+\tau)^2}{d_h^{\frac{1}{4}}N} \right)},
\end{aligned}
\end{equation*}

where the inequality is by \( -\widetilde{\ell}_n^{\prime(t)} \leq 1\),  Lemma~\ref{lem:con_ineq} and the property that attention is smaller than 1. Similarly, we have
\begin{equation*}
|\alpha_{-,-}^{(t)}|, |\beta_{+,+}^{(t)}|, |\beta_{-,-}^{(t)}| = O \left( \frac{\eta (\|\boldsymbol{\mu}\|_2+\tau)^2}{d_h^{\frac{1}{4}}} \right),
\end{equation*}
\begin{equation*}
|\alpha_{n,-,l}^{(t)}|, |\beta_{n,+,l}^{(t)}|, |\beta_{n,-,l}^{(t)}| = O \left( \frac{\eta (\|\boldsymbol{\mu}\|_2+\tau)^2}{d_h^{\frac{1}{4}} N} \right),
\end{equation*}
\begin{equation*}
|\alpha_{n,l,-}^{(t)}|, |\beta_{n,l,+}^{(t)}|, |\beta_{n,l,-}^{(t)}|, |\alpha_{n,l,n',l'}^{(t)}|, |\beta_{n,l,n',l'}^{(t)}| {=O \left( \frac{\eta (\|\boldsymbol{\mu}\|\tau+\sigma_p^2d)}{d_h^{\frac{1}{4}}N} \right)}
\end{equation*}
for  \(t \in [0, T_0]\).
Next we use induction to show that the following proposition $\mathcal{A}(t)$ holds for $t \in [0, T_0]$.
\textbf{$\mathcal{A}(t)$:}
\begin{equation*}
\begin{aligned}
&|\langle q_\pm^{(t)}, k_\pm^{(t)} \rangle|, |\langle\boldsymbol{q}_{n,i}^{(t)}, k_\pm^{(t)} \rangle|, |\langle q_\pm^{(t)},\boldsymbol{k}_{n,j}^{(t)} \rangle|, |\langle\boldsymbol{q}_{n,i}^{(t)},\boldsymbol{k}_{n',j}^{(t)} \rangle| \\
&= O \left( \max \{ \|\boldsymbol{\mu}\|_2^2, \sigma_p^2 d \} \cdot \sigma_h^2 \cdot \sqrt{d_h \log(6N^2M^2/\delta)} \right),
\end{aligned}
\end{equation*}
\begin{equation*}
\begin{aligned}
&|\langle q_\pm^{(t)}, q_\mp^{(t)} \rangle|, |\langle\boldsymbol{q}_{n,i}^{(t)}, q_\pm^{(t)} \rangle|, |\langle\boldsymbol{q}_{n,i}^{(t)},\boldsymbol{q}_{n',j}^{(t)} \rangle| \\
&= O \left( \max \{ \|\boldsymbol{\mu}\|_2^2, \sigma_p^2 d \} \cdot \sigma_h^2 \cdot \sqrt{d_h \log(6N^2M^2/\delta)} \right),
\end{aligned}
\end{equation*}
\begin{equation*}
\begin{aligned}
&|\langle k_\pm^{(t)}, k_\mp^{(t)} \rangle|, |\langle\boldsymbol{k}_{n,i}^{(t)}, k_\pm^{(t)} \rangle|, |\langle\boldsymbol{k}_{n,i}^{(t)},\boldsymbol{k}_{n',j}^{(t)} \rangle| \\
&= O \left( \max \{ \|\boldsymbol{\mu}\|_2^2, \sigma_p^2 d \} \cdot \sigma_h^2 \cdot \sqrt{d_h \log(6N^2M^2/\delta)} \right),
\end{aligned}
\end{equation*}
\begin{equation*}
\|q_\pm^{(t)}\|_2^2, \|k_\pm^{(t)}\|_2^2 = \Theta(\|\boldsymbol{\mu}\|_2^2 \sigma_h^2 d_h)
\end{equation*}
\begin{equation*}
\|\boldsymbol{q}_{n,i}^{(t)}\|_2^2, \|\boldsymbol{k}_{n,i}^{(t)}\|_2^2 = \Theta(\sigma_p^2 \sigma_h^2 d d_h)
\end{equation*}
for  \(i, j \in [M] \backslash \{1\}, n, n' \in [N]\).

By Lemma~\ref{lem:in_qk} we know that $\mathcal{A}(0)$ is true. Now we assume $\mathcal{A}(0), \ldots, \mathcal{A}(T)$ is true, then we need to prove that $\mathcal{A}(T+1)$ is true. We first proof $|\langle\boldsymbol{q}_+^{(T+1)},\boldsymbol{k}_+^{(T+1)} \rangle| = O \left( \max \{ \|\boldsymbol{\mu}\|_2^2, \sigma_p^2 d \} \cdot \sigma_h^2 \cdot \sqrt{d_h \log(6N^2M^2/\delta)} \right)$, as an example.

{
\begin{equation*}
\begin{aligned}
|\langle\boldsymbol{q}_+^{(t+1)},\boldsymbol{k}_+^{(t+1)} \rangle - \langle\boldsymbol{q}_+^{(t)},\boldsymbol{k}_+^{(t)} \rangle| 
= & \left| \alpha_{+,+}^{(t)} \|\boldsymbol{k}_+^{(t)}\|_2^2 + \sum_{n \in S_+} \sum_{i=2}^{M} \alpha_{n,+,i}^{(t)} \langle\boldsymbol{k}_+^{(t)},\boldsymbol{k}_{n,i}^{(t)} \rangle \right. \\
& \left. + \beta_{+,+}^{(t)} \|\boldsymbol{q}_+^{(t)}\|_2^2 + \sum_{n \in S_+} \sum_{i=2}^{M} \beta_{n,+,i}^{(t)} \langle\boldsymbol{q}_+^{(t)},\boldsymbol{q}_{n,i}^{(t)} \rangle \right. \\
& \left. + \left( \alpha_{+,+}^{(t)}\boldsymbol{k}_+^{(t)} + \sum_{n \in S_+} \sum_{i=2}^{M} \alpha_{n,+,i}^{(t)}\boldsymbol{k}_{n,i}^{(t)} \right) \right. \\
& \left. \cdot \left( \beta_{+,+}^{(t)}\boldsymbol{q}_+^{(t)\top} + \sum_{n \top} \sum_{i=2}^{M} \beta_{n,+,i}^{(t)}\boldsymbol{q}_{n,i}^{(t)\top} \right) \right| \\
\leq & O \left( \frac{\eta (\|\boldsymbol{\mu}\|_2+\tau)^2}{d_h^{\frac{1}{4}}} \right) \cdot \Theta(\|\boldsymbol{\mu}\|_2^2 \sigma_h^2 d_h) \\
& + NM \cdot O \left( \frac{\eta(\|\boldsymbol{\mu}\|_2+\tau)^2}{d_h^{\frac{1}{4}} N} \right) \cdot O \left( \max \{ \|\boldsymbol{\mu}\|_2^2, \sigma_p^2 d \} \cdot \sigma_h^2 \cdot \sqrt{d_h \log(6N^2M^2/\delta)} \right) \\
& + \{\text{lower order term\}}
=  O \left( \eta \|\boldsymbol{\mu}\|_2^2 (\|\boldsymbol{\mu}\|_2+\tau)^2\sigma_h^2 d_h^{\frac{3}{4}} \right)
\end{aligned}
\end{equation*}
}
Taking a summation, we obtain that
{\begin{equation*}
\begin{aligned}
|\langle\boldsymbol{q}_+^{(T+1)},\boldsymbol{k}_+^{(T+1)} \rangle| \leq & |\langle\boldsymbol{q}_+^{(0)},\boldsymbol{k}_+^{(0)} \rangle| + \sum_{t=0}^{T} |\langle\boldsymbol{q}_+^{(t+1)},\boldsymbol{k}_+^{(t+1)} \rangle - \langle\boldsymbol{q}_+^{(t)},\boldsymbol{k}_+^{(t)} \rangle| \\
\leq & |\langle\boldsymbol{q}_+^{(0)},\boldsymbol{k}_+^{(0)} \rangle| + \sum_{t=0}^{T_0-1} |\langle\boldsymbol{q}_+^{(t+1)},\boldsymbol{k}_+^{(t+1)} \rangle - \langle\boldsymbol{q}_+^{(t)},\boldsymbol{k}_+^{(t)} \rangle|\\
\leq & O \left( \max \{ \|\boldsymbol{\mu}\|_2^2, \sigma_p^2 d \} \cdot \sigma_h^2 \cdot \sqrt{d_h \log(6N^2M^2/\delta)} \right)\\
&+ O \left( \frac{1}{\eta d_h^{\frac{1}{4}} (\|\boldsymbol{\mu}\|_2+\tau)^2 \|\boldsymbol{w}_O\|_2^2} \right) \cdot O(\eta \|\boldsymbol{\mu}\|_2^2 (\|\boldsymbol{\mu}\|_2+\tau)^2\sigma_h^2 d_h^{\frac{3}{4}}) \\
= & O \left( \max \{ \|\boldsymbol{\mu}\|_2^2, \sigma_p^2 d \} \cdot \sigma_h^2 \cdot \sqrt{d_h \log(6N^2M^2/\delta)} \right) + O \left( \|\boldsymbol{\mu}\|_2^2 \sigma_h^2 d_h^{\frac{1}{2}} \right)
\end{aligned}
\end{equation*} }
Similarly to $\langle\boldsymbol{q}_+^{(t)},\boldsymbol{k}_+^{(t)} \rangle$, it is easy to know that the inner product does not change by a magnitude more than the product of $\max \{\alpha, \beta\}$ and $\max \{ \langle q, q \rangle, \langle k, k \rangle \}$ in a single iteration, which can be expressed as follows
\begin{equation*}
\begin{aligned}
|\langle\boldsymbol  q^{(t+1)}&,\boldsymbol  k^{(t+1)} \rangle - \langle\boldsymbol q^{(t)},\boldsymbol k^{(t)} \rangle| \\
= & O \left( \max \left\{ \frac{\eta (\|\boldsymbol{\mu}\|_2+\tau)^2}{d_h^{\frac{1}{4}}}, \frac{\eta (\sigma_p^2 d+\|\boldsymbol{\mu}\|\tau)}{d_h^{\frac{1}{4}} N} \right\} \right) \cdot \Theta(\max \{ \|\boldsymbol{\mu}\|_2^2 \sigma_h^2 d_h, \sigma_p^2 \sigma_h^2 d d_h \}) \\
= & O \left( \frac{\eta (\|\boldsymbol{\mu}\|_2+\tau)^2}{d_h^{\frac{1}{4}}} \right) \cdot \Theta(\max \{ \|\boldsymbol{\mu}\|_2^2 \sigma_h^2 d_h, \sigma_p^2 \sigma_h^2 d d_h \}) \\
= & O \left( (\|\boldsymbol{\mu}\|_2+\tau)^2 \sigma_h^2 d_h^{\frac{3}{4}} \cdot \max \{ \|\boldsymbol{\mu}\|_2^2, \sigma_p^2 d \} \right)
\end{aligned}
\end{equation*} 

where the second equality is by the condition that \(N \cdot \text{SNR}^2 = \Omega(1)\).

Taking a summation, we obtain that
\begin{equation*}
\begin{aligned}
|\langle\boldsymbol q^{(T+1)},\boldsymbol k^{(T+1)} \rangle &- \langle\boldsymbol q^{(0)},\boldsymbol k^{(0)} \rangle| \leq \sum_{t=0}^{T-1} |\langle\boldsymbol q^{(t+1)},\boldsymbol k^{(t+1)} \rangle - \langle\boldsymbol q^{(t)},\boldsymbol k^{(t)} \rangle| \\
&\leq \sum_{t=0}^{T_0-1} O \left( \eta (\|\boldsymbol{\mu}\|_2+\tau)^2 \sigma_h^2 d_h^{\frac{3}{4}} \cdot \max \{ \|\boldsymbol{\mu}\|_2^2, \sigma_p^2 d \} \right) \\
&= O \left( \frac{1}{\eta d_h^{\frac{1}{4}} (\|\boldsymbol{\mu}\|_2+\tau)^2 \|\boldsymbol{w}_O\|_2^2} \right) \cdot O \left( \eta (\|\boldsymbol{\mu}\|_2+\tau)^2 \sigma_h^2 d_h^{\frac{3}{4}} \cdot \max \{ \|\boldsymbol{\mu}\|_2^2, \sigma_p^2 d \} \right) \\
&= O \left( \max \{ \|\boldsymbol{\mu}\|_2^2, \sigma_p^2 d \} \cdot \sigma_h^2 d_h^{\frac{1}{4}} \right).
\end{aligned}
\end{equation*} 

It is clear that the magnitude of $\langle\boldsymbol{q}^{(T+1)},\boldsymbol{k}^{(T+1)} \rangle - \langle\boldsymbol{q}^{(0)},\boldsymbol{k}^{(0)} \rangle$ is smaller than $\max \{ \|\boldsymbol{\mu}\|_2^2, \sigma_p^2 d \} \cdot \sigma_h^2 \cdot \sqrt{d_h \log(6N^2M^2/\delta)}$. Thus the magnitude of the bound for $\langle\boldsymbol{q}^{(T+1)},\boldsymbol{k}^{(T+1)} \rangle$ is the same as that of $\langle\boldsymbol{q}^{(T)},\boldsymbol{k}^{(T)} \rangle$. The proof for $\langle\boldsymbol{q}^{(T+1)},\boldsymbol{q}^{(T+1)} \rangle$ and $\langle\boldsymbol{k}^{(T+1)},\boldsymbol{k}^{(T+1)} \rangle$ is exactly the same, and we can conclude the proof by an induction.

\subsection{Lower Bounds of $\alpha$ and $\beta$}\label{sec:low_ab}
In this subsection, we present some bounds for $\alpha$ and $\beta$ which can be used in ~\ref{sec:stage2} and ~\ref{sec:stage3}. All the calculations in this subsection are based on the precise expression for $\alpha$ and $\beta$ in \ref{sec:cal_ab} and assume that $\mathcal{B}(T_1), \ldots, \mathcal{B}(s), \mathcal{D}(T_1), \ldots, \mathcal{D}(s-1)$ hold ($s \in [T_1, t]$). Then the following propositions hold:

\begin{align*}
V_+^{(s)} &\geq 3M \cdot |V_{n,i}^{(s)}|, \\
V_-^{(s)} &\leq -3M \cdot |V_{n,i}^{(s)}|, \\
\text{softmax}(\langle\boldsymbol q_\pm^{(s)},\boldsymbol k_\pm^{(s)} \rangle), &\text{softmax}(\langle\boldsymbol{q}_{n,i}^{(s)},\boldsymbol k_\pm^{(s)} \rangle) \geq \frac{1}{M} - o(1), \\
\text{softmax}(\langle\boldsymbol q_\pm^{(s)},\boldsymbol{k}_{n,j}^{(s)} \rangle), &\text{softmax}(\langle\boldsymbol{q}_{n,i}^{(s)},\boldsymbol{k}_{n,j}^{(s)} \rangle) \leq \frac{1}{M} + o(1).
\end{align*}

Now we give the bounds respectively for $\alpha_{+,+}^{(s)}, \alpha_{n,+,i}^{(s)}, \alpha_{-,-}^{(s)}, \alpha_{n,-,i}^{(s)}, \alpha_{n,i,+}^{(s)}, \alpha_{n,i,-}^{(s)}, \alpha_{n,i,n',i'}^{(s)}$,\\$ \beta_{+,+}^{(s)}, \beta_{n,+,i}^{(s)}, \beta_{-,-}^{(s)}, \beta_{n,-,i}^{(s)}, \beta_{n,i,+}^{(s)}, \beta_{n,i,-}^{(s)}, \beta_{n,i,n',i'}^{(s)}$.

\begin{align*}
\alpha_{+,+}^{(s)} &= \frac{\eta}{NM} \sum_{n \in S_+} -\widetilde{\ell}_n'(\theta) \langle \widetilde{\boldsymbol{\mu}}_+,\widetilde{\boldsymbol{\mu}}_+^{(s)} \rangle \\
&\quad \cdot \left( V_{+}^{(s)} \left( \frac{\exp(\langle\boldsymbol{q}_{+}^{(s)},\boldsymbol{k}_{+}^{(s)} \rangle)}{\exp(\langle\boldsymbol{q}_{+}^{(s)},\boldsymbol{k}_{+}^{(s)} \rangle) + \sum_{j=2}^M \exp(\langle\boldsymbol{q}_{+}^{(s)},\boldsymbol{k}_{n,j}^{(s)} \rangle)} \right. \right. \\
&\quad \left. \left. - \left( \frac{\exp(\langle\boldsymbol{q}_{+}^{(s)},\boldsymbol{k}_{+}^{(s)} \rangle)}{\exp(\langle\boldsymbol{q}_{+}^{(s)},\boldsymbol{k}_{+}^{(s)} \rangle) + \sum_{j=2}^M \exp(\langle\boldsymbol{q}_{+}^{(s)},\boldsymbol{k}_{n,j}^{(s)} \rangle)} \right)^2 \right) \right. \\
&\quad \left. - \sum_{i=2}^M \left( V_{n,i}^{(s)} \cdot \frac{\exp(\langle\boldsymbol{q}_{+}^{(s)},\boldsymbol{k}_{+}^{(s)} \rangle)}{\exp(\langle\boldsymbol{q}_{+}^{(s)},\boldsymbol{k}_{+}^{(s)} \rangle) + \sum_{j=2}^M \exp(\langle\boldsymbol{q}_{+}^{(s)},\boldsymbol{k}_{n,j}^{(s)} \rangle)} \right. \right. \\
&\quad \left. \left. \cdot \frac{\exp(\langle\boldsymbol{q}_{+}^{(s)},\boldsymbol{k}_{n,i}^{(s)} \rangle)}{\exp(\langle\boldsymbol{q}_{+}^{(s)},\boldsymbol{k}_{+}^{(s)} \rangle) + \sum_{j=2}^M \exp(\langle\boldsymbol{q}_{+}^{(s)},\boldsymbol{k}_{n,j}^{(s)} \rangle)} \right) \right) \\
&+\sum_{i=2}^M\langle\widetilde{\boldsymbol{\mu}}_+,\widetilde{\boldsymbol{\xi}}_{n,i}^{(s)}\rangle \cdot \left( V_+^{(s)} \left( \frac{\exp(\langle\boldsymbol{q}_{n,i}^{(s)},\boldsymbol{k}_+^{(s)} \rangle)}{\exp(\langle\boldsymbol{q}_{n,i}^{(s)},\boldsymbol{k}_+^{(s)} \rangle) + \sum_{j=2}^{M} \exp(\langle\boldsymbol{q}_{n,i}^{(s)},\boldsymbol{k}_{n,j}^{(s)} \rangle)} \right. \right. \\
& \left. - \left( \frac{\exp(\langle\boldsymbol{q}_{n,i}^{(s)},\boldsymbol{k}_+^{(s)} \rangle)}{\exp(\langle\boldsymbol{q}_{n,i}^{(s)},\boldsymbol{k}_+^{(s)} \rangle) + \sum_{j=2}^{M} \exp(\langle\boldsymbol{q}_{n,i}^{(s)},\boldsymbol{k}_{n,j}^{(s)} \rangle)} \right)^2 \right) \\
& - \sum_{k=2}^{M} \left( V_{n,i}^{(s)} \cdot \frac{\exp(\langle\boldsymbol{q}_{n,i}^{(s)},\boldsymbol{k}_+^{(s)} \rangle)}{\exp(\langle\boldsymbol{q}_{n,i}^{(s)},\boldsymbol{k}_+^{(s)} \rangle) + \sum_{j=2}^{M} \exp(\langle\boldsymbol{q}_{n,i}^{(s)},\boldsymbol{k}_{n,j}^{(s)} \rangle)} \right. \\
& \left. \cdot \frac{\exp(\langle\boldsymbol{q}_{n,i}^{(s)},\boldsymbol{k}_{n,k}^{(s)} \rangle)}{\exp(\langle\boldsymbol{q}_{n,i}^{(s)},\boldsymbol{k}_+^{(s)} \rangle) + \sum_{j=2}^{M} \exp(\langle\boldsymbol{q}_{n,i}^{(s)},\boldsymbol{k}_{n,j}^{(s)} \rangle)} \right)
 \\
&\geq \frac{\eta}{NM} \sum_{n \in S_+} -\widetilde{\ell}_n'^{(s)} \langle \widetilde{\boldsymbol{\mu}}_+, \widetilde{\boldsymbol{\mu}}_+^{(s)}\rangle\cdot \frac{\exp(\langle\boldsymbol{q}_+^{(s)},\boldsymbol{k}_+^{(s)} \rangle)}{\exp(\langle\boldsymbol{q}_+^{(s)},\boldsymbol{k}_+^{(s)} \rangle) + \sum_{j=2}^M \exp(\langle\boldsymbol{q}_+^{(s)},\boldsymbol{k}_{n,j}^{(s)} \rangle)} \\
&\cdot \left( V_+^{(s)} \left( 1 - \frac{\exp(\langle\boldsymbol{q}_+^{(s)},\boldsymbol{k}_+^{(s)} \rangle)}{\exp(\langle\boldsymbol{q}_+^{(s)},\boldsymbol{k}_+^{(s)} \rangle) + \sum_{j=2}^M \exp(\langle\boldsymbol{q}_+^{(s)},\boldsymbol{k}_{n,j}^{(s)} \rangle)} \right) \right. \\
&\left. - \frac{1}{2} \cdot V_+^{(s)} \sum_{i=2}^M \frac{\exp(\langle\boldsymbol{q}_+^{(s)},\boldsymbol{k}_{n,i}^{(s)} \rangle)}{\exp(\langle\boldsymbol{q}_+^{(s)},\boldsymbol{k}_+^{(s)} \rangle) + \sum_{j=2}^M \exp(\langle\boldsymbol{q}_+^{(s)},\boldsymbol{k}_{n,j}^{(s)} \rangle)} \right) \\
&+\frac{\eta}{NM} \sum_{n \in S_+} -\widetilde{\ell}_n'^{(s)} \langle \widetilde{\boldsymbol{\mu}}_+, \widetilde{\boldsymbol{\xi}}_{n,i}^{(s)}\rangle\cdot \frac{\exp(\langle\boldsymbol{q}_{n,i}^{(s)},\boldsymbol{k}_+^{(s)} \rangle)}{\exp(\langle\boldsymbol{q}_{n,i}^{(s)},\boldsymbol{k}_+^{(s)} \rangle) + \sum_{j=2}^M \exp(\langle\boldsymbol{q}_{n,i}^{(s)},\boldsymbol{k}_{n,j}^{(s)} \rangle)} \\
&\cdot \left( V_+^{(s)} \left( 1 - \frac{\exp(\langle\boldsymbol{q}_{n,i}^{(s)},\boldsymbol{k}_+^{(s)} \rangle)}{\exp(\langle\boldsymbol{q}_{n,i}^{(s)},\boldsymbol{k}_+^{(s)} \rangle) + \sum_{j=2}^M \exp(\langle\boldsymbol{q}_{n,i}^{(s)},\boldsymbol{k}_{n,j}^{(s)} \rangle)} \right) \right. \\
&\left. - \frac{1}{2} \cdot V_+^{(s)} \sum_{k=2}^M \frac{\exp(\langle\boldsymbol{q}_{n,i}^{(s)},\boldsymbol{k}_{n,k}^{(s)} \rangle)}{\exp(\langle\boldsymbol{q}_{n,i}^{(s)},\boldsymbol{k}_+^{(s)} \rangle) + \sum_{j=2}^M \exp(\langle\boldsymbol{q}_{n,i}^{(s)},\boldsymbol{k}_{n,j}^{(s)} \rangle)} \right)\\
&\geq \frac{\eta}{2NM} \sum_{n \in S_+} -\ell_n'^{(s)} (\|\boldsymbol{\mu}\|_2-\tau)^2 V_+^{(s)} \cdot \frac{\exp(\langle\boldsymbol{q}_+^{(s)},\boldsymbol{k}_+^{(s)} \rangle)}{\exp(\langle\boldsymbol{q}_+^{(s)},\boldsymbol{k}_+^{(s)} \rangle) + \sum_{j=2}^M \exp(\langle\boldsymbol{q}_+^{(s)},\boldsymbol{k}_{n,j}^{(s)} \rangle)}\\
&\cdot \left( 1 - \frac{\exp(\langle\boldsymbol{q}_+^{(s)},\boldsymbol{k}_+^{(s)} \rangle)}{\exp(\langle\boldsymbol{q}_+^{(s)},\boldsymbol{k}_+^{(s)} \rangle) + \sum_{j=2}^M \exp(\langle\boldsymbol{q}_+^{(s)},\boldsymbol{k}_{n,j}^{(s)} \rangle)} \right)\\
&+\sum_{i=2}^M(\|\boldsymbol{\mu}\|\tau+\sigma_p\tau\sqrt{2\log(4NM/\delta)}+\tau^2) \cdot \left( V_+^{(s)} \left( \frac{\exp(\langle\boldsymbol{q}_{n,i}^{(s)},\boldsymbol{k}_+^{(s)} \rangle)}{\exp(\langle\boldsymbol{q}_{n,i}^{(s)},\boldsymbol{k}_+^{(s)} \rangle) + \sum_{j=2}^{M} \exp(\langle\boldsymbol{q}_{n,i}^{(s)},\boldsymbol{k}_{n,j}^{(s)} \rangle)} \right. \right. \\
    & \left. - \left( \frac{\exp(\langle\boldsymbol{q}_{n,i}^{(s)},\boldsymbol{k}_+^{(s)} \rangle)}{\exp(\langle\boldsymbol{q}_{n,i}^{(s)},\boldsymbol{k}_+^{(s)} \rangle) + \sum_{j=2}^{M} \exp(\langle\boldsymbol{q}_{n,i}^{(s)},\boldsymbol{k}_{n,j}^{(s)} \rangle)} \right)^2 \right)
\end{align*}

where the first inequality is by $V_+^{(s)} \geq 3M \cdot |V_{n,i}^{(s)}|$, the second inequality is by the fact that the sum of attention equal to 1 and Lemma~\ref{lem:con_ineq}. Similarly, we have

\begin{align}
\beta_{+,+}^{(s)} \geq&
\frac{\eta}{2NM} \sum_{n \in S_+} -\widetilde{\ell}_n^{'(s)} (\|\boldsymbol{\mu}\|_2-\tau)^2 V_+^{(s)} \cdot \frac{\exp(\langle\boldsymbol{q}_+^{(s)},\boldsymbol{k}_+^{(s)} \rangle)}{\exp(\langle\boldsymbol{q}_+^{(s)},\boldsymbol{k}_+^{(s)} \rangle) + (M - 1) \exp(\max_j \langle\boldsymbol{q}_+^{(s)},\boldsymbol{k}_{n,j}^{(s)} \rangle)} \nonumber\\&
\cdot \frac{\exp(\max_j \langle\boldsymbol{q}_+^{(s)},\boldsymbol{k}_{n,j}^{(s)} \rangle)}{\exp(\langle\boldsymbol{q}_+^{(s)},\boldsymbol{k}_+^{(s)} \rangle) + (M - 1) \exp(\max_j \langle\boldsymbol{q}_+^{(s)},\boldsymbol{k}_{n,j}^{(s)} \rangle)} +\text{\{lower order term\}}
\end{align}

\begin{align}
\alpha_{-,-}^{(s)} \geq&
\frac{\eta}{2NM} \sum_{n \in S_-} \widetilde{\ell}_n^{'(s)} (\|\boldsymbol{\mu}\|_2-\tau)^2 V_-^{(s)} \cdot \frac{\exp(\langle\boldsymbol{q}_-^{(s)},\boldsymbol{k}_-^{(s)} \rangle)}{\exp(\langle\boldsymbol{q}_-^{(s)},\boldsymbol{k}_-^{(s)} \rangle) + (M - 1) \exp(\max_j \langle\boldsymbol{q}_-^{(s)},\boldsymbol{k}_{n,j}^{(s)} \rangle)} \nonumber\\&
\cdot \frac{\exp(\max_j \langle\boldsymbol{q}_-^{(s)},\boldsymbol{k}_{n,j}^{(s)} \rangle)}{\exp(\langle\boldsymbol{q}_-^{(s)},\boldsymbol{k}_-^{(s)} \rangle) + (M - 1) \exp(\max_j \langle\boldsymbol{q}_-^{(s)},\boldsymbol{k}_{n,j}^{(s)} \rangle)} +\text{\{lower order term\}}
\end{align}

Similarly, by applying the update rules in Section~\ref{sec:update:innerp}, we can derive the following bounds on $\alpha$ and $\beta$.

\begin{equation*}
\begin{aligned}
    &\alpha_{+,+}^{(s)}, \alpha_{-,-}^{(s)}, \beta_{+,+}^{(s)}, \beta_{-,-}^{(s)}, \alpha_{n,i,+}^{(s)}, \alpha_{n,i,-}^{(s)}, \beta_{n,+,i}^{(s)}, \beta_{n,-,i}^{(s)} \geq 0, \\
    &\alpha_{n,+,i}^{(s)}, \alpha_{n,-,i}^{(s)}, \alpha_{n,i,n,j}^{(s)}, \beta_{n,i,+}^{(s)}, \beta_{n,i,-}^{(s)}, \beta_{n,j,n,i}^{(s)} \leq 0.
\end{aligned}
\end{equation*}

\subsection{Lower Bounds of $\langle\mathbf{q},\mathbf{k}\rangle$}\label{sec:low_qk}
In order to give the lower bounds for $\langle q, k \rangle$, we need to rewrite the bounds of $\alpha$ and $\beta$ in a more concise form. We first expand the equations in~\ref{sec:low_ab} under the assumption that $\mathcal{B}(s)$ {and $\mathcal{E}(s)$} holds for $s \in [T_1, t]$.

\begin{align*}
    \alpha_{+,+}^{(s)} &\geq \frac{\eta}{2NM} \sum_{n \in S_+} -\widetilde{\ell}_n^{'(s)} (\|\boldsymbol{\mu}\|_2-\tau)^2 V_+^{(s)} \cdot \frac{\exp(\langle\boldsymbol{q}_+^{(s)},\boldsymbol{k}_+^{(s)} \rangle)}{\exp(\langle\boldsymbol{q}_+^{(s)},\boldsymbol{k}_+^{(s)} \rangle) + \sum_{j'=2}^{M} \exp(\langle\boldsymbol{q}_+^{(s)},\boldsymbol{k}_{n,j'}^{(s)} \rangle)} \nonumber \\
    &\quad \cdot \left(1 - \frac{\exp(\langle\boldsymbol{q}_+^{(s)},\boldsymbol{k}_+^{(s)} \rangle)}{\exp(\langle\boldsymbol{q}_+^{(s)},\boldsymbol{k}_+^{(s)} \rangle) + \sum_{j'=2}^{M} \exp(\langle\boldsymbol{q}_+^{(s)},\boldsymbol{k}_{n,j'}^{(s)} \rangle)}\right) \nonumber \\  
    &+\sum_{i=2}^M(\|\boldsymbol{\mu}\|_2\tau+\sigma_p\tau\sqrt{2\log(4NM/\delta)}+\tau^2) \cdot \left( V_+^{(s)} \left( \frac{\exp(\langle\boldsymbol{q}_{n,i}^{(s)},\boldsymbol{k}_+^{(s)} \rangle)}{\exp(\langle\boldsymbol{q}_{n,i}^{(s)},\boldsymbol{k}_+^{(s)} \rangle) + \sum_{j=2}^{M} \exp(\langle\boldsymbol{q}_{n,i}^{(s)},\boldsymbol{k}_{n,j}^{(s)} \rangle)} \right. \right. \\
    & \left. - \left( \frac{\exp(\langle\boldsymbol{q}_{n,i}^{(s)},\boldsymbol{k}_+^{(s)} \rangle)}{\exp(\langle\boldsymbol{q}_{n,i}^{(s)},\boldsymbol{k}_+^{(s)} \rangle) + \sum_{j=2}^{M} \exp(\langle\boldsymbol{q}_{n,i}^{(s)},\boldsymbol{k}_{n,j}^{(s)} \rangle)} \right)^2 \right) \\
    &= \frac{\eta}{2NM} \sum_{n \in S_+} -\widetilde{\ell}_n^{'(s)} (\|\boldsymbol{\mu}\|_2-\tau)^2 V_+^{(s)} \cdot \frac{\exp(\langle\boldsymbol{q}_+^{(s)},\boldsymbol{k}_+^{(s)} \rangle)}{\exp(\langle\boldsymbol{q}_+^{(s)},\boldsymbol{k}_+^{(s)} \rangle) + \sum_{j'=2}^{M} \exp(\langle\boldsymbol{q}_+^{(s)},\boldsymbol{k}_{n,j'}^{(s)} \rangle)} \nonumber \\
    &\quad \cdot \frac{\sum_{j=2}^{M} \exp(\langle\boldsymbol{q}_+^{(s)},\boldsymbol{k}_{n,j}^{(s)} \rangle)}{\exp(\langle\boldsymbol{q}_+^{(s)},\boldsymbol{k}_+^{(s)} \rangle) + \sum_{j'=2}^{M} \exp(\langle\boldsymbol{q}_+^{(s)},\boldsymbol{k}_{n,j'}^{(s)} \rangle)}+\{\text{lower term}\} \nonumber \\
    &\geq \frac{\eta}{2NM} - \widetilde{\ell}_n^{'(s)} (\|\boldsymbol{\mu}\|_2-\tau)^2 V_+^{(s)} \cdot \frac{\exp(\langle\boldsymbol{q}_+^{(s)},\boldsymbol{k}_+^{(s)} \rangle)}{\exp(\langle\boldsymbol{q}_+^{(s)},\boldsymbol{k}_+^{(s)} \rangle) + \sum_{j'=2}^{M} \exp(\langle\boldsymbol{q}_+^{(s)},\boldsymbol{k}_{n,j'}^{(s)} \rangle)} \nonumber \\
    &\quad \cdot \frac{\exp(\langle\boldsymbol{q}_+^{(s)},\boldsymbol{k}_{n,j}^{(s)} \rangle)}{\exp(\langle\boldsymbol{q}_+^{(s)},\boldsymbol{k}_+^{(s)} \rangle) + \sum_{j'=2}^{M} \exp(\langle\boldsymbol{q}_+^{(s)},\boldsymbol{k}_{n,j'}^{(s)} \rangle)}+\{\text{lower term}\} \nonumber \\
    &\geq \frac{\eta}{2NM} - \widetilde{\ell}_n^{'(s)} (\|\boldsymbol{\mu}\|_2-\tau)^2 V_+^{(s)} \cdot \left(\frac{1}{M} - o(1)\right) \cdot \frac{\exp(\langle\boldsymbol{q}_+^{(s)},\boldsymbol{k}_{n,j}^{(s)} \rangle)}{\exp(\langle\boldsymbol{q}_+^{(s)},\boldsymbol{k}_+^{(s)} \rangle) + \sum_{j'=2}^{M} \exp(\langle\boldsymbol{q}_+^{(s)},\boldsymbol{k}_{n,j'}^{(s)} \rangle)} \nonumber \\
    &\geq \frac{\eta}{2NM} - \widetilde{\ell}_n^{'(s)} (\|\boldsymbol{\mu}\|_2-\tau)^2 V_+^{(s)} \cdot \left(\frac{1}{M} - o(1)\right) \cdot \frac{\exp(\langle\boldsymbol{q}_+^{(s)},\boldsymbol{k}_{n,j}^{(s)} \rangle)}{C\exp(\langle\boldsymbol{q}_+^{(s)},\boldsymbol{k}_+^{(s)} \rangle) }+\{\text{lower term}\} \nonumber\\
    &\geq \frac{\eta^2 C_5 (\|\boldsymbol{\mu}\|_2-\tau)^4 \|{\boldsymbol{w}_O}\|_2^2 (s - T_1)}{N} \cdot \frac{1}{\exp(\Lambda_{n,+,j}^{(s)})},
\end{align*}

where the first inequality is by Lemma~\ref{lemma:rsoftmax}(as $\mathcal{E}(s)$ holds) and $\text{softmax}(\langle\boldsymbol{q}_+^{(s)},\boldsymbol{k}_+^{(s)} \rangle) \geq \left(\frac{1}{M} - o(1)\right)$. In the fourth inequality, by $\langle\boldsymbol q^{(T_1)},\boldsymbol k^{(T_1)} \rangle = o(1)$ and the monotonicity of $\langle\boldsymbol{q}_+^{(s)},\boldsymbol{k}_+^{(s)} \rangle$ ($\langle\boldsymbol{q}_+^{(s)},\boldsymbol{k}_+^{(s)} \rangle$ is increasing and $\langle\boldsymbol{q}_+^{(s)},\boldsymbol{k}_{n,j}^{(s)} \rangle$ is decreasing), there exist a constant $C$ such that $C \exp(\langle\boldsymbol{q}_+^{(s)},\boldsymbol{k}_+^{(s)} \rangle) \geq \exp(\langle\boldsymbol{q}_+^{(s)},\boldsymbol{k}_+^{(s)} \rangle) + \sum_{j'=2}^{M} \exp(\langle\boldsymbol{q}_+^{(s)},\boldsymbol{k}_{n,j'}^{(s)} \rangle)$. In the last inequality, we plugging the lower bounds of $V_+^{(s)}$ and $-\widetilde{\ell}_n^{'(s)}$ and then absorb all the constant factors. Similarly, we have

\begin{equation}
    \beta_{+,+}^{(s)} \geq \frac{\eta^2 C_5 (\|\boldsymbol{\mu}\|_2-\tau)^4 \|{\boldsymbol{w}_O}\|_2^2 (s - T_1)}{N} \cdot \frac{1}{\exp(\Lambda_{n,+,j}^{(s)})},
\end{equation}

\begin{equation}
    \alpha_{-,-}^{(s)} \geq \frac{\eta^2 C_5 (\|\boldsymbol{\mu}\|_2-\tau)^4 \|{\boldsymbol{w}_O}\|_2^2 (s - T_1)}{N} \cdot \frac{1}{\exp(\Lambda_{n,-,j}^{(s)})},
\end{equation}

\begin{equation}
    \beta_{-,-}^{(s)} \geq \frac{\eta^2 C_5 (\|\boldsymbol{\mu}\|_2-\tau)^4 \|{\boldsymbol{w}_O}\|_2^2 (s - T_1)}{N} \cdot \frac{1}{\exp(\Lambda_{n,-,j}^{(s)})}.
\end{equation}
With the concise lower bounds for $\alpha$ and $\beta$ above and proposition $\mathcal{C}(s)$, we will give the lower bounds for the dynamics of $\langle\boldsymbol q,\boldsymbol k\rangle$.

\begin{align}
    \langle\boldsymbol{q}_+^{(s+1)},\boldsymbol{k}_+^{(s+1)} \rangle - \langle\boldsymbol{q}_+^{(s)},\boldsymbol{k}_+^{(s)} \rangle &= \alpha_{+,+}^{(s)} \|\boldsymbol{k}_+^{(s)}\|_2^2 + \sum_{n \in S_+} \sum_{i=2}^{M} \alpha_{n,+,i}^{(s)} \langle\boldsymbol{k}_+^{(s)},\boldsymbol{k}_{n,i}^{(s)} \rangle \nonumber \\
    &\quad + \beta_{+,+}^{(s)} \|\boldsymbol{q}_+^{(s)}\|_2^2 + \sum_{n \in S_+} \sum_{i=2}^{M} \beta_{n,+,i}^{(s)} \langle\boldsymbol{q}_+^{(s)},\boldsymbol{q}_{n,i}^{(s)} \rangle \nonumber \\
    &\quad + \left( \alpha_{+,+}^{(s)}\boldsymbol{k}_+^{(s)} + \sum_{n \in S_+} \sum_{i=2}^{M} \alpha_{n,+,i}^{(s)}\boldsymbol{k}_{n,i}^{(s)} \right) \cdot \left( \beta_{+,+}^{(s)}\boldsymbol{q}_+^{(s)} + \sum_{n \in S_+} \sum_{i=2}^{M} \beta_{n,+,i}^{(s)}\boldsymbol{q}_{n,i}^{(s)} \right) \nonumber \\
    &= \alpha_{+,+}^{(s)} \|\boldsymbol{k}_+^{(s)}\|_2^2 + \beta_{+,+}^{(s)} \|\boldsymbol{q}_+^{(s)}\|_2^2 + \{\text{lower order term}\} \nonumber \\
    &\geq \frac{2 \eta^2 C_5 (\|\boldsymbol{\mu}\|_2-\tau)^4 \|{\boldsymbol{w}_O}\|_2^2 (s - T_1)}{N} \cdot \frac{1}{\exp(\Lambda_{n,+,j}^{(s)})} \cdot \Theta(\|\boldsymbol{\mu}\|_2^2 \sigma_h^2 d_h) \nonumber \\
    &\quad + \{\text{lower order term}\} \nonumber \\
    &\geq \frac{\eta^2 C_6 (\|\boldsymbol{\mu}\|_2-\tau)_2^4 \|\boldsymbol{\mu}\|_2^2 \|{\boldsymbol{w}_O}\|_2^2 \sigma_h^2 d_h (s - T_1)}{N} \cdot \frac{1}{\exp(\Lambda_{n,+,j}^{(s)})},
\end{align}
Similarly, we have the lower bounds for the dynamics of other $\langle\boldsymbol q,\boldsymbol k\rangle$.
\subsection{Upper Bounds of $\langle q, k \rangle$}\label{sec:f6}

In order to give the upper bounds of $\langle q, k \rangle$ in stage II, we need to give the upper bounds of $\alpha$ and $\beta$ based on the equations in Section~\ref{sec:cal_ab} under the assumption that $\mathcal{D}(T_1), \ldots, \mathcal{D}(s-1)$ hold for $s \in [T_1, t]$.

\begin{equation}
\begin{aligned}
    \alpha_{+,+}^{(s)} &\leq \frac{\eta}{NM} \sum_{n \in S_+} -\widetilde{\ell}_n'(\theta) \langle \widetilde{\boldsymbol{\mu}}_+,\widetilde{\boldsymbol{\mu}}_+^{(t)} \rangle \\
&\quad \cdot \left( V_{+}^{(t)} \left( \frac{\exp(\langle\boldsymbol{q}_{+}^{(t)},\boldsymbol{k}_{+}^{(t)} \rangle)}{\exp(\langle\boldsymbol{q}_{+}^{(t)},\boldsymbol{k}_{+}^{(t)} \rangle) + \sum_{j=2}^M \exp(\langle\boldsymbol{q}_{+}^{(t)},\boldsymbol{k}_{n,j}^{(t)} \rangle)} \right. \right. \\
&\quad \left. \left. - \left( \frac{\exp(\langle\boldsymbol{q}_{+}^{(t)},\boldsymbol{k}_{+}^{(t)} \rangle)}{\exp(\langle\boldsymbol{q}_{+}^{(t)},\boldsymbol{k}_{+}^{(t)} \rangle) + \sum_{j=2}^M \exp(\langle\boldsymbol{q}_{+}^{(t)},\boldsymbol{k}_{n,j}^{(t)} \rangle)} \right)^2 \right) \right. \\
&\quad \left. - \sum_{i=2}^M \left( V_{n,i}^{(t)} \cdot \frac{\exp(\langle\boldsymbol{q}_{+}^{(t)},\boldsymbol{k}_{+}^{(t)} \rangle)}{\exp(\langle\boldsymbol{q}_{+}^{(t)},\boldsymbol{k}_{+}^{(t)} \rangle) + \sum_{j=2}^M \exp(\langle\boldsymbol{q}_{+}^{(t)},\boldsymbol{k}_{n,j}^{(t)} \rangle)} \right. \right. \\
&\quad \left. \left. \cdot \frac{\exp(\langle\boldsymbol{q}_{+}^{(t)},\boldsymbol{k}_{n,i}^{(t)} \rangle)}{\exp(\langle\boldsymbol{q}_{+}^{(t)},\boldsymbol{k}_{+}^{(t)} \rangle) + \sum_{j=2}^M \exp(\langle\boldsymbol{q}_{+}^{(t)},\boldsymbol{k}_{n,j}^{(t)} \rangle)} \right) \right)
\\&+\sum_{i=2}^M\langle\widetilde{\boldsymbol{\mu}}_+,\widetilde{\boldsymbol{\xi}}_{n,i}^{(t)}\rangle \cdot \left( V_+^{(t)} \left( \frac{\exp(\langle\boldsymbol{q}_{n,i}^{(t)},\boldsymbol{k}_+^{(t)} \rangle)}{\exp(\langle\boldsymbol{q}_{n,i}^{(t)},\boldsymbol{k}_+^{(t)} \rangle) + \sum_{j=2}^{M} \exp(\langle\boldsymbol{q}_{n,i}^{(t)},\boldsymbol{k}_{n,j}^{(t)} \rangle)} \right. \right. \\
& \left. - \left( \frac{\exp(\langle\boldsymbol{q}_{n,i}^{(t)},\boldsymbol{k}_+^{(t)} \rangle)}{\exp(\langle\boldsymbol{q}_{n,i}^{(t)},\boldsymbol{k}_+^{(t)} \rangle) + \sum_{j=2}^{M} \exp(\langle\boldsymbol{q}_{n,i}^{(t)},\boldsymbol{k}_{n,j}^{(t)} \rangle)} \right)^2 \right) \\
& - \sum_{k=2}^{M} \left( V_{n,i}^{(t)} \cdot \frac{\exp(\langle\boldsymbol{q}_{n,i}^{(t)},\boldsymbol{k}_+^{(t)} \rangle)}{\exp(\langle\boldsymbol{q}_{n,i}^{(t)},\boldsymbol{k}_+^{(t)} \rangle) + \sum_{j=2}^{M} \exp(\langle\boldsymbol{q}_{n,i}^{(t)},\boldsymbol{k}_{n,j}^{(t)} \rangle)} \right. \\
& \left. \cdot \frac{\exp(\langle\boldsymbol{q}_{n,i}^{(t)},\boldsymbol{k}_{n,k}^{(t)} \rangle)}{\exp(\langle\boldsymbol{q}_{n,i}^{(t)},\boldsymbol{k}_+^{(t)} \rangle) + \sum_{j=2}^{M} \exp(\langle\boldsymbol{q}_{n,i}^{(t)},\boldsymbol{k}_{n,j}^{(t)} \rangle)} \right) \\
    &\leq \frac{\eta}{NM} \sum_{n \in S_+}  (\|\boldsymbol{\mu}\|_2+\tau)^2 (V_+^{(s)} 
    \cdot \frac{\exp(\langle\boldsymbol{q}_+^{(s)},\boldsymbol{k}_+^{(s)} \rangle)}{\exp(\langle\boldsymbol{q}_+^{(s)},\boldsymbol{k}_+^{(s)} \rangle) + \sum_{j=2}^M \exp(\langle\boldsymbol{q}_+^{(s)},\boldsymbol{k}_{n,j}^{(s)} \rangle)} \\
    &\quad + \max_{i=2} |V_{n,i}^{(s)}| \cdot \frac{\sum_{j=2}^{M} \exp(\langle\boldsymbol{q}_+^{(s)},\boldsymbol{k}_{n,j}^{(s)} \rangle)}{\exp(\langle\boldsymbol{q}_+^{(s)},\boldsymbol{k}_+^{(s)} \rangle) + \sum_{j=2}^{M} \exp(\langle\boldsymbol{q}_+^{(s)},\boldsymbol{k}_{n,j}^{(s)} \rangle)}) \\
    &\quad+\sum_{i=2}^M  (\|\boldsymbol{\mu}\|\tau+\sigma_p\tau\sqrt{2\log(4NM/\delta)}+\tau^2) (V_+^{(s)} 
    \cdot \frac{\exp(\langle\boldsymbol{q}_{n,i}^{(s)},\boldsymbol{k}_+^{(s)} \rangle)}{\exp(\langle\boldsymbol{q}_{n,i}^{(s)},\boldsymbol{k}_+^{(s)} \rangle) + \sum_{j=2}^M \exp(\langle\boldsymbol{q}_{n,i}^{(s)},\boldsymbol{k}_{n,j}^{(s)} \rangle)} \\
    &\quad + \max_{i=2} |V_{n,i}^{(s)}| \cdot \frac{\sum_{j=2}^{M} \exp(\langle\boldsymbol{q}_{n,i}^{(s)},\boldsymbol{k}_{n,j}^{(s)} \rangle)}{\exp(\langle\boldsymbol{q}_{n,i}^{(s)},\boldsymbol{k}_+^{(s)} \rangle) + \sum_{j=2}^{M} \exp(\langle\boldsymbol{q}_{n,i}^{(s)},\boldsymbol{k}_{n,j}^{(s)} \rangle)}) \\
    &\leq \frac{\eta}{NM} \cdot \frac{3N}{4} \cdot (\|\boldsymbol{\mu}\|_2+\tau)^2 \cdot \left( V_+^{(s)} \cdot \frac{C}{\exp(\langle\boldsymbol{q}_+^{(s)},\boldsymbol{k}_+^{(s)} \rangle)} + \max_i |V_{n,i}^{(s)}| \cdot \frac{C}{\exp(\langle\boldsymbol{q}_+^{(s)},\boldsymbol{k}_+^{(s)} \rangle)} \right) \\
    &+\sum_{i=2}^M(\|\boldsymbol{\mu}\|\tau+\sigma_p\tau\sqrt{2\log(4NM/\delta)}+\tau^2)\cdot \left( V_+^{(s)} \cdot \frac{C}{\exp(\langle\boldsymbol{q}_{n,i}^{(s)},\boldsymbol{k}_+^{(s)} \rangle)} + \max_i |V_{n,i}^{(s)}| \cdot \frac{C}{\exp(\langle\boldsymbol{q}_{n,i}^{(s)},\boldsymbol{k}_+^{(s)} \rangle)} \right)\\
    &\leq \frac{\eta^2 C_9 (\|\boldsymbol{\mu}\|_2+\tau)^2}{\exp(\langle\boldsymbol{q}_+^{(s)},\boldsymbol{k}_+^{(s)} \rangle)}+\sum_{i=2}^M\frac{\eta^2 C_9 (\|\boldsymbol{\mu}\|_2\tau+\sigma_p\tau\sqrt{2\log(4NM/\delta)}+\tau^2)}{\exp(\langle\boldsymbol{q}_{n,i}^{(s)},\boldsymbol{k}_+^{(s)} \rangle)}
\end{aligned}
\end{equation}

where the first inequality is by $-\widetilde{\ell}_n^{'(s)} \leq 1$ and $\text{softmax}(\langle\boldsymbol{q}_+^{(s)},\boldsymbol{k}_+^{(s)} \rangle) \leq 1$. For the second inequality, we first consider
\(
\frac{\sum_{j=2}^{M} \exp(\langle\boldsymbol{q}_+^{(s)},\boldsymbol{k}_{n,j}^{(s)} \rangle)}{\exp(\langle\boldsymbol{q}_+^{(s)},\boldsymbol{k}_+^{(s)} \rangle) + \sum_{j=2}^{M} \exp(\langle\boldsymbol{q}_+^{(s)},\boldsymbol{k}_{n,j}^{(s)} \rangle)} 
\leq \frac{\sum_{j=2}^{M} \exp(\langle\boldsymbol{q}_+^{(s)},\boldsymbol{k}_{n,j}^{(s)} \rangle)}{\exp(\langle\boldsymbol{q}_+^{(s)},\boldsymbol{k}_+^{(s)} \rangle)}.
\)

Then by the monotonicity of $\langle\boldsymbol{q}_+^{(s)},\boldsymbol{k}_{n,j}^{(s)} \rangle$ and $\langle\boldsymbol{q}_+^{(T_1)},\boldsymbol{k}_{n,j}^{(T_1)} \rangle = o(1)$ we have 
\(
\sum_{j=2}^{M} \exp(\langle\boldsymbol{q}_+^{(s)},\boldsymbol{k}_{n,j}^{(s)} \rangle) \leq C
\)
for $s \in [T_1, t]$. The last inequality is by $V_+^{(s)}, V_{n,i}^{(s)} = o(1)$ for $s \in [T_1, t]$ and absorbing the constant factors. Similarly, we have

Similar to Section~\ref{sec:low_qk}, we apply the bounds of $\alpha$ and $\beta$ above to give the upper bounds for the dynamics $\langle\boldsymbol q,\boldsymbol k \rangle$.

\begin{equation}
\begin{aligned}
    \langle &\boldsymbol{q}_+^{(s+1)},\boldsymbol{k}_+^{(s+1)} \rangle - \langle\boldsymbol{q}_+^{(s)},\boldsymbol{k}_+^{(s)} \rangle 
    = \alpha_{+,+}^{(s)} \|\boldsymbol{k}_+^{(s)} \|_2^2 
    + \sum_{n \in S_+} \sum_{i=2}^{M} \alpha_{n,+,i}^{(s)} \langle\boldsymbol{k}_+^{(s)},\boldsymbol{k}_{n,i}^{(s)} \rangle \\
    &\quad + \beta_{+,+}^{(s)} \|\boldsymbol{q}_+^{(s)} \|_2^2 
    + \sum_{n \in S_+} \sum_{i=2}^{M} \beta_{n,+,i}^{(s)} \langle\boldsymbol{q}_+^{(s)},\boldsymbol{q}_{n,i}^{(s)} \rangle \\
    &\quad +\left( 
    \alpha_{+,+}^{(s)}\boldsymbol{k}_+^{(s)} 
    + \sum_{n \in S_+} \sum_{i=2}^{M} \alpha_{n,+,i}^{(s)}\boldsymbol{k}_{n,i}^{(s)}
\right)^\top
\\&\quad\cdot
\left(
    \beta_{+,+}^{(s)}\boldsymbol{q}_+^{(s)} 
    + \sum_{n \in S_+} \sum_{i=2}^{M} \beta_{n,+,i}^{(s)}\boldsymbol{q}_{n,i}^{(s)}
\right) \\
    &\quad + \sum_{n \in S_+} \sum_{i=2}^{M} \alpha_{n,+,i}^{(s)} \|\boldsymbol{k}_{n,i}^{(s)}\|_2^2 \\
    &= \alpha_{+,+}^{(s)} \|\boldsymbol{k}_+^{(s)} \|_2 
    + \beta_{+,+}^{(s)} \|\boldsymbol{q}_+^{(s)} \|_2 
    + \text{\{lower order terms\}} \\
    &\leq (\frac{\eta^2 C_9 (\|\boldsymbol{\mu}\|_2+\tau)^2}{\exp(\langle\boldsymbol{q}_+^{(s)},\boldsymbol{k}_+^{(s)} \rangle)}+\sum_{i=2}^M\frac{\eta^2 C_9 (\|\boldsymbol{\mu}\|_2\tau+\sigma_p\tau\sqrt{2\log(4NM/\delta)}+\tau^2)}{\exp(\langle\boldsymbol{q}_{n,i}^{(s)},\boldsymbol{k}_+^{(s)} \rangle)}) 
    \cdot \Theta(\|\boldsymbol{\mu}\|_2^2 \sigma_h^2 d_h) 
    + \text{\{lower order terms\}} \\
    &\leq \frac{\eta C_{10} \|\boldsymbol{\mu}\|_2^2(\|\boldsymbol{\mu}\|_2+\tau)^2 \sigma_h^2 d_h}{\exp(\langle\boldsymbol{q}_+^{(s)},\boldsymbol{k}_+^{(s)} \rangle)}+\sum_{i=2}^M\frac{\eta C_{10} \|\boldsymbol{\mu}\|_2^2(\|\boldsymbol{\mu}\|_2\tau+\sigma_p\tau\sqrt{2\log(4NM/\delta)}+\tau^2) \sigma_h^2 d_h}{\exp(\langle\boldsymbol{q}_{n,i}^{(s)},\boldsymbol{k}_+^{(s)} \rangle)}.
\end{aligned}
\end{equation}
Similarly, we have the upper bounds for the dynamics of other $\langle\boldsymbol q,\boldsymbol k \rangle$.

\subsection{Bounds for the Sum of $\alpha$ and $\beta$}\label{sec:f7}

The gradients of the inner products of $q$ and $k$ contain a lot of coefficients $\alpha$ and $\beta$, and in order to conveniently give the upper bounds of some lower order inner products, we will give upper bounds for the summation of $\alpha$ and $\beta$ (e.g. $\sum_{s=T_1}^{t} |\alpha_{+,+}^{(s)}|$).

Note that in the Jacobi matrix of the Softmax function, the elements on the diagonal are $\text{softmax}(a_i) \cdot (1 - \text{softmax}(a_i))$ and the elements on the off-diagonal are $\text{softmax}(a_i) \cdot \text{softmax}(a_j)$. In Stage II, the attentions on signals $\boldsymbol{\mu}_{\pm}$ increase and the attentions on noises $\boldsymbol{\xi}$ decrease, then we can consider the following cases:

\begin{itemize}
    \item if $a_i = \langle\boldsymbol{q}_{+},\boldsymbol{k}_{+} \rangle$ or $a_i = \langle\boldsymbol q_i,\boldsymbol{k}_{+} \rangle$, $\text{softmax}(a_i)$ has a constant upper bound 1, $\left(1 - \text{softmax}(a_i)\right)$ decreases as $\text{softmax}(a_i)$ increases. So the upper bound of $\text{softmax}(a_i) \cdot \left(1 - \text{softmax}(a_i)\right)$ decreases as $\text{softmax}(a_i)$ increases.
    \item if $a_i = \langle\boldsymbol{q}_{+},\boldsymbol k_j \rangle$ or $a_i = \langle\boldsymbol q_i,\boldsymbol k_j \rangle$, $\left(1 - \text{softmax}(a_i)\right)$ has a constant upper bound 1. So the upper bound of $\text{softmax}(a_i) \cdot \left(1 - \text{softmax}(a_i)\right)$ decreases as $\text{softmax}(a_i)$ decreases.
    \item if $a_j = \langle\boldsymbol{q}_{+},\boldsymbol k_j \rangle$ or $a_j = \langle\boldsymbol q_i,\boldsymbol k_j \rangle$, $\text{softmax}(a_i)$ has a constant upper bound 1. So the upper bound of $\text{softmax}(a_i) \cdot \text{softmax}(a_j)$ decreases as $\text{softmax}(a_j)$ decreases.
\end{itemize}

Based on the above cases, we first study the bounds of the following terms
\begin{itemize}
    \item $1 - \text{softmax}(\langle\boldsymbol{q}_+^{(s)},\boldsymbol{k}_+^{(s)} \rangle)$
    \item $1 - \text{softmax}(\langle\boldsymbol{q}_{n,i}^{(s)},\boldsymbol{k}_+^{(s)} \rangle)$
    \item $\text{softmax}(\langle\boldsymbol{q}_+^{(s)},\boldsymbol{k}_{n,j}^{(s)} \rangle)$
    \item $\text{softmax}(\langle\boldsymbol{q}_{n,i}^{(s)},\boldsymbol{k}_{n,j}^{(s)} \rangle)$
\end{itemize}

Note that $1 - \text{softmax}(\langle\boldsymbol{q}_+^{(s)},\boldsymbol{k}_+^{(s)} \rangle) = \sum_j \text{softmax}(\langle\boldsymbol{q}_+^{(s)},\boldsymbol{k}_{n,j}^{(s)} \rangle)$ and $1 - \text{softmax}(\langle\boldsymbol{q}_{n,i}^{(s)},\boldsymbol{k}_+^{(s)} \rangle) = \sum_j \text{softmax}(\langle\boldsymbol{q}_{n,i}^{(s)},\boldsymbol{k}_{n,j}^{(s)} \rangle)$, we only need to give the upper bounds for $\text{softmax}(\langle\boldsymbol{q}_+^{(s)},\boldsymbol{k}_{n,j}^{(s)} \rangle)$ and $\text{softmax}(\langle\boldsymbol{q}_{n,i}^{(s)},\boldsymbol{k}_{n,j}^{(s)} \rangle)$.

Assume that the propositions $\mathcal{B}(T_1), \ldots, \mathcal{B}(s), \mathcal{D}(T_1), \ldots, \mathcal{D}(s-1)$ hold $(s \in [T_1, t])$, we have
\begin{equation}
    |V_\pm^{(s)}|, |V_{n,i}^{(s)}| \leq O(d_h^{-\frac{1}{4}}) + \eta C_4 (\|\boldsymbol{\mu}\|_2+\tau)^2 \|\boldsymbol{w}_O\|_2^2 (s - T_1),
\end{equation}
\begin{equation}\label{eq:251}
    \Lambda_{n,\pm,j}^{(s)} \geq \log \left( \exp(\Lambda_{n,\pm,j}^{(T_1)}) + \frac{\eta^2 C_8  (\|\boldsymbol{\mu}\|_2-\tau)^2\|\boldsymbol{\mu}\|_2^2\|\boldsymbol{w}_O\|_2^2 d_h^{\frac{1}{2}}}{N \left( \log(6N^2M^2/\delta) \right)^2} \cdot (s - T_1)(s - T_1 - 1) \right),
\end{equation}
{\small\begin{equation}
    \Lambda_{n,i,\pm,j}^{(s)} \geq \log \left( \exp(\Lambda_{n,i,\pm,j}^{(T_1)}) + \frac{\eta^2 C_8 (\sigma_P^2 d +\sigma_p\tau\sqrt{2\log(4NM/\delta)}+\tau^2)\|\boldsymbol{\mu}\|_2^2 \|\boldsymbol{w}_O\|_2^2 d_h^{\frac{1}{2}}}{N \left( \log(6N^2M^2/\delta) \right)^2} \cdot (s - T_1)(s - T_1 - 1) \right),
\end{equation}}
for $i, j \in [M] \setminus \{1\}, n \in [N], s \in [T_1, t]$.

Then we have
\begin{equation}
\begin{aligned}
    &\frac{\exp(\langle\boldsymbol q_\pm^{(s)},\boldsymbol{k}_{n,j}^{(s)} \rangle)}
    {\exp(\langle\boldsymbol q_\pm^{(s)},\boldsymbol k_\pm^{(s)} \rangle) + \sum_{j'=2}^M \exp(\langle\boldsymbol q_\pm^{(s)},\boldsymbol{k}_{n,j'}^{(s)} \rangle)} \\
    &\quad \leq \frac{\exp(\langle\boldsymbol q_\pm^{(s)},\boldsymbol{k}_{n,j}^{(s)} \rangle)}
    {C \exp(\langle\boldsymbol q_\pm^{(s)},\boldsymbol k_\pm^{(s)} \rangle)} \\
    &\quad = \frac{1}{C \exp(\Lambda_{n,\pm,j}^{(s)})} \\
    &\quad \leq \frac{1}{C \exp(\Lambda_{n,\pm,j}^{(T_1)}) 
        + \dfrac{\eta^2 C_8 C (\|\boldsymbol{\mu}\|_2-\tau)^2\|\boldsymbol{\mu}\|_2^2 \|\boldsymbol{w}_O\|_2^2 d_h^{\tfrac{1}{2}}}
        {N \big( \log(6N^2M^2/\delta) \big)^2} \cdot (s - T_1)(s - T_1 - 1)} \\
    &\quad \leq \frac{1}{C_{13} 
        + \dfrac{\eta^2 C_{13}(\|\boldsymbol{\mu}\|_2-\tau)^2 \|\boldsymbol{\mu}\|_2^2 \|\boldsymbol{w}_O\|_2^2 d_h^{\tfrac{1}{2}}}
        {N \big( \log(6N^2M^2/\delta) \big)^2} \cdot (s - T_1)(s - T_1 - 1)}.
\end{aligned}
\end{equation}
For the first inequality, by $\langle\boldsymbol{q}^{(T_1)},\boldsymbol{k}^{(T_1)} \rangle = o(1)$ and the monotonicity of $\langle\boldsymbol{q}^{(s)},\boldsymbol{k}^{(s)} \rangle$ ($\langle\boldsymbol{q}^{(s)},\boldsymbol{k}^{(s)} \rangle$ is increasing and $\langle\boldsymbol{q}_\pm^{(s)},\boldsymbol{k}_{n,j}^{(s)} \rangle$ is decreasing), there exists a constant $C$ such that 
\(
C \exp(\langle\boldsymbol{q}_\pm^{(s)},\boldsymbol{k}_\pm^{(s)} \rangle) \geq \exp(\langle\boldsymbol{q}_\pm^{(s)},\boldsymbol{k}_\pm^{(s)} \rangle) + \sum_{j'=2}^M \exp(\langle\boldsymbol{q}_\pm^{(s)},\boldsymbol{k}_{n,j'}^{(s)} \rangle).
\). The second inequality is by plugging (\ref{eq:251}). For the last inequality, by $\Lambda_{n,\pm,j}^{(T_1)} = o(1)$, there exist a constant $C_{13}$ such that $C_{13} \leq C \exp(\Lambda_{n,\pm,j}^{(T_1)})$ and $C_{13} \leq C_8 C$. Similarly, we have
\begin{equation}
\begin{aligned}
    & \frac{\exp(\langle\boldsymbol{q}_{n,i}^{(s)},\boldsymbol{k}_{n,j}^{(s)} \rangle)}{\exp(\langle\boldsymbol{q}_{n,i}^{(s)},\boldsymbol{k}_+^{(s)} \rangle) + \sum_{j'=2}^M \exp(\langle\boldsymbol{q}_{n,i}^{(s)},\boldsymbol{k}_{n,j'}^{(s)} \rangle)} \\
    & \leq \frac{1}{C \exp(\Lambda_{n,i,+,j}^{(s)})} \\
    & \leq \frac{1}{C_{13} + \frac{\eta^2 C_{13} (\sigma_P^2 d+\sigma_p\tau\sqrt{2\log(4NM/\delta)}+\tau^2) \|\boldsymbol{\mu}\|_2^2 \|\boldsymbol{w}_O\|_2^2 d_h^{1/2}}{N \left( \log(6N^2M^2/\delta) \right)^2} \cdot (s - T_1)(s - T_1 - 1)}.
\end{aligned}
\end{equation}

Plugging above equations into the expressions of $\alpha, \beta$ we have

\begin{equation}\label{eq:255}
\begin{aligned}
    |\alpha_{+,+}^{(s)}| &\leq \Bigg|\frac{\eta}{NM} \sum_{n \in S_+} -\widetilde{\ell}_n'(\theta) \langle \widetilde{\boldsymbol{\mu}}_+,\widetilde{\boldsymbol{\mu}}_+^{(s)} \rangle \\
&\quad \cdot \left( V_{+}^{(s)} \left( \frac{\exp(\langle\boldsymbol{q}_{+}^{(s)},\boldsymbol{k}_{+}^{(s)} \rangle)}{\exp(\langle\boldsymbol{q}_{+}^{(s)},\boldsymbol{k}_{+}^{(s)} \rangle) + \sum_{j=2}^M \exp(\langle\boldsymbol{q}_{+}^{(s)},\boldsymbol{k}_{n,j}^{(s)} \rangle)} \right. \right. \\
&\quad \left. \left. - \left( \frac{\exp(\langle\boldsymbol{q}_{+}^{(s)},\boldsymbol{k}_{+}^{(s)} \rangle)}{\exp(\langle\boldsymbol{q}_{+}^{(s)},\boldsymbol{k}_{+}^{(s)} \rangle) + \sum_{j=2}^M \exp(\langle\boldsymbol{q}_{+}^{(s)},\boldsymbol{k}_{n,j}^{(s)} \rangle)} \right)^2 \right) \right. \\
&\quad \left. - \sum_{i=2}^M \left( V_{n,i}^{(s)} \cdot \frac{\exp(\langle\boldsymbol{q}_{+}^{(s)},\boldsymbol{k}_{+}^{(s)} \rangle)}{\exp(\langle\boldsymbol{q}_{+}^{(s)},\boldsymbol{k}_{+}^{(s)} \rangle) + \sum_{j=2}^M \exp(\langle\boldsymbol{q}_{+}^{(s)},\boldsymbol{k}_{n,j}^{(s)} \rangle)} \right. \right. \\
&\quad \left. \left. \cdot \frac{\exp(\langle\boldsymbol{q}_{+}^{(s)},\boldsymbol{k}_{n,i}^{(s)} \rangle)}{\exp(\langle\boldsymbol{q}_{+}^{(s)},\boldsymbol{k}_{+}^{(s)} \rangle) + \sum_{j=2}^M \exp(\langle\boldsymbol{q}_{+}^{(s)},\boldsymbol{k}_{n,j}^{(s)} \rangle)} \right) \right) \\
&+\sum_{i=2}^M\langle\widetilde{\boldsymbol{\mu}}_+,\widetilde{\boldsymbol{\xi}}_{n,i}^{(s)}\rangle \cdot \left( V_+^{(s)} \left( \frac{\exp(\langle\boldsymbol{q}_{n,i}^{(s)},\boldsymbol{k}_+^{(s)} \rangle)}{\exp(\langle\boldsymbol{q}_{n,i}^{(s)},\boldsymbol{k}_+^{(s)} \rangle) + \sum_{j=2}^{M} \exp(\langle\boldsymbol{q}_{n,i}^{(s)},\boldsymbol{k}_{n,j}^{(s)} \rangle)} \right. \right. \\
& \left. - \left( \frac{\exp(\langle\boldsymbol{q}_{n,i}^{(s)},\boldsymbol{k}_+^{(s)} \rangle)}{\exp(\langle\boldsymbol{q}_{n,i}^{(s)},\boldsymbol{k}_+^{(s)} \rangle) + \sum_{j=2}^{M} \exp(\langle\boldsymbol{q}_{n,i}^{(s)},\boldsymbol{k}_{n,j}^{(s)} \rangle)} \right)^2 \right) \\
& - \sum_{k=2}^{M} \left( V_{n,i}^{(s)} \cdot \frac{\exp(\langle\boldsymbol{q}_{n,i}^{(s)},\boldsymbol{k}_+^{(s)} \rangle)}{\exp(\langle\boldsymbol{q}_{n,i}^{(s)},\boldsymbol{k}_+^{(s)} \rangle) + \sum_{j=2}^{M} \exp(\langle\boldsymbol{q}_{n,i}^{(s)},\boldsymbol{k}_{n,j}^{(s)} \rangle)} \right. \\
& \left. \cdot \frac{\exp(\langle\boldsymbol{q}_{n,i}^{(s)},\boldsymbol{k}_{n,k}^{(s)} \rangle)}{\exp(\langle\boldsymbol{q}_{n,i}^{(s)},\boldsymbol{k}_+^{(s)} \rangle) + \sum_{j=2}^{M} \exp(\langle\boldsymbol{q}_{n,i}^{(s)},\boldsymbol{k}_{n,j}^{(s)} \rangle)} \right)
\Bigg| \\
    &\leq \frac{\eta (\|\boldsymbol{\mu}\|_2+\tau)^2 \cdot 3N}{NM} \cdot \left( O(d_h^{-\frac{1}{4}}) + \eta C_4 (\|\boldsymbol{\mu}\|_2+\tau)^2 \|\boldsymbol{w}_O\|_2^2 (s - T_1) \right) \\
    &\quad \cdot O\left( \frac{1}{C_{13} + \frac{\eta^2 C_{13} ((\|\boldsymbol{\mu}\|_2+\tau)^2-\tau)^2\|\boldsymbol{\mu}\|_2^2 \|\boldsymbol{w}_O\|_2^2 d_h^{\frac{1}{2}}}{N \left( \log(6N^2M^2/\delta) \right)^2} \cdot (s - T_1)(s - T_1 - 1)} \right) \\
    &= O\left( \frac{\eta \|\boldsymbol{\mu}\|_2^2 d_h^{-\frac{1}{4}}}{C_{13} + \frac{\eta^2 C_{13} (\|\boldsymbol{\mu}\|_2-\tau)^2\|\boldsymbol{\mu}\|_2^2 \|\boldsymbol{w}_O\|_2^2 d_h^{\frac{1}{2}}}{N \left( \log(6N^2M^2/\delta) \right)^2} \cdot (s - T_1)(s - T_1 - 1)} \right) \\
    &\quad + O\left( \frac{\eta^2 (\|\boldsymbol{\mu}\|_2+\tau)^4 \|\boldsymbol{w}_O\|_2^2 (s - T_1)}{C_{13} + \frac{\eta^2 C_{13} (\|\boldsymbol{\mu}\|_2-\tau)^2\|\boldsymbol{\mu}\|_2^2 \|\boldsymbol{w}_O\|_2^2 d_h^{\frac{1}{2}}}{N \left( \log(6N^2M^2/\delta) \right)^2} \cdot (s - T_1)(s - T_1 - 1)} \right) \\
    &= O\left( \eta (\|\boldsymbol{\mu}\|_2+\tau)^2 d_h^{-\frac{1}{4}} \right) + O\left( \frac{\eta^2 (\|\boldsymbol{\mu}\|_2+\tau)^4 \|\boldsymbol{w}_O\|_2^2 (s - T_1)}{C_{13} + \frac{\eta^2 C_{13} (\|\boldsymbol{\mu}\|_2-\tau)^2\|\boldsymbol{\mu}\|_2^2 \|\boldsymbol{w}_O\|_2^2 d_h^{\frac{1}{2}}}{N \left( \log(6N^2M^2/\delta) \right)^2} \cdot (s - T_1)(s - T_1 - 1)} \right).
\end{aligned}
\end{equation}

where the third equality is by
$
    \frac{\eta^2 C_{13} (\|\boldsymbol{\mu}\|_2-\tau)^2\|\boldsymbol{\mu}\|_2^2 \|\boldsymbol{w}_O\|_2^2 d_h^{\frac{1}{2}}}{N \left( \log(6N^2M^2/\delta) \right)^2} \cdot (s - T_1)(s - T_1 - 1) \geq 0 \text{ for } s \in [T_1, t].
$
Next, we give an upper bound for
$
    \frac{\eta^2 (\|\boldsymbol{\mu}\|_2+\tau)^4 \|\boldsymbol{w}_O\|_2^2 (s - T_1)}{C_{13} + \frac{\eta^2 C_{13} (\|\boldsymbol{\mu}\|_2-\tau)^2\|\boldsymbol{\mu}\|_2^2 \|\boldsymbol{w}_O\|_2^2 d_h^{\frac{1}{2}}}{N \left( \log(6N^2M^2/\delta) \right)^2} \cdot (s - T_1)(s - T_1 - 1)}
$
as follows:
\begin{equation}
\begin{aligned}
    & \frac{\eta^2 (\|\boldsymbol{\mu}\|_2+\tau)^4 \|\boldsymbol{w}_O\|_2^2 (s - T_1)}{C_{13} + \frac{\eta^2 C_{13} (\|\boldsymbol{\mu}\|_2-\tau)^2\|\boldsymbol{\mu}\|_2 \|\boldsymbol{w}_O\|_2^2 d_h^{\frac{1}{2}}}{N \left( \log(6N^2M^2/\delta) \right)^2} \cdot (s - T_1)(s - T_1 - 1)} \\
    & = \frac{\eta^2 (\|\boldsymbol{\mu}\|_2+\tau)^4 \|\boldsymbol{w}_O\|_2^2}{\frac{C_{13}}{(s - T_1)} + \frac{\eta^2 C_{13} (\|\boldsymbol{\mu}\|_2-\tau)^2\|\boldsymbol{\mu}\|_2^2 \|\boldsymbol{w}_O\|_2^2 d_h^{\frac{1}{2}}}{N \left( \log(6N^2M^2/\delta) \right)^2} \cdot (s - T_1) - \frac{\eta^2 C_{13} (\|\boldsymbol{\mu}\|_2-\tau)^2\|\boldsymbol{\mu}\|_2^2 \|\boldsymbol{w}_O\|_2^2 d_h^{\frac{1}{2}}}{N \left( \log(6N^2M^2/\delta) \right)^2}} \\
    & \leq \frac{\eta^2 (\|\boldsymbol{\mu}\|_2+\tau)^4 \|\boldsymbol{w}_O\|_2^2}{2 \sqrt{\frac{\eta^2 C_{13}^2 (\|\boldsymbol{\mu}\|_2-\tau)^2\|\boldsymbol{\mu}\|_2^2 \|\boldsymbol{w}_O\|_2^2 d_h^{\frac{1}{2}}}{N \left( \log(6N^2M^2/\delta) \right)^2}} - \frac{\eta^2 C_{13} (\|\boldsymbol{\mu}\|_2-\tau)^2\|\boldsymbol{\mu}\|_2^2 \|\boldsymbol{w}_O\|_2^2 d_h^{\frac{1}{2}}}{N \left( \log(6N^2M^2/\delta) \right)^2}} \\
    & = \frac{\eta^2 (\|\boldsymbol{\mu}\|_2+\tau)^4 \|\boldsymbol{w}_O\|_2^2}{\frac{2 \eta C_{13} (\|\boldsymbol{\mu}\|_2-\tau)\|\boldsymbol{\mu}\|_2 \|\boldsymbol{w}_O\|_2^2 d_h^{\frac{1}{4}}}{N^{\frac{1}{2}} \left( \log(6N^2M^2/\delta) \right)} - \frac{\eta^2 C_{13} (\|\boldsymbol{\mu}\|_2-\tau)^2\|\boldsymbol{\mu}\|_2^2 \|\boldsymbol{w}_O\|_2^2 d_h^{\frac{1}{2}}}{N \left( \log(6N^2M^2/\delta) \right)^2}} \\
    & = \frac{\eta^2 (\|\boldsymbol{\mu}\|_2+\tau)^4 \|\boldsymbol{w}_O\|_2^2}{\Theta \left( \frac{\eta (\|\boldsymbol{\mu}\|_2-\tau)\|\boldsymbol{\mu}\|_2 \|\boldsymbol{w}_O\|_2 d_h^{\frac{1}{4}}}{N^{\frac{1}{2}} \log(6N^2M^2/\delta)} \right)} \\
    & = O \left( \eta (\|\boldsymbol{\mu}\|_2+\tau)^2 N^{\frac{1}{2}} d_h^{-\frac{1}{4}} \log(6N^2M^2/\delta) \right),
\end{aligned}
\end{equation}
where the inequality is by $ax + \frac{b}{x} \geq 2\sqrt{ab}$ for $x > 0$, the third equality is by absorbing the lower order term $\frac{\eta^2 C_{13} (\|\boldsymbol{\mu}\|_2+\tau)^4 \|\boldsymbol{w}_O\|_2^2 d_h^{\frac{1}{2}}}{N \left( \log(6N^2M^2/\delta) \right)^2}$, the last equality is by $\|\boldsymbol{w}_O\|_2 = \Theta(1)$. Plugging this into (\ref{eq:255}) and get

\begin{equation}
\begin{aligned}
    |\alpha_{+,+}^{(s)}| &= O\left(\eta (\|\boldsymbol{\mu}\|_2+\tau)^2 d_h^{-\frac{1}{4}}\right) + O\left(\eta (\|\boldsymbol{\mu}\|_2+\tau)^2 N^{\frac{1}{2}} d_h^{-\frac{1}{4}} \log(6N^2M^2/\delta)\right) \\
    &= O\left(\eta (\|\boldsymbol{\mu}\|_2+\tau)^2 N^{\frac{1}{2}} d_h^{-\frac{1}{4}} \log(6N^2M^2/\delta)\right).
\end{aligned}
\end{equation}

Similarly, we have
\begin{equation}
\begin{aligned}
    &|\alpha_{-,-}^{(s)}|, |\beta_{+,+}^{(s)}|, |\beta_{-,-}^{(s)}| = O\left(\eta (\|\boldsymbol{\mu}\|_2+\tau)^2 N^{\frac{1}{2}} d_h^{-\frac{1}{4}} \log(6N^2M^2/\delta)\right), \\
    &|\alpha_{n,+,i}^{(s)}|, |\alpha_{n,-,i}^{(s)}| = O\left(\eta (\|\boldsymbol{\mu}\|_2+\tau)^2 N^{-\frac{1}{2}} d_h^{-\frac{1}{4}} \log(6N^2M^2/\delta)\right), \\
    &|\beta_{n,+,i}^{(s)}|, |\beta_{n,-,i}^{(s)}| = O\left(\frac{\eta (\|\boldsymbol{\mu}\|_2+\tau)^2\|\boldsymbol{\mu}\|_2 \log(6N^2M^2/\delta)}{\sigma_p d^{\frac{1}{2}} N^{\frac{1}{2}} d_h^{\frac{1}{4}}}\right) =\\
    &\qquad\qquad \qquad O\left(\eta (\|\boldsymbol{\mu}\|_2+\tau)^2 \cdot \text{SNR} \cdot N^{-\frac{1}{2}} d_h^{-\frac{1}{4}} \log(6N^2M^2/\delta)\right),
\end{aligned}
\end{equation}
for $i \in [M] \setminus \{1\}, n \in S_+$.

\begin{equation}
\begin{aligned}
    |\alpha_{n,i,+}^{(s)}|, |\alpha_{n,i,-}^{(s)}| &= O\left(\frac{\eta \|\boldsymbol{\mu}\|_2 \sigma_p d^{\frac{1}{2}} \log(6N^2M^2/\delta)}{N^{\frac{1}{2}} d_h^{\frac{1}{4}}}\right) = O\left(\eta (\|\boldsymbol{\mu}\|_2+\tau)^2 d_h^{-\frac{1}{4}} \log(6N^2M^2/\delta)\right),
\end{aligned}
\end{equation}
for $i \in [M] \setminus \{1\}, n \in S_+$, the last equality is by $N \cdot \text{SNR}^2 \geq \Omega(1)$.

\begin{equation}
\begin{aligned}
    |\beta_{n,i,+}^{(s)}|, |\beta_{n,i,-}^{(s)}| &= O\left(\frac{\eta \sigma_p^2 d \log(6N^2M^2/\delta)}{N^{\frac{1}{2}} d_h^{\frac{1}{4}}}\right) = O\left(\eta (\|\boldsymbol{\mu}\|_2+\tau)^2 N^{\frac{1}{2}} d_h^{-\frac{1}{4}} \log(6N^2M^2/\delta)\right),
\end{aligned}
\end{equation}

for $i \in [M] \setminus \{1\}, n \in S_+$, the last equality is by $N \cdot \text{SNR}^2 \geq \Omega(1)$.
\begin{equation}
\begin{aligned}
    |\alpha_{n,i,n,j}^{(s)}|, |\beta_{n,j,n,i}^{(s)}| &= O\left(\frac{\eta (\|\boldsymbol{\mu}\|_2+\tau) \sigma_p d^{\frac{1}{2}} \log(6N^2M^2/\delta)}{N^{\frac{1}{2}} d_h^{\frac{1}{4}}}\right) \\
    &= O\left(\eta (\|\boldsymbol{\mu}\|_2+\tau)^2 d_h^{-\frac{1}{4}} \log(6N^2M^2/\delta)\right),
\end{aligned}
\end{equation}

for $i, j \in [M] \setminus \{1\}, n \in [N]$, the last equality is by $N \cdot \text{SNR}^2 \geq \Omega(1)$.
\begin{equation}
\begin{aligned}
    |\alpha_{n,i,n',j}^{(s)}|, |\beta_{n,j,n',i}^{(s)}| &= O\left(\frac{\eta (\|\boldsymbol{\mu}\|_2+\tau) \sigma_p \log(6N^2M^2/\delta) \log(6N^2M^2/\delta)}{N^{\frac{1}{2}} d_h^{\frac{1}{4}}}\right) \\
    &= O\left(\eta (\|\boldsymbol{\mu}\|_2+\tau)^2 d^{-\frac{1}{2}} d_h^{-\frac{1}{4}} (\log(6N^2M^2/\delta))^2\right),
\end{aligned}
\end{equation}

for $i, j \in [M] \setminus \{1\}, n, n' \in [N], n \neq n'$, the last equality is by $N \cdot \text{SNR}^2 \geq \Omega(1)$. Taking a summation we obtain that
\begin{equation}
\begin{aligned}
    \sum_{s=T_1}^t |\alpha_{+,+}^{(s)}| &= O\left(\eta (\|\boldsymbol{\mu}\|_2+\tau)^2 \|\boldsymbol{w}_O\|_2^2 \log(6N^2M^2/\delta)\right) \cdot O\left(\eta (\|\boldsymbol{\mu}\|_2+\tau)^2 N^{\frac{1}{2}} d_h^{-\frac{1}{4}} \log(6N^2M^2/\delta)\right) \\
    &= O\left(N^{\frac{1}{2}} d_h^{-\frac{1}{4}}\right),
\end{aligned}
\end{equation}
where the last equality is by $\|\boldsymbol{w}_O\| = \Theta(1)$. Similarly, we have
\begin{equation}
\begin{aligned}
    \sum_{s=T_1}^t |\alpha_{-,-}^{(s)}|, \sum_{s=T_1}^t |\beta_{+,+}^{(s)}|, \sum_{s=T_1}^t |\beta_{-,-}^{(s)}|, \sum_{s=T_1}^t |\beta_{n,i,+}^{(s)}|, \sum_{s=T_1}^t |\beta_{n,i,-}^{(s)}| &= O\left(N^{\frac{1}{2}} d_h^{-\frac{1}{4}}\right),
\end{aligned}
\end{equation}
for $i \in [M] \setminus \{1\}, n \in S_+$.
\begin{equation}
\begin{aligned}
    \sum_{s=T_1}^t |\alpha_{n,+,i}^{(s)}|, \sum_{s=T_1}^t |\alpha_{n,-,i}^{(s)}| &= O\left(N^{-\frac{1}{2}} d_h^{-\frac{1}{4}}\right),
\end{aligned}
\end{equation}

for $i \in [M] \setminus \{1\}, n \in S_+$.
\begin{equation}
\begin{aligned}
    \sum_{s=T_1}^{t} |\beta_{n,+,i}^{(s)}|, \sum_{s=T_1}^{t} |\beta_{n,-,i}^{(s)}| &= O\left(\text{SNR} \cdot N^{-\frac{1}{2}} d_h^{-\frac{1}{4}}\right)
\end{aligned}
\end{equation}

for $i \in [M] \setminus \{1\}, n \in S_+$.
\begin{equation}
\begin{aligned}
    \sum_{s=T_1}^{t} |\alpha_{n,i,+}^{(s)}|, \sum_{s=T_1}^{t} |\alpha_{n,i,-}^{(s)}|, \sum_{s=T_1}^{t} |\alpha_{n,i,n,j}^{(s)}|, \sum_{s=T_1}^{t} |\beta_{n,j,n,i}^{(s)}| &= O\left(d_h^{-\frac{1}{4}}\right)
\end{aligned}
\end{equation}

for $i, j \in [M] \setminus \{1\}, n \in S_+$.
\begin{equation}
\begin{aligned}
    \sum_{s=T_1}^{t} |\alpha_{n,i,n',j}^{(s)}|, \sum_{s=T_1}^{t} |\beta_{n,j,n',i}^{(s)}| &= O\left(d^{-\frac{1}{2}} d_h^{-\frac{1}{4}} \log(6N^2M^2/\delta)\right)
\end{aligned}
\end{equation}

for $i, j \in [M] \setminus \{1\}, n, n' \in [N], n \neq n'$.

With these sums of $\alpha$ and $\beta$ above, we can easily prove Claim 3 and Claim 4.

\subsection{Proof of Claim 3}\label{sec:f8}

In this subsection, we assume that $\mathcal{E}(T_1), \ldots, \mathcal{E}(t)$ hold, and then proof that $\mathcal{C}(t+1)$ is true with the result of \ref{sec:f7}.
\begin{equation}
\begin{aligned}
    \left| \|\boldsymbol{q}_+^{(t+1)}\|_2^2 - \|\boldsymbol{q}_+^{(T_1)}\|_2^2 \right| &\leq \sum_{s=T_1}^{t} \left| \|\boldsymbol{q}_+^{(s+1)}\|_2^2 - \|\boldsymbol{q}_+^{(s)}\|_2^2 \right|
\end{aligned}
\end{equation}

\begin{equation}
\begin{aligned}
    &\leq \sum_{s=T_1}^{t} \left| 2\alpha_{+,+}^{(s)} \langle\boldsymbol{q}_+^{(s)},\boldsymbol{k}_+^{(s)} \rangle + 2 \sum_{n \in S_+} \sum_{i=2}^{M} \alpha_{n,+,i}^{(s)} \langle\boldsymbol{q}_+^{(s)},\boldsymbol{k}_{n,i}^{(s)} \rangle \right. \\
    &\quad + \left. ( \alpha_{+,+}^{(s)}\boldsymbol{k}_+^{(s)} + \sum_{n \in S_+} \sum_{i=2}^{M} \alpha_{n,+,i}^{(s)}\boldsymbol{k}_{n,i}^{(s)} ) \cdot ( \alpha_{+,+}^{(s)}\boldsymbol{k}_+^{(s)\top} + \sum_{n \in S_+} \sum_{i=2}^{M} \alpha_{n,+,i}^{(s)}\boldsymbol{k}_{n,i}^{(s)\top} ) \right| \\
    &\leq 2 \sum_{s=T_1}^{t} |\alpha_{+,+}^{(s)}| |\langle\boldsymbol{q}_+^{(s)},\boldsymbol{k}_+^{(s)} \rangle| + 2 \sum_{n \in S_+} \sum_{i=2}^{M} \sum_{s=T_1}^{t} |\alpha_{n,+,i}^{(s)}| |\langle\boldsymbol{q}_+^{(s)},\boldsymbol{k}_{n,i}^{(s)} \rangle| \\
    &\quad + \{\text{lower order term\}} \\
    &= O\left(N^{\frac{1}{2}} d_h^{-\frac{1}{4}}\right) \cdot O(\log(d_h^{\frac{1}{2}})) + N \cdot M \cdot O\left(N^{-\frac{1}{2}} d_h^{-\frac{1}{4}}\right) \cdot O(\log(d_h^{\frac{1}{2}})) \\
    &= O\left(N^{\frac{1}{2}} d_h^{-\frac{1}{4}} \log(d_h^{\frac{1}{2}})\right)
\end{aligned}
\end{equation}

where the first inequality is by triangle inequality. Since $\sigma_h^2 \geq (\max\{\sigma_p^2 d, \|\boldsymbol{\mu}\|_2^2\})^{-1} \cdot d_h^{-\frac{1}{2}} (\log(6N^2M^2/\delta))^{-2}$ and $d_h = \widetilde{\Omega}(\max\{\text{SNR}^4, \text{SNR}^{-4}\}N^2 \epsilon^{-2})$, we have $N^{\frac{1}{2}} d_h^{-\frac{1}{4}} \log(d_h^{\frac{1}{2}}) = o(\|\boldsymbol{\mu}\|_2^2 \sigma_h^2 d_h)$, so $\|\boldsymbol{q}_+^{(t+1)}\|_2^2 = \|\boldsymbol{q}_+^{(T_1)}\|_2^2 + o(\|\boldsymbol{\mu}\|_2^2 \sigma_h^2 d_h) = \Theta(\|\boldsymbol{\mu}\|_2^2 \sigma_h^2 d_h)$. Similarly, we have
\begin{equation}
\begin{aligned}
    \left|\|\boldsymbol{q}_-^{(t+1)}\|_2^2 - \|\boldsymbol{q}_-^{(T_1)}\|_2^2\right| &= O\left(N^{\frac{1}{2}} d_h^{-\frac{1}{4}} \log(d_h^{\frac{1}{2}})\right) = o(\|\boldsymbol{\mu}\|_2^2 \sigma_h^2 d_h), \\
    \left|\|\boldsymbol{k}_+^{(t+1)}\|_2^2 - \|\boldsymbol{k}_+^{(T_1)}\|_2^2\right| &= O\left((1 + \text{SNR}) N^{\frac{1}{2}} d_h^{-\frac{1}{4}} \log(d_h^{\frac{1}{2}})\right) = o(\|\boldsymbol{\mu}\|_2^2 \sigma_h^2 d_h), \\
    \left|\|\boldsymbol{q}_{n,i}^{(t+1)}\|_2^2 - \|\boldsymbol{q}_{n,i}^{(T_1)}\|_2^2 \right|&= O\left(d_h^{-\frac{1}{4}} \log(d_h^{\frac{1}{2}})\right) = o(\sigma_p^2 \sigma_h^2 d_h d_n), \\
    \left|\|\boldsymbol{k}_{n,i}^{(t+1)}\|_2^2 - \|\boldsymbol{k}_{n,i}^{(T_1)}\|_2^2 \right|&= O\left(N^{\frac{1}{2}} d_h^{-\frac{1}{4}} \log(d_h^{\frac{1}{2}})\right) = o(\sigma_p^2 \sigma_h^2 d_h d_n),
\end{aligned}
\end{equation}

so we have
\begin{equation}
\begin{aligned}
    \|q_\pm^{(t+1)}\|_2^2, \|k_\pm^{(t+1)}\|_2^2 &= \Theta(\|\boldsymbol{\mu}\|_2^2 \sigma_h^2 d_h), \\
    \|\boldsymbol{q}_{n,i}^{(t+1)}\|_2^2, \|\boldsymbol{k}_{n,i}^{(t+1)}\|_2^2 &= \Theta(\sigma_p^2 \sigma_h^2 d d_h)
\end{aligned}
\end{equation}
for $i \in [M] \setminus \{1\}, n \in [N]$.

\begin{equation}
\begin{aligned}
    |\langle\boldsymbol{q}_+^{(t+1)},\boldsymbol{q}_-^{(t+1)} \rangle| &\leq |\langle\boldsymbol{q}_+^{(T_1)},\boldsymbol{q}_-^{(T_1)} \rangle| + \sum_{s=T_1}^{t} |\langle\boldsymbol{q}_+^{(s+1)},\boldsymbol{q}_-^{(s+1)} \rangle - \langle\boldsymbol{q}_+^{(s)},\boldsymbol{q}_-^{(s)} \rangle| \\
    &\leq |\langle\boldsymbol{q}_+^{(T_1)},\boldsymbol{q}_-^{(T_1)} \rangle| \\
    &\quad + \sum_{s=T_1}^{t} \left| \alpha_{+,+}^{(s)} \langle\boldsymbol{q}_+^{(s)},\boldsymbol{k}_+^{(s)} \rangle + \sum_{n \in S_+} \sum_{i=2}^{M} \alpha_{n,+,i}^{(s)} \langle\boldsymbol{q}_+^{(s)},\boldsymbol{k}_{n,i}^{(s)} \rangle \right. \\
    &\quad + \left. \alpha_{-,-}^{(s)} \langle\boldsymbol{q}_-^{(s)},\boldsymbol{k}_-^{(s)} \rangle + \sum_{n \in S_-} \sum_{i=2}^{M} \alpha_{n,-,i}^{(s)} \langle\boldsymbol{q}_+^{(s)},\boldsymbol{k}_{n,i}^{(s)} \rangle \right. \\
    &\quad + \left( \alpha_{+,+}^{(s)}\boldsymbol{k}_+^{(s)} + \sum_{n \in S_+} \sum_{i=2}^{M} \alpha_{n,+,i}^{(s)}\boldsymbol{k}_{n,i}^{(s)} \right) \\
    &\quad \cdot \left. 
   \left( \alpha_{-,-}^{(s)}\boldsymbol{k}_-^{(s)} 
   + \sum_{n \in S_-} \sum_{i=2}^{M} \alpha_{n,-,i}^{(s)}\boldsymbol{k}_{n,i}^{(s)} \right)^\top 
   \right|
\end{aligned}
\end{equation}

\begin{equation}
\begin{aligned}
    &\leq |\langle\boldsymbol{q}_+^{(T_1)},\boldsymbol{q}_-^{(T_1)} \rangle| \\
    &\quad + \sum_{s=T_1}^{t} | \alpha_{+,+}^{(s)} ||\langle\boldsymbol{q}_+^{(s)},\boldsymbol{k}_+^{(s)} \rangle| + \sum_{n \in S_+} \sum_{i=2}^{M} \sum_{s=T_1}^t|\alpha_{n,+,i}^{(s)} ||\langle\boldsymbol{q}_+^{(s)},\boldsymbol{k}_{n,i}^{(s)} \rangle|  \\
    &\quad + \sum_{s=T_1}^t | \alpha_{-,-}^{(s)} ||\langle\boldsymbol{q}_-^{(s)},\boldsymbol{k}_-^{(s)} \rangle| + \sum_{n \in S_-} \sum_{i=2}^{M}\sum_{s=T_1}^t |\alpha_{n,-,i}^{(s)} |\langle\boldsymbol{q}_+^{(s)},\boldsymbol{k}_{n,i}^{(s)} \rangle|  \\
    &\quad + \{\text{lower order term\}} \\
    &\leq |\langle\boldsymbol{q}_+^{(T_1)},\boldsymbol{q}_-^{(T_1)} \rangle| \\
    &\quad + O\left(N^{\frac{1}{2}} d_h^{-\frac{1}{4}} \cdot o(1) + N \cdot M \cdot O\left(N^{-\frac{1}{2}} d_h^{-\frac{1}{4}}\right) \cdot \log(d_h^{\frac{1}{2}})\right) \\
    &= |\langle\boldsymbol{q}_+^{(T_1)},\boldsymbol{q}_-^{(T_1)} \rangle| + O\left(N^{\frac{1}{2}} d_h^{-\frac{1}{4}} \log(d_h^{\frac{1}{2}})\right) \\
    &= o(1),
\end{aligned}
\end{equation}
where the first inequality is triangle inequality,  the last equality is by $d_h = \widetilde{\Omega}(\max\{\text{SNR}^4, \text{SNR}^{-4}\}N^2 \epsilon^{-2})$. Similarly, we can prove:
\begin{align*}
\|\boldsymbol{q}_\pm^{(t)}\|_2^2, \|\boldsymbol{k}_\pm^{(t)}\|_2^2 &= \Theta \left( \|\mu\|_2^2 \sigma_h^2 d_h \right), \\
\|\boldsymbol{q}_{n,i}^{(t)}\|_2^2, \|\boldsymbol{k}_{n,i}^{(t)}\|_2^2 &= \Theta \left( \sigma_p^2 \sigma_h^2 d d_h \right), \\
|\langle\boldsymbol{q}_+^{(t)},\boldsymbol{q}_-^{(t)} \rangle|, |\langle\boldsymbol{q}_\pm^{(t)},\boldsymbol{q}_{n,i}^{(t)} \rangle|&, |\langle\boldsymbol{q}_{n,i}^{(t)},\boldsymbol{q}_{n',j}^{(t)} \rangle| = o(1), \\
|\langle\boldsymbol{k}_+^{(t)},\boldsymbol{k}_-^{(t)} \rangle|, |\langle\boldsymbol{k}_\pm^{(t)},\boldsymbol{k}_{n,i}^{(t)} \rangle|&, |\langle\boldsymbol{k}_{n,i}^{(t)},\boldsymbol{k}_{n',j}^{(t)} \rangle| = o(1),
\end{align*}
{ for } $i, j \in [M]\backslash\{1\}, n, n' \in [N], i \neq j \text{ or } n \neq n'$.

\subsection{Upper Bounds of $\langle q, k \rangle$}\label{sec:up_qk9}

In order to give the upper bounds for $\langle\boldsymbol q,\boldsymbol  k \rangle$ in stage III, we need to give the upper bounds of $\alpha$ and $\beta$ based on the equations in ~\ref{sec:cal_ab}. The main difference between this subsection and \ref{sec:f6} is that the bounds of $|V_\pm|, |V_{n,i}|$ is $\log \left( O\left( \frac{1}{\epsilon} \right) \right)$ in this subsection, while the bounds of $|V_\pm|, |V_{n,i}|$ is $\log \left( O\left( \frac{1}{\epsilon} \right) \right)$ in ~\ref{sec:f6}, resulting in different bounds for $\alpha$ and $\beta$. Now we take $\alpha_{+,+}^{(s)}$ as an example
\begin{equation}
\begin{aligned}
\alpha_{+,+}^{(s)} &\leq \frac{\eta}{NM} \sum_{n \in S_+} -\widetilde{\ell}_n'(\theta) \langle \widetilde{\boldsymbol{\mu}}_+,\widetilde{\boldsymbol{\mu}}_+^{(s)} \rangle \\
&\quad \cdot \left( V_{+}^{(s)} \left( \frac{\exp(\langle\boldsymbol{q}_{+}^{(s)},\boldsymbol{k}_{+}^{(s)} \rangle)}{\exp(\langle\boldsymbol{q}_{+}^{(s)},\boldsymbol{k}_{+}^{(s)} \rangle) + \sum_{j=2}^M \exp(\langle\boldsymbol{q}_{+}^{(s)},\boldsymbol{k}_{n,j}^{(s)} \rangle)} \right. \right. \\
&\quad \left. \left. - \left( \frac{\exp(\langle\boldsymbol{q}_{+}^{(s)},\boldsymbol{k}_{+}^{(s)} \rangle)}{\exp(\langle\boldsymbol{q}_{+}^{(s)},\boldsymbol{k}_{+}^{(s)} \rangle) + \sum_{j=2}^M \exp(\langle\boldsymbol{q}_{+}^{(s)},\boldsymbol{k}_{n,j}^{(s)} \rangle)} \right)^2 \right) \right. \\
&\quad \left. - \sum_{i=2}^M \left( V_{n,i}^{(s)} \cdot \frac{\exp(\langle\boldsymbol{q}_{+}^{(s)},\boldsymbol{k}_{+}^{(s)} \rangle)}{\exp(\langle\boldsymbol{q}_{+}^{(s)},\boldsymbol{k}_{+}^{(s)} \rangle) + \sum_{j=2}^M \exp(\langle\boldsymbol{q}_{+}^{(s)},\boldsymbol{k}_{n,j}^{(s)} \rangle)} \right. \right. \\
&\quad \left. \left. \cdot \frac{\exp(\langle\boldsymbol{q}_{+}^{(s)},\boldsymbol{k}_{n,i}^{(s)} \rangle)}{\exp(\langle\boldsymbol{q}_{+}^{(s)},\boldsymbol{k}_{+}^{(s)} \rangle) + \sum_{j=2}^M \exp(\langle\boldsymbol{q}_{+}^{(s)},\boldsymbol{k}_{n,j}^{(s)} \rangle)} \right) \right) \\
&+\sum_{i=2}^M\langle\widetilde{\boldsymbol{\mu}}_+,\widetilde{\boldsymbol{\xi}}_{n,i}^{(s)}\rangle \cdot \left( V_+^{(s)} \left( \frac{\exp(\langle\boldsymbol{q}_{n,i}^{(s)},\boldsymbol{k}_+^{(s)} \rangle)}{\exp(\langle\boldsymbol{q}_{n,i}^{(s)},\boldsymbol{k}_+^{(s)} \rangle) + \sum_{j=2}^{M} \exp(\langle\boldsymbol{q}_{n,i}^{(s)},\boldsymbol{k}_{n,j}^{(s)} \rangle)} \right. \right. \\
& \left. - \left( \frac{\exp(\langle\boldsymbol{q}_{n,i}^{(s)},\boldsymbol{k}_+^{(s)} \rangle)}{\exp(\langle\boldsymbol{q}_{n,i}^{(s)},\boldsymbol{k}_+^{(s)} \rangle) + \sum_{j=2}^{M} \exp(\langle\boldsymbol{q}_{n,i}^{(s)},\boldsymbol{k}_{n,j}^{(s)} \rangle)} \right)^2 \right) \\
& - \sum_{k=2}^{M} \left( V_{n,i}^{(s)} \cdot \frac{\exp(\langle\boldsymbol{q}_{n,i}^{(s)},\boldsymbol{k}_+^{(s)} \rangle)}{\exp(\langle\boldsymbol{q}_{n,i}^{(s)},\boldsymbol{k}_+^{(s)} \rangle) + \sum_{j=2}^{M} \exp(\langle\boldsymbol{q}_{n,i}^{(s)},\boldsymbol{k}_{n,j}^{(s)} \rangle)} \right. \\
& \left. \cdot \frac{\exp(\langle\boldsymbol{q}_{n,i}^{(s)},\boldsymbol{k}_{n,k}^{(s)} \rangle)}{\exp(\langle\boldsymbol{q}_{n,i}^{(s)},\boldsymbol{k}_+^{(s)} \rangle) + \sum_{j=2}^{M} \exp(\langle\boldsymbol{q}_{n,i}^{(s)},\boldsymbol{k}_{n,j}^{(s)} \rangle)} \right)
 \\
    &\leq \frac{\eta}{NM} \sum_{n \in S_+}  (\|\boldsymbol{\mu}\|_2+\tau)^2 (V_+^{(s)} 
    \cdot \frac{\exp(\langle\boldsymbol{q}_+^{(s)},\boldsymbol{k}_+^{(s)} \rangle)}{\exp(\langle\boldsymbol{q}_+^{(s)},\boldsymbol{k}_+^{(s)} \rangle) + \sum_{j=2}^M \exp(\langle\boldsymbol{q}_+^{(s)},\boldsymbol{k}_{n,j}^{(s)} \rangle)} \\
    &\quad + \max_{i=2} |V_{n,i}^{(s)}| \cdot \frac{\sum_{j=2}^{M} \exp(\langle\boldsymbol{q}_+^{(s)},\boldsymbol{k}_{n,j}^{(s)} \rangle)}{\exp(\langle\boldsymbol{q}_+^{(s)},\boldsymbol{k}_+^{(s)} \rangle) + \sum_{j=2}^{M} \exp(\langle\boldsymbol{q}_+^{(s)},\boldsymbol{k}_{n,j}^{(s)} \rangle)}) \\
    &\quad+\sum_{i=2}^M  (\|\boldsymbol{\mu}\|\tau+\sigma_p\tau\sqrt{2\log(4NM/\delta)}+\tau^2) (V_+^{(s)} 
    \cdot \frac{\exp(\langle\boldsymbol{q}_{n,i}^{(s)},\boldsymbol{k}_+^{(s)} \rangle)}{\exp(\langle\boldsymbol{q}_{n,i}^{(s)},\boldsymbol{k}_+^{(s)} \rangle) + \sum_{j=2}^M \exp(\langle\boldsymbol{q}_{n,i}^{(s)},\boldsymbol{k}_{n,j}^{(s)} \rangle)} \\
    &\quad + \max_{i=2} |V_{n,i}^{(s)}| \cdot \frac{\sum_{j=2}^{M} \exp(\langle\boldsymbol{q}_{n,i}^{(s)},\boldsymbol{k}_{n,j}^{(s)} \rangle)}{\exp(\langle\boldsymbol{q}_{n,i}^{(s)},\boldsymbol{k}_+^{(s)} \rangle) + \sum_{j=2}^{M} \exp(\langle\boldsymbol{q}_{n,i}^{(s)},\boldsymbol{k}_{n,j}^{(s)} \rangle)}) \\
    &\leq \frac{\eta}{NM} \cdot \frac{3N}{4} \cdot (\|\boldsymbol{\mu}\|_2+\tau)^2 \cdot \left( V_+^{(s)} \cdot \frac{C}{\exp(\langle\boldsymbol{q}_+^{(s)},\boldsymbol{k}_+^{(s)} \rangle)} + \max_i |V_{n,i}^{(s)}| \cdot \frac{C}{\exp(\langle\boldsymbol{q}_+^{(s)},\boldsymbol{k}_+^{(s)} \rangle)} \right) \\
    &+\sum_{i=2}^M(\|\boldsymbol{\mu}\|\tau+\sigma_p\tau\sqrt{2\log(4NM/\delta)}+\tau^2)\cdot \left( V_+^{(s)} \cdot \frac{C}{\exp(\langle\boldsymbol{q}_{n,i}^{(s)},\boldsymbol{k}_+^{(s)} \rangle)} + \max_i |V_{n,i}^{(s)}| \cdot \frac{C}{\exp(\langle\boldsymbol{q}_{n,i}^{(s)},\boldsymbol{k}_+^{(s)} \rangle)} \right) \\
&\leq \frac{\eta C_9 (\|\boldsymbol{\mu}\|_2+\tau)^2 \log \left( O\left( \frac{1}{\epsilon} \right) \right)}{\exp(\langle\boldsymbol{q}_+^{(s)},\boldsymbol{k}_+^{(s)} \rangle)},
\end{aligned}
\end{equation}
where the first inequality is by $-\widetilde{\ell}_n^{\prime(s)} \leq 1$ and $\text{softmax}(\langle\boldsymbol{q}_+^{(s)},\boldsymbol{k}_+^{(s)} \rangle) \leq 1$. For the second inequality, we first consider
$\frac{\sum_{j=2}^{M} \exp(\langle\boldsymbol{q}_+^{(s)},\boldsymbol{k}_{n,j}^{(s)} \rangle)}{\exp(\langle\boldsymbol{q}_+^{(s)},\boldsymbol{k}_+^{(s)} \rangle) + \sum_{j=2}^{M} \exp(\langle\boldsymbol{q}_+^{(s)},\boldsymbol{k}_{n,j}^{(s)} \rangle)} \leq \frac{\sum_{j=2}^{M} \exp(\langle\boldsymbol{q}_+^{(s)},\boldsymbol{k}_{n,j}^{(s)} \rangle)}{\exp(\langle\boldsymbol{q}_+^{(s)},\boldsymbol{k}_+^{(s)} \rangle)}$,
then by the monotonicity of $\langle\boldsymbol{q}_+^{(s)},\boldsymbol{k}_{n,j}^{(s)} \rangle$ and $\langle\boldsymbol{q}_+^{(T_1)},\boldsymbol{k}_{n,j}^{(T_1)} \rangle = o(1)$ we have $\sum_{j=2}^{M} \exp(\langle\boldsymbol{q}_+^{(s)},\boldsymbol{k}_{n,j}^{(s)} \rangle) \leq C$ for $t \in [T_1, T_3]$. The last inequality is by $V_+^{(s)}, |V_{n,i}^{(s)}| \leq 2 \log \left( O\left( \frac{1}{\epsilon} \right) \right)$ for $t \in [T_2, T_3]$ and absorbing the constant factors. Similar to ~\ref{sec:f6}, we can give the bounds for the other $\alpha$ and $\beta$ as follows:
\begin{equation}
\begin{aligned}
\alpha_{-,-}^{(s)} &\leq \frac{\eta C_9 (\|\boldsymbol{\mu}\|_2+\tau)^2 \log \left( O\left( \frac{1}{\epsilon} \right) \right)}{\exp(\langle\boldsymbol{q}_-^{(s)},\boldsymbol{k}_-^{(s)} \rangle)}, \\
\beta_{+,+}^{(s)} &\leq \frac{\eta C_9 (\|\boldsymbol{\mu}\|_2-\tau)^2 \log \left( O\left(\frac{1}{\epsilon} \right) \right)}{\exp(\langle\boldsymbol{q}_+^{(s)},\boldsymbol{k}_+^{(s)} \rangle)},
\end{aligned}
\end{equation}

Similar to~\ref{sec:low_qk}, we apply the bounds of $\alpha$ and $\beta$ above to give the upper bounds for the dynamics $\langle q, k \rangle$.

\begin{equation}
\begin{aligned}
    &\langle\boldsymbol{q}_+^{(s+1)},\boldsymbol{k}_+^{(s+1)} \rangle - \langle\boldsymbol{q}_+^{(s)},\boldsymbol{k}_+^{(s)} \rangle \\
    &= \alpha_{+,+}^{(s)} \|\boldsymbol{k}_+^{(s)}\|_2^2 + \sum_{n \in S_+} \sum_{i=2}^{M} \alpha_{n,+,i}^{(s)} \langle\boldsymbol{k}_+^{(s)},\boldsymbol{k}_{n,i}^{(s)} \rangle \\
    &\quad + \beta_{+,+}^{(s)} \|\boldsymbol{q}_+^{(s)}\|_2^2 + \sum_{n \in S_+} \sum_{i=2}^{M} \beta_{n,+,i}^{(s)} \langle\boldsymbol{q}_+^{(s)},\boldsymbol{q}_{n,i}^{(s)} \rangle \\
    &\quad + \left( \alpha_{+,+}^{(s)}\boldsymbol{k}_+^{(s)} + \sum_{n \in S_+} \sum_{i=2}^{M} \alpha_{n,+,i}^{(s)}\boldsymbol{k}_{n,i}^{(s)} \right) \\
    &\quad \cdot \left( \beta_{+,+}^{(s)}\boldsymbol{q}_+^{(s)\top} + \sum_{n \in S_+} \sum_{i=2}^{M} \beta_{n,+,i}^{(s)}\boldsymbol{q}_{n,i}^{(s)\top} \right) \\
    &= \alpha_{+,+}^{(s)} \|\boldsymbol{k}_+^{(s)}\|_2^2 + \beta_{+,+}^{(s)} \|\boldsymbol{q}_+^{(s)}\|_2^2 + \{\text{lower order term}\} \\
    &\leq \frac{2 \eta C_9 (\|\boldsymbol{\mu}\|_2+\tau)^2 \log \left( O\left( \frac{1}{\epsilon} \right) \right)}{\exp(\langle\boldsymbol{q}_+^{(s)},\boldsymbol{k}_+^{(s)} \rangle)} \cdot \Theta(\|\boldsymbol{\mu}\|_2^2 \sigma_h^2 d_h) + \{\text{lower order term}\} \\
    &\leq \eta C_{10} \|\boldsymbol{\mu}\|_2^2(\|\boldsymbol{\mu}\|_2+\tau) \sigma_h^2 d_h \log \left( O\left( \frac{1}{\epsilon} \right) \right) \frac{1}{\exp(\langle\boldsymbol{q}_+^{(s)},\boldsymbol{k}_+^{(s)} \rangle)},
\end{aligned}
\end{equation}
Similarly, we have the upper bounds for the dynamics of other $\langle\boldsymbol q,\boldsymbol k \rangle$.

\subsection{Bounds for the Sum of $\alpha$ and $\beta$}\label{sec:f10}
Assume that the propositions $\mathcal{F}(T_2), \ldots, \mathcal{F}(s), \mathcal{H}(T_2), \ldots, \mathcal{H}(s-1)$ hold $(s \in [T_1, t])$, we have
\begin{equation}
    |V_\pm^{(s)}| \leq 2 \log \left( O\left( \frac{1}{\epsilon} \right) \right), 
\end{equation}
\begin{equation}
    |V_{n,i}^{(s)}| = O(1), \tag{279}
\end{equation}
\begin{equation}
    \Lambda_{n,\pm,j}^{(s)} \geq \Lambda_{n,\pm,j}^{(T_2)} \geq \log \left( \exp(\Lambda_{n,\pm,j}^{(T_1)}) + \Theta\left( \frac{d_h^{\frac{1}{2}}}{N (\log(6N^2M^2/\delta))^3} \right) \right), 
\end{equation}
\begin{equation}
    \Lambda_{n,i,\pm,j}^{(s)} \geq \Lambda_{n,i,\pm,j}^{(T_2)} \geq \log \left( \exp(\Lambda_{n,i,\pm,j}^{(T_1)}) + \Theta\left( \frac{\sigma_p^2 d d d_h^{\frac{1}{2}}}{N \|\boldsymbol{\mu}\|_2^2 (\log(6N^2M^2/\delta))^3} \right) \right) 
\end{equation}
for $i, j \in [M] \backslash \{1\}, n \in [N], s \in [T_2, t]$. Similar to (\ref{eq:64}) and (\ref{eq:65}), we have
\begin{equation}
    \frac{\exp(\langle q_\pm^{(s)},\boldsymbol{k}_{n,j}^{(s)} \rangle)}{\exp(\langle q_\pm^{(s)}, k_\pm^{(s)} \rangle) + \sum_{j'=2}^{M} \exp(\langle q_\pm^{(s)},\boldsymbol{k}_{n,j'}^{(s)} \rangle)} = O\left( \frac{N (\log(6N^2M^2/\delta))^3}{d_h^{\frac{1}{2}}} \right) 
\end{equation}
\begin{equation}
    \frac{\exp(\langle\boldsymbol{q}_{n,i}^{(s)},\boldsymbol{k}_{n,j}^{(s)} \rangle)}{\exp(\langle\boldsymbol{q}_{n,i}^{(s)},\boldsymbol{k}_+^{(s)} \rangle) + \sum_{j'=2}^{M} \exp(\langle\boldsymbol{q}_{n,i}^{(s)},\boldsymbol{k}_{n,j'}^{(s)} \rangle)} = O\left( \frac{N \|\boldsymbol{\mu}\|_2^2 (\log(6N^2M^2/\delta))^3}{\sigma_p^2 d d_h^{\frac{1}{2}}} \right) 
\end{equation}
Plugging above equations into the expressions of $\alpha, \beta$ and letting $O\left( \log \left( O\left( \frac{1}{\epsilon} \right) \right) \right)$ be the upper bound for $|V_\pm^{(s)}|, |V_{n,i}^{(s)}|$ we have
\begin{equation}
\begin{aligned}
\alpha_{+,+}^{(s)} &\leq \frac{\eta}{NM} \sum_{n \in S_+} -\widetilde{\ell}_n'(\theta) \langle \widetilde{\boldsymbol{\mu}}_+,\widetilde{\boldsymbol{\mu}}_+^{(s)} \rangle \\
&\quad \cdot \left( V_{+}^{(s)} \left( \frac{\exp(\langle\boldsymbol{q}_{+}^{(s)},\boldsymbol{k}_{+}^{(s)} \rangle)}{\exp(\langle\boldsymbol{q}_{+}^{(s)},\boldsymbol{k}_{+}^{(s)} \rangle) + \sum_{j=2}^M \exp(\langle\boldsymbol{q}_{+}^{(s)},\boldsymbol{k}_{n,j}^{(s)} \rangle)} \right. \right. \\
&\quad \left. \left. - \left( \frac{\exp(\langle\boldsymbol{q}_{+}^{(s)},\boldsymbol{k}_{+}^{(s)} \rangle)}{\exp(\langle\boldsymbol{q}_{+}^{(s)},\boldsymbol{k}_{+}^{(s)} \rangle) + \sum_{j=2}^M \exp(\langle\boldsymbol{q}_{+}^{(s)},\boldsymbol{k}_{n,j}^{(s)} \rangle)} \right)^2 \right) \right. \\
&\quad \left. - \sum_{i=2}^M \left( V_{n,i}^{(s)} \cdot \frac{\exp(\langle\boldsymbol{q}_{+}^{(s)},\boldsymbol{k}_{+}^{(s)} \rangle)}{\exp(\langle\boldsymbol{q}_{+}^{(s)},\boldsymbol{k}_{+}^{(s)} \rangle) + \sum_{j=2}^M \exp(\langle\boldsymbol{q}_{+}^{(s)},\boldsymbol{k}_{n,j}^{(s)} \rangle)} \right. \right. \\
&\quad \left. \left. \cdot \frac{\exp(\langle\boldsymbol{q}_{+}^{(s)},\boldsymbol{k}_{n,i}^{(s)} \rangle)}{\exp(\langle\boldsymbol{q}_{+}^{(s)},\boldsymbol{k}_{+}^{(s)} \rangle) + \sum_{j=2}^M \exp(\langle\boldsymbol{q}_{+}^{(s)},\boldsymbol{k}_{n,j}^{(s)} \rangle)} \right) \right) \\
&+\sum_{i=2}^M\langle\widetilde{\boldsymbol{\mu}}_+,\widetilde{\boldsymbol{\xi}}_{n,i}^{(s)}\rangle \cdot \left( V_+^{(s)} \left( \frac{\exp(\langle\boldsymbol{q}_{n,i}^{(s)},\boldsymbol{k}_+^{(s)} \rangle)}{\exp(\langle\boldsymbol{q}_{n,i}^{(s)},\boldsymbol{k}_+^{(s)} \rangle) + \sum_{j=2}^{M} \exp(\langle\boldsymbol{q}_{n,i}^{(s)},\boldsymbol{k}_{n,j}^{(s)} \rangle)} \right. \right. \\
& \left. - \left( \frac{\exp(\langle\boldsymbol{q}_{n,i}^{(s)},\boldsymbol{k}_+^{(s)} \rangle)}{\exp(\langle\boldsymbol{q}_{n,i}^{(s)},\boldsymbol{k}_+^{(s)} \rangle) + \sum_{j=2}^{M} \exp(\langle\boldsymbol{q}_{n,i}^{(s)},\boldsymbol{k}_{n,j}^{(s)} \rangle)} \right)^2 \right) \\
& - \sum_{k=2}^{M} \left( V_{n,i}^{(s)} \cdot \frac{\exp(\langle\boldsymbol{q}_{n,i}^{(s)},\boldsymbol{k}_+^{(s)} \rangle)}{\exp(\langle\boldsymbol{q}_{n,i}^{(s)},\boldsymbol{k}_+^{(s)} \rangle) + \sum_{j=2}^{M} \exp(\langle\boldsymbol{q}_{n,i}^{(s)},\boldsymbol{k}_{n,j}^{(s)} \rangle)} \right. \\
& \left. \cdot \frac{\exp(\langle\boldsymbol{q}_{n,i}^{(s)},\boldsymbol{k}_{n,k}^{(s)} \rangle)}{\exp(\langle\boldsymbol{q}_{n,i}^{(s)},\boldsymbol{k}_+^{(s)} \rangle) + \sum_{j=2}^{M} \exp(\langle\boldsymbol{q}_{n,i}^{(s)},\boldsymbol{k}_{n,j}^{(s)} \rangle)} \right)
 \\
&\leq \frac{\eta (\|\boldsymbol{\mu}\|_2+\tau)^2}{NM} \cdot \frac{3N}{4} \cdot O\left(\log \left(O\left(\frac{1}{\epsilon}\right)\right)\right) \cdot O\left(\frac{N \left(\log(6N^2M^2/\delta)\right)^3}{d_h^{\frac{3}{2}}}\right) \\
    &= O\left(\frac{\eta N (\|\boldsymbol{\mu}\|_2+\tau)^2 \left(\log(6N^2M^2/\delta)\right)^3 \log \left(O\left(\frac{1}{\epsilon}\right)\right)}{d_h^{\frac{3}{2}}}\right)
\end{aligned}
\end{equation}
Taking a summation we obtain that
\begin{equation}
\begin{aligned}
    \sum_{s=T_2}^{t} |\alpha_{+,+}^{(s)}| &= O\left(\frac{1}{\eta \epsilon \|\boldsymbol{\mu}\|_2^2 \|\boldsymbol{w}_O\|_2^2}\right) \cdot O\left(\eta \|\boldsymbol{\mu}\|_2^2 N (\log(6N^2M^2/\delta))^3 \log\left(O\left(\frac{1}{\epsilon}\right)\right) \frac{\log\left(O\left(\frac{1}{\epsilon}\right)\right)}{d_h^{\frac{3}{2}}}\right) \\
    &= O\left(\frac{N (\log(6N^2M^2/\delta))^3 \log\left(O\left(\frac{1}{\epsilon}\right)\right)}{\epsilon d_h^{\frac{3}{2}}}\right),
\end{aligned}
\end{equation}
where the last equality is by $\|\boldsymbol{w}_O\| = \Theta(1)$. Similarly, we have the bounds for other sum of $\alpha$ and $\beta$. Thus, we can easily prove Claim 7 and Claim 8.
\subsection{Proof of Claim 7}\label{sec:pf_cl7}

In this subsection, we assume that $\mathcal{I}(T_2), \ldots, \mathcal{I}(t)$ hold, and then proof that $\mathcal{G}(t+1)$ is true with the result of ~\ref{sec:f10}.
\begin{equation}
\begin{aligned}
    \left\|\boldsymbol{q}_+^{(t+1)} \right\|_2^2 - \left\|\boldsymbol{q}_+^{(t)} \right\|_2^2 &\leq \sum_{s=T_2}^{t} \left| \left\|\boldsymbol{q}_+^{(s+1)} \right\|_2^2 - \left\|\boldsymbol{q}_+^{(s)} \right\|_2^2 \right| \\
    &\leq \sum_{s=T_2}^{t} \left| 2\alpha_{+,+}^{(s)} \langle\boldsymbol{q}_+^{(s)},\boldsymbol{k}_+^{(s)} \rangle + 2 \sum_{n \in S_+} \sum_{i=2}^{M} \alpha_{n,+,i}^{(s)} \langle\boldsymbol{q}_+^{(s)},\boldsymbol{k}_{n,i}^{(s)} \rangle \right. \\
    &\quad + \left. \langle \alpha_{+,+}^{(s)}\boldsymbol{k}_+^{(s)} + \sum_{n \in S_+} \sum_{i=2}^{M} \alpha_{n,+,i}^{(s)}\boldsymbol{k}_{n,i}^{(s)} \rangle \cdot \left( \alpha_{+,+}^{(s)}\boldsymbol{k}_+^{(s)} + \sum_{n \in S_+} \sum_{i=2}^{M} \alpha_{n,+,i}^{(s)}\boldsymbol{k}_{n,i}^{(s)} \right)^\top \right| \\
    &\leq 2 \sum_{s=T_2}^{t} \left| \alpha_{+,+}^{(s)} \langle\boldsymbol{q}_+^{(s)},\boldsymbol{k}_+^{(s)} \rangle \right| + 2 \sum_{n \in S_+} \sum_{i=2}^{M} \sum_{s=T_2}^{t} \left| \alpha_{n,+,i}^{(s)} \langle\boldsymbol{q}_+^{(s)},\boldsymbol{k}_{n,i}^{(s)} \rangle \right| \\
    &\quad + \{\text{lower order term\}} \\
    &= O\left( \frac{N (\log(6N^2M^2/\delta))^3 \log\left(O\left(\frac{1}{\epsilon}\right)\right)}{\epsilon d_h^{\frac{3}{2}}} \right) \cdot \log\left(\epsilon^{-1} d_h^{\frac{1}{2}}\right) \\
    &\quad + N \cdot M \cdot O\left( \frac{(\log(6N^2M^2/\delta))^3 \log\left(O\left(\frac{1}{\epsilon}\right)\right)}{\epsilon d_h^{\frac{3}{2}}} \right) \cdot \log\left(\epsilon^{-1} d_h^{\frac{1}{2}}\right) \\
    &= O\left( \frac{N (\log(6N^2M^2/\delta))^3 \log\left(O\left(\frac{1}{\epsilon}\right)\right) \log\left(\epsilon^{-1} d_h^{\frac{1}{2}}\right)}{\epsilon d_h^{\frac{3}{2}}} \right)
\end{aligned}
\end{equation}
where the first inequality is by triangle inequality, the second inequality is by the update rules in~\ref{sec:uprule}, the third inequality is by $t \leq T_3$. Since $\sigma_h^2 \geq \left(\max\{\sigma_p^2 d, \|\boldsymbol{\mu}\|_2^2\}\right)^{-1} \cdot d_h^{-\frac{1}{2}} (\log(6N^2M^2/\delta))^{-2}$ and $d_h = \widetilde{\Omega}\left(\max\{\text{SNR}^4, \text{SNR}^{-4}\}N^2 \epsilon^{-2}\right)$, we have
\begin{equation}
\frac{N (\log(6N^2M^2/\delta))^3 \log\left(O\left(\frac{1}{\epsilon}\right)\right) \log(\epsilon^{-1} d_h^{\frac{1}{2}})}{\epsilon d_h^{\frac{1}{2}}} = o(\|\boldsymbol{\mu}\|_2^2 \sigma_h^2 d_h),
\end{equation}
so $\|\boldsymbol{q}_+^{(t+1)}\|_2^2 = \|\boldsymbol{q}_+^{(T_2)}\|_2^2 + o(\|\boldsymbol{\mu}\|_2^2 \sigma_h^2 d_h) = \Theta(\|\boldsymbol{\mu}\|_2^2 \sigma_h^2 d_h)$. Similarly, we have 
\begin{align*}
        \|\boldsymbol q_\pm^{(t)}\|_2^2, \|\boldsymbol k_\pm^{(t)}\|_2^2 &= \Theta(\|\boldsymbol{\mu}\|_2^2 \sigma_h^2 d_h), \\
        \|\boldsymbol{q}_{n,i}^{(t)}\|_2^2, \|\boldsymbol{k}_{n,i}^{(t)}\|_2^2 &= \Theta\left(\sigma_p^2 \sigma_h^2 d d_h\right), \\
        |\langle\boldsymbol{q}_+^{(t)},\boldsymbol{q}_-^{(t)} \rangle|, |\langle\boldsymbol q_\pm^{(t)},\boldsymbol{q}_{n,i}^{(t)} \rangle|, &|\langle\boldsymbol{q}_{n,i}^{(t)},\boldsymbol{q}_{n',j}^{(t)} \rangle| = o(1), \\
        |\langle\boldsymbol{k}_+^{(t)},\boldsymbol{k}_-^{(t)} \rangle|, |\langle\boldsymbol k_\pm^{(t)},\boldsymbol{k}_{n,i}^{(t)} \rangle|, &|\langle\boldsymbol{k}_{n,i}^{(t)},\boldsymbol{k}_{n',j}^{(t)} \rangle| = o(1)
    \end{align*}
    for $i, j \in [M]\backslash\{1\}, n, n' \in [N], i \neq j \text{ or } n \neq n'$.

\end{document}